\def\eqref#1{Eq.~(\ref{#1})}
\def\1{\bm{1}}
\def\rvw{{\mathbf{w}}}
\def\vxi{{\bm{\xi}}}
\def\vd{{\bm{d}}}
\def\vx{{\bm{x}}}
\def\vy{{\bm{y}}}
\DeclareMathAlphabet{\mathsfit}{\encodingdefault}{\sfdefault}{m}{sl}
\SetMathAlphabet{\mathsfit}{bold}{\encodingdefault}{\sfdefault}{bx}{n}
\def\sN{{\mathbb{N}}}
\def\sR{{\mathbb{R}}}
\newcommand{\E}{\mathbb{E}}
\newcommand{\SKIP}[1]{}
\theoremstyle{definition}
\newtheorem{pro}{Proposition}[section]
\theoremstyle{plain}
\newenvironment{tight_itemize}{
\begin{itemize}[leftmargin=10pt]
  \setlength{\topsep}{0pt}
  \setlength{\itemsep}{0pt}
  \setlength{\parskip}{0pt}
  \setlength{\parsep}{0pt}
}{\end{itemize}}
\DeclareRobustCommand\onedot{\futurelet\@let@token\@onedot}
\def\@onedot{\ifx\@let@token.\else.\null\fi\xspace}
\def\eg{\emph{e.g}\onedot} 
\def\ie{\emph{i.e}\onedot}
\newcommand{\tikzcircle}[2][red,fill=red]{\tikz[baseline=-0.5ex]\draw[#1,radius=#2] (0,0) circle ;}%
\title{Understanding the Effects of Data Parallel-ism and Sparsity on Neural Network Training}
\author{Namhoon Lee\\
    University of Oxford\\
    \texttt{namhoon@robots.ox.ac.uk}
    \And
    Thalaiyasingam Ajanthan\\
    Australian National University\\
    \texttt{thalaiyasingama.ajanthan@anu.edu.au}
    \And
    Philip H. S. Torr\\
    University of Oxford\\
    \texttt{phst@robots.ox.ac.uk}
    \And
    Martin Jaggi\\
    EPFL\\
    \texttt{martin.jaggi@epfl.ch}
}
\begin{document}

\maketitle

\begin{abstract}
We study two factors in neural network training: data parallelism and sparsity;
here, data parallelism means processing training data in parallel using distributed systems (or equivalently increasing batch size), so that training can be accelerated;
for sparsity, we refer to pruning parameters in a neural network model, so as to reduce computational and memory cost.
Despite their promising benefits, however, understanding of their effects on neural network training remains elusive.
In this work, we first measure these effects rigorously by conducting extensive experiments while tuning all metaparameters involved in the optimization.
As a result, we find across various workloads of data set, network model, and optimization algorithm that there exists a general scaling trend between batch size and number of training steps to convergence for the effect of data parallelism, and further, difficulty of training under sparsity.
Then, we develop a theoretical analysis based on the convergence properties of stochastic gradient methods and smoothness of the optimization landscape, which illustrates the observed phenomena precisely and generally, establishing a better account of the effects of data parallelism and sparsity on neural network training.

\end{abstract}
\section{Introduction}

Data parallelism is a straightforward and common approach to accelerate neural network training by processing training data in parallel using distributed systems.
Being model-agnostic, it is applicable to training any neural networks, and the degree of parallelism equates to the size of mini-batch for synchronized settings, in contrast to other forms of parallelism such as task or model parallelism.
While its utility has attracted much attention in recent years, however, distributing and updating large network models at distributed communication rounds still remains a bottleneck \citep{dean2012large,hoffer2017train,goyal2017accurate,smith2017don,shallue2018measuring,lin2018don}.

Meanwhile, diverse approaches to compress such large network models have been developed, and network pruning -- the sparsification process that zeros out many parameters of a network to reduce computations and memory associated with these zero values -- has been widely employed \citep{reed1993pruning,han2015learning}.
In fact, recent studies discovered that pruning can be done at initialization prior to training \citep{lee2018snip,wang2020picking}, and by separating the training process from pruning entirely, it not only saves tremendous time and effort in finding trainable sparse networks, but also facilitates the analysis of pruned sparse networks in isolation.
Nevertheless, there has been little study concerning the subsequent training of these sparse networks, and various aspects of the optimization of sparse networks remain rather unknown as of yet.

In this work, we focus on studying data parallelism and sparsity\footnote{For the purpose of this work, we equate data parallelism and sparsity to increasing batch size and pruning model parameters, respectively; we explain these more in detail in Appendix~\ref{sec:implementation}.}, and provide clear explanations for their effects on neural network training.
Despite a surge of recent interest in their complementary benefits in modern deep learning, there is a lack of fundamental understanding of their effects.
For example, \citet{shallue2018measuring} provide comprehensive yet empirical evaluations on the effect of data parallelism, while \citet{zhang2019algorithmic} use a simple noisy quadratic model to describe the effect;
for sparsity, \citet{lee2020a} approach the difficulty of training under sparsity solely from the perspective of initialization.

In this regard, we first accurately measure their effects by performing extensive metaparameter search independently for each and every study case of batch size and sparsity level.
As a result, we find a general scaling trend as the effect of data parallelism in training sparse neural networks, across varying sparsity levels and workloads of data set, model and optimization algorithm.
Also, the critical batch size turns out to be no less with sparse networks, despite the general difficulty of training sparse networks.
We formalize our observation and theoretically prove the effect of data parallelism based on the convergence properties of generalized stochastic gradient methods irrespective of sparsity levels.
We take this result further to understand the effect of sparsity based on Lipschitz smoothness analysis, and find that pruning results in a sparse network whose gradient changes relatively too quickly.
Notably, this result is developed under standard assumptions used in the optimization literature and generally applied to training using any stochastic gradient method with nonconvex objective and learning rate schedule.
Being precise and general, our results could help understand the effects of data parallelism and sparsity on neural network training.

\section{Setup}\label{sec:setup}

We follow closely the experiment settings used in \citet{shallue2018measuring}.
We describe more details including the scale of our experiments in Appendix~\ref{sec:expscale}, and provide additional results in Appendix~\ref{sec:additional}.
The code can be found here: \href{https://github.com/namhoonlee/effect-dps-public}{https://github.com/namhoonlee/effect-dps-public}

\textbf{Experiment protocol}.\quad
For a given \emph{workload} (data set, network model, optimization algorithm) and \emph{study} (batch size, sparsity level) setting, we measure the number of training steps required to reach a predefined goal error.
We repeat this process for a budget of runs while searching for the best metaparameters involved in the optimization (\eg, learning rate, momentum), so as to record the \emph{lowest} number of steps, namely \emph{steps-to-result}, as our primary quantity of interest.
To this end, we regularly evaluate intermediate models on the entire validation set for each training run.

\textbf{Workload and study}.\quad
We consider the workloads as the combinations of the followings:
(data set) MNIST, Fashion-MNIST, CIFAR-10;
(network model) Simple-CNN, ResNet-8;
(optimization algorithm) SGD, Momentum, Nesterov with either a fixed or decaying learning rate schedule.
For the study setting, we consider a batch size from $2$ up to $16384$ and a sparsity level from $0$\% to $90$\%.

\textbf{Metaparameter search}.\quad
We perform a quasi-random search to tune metaparameters efficiently.
More precisely, we first generate Sobol low-discrepancy sequences in a unit hypercube and convert them into metaparameters of interest, while taking into account a predefined search space for each metaparameter.
The generated values for each metaparameter is in length of the budget of trials, and the search space is designed based on preliminary experimental results.

\textbf{Pruning}.\quad
Sparse networks can be obtained by many different ways, and yet, for the purpose of this work, they must not undergo any training beforehand so as to measure the effects of data parallelism while training from scratch.
Recent pruning-at-initialization approaches satisfy this requirement, and we adopt the connection sensitivity criterion in \citet{lee2018snip} to obtain sparse networks.

\section{Experimental results}

\subsection{Measuring the effect of data parallelism}

\begin{figure}[h!]
    \centering
    \begin{subfigure}{.9998\textwidth}
        \centering
        \includegraphics[height=33mm]{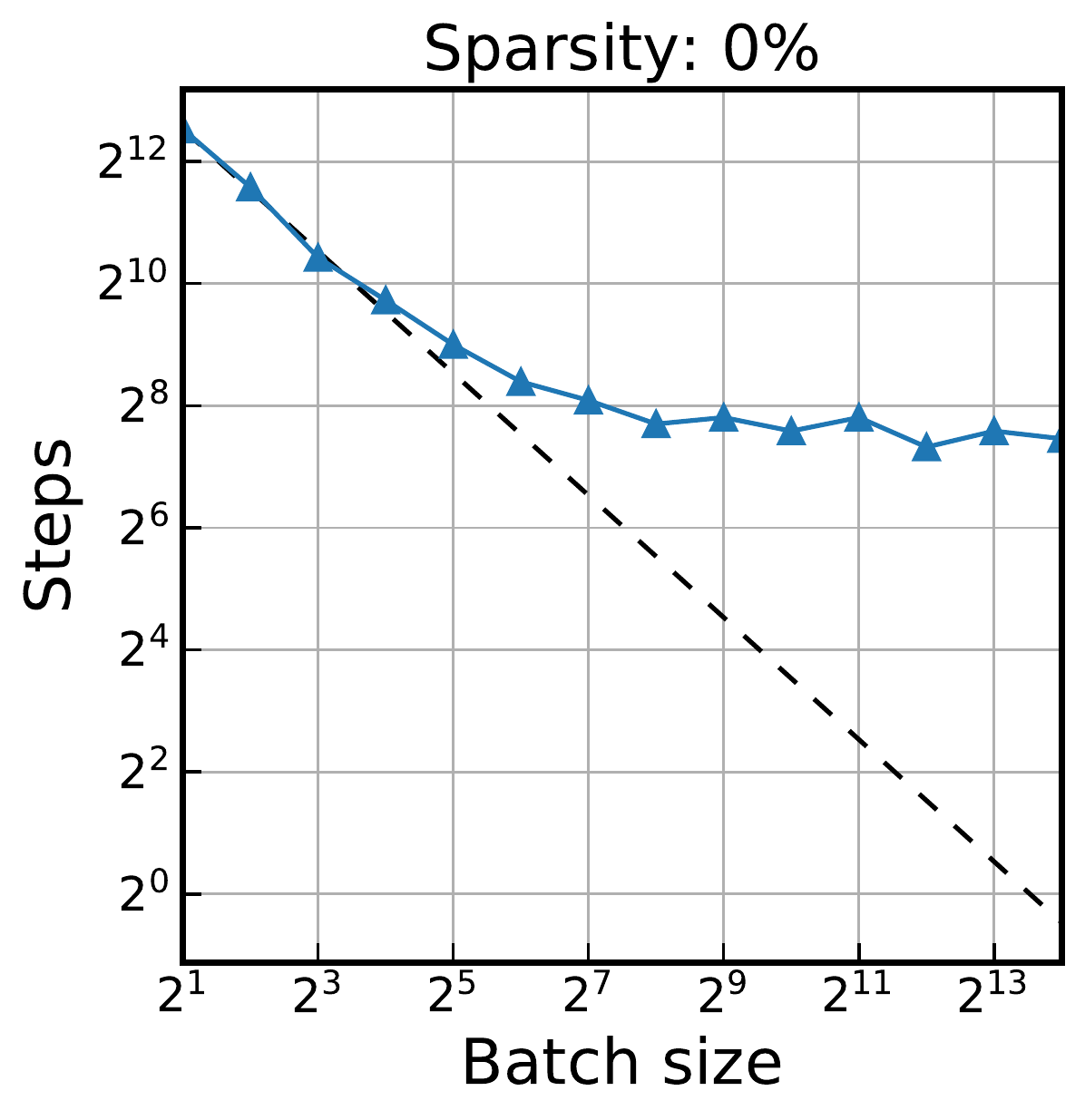}
        \includegraphics[height=33mm]{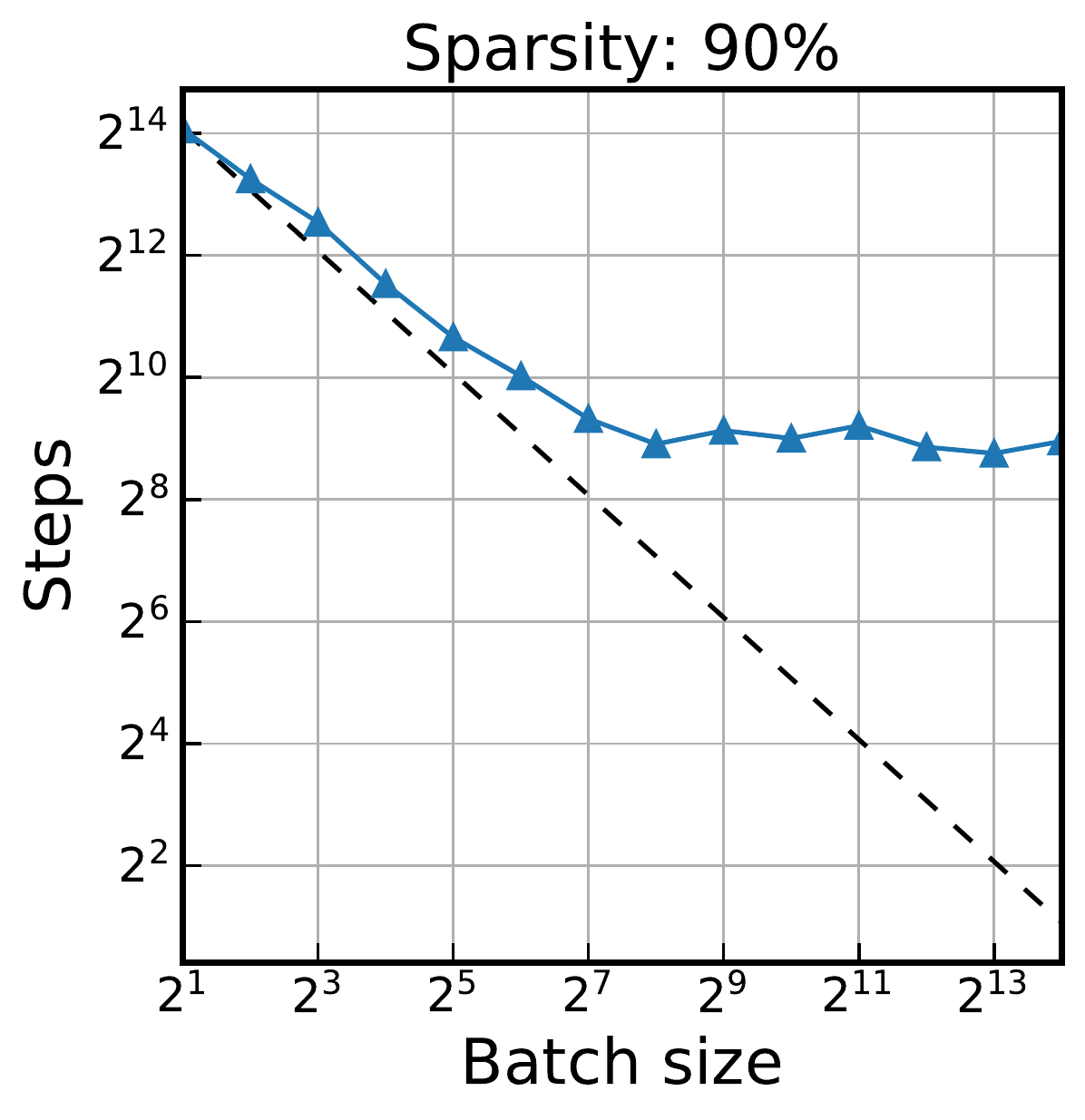}
        \includegraphics[height=33mm]{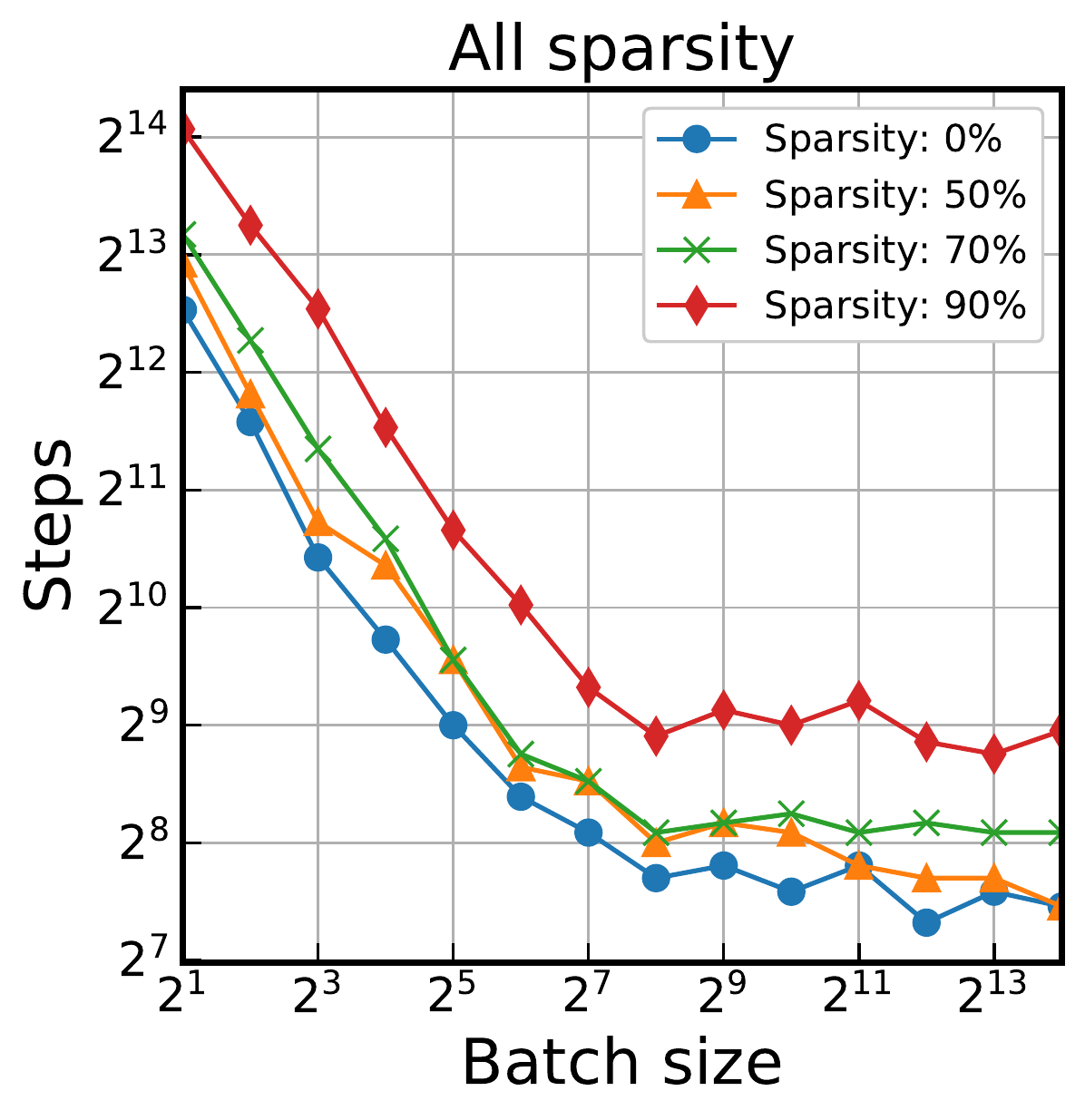}
        \includegraphics[height=33mm]{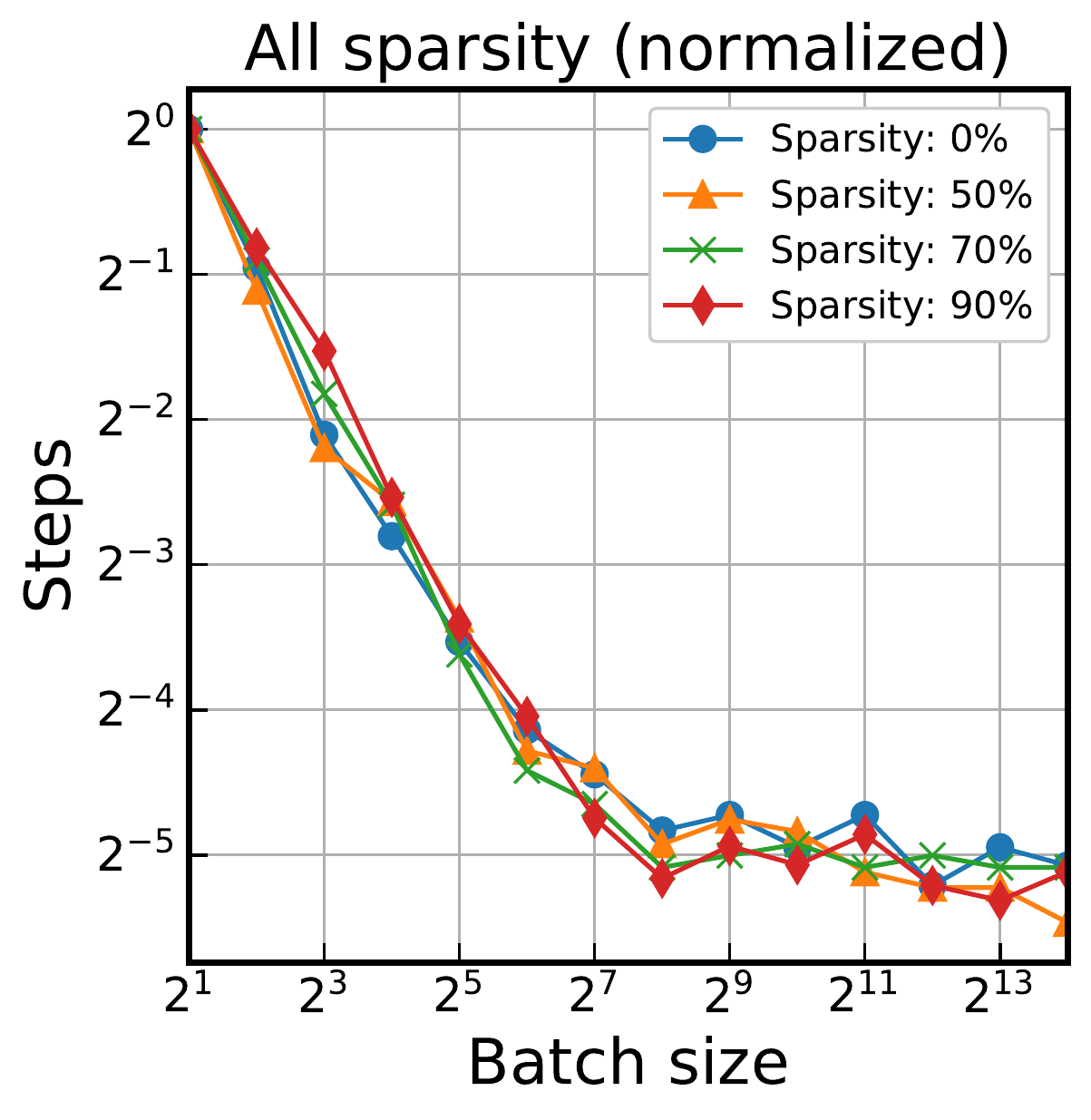}
        \caption{MNIST, Simple-CNN, SGD}
    \end{subfigure}
    \begin{subfigure}{.9998\textwidth}
        \centering
        \includegraphics[height=33mm]{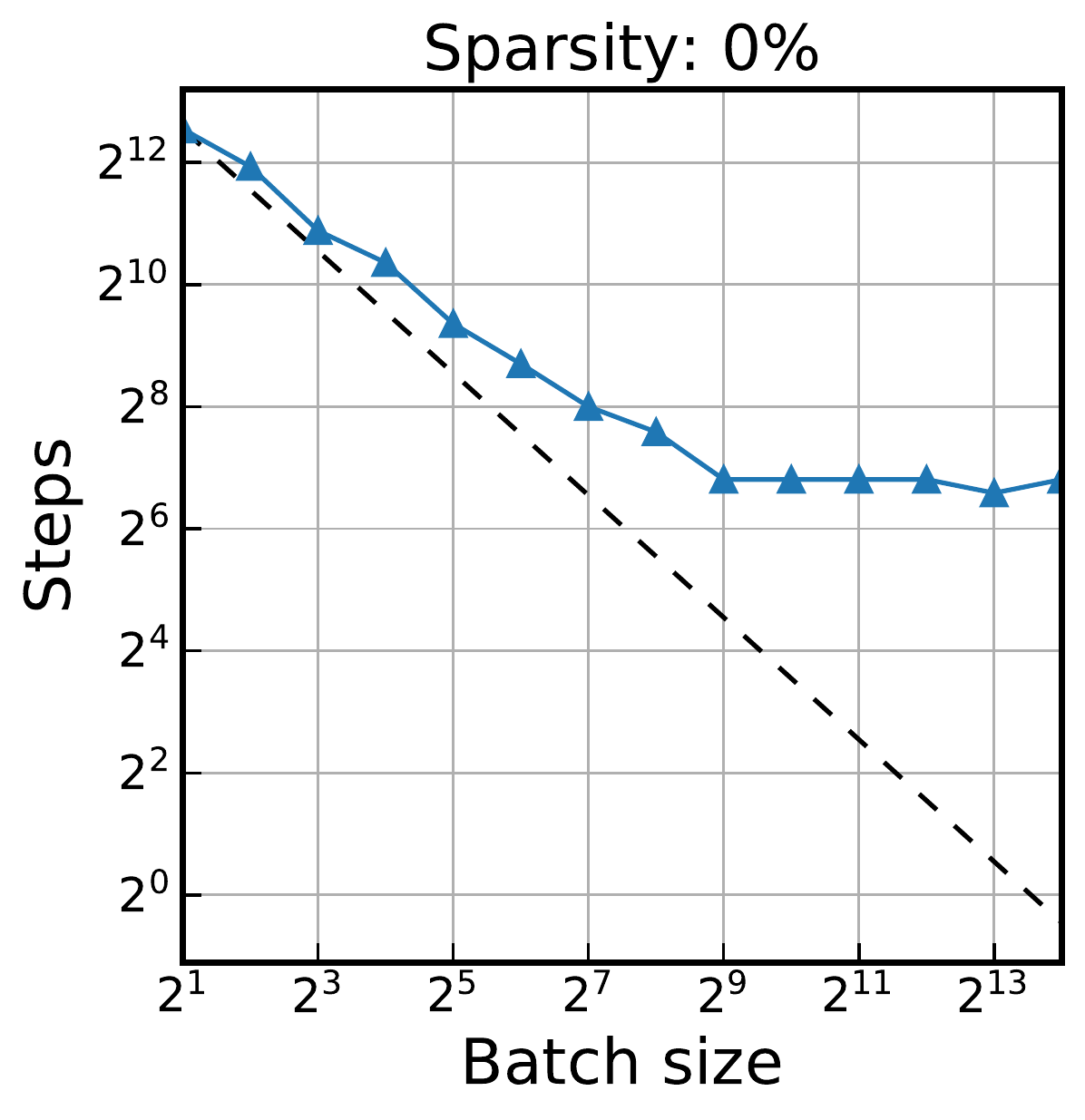}
        \includegraphics[height=33mm]{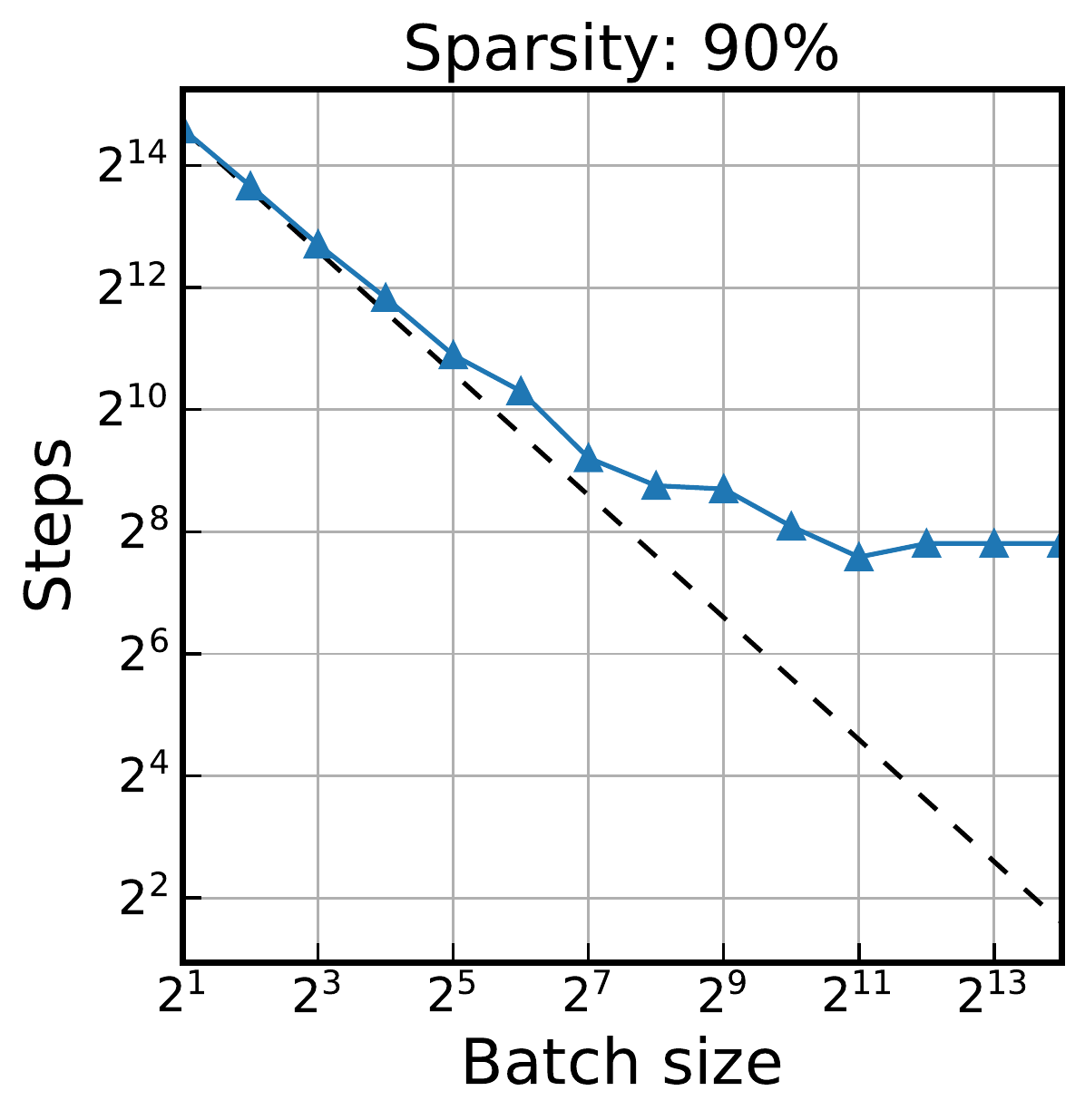}
        \includegraphics[height=33mm]{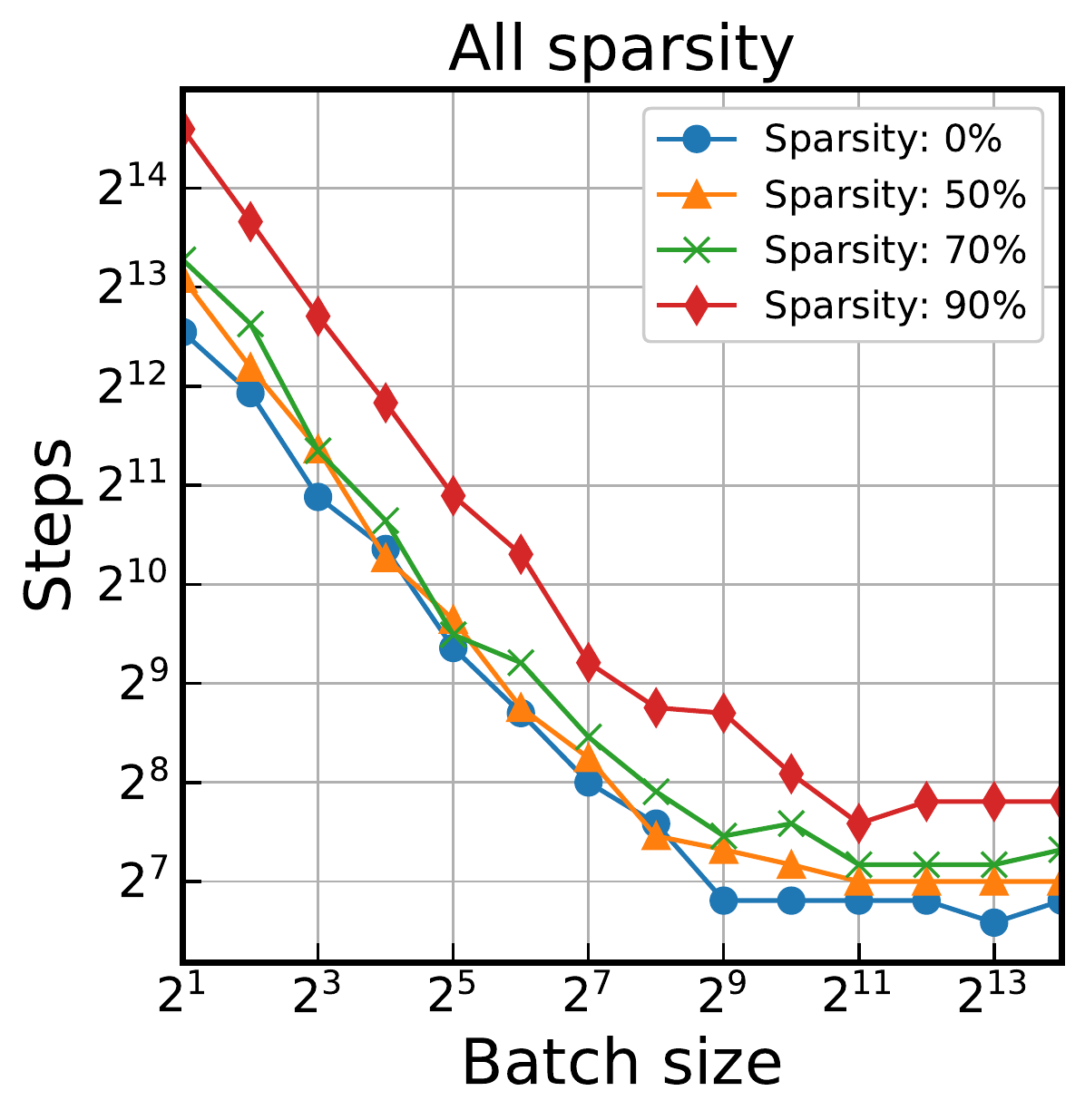}
        \includegraphics[height=33mm]{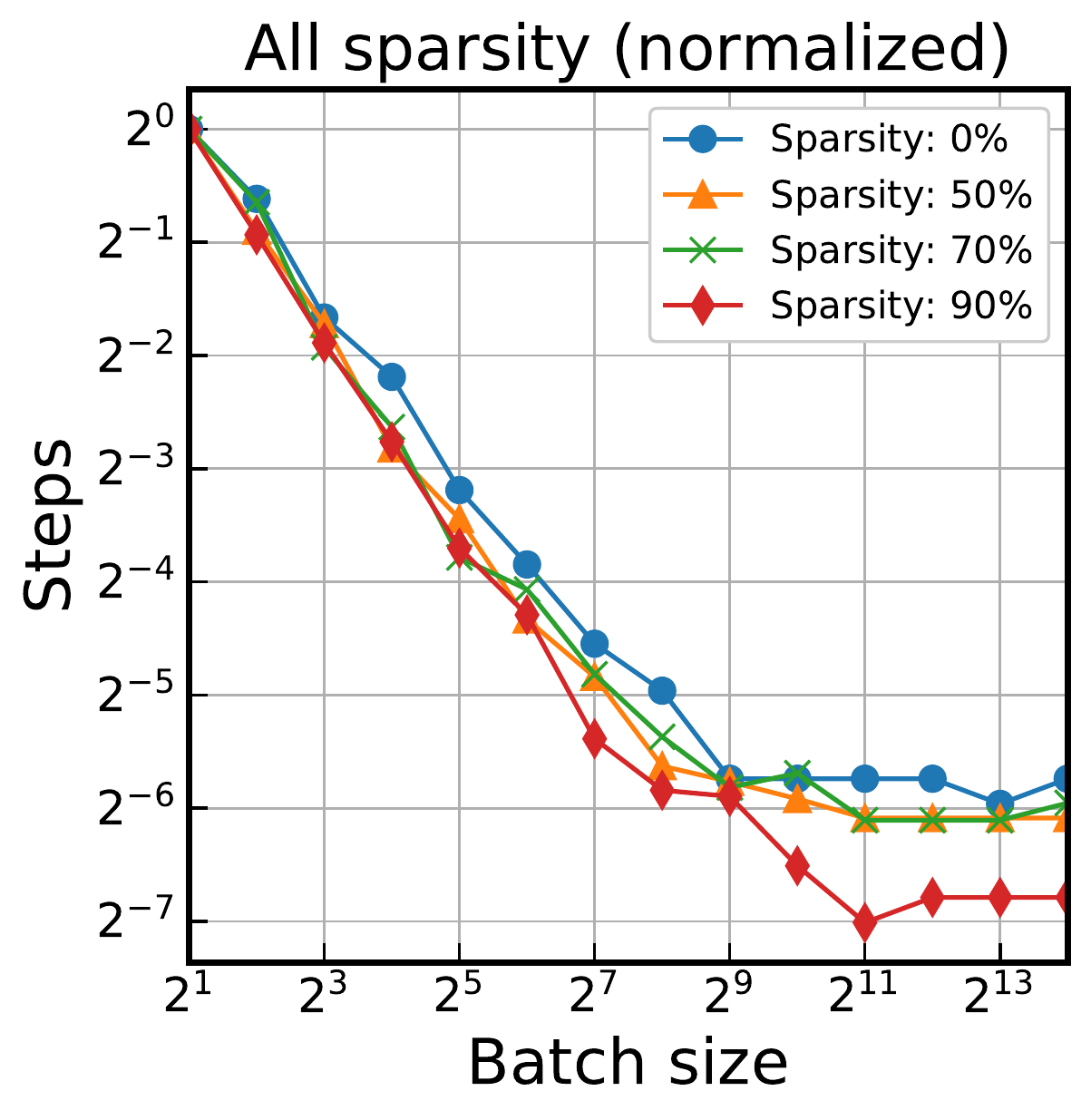}
        \caption{MNIST, Simple-CNN, Momentum}
    \end{subfigure}
    \begin{subfigure}{.9998\textwidth}
        \centering
        \includegraphics[height=33mm]{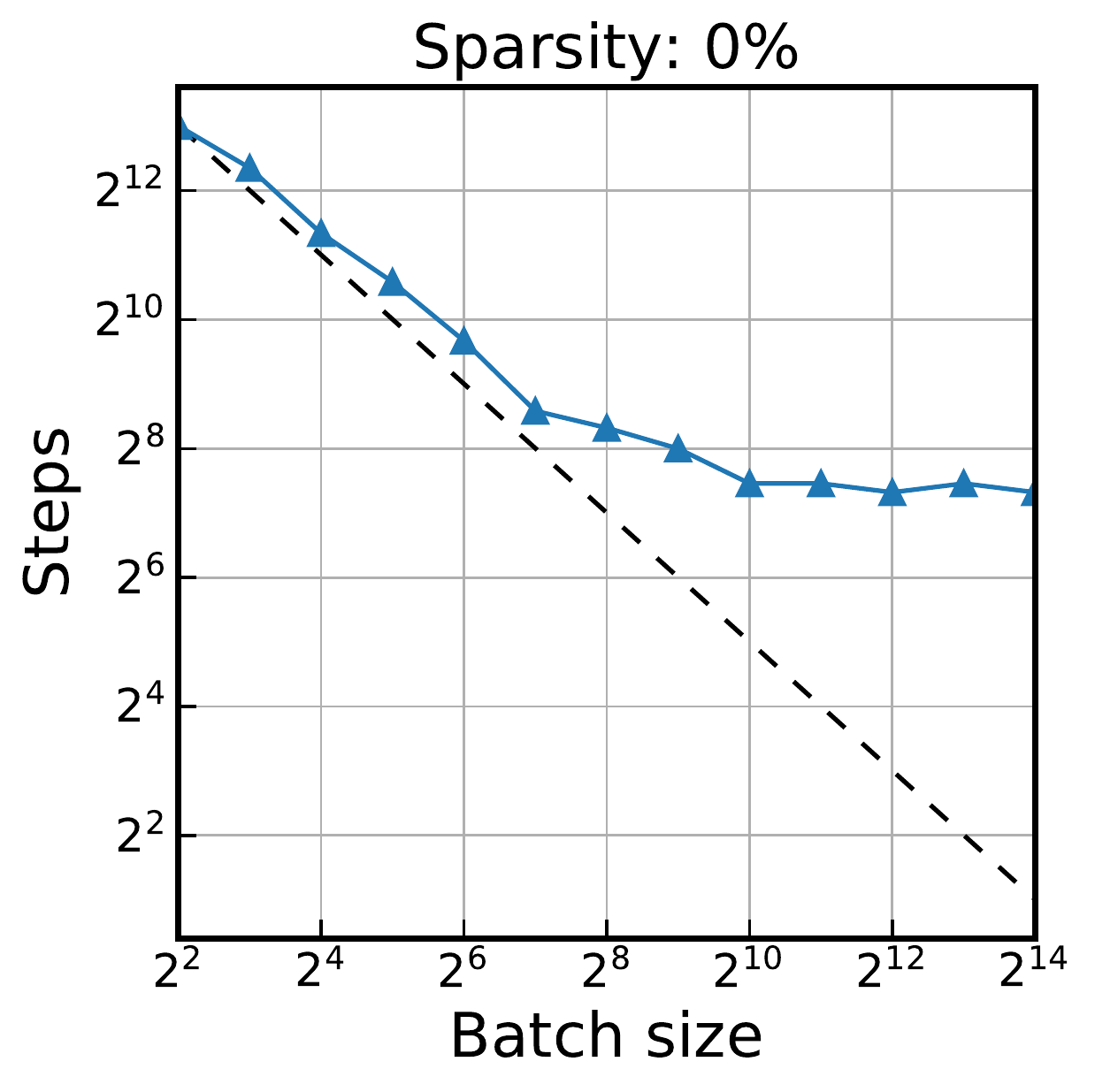}
        \includegraphics[height=33mm]{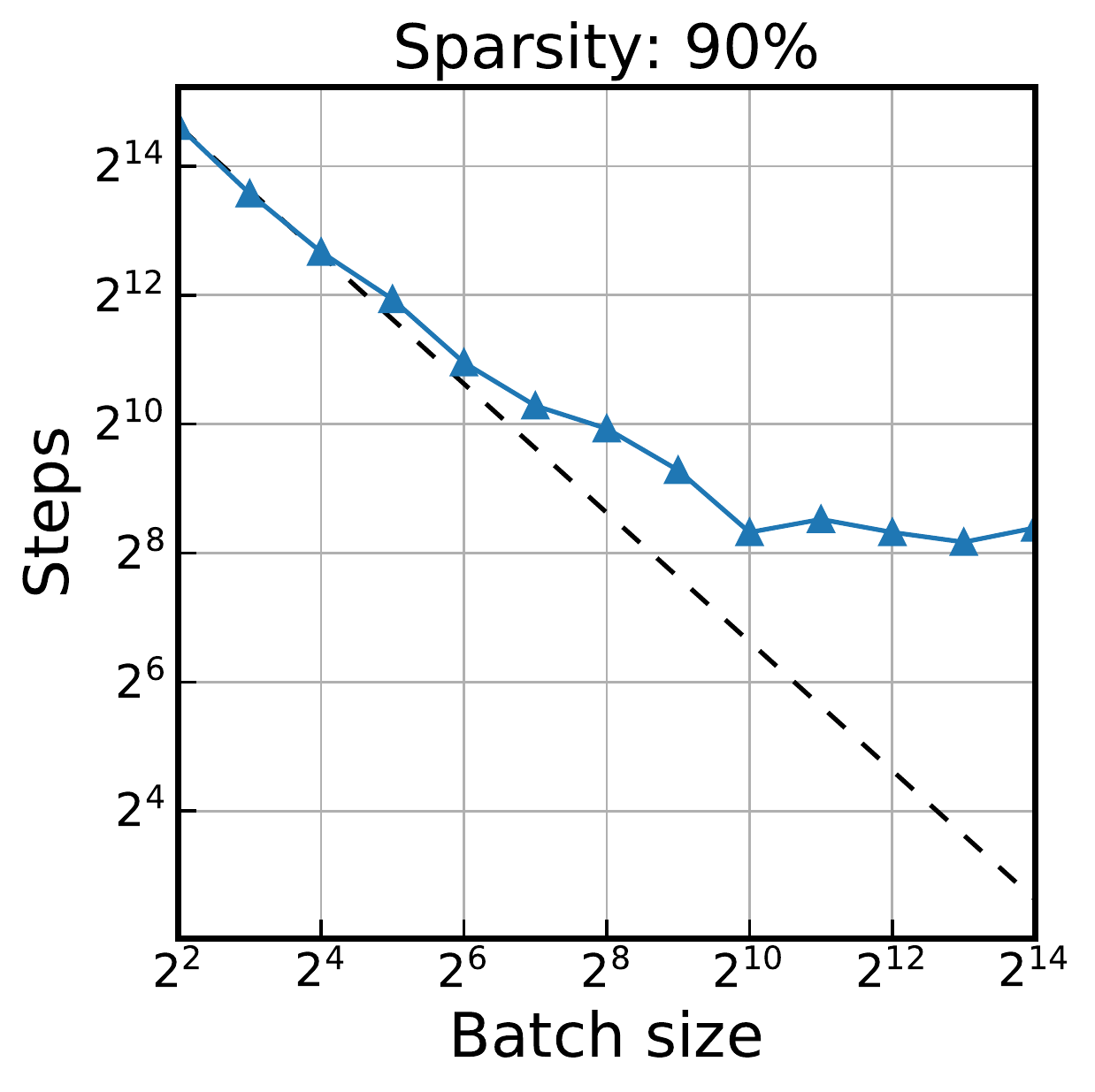}
        \includegraphics[height=33mm]{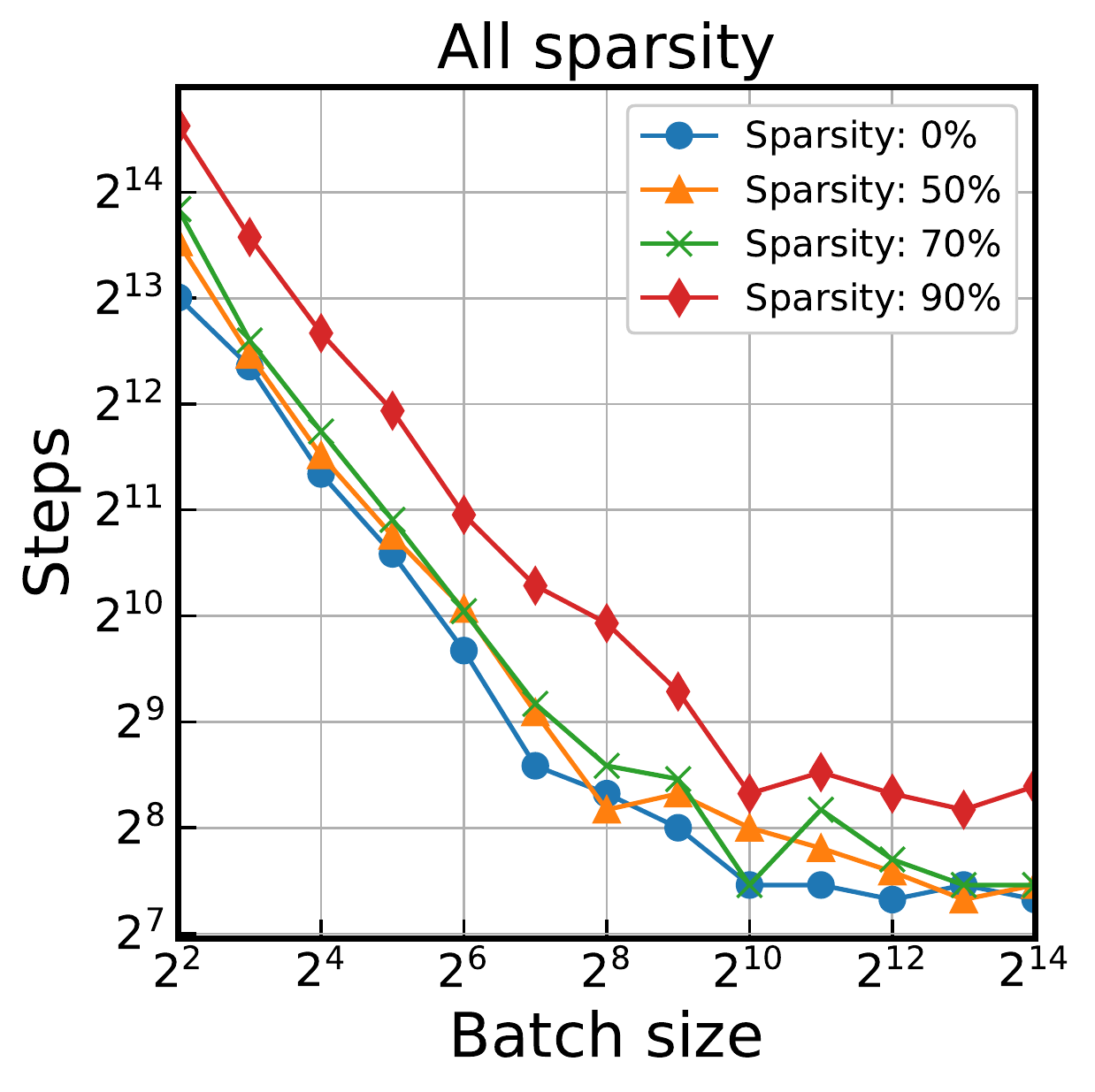}
        \includegraphics[height=33mm]{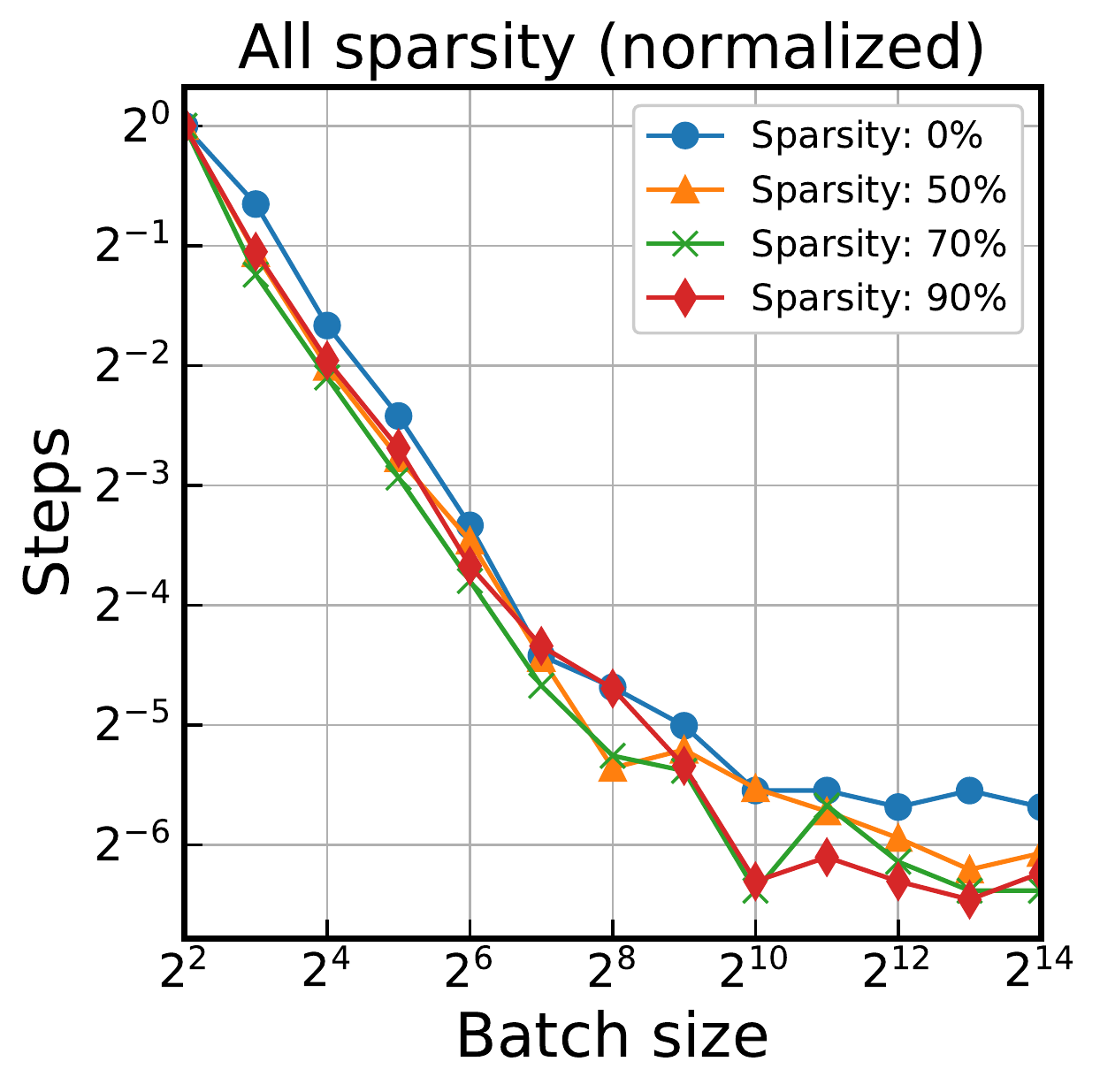}
        \caption{Fashion-MNIST, Simple-CNN, Momentum}
    \end{subfigure}
    \begin{subfigure}{.9998\textwidth}
        \centering
        \includegraphics[height=33mm]{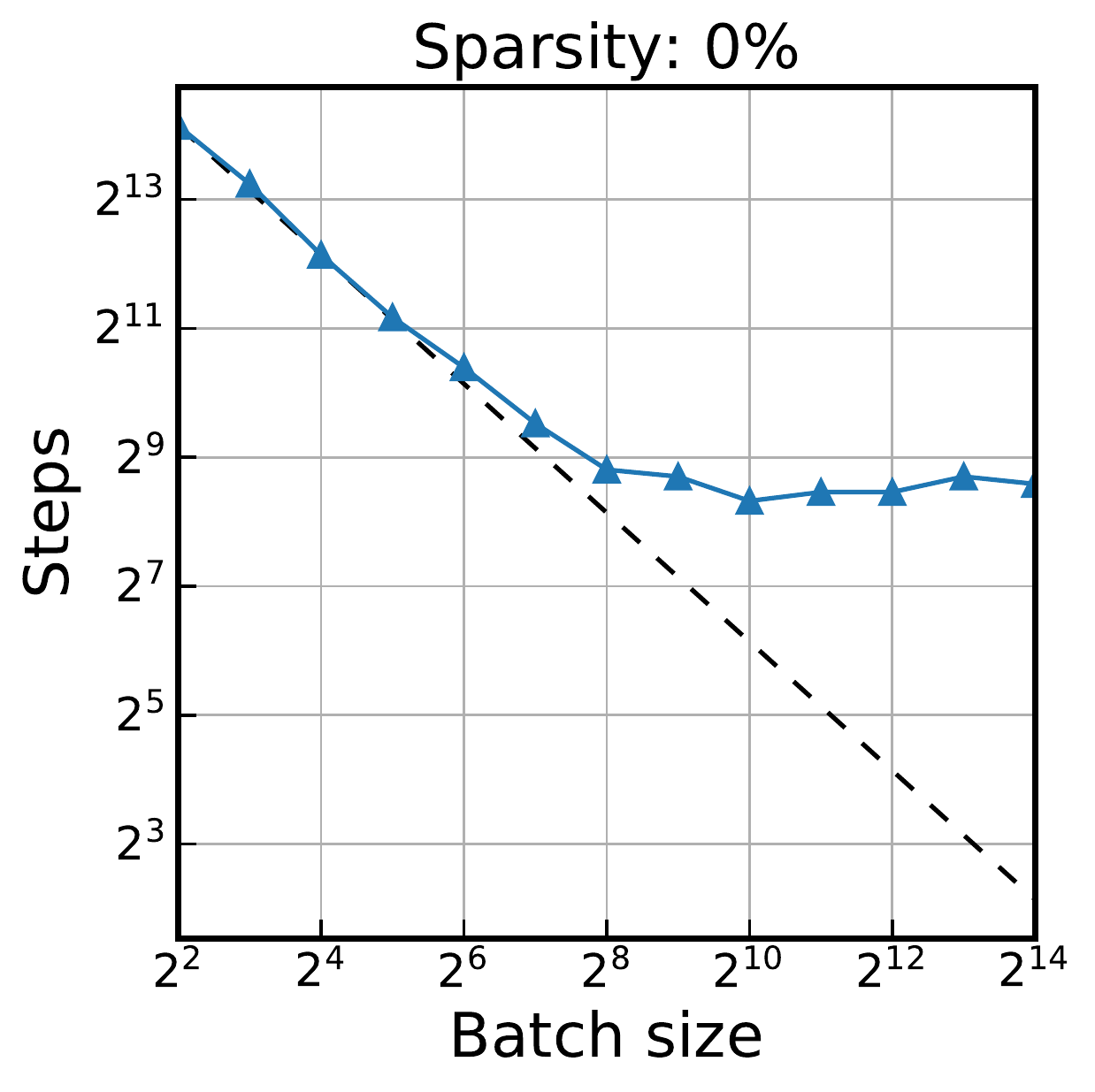}
        \includegraphics[height=33mm]{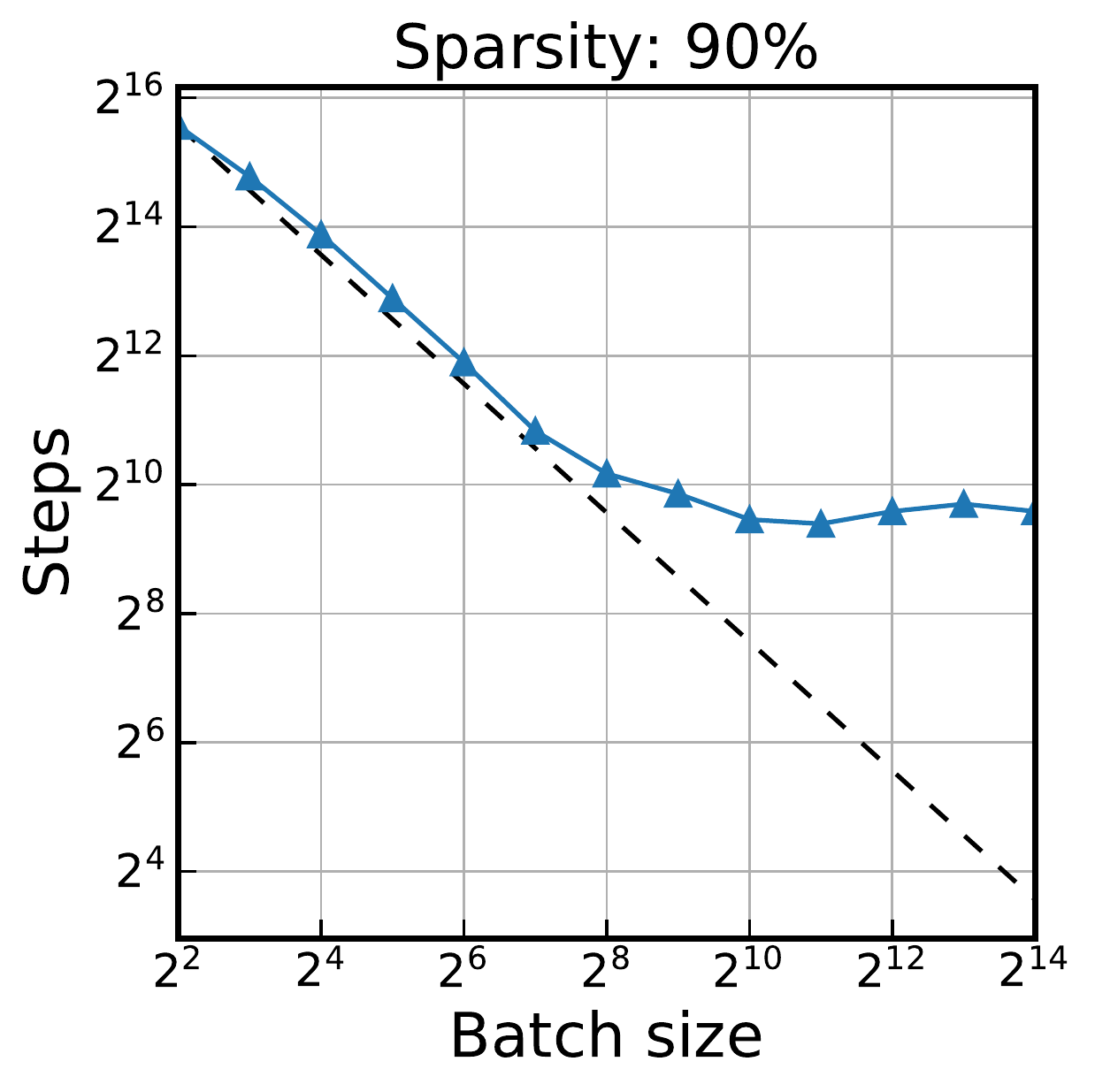}
        \includegraphics[height=33mm]{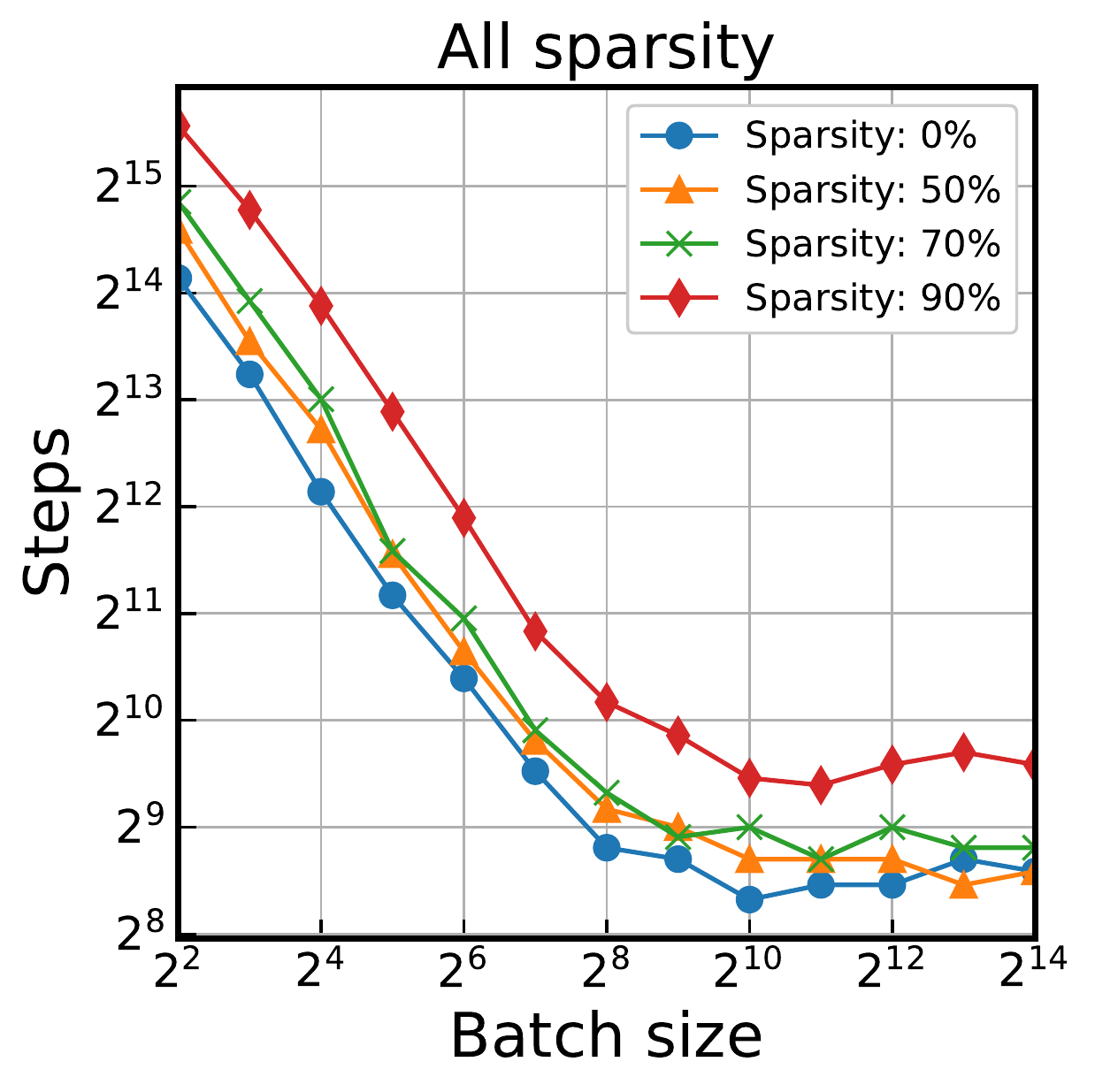}
        \includegraphics[height=33mm]{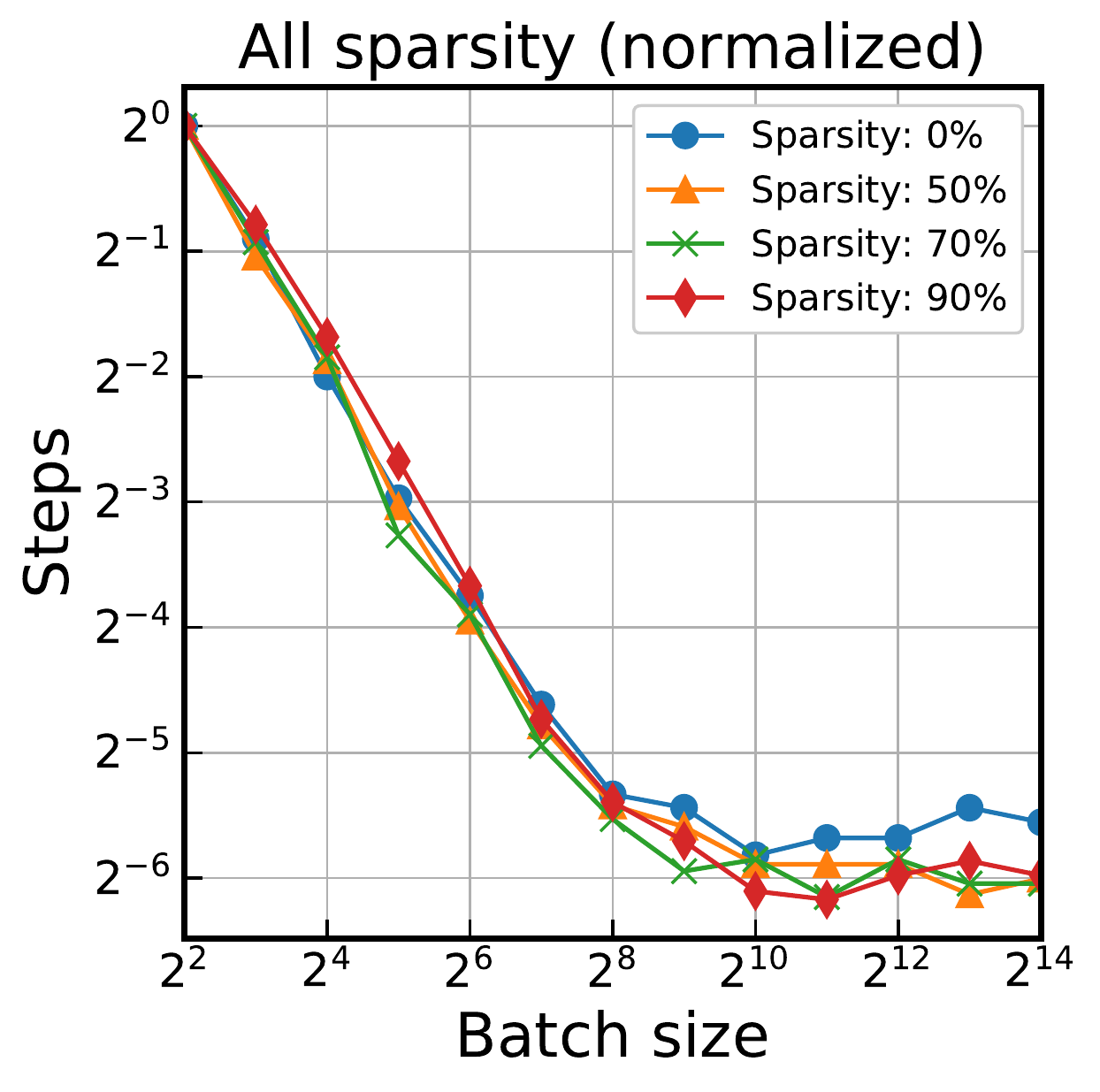}
        \caption{CIFAR-10, ResNet-8, Nesterov}
    \end{subfigure}
    \caption{
        Effects of data parallelism and sparsity on neural network training for various workloads with a fixed (a-c) or decaying learning rate (d).
        Across all workloads and sparsity levels, the same scaling pattern is observed for the relationship between batch size and steps-to-result: it starts with the initial phase of \emph{linear scaling}, followed by the region of \emph{diminishing returns}, and eventually reaches to \emph{maximal data parallelism}.
        Also, the effect of data parallelism in training sparse networks is no worse than that of the dense counterpart, despite the general difficulty of training the former.
        When training using a \emph{momentum} based SGD, the breakdown of the linear scaling regime often occurs much later at larger batch sizes for a network with higher sparsity.
        For example, in the case of workload \{MNIST, Simple-CNN, Momentum\}, the critical batch size for the sparsity 90\% network is around $2^{11}$ whereas it is $2^{9}$ for the sparsity 0\% network (see the 4th column in row (b)).
        This potentially indicates that one can exploit large batch sizes more effectively when training sparse networks than densely parameterized networks.
        We supply more results in Appendix~\ref{sec:additional}.
    }
    \label{fig:edp-main}
\end{figure}

\begin{figure}[t]
    \centering
    \begin{subfigure}{.9998\textwidth}
        \centering
        \includegraphics[height=32mm]{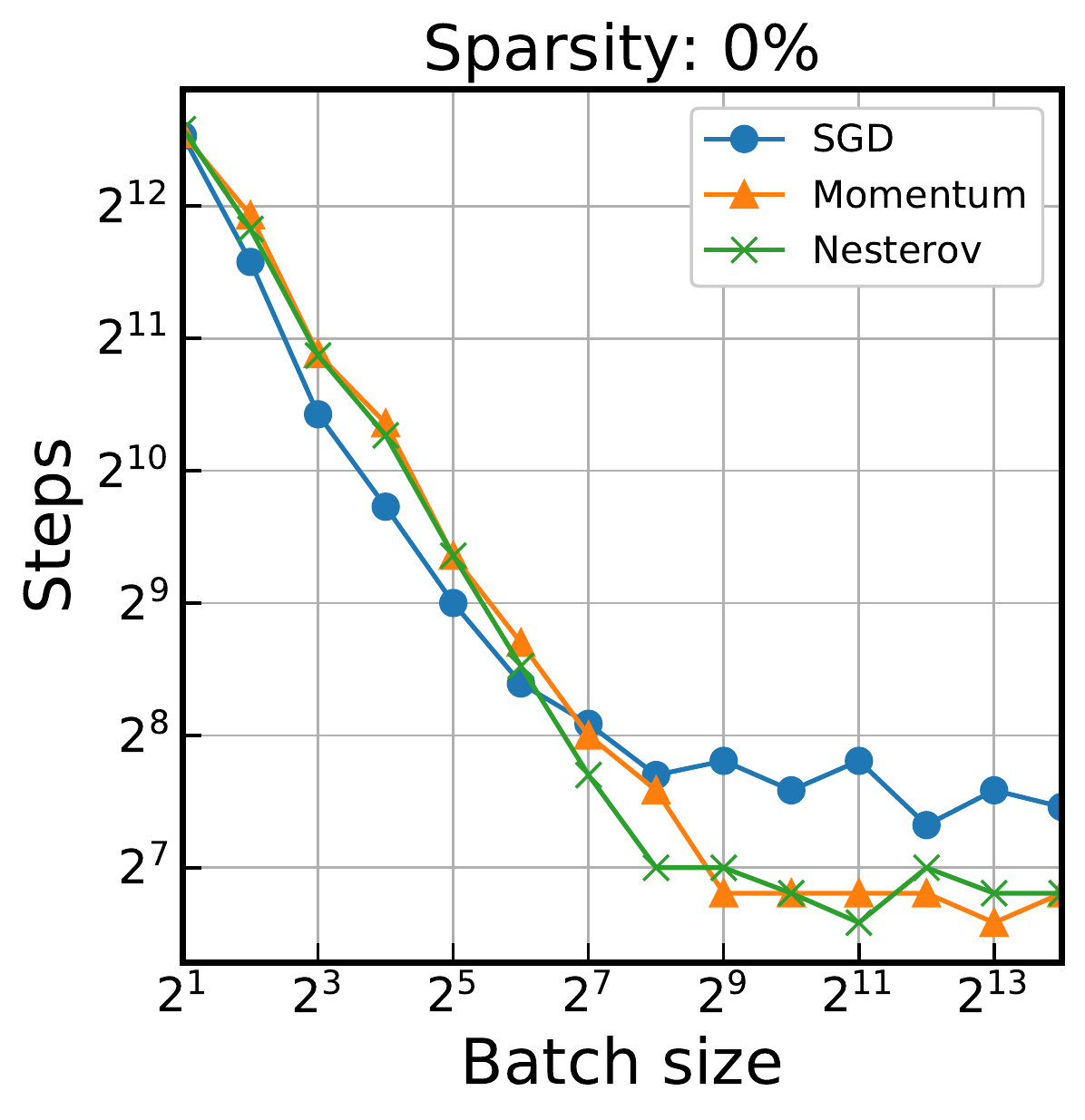}
        \includegraphics[height=32mm]{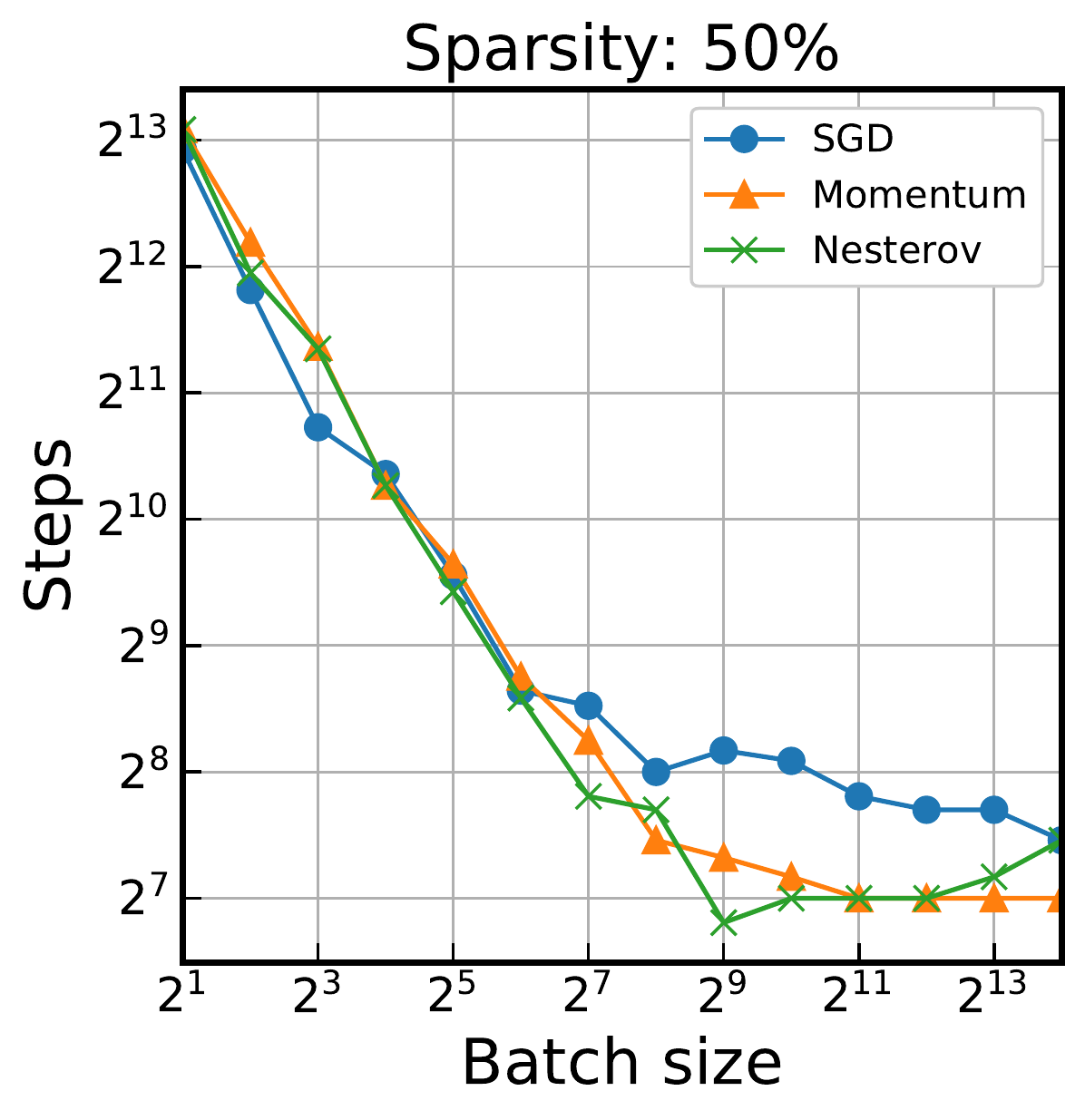}
        \includegraphics[height=32mm]{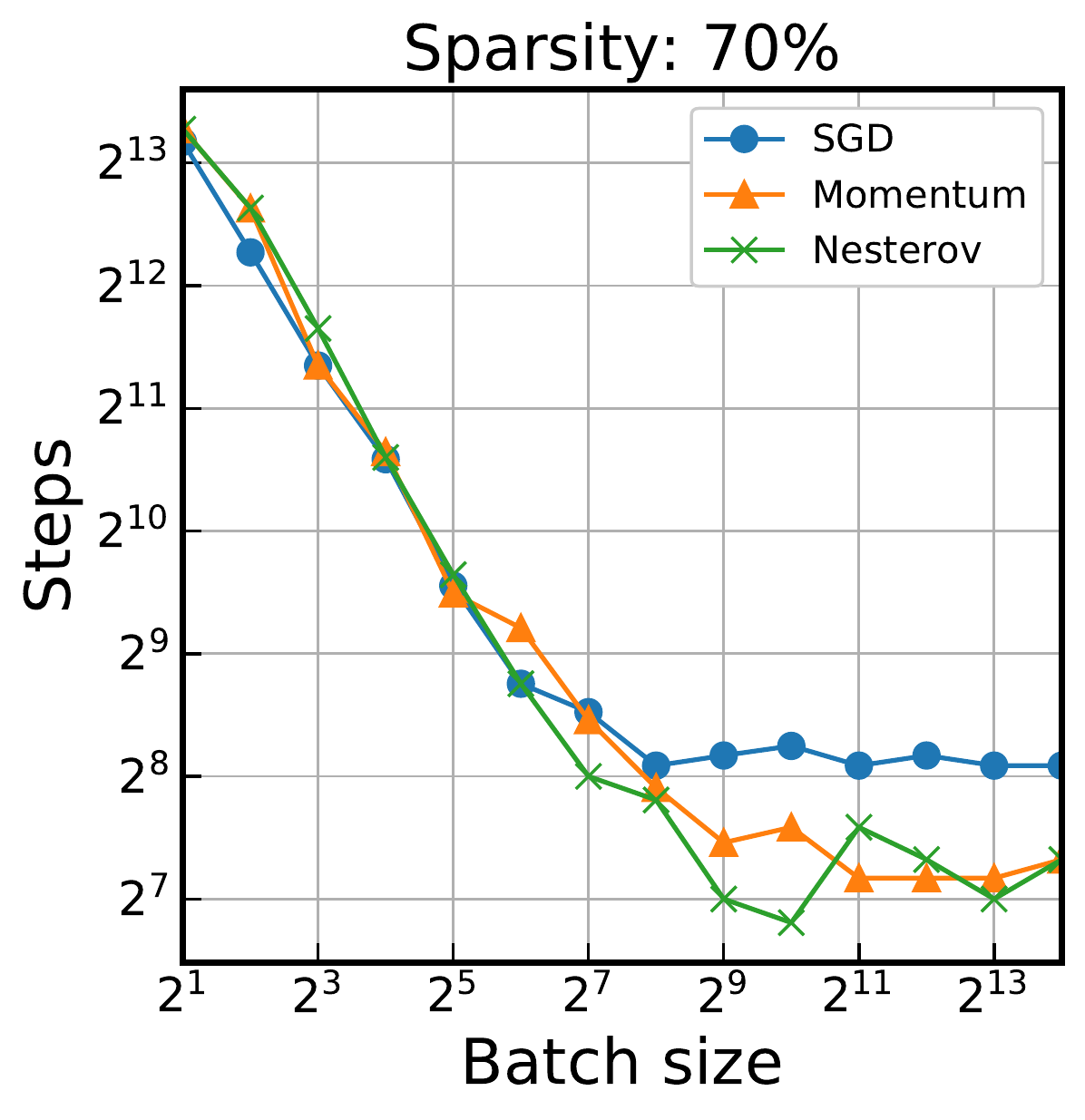}
        \includegraphics[height=32mm]{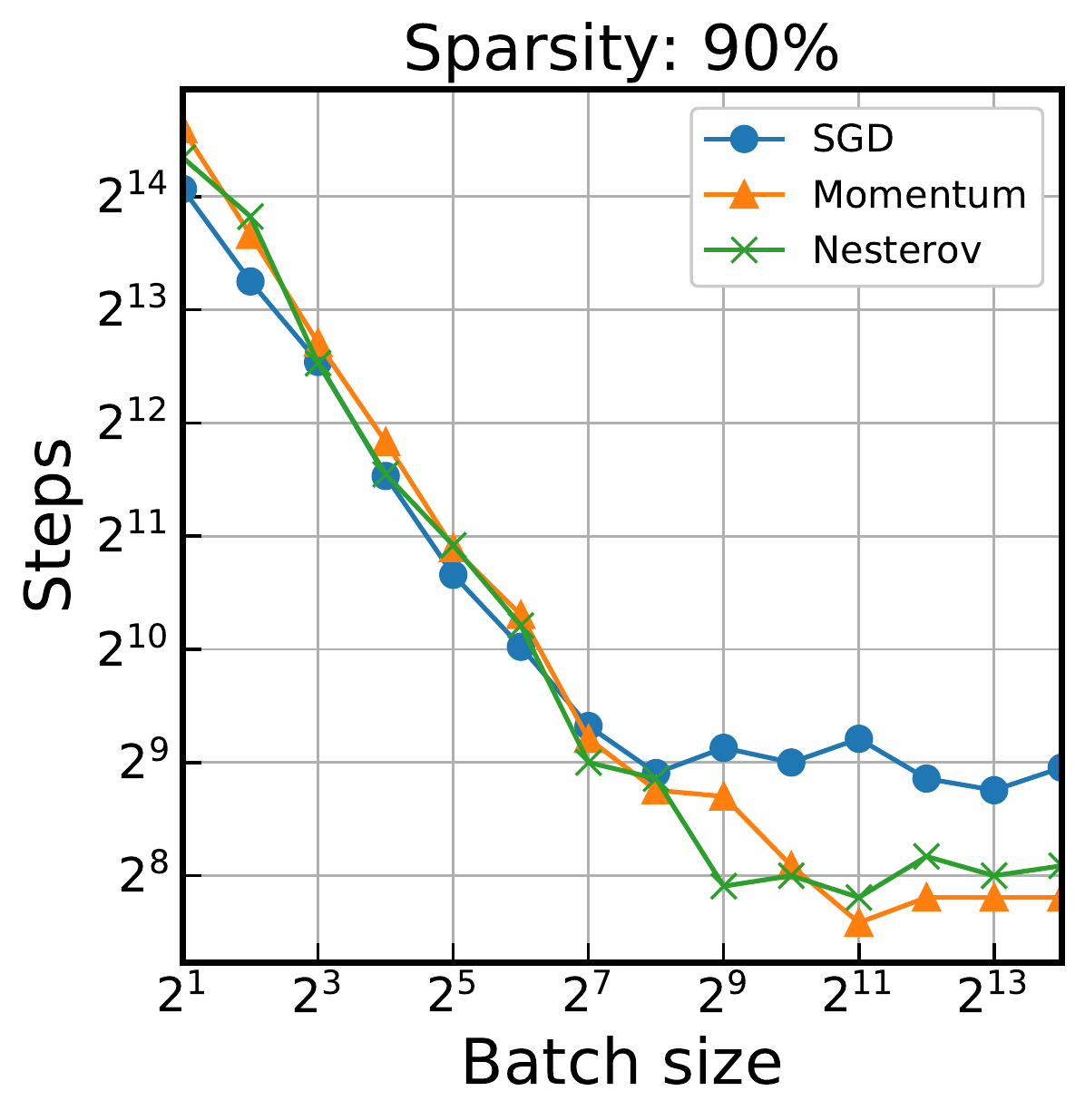}
    \end{subfigure}
    \caption{
        Comparing different optimization algorithms for the effects of data parallelism and sparsity on the workload \{MNIST, Simple-CNN, SGD/Momentum/Nesterov\} and study \{batch size (2-16384), sparsity levels (0, 50, 70, 90\%)\} settings; there is no normalization or averaging.
        Across all sparsity levels, momentum optimizers (\ie, Momentum, Nesterov) record lower steps-to-result in a large batch regime and have much bigger critical batch sizes than SGD without momentum.
        Identifying such patterns is crucial especially when training in resource constrained environments, as practitioners can potentially benefit from reducing the training time by deciding a critical batch size properly, while utilizing resources more effectively.
    }
    \label{fig:edp-optimizer}
\end{figure}

First of all, we observe in each and every sparsity level across different workloads a general scaling trend in the relationship between batch size and steps-to-result for the effects of data parallelism (see the 1st and 2nd columns in Figure~\ref{fig:edp-main}):
Initially, we observe a period of \emph{linear scaling} where doubling the batch size reduces the steps to achieve the goal error by half (\ie, it aligns closely with the dashed line), followed by a region of \emph{diminishing returns} where the reduction in the required number of steps by increasing the batch size is less than the inverse proportional amount (\ie, it starts to digress from the linear scaling region), which eventually arrives at a \emph{maximal data parallelism} (\ie, it hits a plateau) where increasing the batch size no longer decreases the steps to reach a goal error.
The same trend is observed across various workloads of data set, network model, and optimization algorithm as well as different goal errors (see Appendix~\ref{sec:additional}).
We note that our observation is consistent with the results of regular network training presented in~\citet{shallue2018measuring,zhang2019algorithmic}.

When we put the results for all sparsity levels together, we observe that training a sparser network takes a longer time; 
a data parallelism curve for higher sparsity usually lies above than that for lower sparsity (see the 3rd column in Figure~\ref{fig:edp-main}).
For example, compared to the case of sparsity $0$\% (\ie, the dense, over-parameterized network), $90$\% sparse network takes about $2-4$ times longer training time (or the number of training steps), consistently across different batch sizes (see Figure~\ref{fig:edp-ratio} in Appendix~\ref{sec:moreedp} for more precise comparisons).
Recall that we tune all metaparameters independently for each and every study case of batch size and sparsity level without relying on a single predefined training rule in order to find the best steps-to-result.
Therefore, this result on the one hand corroborates the general difficulty of training sparse neural networks against the ease of training overly parameterized neural networks.

On the other hand, when we normalize the y-axis of each plot by dividing by the number of steps at the first batch size, we can see the phase transitions more clearly.
As a result, we find that the regions of diminishing returns and maximal data parallelism appear no earlier when training sparse networks than the dense network (see the 4th column in Figure~\ref{fig:edp-main}).
This is quite surprising in that one could have easily guessed that the general optimization difficulty incurred by sparsification may influence the data parallelism too, at least to some degree; however, it turns out that the effects of data parallelism on sparse network training remain no worse than the dense case.
Moreover, notice that in many cases the breakdown of linear scaling regime occurs even much later at larger batch sizes for a higher sparsity case; this is especially evident for Momentum and Nesterov optimizers (\eg, compare training $90$\% sparse network using Momentum against $0$\% dense network).
In other words, for sparse networks, a \emph{critical batch size} can be larger, and hence, when it comes to training sparse neural networks, one can increase the batch size (or design a parallel computing system for distributed optimization) more effectively, while better exploiting given resources.
We find this result particularly promising since SGD with momentum is often the method of choice in practice.

We further show that momentum optimizers being capable of exploiting large batch sizes hold the same across different sparsity levels by displaying all plots together in Figure~\ref{fig:edp-optimizer}.
Overall, we believe that it is important to confirm the robustness of the data parallelism in sparse neural network training, which has been unknown thus far and difficult to estimate a priori.

\subsection{Analyzing metaparameter search}

\begin{figure}[t]
    \centering
    \begin{subfigure}{.9998\textwidth}
        \centering
        \includegraphics[height=32mm]{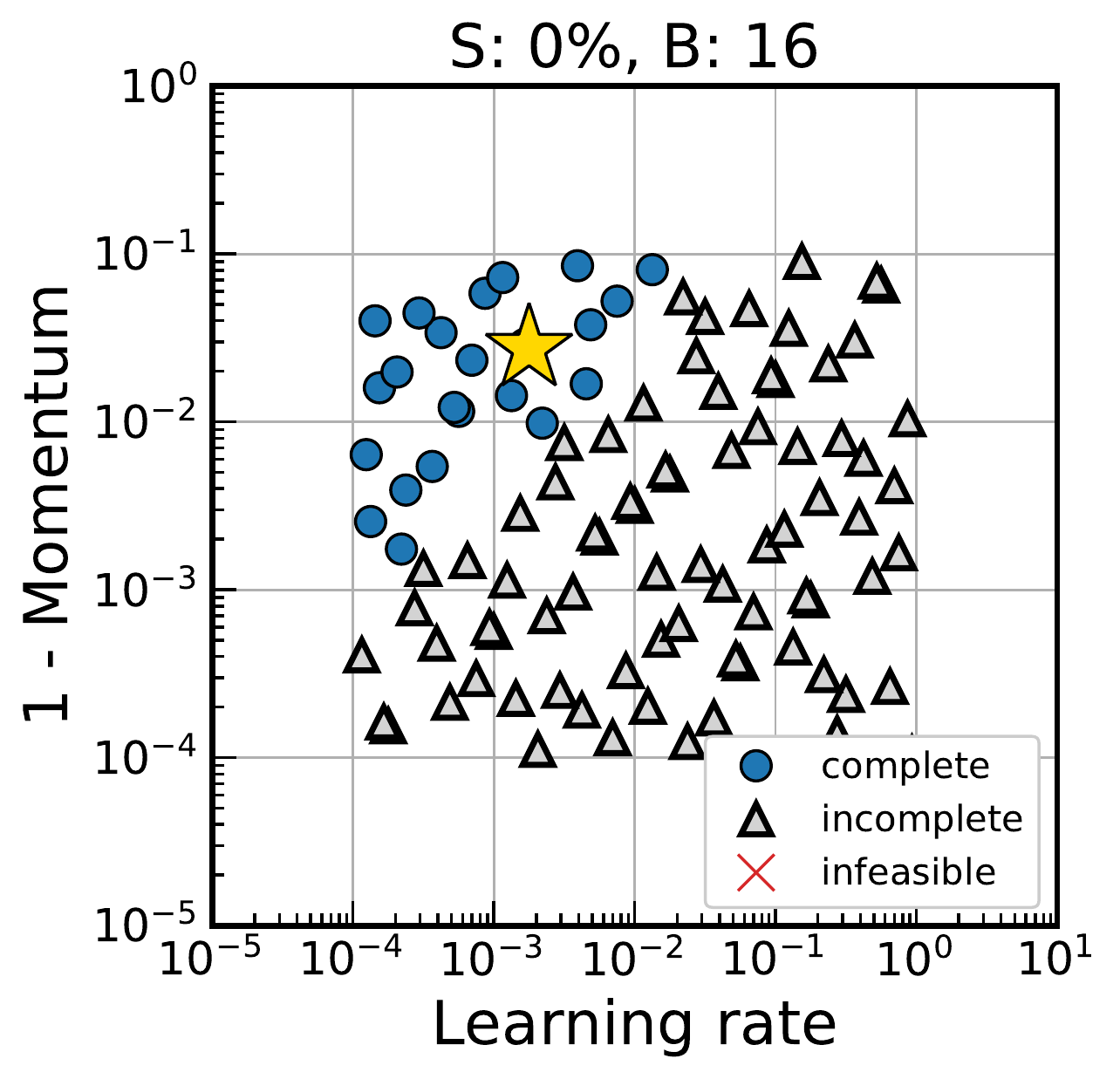}
        \includegraphics[height=32mm]{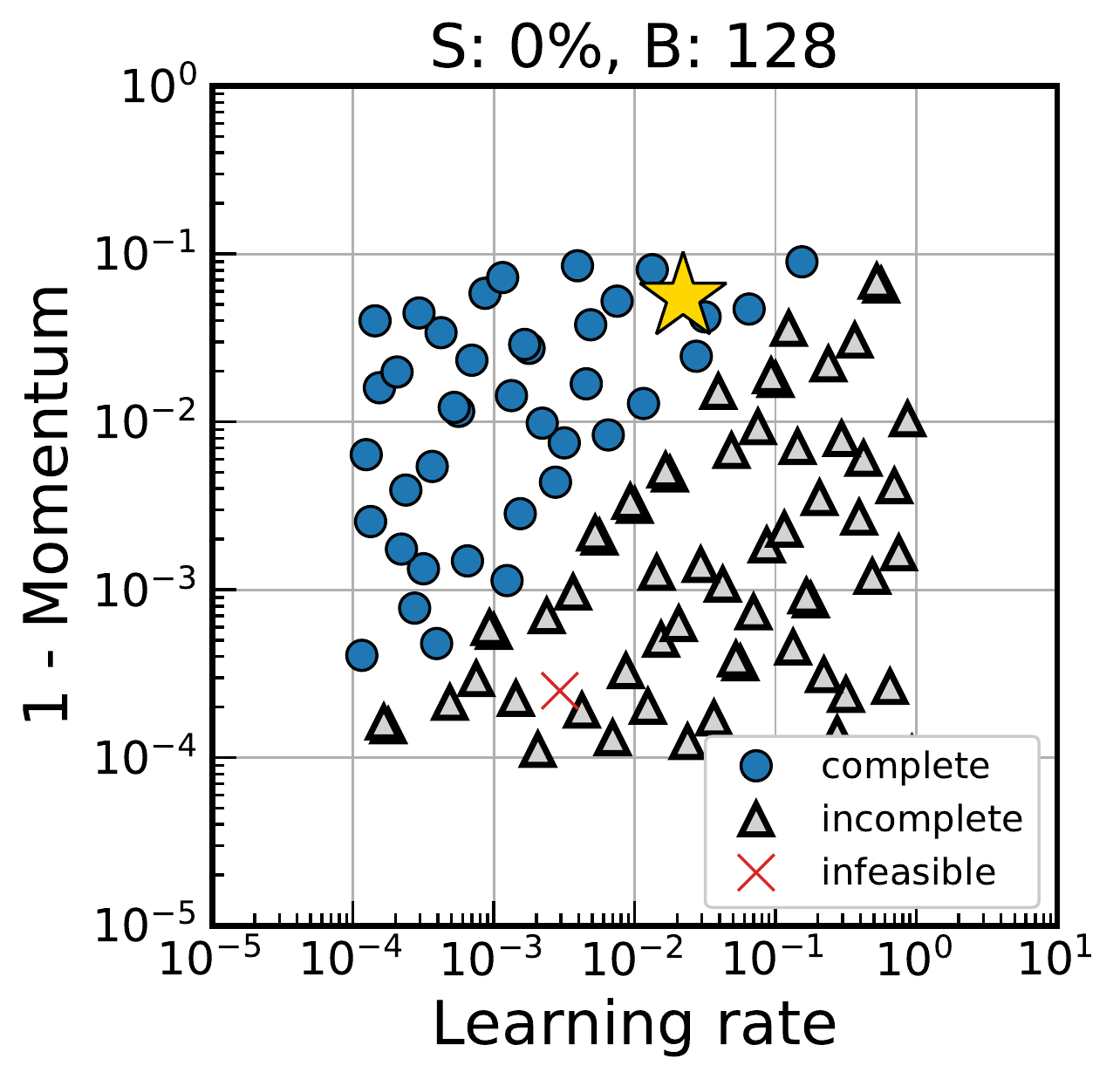}
        \includegraphics[height=32mm]{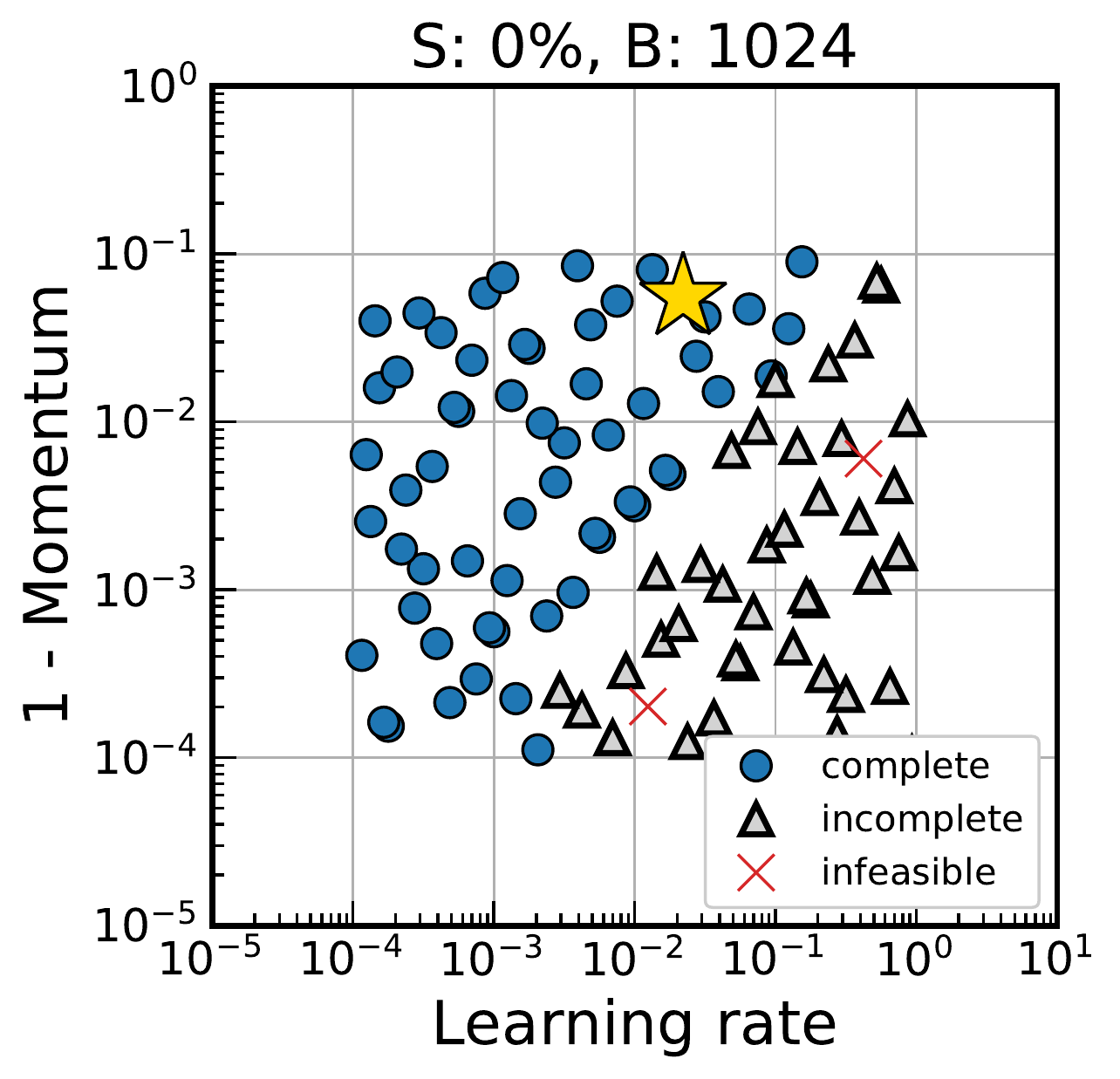}
        \includegraphics[height=32mm]{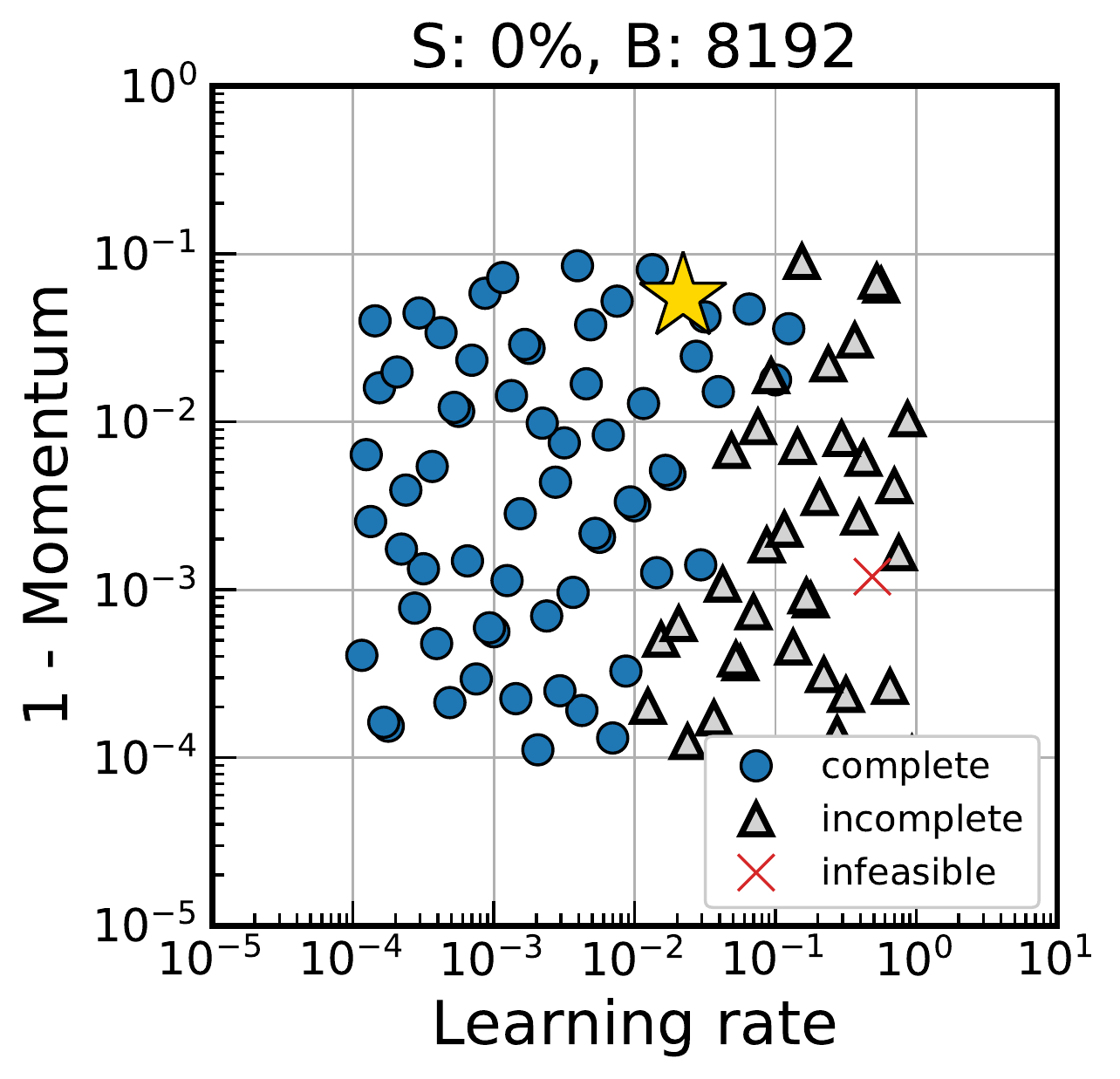}\\
        \includegraphics[height=32mm]{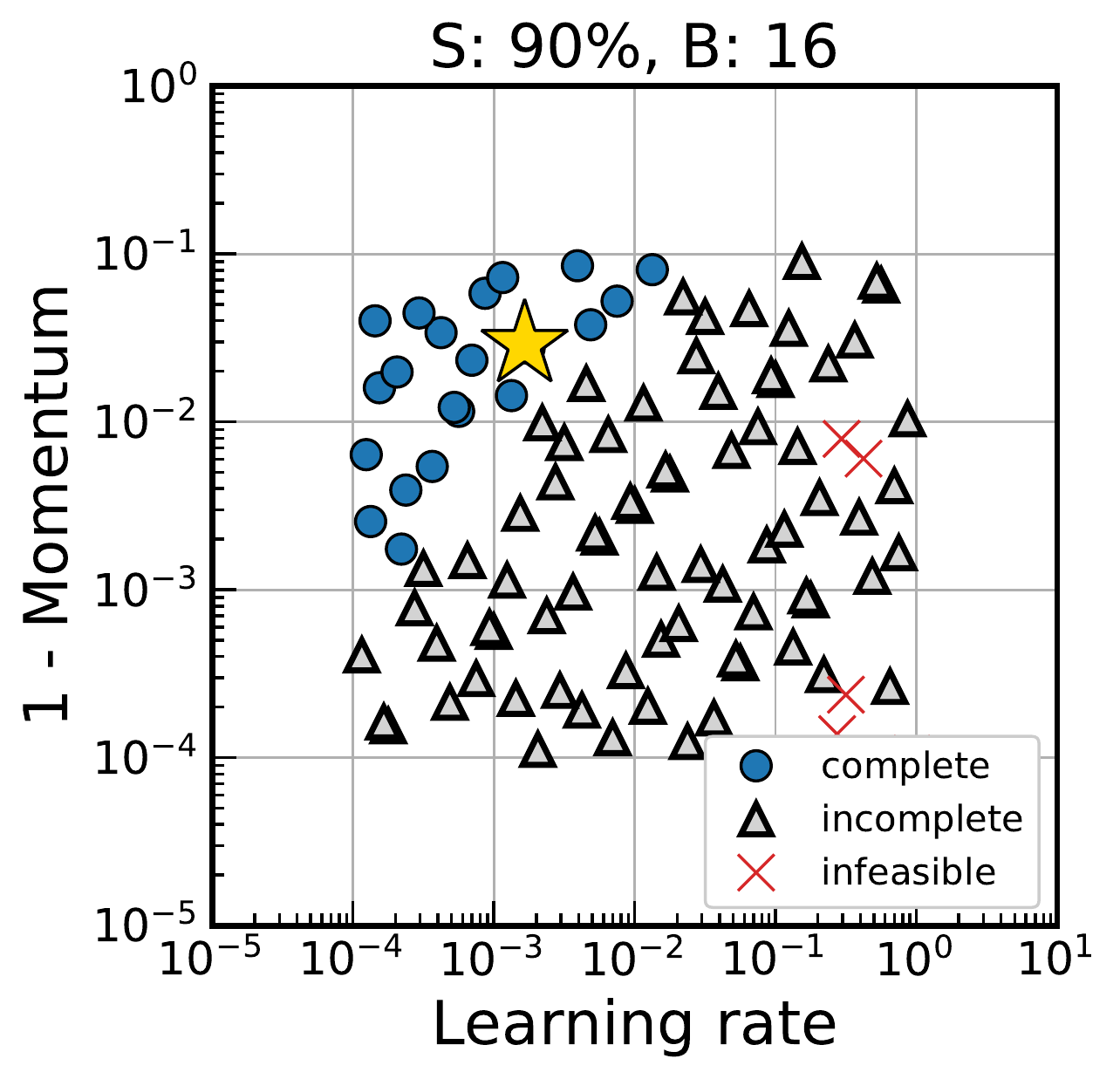}
        \includegraphics[height=32mm]{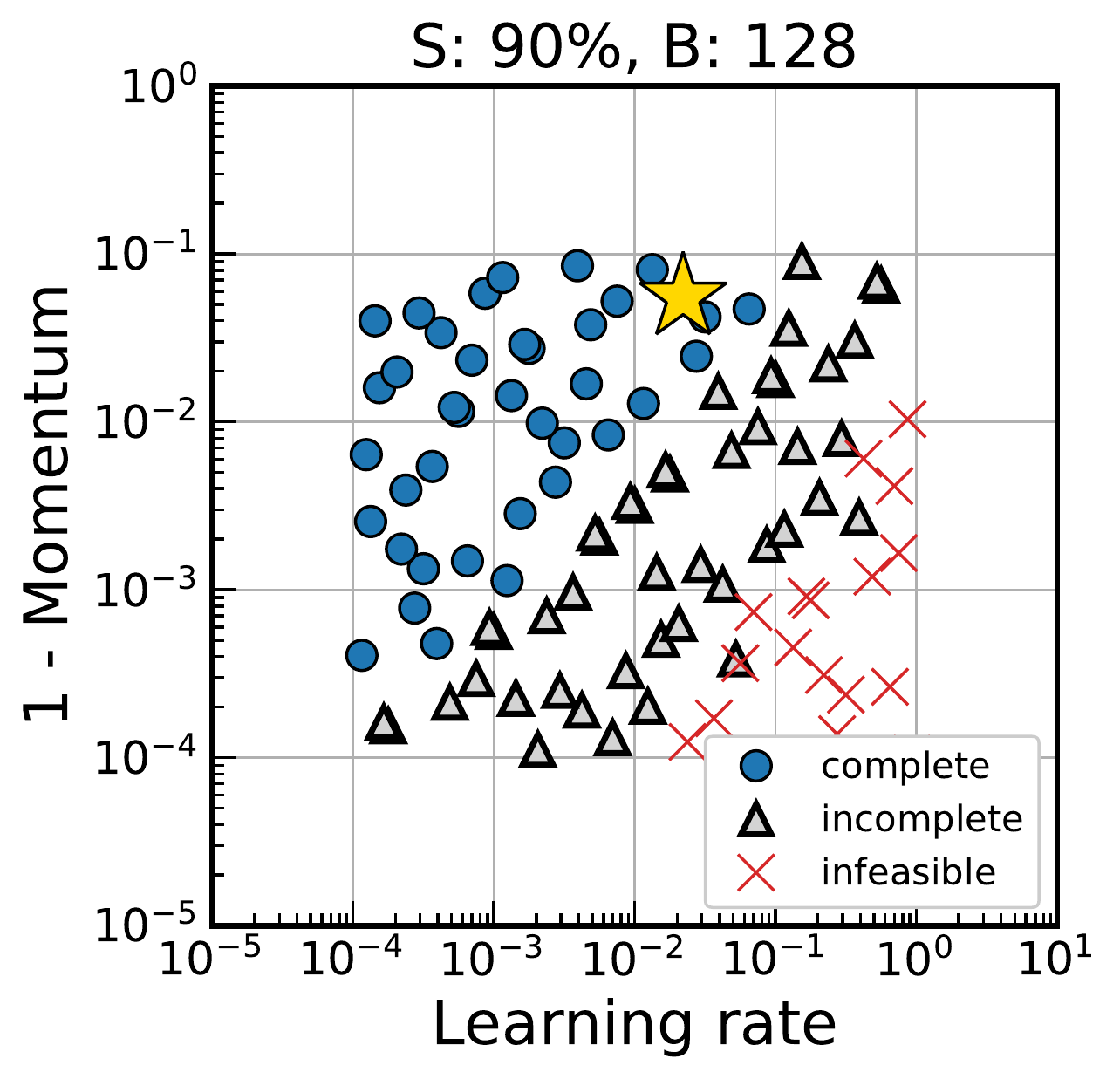}
        \includegraphics[height=32mm]{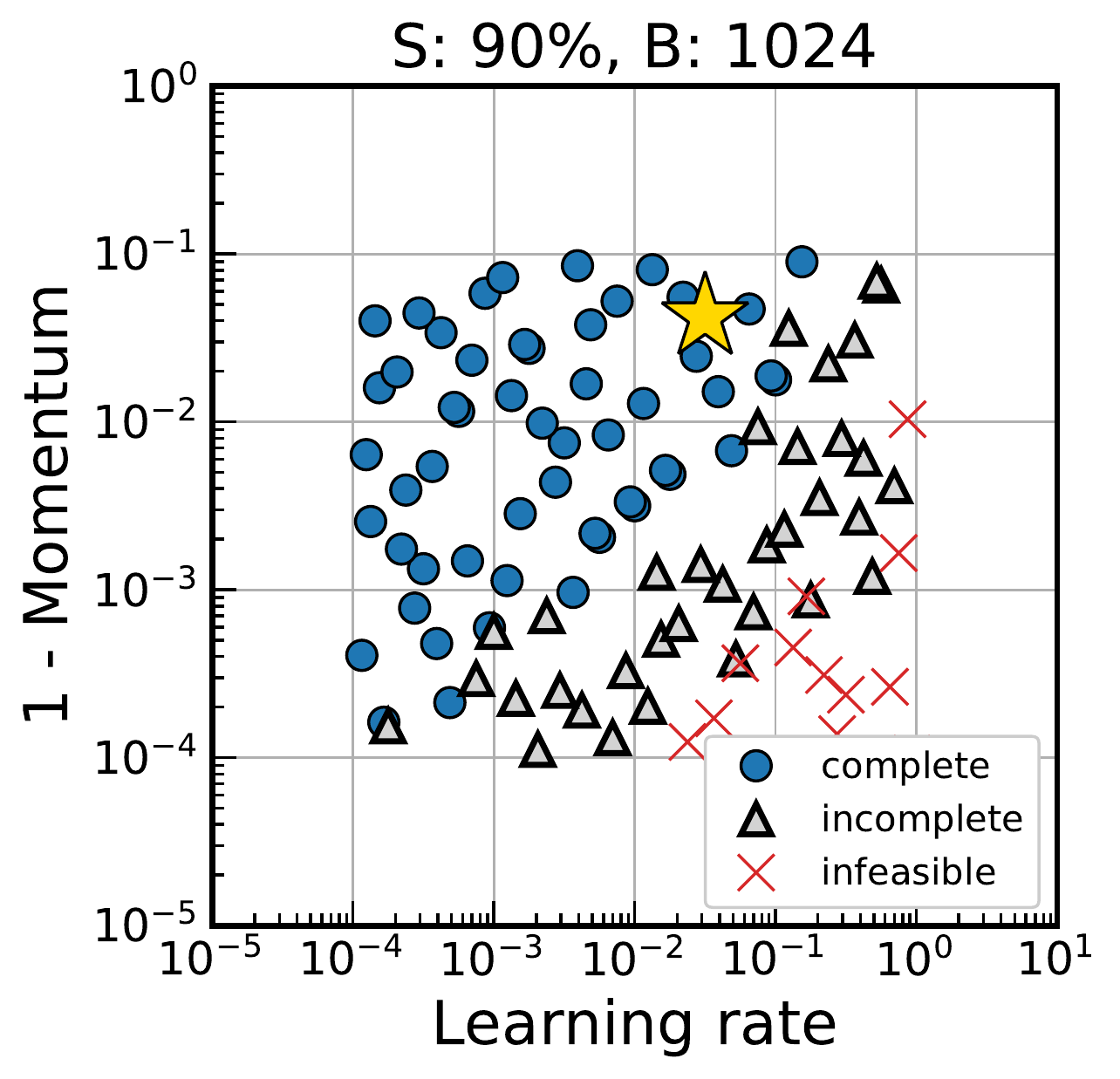}
        \includegraphics[height=32mm]{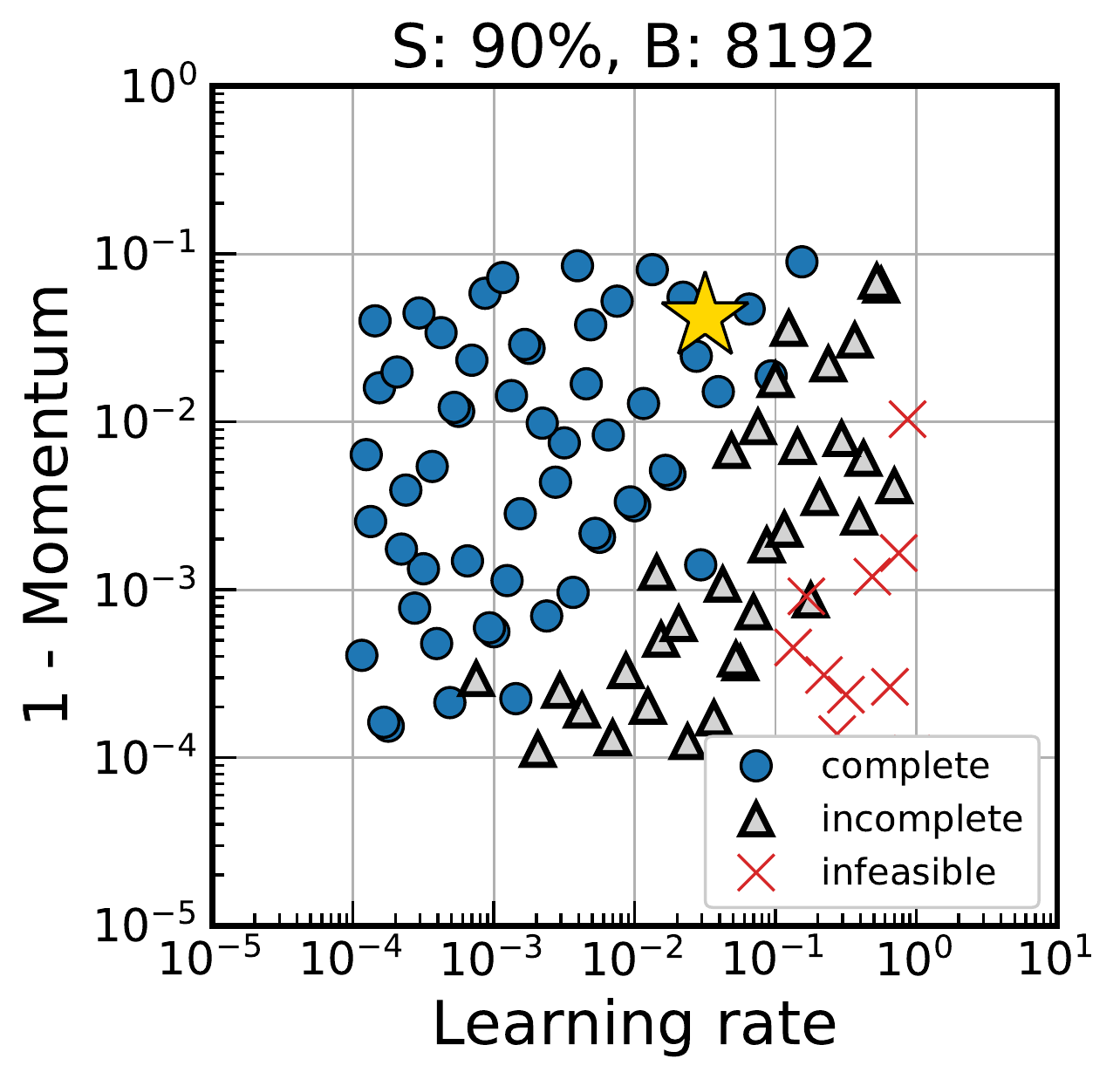}
    \end{subfigure}
    \caption{
        Metaparameter search results ($100$ samples in total) for Simple-CNN on MNIST trained using Momentum optimizer.
        Sparsity level (S) and batch size (B) are denoted at the top of each plot.
        The best trial that records the lowest steps to reach the goal (\ie, steps-to-result) is marked by gold star (\textcolor{Dandelion}{$\star$}).
        Complete/incomplete refer to the trials of goal reached/not reached given a maximum training step budget, while infeasible refers to the trial of divergence during training.
    }
    \label{fig:mparams-mnist-momentum}
\end{figure}

In this section, we analyze the metaparameter search used to measure the effect of data parallelism.
We specifically investigate the workload \{MNIST, Simple-CNN, Momentum\} where there are two metaparameters to tune (\ie, learning rate and momentum), to visualize all metaparameters easily in a 2D figure (see Appendix~\ref{sec:moremparams} for other results).
The results are presented in Figure~\ref{fig:mparams-mnist-momentum}, and we summarize our key findings below:
\begin{tight_itemize}
\item 
    Our quasi-random search samples metaparameters efficiently, so that they are distributed evenly (without being cluttered in a log-space) and flexibly (rather than sitting in a grid with fixed spacing) within the search spaces.
    Also, the best metaparameters to yield lowest steps (marked by gold star \textcolor{Dandelion}{$\star$}) are located in the middle of the search ranges rather than sitting at the search boundaries across different batch sizes and sparsity levels.
    This means that our experiments are designed reasonably well, and the results are reliable.
\item
    There are two distinguished regions (\ie, complete (\tikzcircle[MidnightBlue, fill=blue]{1.5pt}) and incomplete (\textcolor{Gray}{$\blacktriangle$})) being separated by a seemingly linear boundary as per the relationship between learning rate and momentum.
    This indicates that the optimization is being done by an interplay between these two metaparameters;
    if one metaparameter is not chosen carefully with respect to the other (\eg, increase learning rate for fixed momentum), the optimizer may be stuck in a region spending time oscillating and eventually results in incomplete runs.
    This highlights the importance of performing metaparameter search, although it is expensive, rather than relying on predetermined heuristic training strategies, in order to accurately measure the effect of data parallelism and avoid potentially suboptimal results.
\item
    The successful region (filled with blue circles \tikzcircle[MidnightBlue, fill=blue]{1.5pt}) becomes larger as with increasing batch size, showing that large batch training reaches a given goal error in less number of training iterations than small batch training, and hence, yields more complete runs.
    Notably, the best learning rate tends to increase as with increasing batch size too.
    This aligns well with the classic result in learning theory that large batch training allows using bigger learning rates~\citep{robbins1951stochastic,bottou1998online,krizhevsky2014one}.
\end{tight_itemize}

\section{Understanding the effects of data parallelism and sparsity}\label{sec:understanding}

So far, we have focused on measuring the effects of data parallelism and sparsity on neural network training, and as a result found two distinctive global phenomena across various workloads: scaling trend between batch size and steps-to-result, and training difficulty under sparsity.
While our findings align well with previous observations~\citep{shallue2018measuring,zhang2019algorithmic,lee2020a}, it remains unclear as to why it occurs, and whether it will generalize.
To this end, we establish theoretical results that precisely account for such phenomena based on convergence properties of generalized stochastic gradient methods in this section.

\subsection{Convergence analysis for the general effects of data parallelism}\label{sec:convergence}



Let us begin with reviewing the convergence properties of stochastic gradient methods as the choice of numerical algorithms for solving optimization problems.
Consider a generic optimization problem where the objective is to minimize empirical risk with the objective function $f: \sR^{m} \rightarrow \sR$, a prediction function $h: \sR^{d_x} \times \sR^{m} \rightarrow \sR^{d_y}$, and a loss function $l: \sR^{d_y} \times \sR^{d_y} \rightarrow \sR$ which yields the loss $l(h(\vx;\rvw),\vy)$ given an input-output pair $(\vx,\vy)$, where $\rvw \in \sR^{m}$ is the parameters of the prediction model $h$, and $d_x$ and $d_y$ denote the dimensions of input $\vx$ and output $\vy$, respectively.
A generalized stochastic gradient method to solve this problem can be of the following form:
\begin{equation}\label{eq:sg}
    \rvw_{k+1} \coloneqq \rvw_{k} - \eta_{k}g(\rvw_{k}, \vxi_{k}) \, ,
\end{equation}
where $\eta_{k}$ is a scalar learning rate, $g(\rvw_{k}, \vxi_{k}) \in \sR^{m}$ is a stochastic vector (\eg, unbiased estimate of the gradient $\nabla f$) with $\vxi_{k}$ denoting a random variable to realize data samples, either a single sample as in the prototypical stochastic gradient method~\citep{robbins1951stochastic} or a set of samples as in the mini-batch version~\citep{bottou1991stochastic}.
Given an initial iterate $\rvw_{1}$, it finds a solution by performing the above update iteratively until convergence.

Under the assumptions\footnote{
    (i) $f$ is differentiable and satisfies $\|\nabla f(\rvw) - \nabla f(\bar{\rvw})\|_{2} \le L \|\rvw - \bar{\rvw}\|_{2}$, $\forall \{\rvw, \bar{\rvw}\} \subset \sR^{m}$, and
    (ii) there exist scalars $M \geq 0$, $M_{G} \geq \mu^{2} > 0$ such that $\E_{\vxi_{k}}\big[ \| g(\rvw_{k}, \vxi_{k}) \|_{2}^{2} \big] \le M + M_{G} \| \nabla f(\rvw_{k}) \|_{2}^{2}$.} of Lipschitz smoothness of $f$ and bounded variance of $g$,
the convergence rate result states that for such generic problem with nonconvex objective and optimization method with a fixed~\footnote{We also consider the general decaying learning rate case and prove the same result in Appendix~\ref{sec:proof_decay}.} learning rate $\eta_{k} = \bar{\eta}$ for all $k$ satisfying $0 < \bar{\eta} \le \frac{\mu}{LM_{G}}$, the expected average squared norm of gradients of the objective function is guaranteed to satisfy the following inequality for all $K\in\sN$~\citep{bottou2018optimization}:
\begin{equation}\label{eq:convergence}
    \E\Bigg[ \frac{1}{K} \sum\limits_{k=1}^{K} \| \nabla f(\rvw_{k}) \|_{2}^{2} \Bigg] \le \frac{\bar{\eta}LM}{\mu} + \frac{2(f(\rvw_{1}) - f_{\infty})}{K\mu\bar{\eta}} \, .
\end{equation}
Here, $f(\rvw_{1})$, $f_{\infty}$, $\nabla f(\rvw_{k})$ refer to the objective function's value at $\rvw_{1}$, lower bound, gradient at $\rvw_{k}$, respectively.
Also, $L$ is the Lipschitz constant of $\nabla f$, and $\mu$, $M$, $M_{G}$ denote scalar bounds in the assumption on the second moment of $g(\rvw_{k}, \vxi_{k})$. 
Note here that $M$ is linked to batch size $B$ as $M \propto 1/B$.
In addition, if $g(\rvw_{k}, \vxi_{k})$ is an unbiased estimate of $\nabla f(\rvw_{k})$, which is the case for~$\vxi_{k}$ being i.i.d. samples as in our experiments, then simply $\mu=1$~\citep{bottou2018optimization}. 
In essence, this result shows that the average squared gradient norm on the left-hand side is bounded above by asymptotically decreasing quantity as per $K$, indicating a sublinear convergence rate of the method.
We note further that the convergence rate for the mini-batch stochastic optimization of nonconvex loss functions is studied previously~\citep{ghadimi2016mini,wang2017stochastic}, and yet, here we reconsider it to analyze the effects of data parallelism.


We now reformulate this result, such that it is translated into a form that matches our experiment settings and reveals the relationship between batch size and steps-to-result.
We start by recognizing that the quantity on the left-hand side, the expected average squared norm of $\nabla f(\rvw_{k})$ during the first~$K$ iterations, indicates the degree of convergence; for example, it gets smaller as training proceeds with increasing $K$.
Thus, this quantity is directly related to a goal error to reach in our experiments, which is set to be fixed across different batch sizes for a given workload.
This effectively means that training has stopped, and $K$ will no longer contribute to decrease the bound of the quantity.
Also, recall that we select the \emph{optimal} learning rate $\bar{\eta}^\star$, out of extensive metaparameter search, to record the \emph{lowest} number of steps to reach the given goal error, \ie, steps-to-result $K^\star$.
Next, notice that the only factors that constitute the inequality in \eqref{eq:convergence} are the Lipschitz constant $L$ and the variance bound $M$, and if they are assumed to be tight in the worst case, the inequality becomes tight.
Now we are ready to provide the relationship between batch size ($B$) and steps-to-result ($K^\star$) as follows:
\begin{pro}\label{pro:kvsb}
Let $\varepsilon=\E\big[ \frac{1}{K^\star} \sum_{k=1}^{K^\star} \| \nabla f(\rvw_{k}) \|_{2}^{2} \big]$ denote a degree of convergence achieved after the first $K^\star$ iterations and $\bar{\eta}^\star$ denote the optimal learning rate used to yield the lowest number of steps $K^\star$ to reach $\varepsilon$. Then,
\begin{equation}\label{eq:dparallel}
    K^\star \approx \frac{c_{1}}{B} + c_{2}\, , \qquad\text{where}\enspace c_{1} = \frac{\Delta L \beta}{\mu^{2} \varepsilon^{2}} \enspace\text{and}\enspace c_{2} = \frac{\Delta}{\bar{\eta}^\star \mu \varepsilon}\, ,
\end{equation}
where $\Delta = 2(f(\rvw_{1}) - f_{\infty})$, $\beta$ is the initial variance bound at batch size $B=1$ for a given workload.
\end{pro}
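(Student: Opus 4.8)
The plan is to convert the asymptotic convergence guarantee \eqref{eq:convergence} into an explicit formula for the stopping time $K^\star$ by treating it as tight at the optimally tuned configuration. Since $\bar{\eta}^\star$ and $K^\star$ are precisely the learning rate and step count that the exhaustive search returns to \emph{just} reach the prescribed goal error, and since that goal error is identified with the left-hand side $\varepsilon = \E[\frac{1}{K^\star}\sum_{k=1}^{K^\star}\|\nabla f(\rvw_k)\|_2^2]$, I would replace the ``$\le$'' in \eqref{eq:convergence} by ``$=$'':
\begin{equation*}
    \varepsilon \;=\; \frac{\bar{\eta}^\star L M}{\mu} \;+\; \frac{\Delta}{K^\star \mu \bar{\eta}^\star}\,,\qquad \Delta = 2\bigl(f(\rvw_1)-f_\infty\bigr).
\end{equation*}
This tightness step is the crux of the argument, and is exactly the heuristic flagged in the paragraph preceding the proposition: it presumes the worst-case values of the Lipschitz constant $L$ and the second-moment bound $M$ are attained, so that the inequality is saturated along the optimally tuned trajectory.

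Next I would make the batch-size dependence explicit. Writing $M = \beta/B$ with $\beta$ the variance bound at $B=1$ (recall $M \propto 1/B$), then multiplying the displayed equality through by $K^\star/\varepsilon$ and solving for $K^\star$ yields
\begin{equation*}
    K^\star \;=\; \frac{\Delta}{\mu \bar{\eta}^\star \varepsilon}\cdot\frac{1}{\,1 - \bar{\eta}^\star L \beta/(\mu \varepsilon B)\,}\,.
\end{equation*}
This is well-defined exactly when the irreducible ``noise-floor'' contribution $\bar{\eta}^\star L\beta/(\mu\varepsilon B)$ lies strictly below $1$ — i.e.\ when the target $\varepsilon$ sits above the noise floor, which is the only regime in which the search can return a finite $K^\star$ at all.

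Finally I would expand $1/(1-x)\approx 1+x$ (legitimate since $x=\bar{\eta}^\star L\beta/(\mu\varepsilon B)$ is small in the relevant regime, and smaller still as $B$ grows), obtaining
\begin{equation*}
    K^\star \;\approx\; \frac{\Delta}{\mu \bar{\eta}^\star \varepsilon} \;+\; \frac{\Delta L \beta}{\mu^2 \varepsilon^2 B} \;=\; \frac{c_1}{B} + c_2\,,
\end{equation*}
with $c_1 = \Delta L\beta/(\mu^2\varepsilon^2)$ and $c_2 = \Delta/(\bar{\eta}^\star\mu\varepsilon)$, as claimed; the linear-scaling, diminishing-returns, and maximal-parallelism phases then correspond to the $B\ll c_1/c_2$, $B\approx c_1/c_2$, and $B\gg c_1/c_2$ regimes. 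I expect the real obstacle to be conceptual rather than computational: defending the passage from inequality to equality, and arguing that the higher-order terms dropped in the expansion are negligible for the batch sizes used in the experiments — the remaining algebra is a one-line rearrangement. I would also note that $c_2$ still carries a dependence on $\bar{\eta}^\star$, which in principle varies with $B$; treating $c_2$ as effectively constant (consistent with the observed plateau, where $K^\star$ no longer moves) is a modeling choice I would make explicit.
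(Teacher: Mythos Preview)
Your proposal is correct and follows essentially the same route as the paper: tighten \eqref{eq:convergence} to an equality at the optimally tuned $(\bar{\eta}^\star,K^\star)$, solve the resulting relation for $K^\star$, substitute $M=\beta/B$, and linearize via a first-order expansion of $1/(1-x)$. The only cosmetic difference is that the paper first writes $K^\star \approx \Delta/\bigl(\bar{\eta}^\star\mu\varepsilon-(\bar{\eta}^\star)^2 LM\bigr)$ and then Taylor-expands, whereas you factor out $\Delta/(\mu\bar{\eta}^\star\varepsilon)$ before expanding; the algebra and the final constants are identical, and your added remarks on the tightness step and on the residual $B$-dependence of $c_2$ are, if anything, more explicit than the paper's own treatment.
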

\begin{proof}
This result is obtained by recasting \eqref{eq:convergence} as outlined above.
The proof is in Appendix~\ref{sec:proof}.
\end{proof}

This result precisely illustrates the relationship between batch size and steps-to-result. 
For example, when $B$ is small, $K^\star \approx \frac{c_{1}}{B}$, fitting the linear scaling regime (\eg, $B \rightarrow 2B$ makes $K^\star \rightarrow (1/2)K^\star$), whereas when $B$ is large and asymptotically dominates the right-hand side, $K^\star \approx c_{2}$, indicating the maximal data parallelism as $K^\star$ remains constant.
In general, for moderate batch sizes, scaling $B \rightarrow 2^{r}B$ results in $K^\star \rightarrow \frac{1}{2^{r}}K^\star + (1-\frac{1}{2^{r}})c_{2}$ (rather than $\frac{1}{2^{r}}K^\star$), indicating diminishing returns.

Moreover, we prove the same relationship between $B$ and $K^\star$ (with different constant terms) for the general decaying learning rate case (see Appendix~\ref{sec:proof_decay}).
Therefore, this result not only well accounts for the scaling trend observed in the experiments, but also describes it more precisely and generally.
Notably, the effect of data parallelism, which has only been addressed empirically and thus remained as debatable, is now theoretically verified and applicable to general nonconvex objectives.
We will further relate our result to sparse networks via smoothness analysis in Section~\ref{sec:lipschitz}.

\subsection{Lipschitz smoothness for the difficulty of training sparse networks}\label{sec:lipschitz}

\begin{figure*}[t]
    \centering
    \includegraphics[height=44mm]{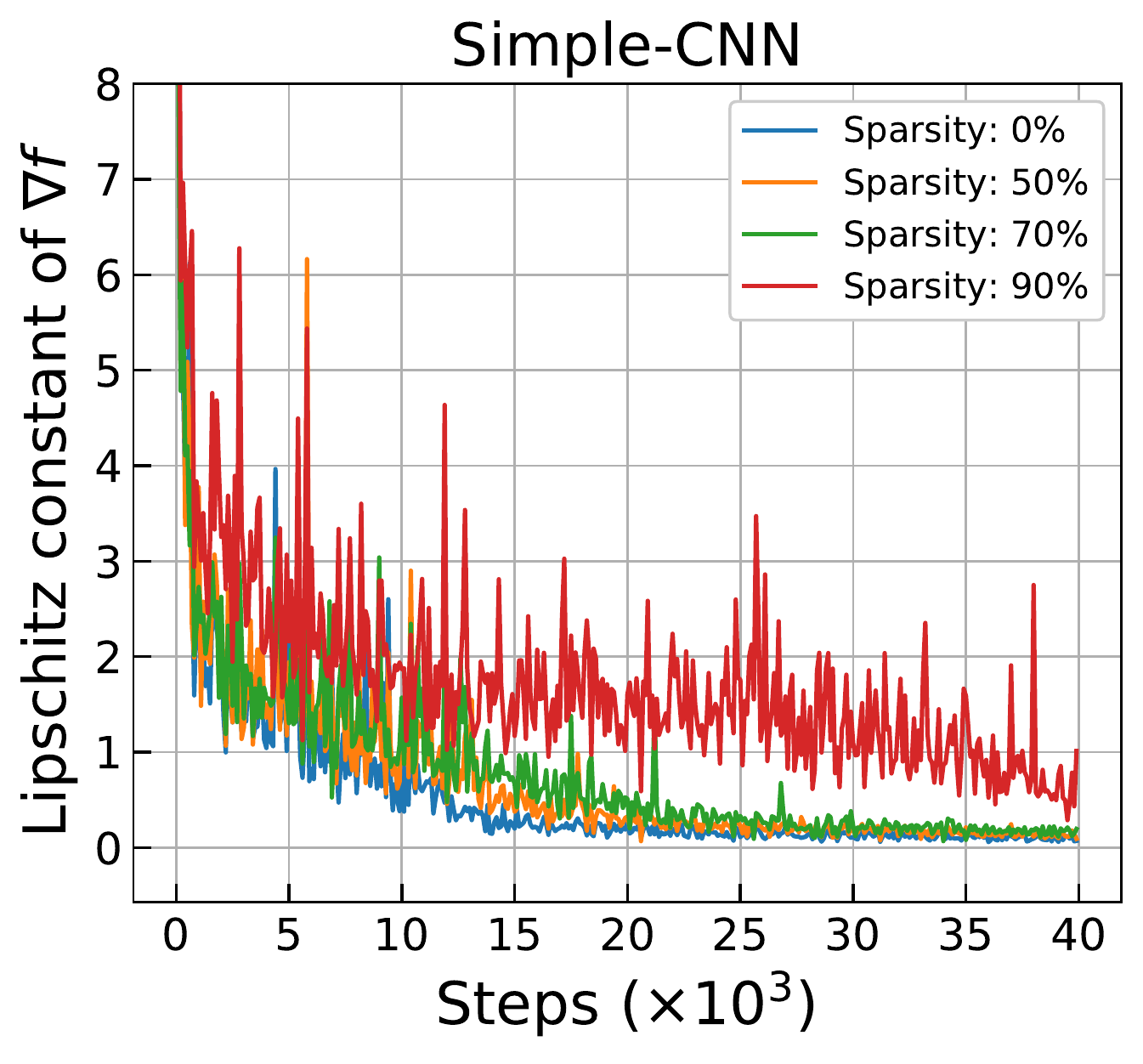}
    \hspace{0.06\textwidth}
    \includegraphics[height=44mm]{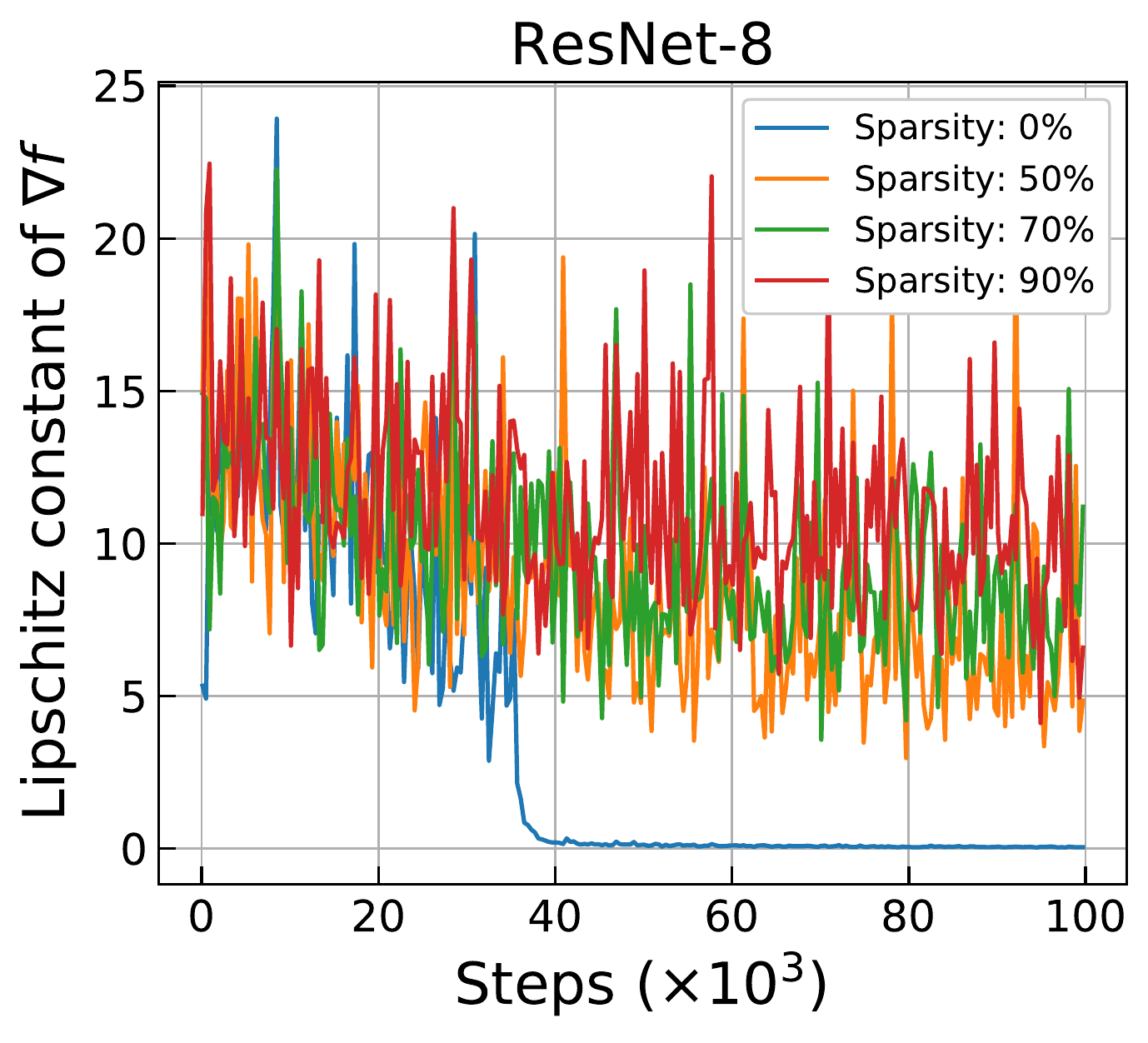}
    \caption{
        Lipschitz constant of $\nabla f$ measured locally over the course of training for networks with different sparsity levels.
        The more a network is pruned, the higher the Lipschitz constant becomes; \eg, for $0$, $50$, $70$, $90$\% sparsity levels, the average Lipschitz constants are $0.57$, $0.72$, $0.81$, $1.76$ for Simple-CNN and $3.87$, $8.74$, $9.54$, $11.18$ for ResNet-8, respectively.
        This indicates that pruning results in a network whose gradients are less smooth during training.
        We further provide additional training logs and explain how smoothness is measured in Appendix~\ref{sec:morelipschitz}.
    }
    \label{fig:lipschitz}
\end{figure*}

Another distinct phenomenon observed in our experiments is that the number of steps required to reach the same goal error for sparse networks is consistently higher than that for dense networks regardless of batch size (\ie, a whole data parallelism curve shifts upwards when introducing sparsity).
This indicates the general difficulty of training sparse neural networks, and that sparsity degrades the training speed.
In this section, we investigate what may cause this difficulty, and find a potential source of the problem by inspecting our theory of the effect of data parallelism to this end.

Let us begin with our result for the effect of data parallelism in Proposition~\ref{pro:kvsb}.
Notice that it is the coefficient $c_{1}$ ($= \Delta L \beta / \mu^{2} \varepsilon^{2}$) that can shift a whole data parallelism curve vertically, by the same factor across different batch sizes.
Taking a closer look, we realize that it is the Lipschitz constant $L$ that can vary quite significantly by introducing sparsity and hence affect $c_{1}$; $\varepsilon$ and $\mu$ are fixed, and $\Delta$ and $\beta$ can change by sparsity in a relatively minor manner (we explain this in detail in Appendix~\ref{sec:morelipschitz}). 
Specifically, $L$ refers to the bound on the rate of change in $\nabla f$ and is by definition a function of $f$. 
Also, sparsity introduced by pruning changes the prediction function $h$ which is linked to $f$ via the loss function $l$.
Therefore, we posit that a sparse neural network obtained by pruning will be \emph{less smooth} (with a higher Lipschitz constant) than the non-pruned dense network. 
To verify our hypothesis, we empirically measure the Lipschitz constant for networks with different sparsity levels over the course of the entire training process.
The results are presented in Figure~\ref{fig:lipschitz}.

As we can see, it turns out that the Lipschitz constant increases as with increasing sparsity level, and further, is consistently higher for sparse networks than for the dense network throughout training.
This means that \emph{pruning results in sparse networks whose gradient changes relatively too quickly} compared to the dense network; in other words, the prediction function $h$ becomes \emph{less smooth} after pruning. 
This is potentially what hinders training progress, and as a result, sparse networks require more time (\ie, steps-to-result) to reach the same goal error.

Evidence of increased Lipschitz constant for sparse networks can be found further in metaparameter search results presented in Figure~\ref{fig:mparams-mnist-sgd}.
Notice that for each batch size, the size of the range for successful learning rate $\bar{\eta}$ decreases when switching from $0$ to $90$\% sparsity level.
This is because the learning rate bound satisfying the convergence rate theory becomes $0 < \bar{\eta} \le 1/L$ for a fixed batch size, and increased $L$ due to sparsity shrinks the range of $\bar{\eta}$.

\begin{figure}[t]
    \centering
    \begin{subfigure}{.9998\textwidth}
        \centering
        \includegraphics[height=26mm]{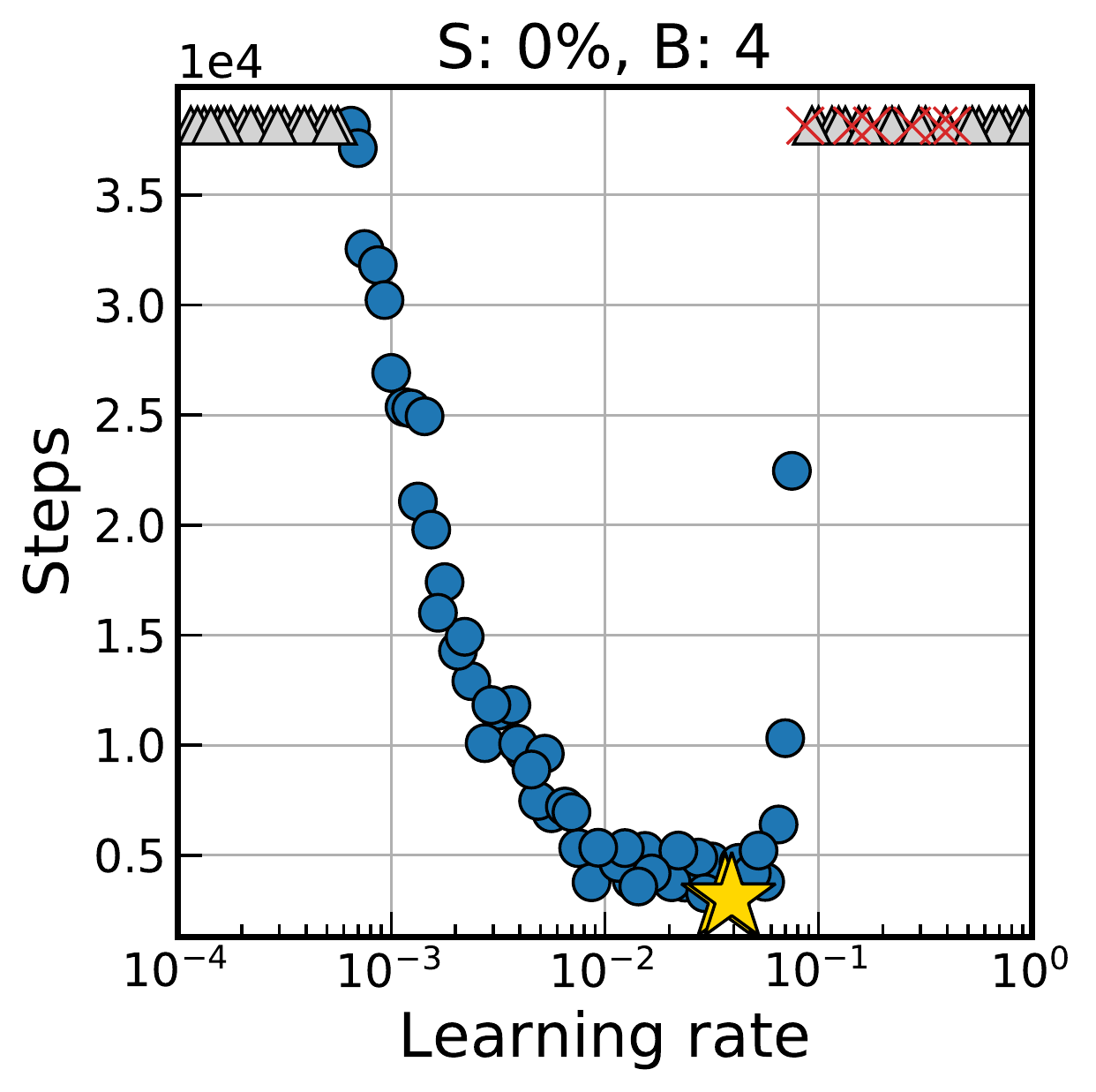}
        \includegraphics[height=26mm]{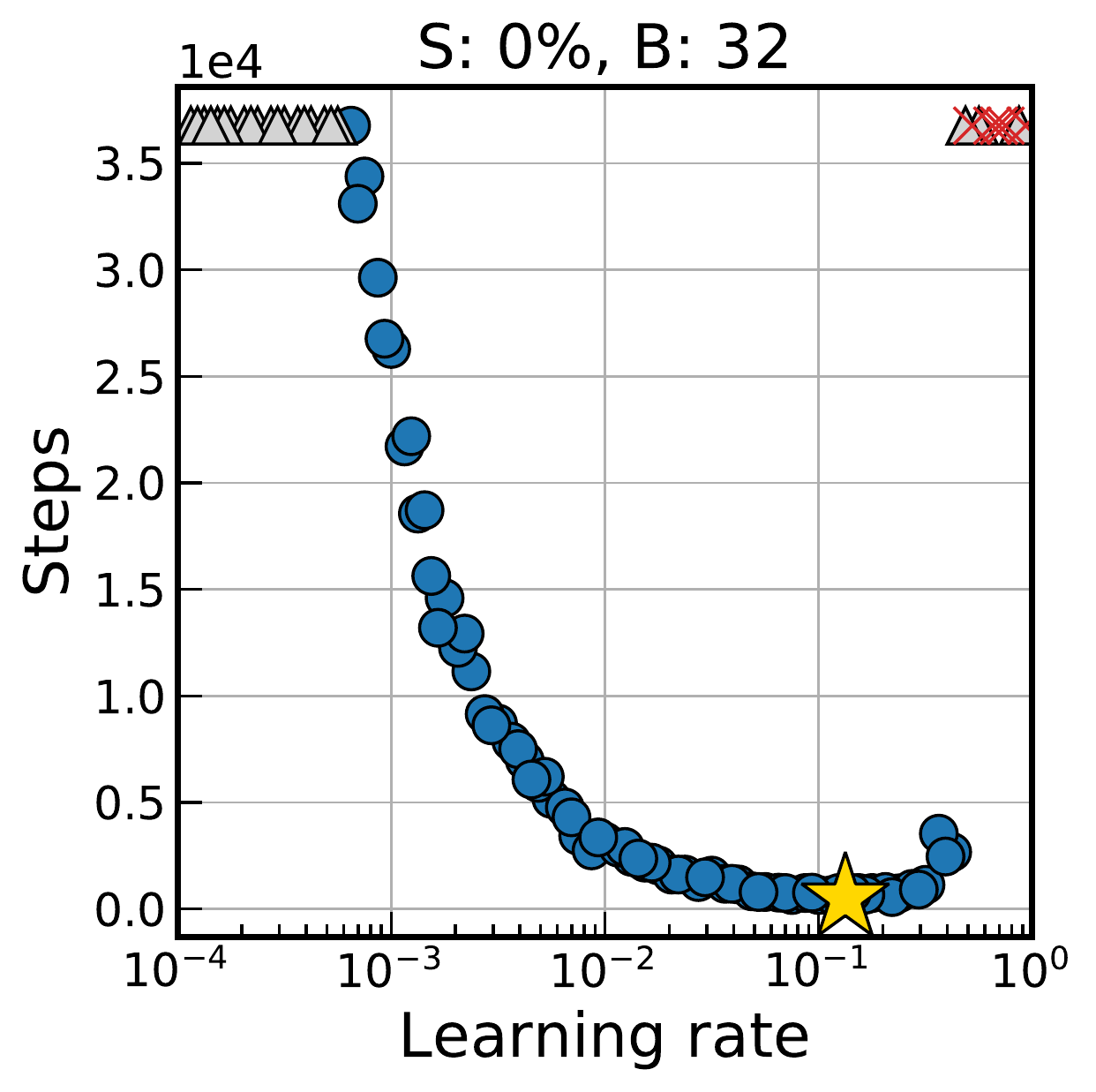}
        \includegraphics[height=26mm]{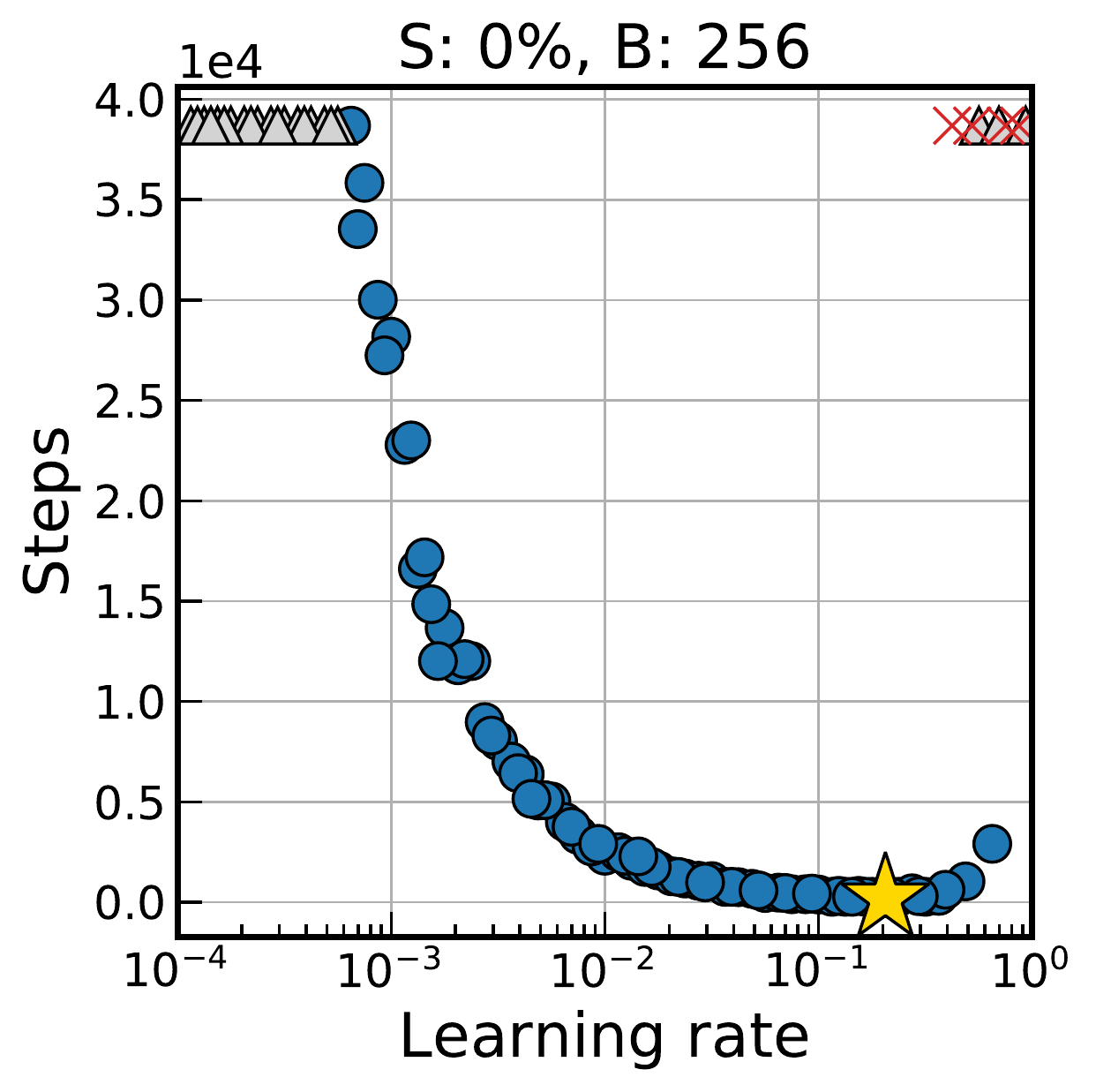}
        \includegraphics[height=26mm]{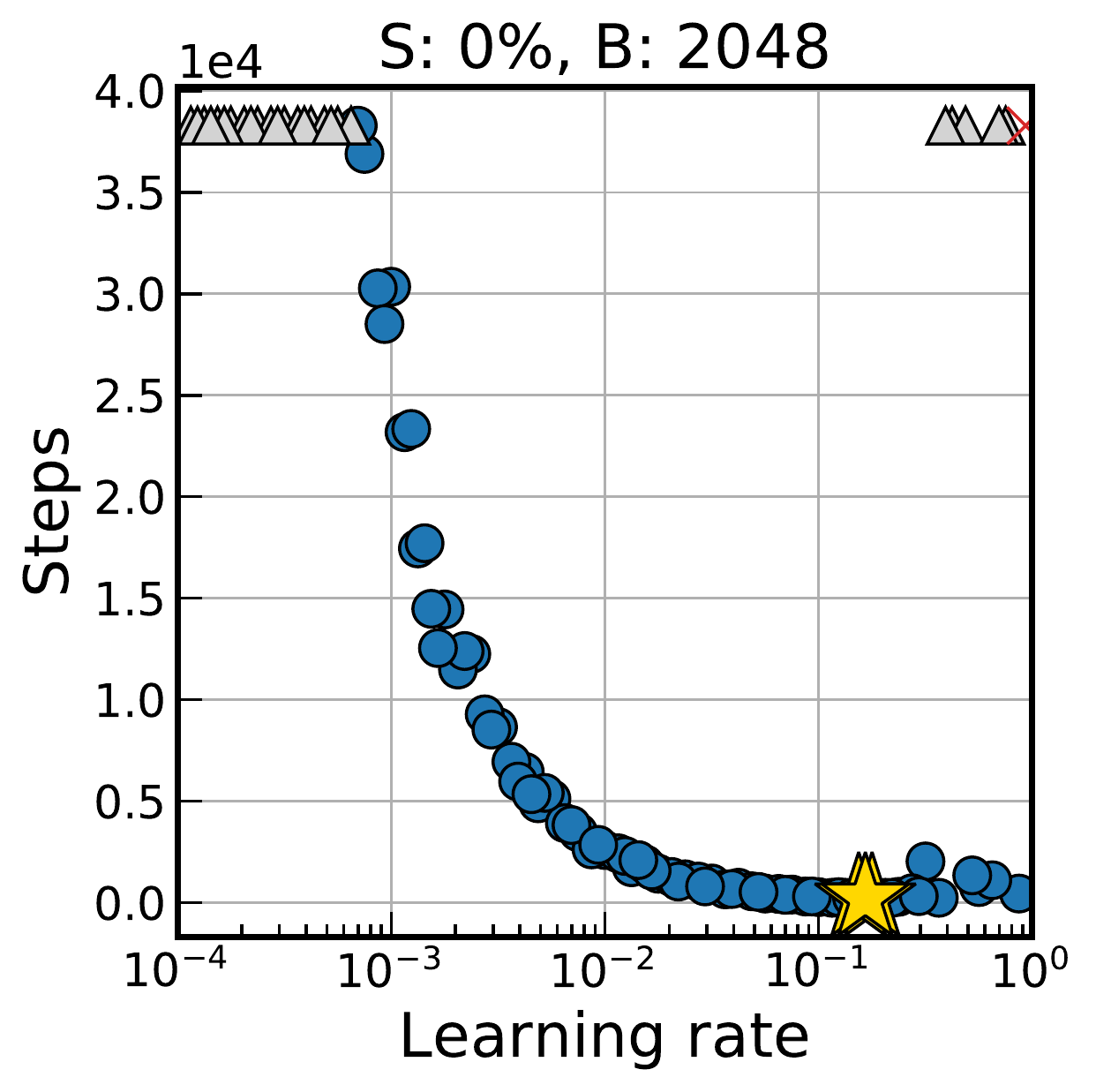}
        \includegraphics[height=26mm]{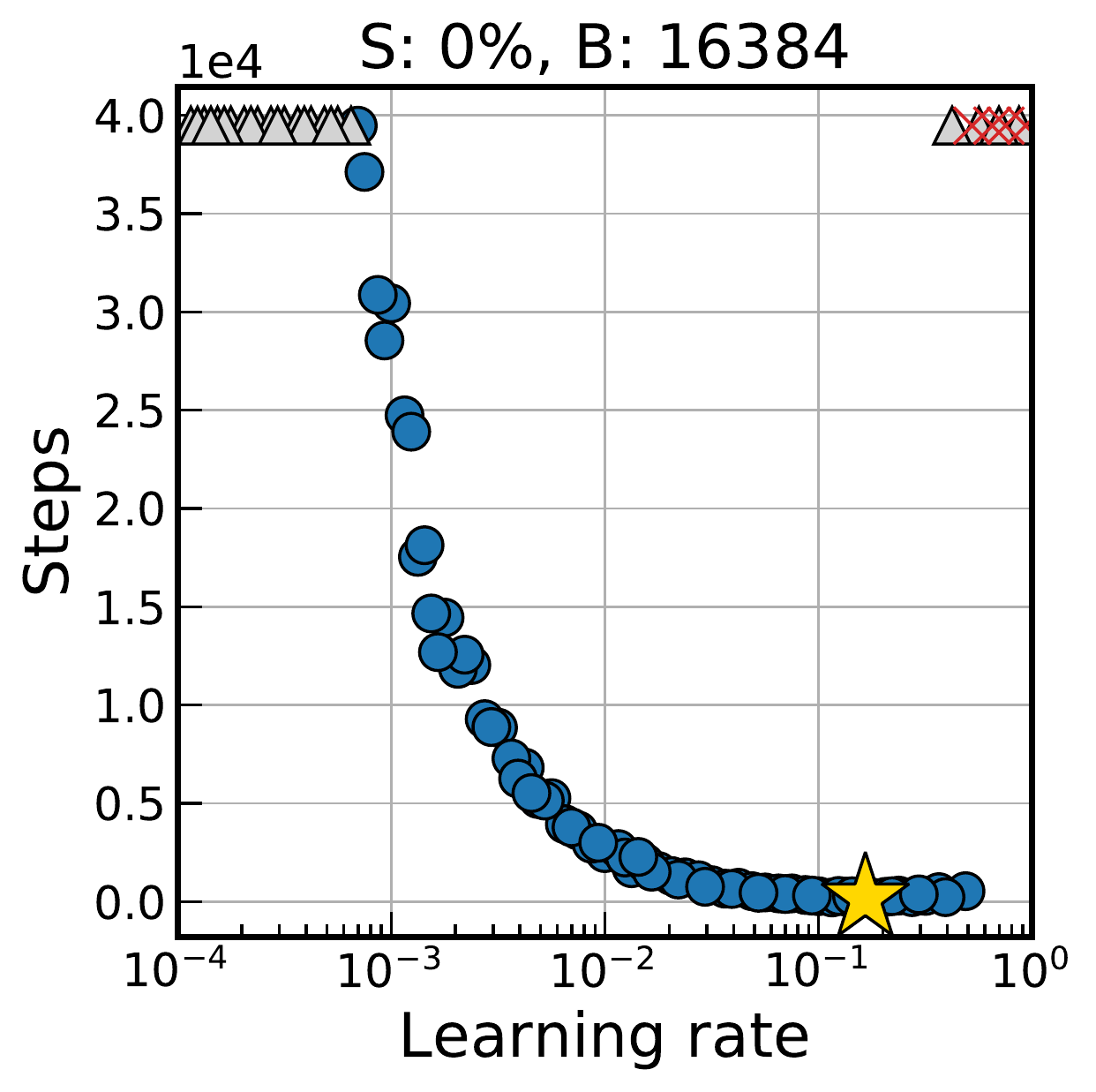}
        \includegraphics[height=26mm]{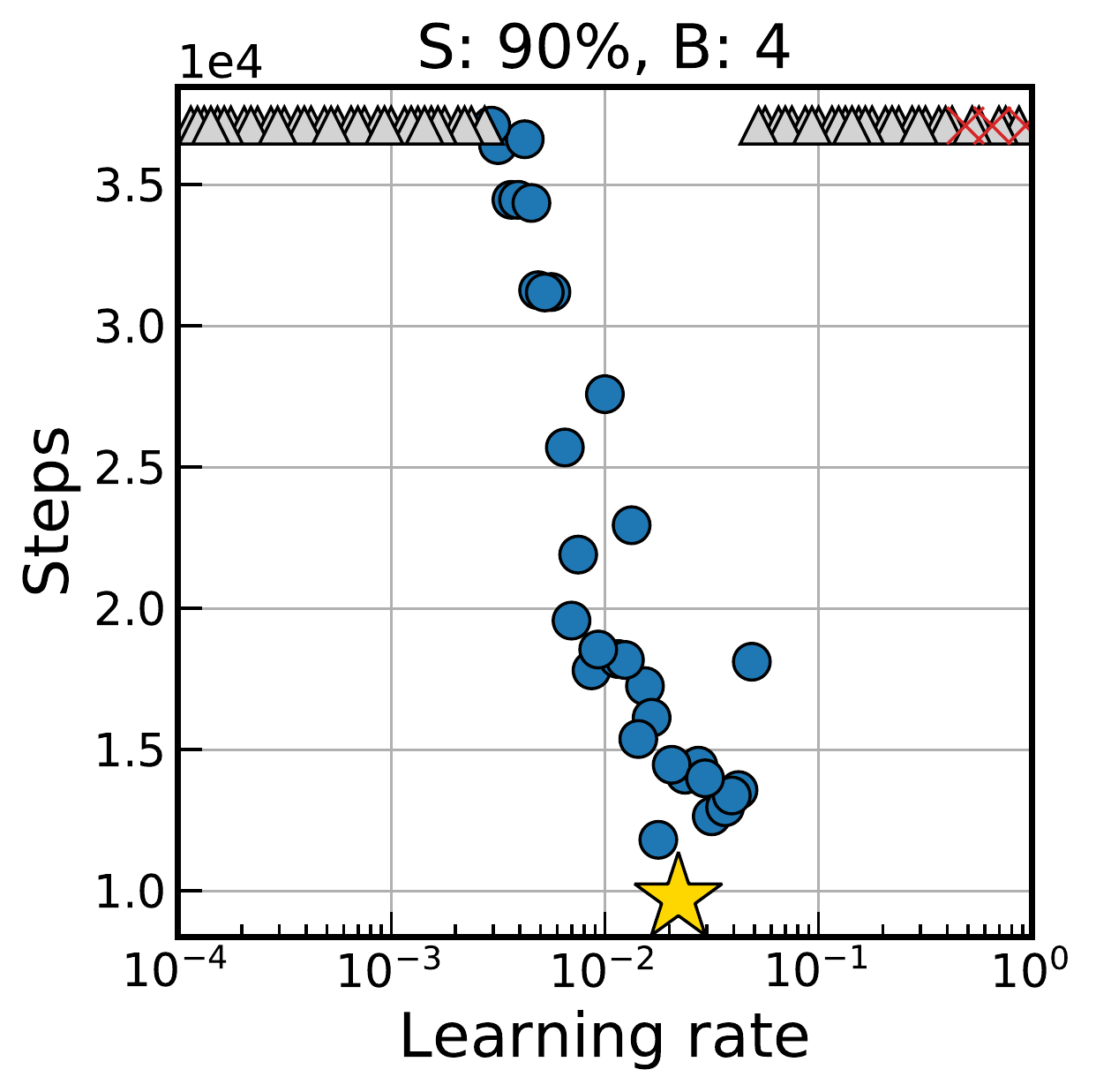}
        \includegraphics[height=26mm]{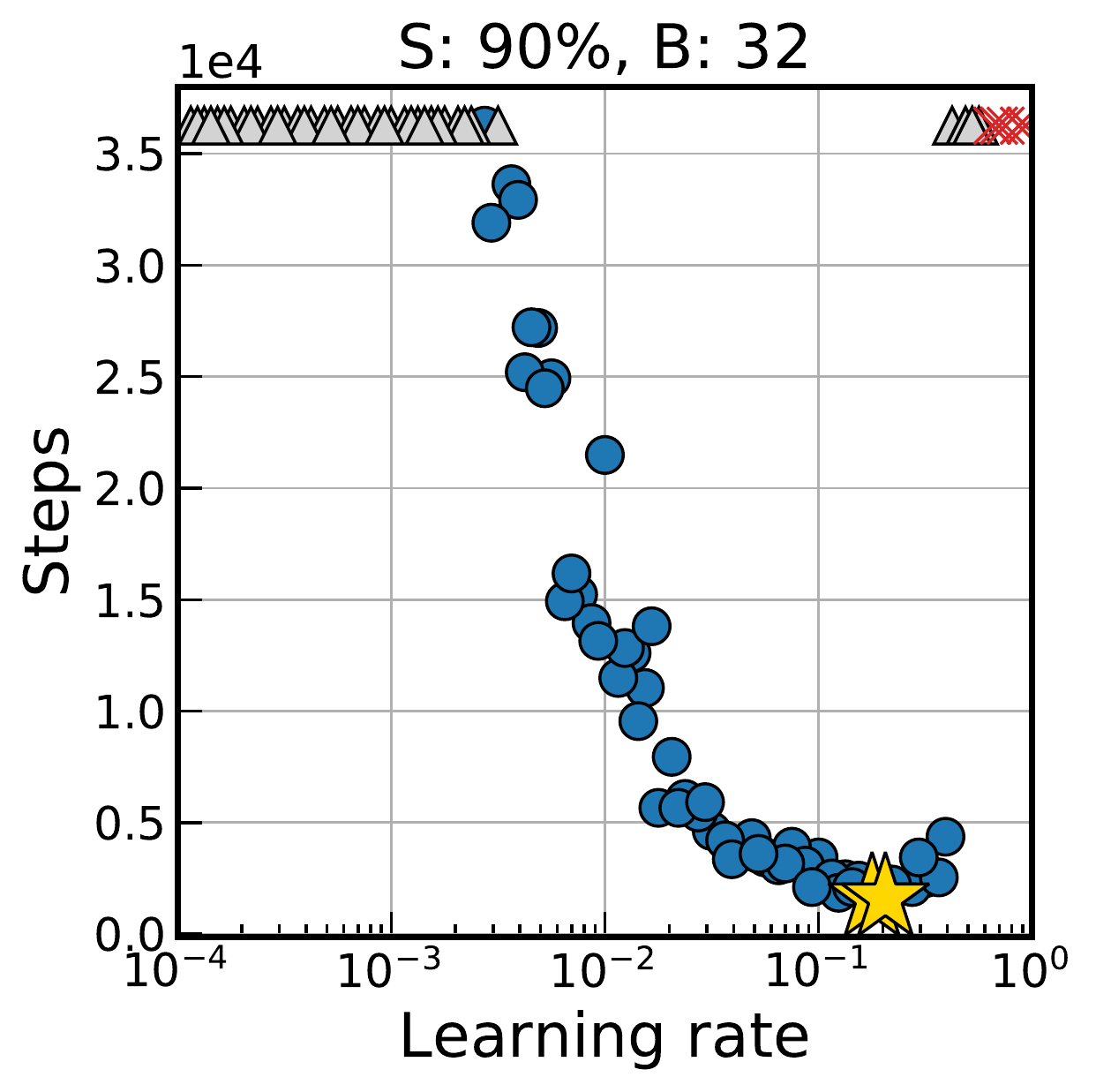}
        \includegraphics[height=26mm]{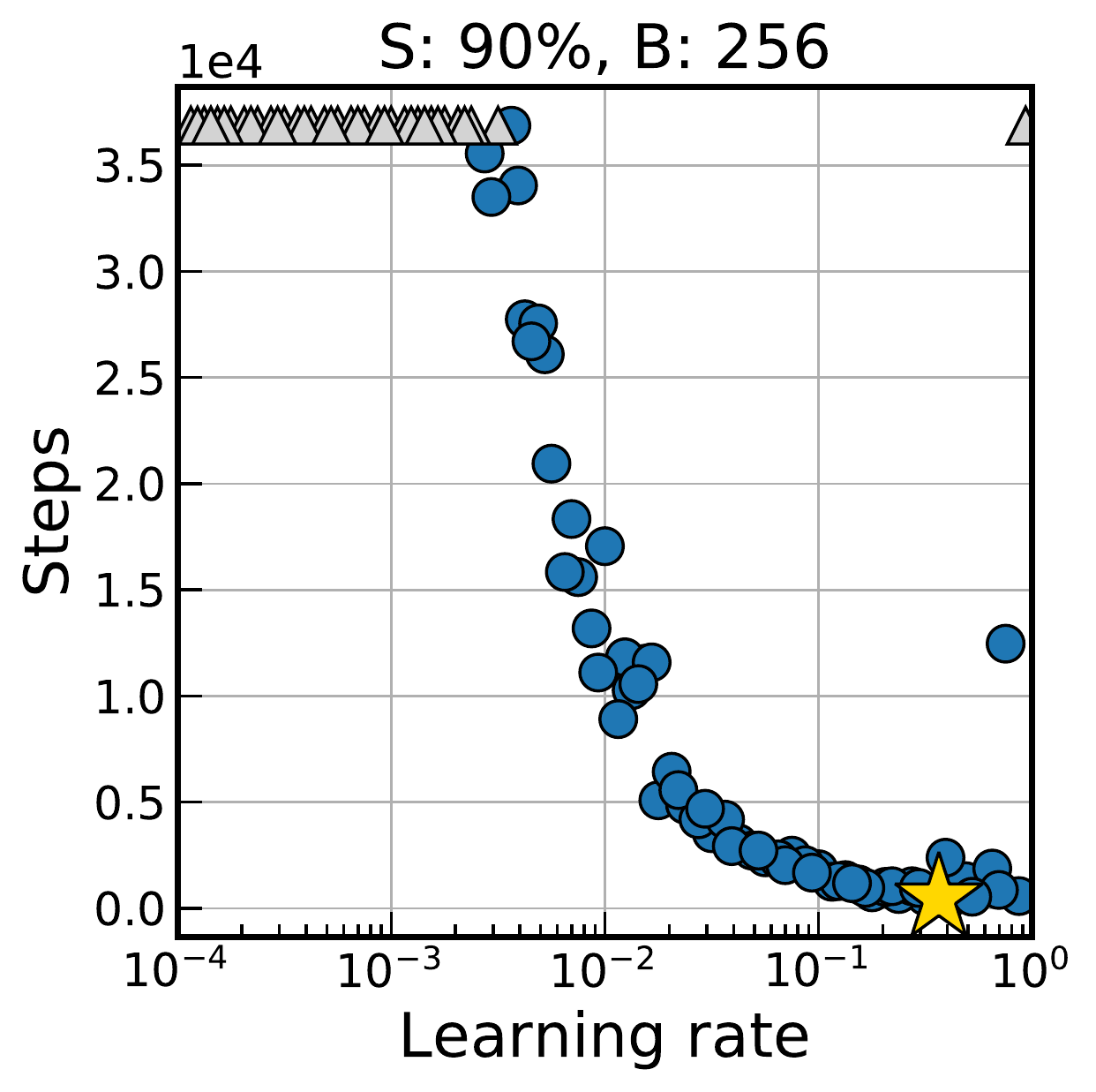}
        \includegraphics[height=26mm]{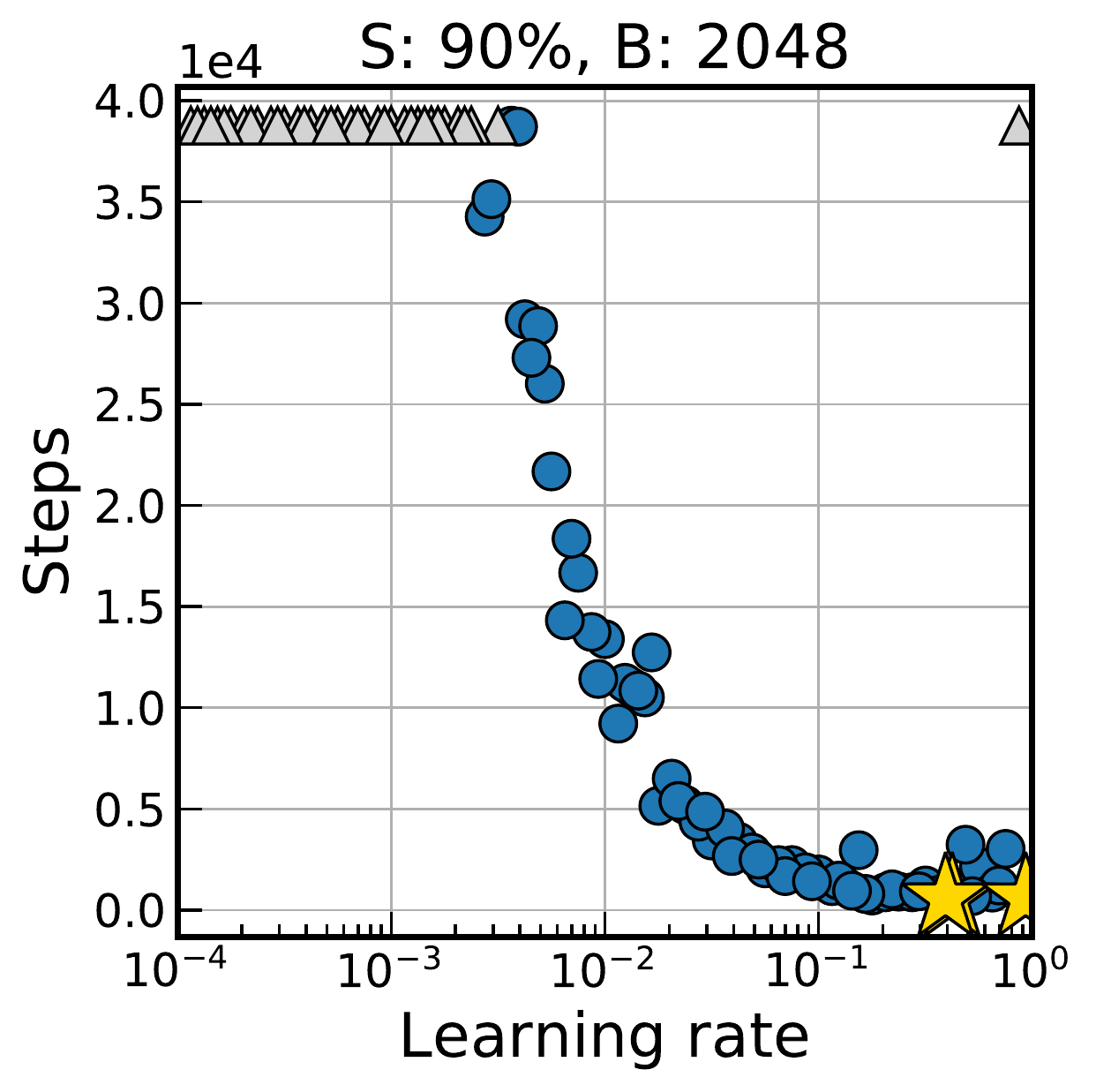}
        \includegraphics[height=26mm]{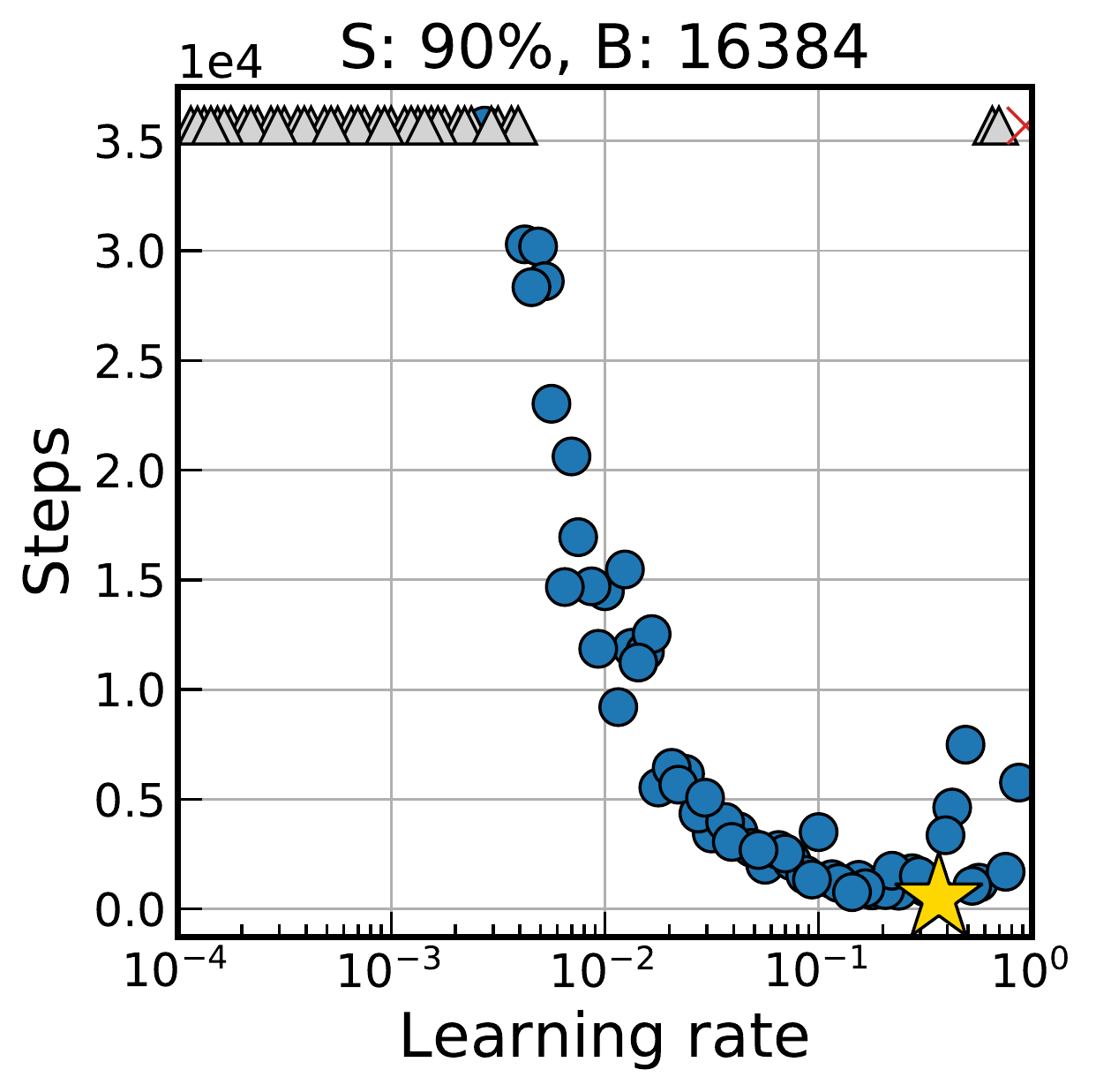}
    \end{subfigure}
    \caption{
        Metaparameter search results for the workload of \{MNIST, Simple-CNN, SGD\}, where the metaparameter being tuned is the learning rate $\bar{\eta}$.
        The blue circles (\tikzcircle[MidnightBlue, fill=blue]{1.5pt}) denote successful runs (\ie, it reached the goal error), and the best trial that records the steps-to-result is marked by gold star (\textcolor{Dandelion}{$\star$}); also, the grey triangles (\textcolor{Gray}{$\blacktriangle$}) and red crosses (\textcolor{Red}{$\times$}) refer to incomplete and infeasible runs, respectively.
        The range of $\bar{\eta}$ shrinks as the sparsity level increases from $0$\% to $90$\%, indicating \emph{increased} $L$ based on the learning rate condition from the convergence properties satisfying $0 < \bar{\eta} \le 1/L$.
    }
    \label{fig:mparams-mnist-sgd}
\end{figure}

We note that our findings of increased Lipschitz constant for sparse networks are consistent with the literature on over-parameterized networks such as \cite{li2018overparam}, which can be seen as the opposite of sparsity. The more input weights a neuron has, the less likely it is that a single parameter significantly changes the resulting activation pattern, and that wide layers exhibit convexity-like properties in the optimization landscape \cite{du2018gradient}. 
This even extends to non-smooth networks with ReLU activations, which are still shown to exhibit pseudo-smoothness in the overparameterized regime \cite{li2018overparam}.
We further show that our theory precisely explains the difficulty of training sparse networks due to decreased smoothness based on a quantitative analysis in Appendix~\ref{sec:morelipschitz}.

In addition, we provide in Figure~\ref{fig:loss} the training logs of the networks used for the Lipschitz smoothness analysis, in order to show the correlation between the Lipschitz smoothness of a network and its training performance; \ie, sparsity incurs low smoothness of gradients (high $L$; see Figure~\ref{fig:lipschitz}) and hence the poor training performance.

\begin{figure}[h]
    \centering
    \includegraphics[height=44mm]{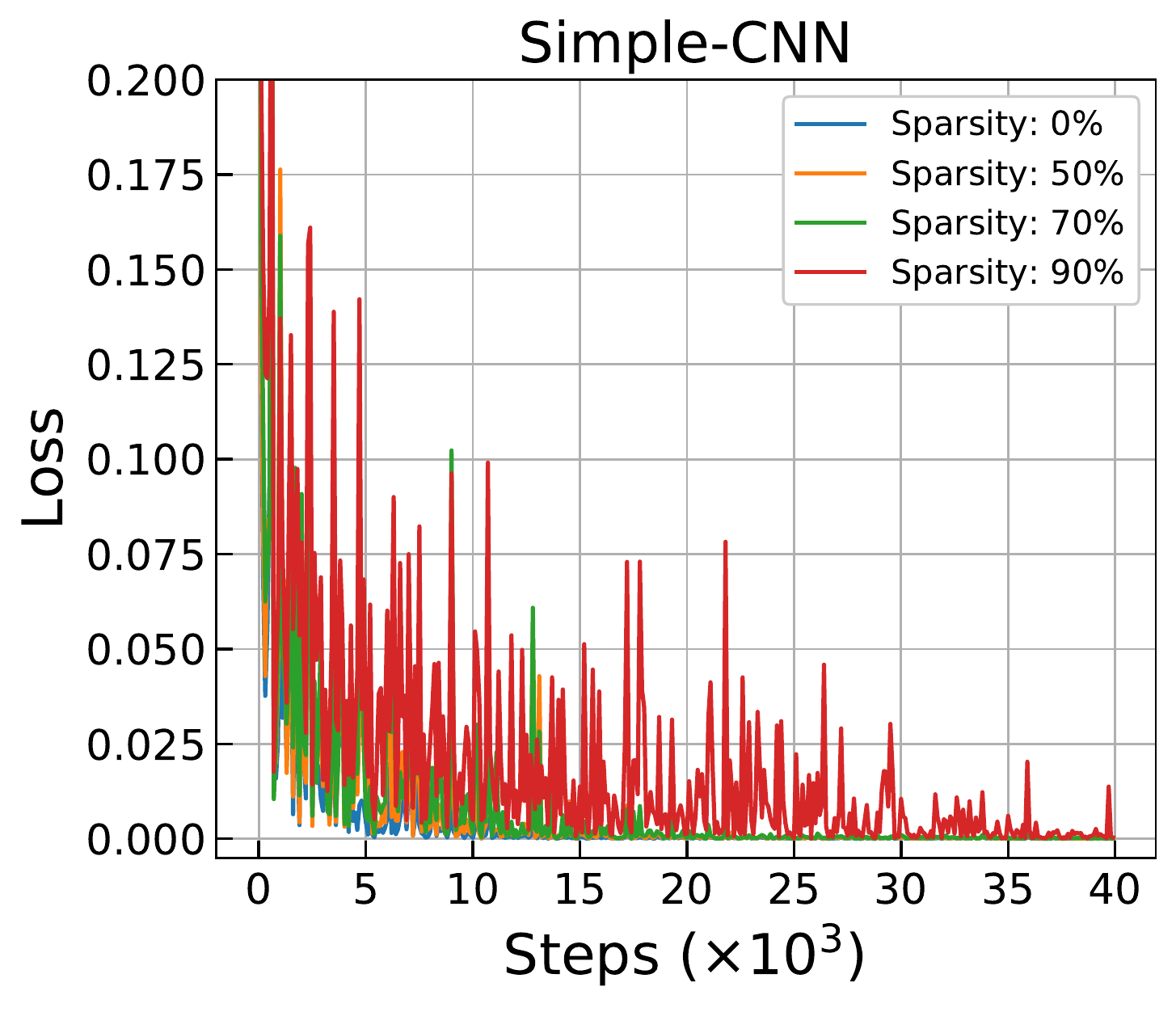}
    \hspace{0.06\textwidth}
    \includegraphics[height=44mm]{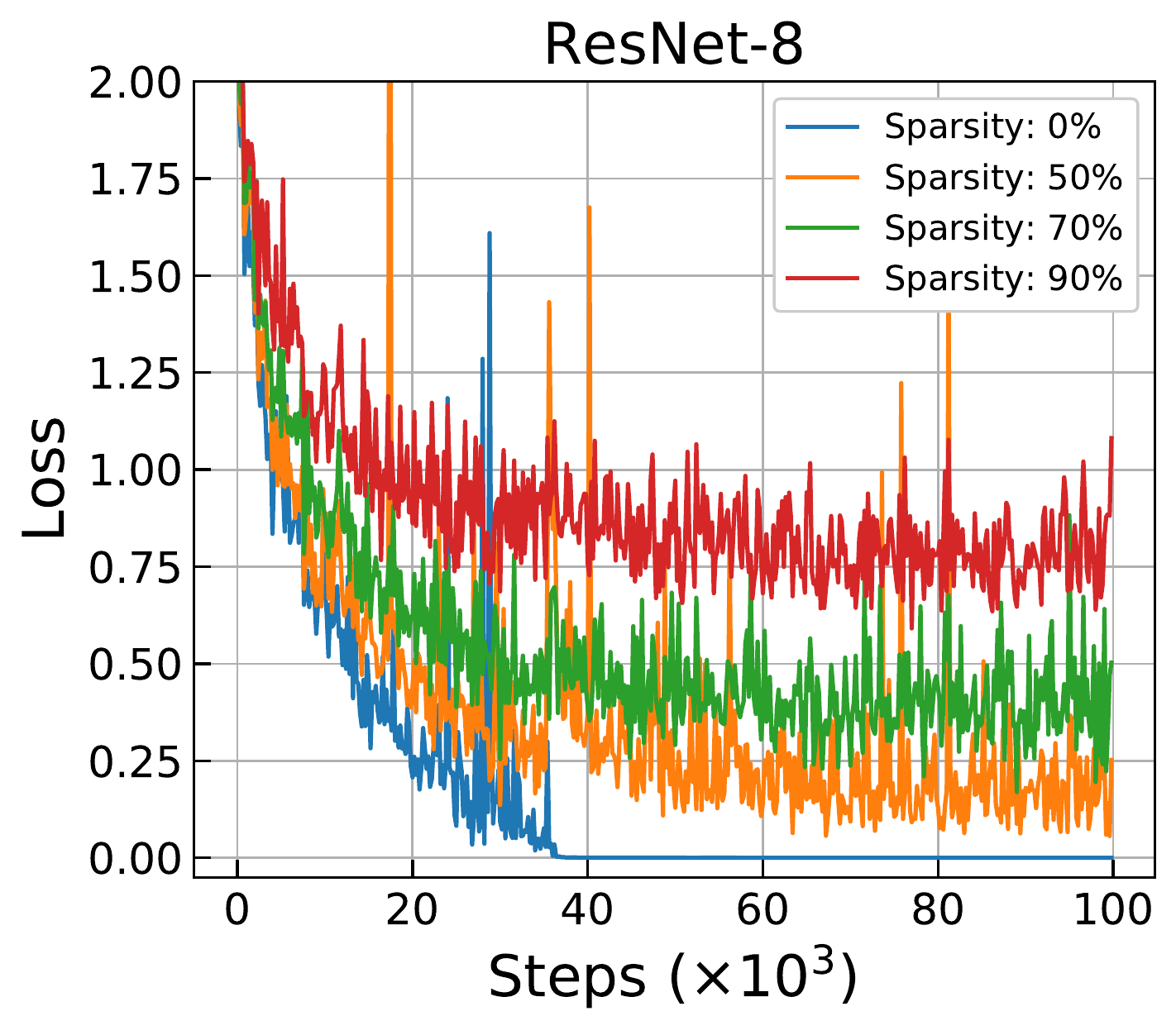}
    \caption{
        Training logs of Simple-CNN and ResNet-8 used for the smoothness analysis.
        The sparse networks that recorded high Lipschitz constants show worse training performance, indicating that low smoothness may be the potential cause of hampering the training of sparse neural networks.
    }
    \label{fig:loss}
\end{figure}

\section{Discussion}

Data parallelism with sparsity could have promising complementary benefits, and yet, little has been studied about their effects on neural network training thus far.
In this work, we accurately measured their effects, and established theoretical results that precisely account for the general characteristics of data parallelism and sparsity based on the convergence properties of stochastic gradient methods and Lipschitz smoothness analysis.
We believe our results are significant, in that these phenomena, which have only been addressed partially and empirically, are now theoretically verified with more accurate descriptions and applied to general nonconvex settings.\\
While our findings render positive impacts to practitioners and theorists alike, there are remaining challenges.
First, our experiments are bounded by available computing resources, and the cost of experiments increases critically for more complex workloads.
Also, the lack of convergence guarantees for existing momentum schemes in nonconvex and stochastic settings hinders a further theoretical analysis.
We hypothesize that ultimate understanding of the effect of data parallelism should be accompanied by a study of the generalization capability of optimization methods.
Nonetheless, these are beyond the scope of this work, and we intend to explore these directions as future work.


\subsubsection*{Acknowledgments}
This work was supported by the ERC grant ERC-2012-AdG 321162-HELIOS, EPSRC grant Seebibyte EP/M013774/1, EPSRC/MURI grant EP/N019474/1, the Australian Research Council Centre of Excellence for Robotic Vision (project number CE140100016), and the Institute of Information \& communications Technology Planning \& Evaluation (IITP) grant funded by the Korea government (MSIT) (No.2020-0-01336, Artificial Intelligence Graduate School Program (UNIST)).
We would also like to acknowledge the Royal Academy of Engineering and FiveAI.

\bibliography{text/reference}
\bibliographystyle{iclr2021_conference}

\appendix
\section{Proof of the general effect of data parallelism}\label{sec:proof}

This section provides the missing proofs in Section~\ref{sec:understanding}.
The goal is to derive the relationship between batch size $B$ and steps-to-result $K^\star$ from the convergence rates of generalized stochastic gradient methods for both fixed and decaying learning rate cases.
The result serves to account for the effect of data parallelism as a general phenomenon that must appear naturally at neural network training.

\subsection{Fixed learning rate case}

We start from the convergence rate result in \eqref{eq:convergence}.
We first recognize that the expected average squared gradient norm on the left-hand side indicates the degree of convergence.
Then, this quantity is directly related to the concept of goal error to reach in our experiments, and hence, reduces to be a small constant $\epsilon$ as soon as it is implemented as a pre-defined goal error for a given workload.
Thus, it follows that
\begin{equation}
    \epsilon \le \frac{\bar{\eta}LM}{\mu} + \frac{2(f(\rvw_{1}) - f_{\infty})}{K\mu\bar{\eta}} \, .
\end{equation}
Notice that by fixing $\epsilon = \E\big[ \frac{1}{K} \sum_{k=1}^{K} \| \nabla f(\rvw_{k}) \|_{2}^{2} \big]$, it effectively means that the training process has stopped, and therefore, $K$ on the right-hand side will no longer contribute to decrease the bound of the quantity for a particular learning rate $\bar{\eta}$ and batch size $B$.

Also, we select the \emph{optimal}\footnote{Here, \emph{optimal} simply refers to the sense of yielding the \emph{lowest} steps-to-result.} learning rate $\bar{\eta}^\star$, out of extensive metaparameter search, to record the \emph{lowest} number of steps to reach the given goal error, which we denote as steps-to-result $K^\star$.
Plugging in these yields the following:
\begin{equation}
    \epsilon \le \frac{\bar{\eta}^\star LM}{\mu} + \frac{2(f(\rvw_{1}) - f_{\infty})}{K^\star\mu\bar{\eta}} \, .
\end{equation}

Next, notice that the only factors that constitute the inequality in \eqref{eq:convergence} come from the assumptions made to derive the convergence rate result, which are on the Lipschitz smoothness $L$ and the variance bound $M$, and if they are assumed to be tight in the worst case, the inequality becomes tight.
Then, after making algebraic manipulation and taking the first-order Taylor approximation while substituting $M = \beta / B$ since the variance bound $M$ is related to batch size $B$ as $M \propto 1/B$~\citep{bottou2018optimization}, we obtain the following result:
\begin{align}
    K^\star &\approx \frac{\Delta}{ \bar{\eta}^\star \mu \epsilon - (\bar{\eta}^\star)^{2}LM } \, \\\nonumber
    &\approx \frac{\Delta}{\bar{\eta}^\star \mu\epsilon} + \frac{\Delta LM}{\mu^{2}\epsilon^{2}} \, \\\nonumber
    &= \frac{\Delta L \beta}{\mu^{2} \epsilon^{2} B} + \frac{\Delta}{\bar{\eta}^\star \mu\epsilon} \, \\\nonumber
    &= \frac{c_{1}}{B} + c_{2}\, , \qquad\text{where}\enspace c_{1} = \frac{\Delta L \beta}{\mu^{2} \epsilon^{2}} \enspace\text{and}\enspace c_{2} = \frac{\Delta}{\bar{\eta}^\star \mu \epsilon}\, .
\end{align}
Here, $\Delta = 2(f(\rvw_{1}) - f_{\infty})$, $\beta$ is the initial variance bound at batch size $B=1$ for a given workload.
Notice that $\epsilon$, $\Delta$, $L$, $\beta$, $\mu$ $\bar{\eta}^\star$ all are constant or become fixed for a given workload.
Also, the degree of metaparameter search quality is assumed to be the same across different batch sizes, and hence, the result can be reliably used to interpret the relationshp between batch size and steps-to-result.

\subsection{Decaying learning rate case}\label{sec:proof_decay}

We could extend the convergence rate for a fixed learning rate $\eta_{k}=\bar{\eta}$ in \eqref{eq:convergence} to any sequence of decaying learning rates $\eta_{k}$ satisfying $\sum_{k=1}^{\infty} \eta_{k} = \infty$ and $\sum_{k=1}^{\infty} \eta_{k}^{2} < \infty$ based on Theorem $4.10$ in \citet{bottou2018optimization} as follows:
\begin{equation}\label{eq:convergence_lrdecay}
    \E\Bigg[ \frac{1}{K}\sum\limits_{k=1}^{K} \eta_{k} \| \nabla f(\rvw_{k}) \|_{2}^{2} \Bigg] \le \frac{LM}{K\mu} \sum\limits_{k=1}^{K}\eta_{k}^{2} + \frac{2(\E[f(\rvw_{1})] - f_{\infty})}{K\mu} \, .
\end{equation}
The requirements on learning rate $\eta_{k}$ are the classical conditions~\citep{robbins1951stochastic} that are assumed for convergence of any (sub)gradient methods, and cover nearly all standard and/or heuristical learning rate schedules employed in practice.

Now, the relationship between batch size and steps-to-result can be derived similarly as before.
First, applying $\tilde{\epsilon} = \E\big[ \frac{1}{K}\sum_{k=1}^{K} \eta_{k} \| \nabla f(\rvw_{k}) \|_{2}^{2} \big]$ for the degree of convergence, and replacing with $\Delta = 2(\E[f(\rvw_{1})] - f_{\infty})$ for simplicity, \eqref{eq:convergence_lrdecay} can be rewritten as follows:
\begin{equation}
    \tilde{\epsilon} \le \frac{LM}{K\mu}\sum\limits_{k=1}^{K}\eta_{k}^{2} + \frac{\Delta}{K\mu} \, .
\end{equation}

Plugging $H = \sum_{k=1}^{K}\eta_{k}^{2}$ for a finite constant from decaying learning rate, and further, $H^\star$ and $K^\star$ for selected $H$ by metaparameter search and steps-to-result, respectively, it becomes:
\begin{equation}
    \tilde{\epsilon} \le \frac{LMH^\star}{K^\star\mu} + \frac{\Delta}{K^\star\mu} \, .
\end{equation}

Finally, using the worst case tightness on $L$ and $M$, substituting $M = \beta/B$, and rearranging the terms, the effect of data parallelism for decaying learning rate case can be written as follows:
\begin{align}
    K^\star &\approx \frac{ LMH^\star }{ \mu\tilde{\epsilon} } + \frac{ \Delta }{ \mu\tilde{\epsilon} } \, \\\nonumber
    &= \frac{ LH^\star\beta }{ \mu\tilde{\epsilon} B } + \frac{ \Delta }{ \mu\tilde{\epsilon} } \, \\\nonumber
    &= \frac{ \tilde{c}_{1} }{ B } + \tilde{c}_{2} \, , \qquad\text{where}\enspace \tilde{c}_{1} = \frac{LH^\star\beta}{\mu\tilde{\epsilon}} \enspace\text{and}\enspace \tilde{c}_{2} = \frac{\Delta}{\mu\tilde{\epsilon}}\, .
\end{align}

Here, $\tilde{\epsilon}$, $\Delta$, $L$, $H^\star$, $\beta$, $\mu$ all are constant or become fixed for a given workload.

This result, along with the case of fixed learning rate, establishes the theoretical account for the effect of data parallelism, by precisely and generally describing the relationship between batch size $B$ and steps-to-result $K^\star$.
Further analysis of these results is referred to the main paper.

\section{Scale of our experiments}\label{sec:expscale}
For a given workload of \{data set, network model, optimization algorithm\} and for a study setting of \{batch size, sparsity level\}, we execute $100$ training runs with different metaparameters to measure steps-to-result.
At each run, we evaluate the intermediate models at every $16$ (for MNIST) or $32$ (for CIFAR-10) iterations, on the \emph{entire} validation set, to check if it reached a goal error.
This means that, in order to plot the results for the workload of \{MNIST, Simple-CNN, SGD\} for example, one would need to perform, $14$ (batch sizes; $2^{1}$ to $2^{14}$) $\times$ $4$ (sparsity levels; $0, 50, 70, 90\%$)  $\times$ $100$ (runs) $\times$ $40,000$ (max training iteration preset) $/$ $16$ (evaluation interval) $=$  $14,000,000$ number of evaluations.
Assuming that evaluating the Simple-CNN model on the entire MNIST validation set takes only a \emph{second} on a modern GPU, it will take $14,000,000$ (evaluations) $\times$ $1$ (second per evaluation) $/$ $3600$ (second per hour) $\approx$ $3888$ hours or $162$ days.

Of course, there are multiple ways to reduce this cost; for instance, we may decide to stop as soon as the run hits the goal error without running until the max training iteration limit.
Or, simply reducing any factor listed above that contributes to increasing the experiment cost (\eg, number of batch sizes) can help to reduce the time, however, in exchange for the quality of experiments.
We should also point out that this is only for one workload where we assumed that the evaluation takes only a second.
The cost can increase quite drastically if the workload becomes more complex and requires more time for evaluation and training (\eg, CIFAR-10).
Not to mention, we have tested for multiple workloads besides the above example, in order to confirm the generality of the findings in this work.

\section{More on Lipschitz smoothness analysis}\label{sec:morelipschitz}

We measure the local Lipschitz constant of $\nabla f$ based on a Hessian-free method as used in~\cite{zhang2019gradient}.
Precisely, the local smoothness (or Lipschitz constant of the gradient) at iteration $k$ is estimated as in the following:
\begin{equation}
    \hat{L}(\rvw_k) = \max_{\gamma \in \{\delta, 2\delta, .., 1\}} \frac{ \| \nabla f(\rvw_k + \gamma \vd) - \nabla f(\rvw_k) \|_2 }{ \| \gamma \vd \|_2 } \, ,
\end{equation}
where $\vd = \rvw_{k+1} - \rvw_k$ and $\delta \in (0, 1)$ for which we set to be $\delta=0.1$.
The expected gradient $\nabla f$ is computed on the entire training set, and we measure $\hat{L}(\rvw_k)$ at every $100$ iterations throughout training.
This method searches the maximum bound on the smoothness along the direction between $\nabla f(\rvw_{k+1})$ and $\nabla f(\rvw_k)$ based on the intuition that the degree of deviation of the linearly approximated objective function is bounded by the variation of gradient between $\rvw_{k+1}$ and $\rvw_k$.
Furthermore, while ReLU networks (\eg, Simple-CNN, ResNet-8) can only be piecewise smooth, the smoothness can still be measured for the same reason that we measure gradient (\ie, it only requires differentiability).

%

We also empirically measure the changes in $\Delta$ and $\beta$ by introducing sparsity.
Recall that these are the other elements in $c_1$ that can be affected by sparsity along with Lipschitz constant $L$.
When we measure these quantities for Simple-CNN, we obtain the following results:
$\Delta_{s} / \Delta_{d} \approx 4.68/4.66 \approx 1.00$, and $\beta_{s} / \beta_{d} \approx 107.39/197.06 \approx 0.54$;
more precisely, $\Delta$ does not change much since neither $f(\rvw_{1})$ or $f(\rvw_{\infty})$ changes much, and $\beta_{s} / \beta_{d}$ can be measured by the ratio of the variances of gradients between sparse and dense networks at $B=1$.
We have already provided $L_{s}$, $L_d$ in Figure~\ref{fig:lipschitz}, which makes $L_{s} / L_d \approx 1.76/0.57 \approx 3.09$.
Here, s and d denote sparse ($90$\%) and dense, respectively.
Notice that if we combine all the changes in $\Delta$, $\beta$, $L$ due to sparsity, and compute $c_{1, s} / c_{1, d}$, it becomes $1.00 \times 0.54 \times 3.09 \approx 1.67$.
Importantly, $c_{1, s} / c_{1, d} > 1$ means that $c_1$ has increased by sparsity, and since the increase in Lipschitz constant $L$ played a major role therein, these results indicate that the general difficulty of training sparse networks is indeed caused by reduced smoothness.
We further note that this degree of change fits roughly the range of $k_{s}^\star / k_{d}^\star $ as shown in Figure~\ref{fig:edp-ratio}.


\section{Additional results}\label{sec:additional}

In this section, we provide additional experiemental results that are not included in the main paper.
In Section~\ref{sec:moreedp}, we supplement more results for the effects of data parallelism and sparsity in Figures~\ref{fig:edp-mnist}, \ref{fig:edp-fmnist}, \ref{fig:edp-fmnist-0.14}, \ref{fig:edp-cifar}, \ref{fig:edp-cifar-constant}.
In Figure~\ref{fig:edp-ratio} we present the difference in ratio between sparse ($90$\%) and dense networks across different batch sizes for all workloads presented in this work.
This result shows how much increase in steps-to-result is induced by introducing sparsity, and therefore, is used to study the general difficulty of training sparse neural networks. 
In Section~\ref{sec:moremparams}, we provide metaparameter search results for a subset of workloads studied in this work.

\subsection{Effects of data parallelism and sparsity}\label{sec:moreedp}

\begin{figure}[h!]
    \centering
    \begin{subfigure}{.9998\textwidth}
        \centering
        \includegraphics[height=23mm]{figure/s2r-bs/trends/simple-cnn-base-sgd-goal-error-0.02-ts-0.0-eps-converted-to}
        \includegraphics[height=23mm]{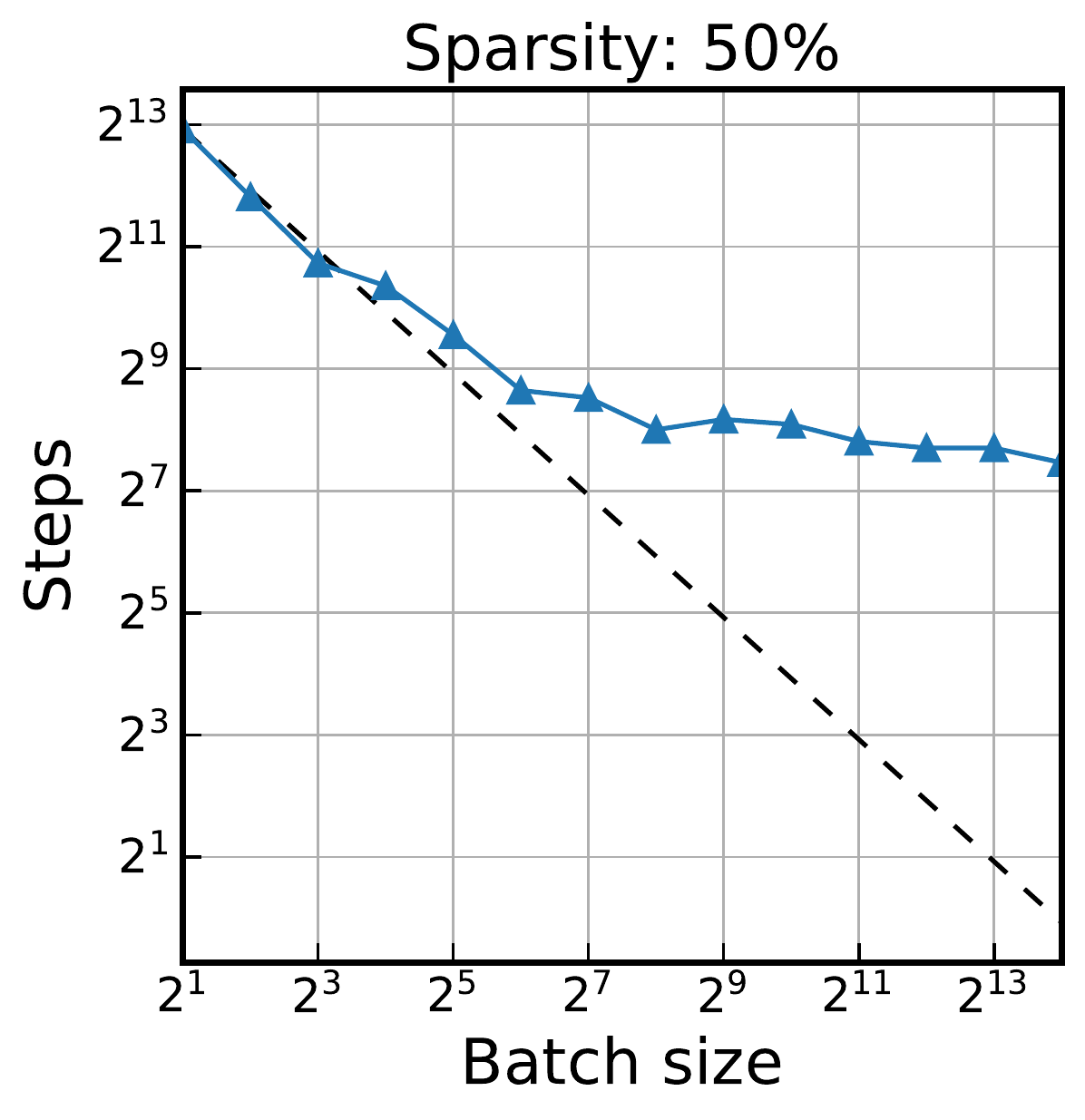}
        \includegraphics[height=23mm]{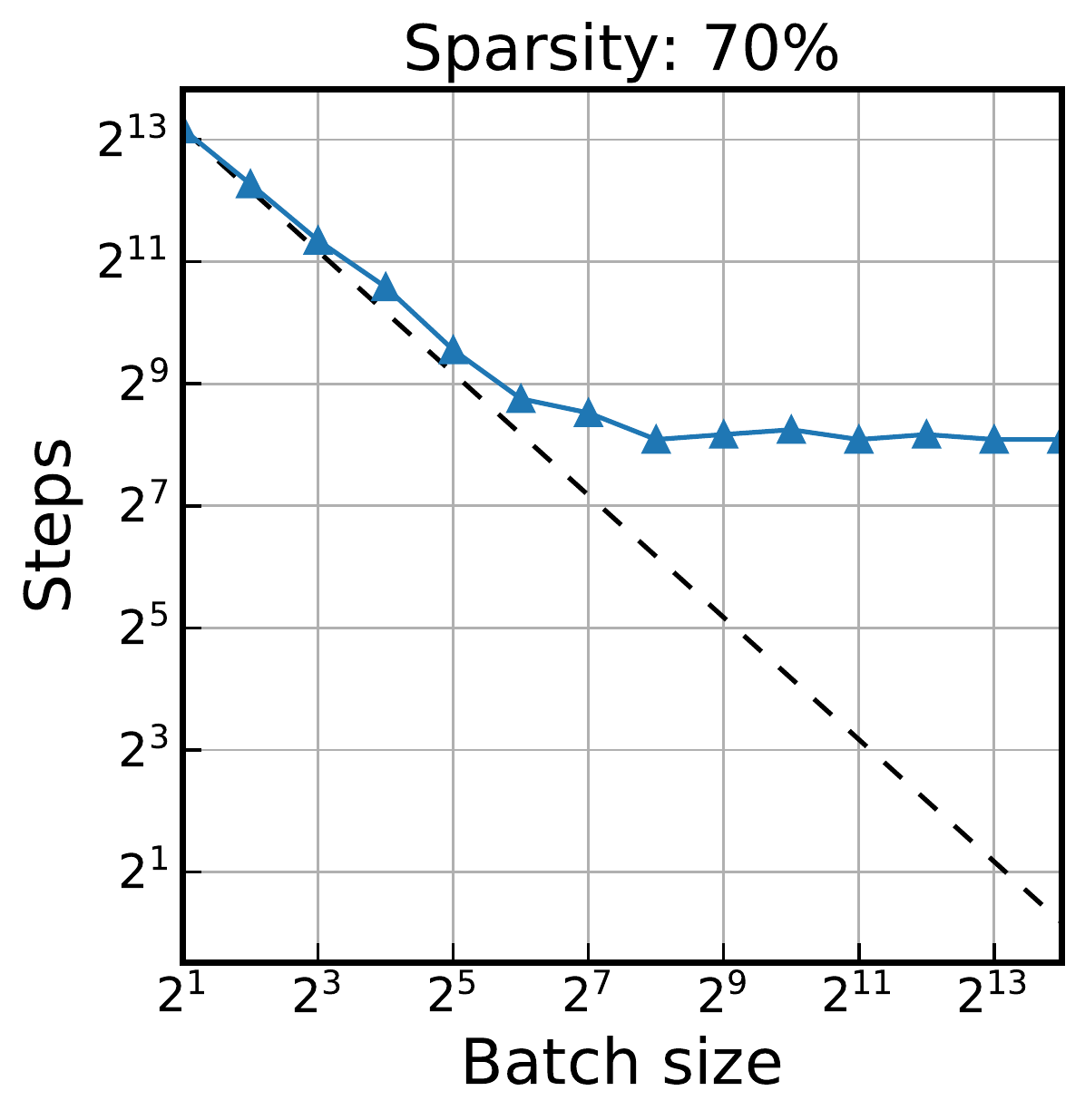}
        \includegraphics[height=23mm]{figure/s2r-bs/trends/simple-cnn-base-sgd-goal-error-0.02-ts-0.9-eps-converted-to}
        \includegraphics[height=23mm]{figure/s2r-bs/combinations/simple-cnn-base-sgd-goal-error-0.02-var-sparsity-eps-converted-to}
        \includegraphics[height=23mm]{figure/s2r-bs/combinations/simple-cnn-base-sgd-goal-error-0.02-var-sparsity-normalized-eps-converted-to}
        \caption{SGD}
    \end{subfigure}
    \begin{subfigure}{.9998\textwidth}
        \centering
        \includegraphics[height=23mm]{figure/s2r-bs/trends/simple-cnn-base-momentum-goal-error-0.02-ts-0.0-eps-converted-to}
        \includegraphics[height=23mm]{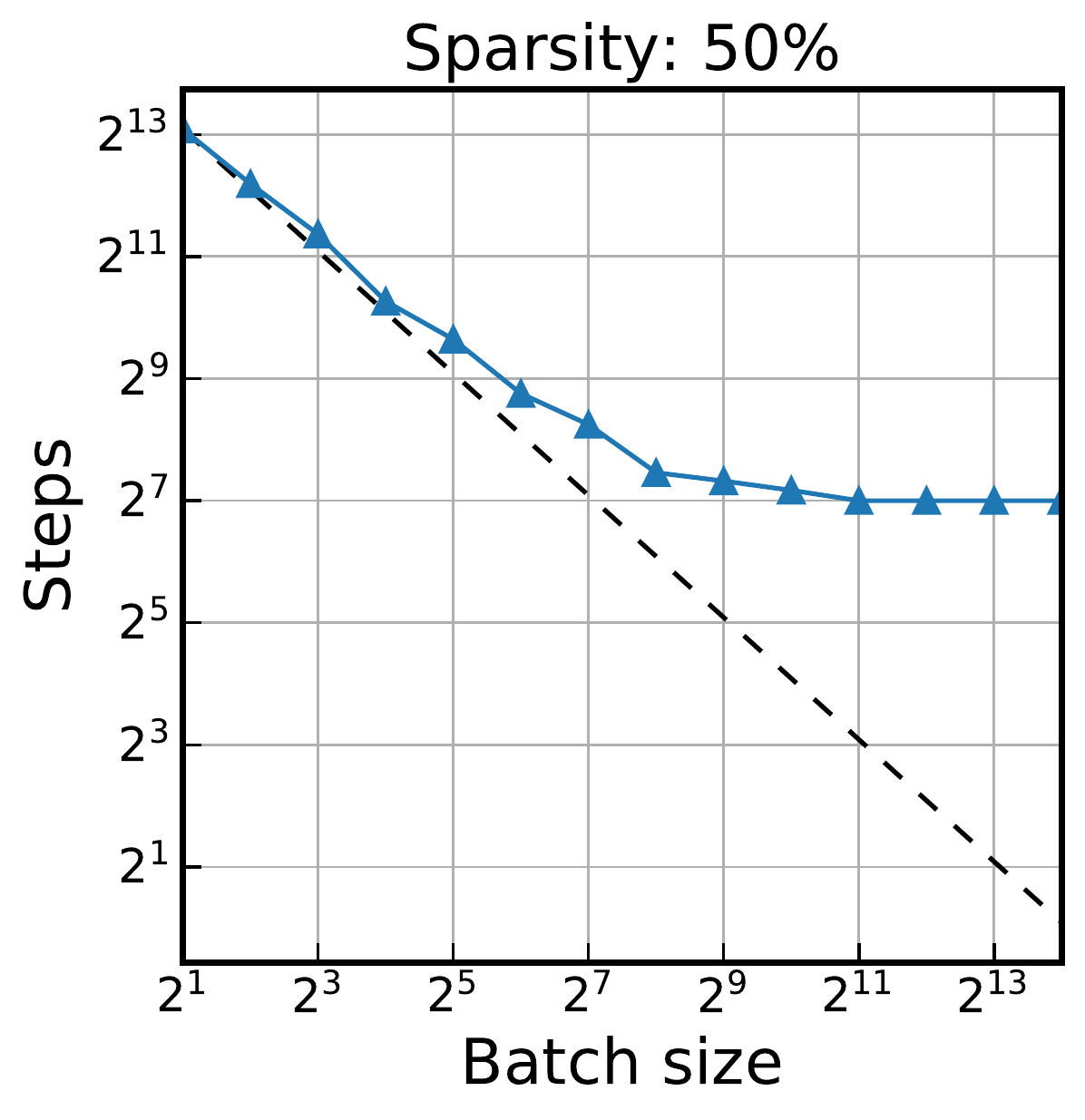}
        \includegraphics[height=23mm]{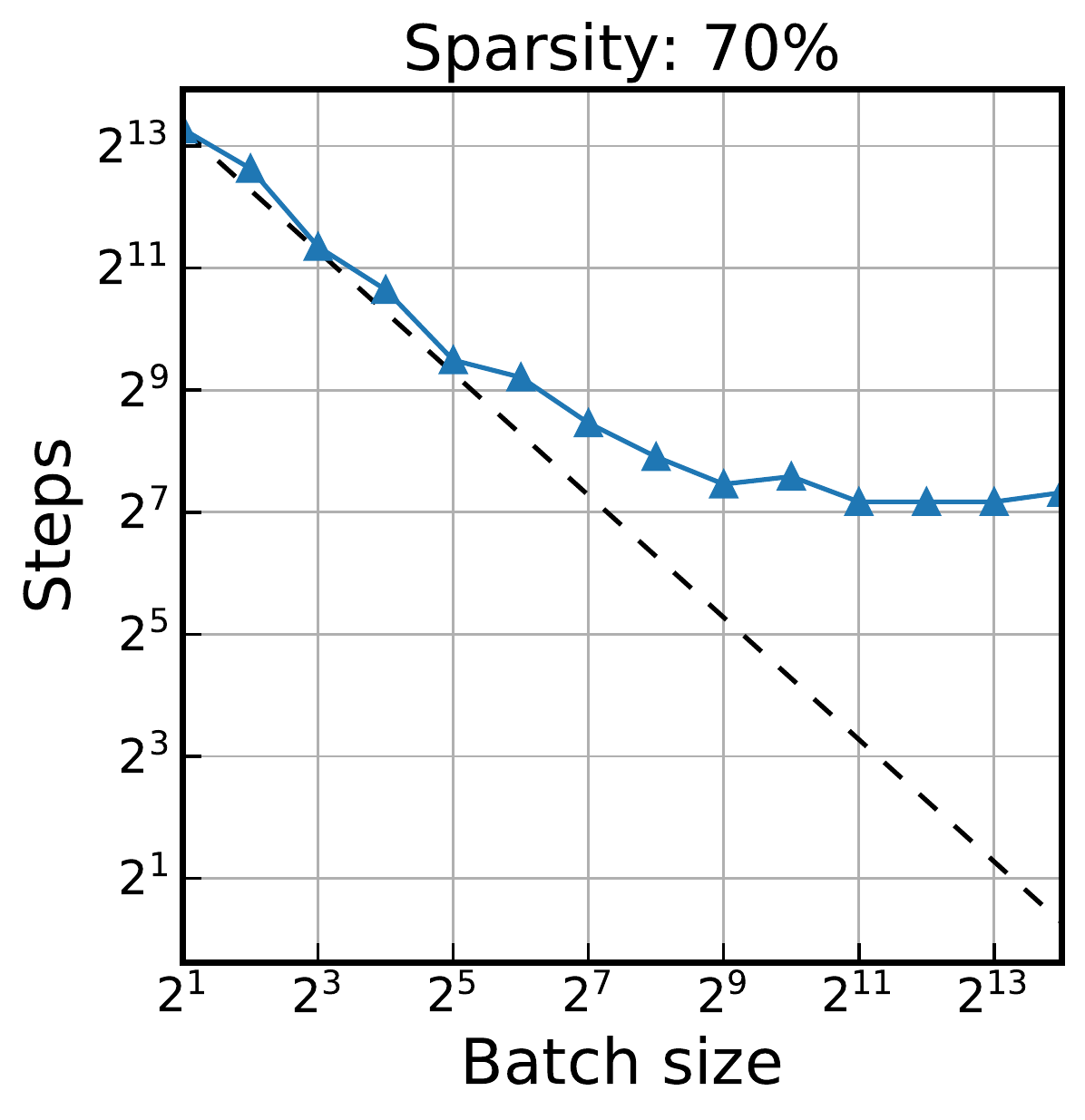}
        \includegraphics[height=23mm]{figure/s2r-bs/trends/simple-cnn-base-momentum-goal-error-0.02-ts-0.9-eps-converted-to}
        \includegraphics[height=23mm]{figure/s2r-bs/combinations/simple-cnn-base-momentum-goal-error-0.02-var-sparsity-eps-converted-to}
        \includegraphics[height=23mm]{figure/s2r-bs/combinations/simple-cnn-base-momentum-goal-error-0.02-var-sparsity-normalized-eps-converted-to}
        \caption{Momentum}
    \end{subfigure}
    \begin{subfigure}{.9998\textwidth}
        \centering
        \includegraphics[height=23mm]{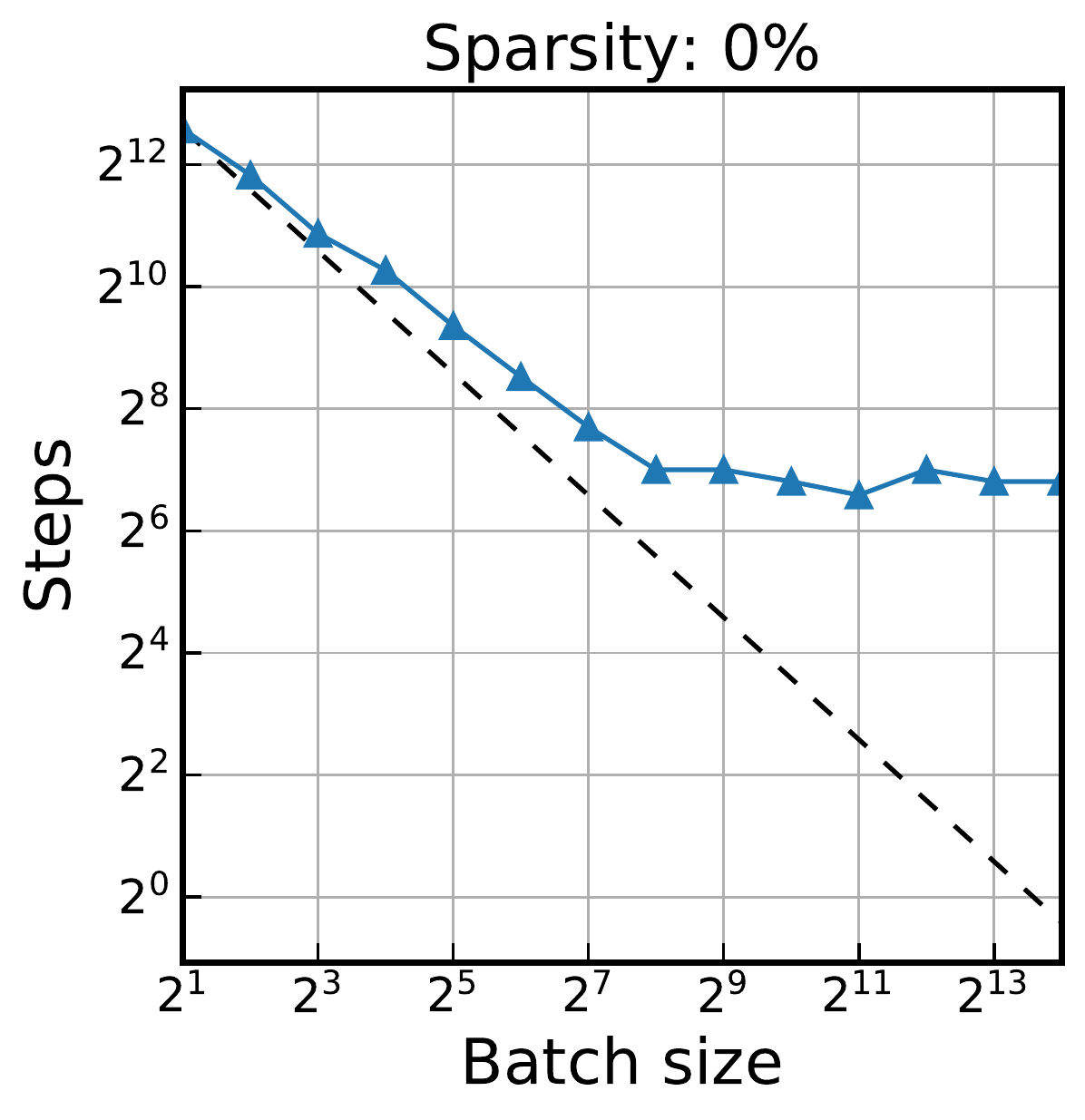}
        \includegraphics[height=23mm]{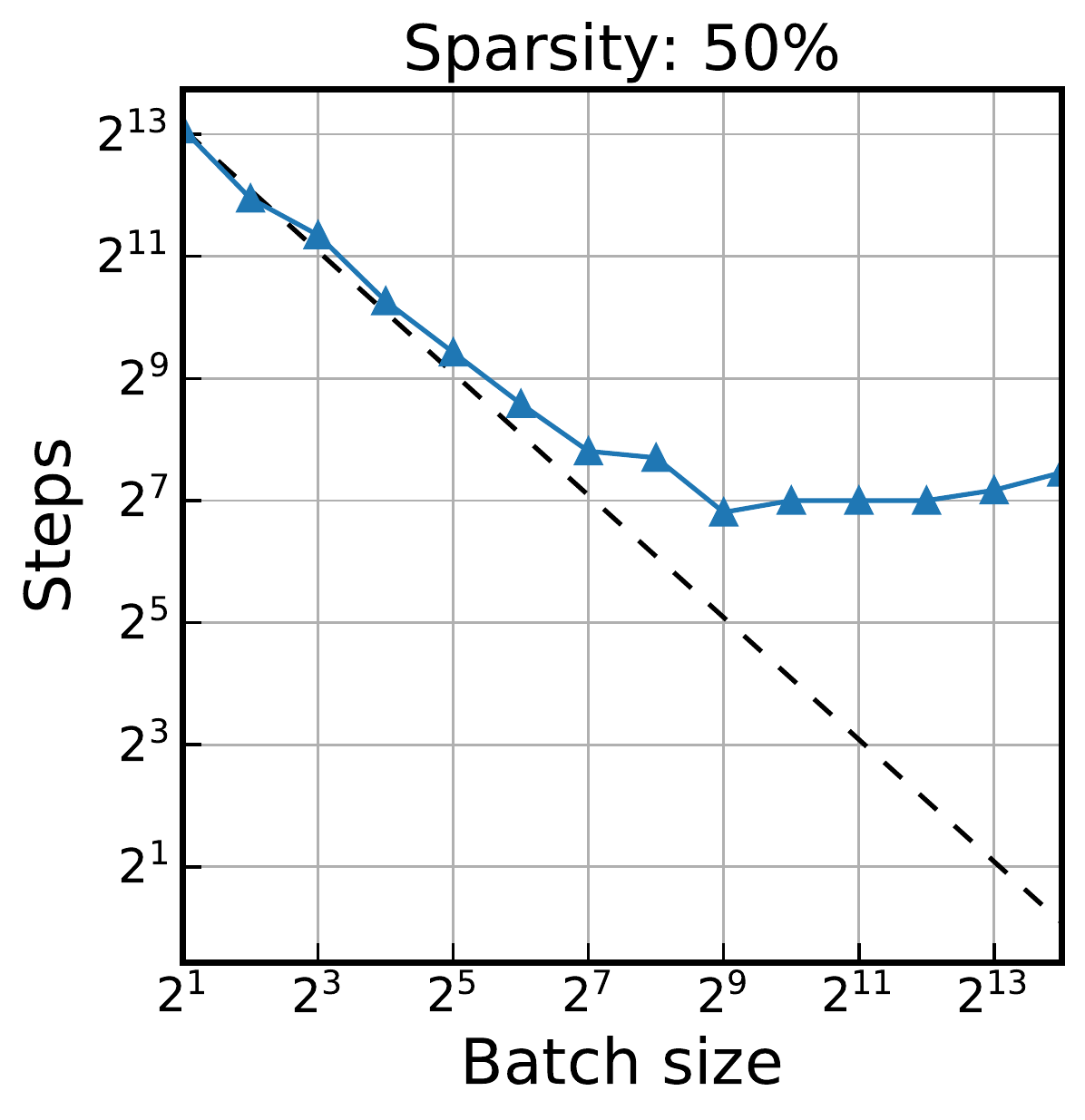}
        \includegraphics[height=23mm]{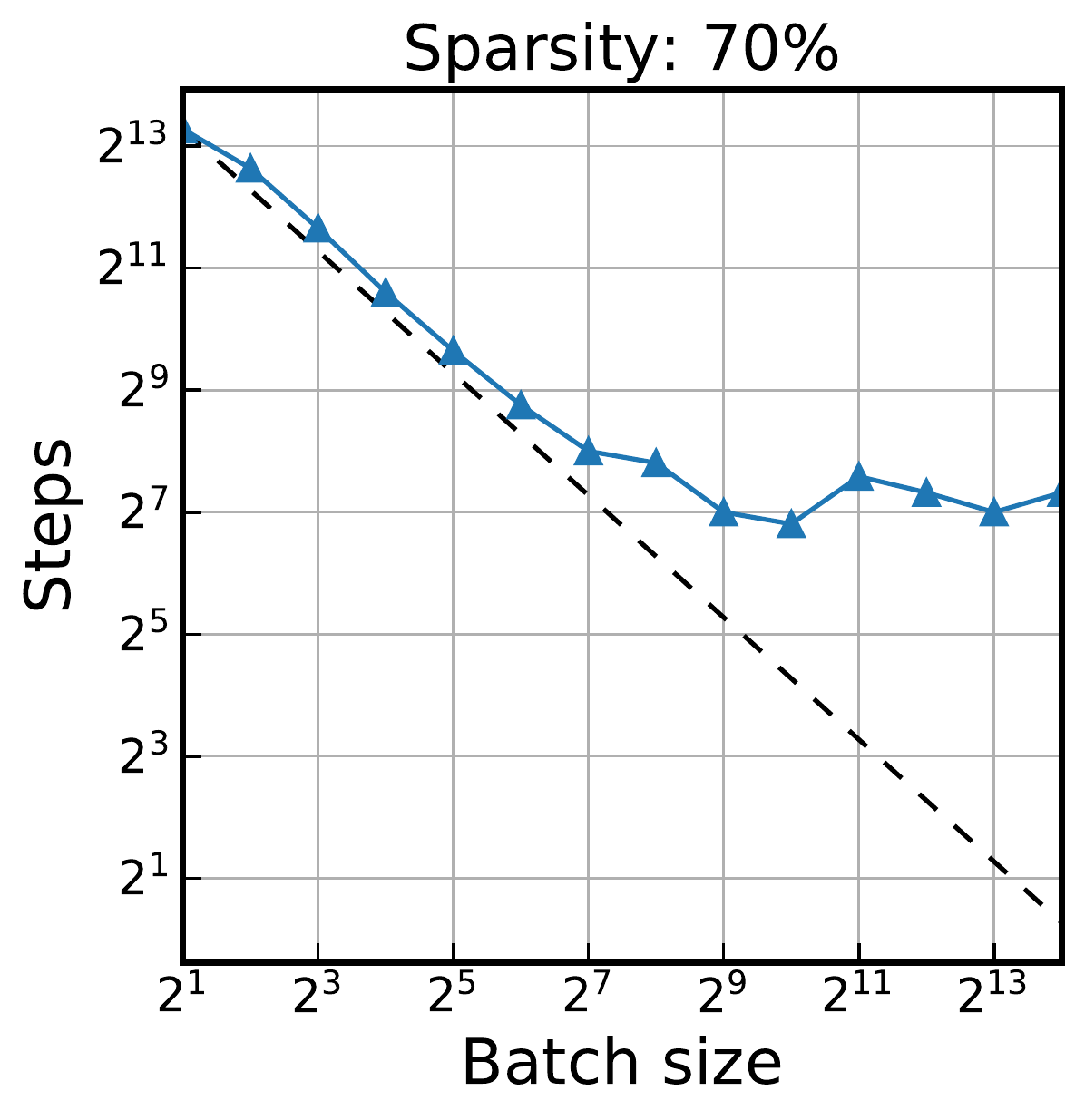}
        \includegraphics[height=23mm]{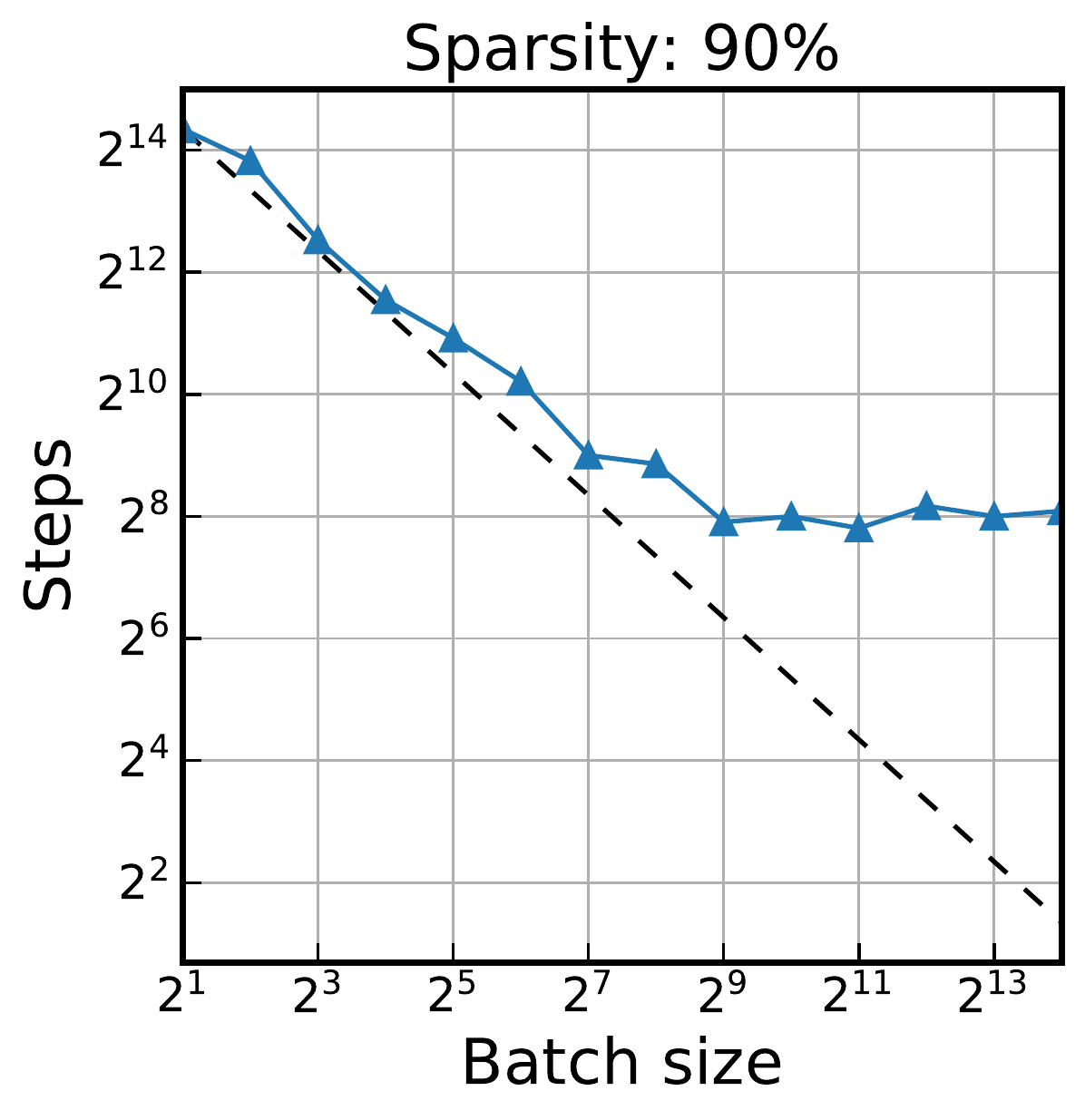}
        \includegraphics[height=23mm]{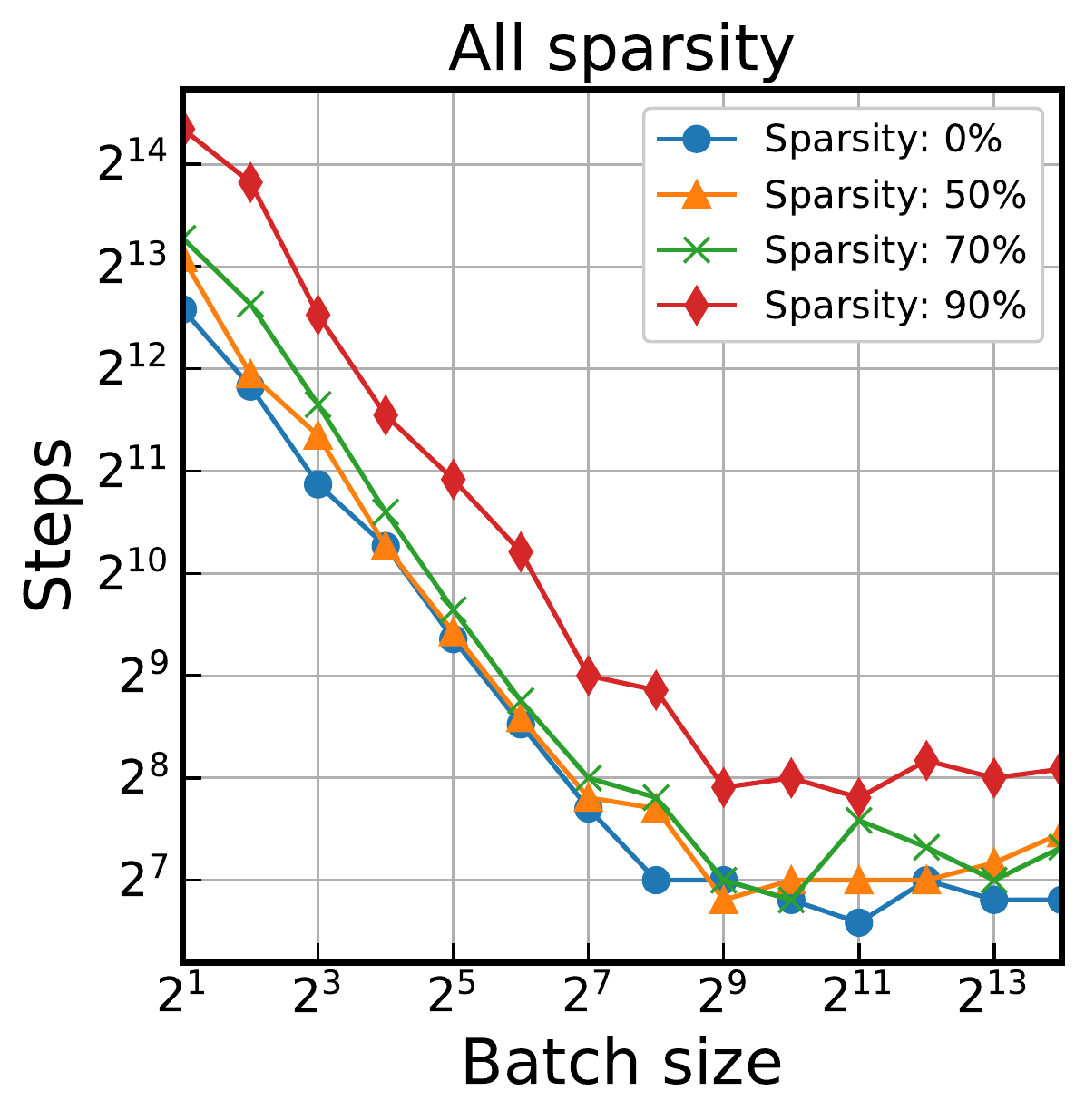}
        \includegraphics[height=23mm]{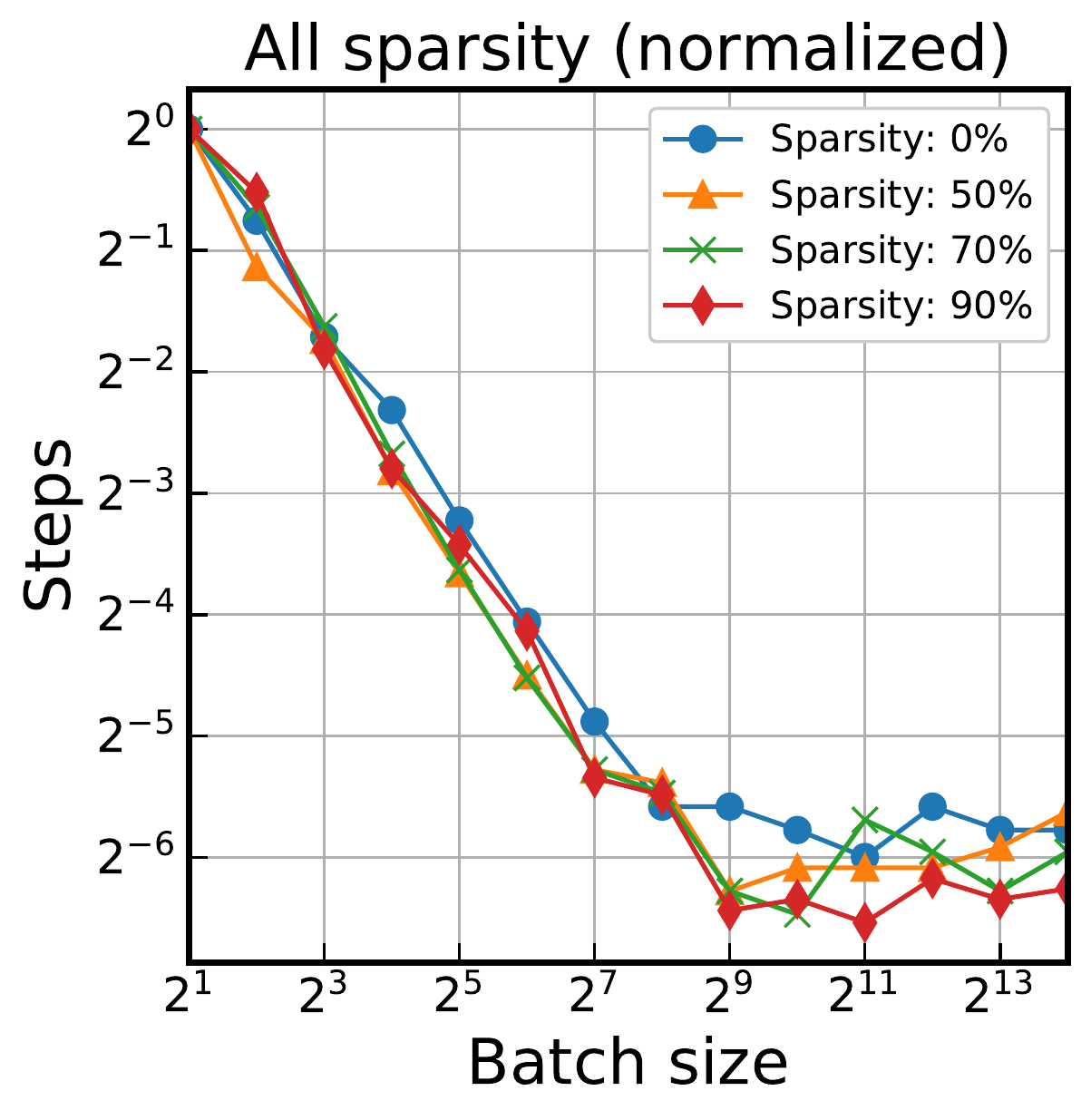}
        \caption{Nesterov}
    \end{subfigure}
    \caption{
        Results for the effects of data parallelism for the workloads of \{MNIST, Simple-CNN, SGD/Momentum/Nesterov\} with a constant learning rate.
    }
    \label{fig:edp-mnist}
\end{figure}

\begin{figure}[t]
    \centering
    \begin{subfigure}{.9998\textwidth}
        \centering
        \includegraphics[height=22.4mm]{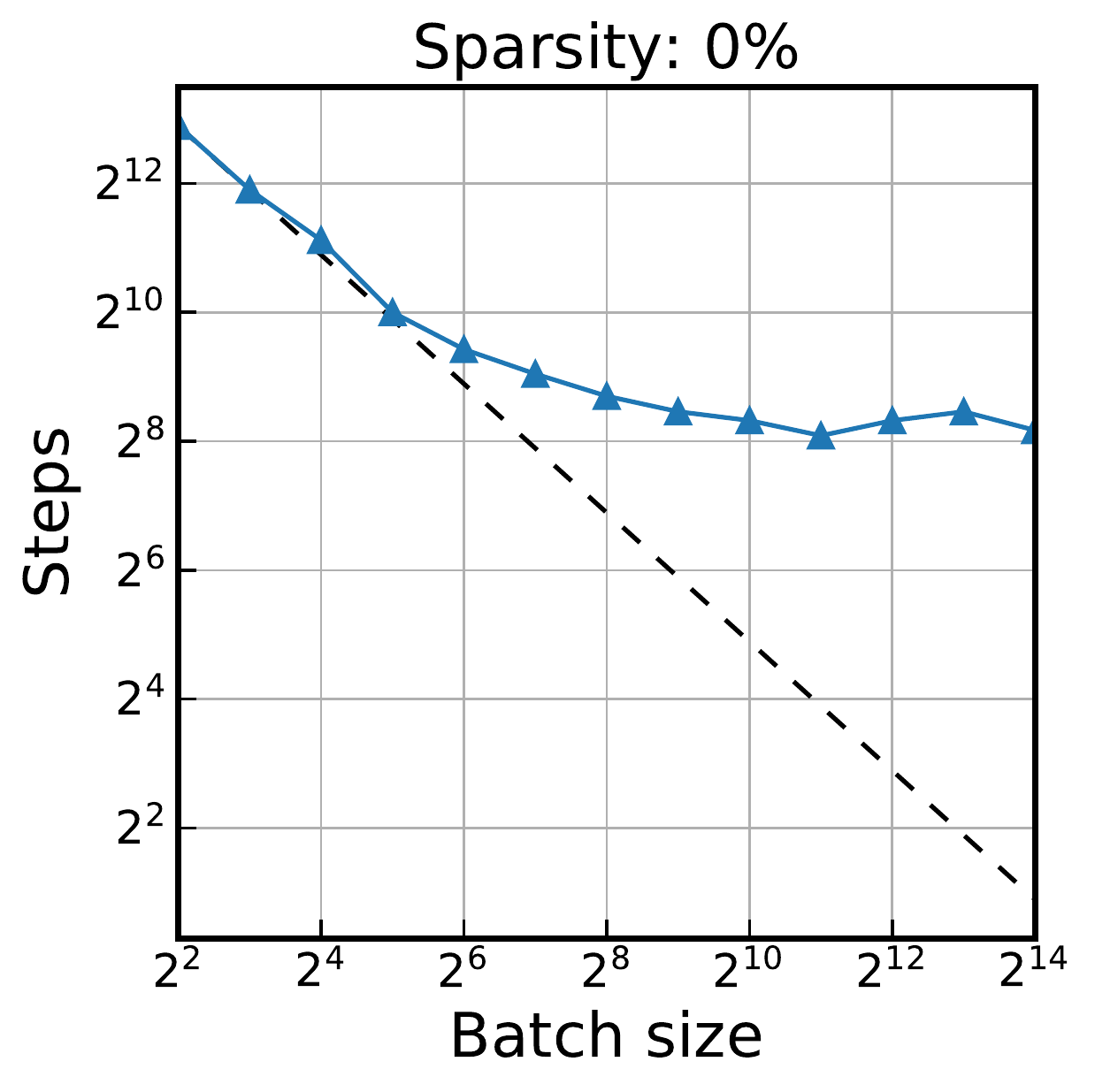}
        \includegraphics[height=22.4mm]{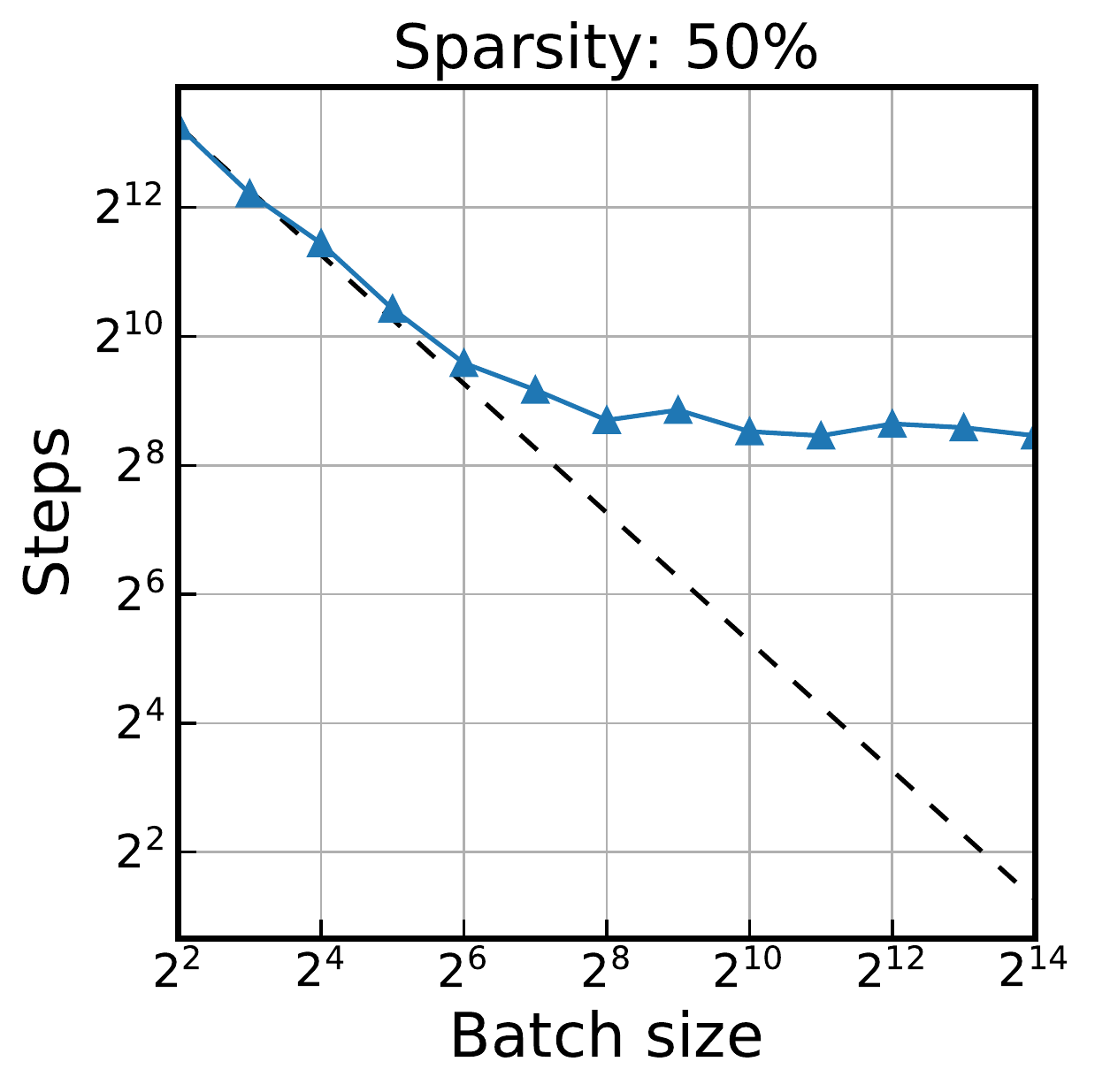}
        \includegraphics[height=22.4mm]{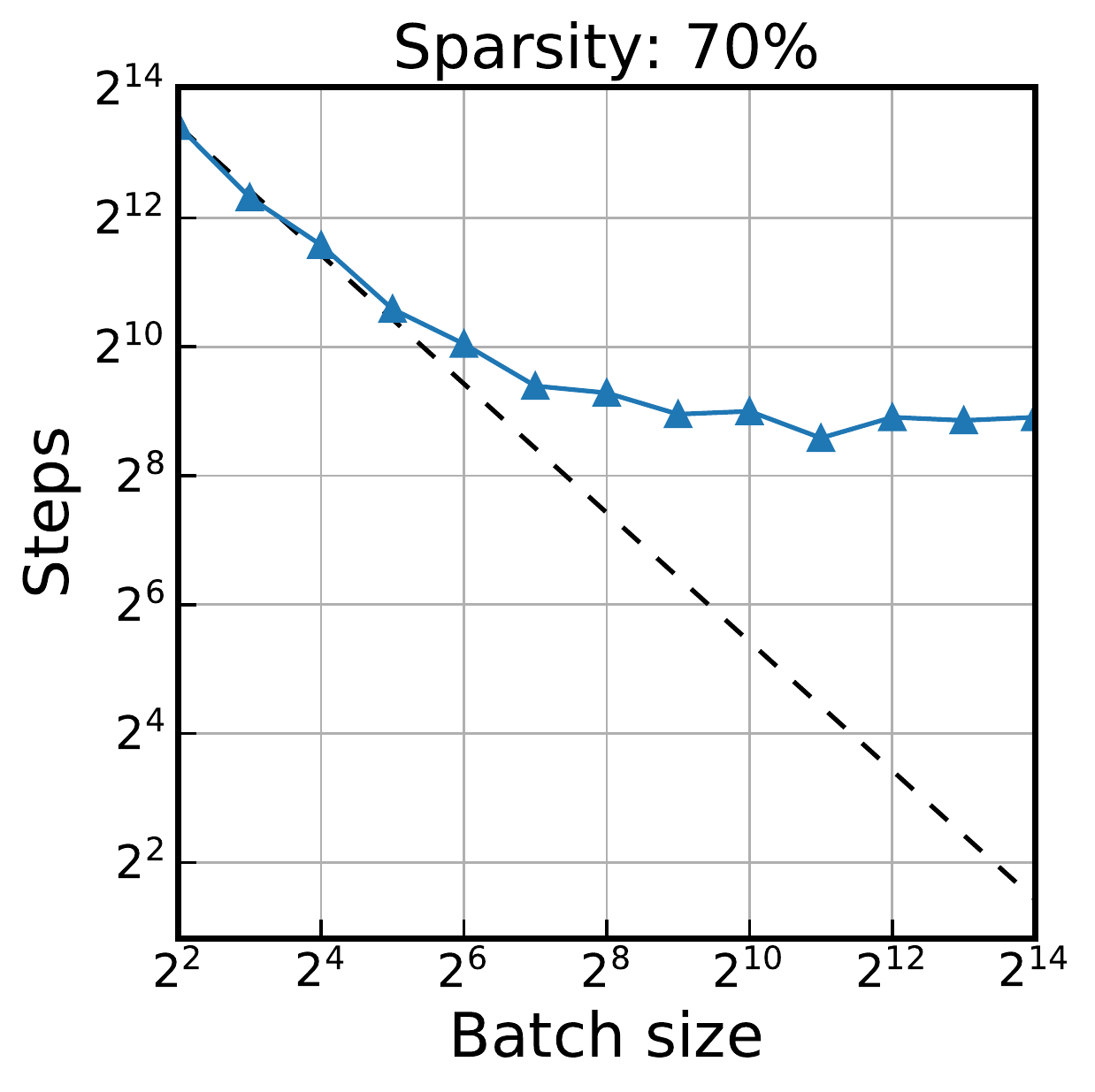}
        \includegraphics[height=22.4mm]{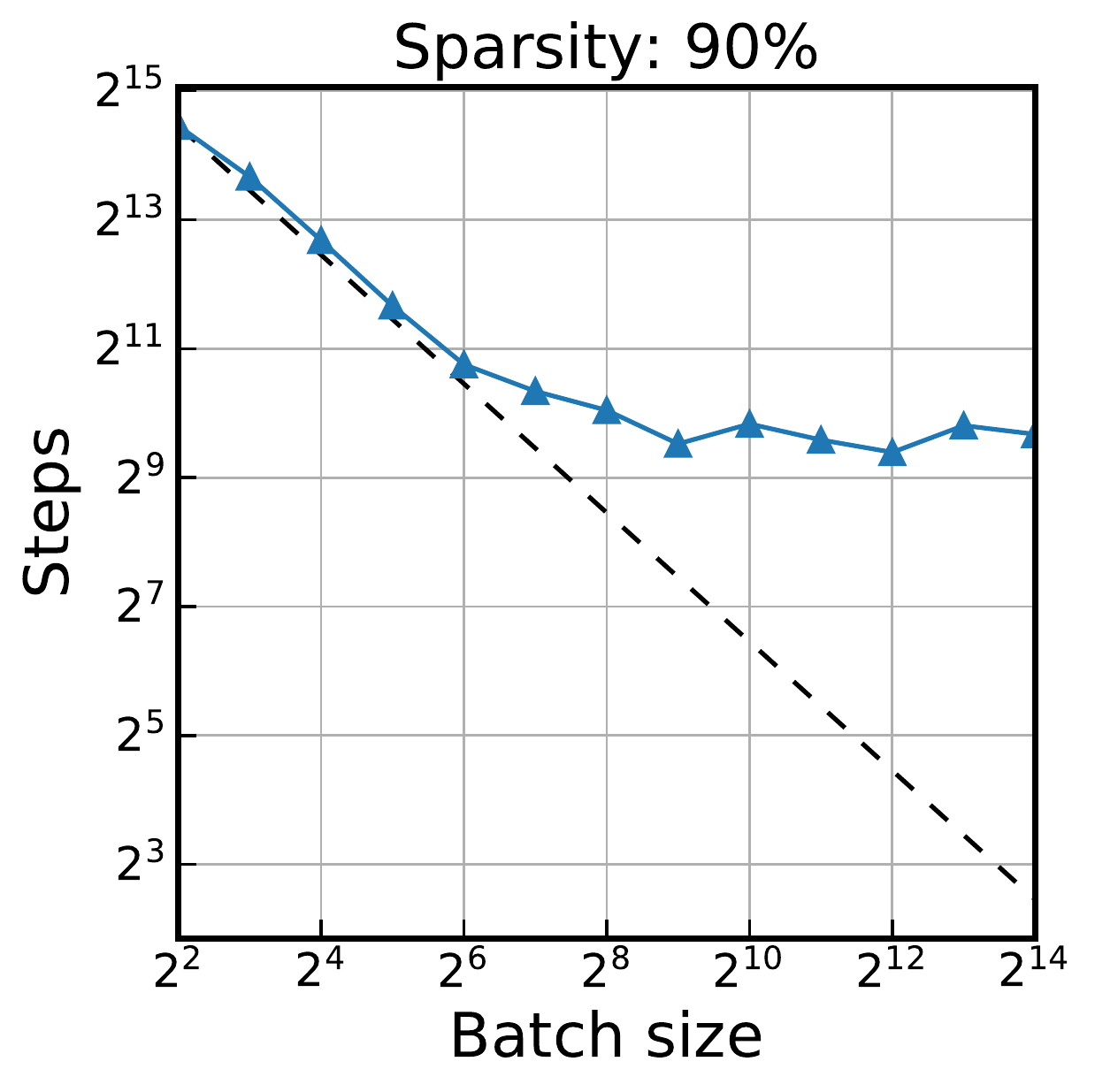}
        \includegraphics[height=22.4mm]{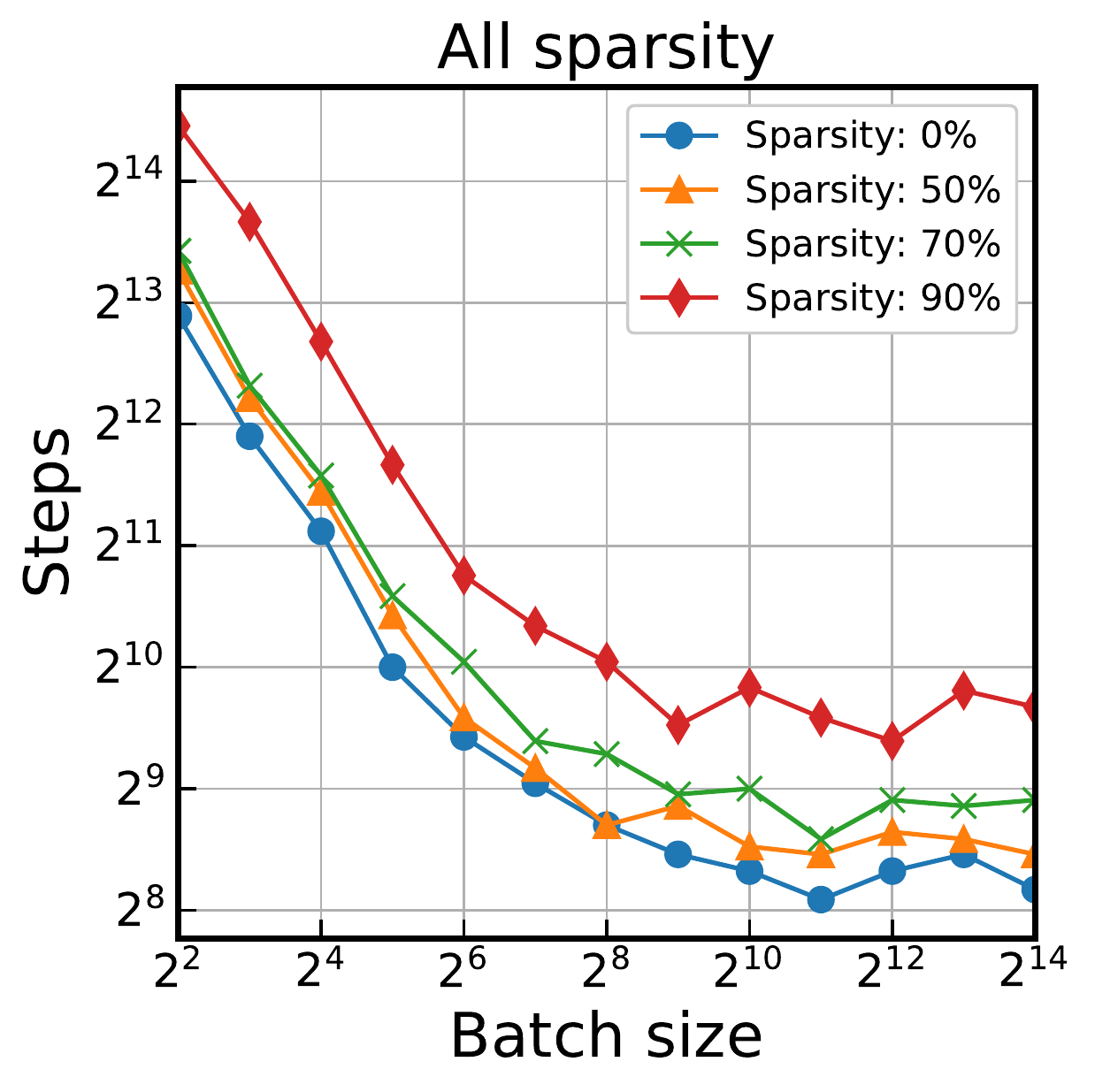}
        \includegraphics[height=22.4mm]{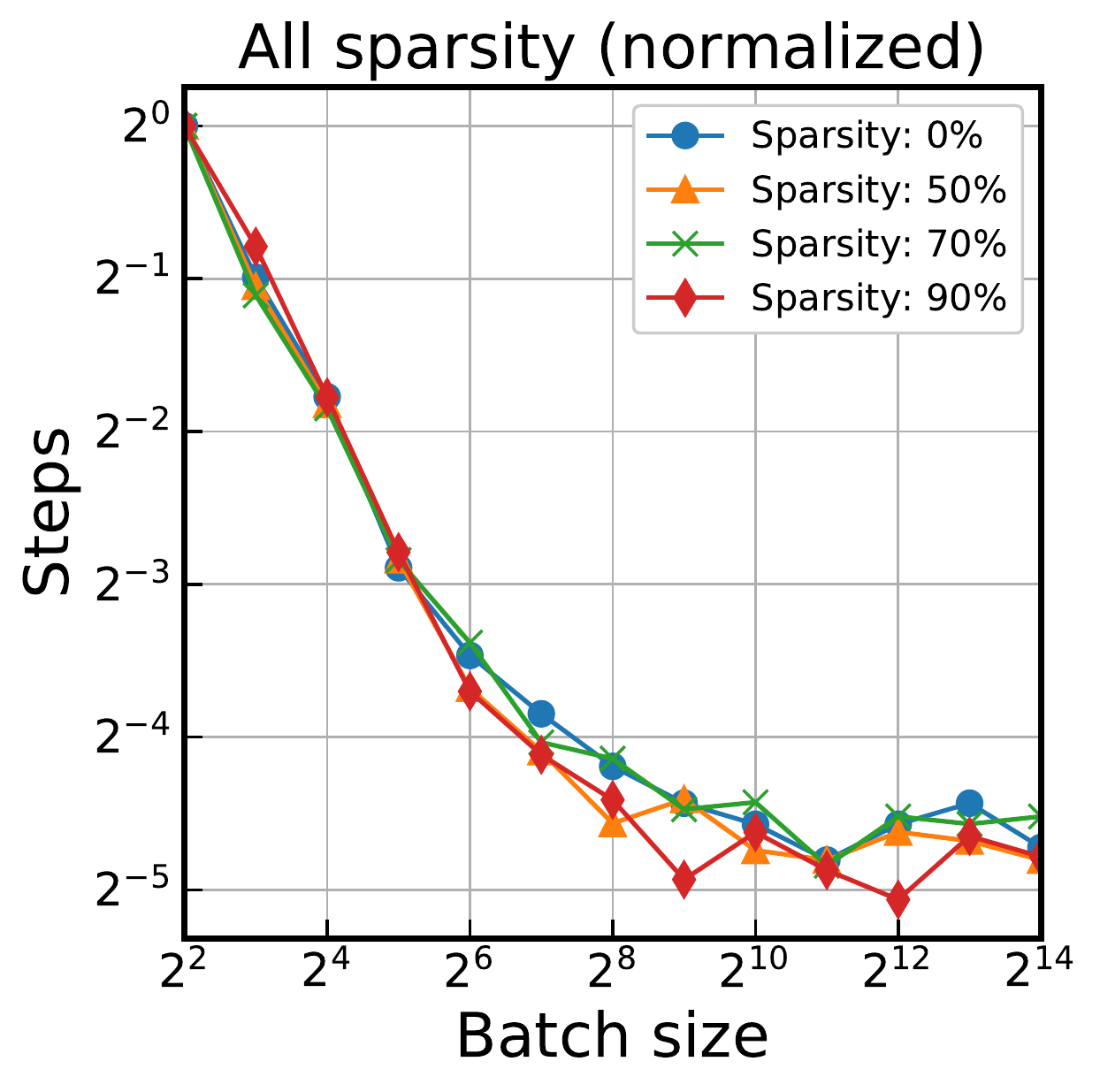}
        \caption{SGD}
    \end{subfigure}
    \begin{subfigure}{.9998\textwidth}
        \centering
        \includegraphics[height=22.4mm]{figure/s2r-bs/trends/fmnist-simple-cnn-base-momentum-constant-goal-error-0.12-ts-0.0-eps-converted-to}
        \includegraphics[height=22.4mm]{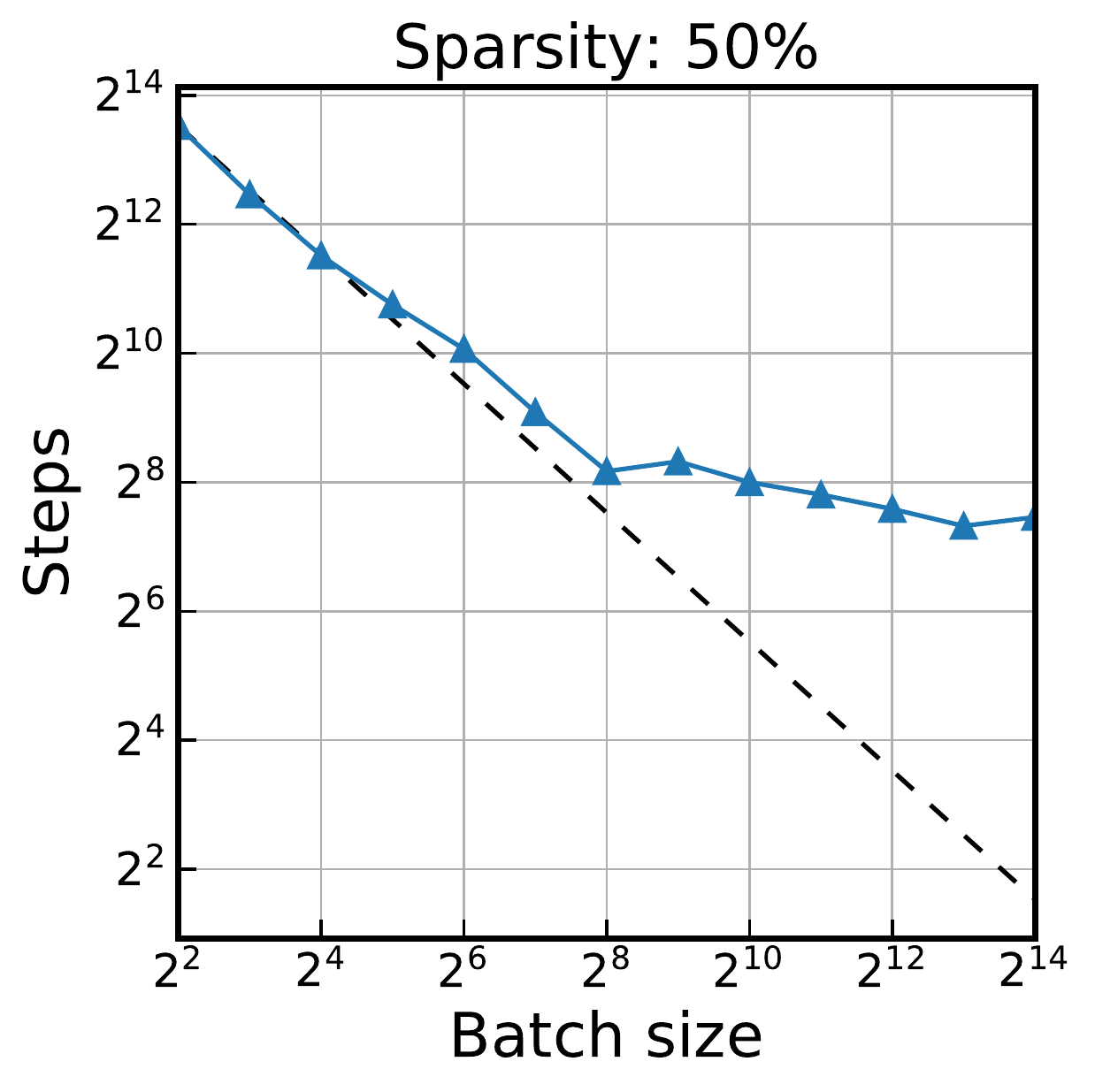}
        \includegraphics[height=22.4mm]{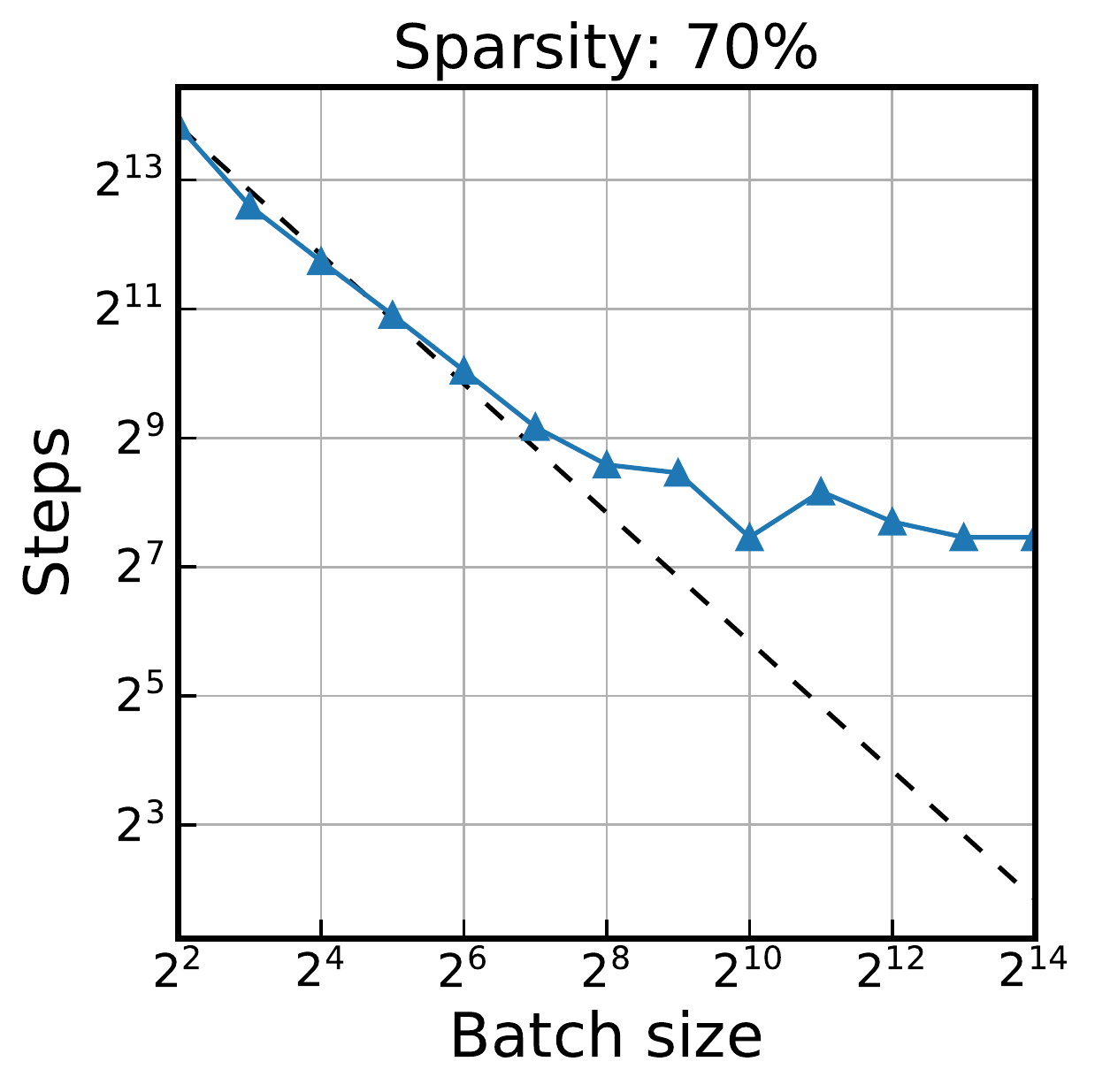}
        \includegraphics[height=22.4mm]{figure/s2r-bs/trends/fmnist-simple-cnn-base-momentum-constant-goal-error-0.12-ts-0.9-eps-converted-to}
        \includegraphics[height=22.4mm]{figure/s2r-bs/combinations/fmnist-simple-cnn-base-momentum-constant-goal-error-0.12-var-sparsity-eps-converted-to}
        \includegraphics[height=22.4mm]{figure/s2r-bs/combinations/fmnist-simple-cnn-base-momentum-constant-goal-error-0.12-var-sparsity-normalized-eps-converted-to}
        \caption{Momentum}
    \end{subfigure}
    \begin{subfigure}{.9998\textwidth}
        \centering
        \includegraphics[height=22.4mm]{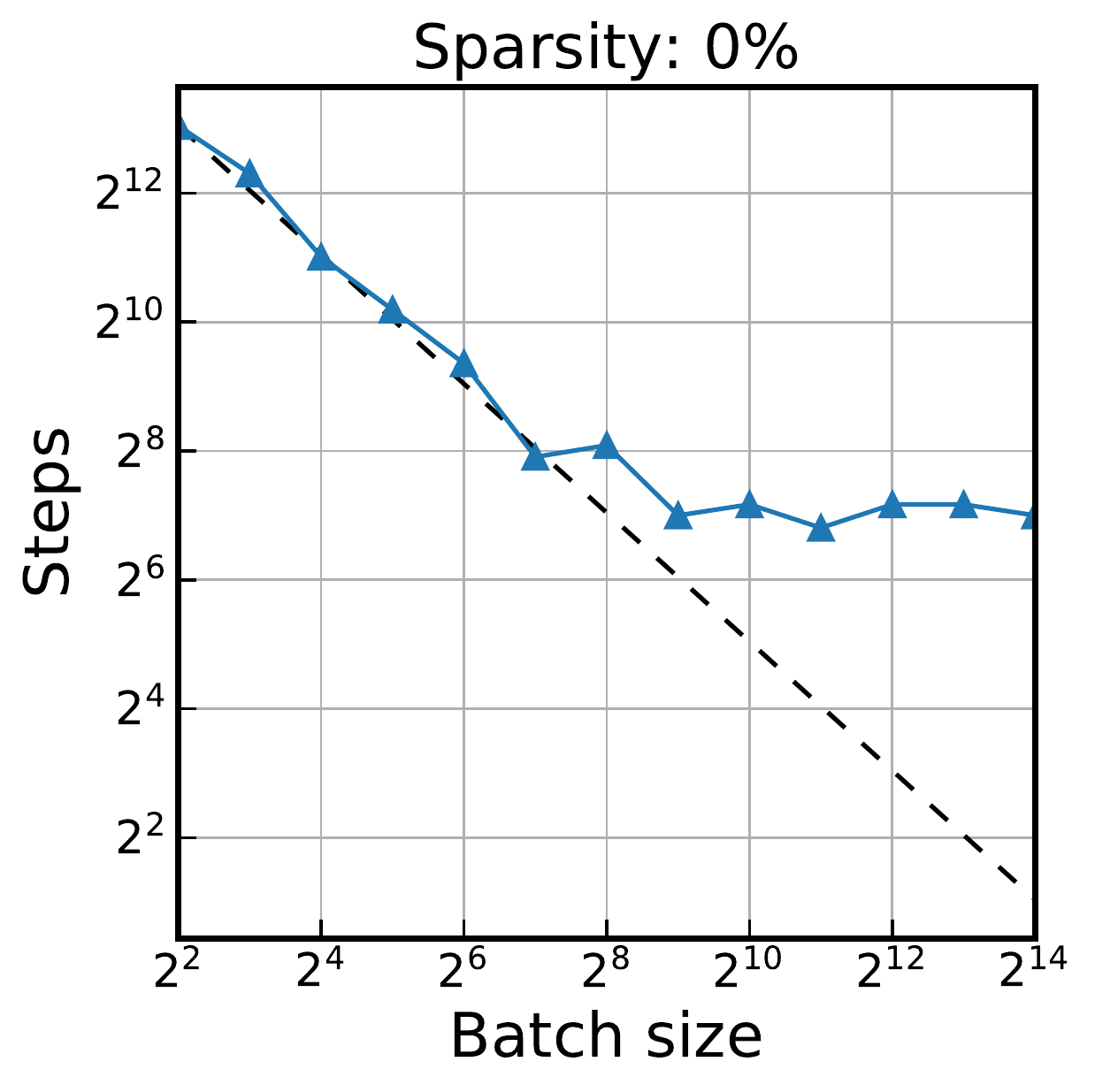}
        \includegraphics[height=22.4mm]{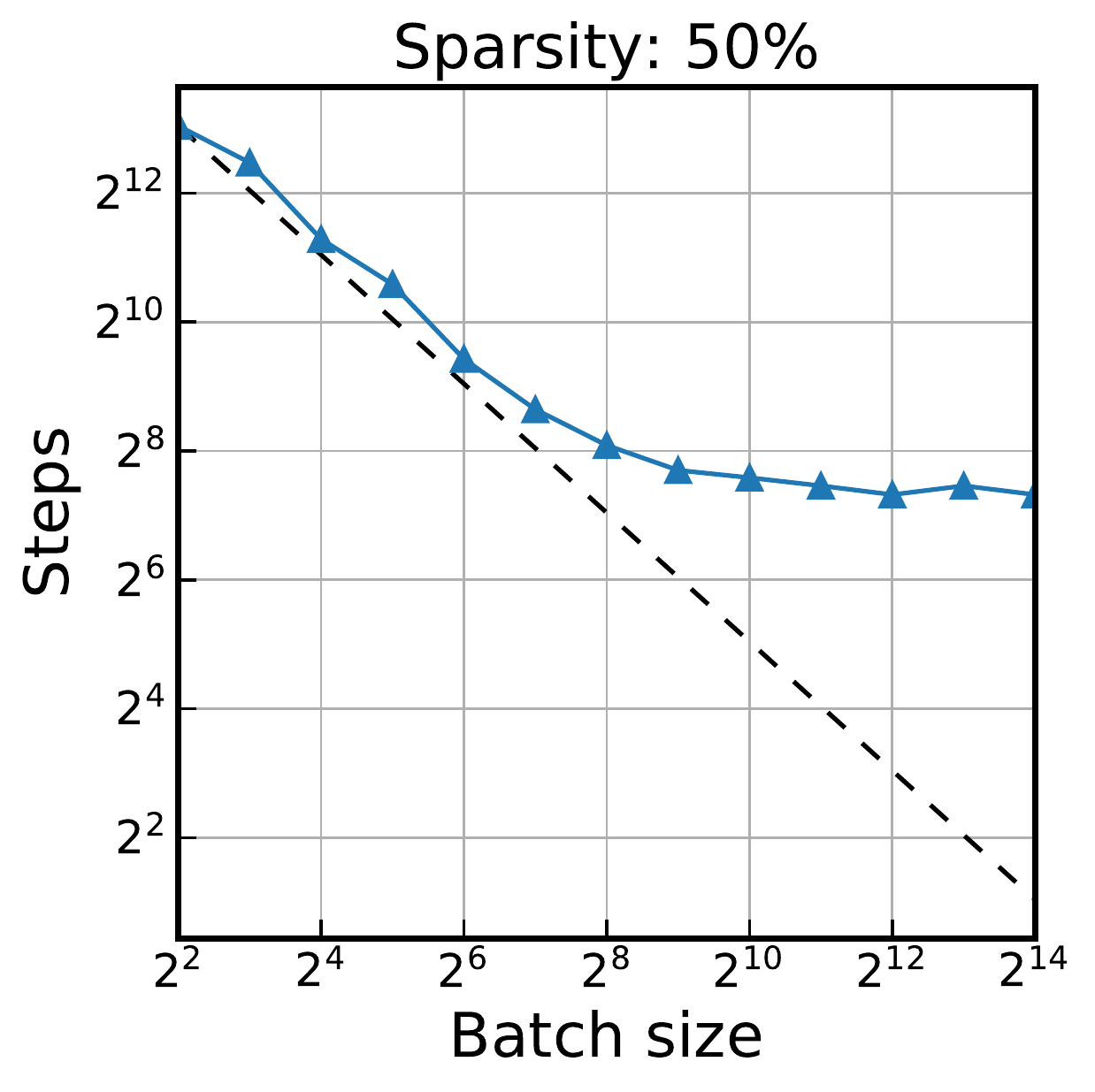}
        \includegraphics[height=22.4mm]{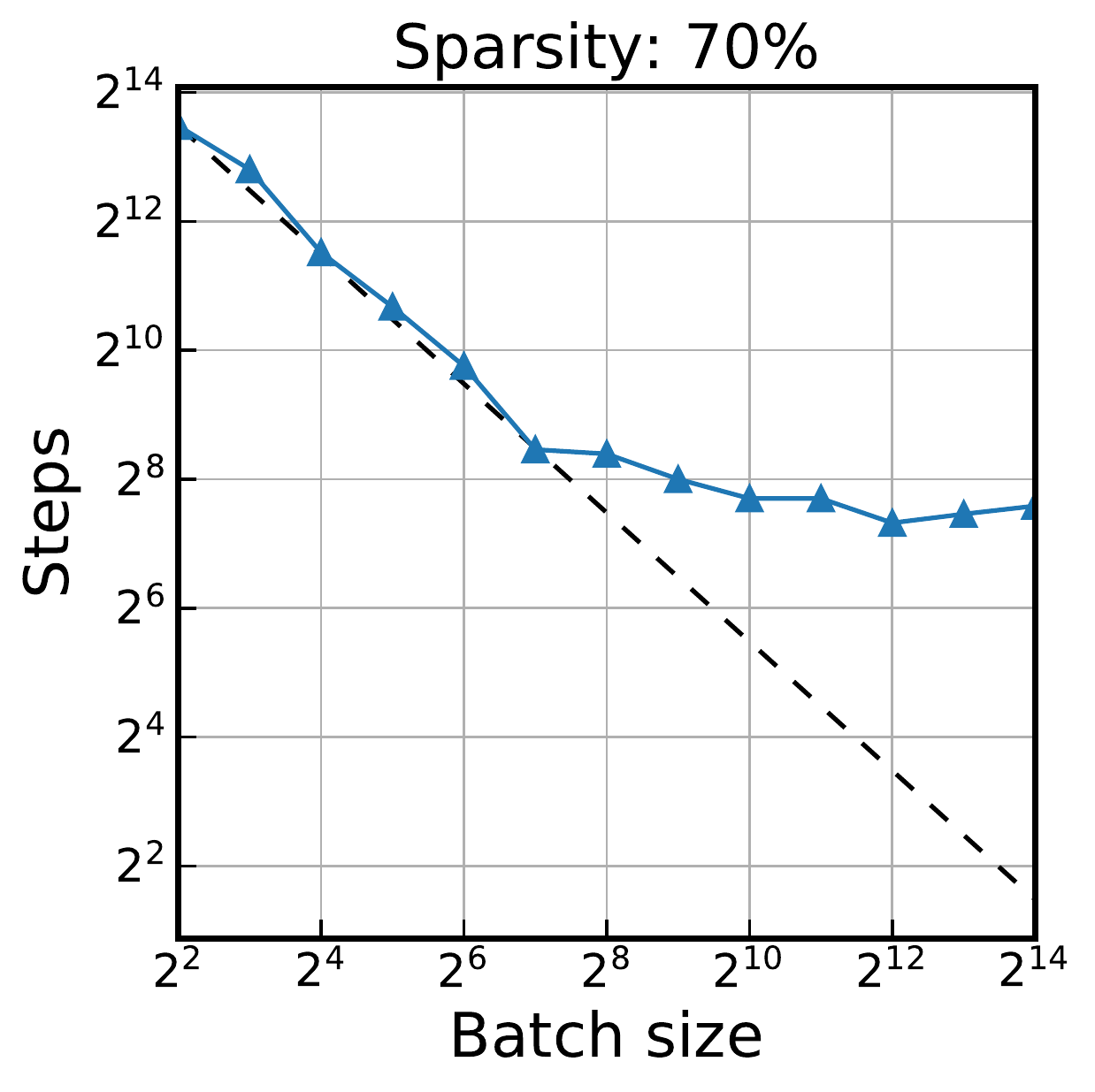}
        \includegraphics[height=22.4mm]{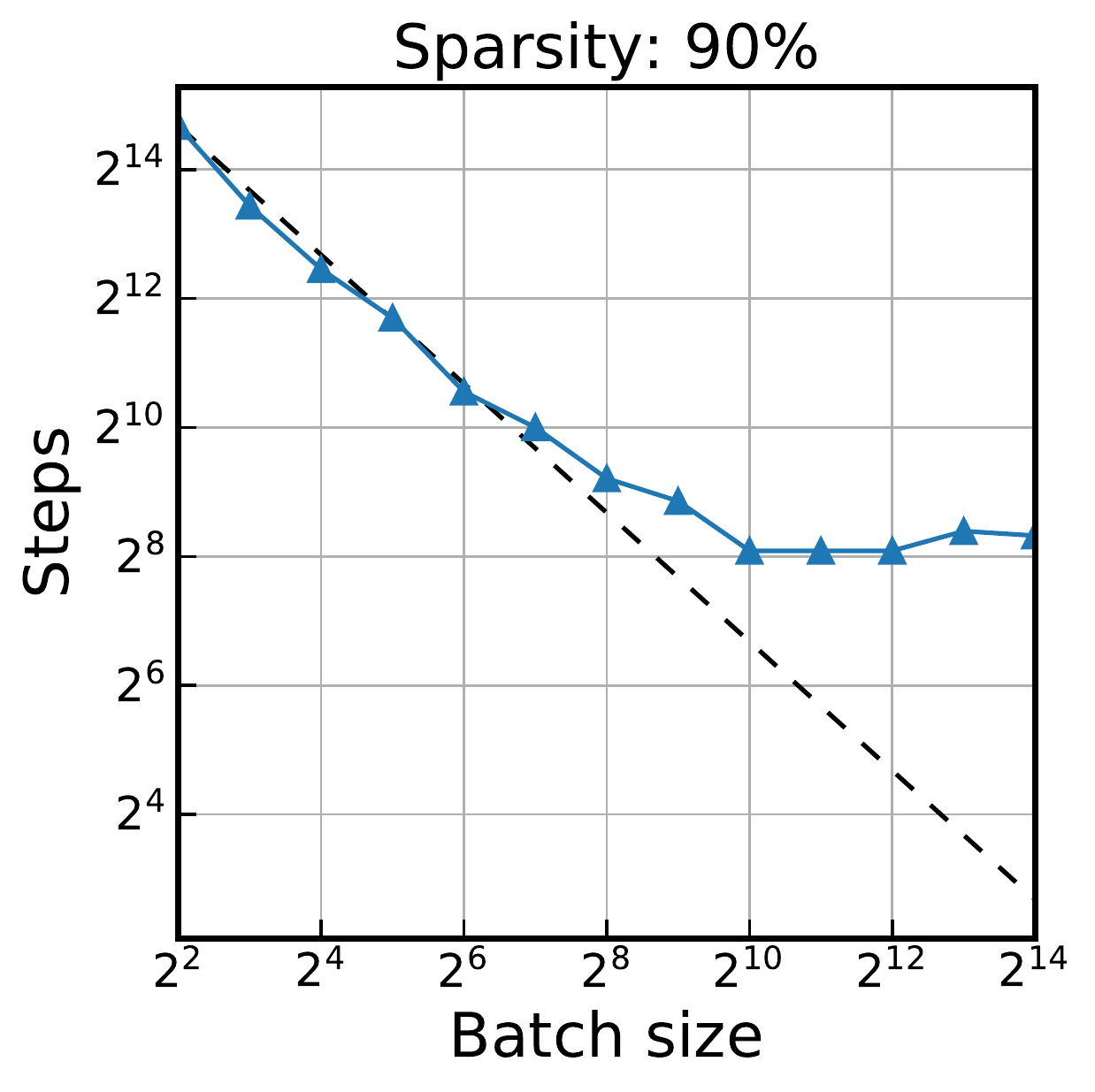}
        \includegraphics[height=22.4mm]{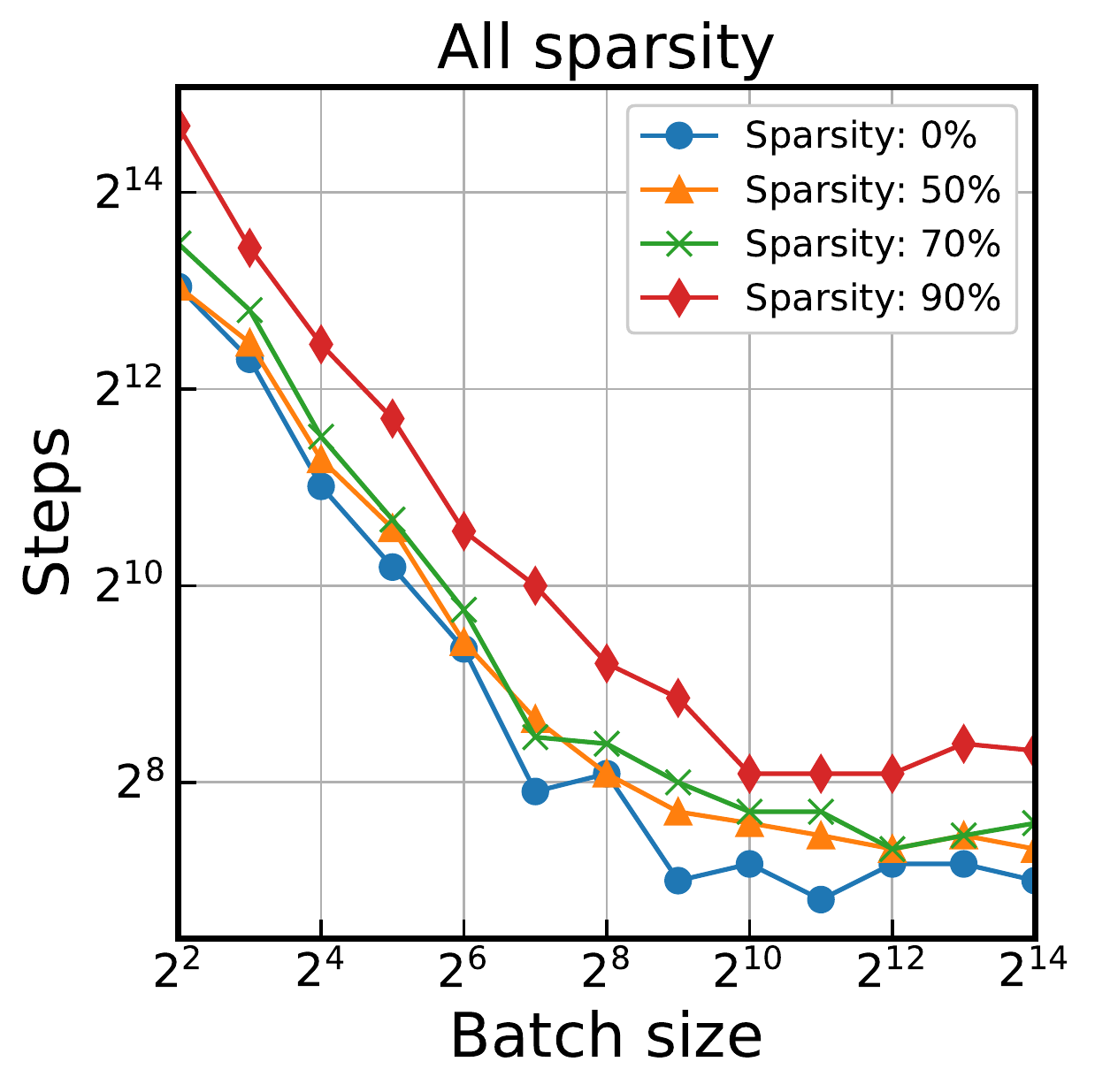}
        \includegraphics[height=22.4mm]{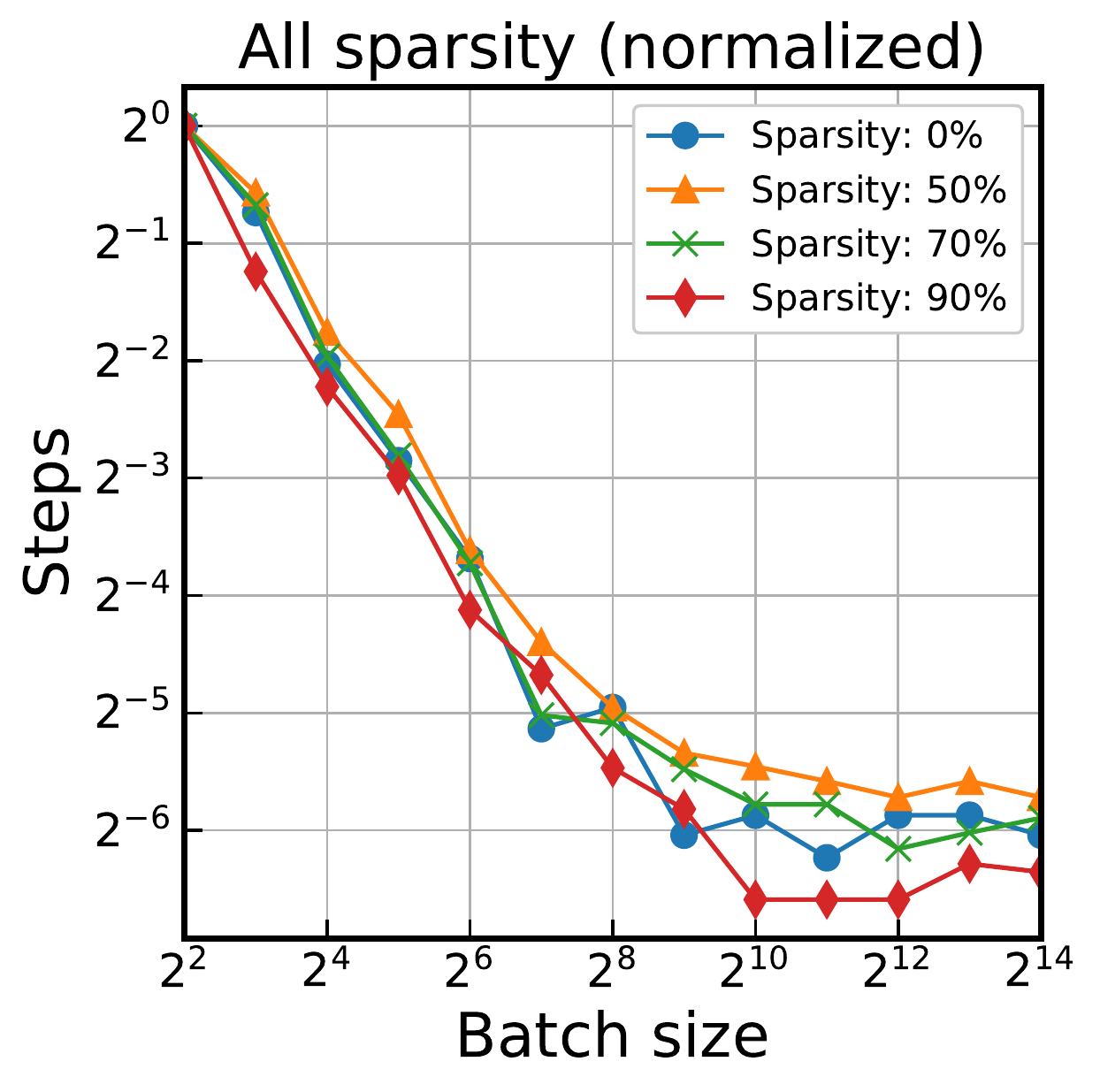}
        \caption{Nesterov}
    \end{subfigure}
    \caption{
        Results for the effects of data parallelism for the workloads of \{Fashion-MNIST, Simple-CNN, SGD/Momentum/Nesterov\} with a constant learning rate and the goal error of $0.12$.
    }
    \label{fig:edp-fmnist}
\end{figure}

\begin{figure}[t]
    \centering
    \begin{subfigure}{.9998\textwidth}
        \centering
        \includegraphics[height=27mm]{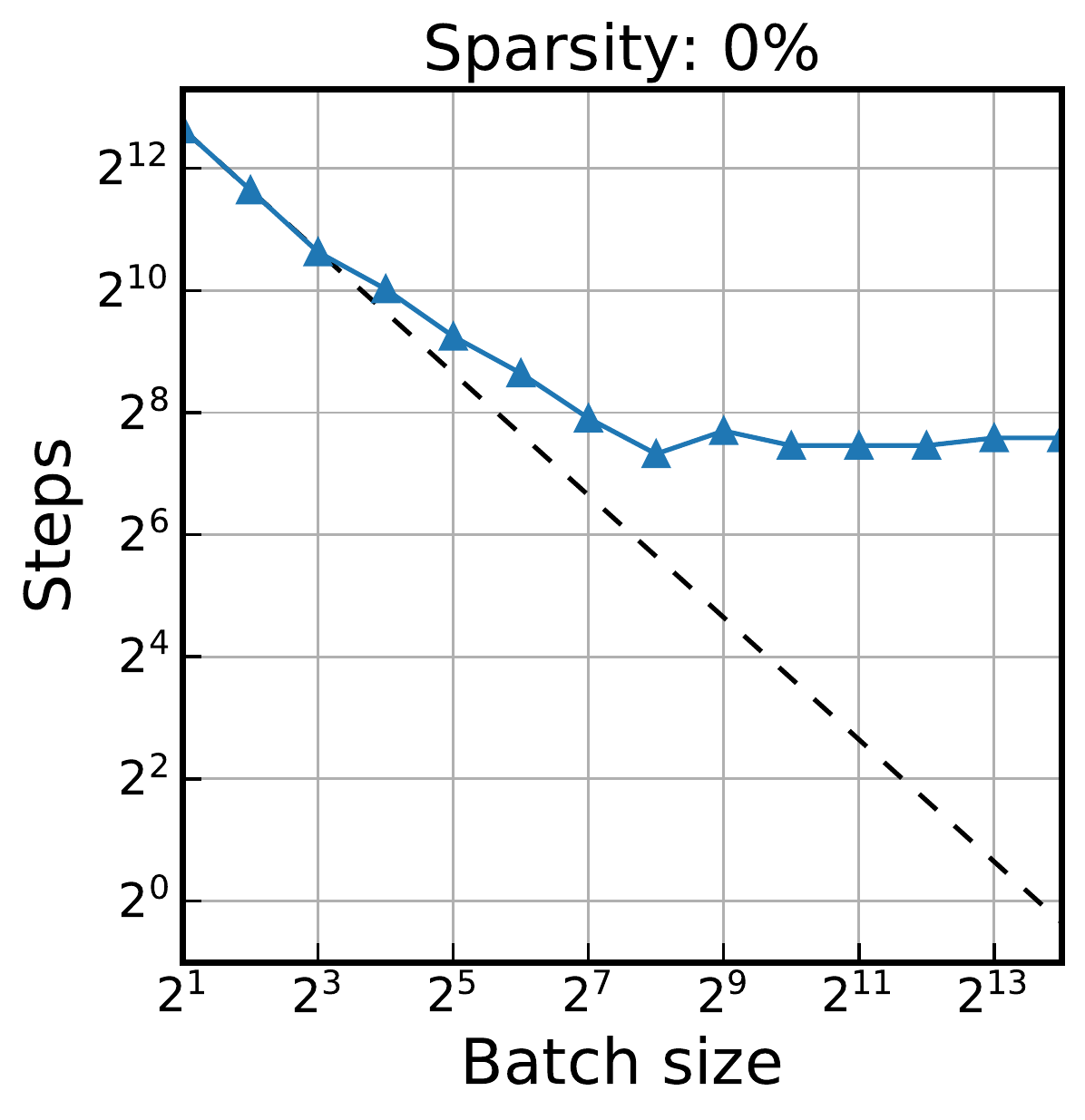}
        \includegraphics[height=27mm]{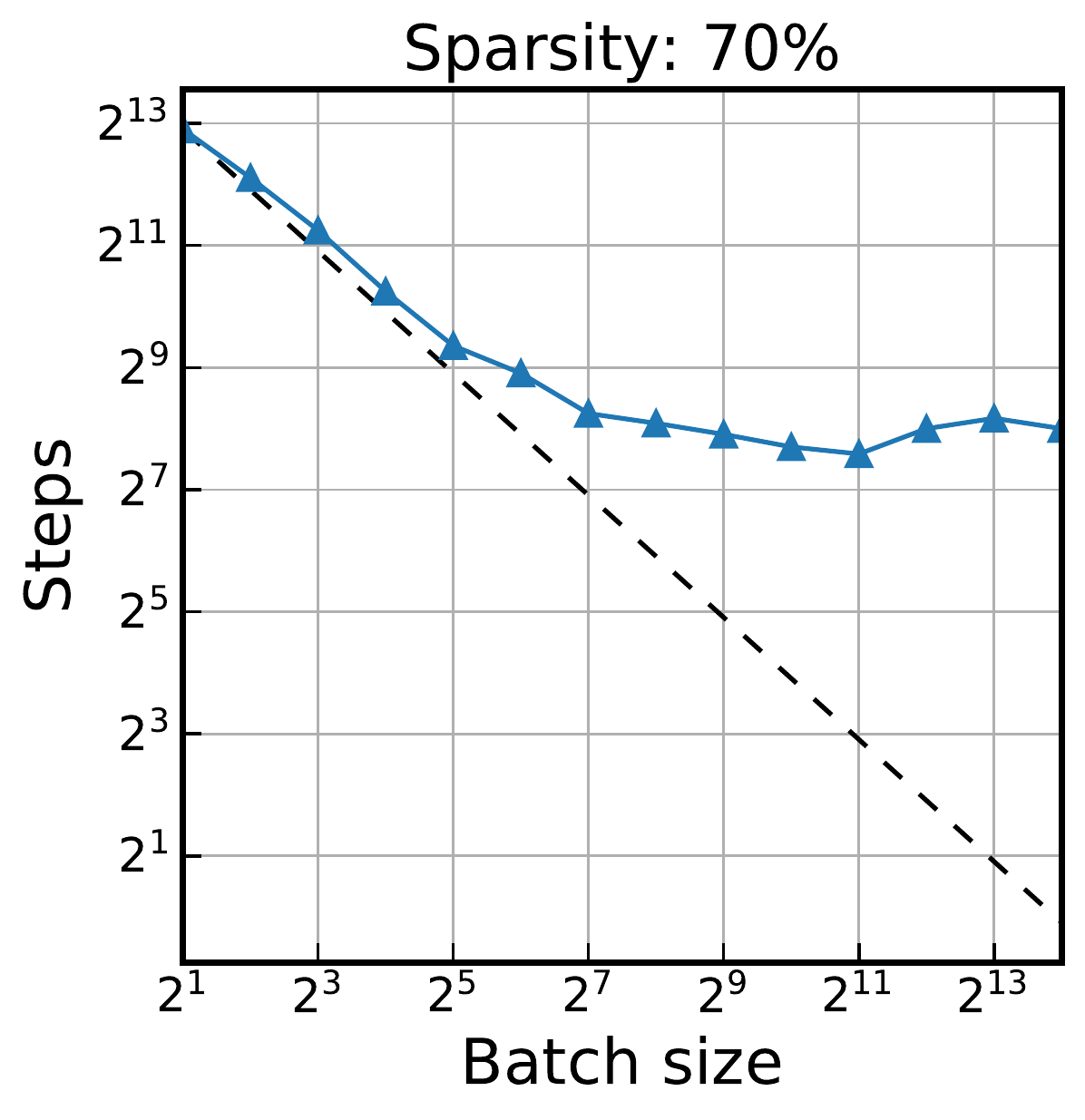}
        \includegraphics[height=27mm]{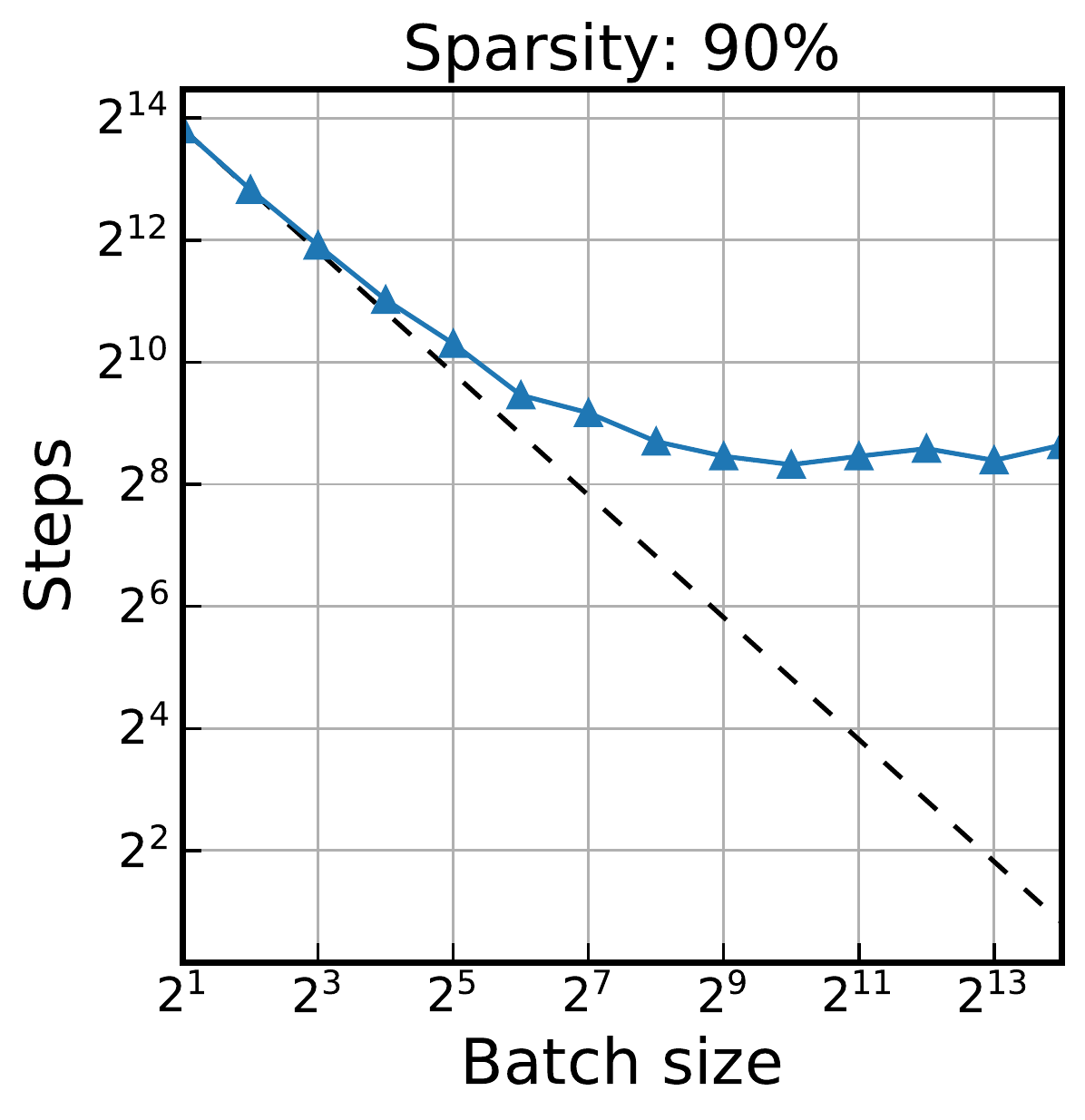}
        \includegraphics[height=27mm]{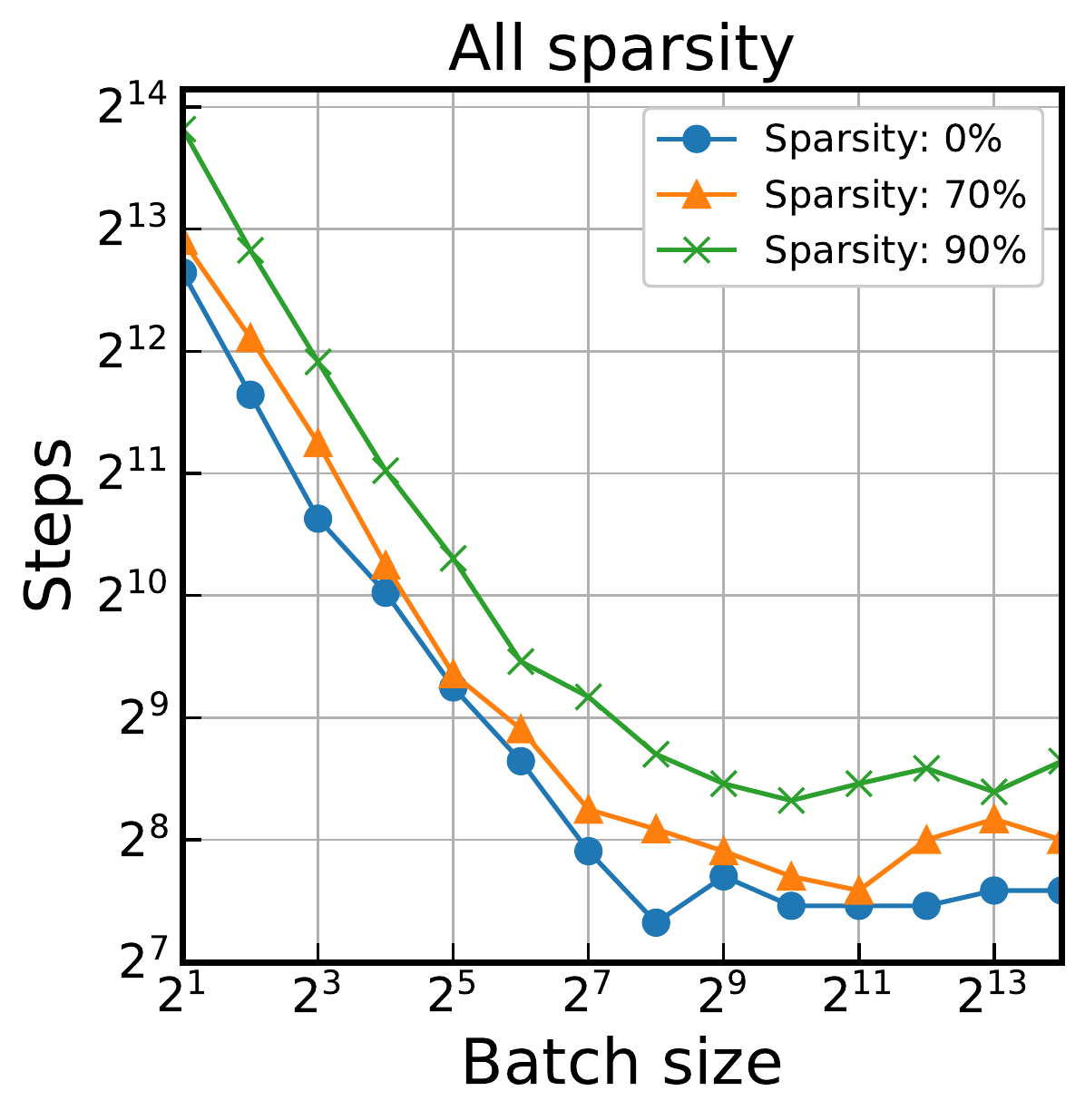}
        \includegraphics[height=27mm]{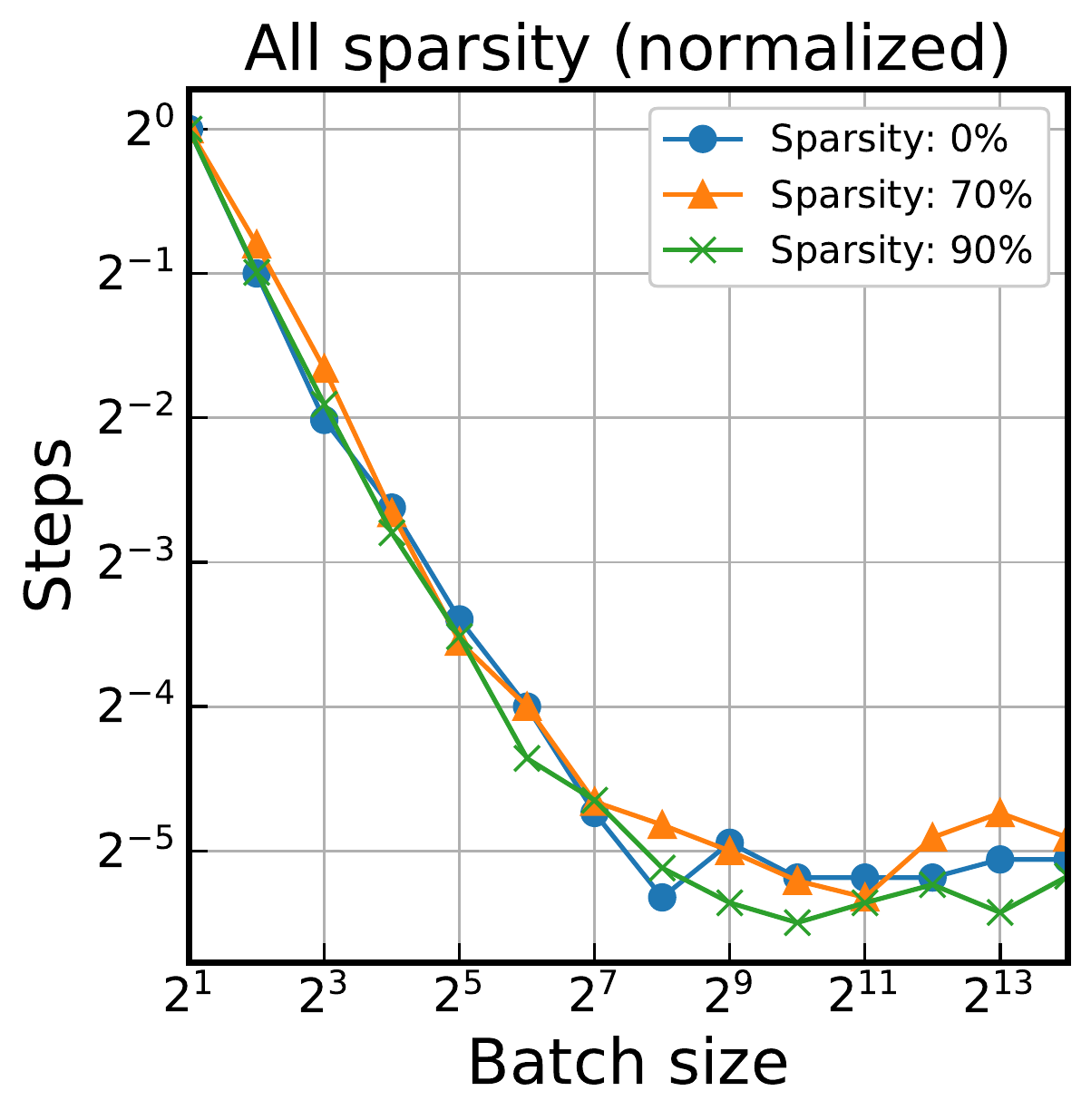}
        \caption{SGD}
    \end{subfigure}
    \begin{subfigure}{.9998\textwidth}
        \centering
        \includegraphics[height=27mm]{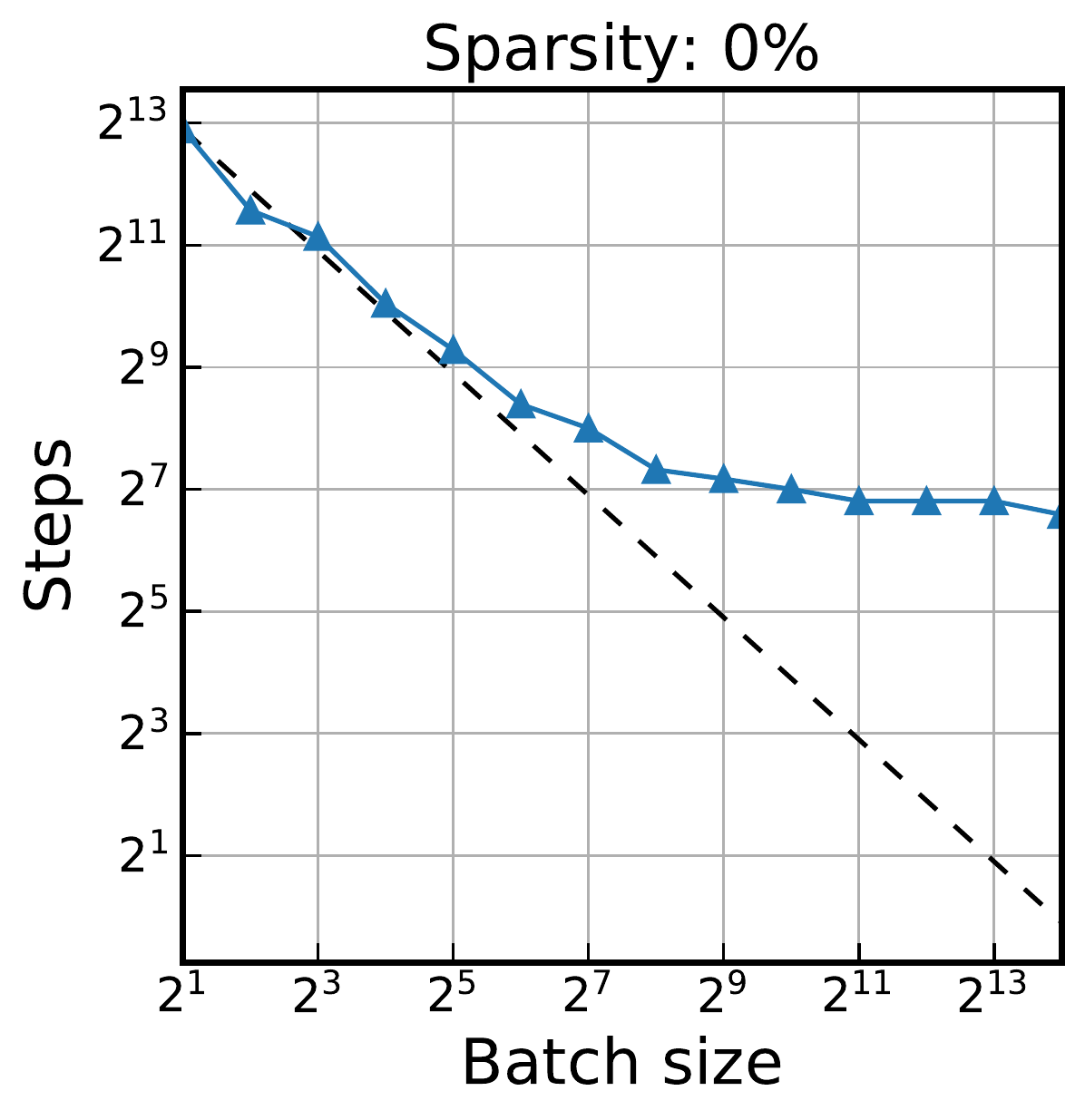}
        \includegraphics[height=27mm]{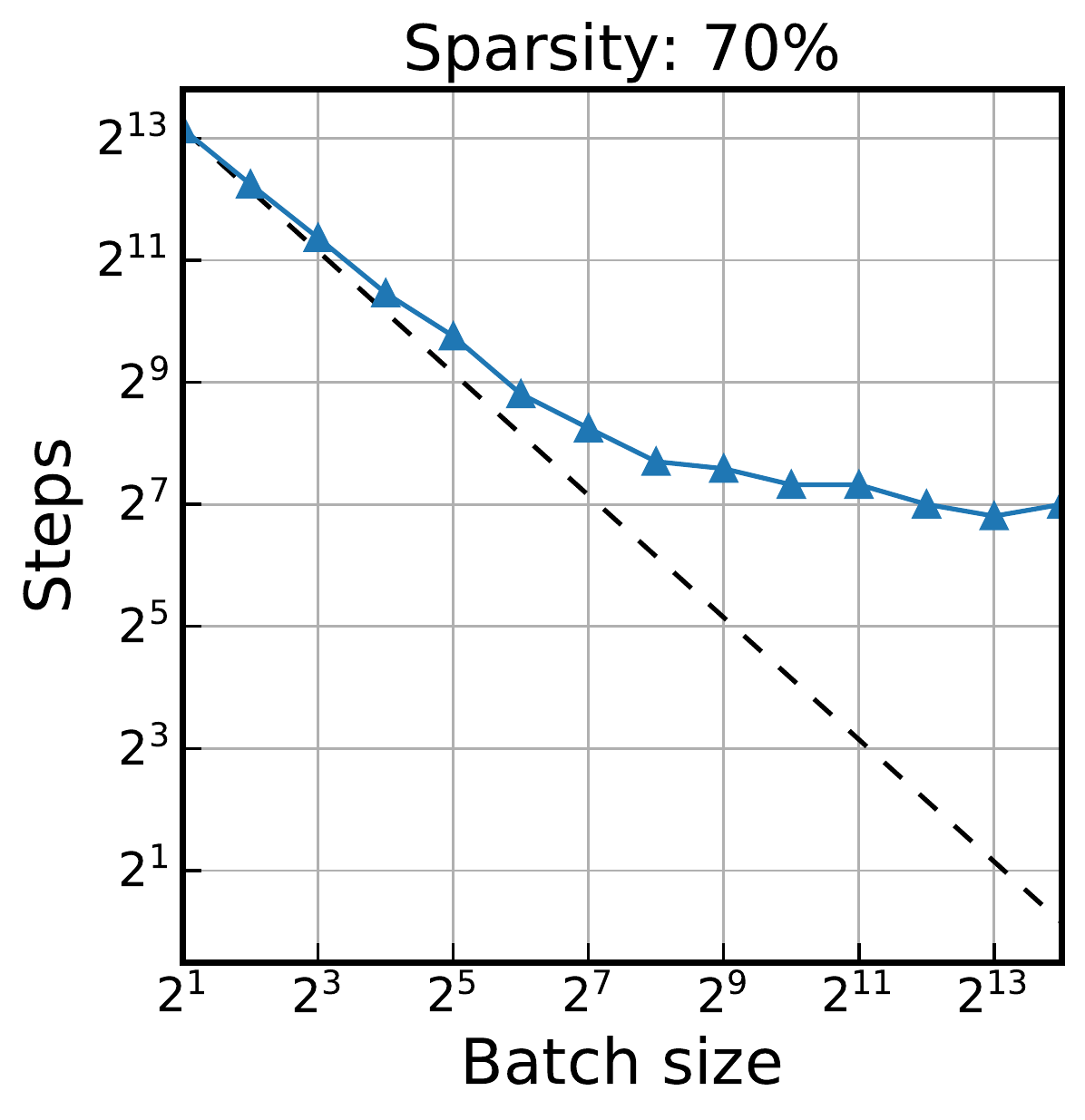}
        \includegraphics[height=27mm]{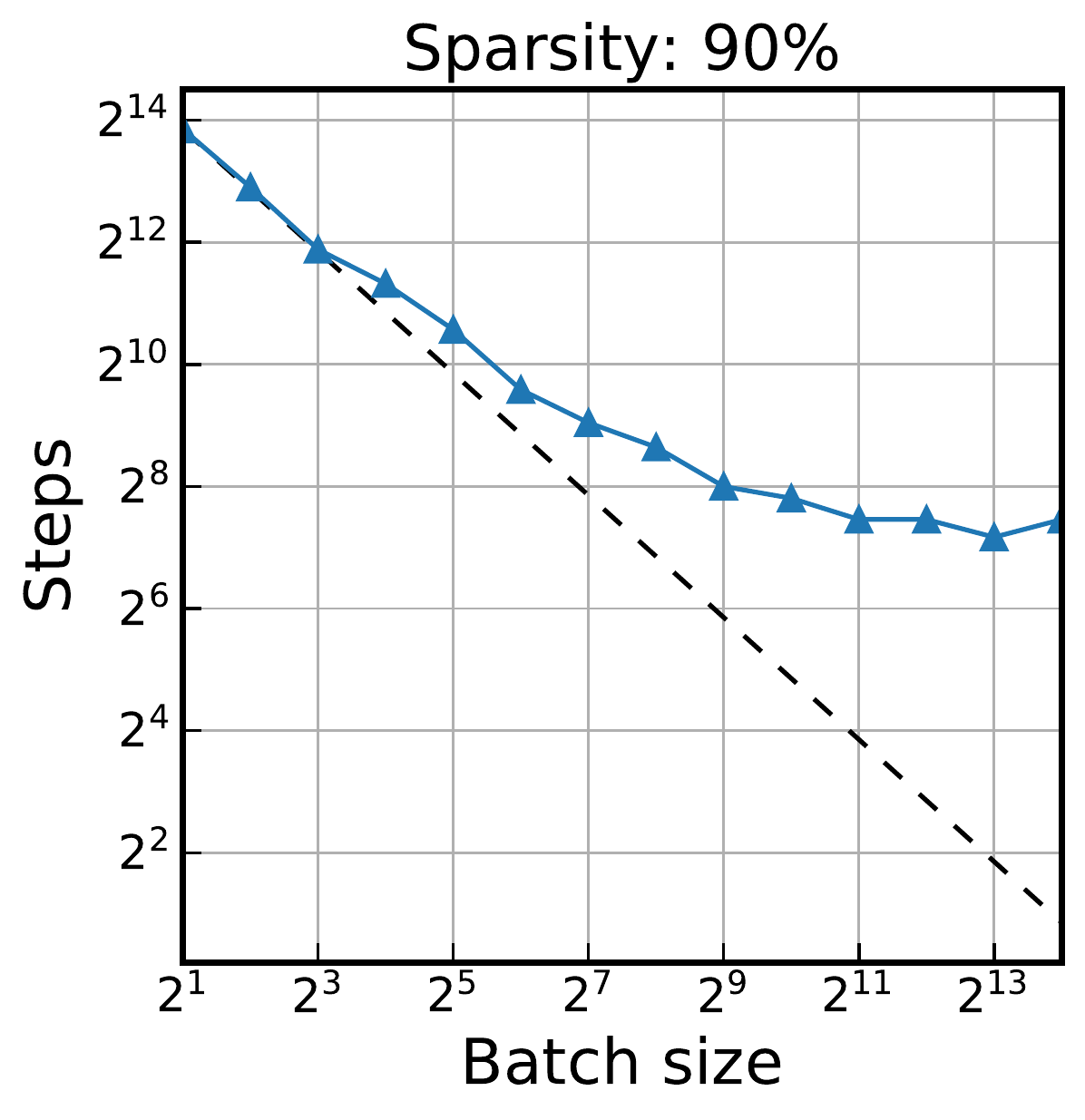}
        \includegraphics[height=27mm]{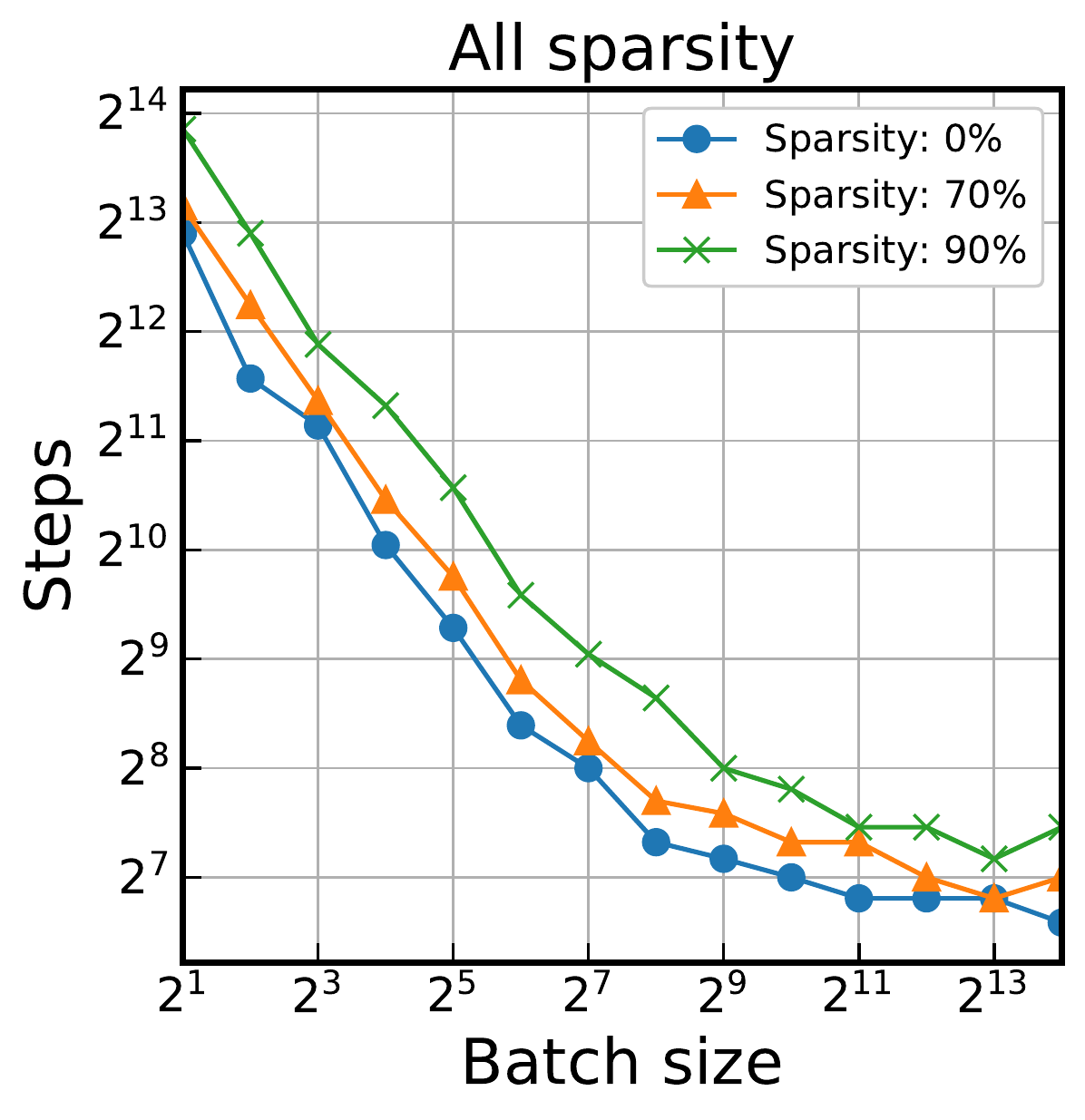}
        \includegraphics[height=27mm]{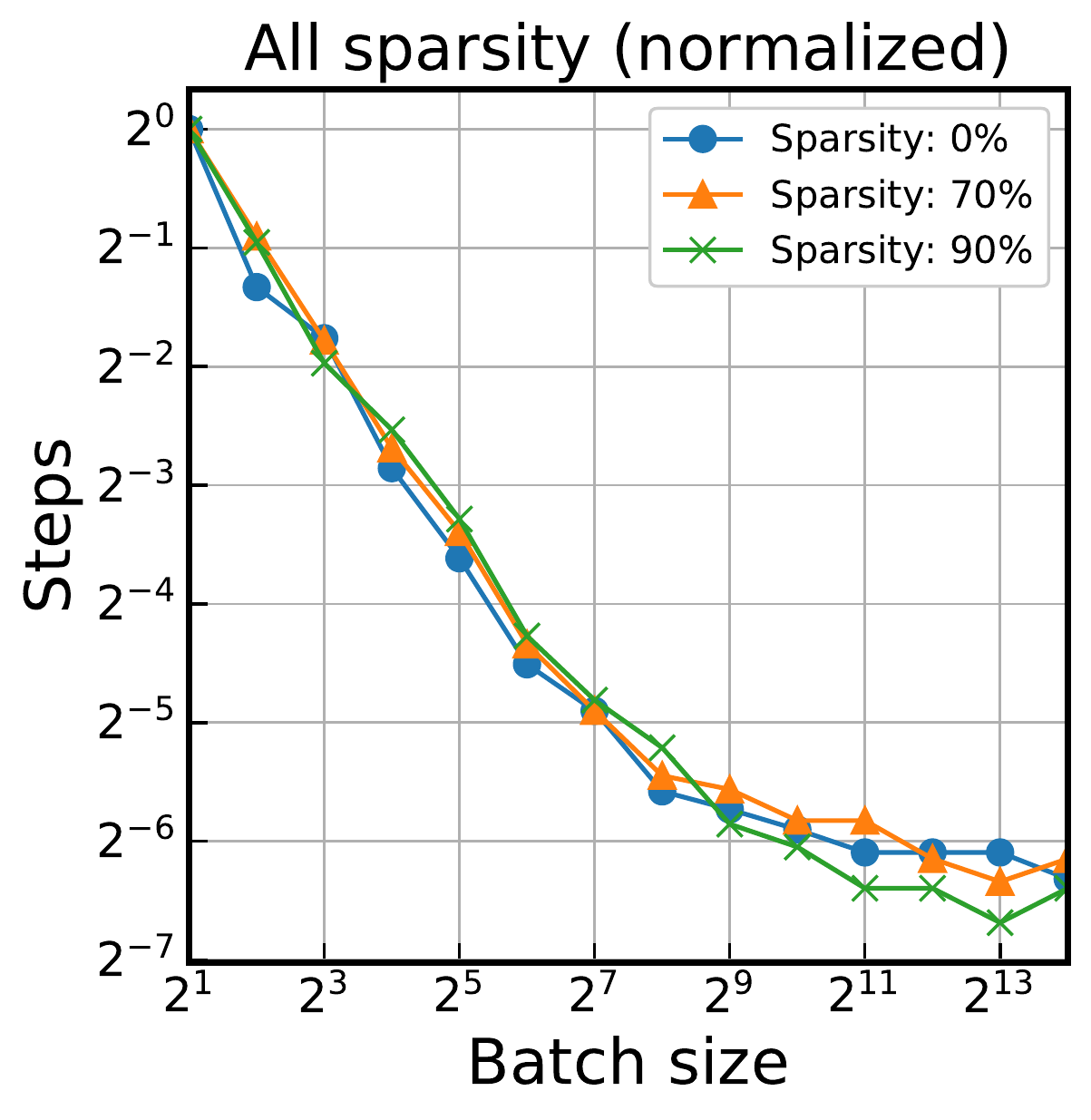}
        \caption{Momentum}
    \end{subfigure}
    \begin{subfigure}{.9998\textwidth}
        \centering
        \includegraphics[height=27mm]{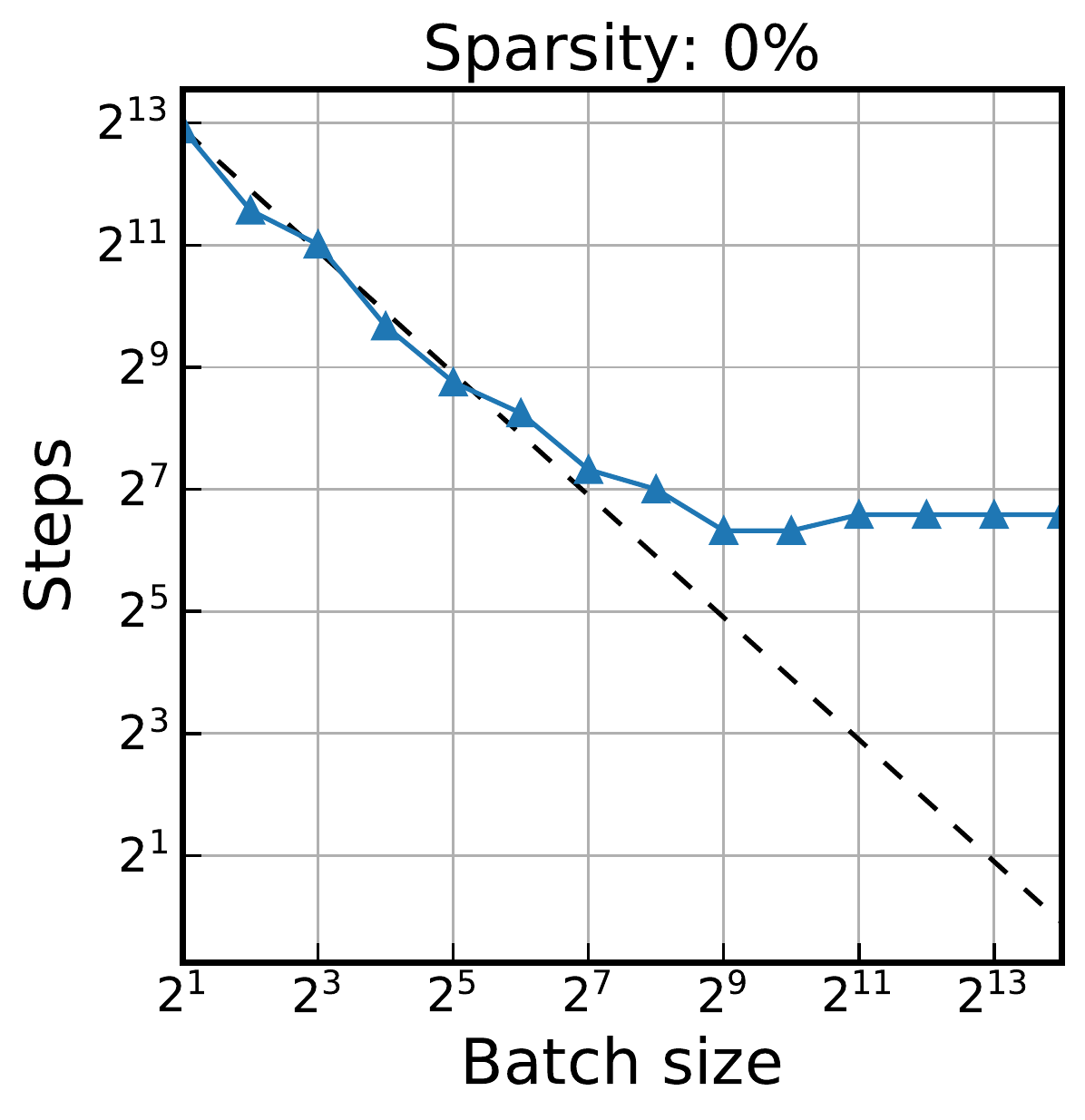}
        \includegraphics[height=27mm]{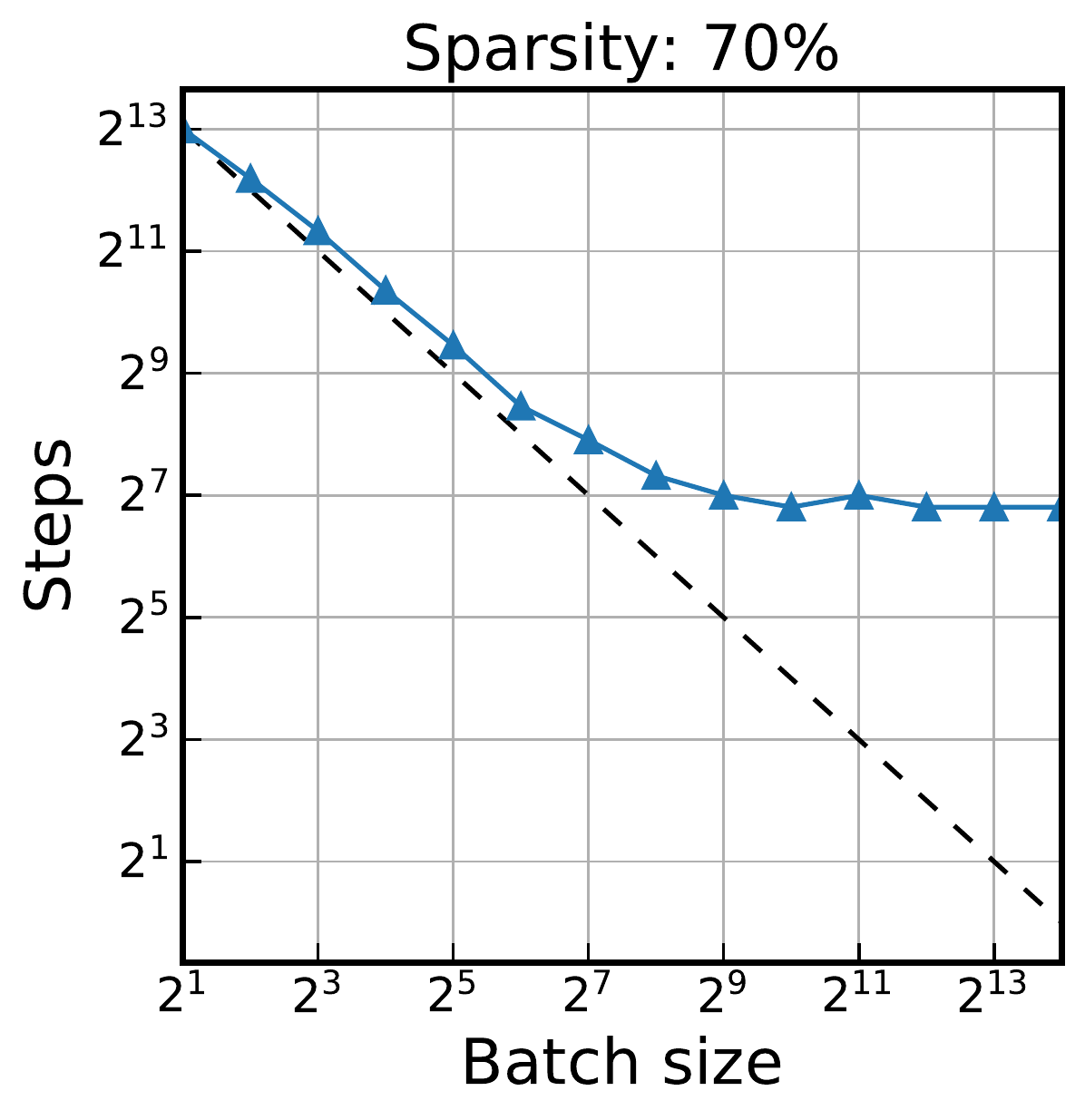}
        \includegraphics[height=27mm]{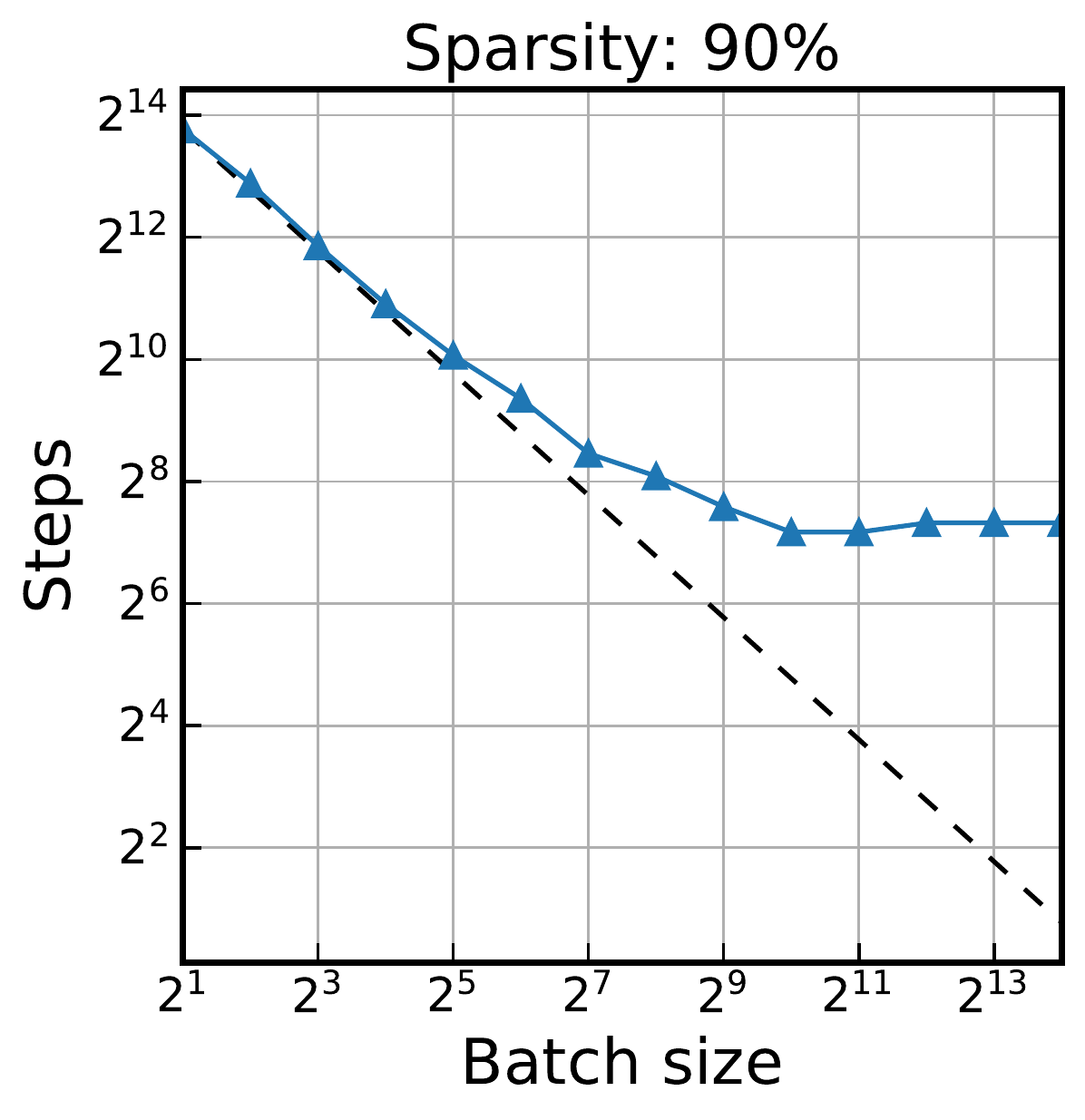}
        \includegraphics[height=27mm]{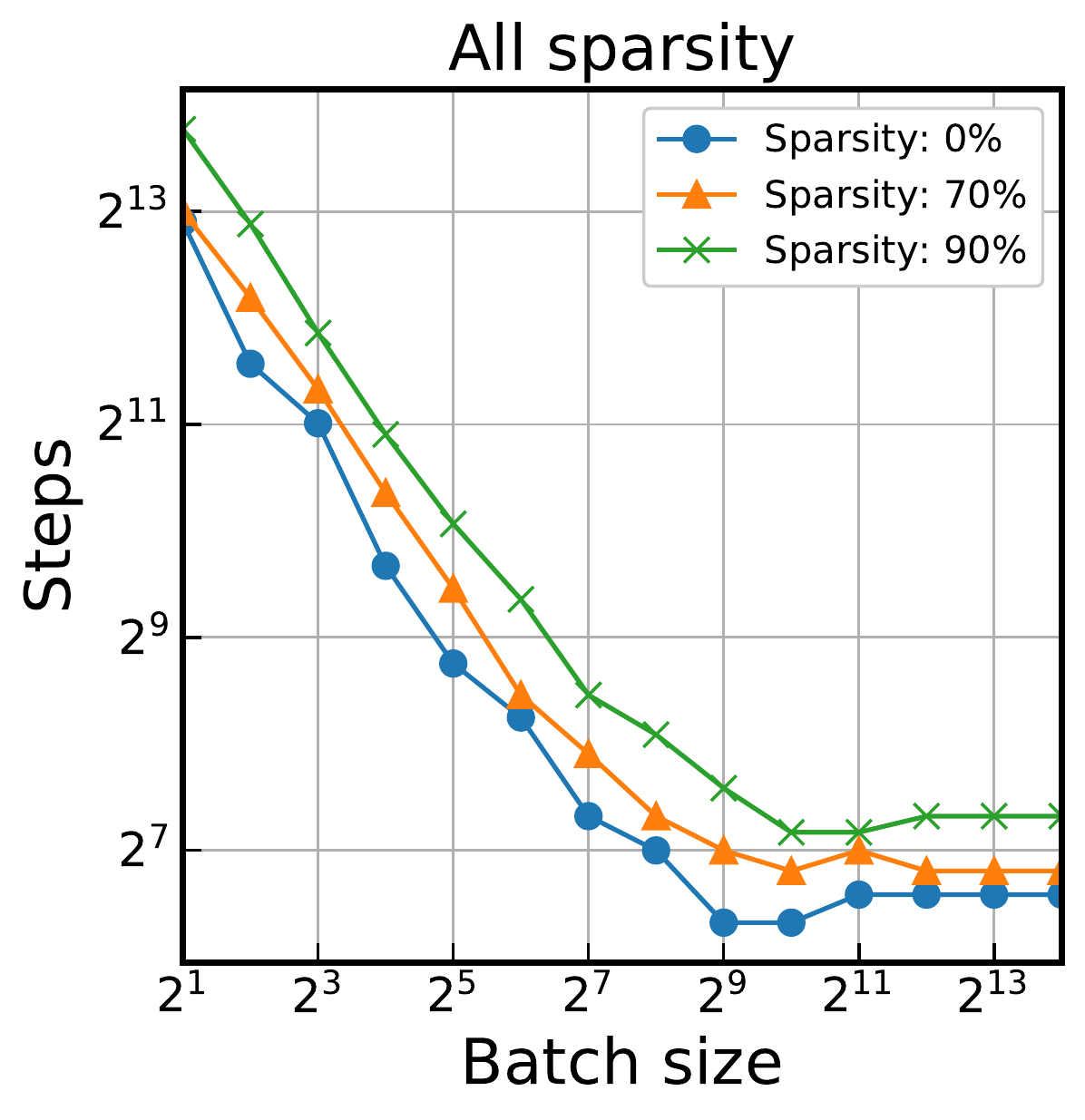}
        \includegraphics[height=27mm]{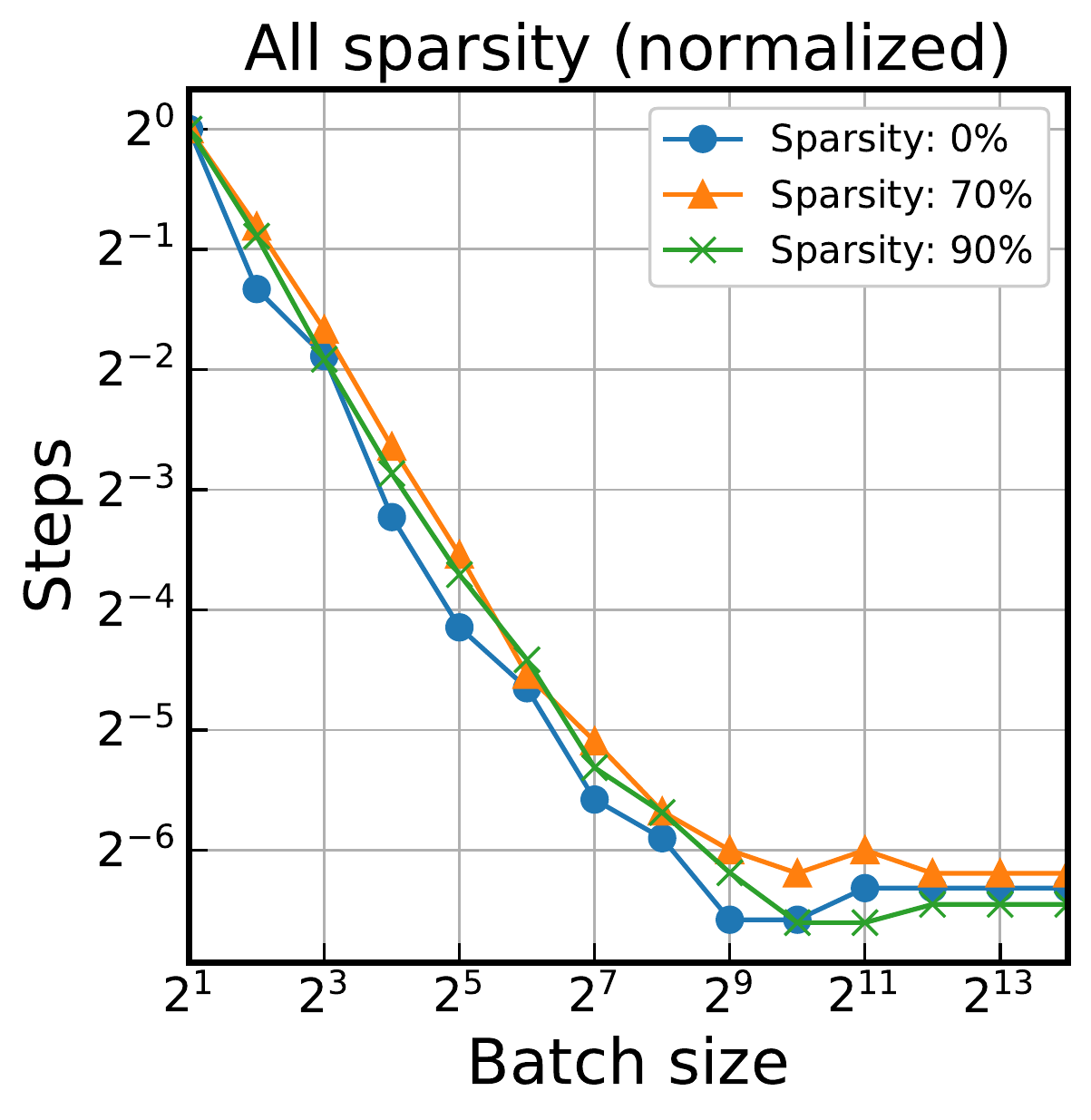}
        \caption{Nesterov}
    \end{subfigure}
    \caption{
        Results for the effects of data parallelism for the workloads of \{Fashion-MNIST, Simple-CNN, SGD/Momentum/Nesterov\} with a constant learning rate and the goal error of $0.14$.
    }
    \label{fig:edp-fmnist-0.14}
\end{figure}

\begin{figure}[t]
    \centering
    \begin{subfigure}{.9998\textwidth}
        \centering
        \includegraphics[height=27mm]{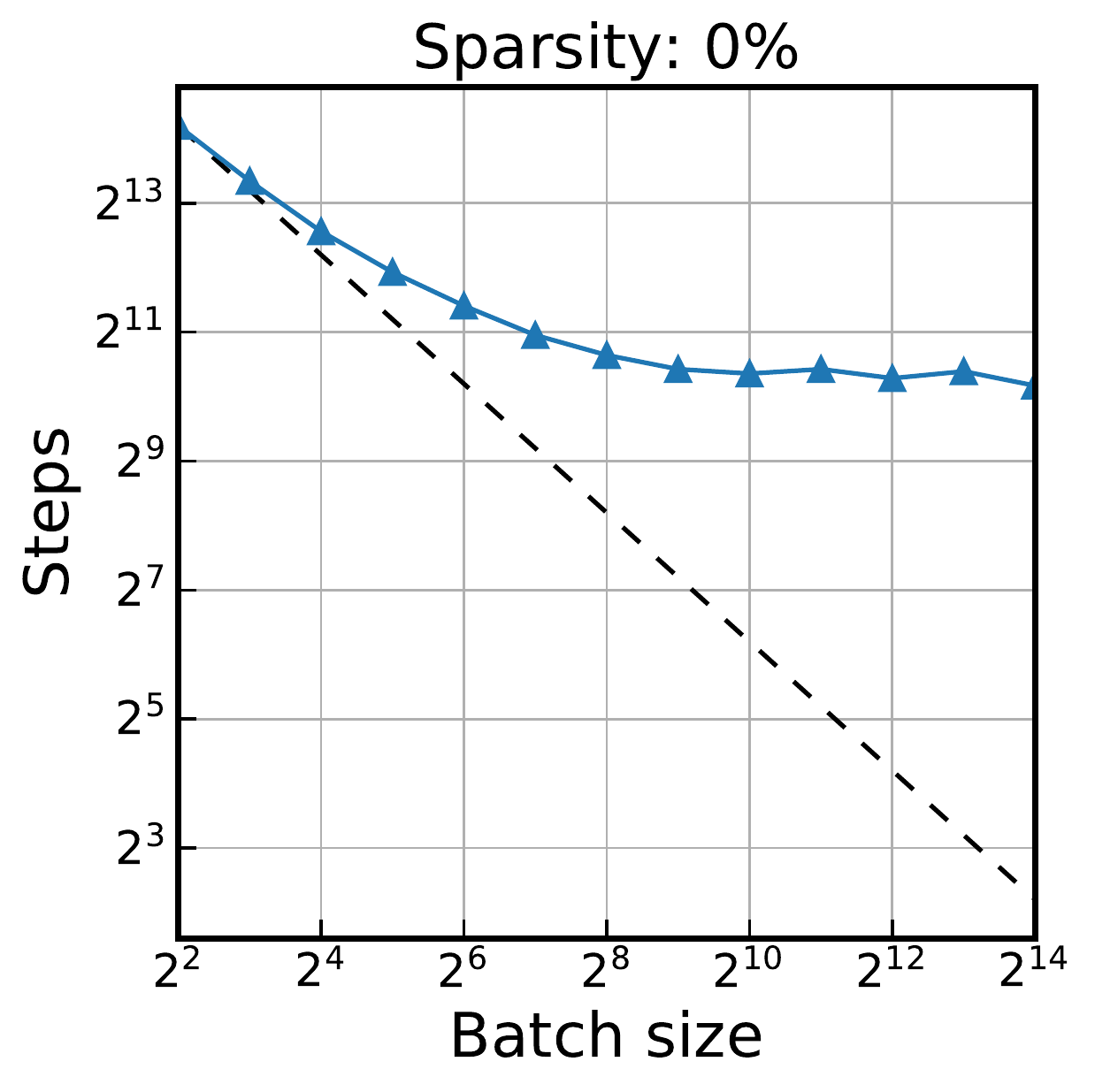}
        \includegraphics[height=27mm]{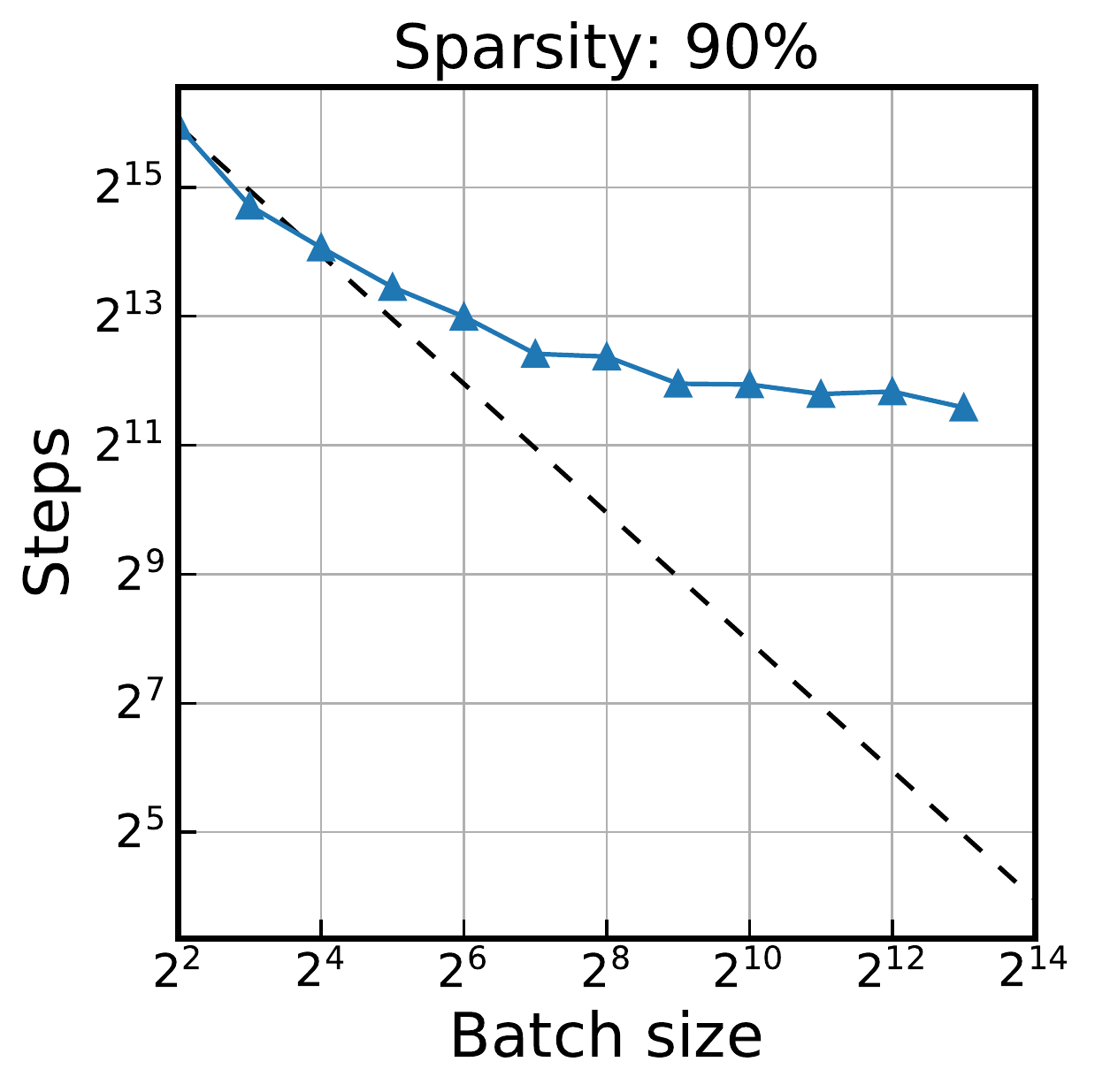}
        \includegraphics[height=27mm]{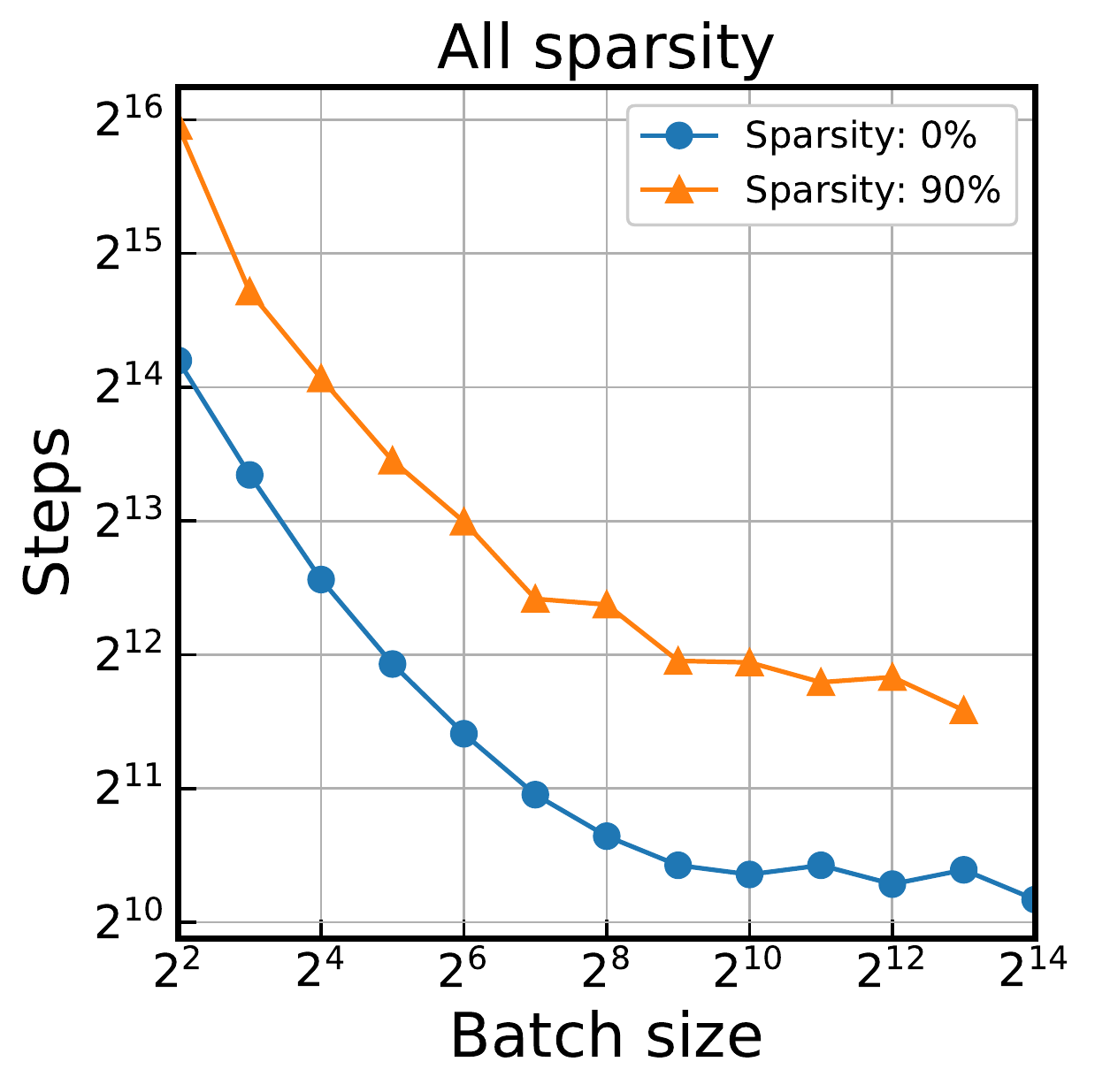}
        \includegraphics[height=27mm]{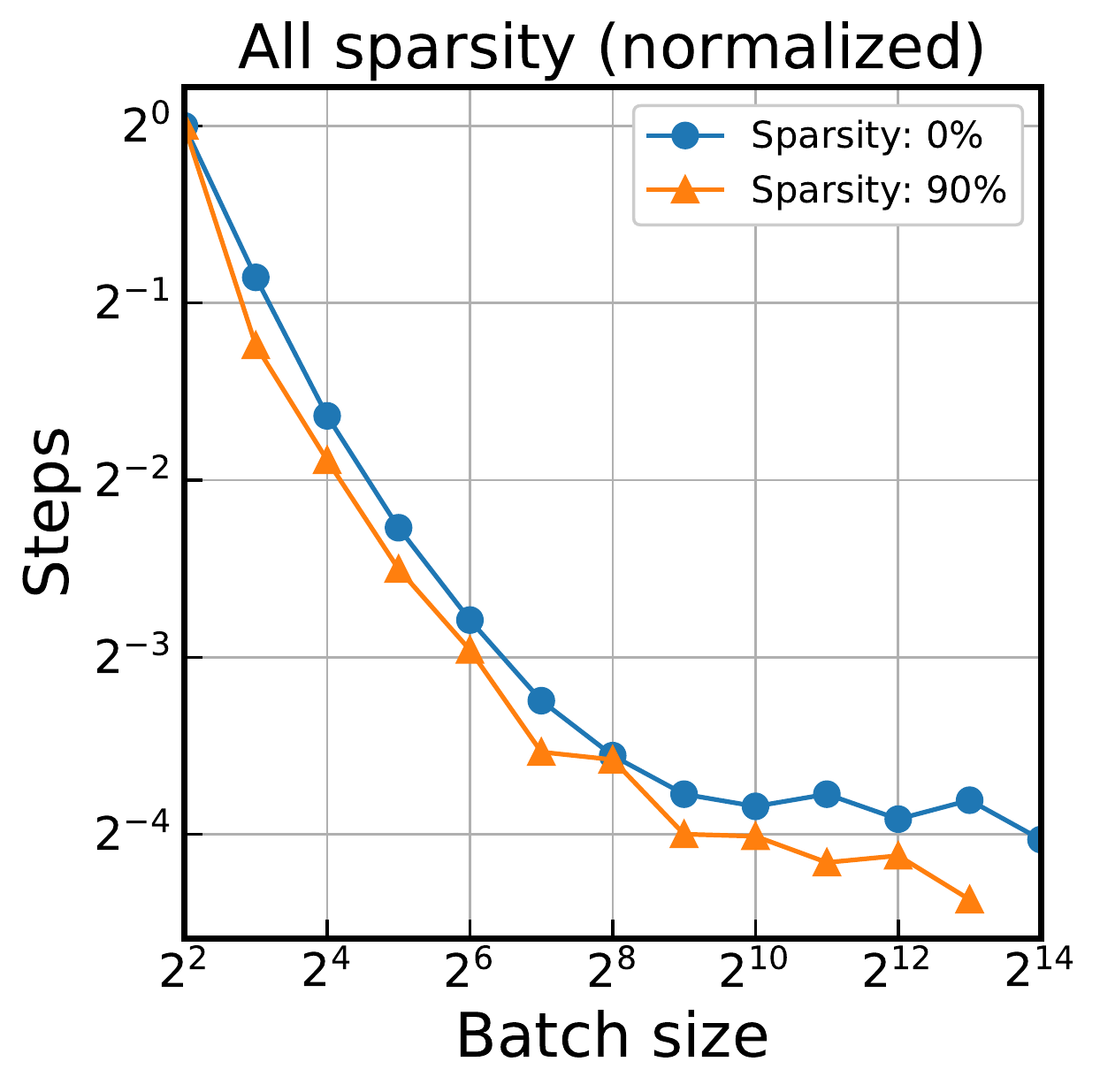}
        \caption{SGD}
    \end{subfigure}
    \begin{subfigure}{.9998\textwidth}
        \centering
        \includegraphics[height=27mm]{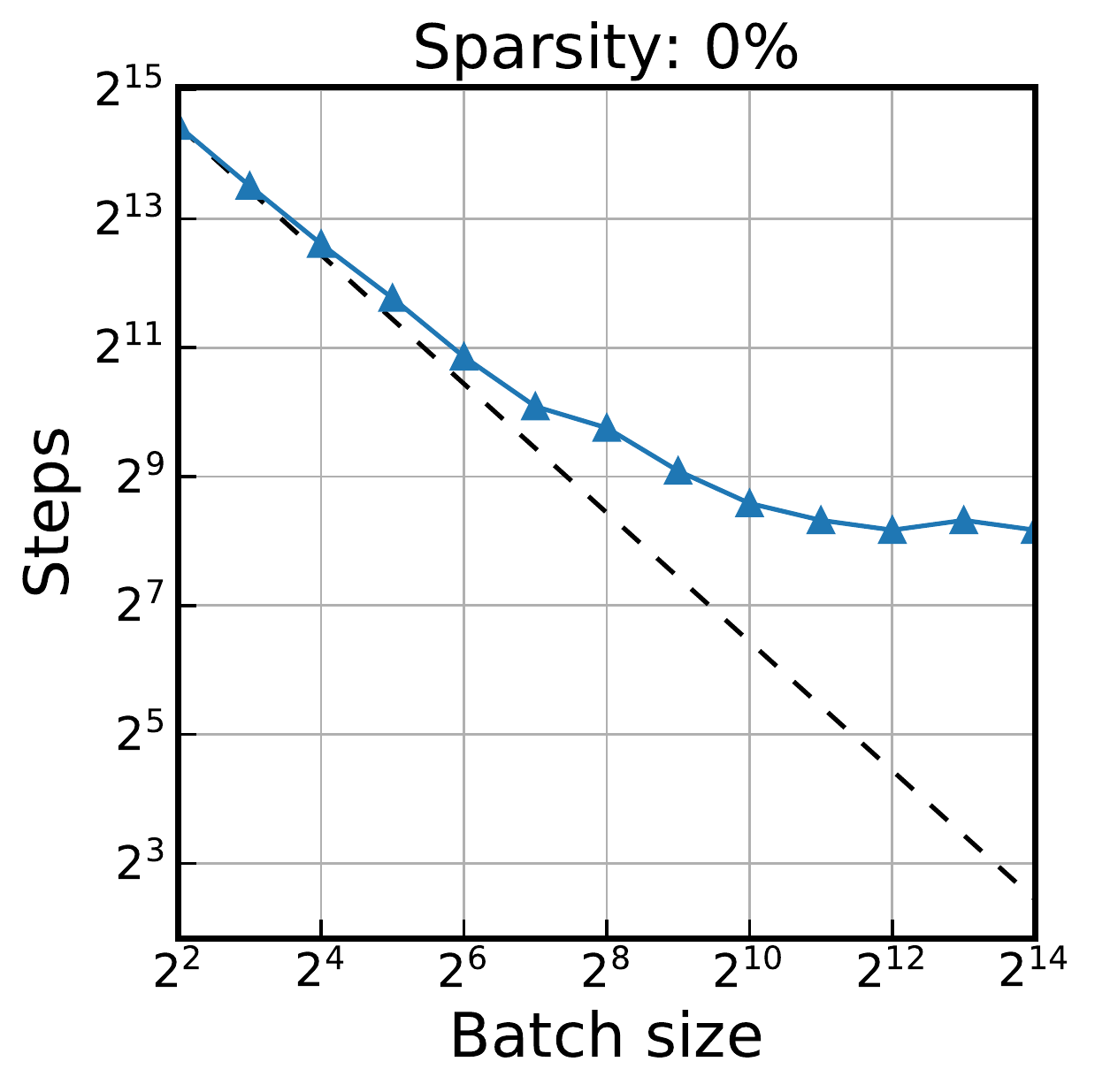}
        \includegraphics[height=27mm]{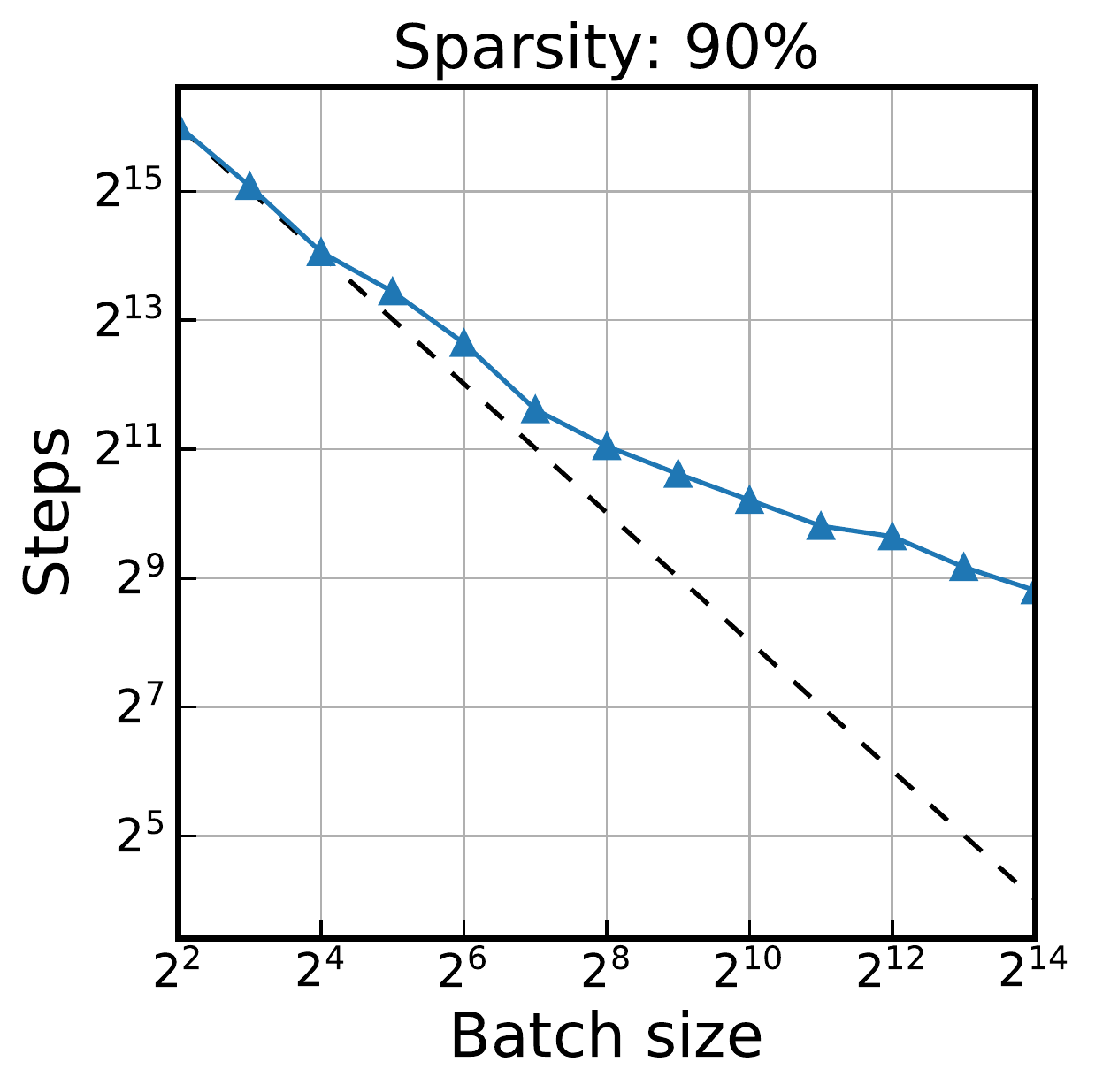}
        \includegraphics[height=27mm]{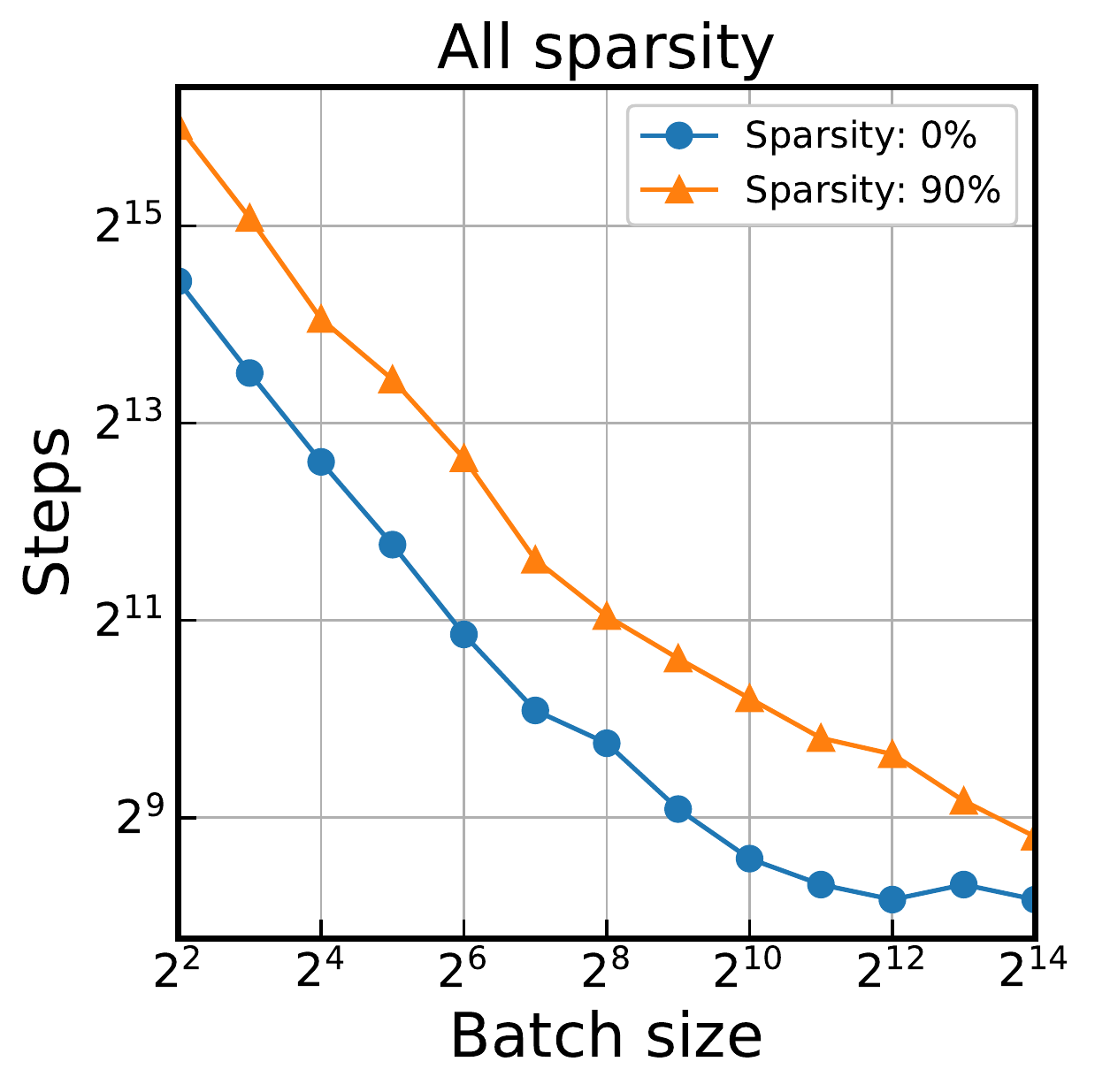}
        \includegraphics[height=27mm]{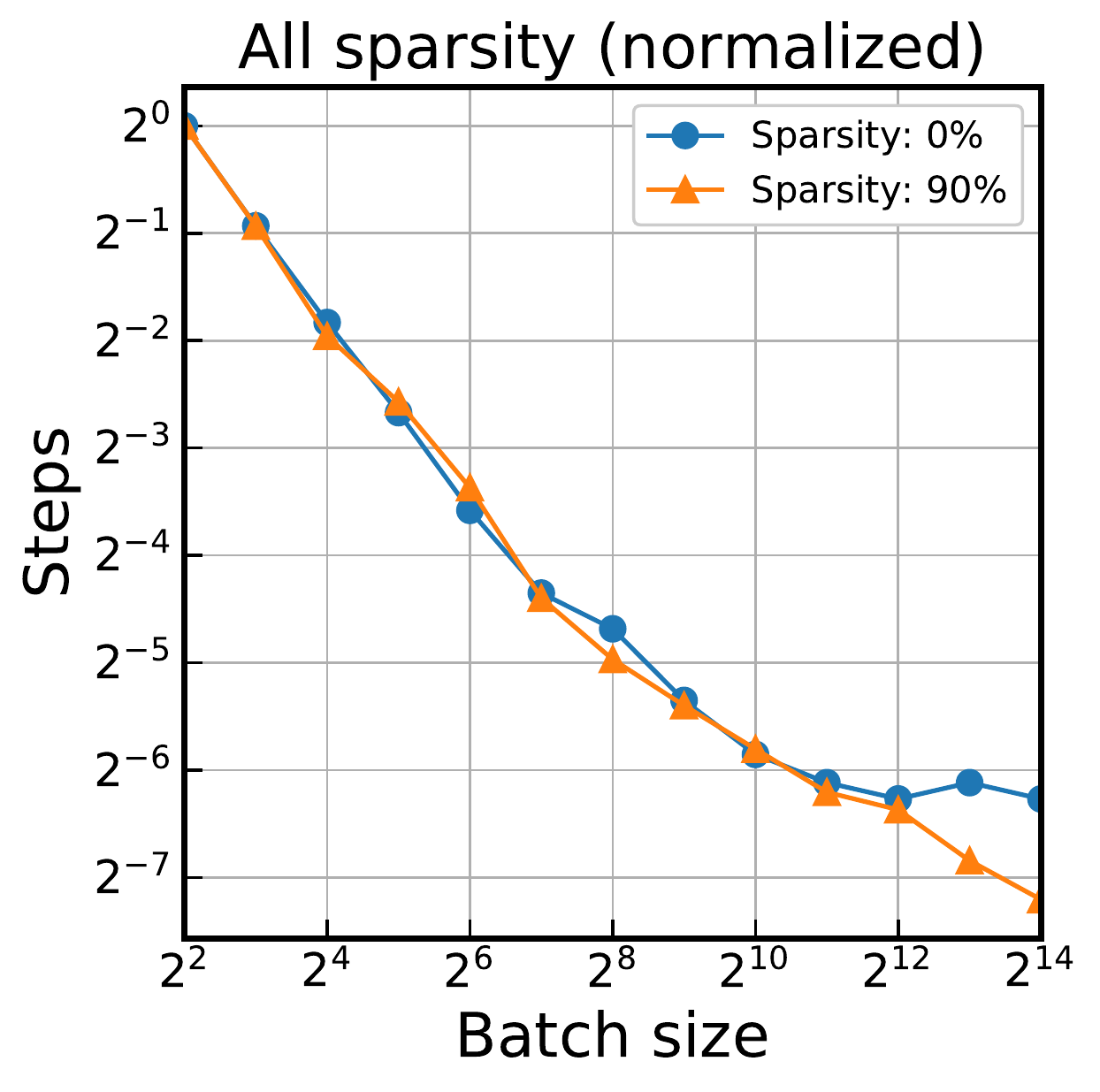}
        \caption{Momentum}
    \end{subfigure}
    \begin{subfigure}{.9998\textwidth}
        \centering
        \includegraphics[height=27mm]{figure/s2r-bs/trends/resnet8-v2-a-nobn-nesterov-linear-goal-error-0.4-ts-0.0-eps-converted-to}
        \includegraphics[height=27mm]{figure/s2r-bs/trends/resnet8-v2-a-nobn-nesterov-linear-goal-error-0.4-ts-0.9-eps-converted-to}
        \includegraphics[height=27mm]{figure/s2r-bs/combinations/resnet8-v2-a-nobn-nesterov-linear-goal-error-0.4-var-sparsity-eps-converted-to}
        \includegraphics[height=27mm]{figure/s2r-bs/combinations/resnet8-v2-a-nobn-nesterov-linear-goal-error-0.4-var-sparsity-normalized-eps-converted-to}
        \caption{Nesterov}
    \end{subfigure}
    \caption{
        Results for the effects of data parallelism for the workloads of \{CIFAR-10, ResNet-8, SGD/Momentum/Nesterov\} with a linear learning rate decay.
    }
    \label{fig:edp-cifar}
\end{figure}

\begin{figure}[t]
    \centering
    \begin{subfigure}{.9998\textwidth}
        \centering
        \includegraphics[height=32mm]{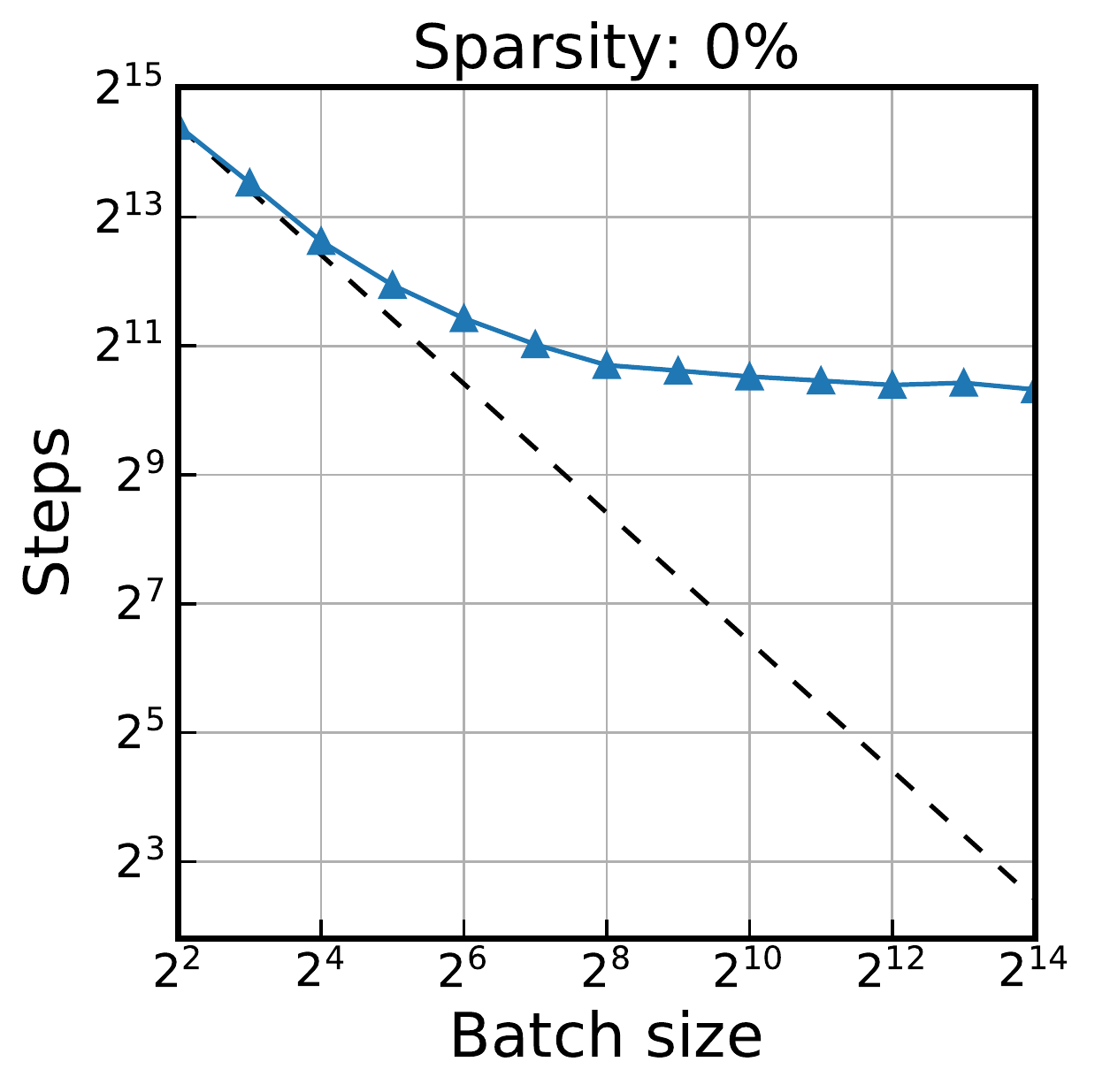}
        \includegraphics[height=32mm]{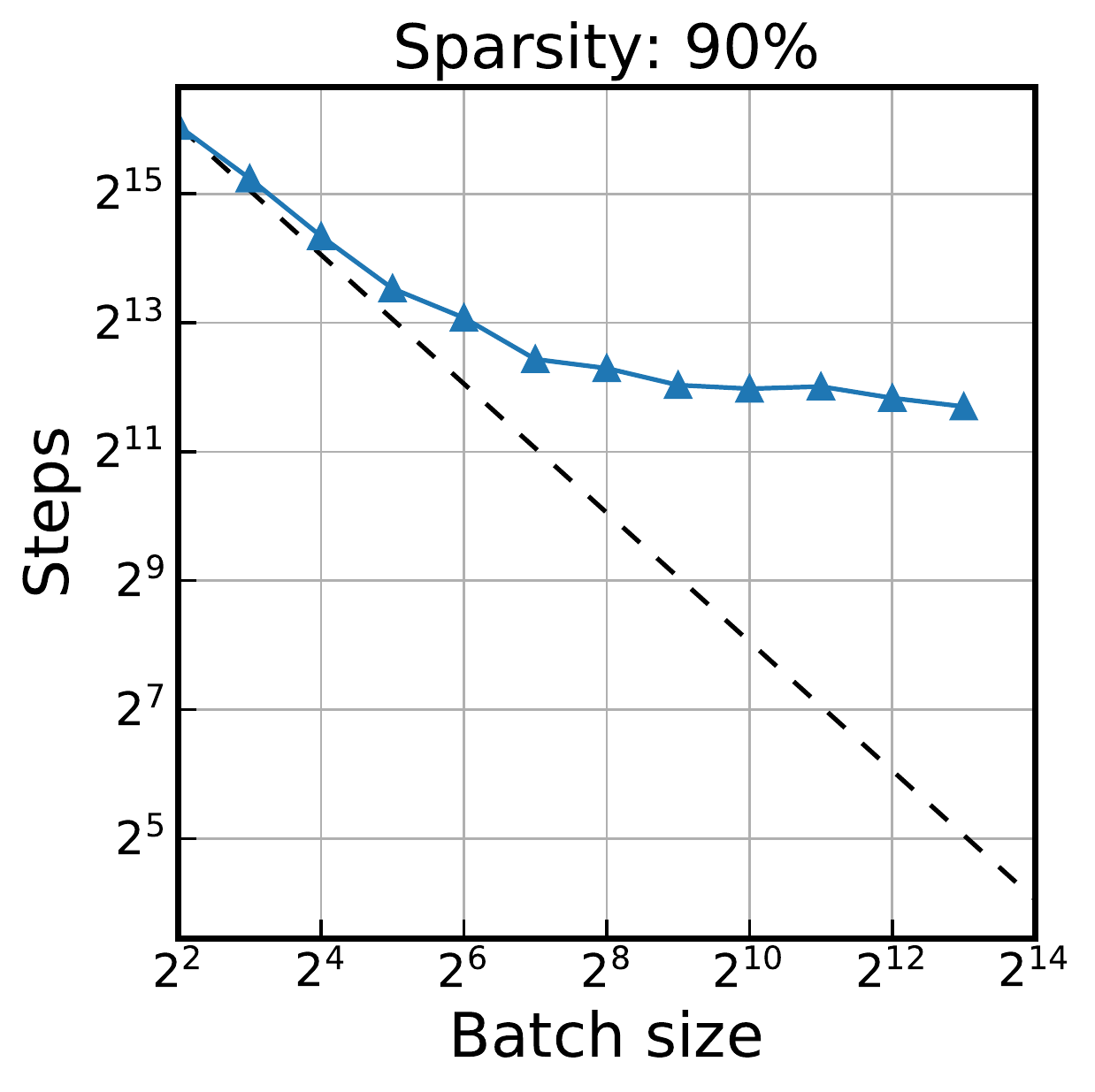}
        \includegraphics[height=32mm]{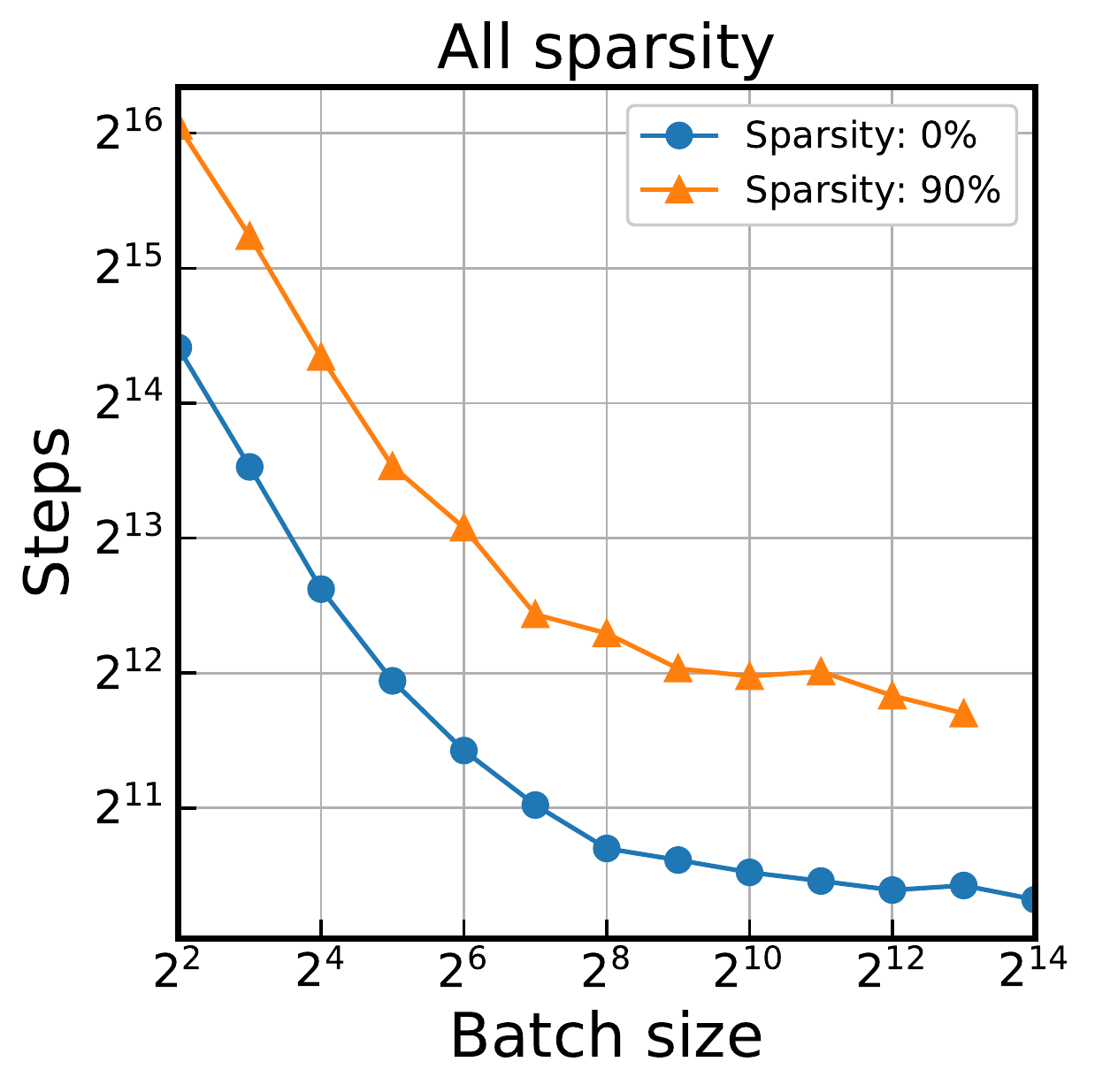}
        \includegraphics[height=32mm]{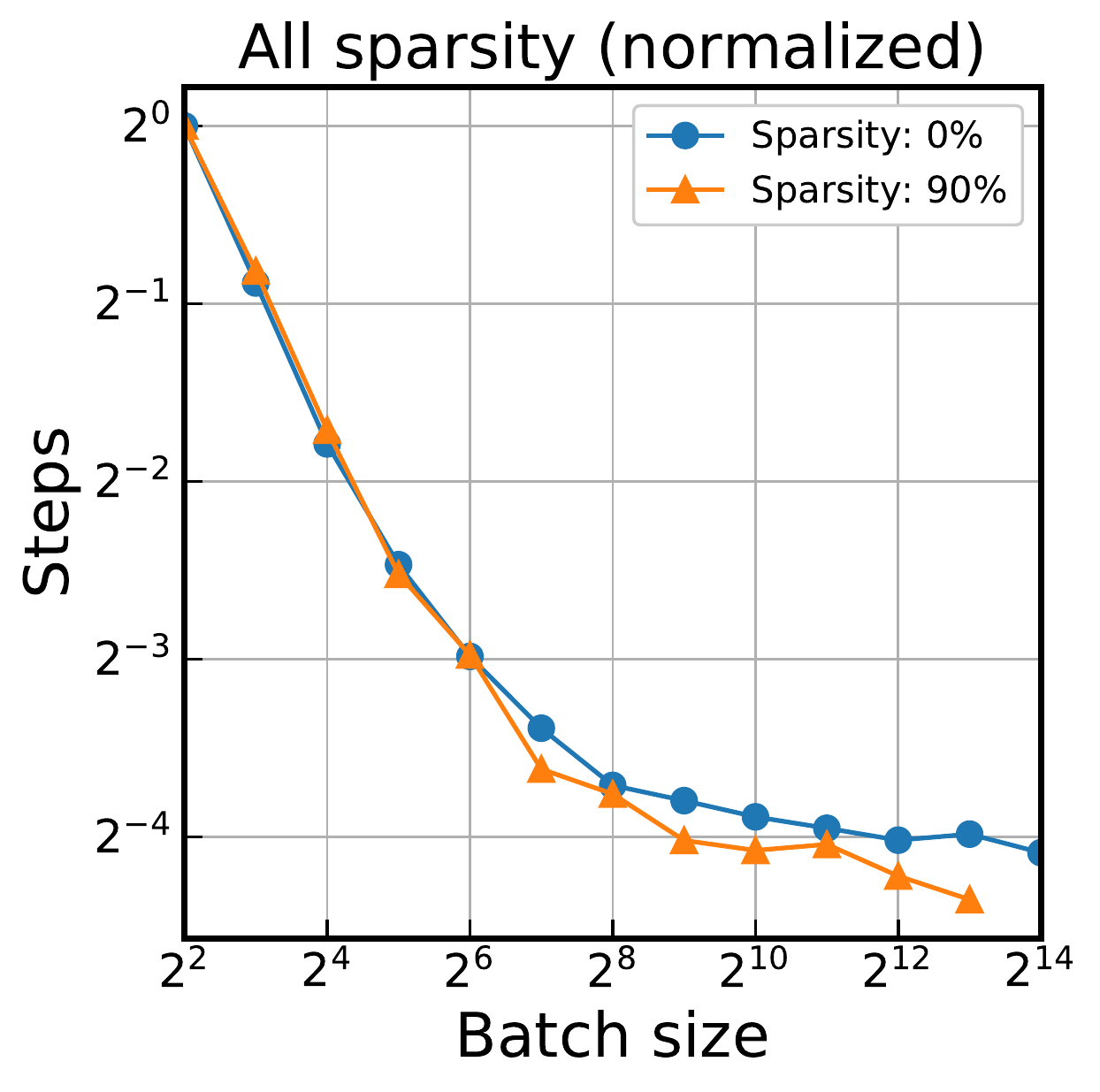}
        \caption{SGD}
    \end{subfigure}
    \begin{subfigure}{.9998\textwidth}
        \centering
        \includegraphics[height=32mm]{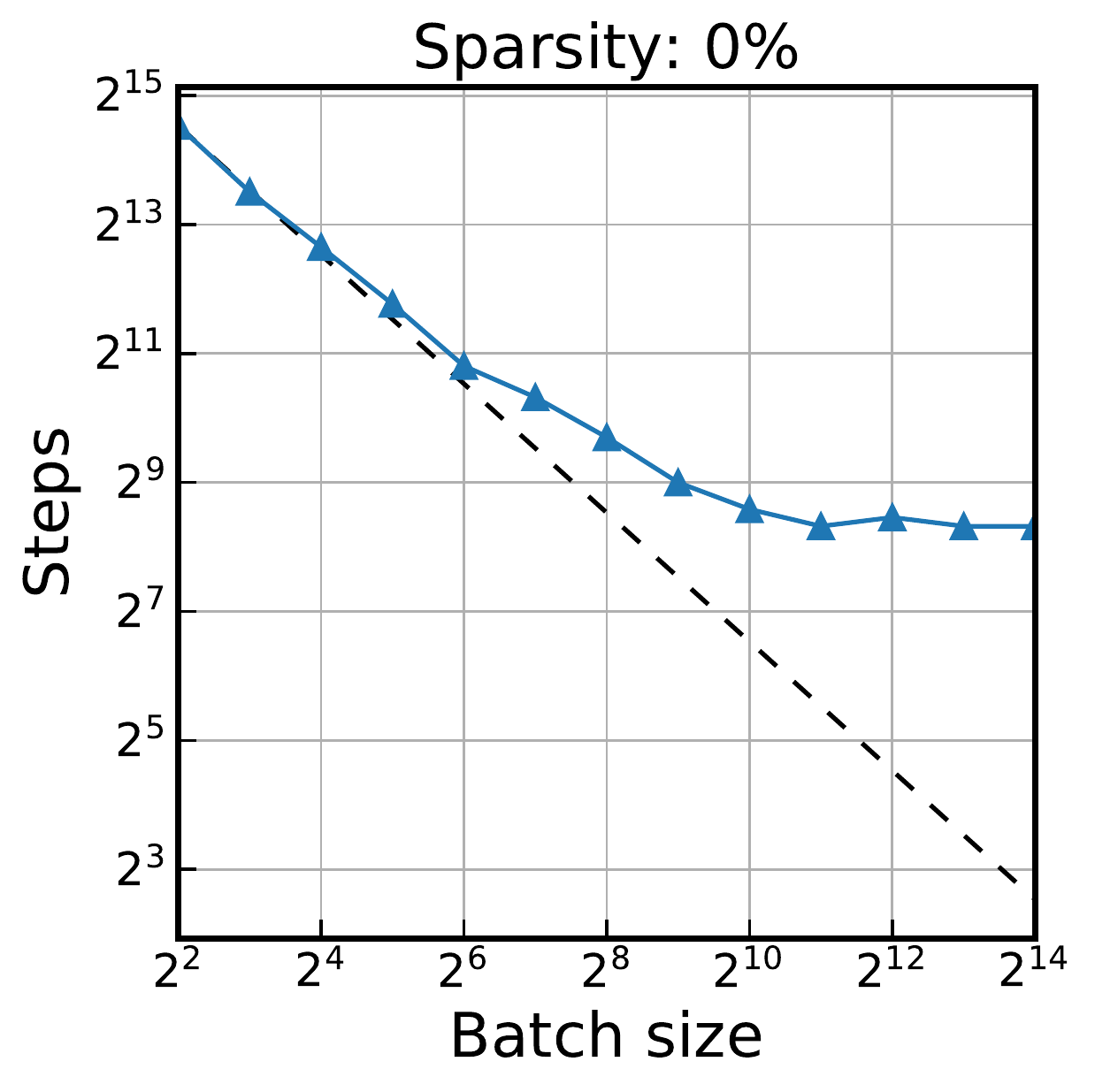}
        \includegraphics[height=32mm]{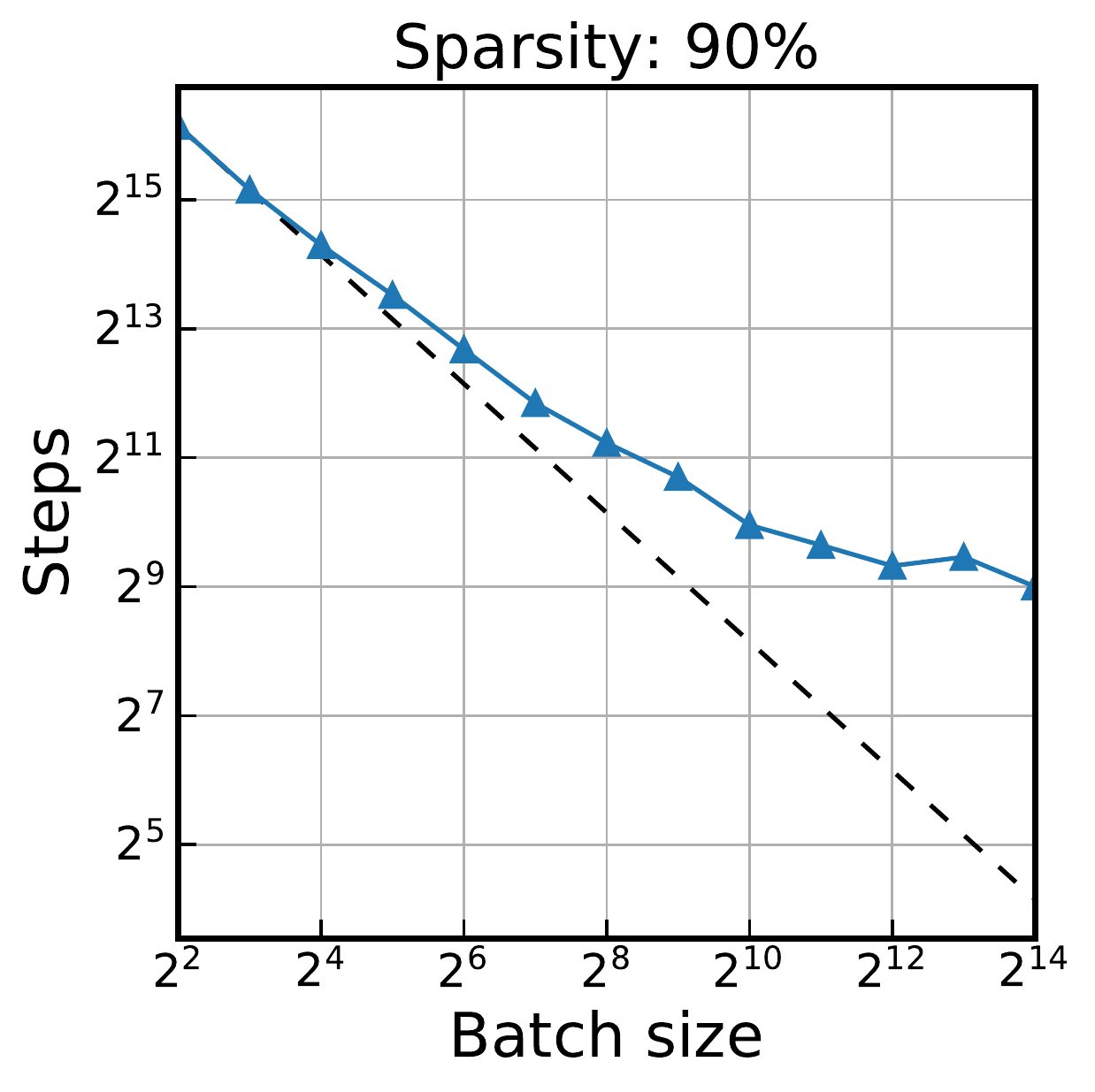}
        \includegraphics[height=32mm]{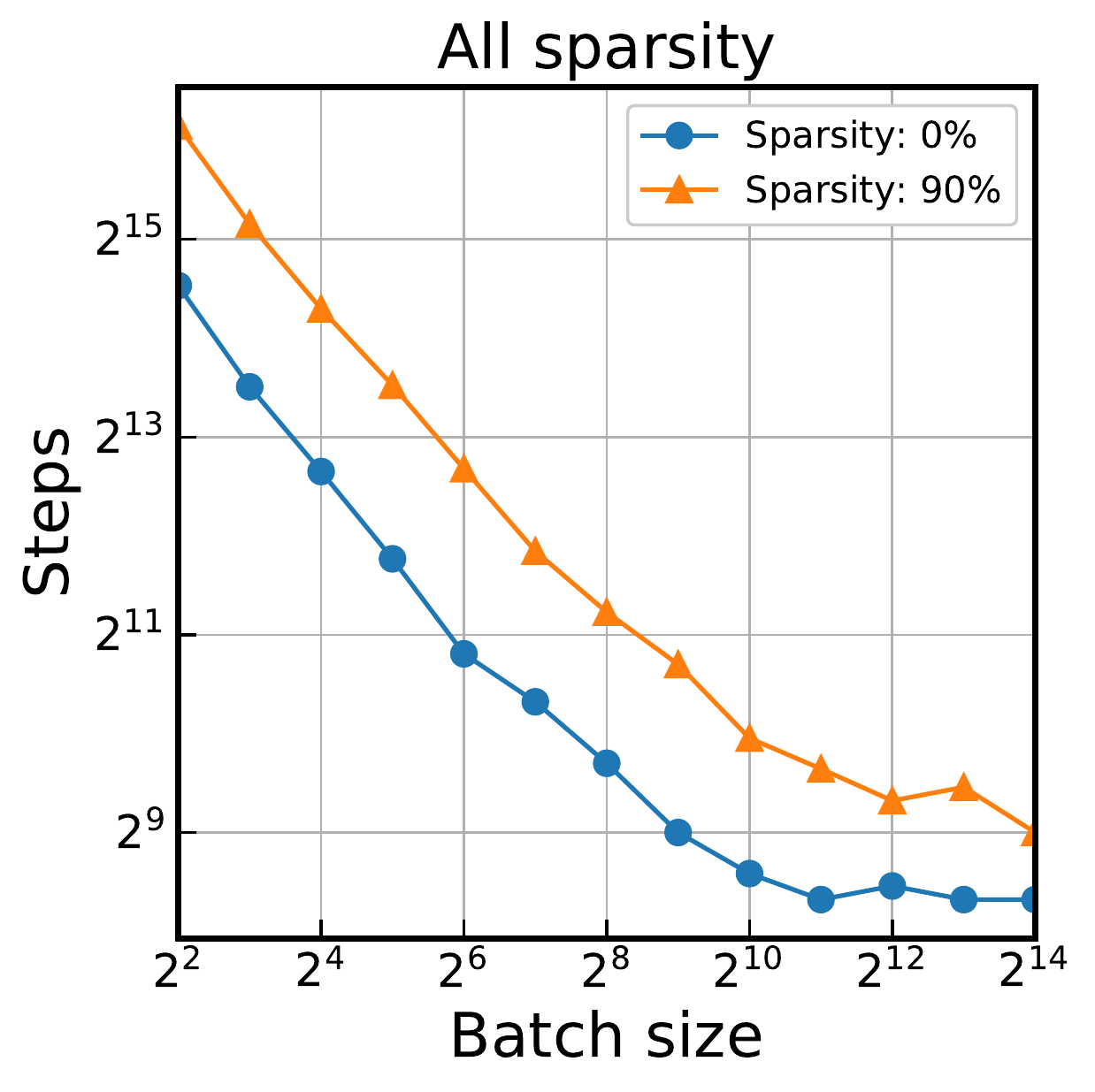}
        \includegraphics[height=32mm]{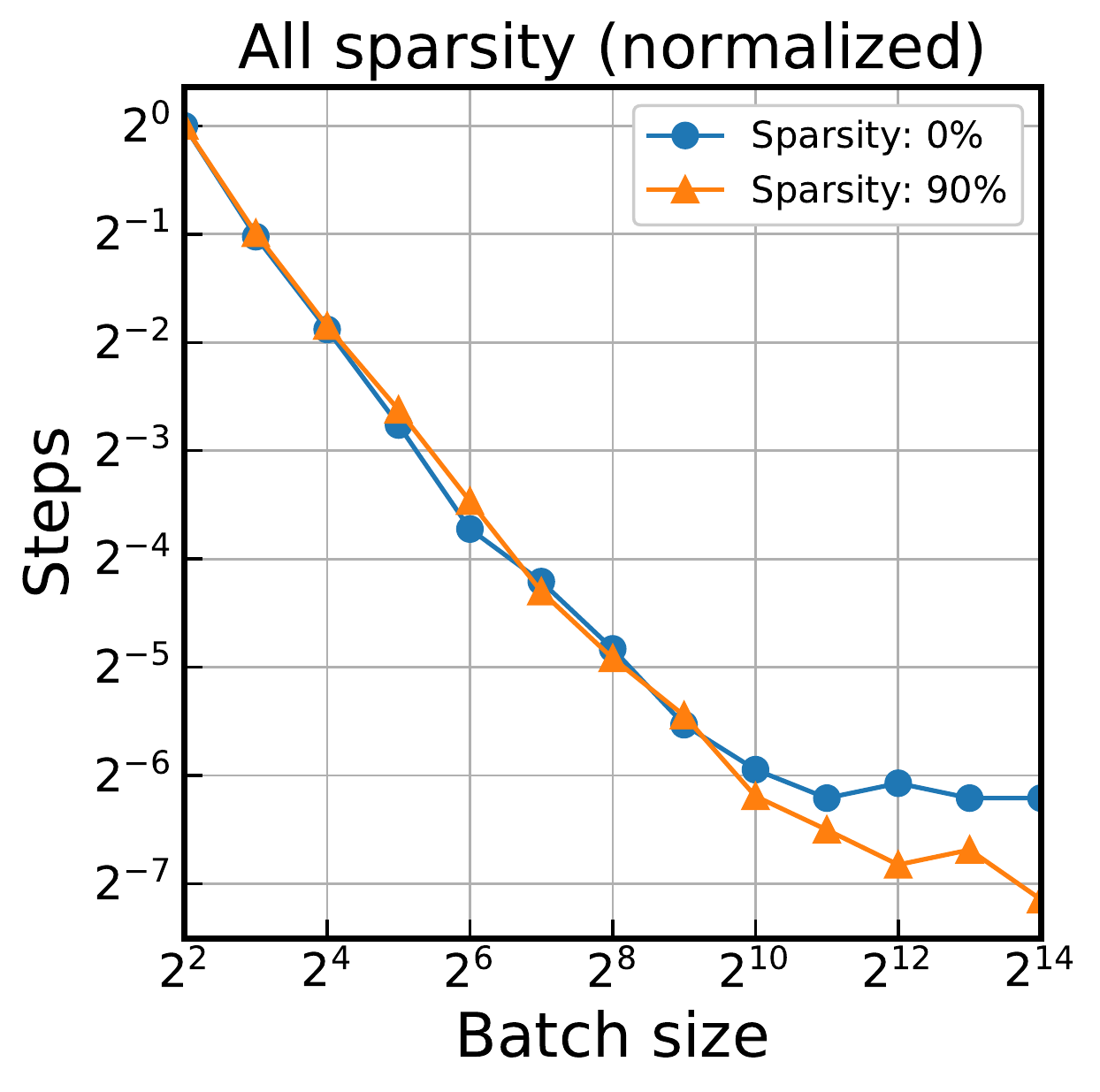}
        \caption{Momentum}
    \end{subfigure}
    \caption{
        Results for the effects of data parallelism for the workloads of \{CIFAR-10, ResNet-8, SGD/Momentum\} with a constant learning rate.
    }
    \label{fig:edp-cifar-constant}
\end{figure}

\begin{figure}[h!]
    \centering
    \begin{subfigure}{.9998\textwidth}
        \centering
        \includegraphics[height=33mm]{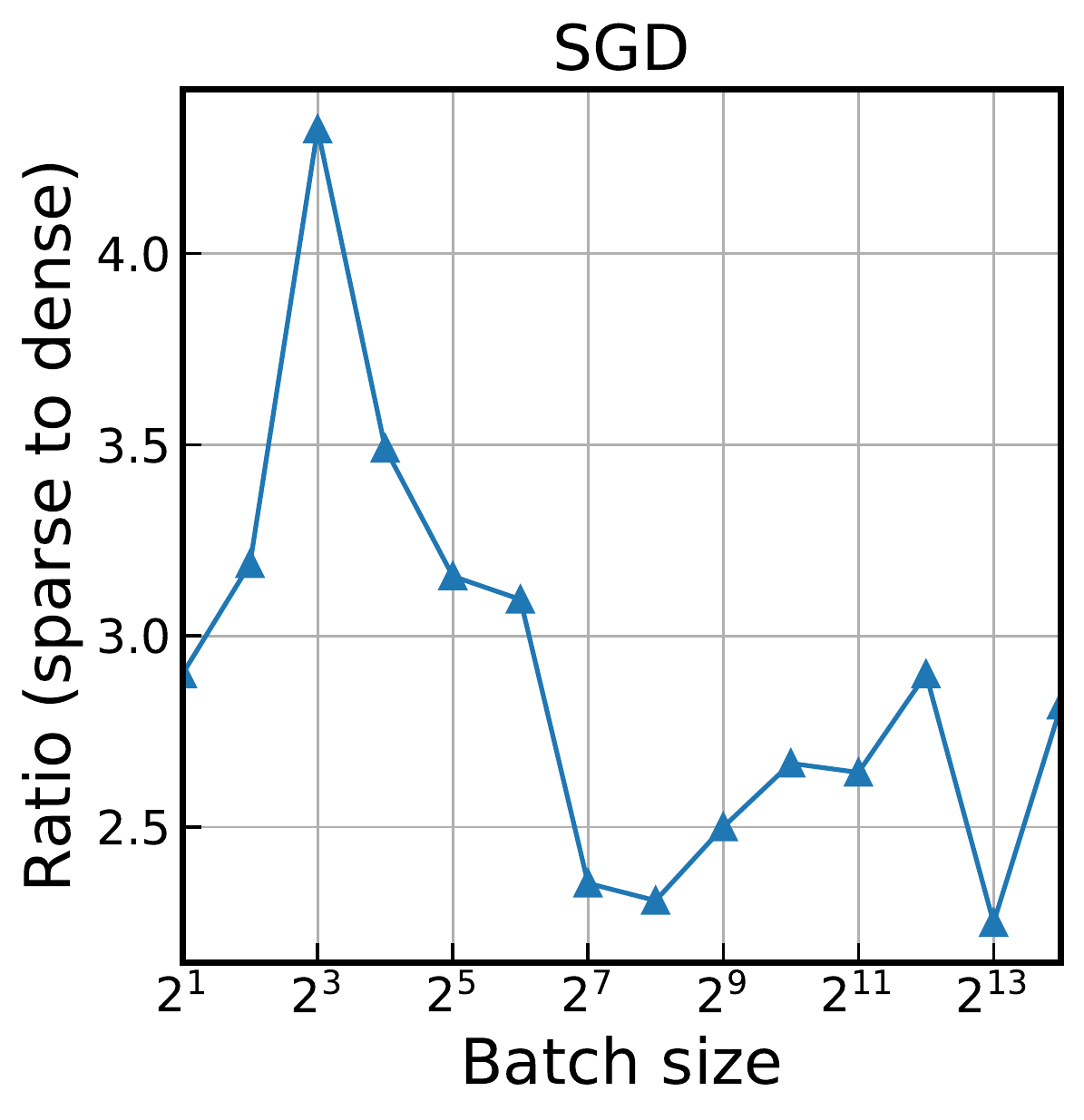}
        \includegraphics[height=33mm]{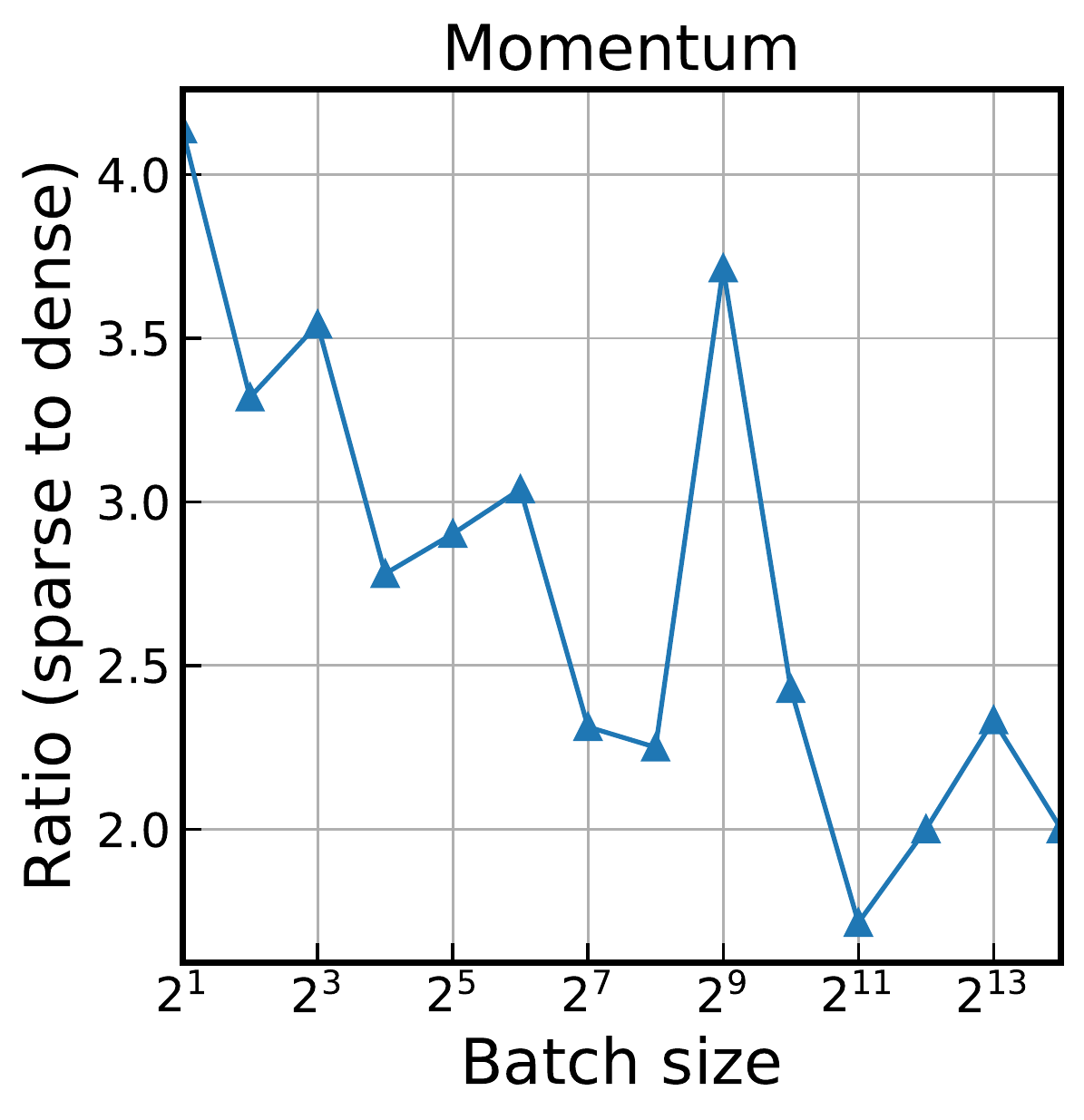}
        \includegraphics[height=33mm]{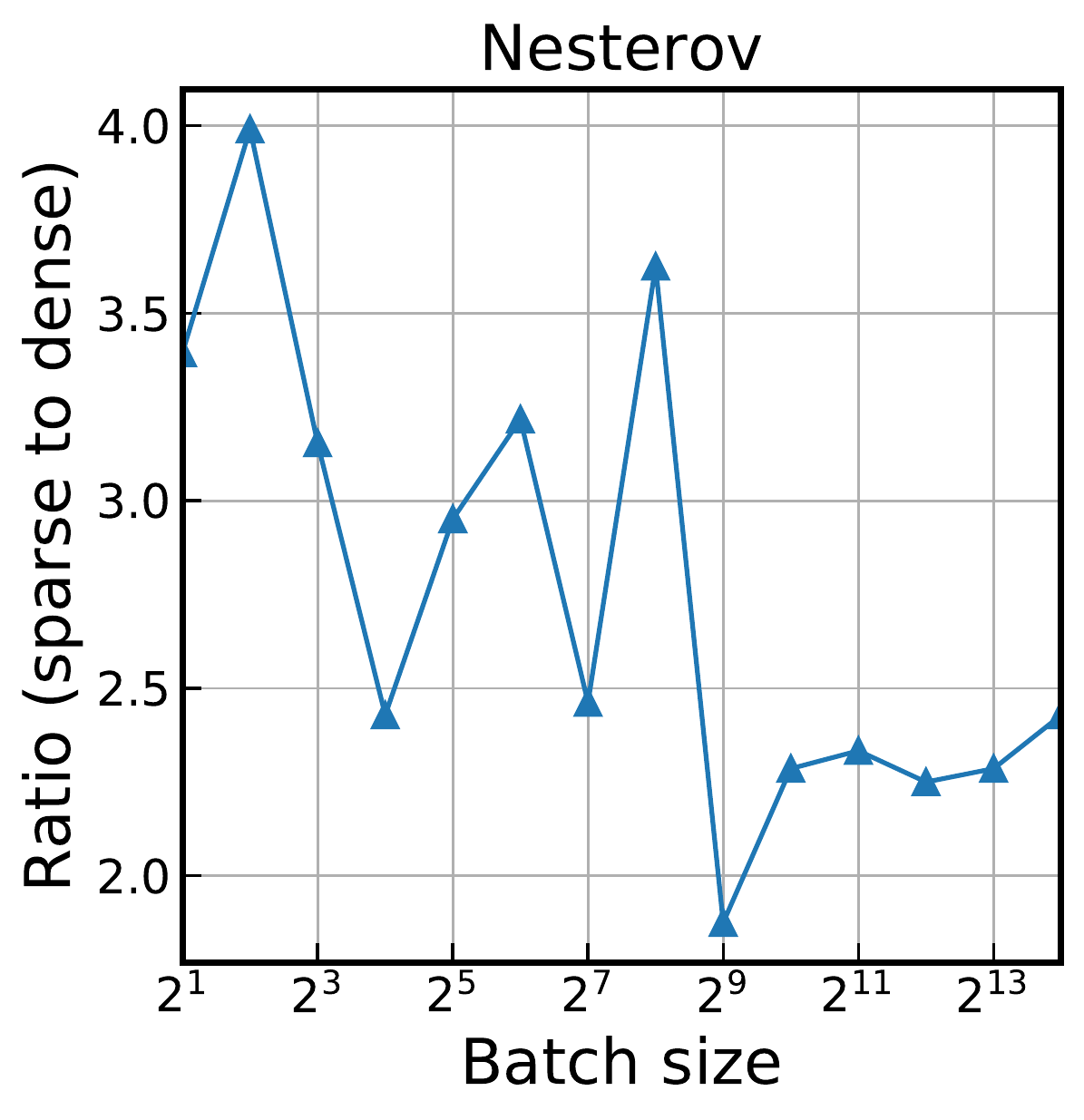}
        \caption{MNIST, Simple-CNN, SGD/Momentum/Nesterov with a constant learning rate}
    \end{subfigure}
    \begin{subfigure}{.9998\textwidth}
        \centering
        \includegraphics[height=33mm]{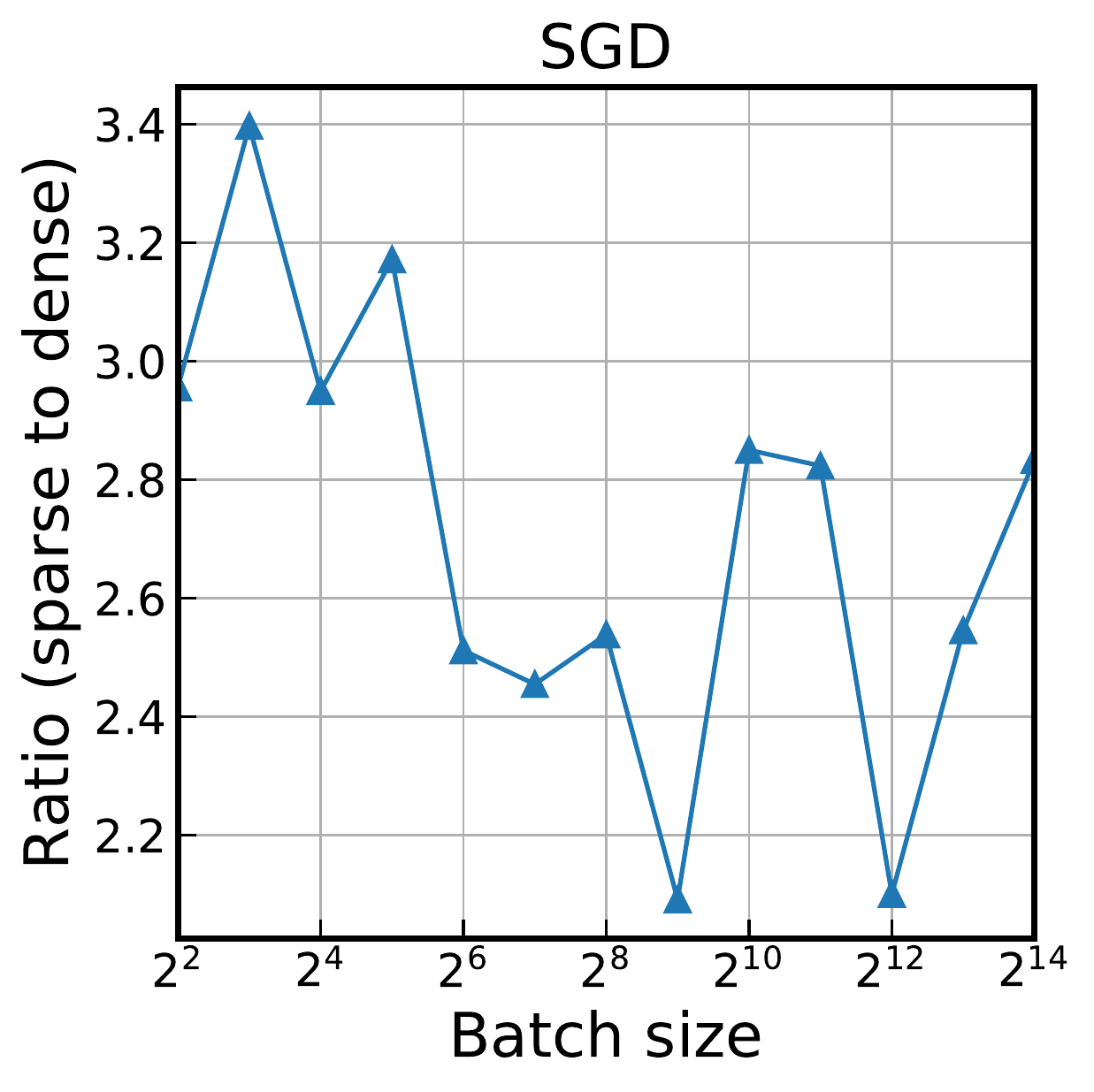}
        \includegraphics[height=33mm]{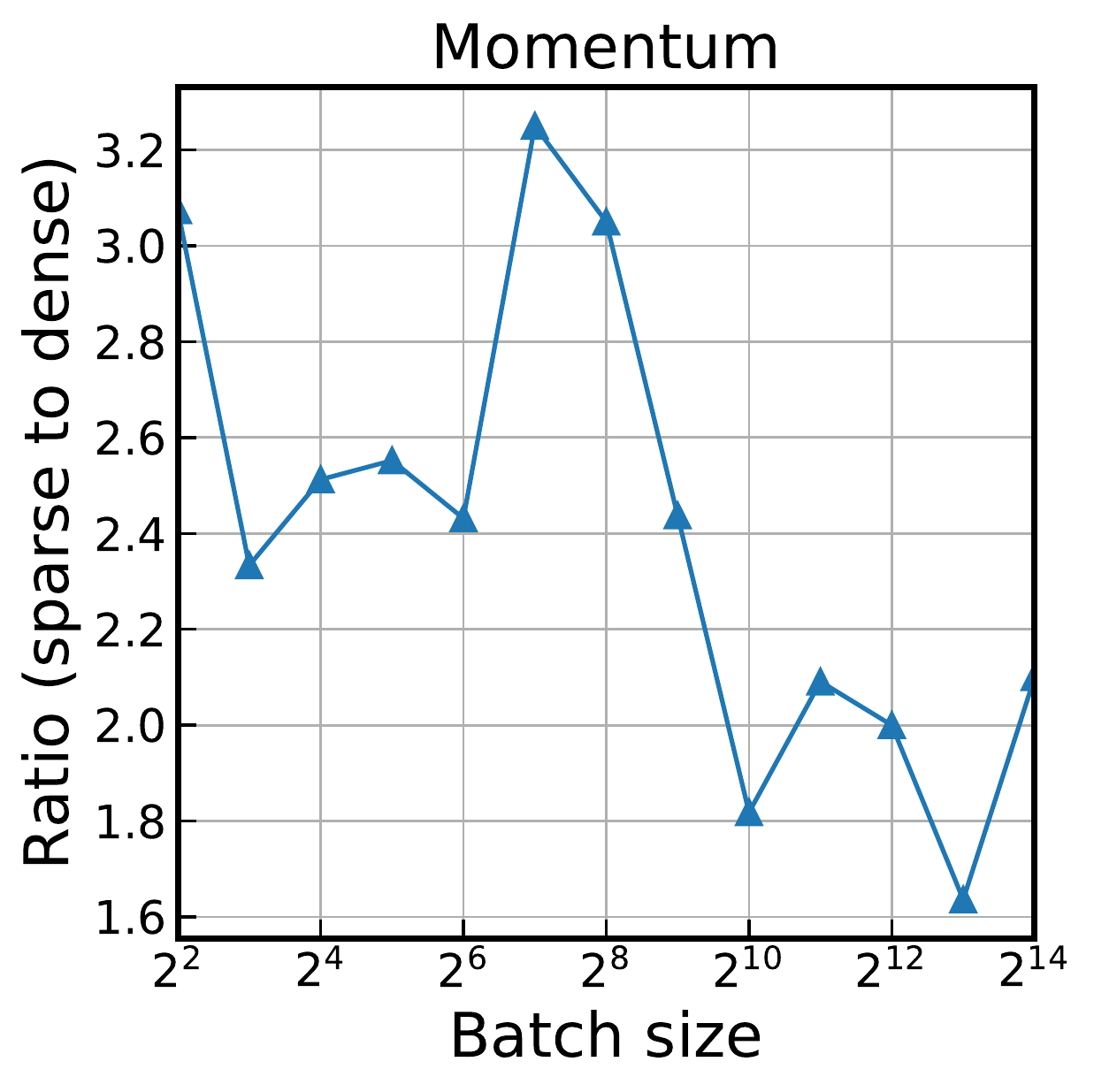}
        \includegraphics[height=33mm]{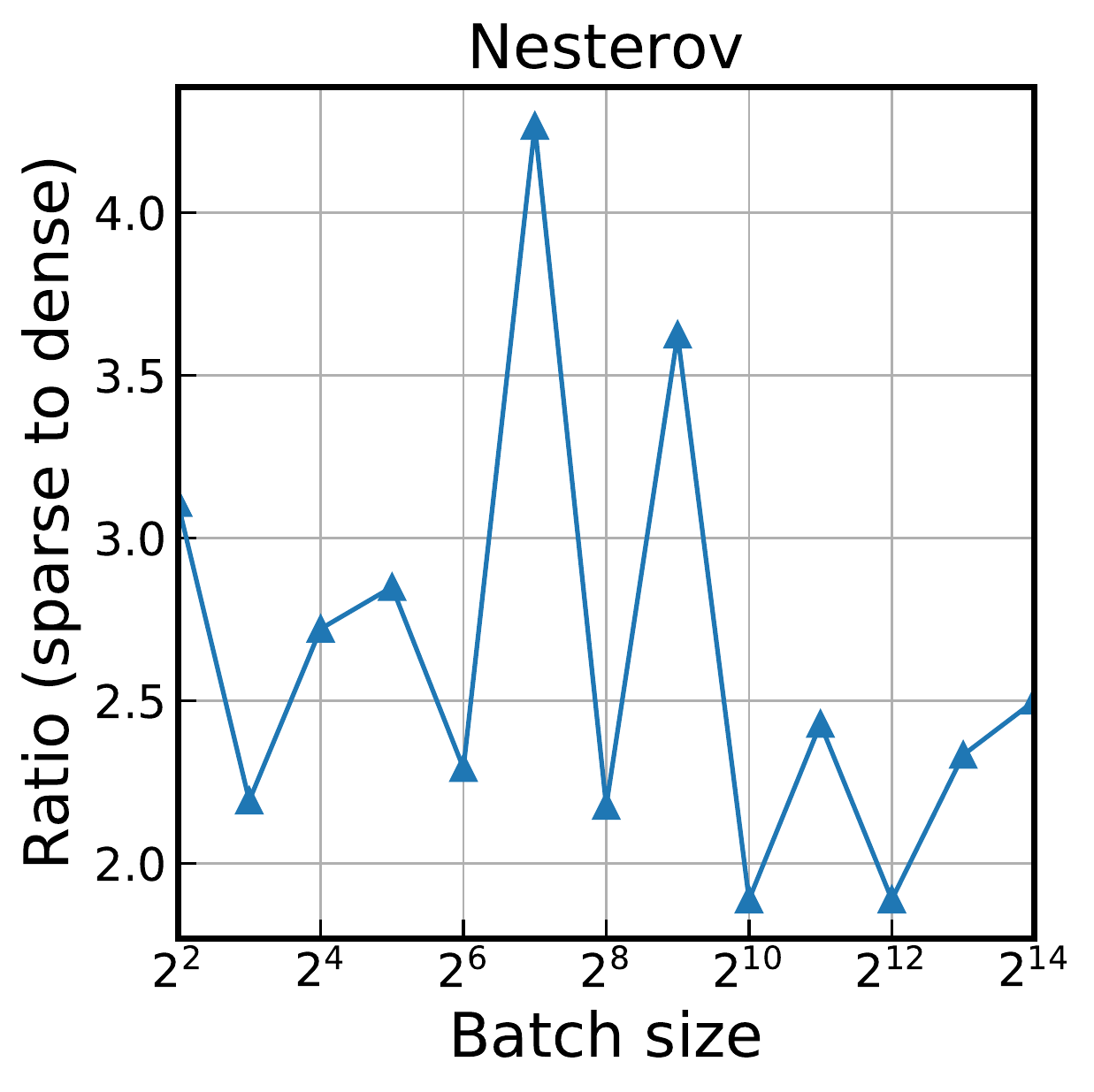}
        \caption{Fashion-MNIST, Simple-CNN, SGD/Momentum/Nesterov with a constant learning rate}
    \end{subfigure}
    \begin{subfigure}{.9998\textwidth}
        \centering
        \includegraphics[height=33mm]{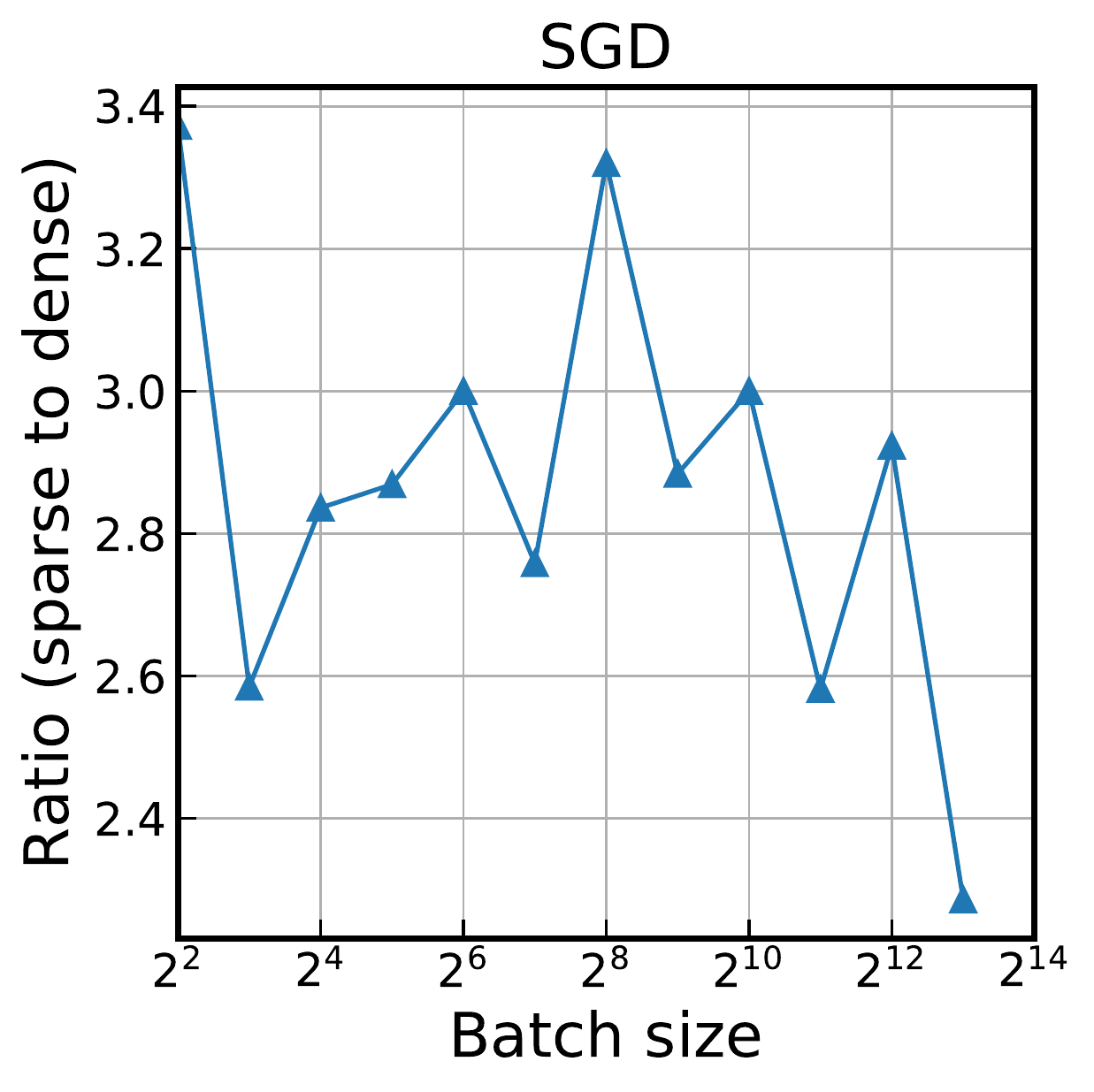}
        \includegraphics[height=33mm]{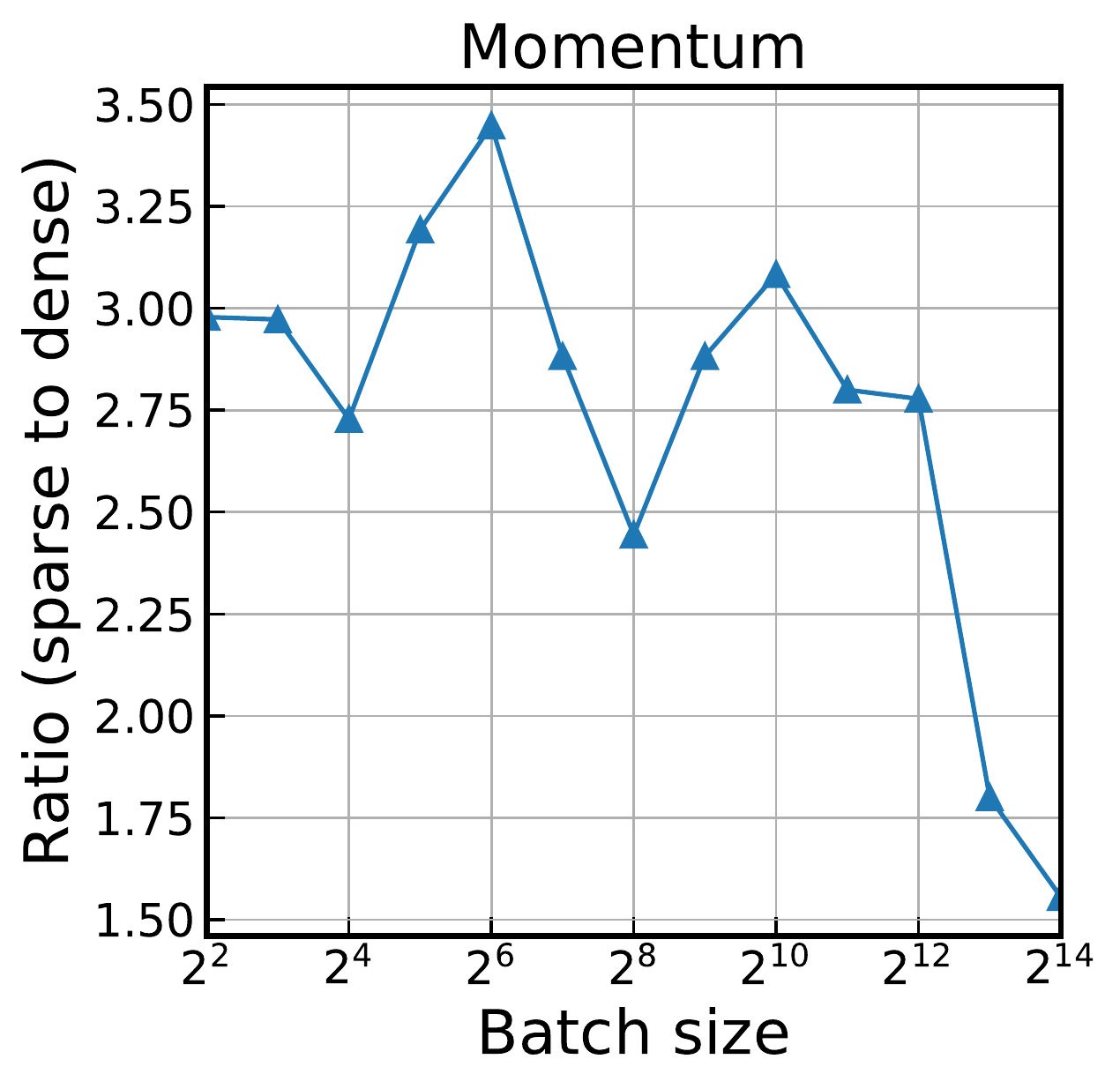}
        \includegraphics[height=33mm]{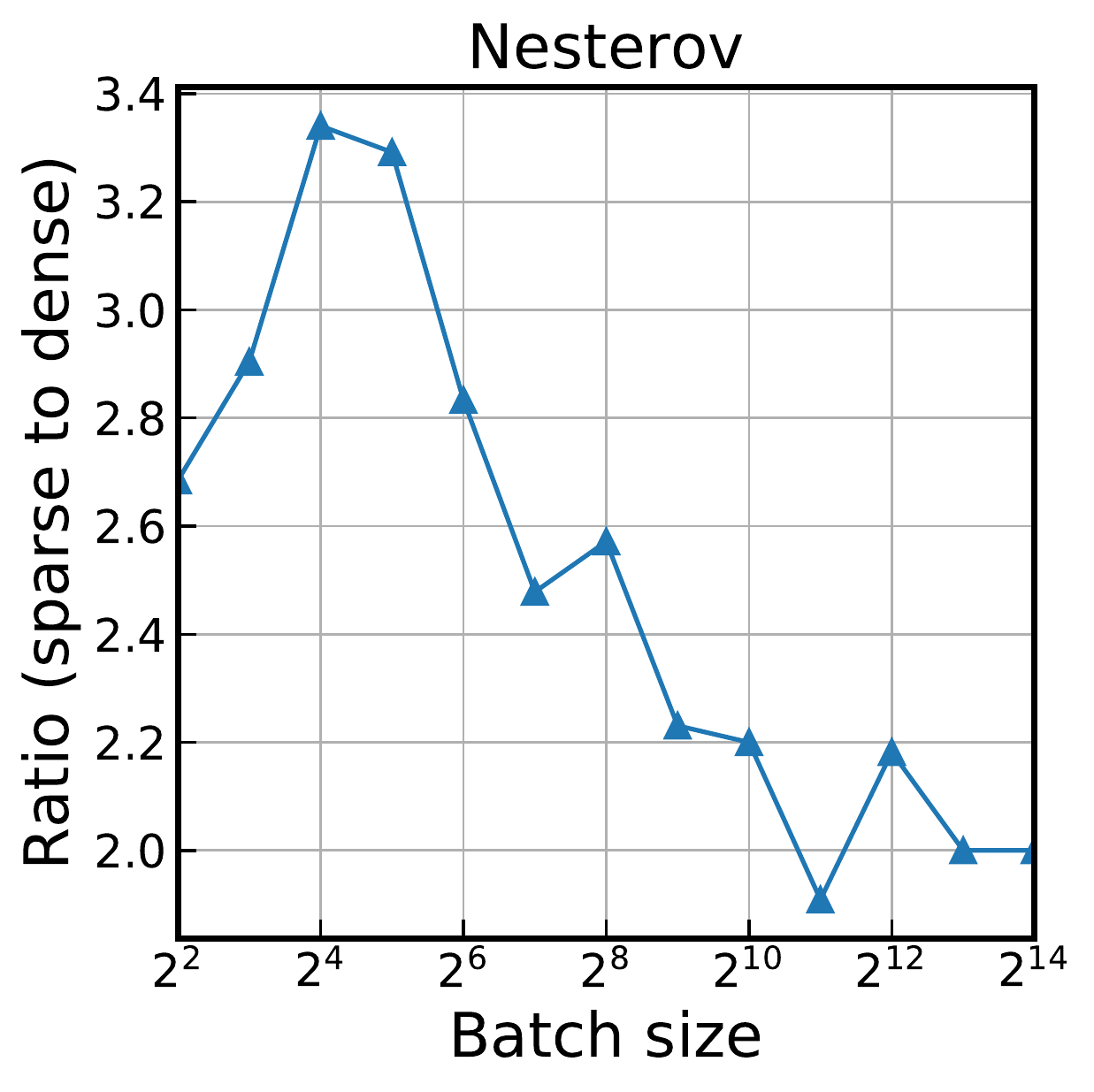}
        \caption{CIFAR-10, ResNet-8, SGD/Momentum/Nesterov with a linear learning rate decay}
    \end{subfigure}
    \begin{subfigure}{.9998\textwidth}
        \centering
        \includegraphics[height=33mm]{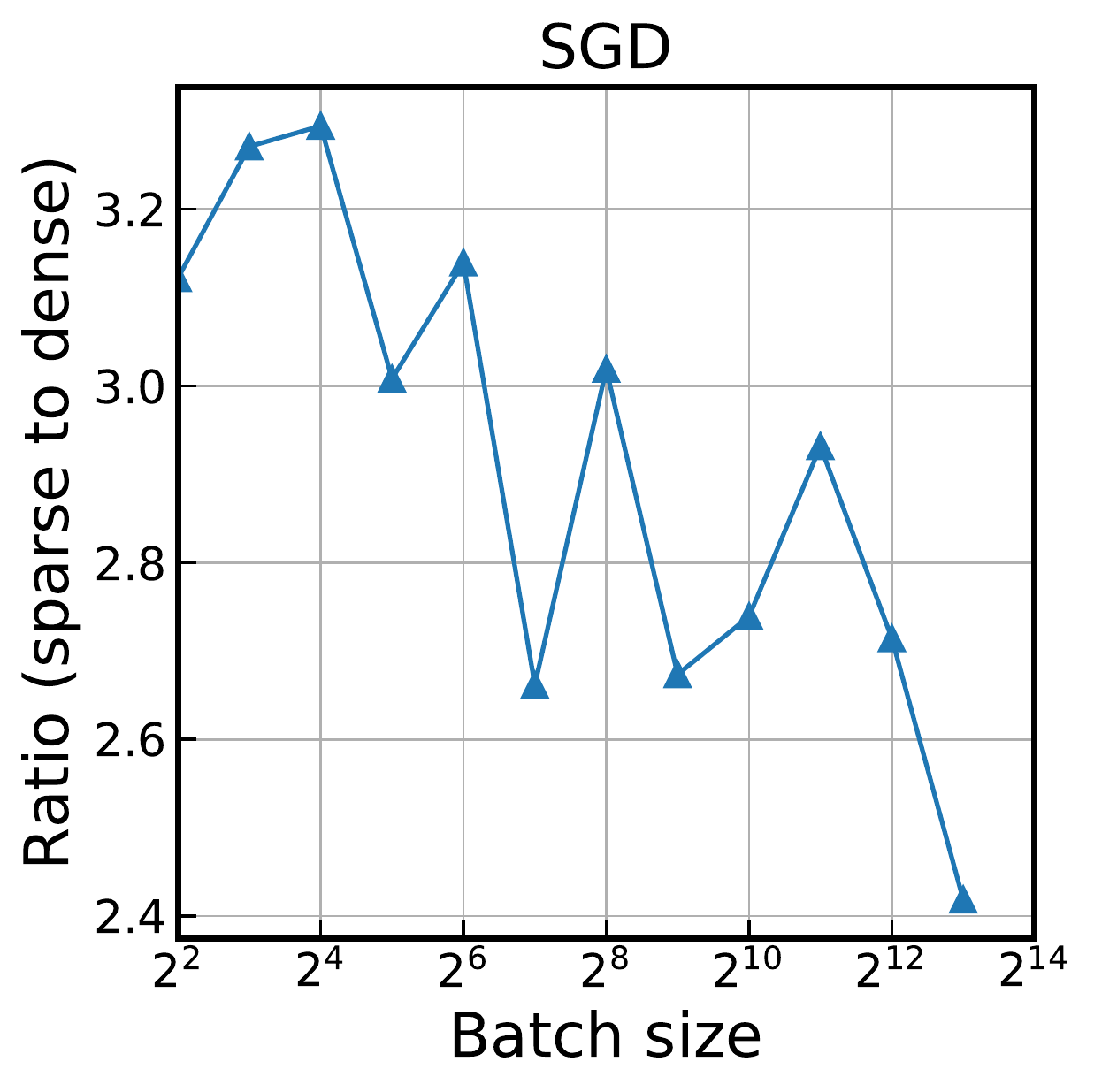}
        \includegraphics[height=33mm]{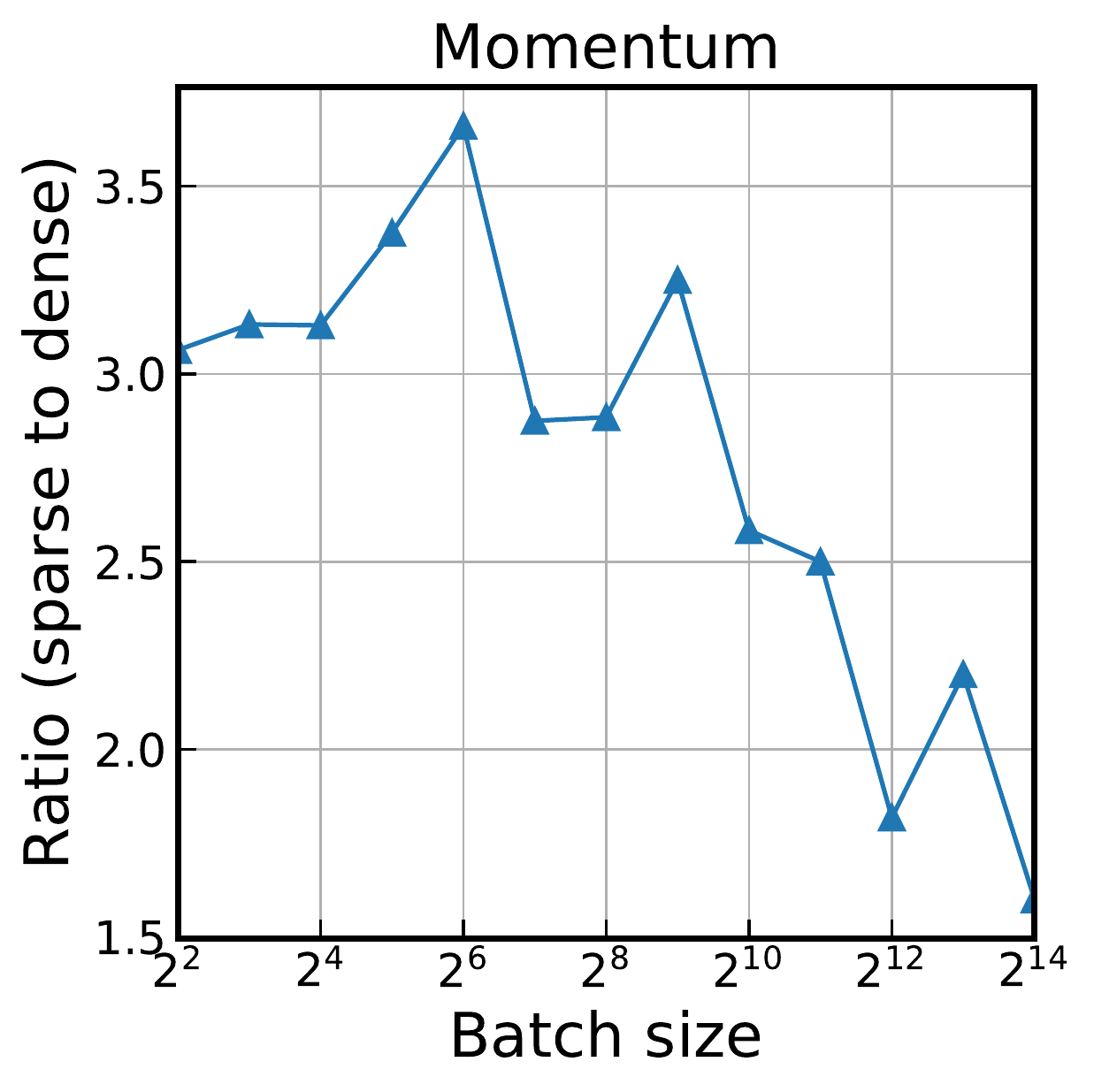}
        \caption{CIFAR-10, ResNet-8, SGD/Momentum with a constant learning rate}
    \end{subfigure}
    \caption{
        Differences in ratio between ($90$\%) sparse network's steps-to-result to dense network's, across different batch sizes for all workloads presented in this work.
        The difference ranges between (1.5, 4.5) overall.
        Note that the ratio difference $>$ $1$ indicates that it requires more number of training iterations (\ie, steps-to-result) for sparse network compared to dense network.
        Also, the difference seems to decrease as batch size increases, especially for Momentum based optimizers.
        This potentially indicates that sparse neural networks can benefit from large batch training, despite the general difficulty therein.
    }
    \label{fig:edp-ratio}
\end{figure}

\clearpage
\newpage
\subsection{Metaparameter search results}\label{sec:moremparams}

\begin{figure}[h!]
    \centering
    \begin{subfigure}{.9998\textwidth}
        \centering
        \includegraphics[height=26mm]{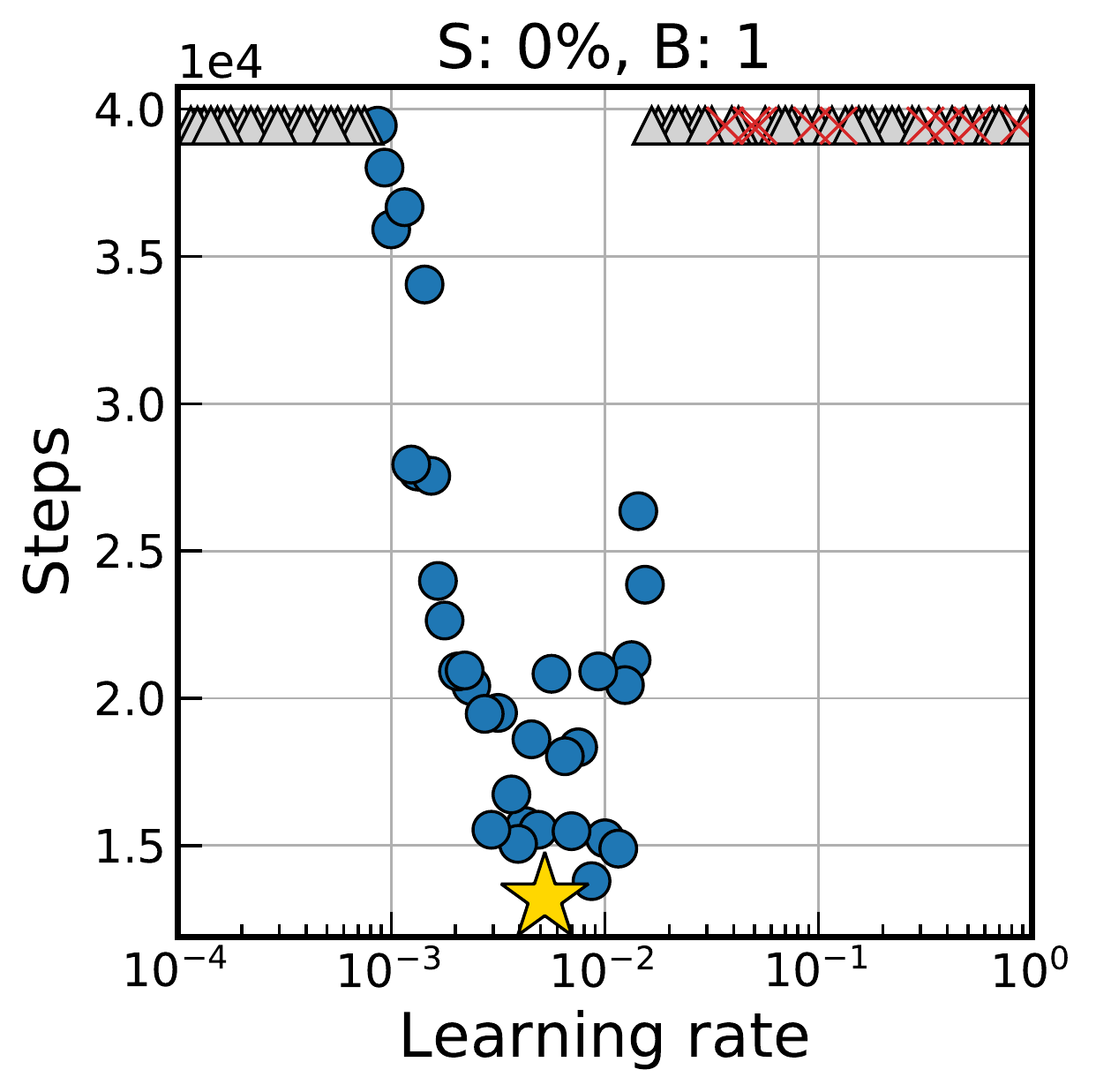}
        \includegraphics[height=26mm]{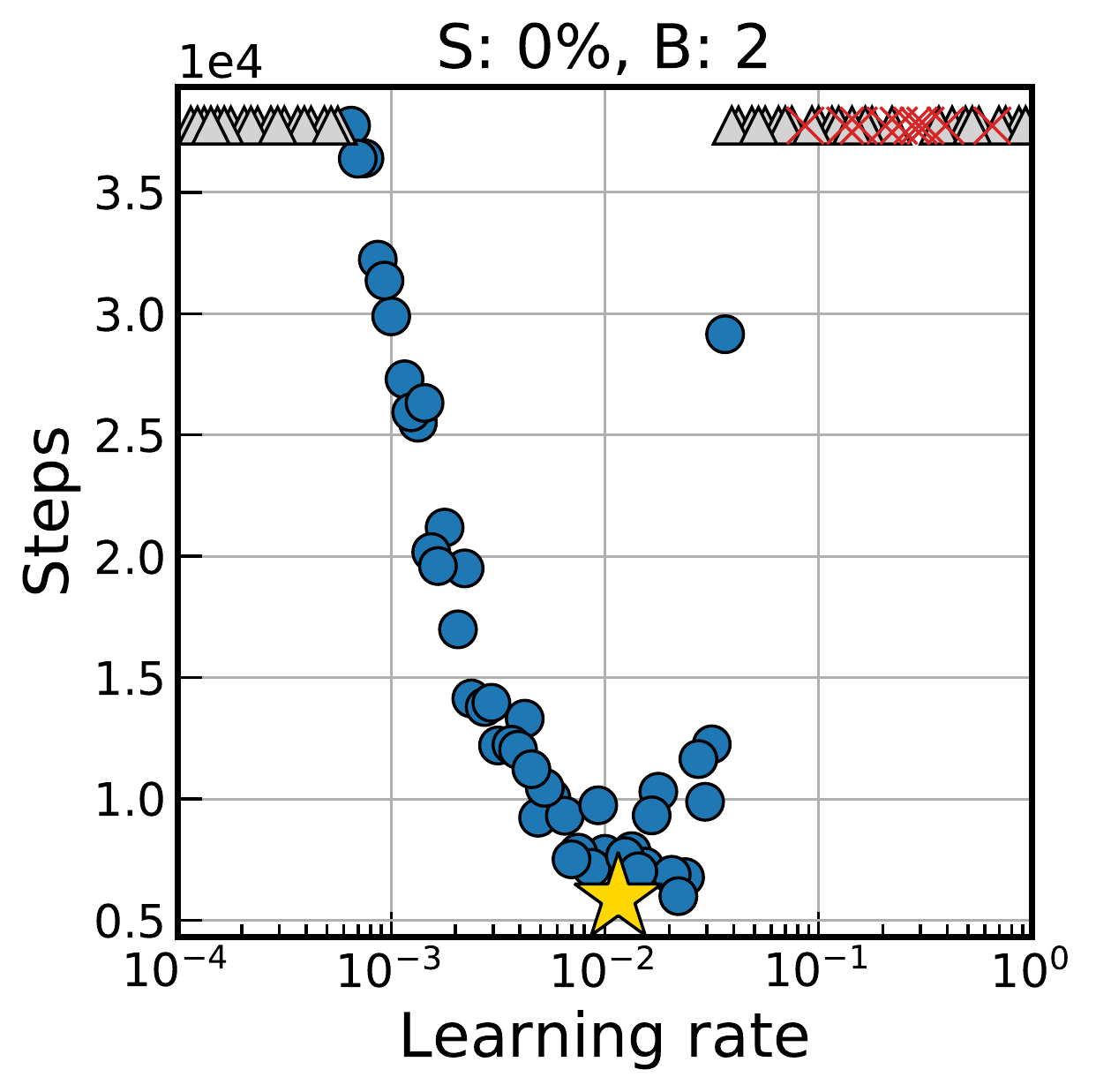}
        \includegraphics[height=26mm]{figure/s2r-metaparameters/simple-cnn-base-sgd-goal-error-0.02-ts-0.0-bs-4-lr-eps-converted-to}
        \includegraphics[height=26mm]{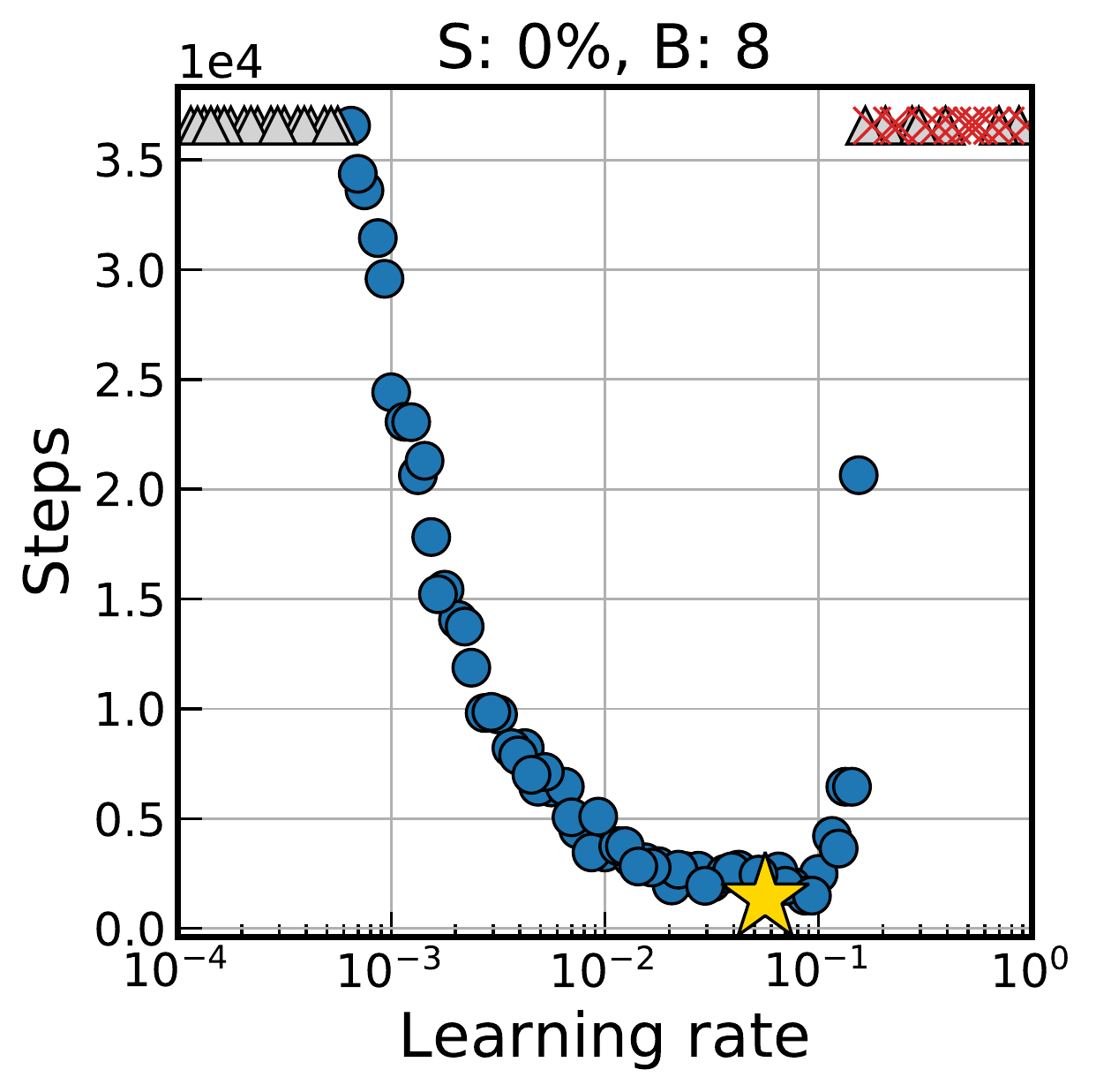}
        \includegraphics[height=26mm]{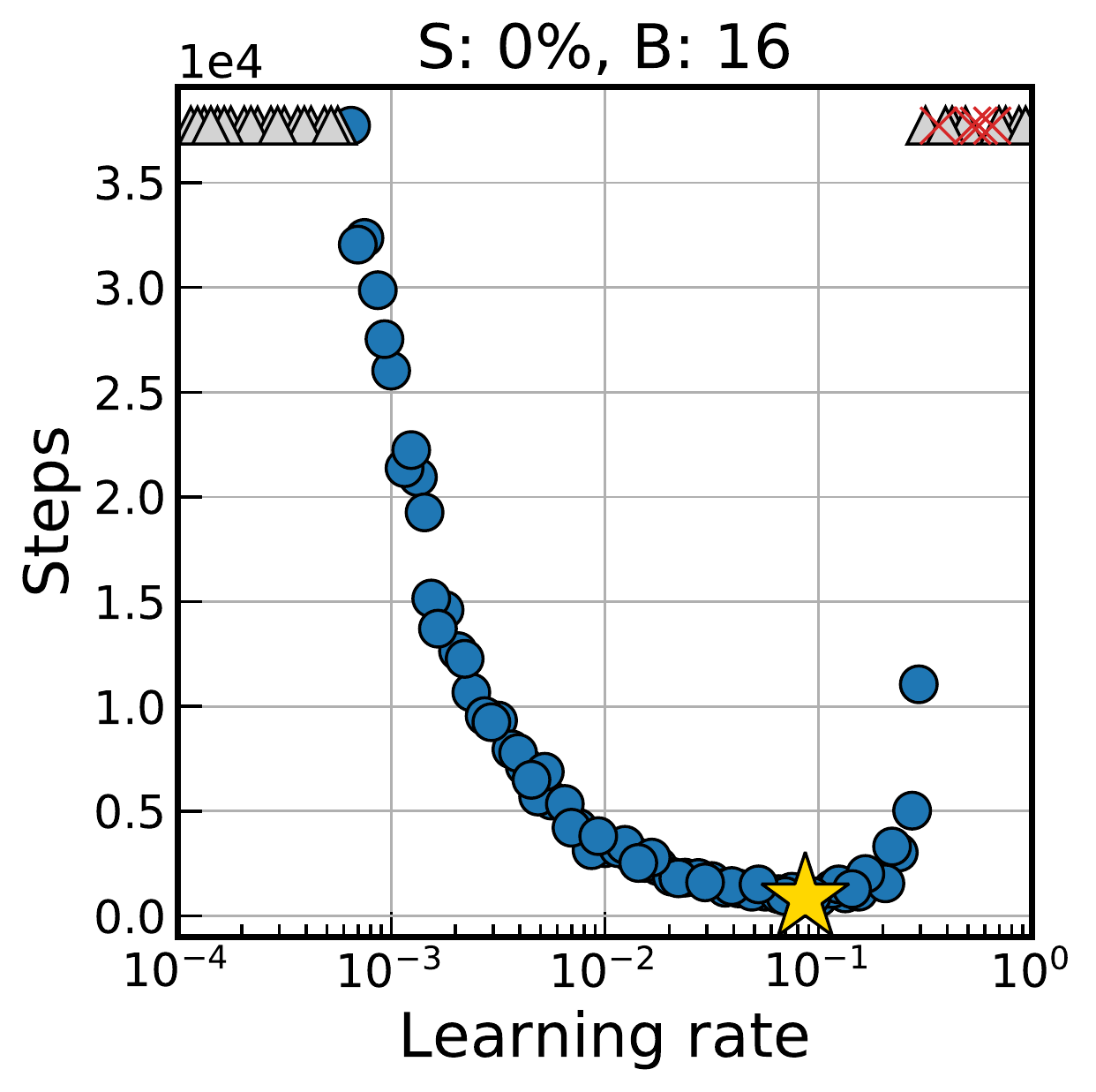}
        \includegraphics[height=26mm]{figure/s2r-metaparameters/simple-cnn-base-sgd-goal-error-0.02-ts-0.0-bs-32-lr-eps-converted-to}
        \includegraphics[height=26mm]{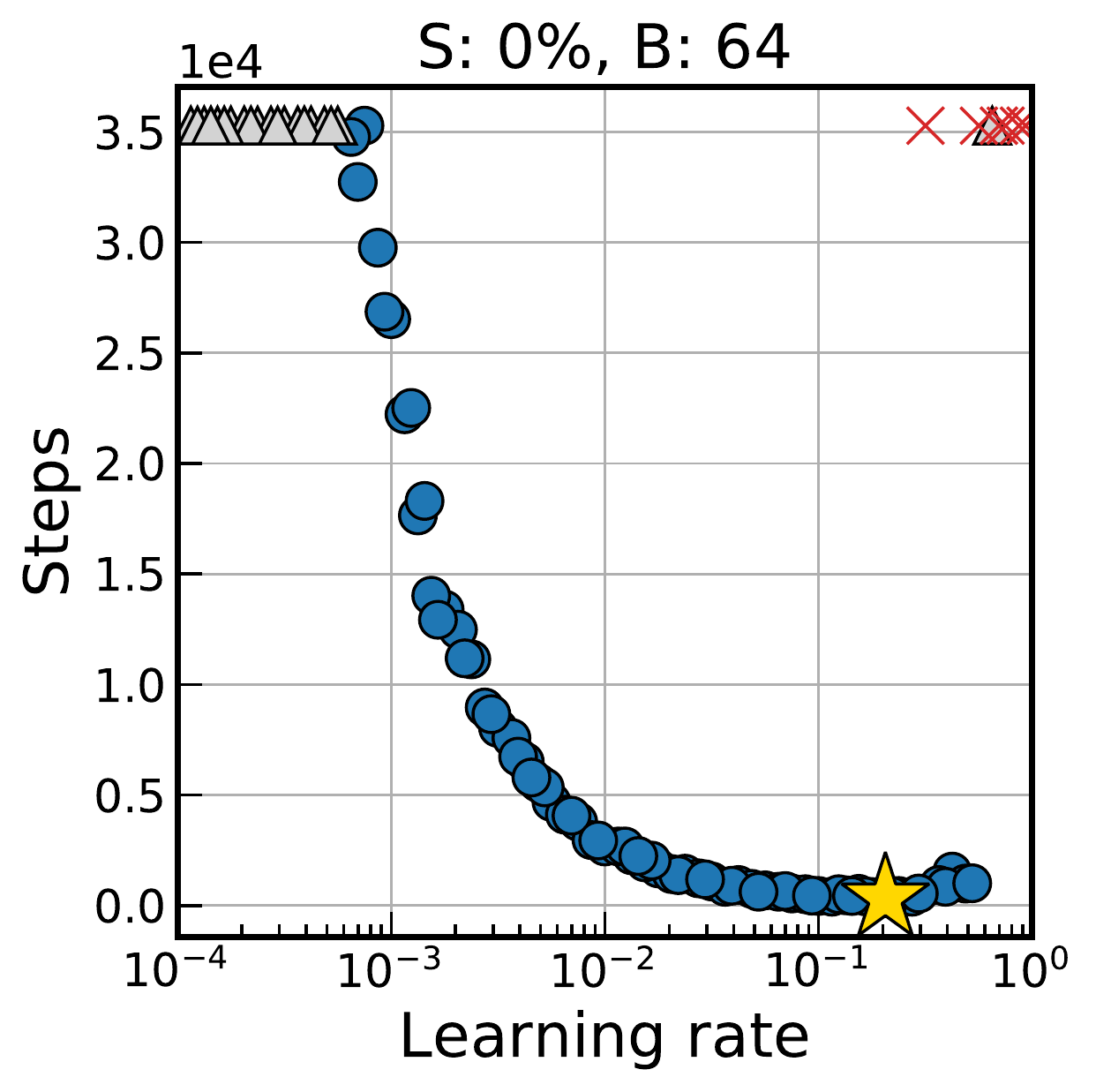}
        \includegraphics[height=26mm]{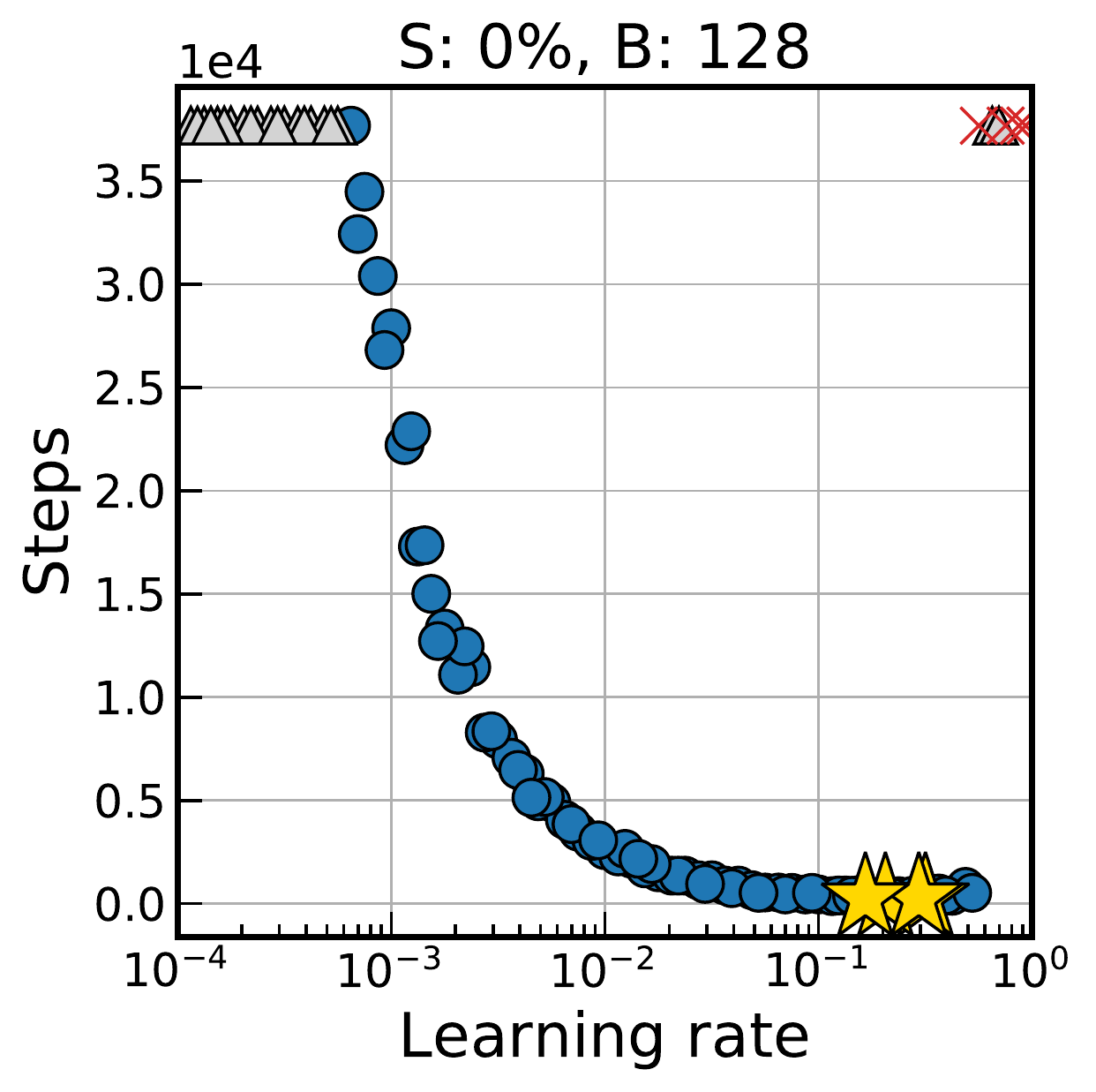}
        \includegraphics[height=26mm]{figure/s2r-metaparameters/simple-cnn-base-sgd-goal-error-0.02-ts-0.0-bs-256-lr-eps-converted-to}
        \includegraphics[height=26mm]{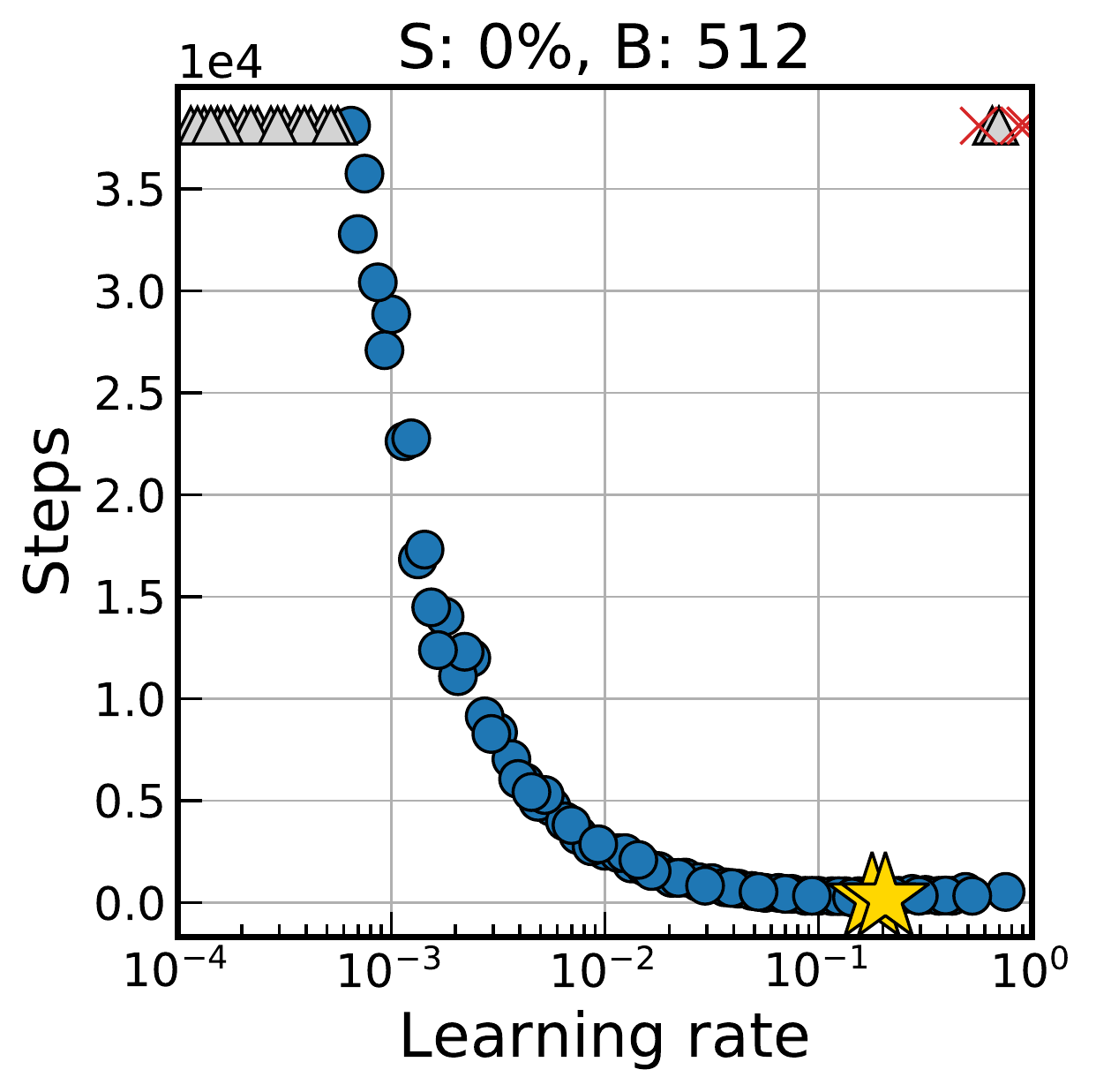}
        \includegraphics[height=26mm]{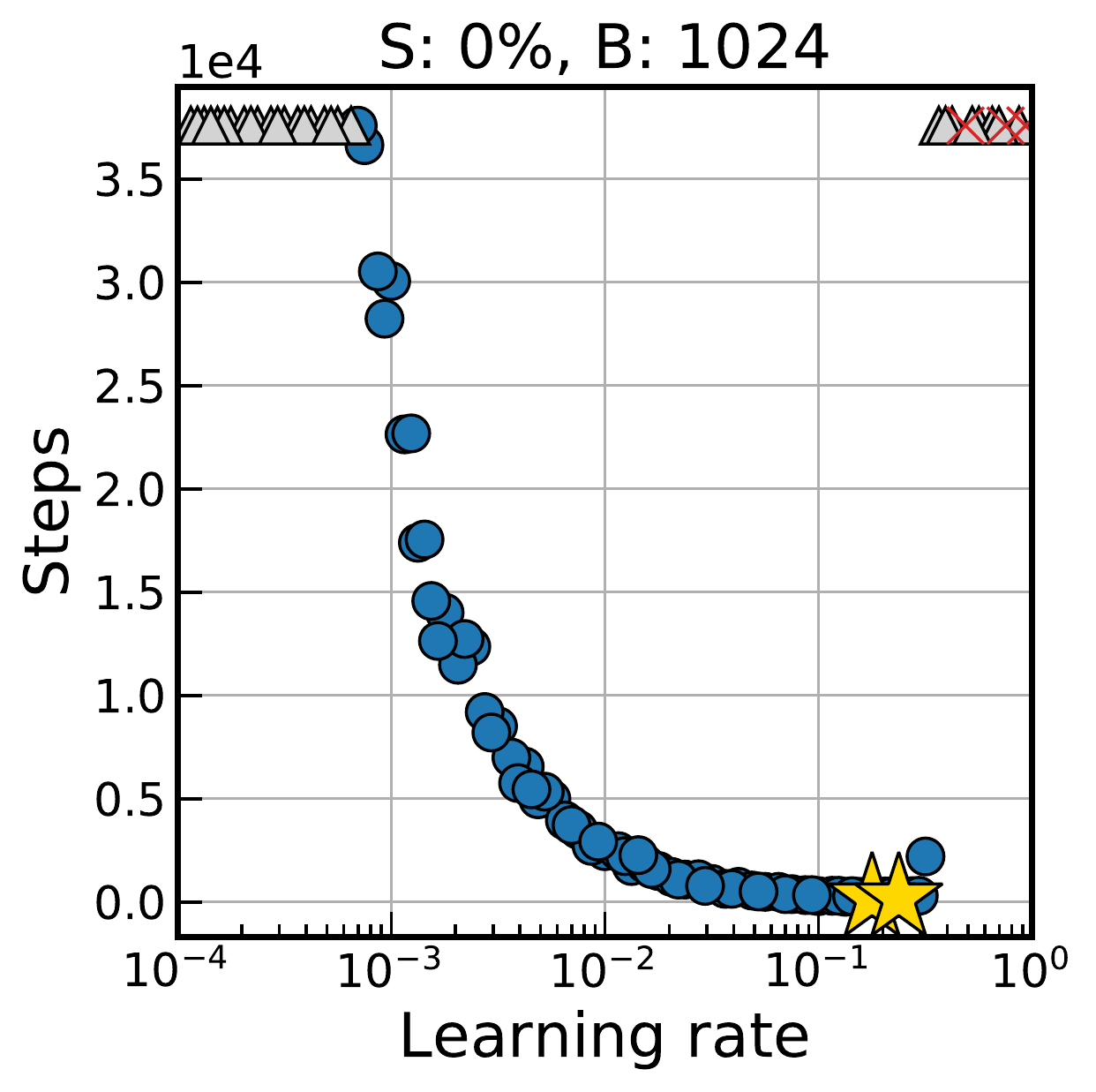}
        \includegraphics[height=26mm]{figure/s2r-metaparameters/simple-cnn-base-sgd-goal-error-0.02-ts-0.0-bs-2048-lr-eps-converted-to}
        \includegraphics[height=26mm]{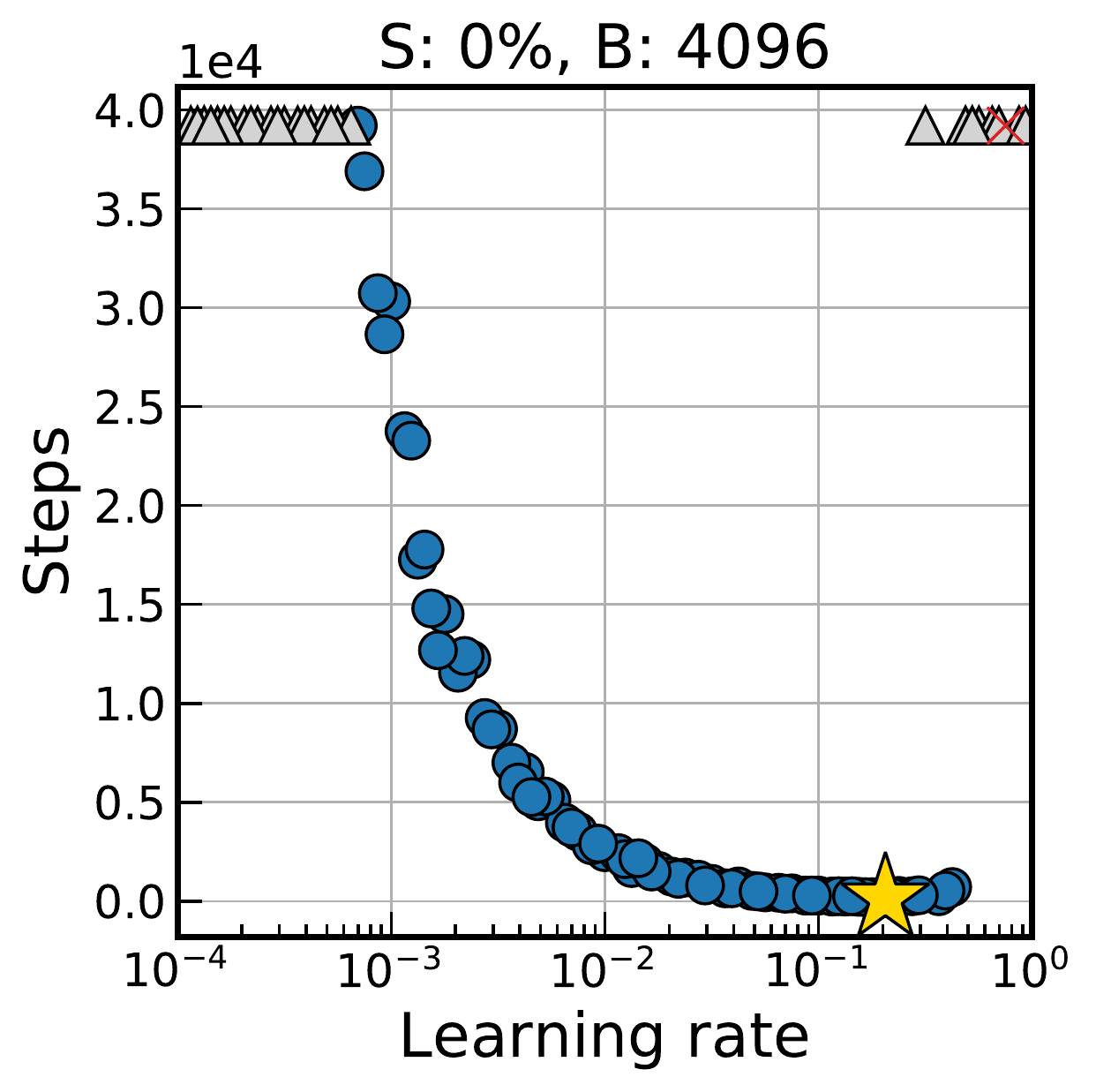}
        \includegraphics[height=26mm]{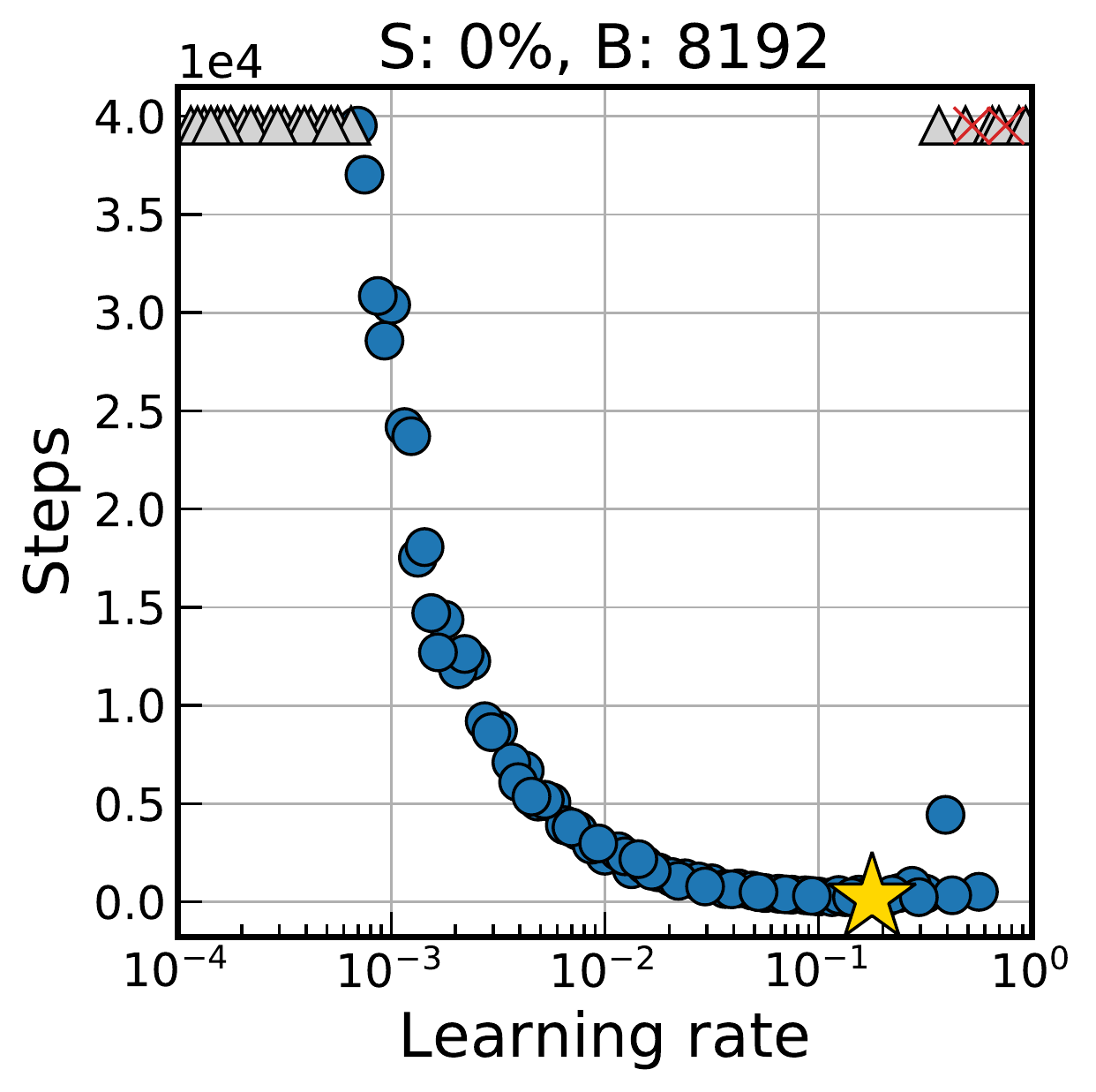}
        \includegraphics[height=26mm]{figure/s2r-metaparameters/simple-cnn-base-sgd-goal-error-0.02-ts-0.0-bs-16384-lr-eps-converted-to}
        \includegraphics[height=26mm]{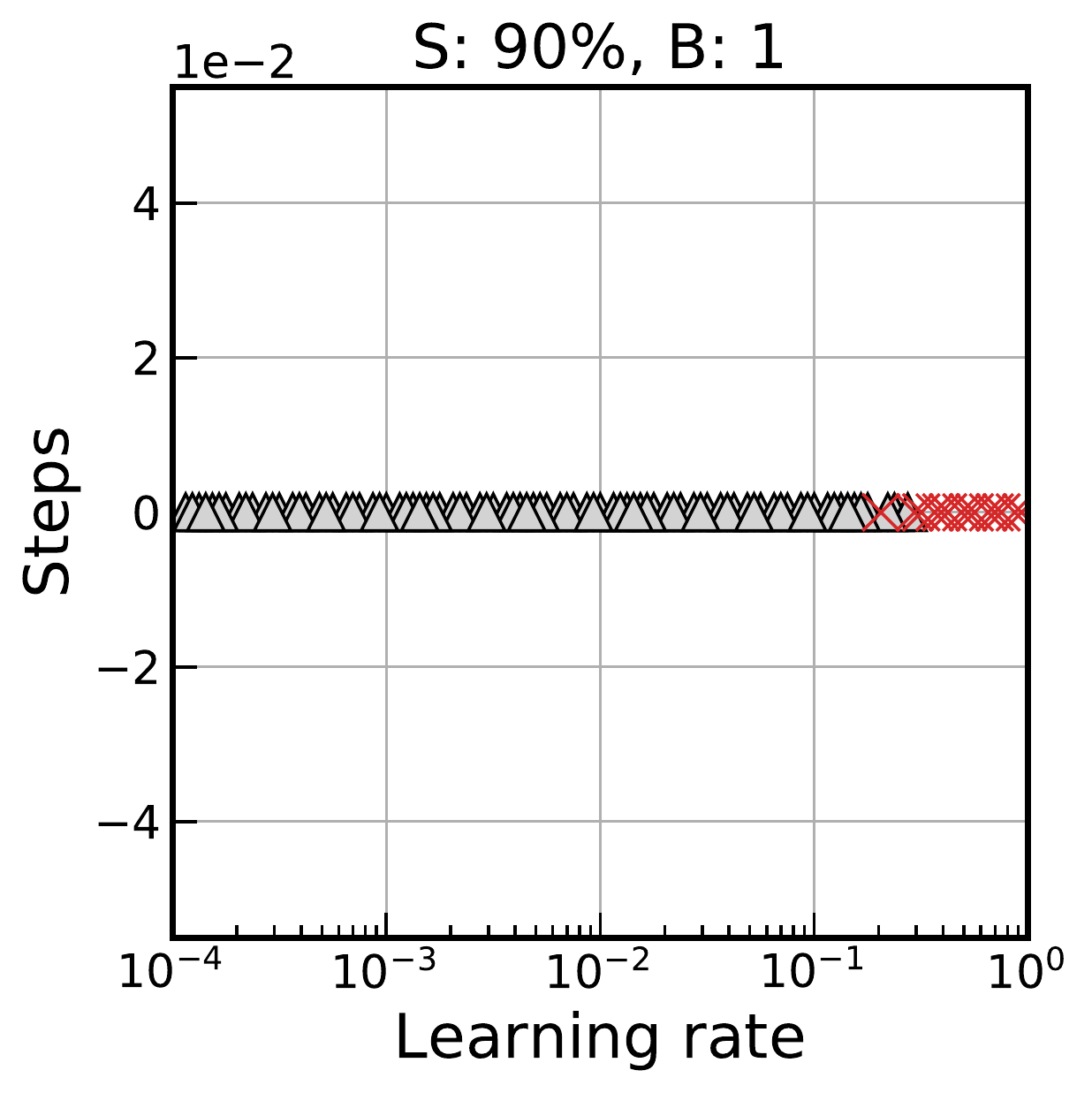}
        \includegraphics[height=26mm]{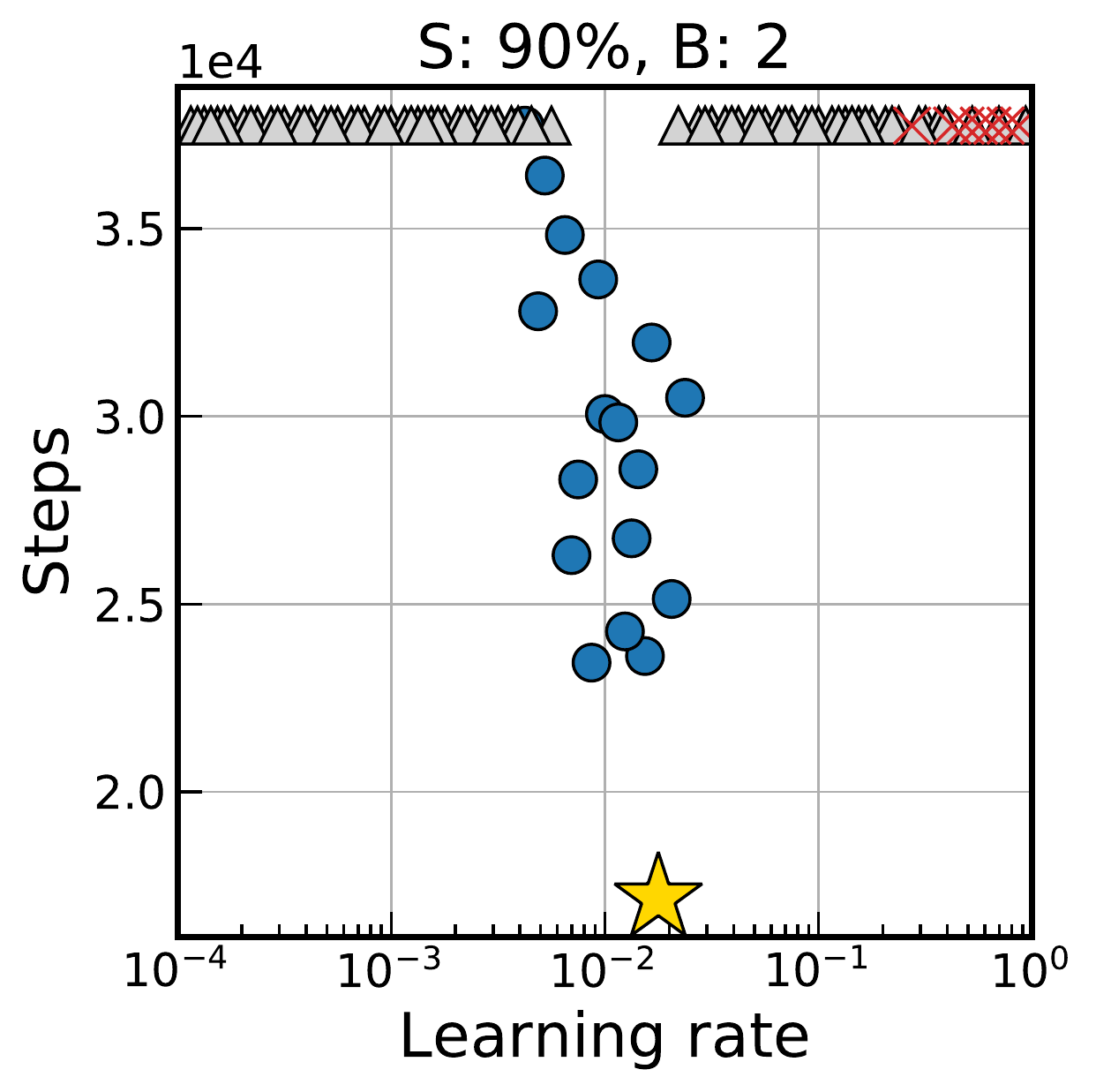}
        \includegraphics[height=26mm]{figure/s2r-metaparameters/simple-cnn-base-sgd-goal-error-0.02-ts-0.9-bs-4-lr-eps-converted-to}
        \includegraphics[height=26mm]{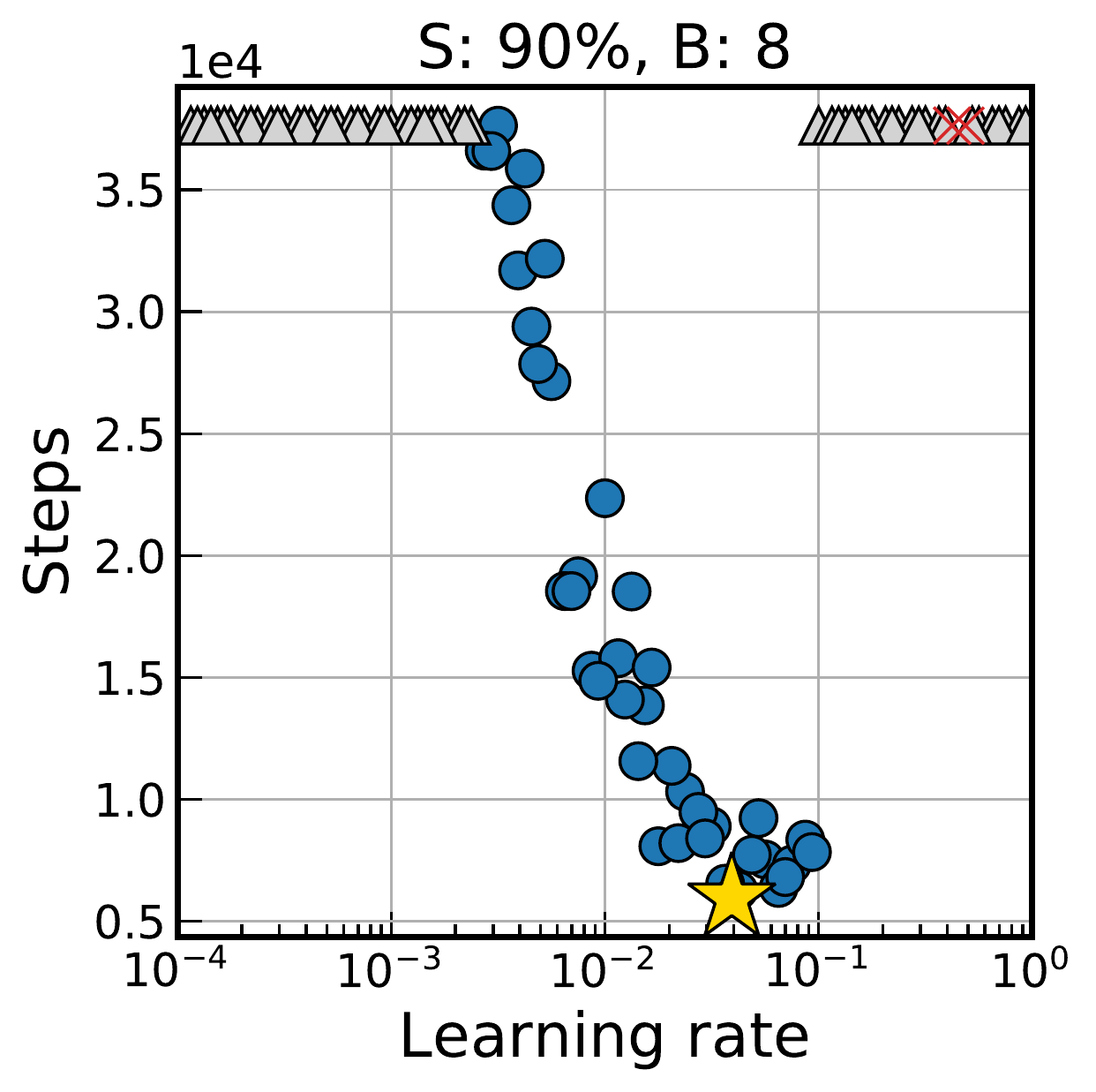}
        \includegraphics[height=26mm]{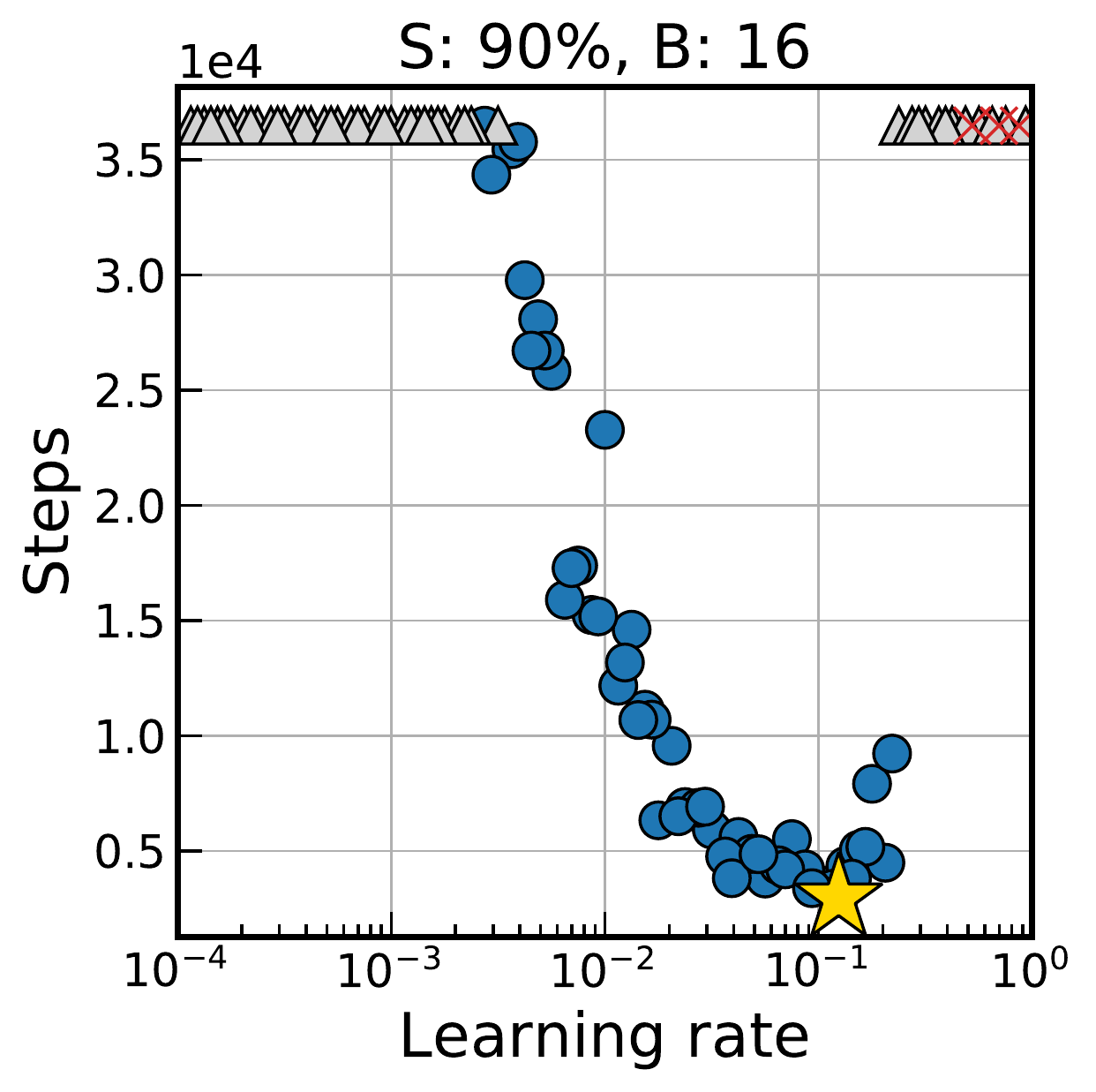}
        \includegraphics[height=26mm]{figure/s2r-metaparameters/simple-cnn-base-sgd-goal-error-0.02-ts-0.9-bs-32-lr-eps-converted-to}
        \includegraphics[height=26mm]{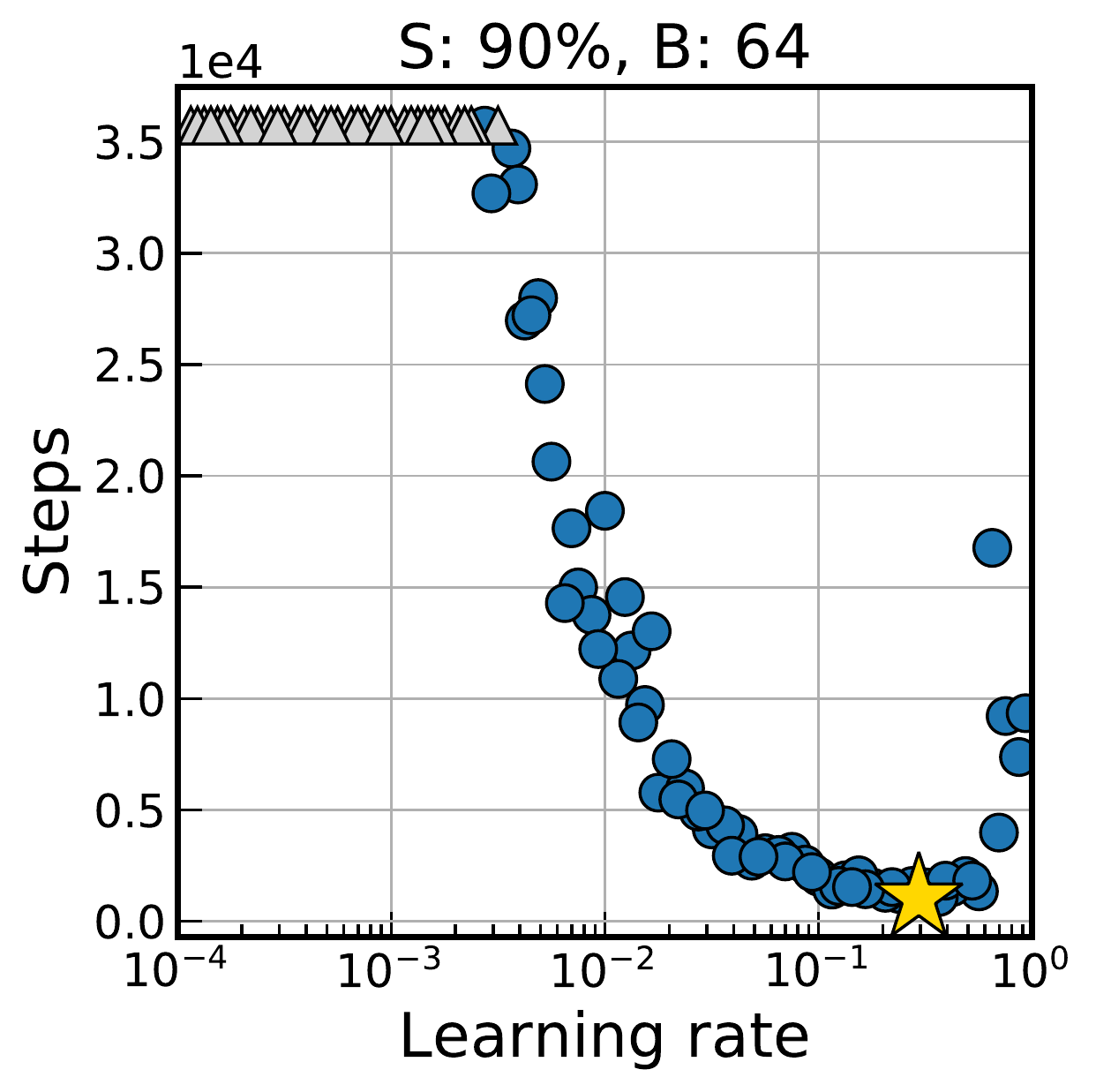}
        \includegraphics[height=26mm]{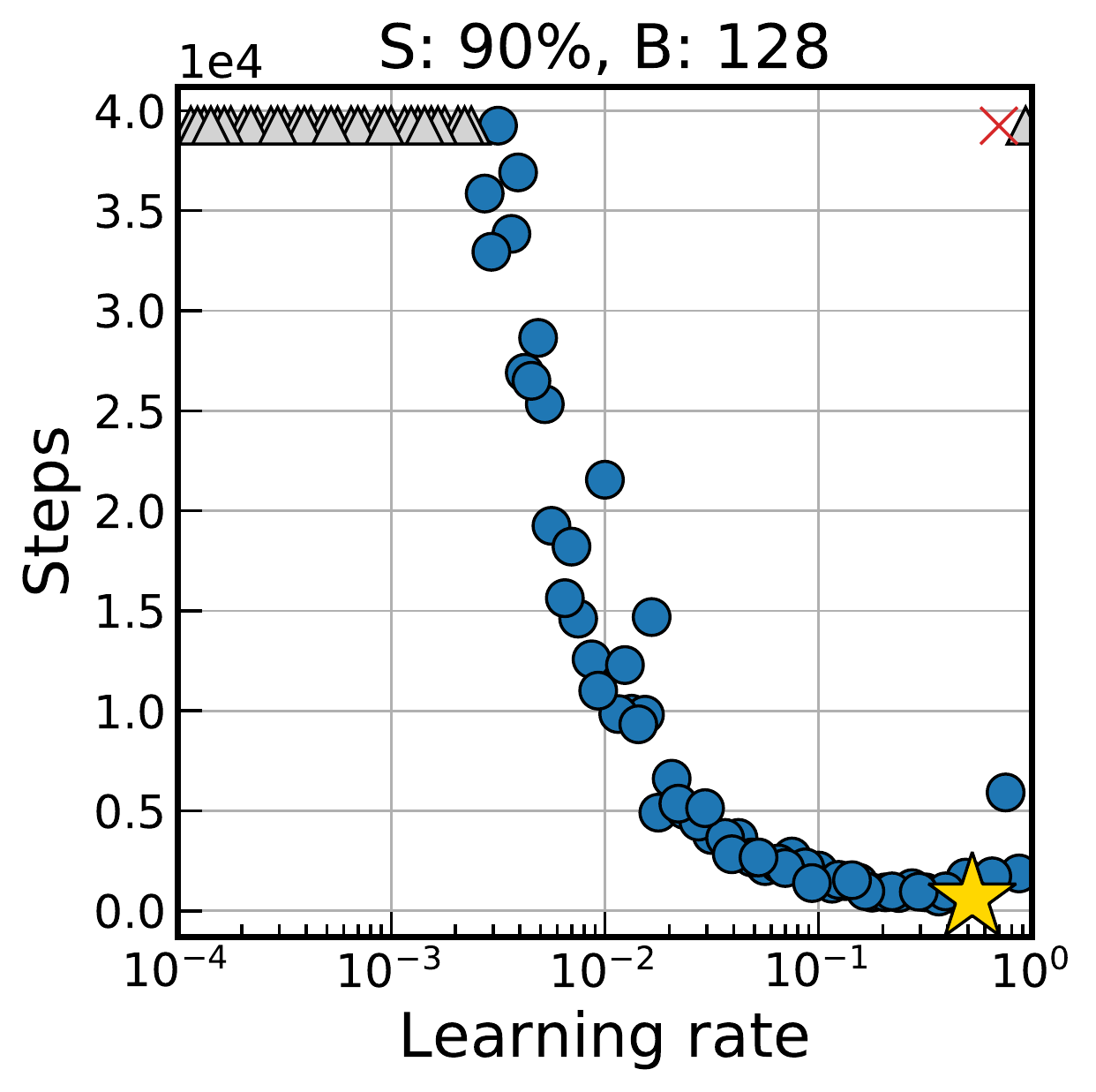}
        \includegraphics[height=26mm]{figure/s2r-metaparameters/simple-cnn-base-sgd-goal-error-0.02-ts-0.9-bs-256-lr-eps-converted-to}
        \includegraphics[height=26mm]{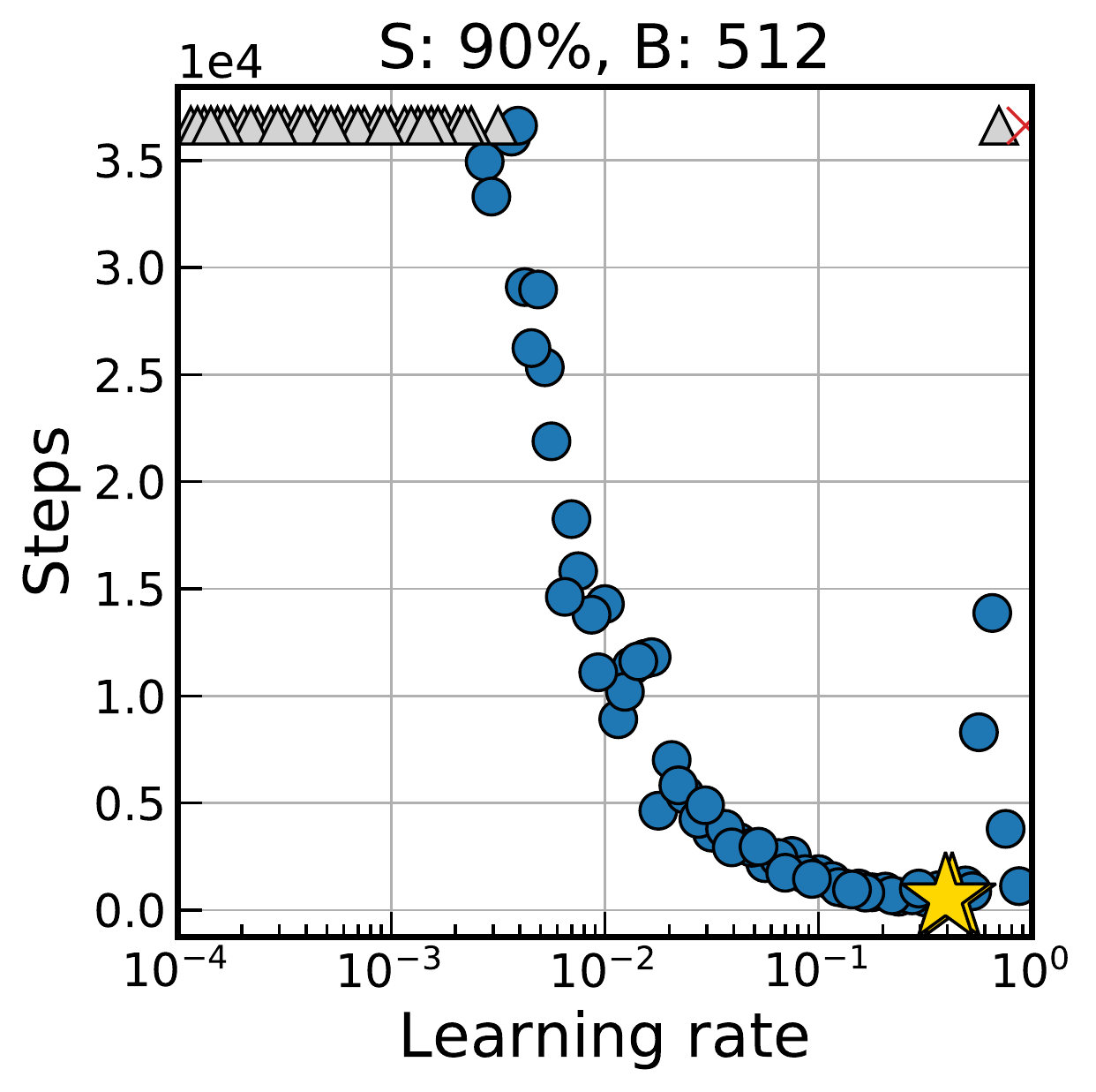}
        \includegraphics[height=26mm]{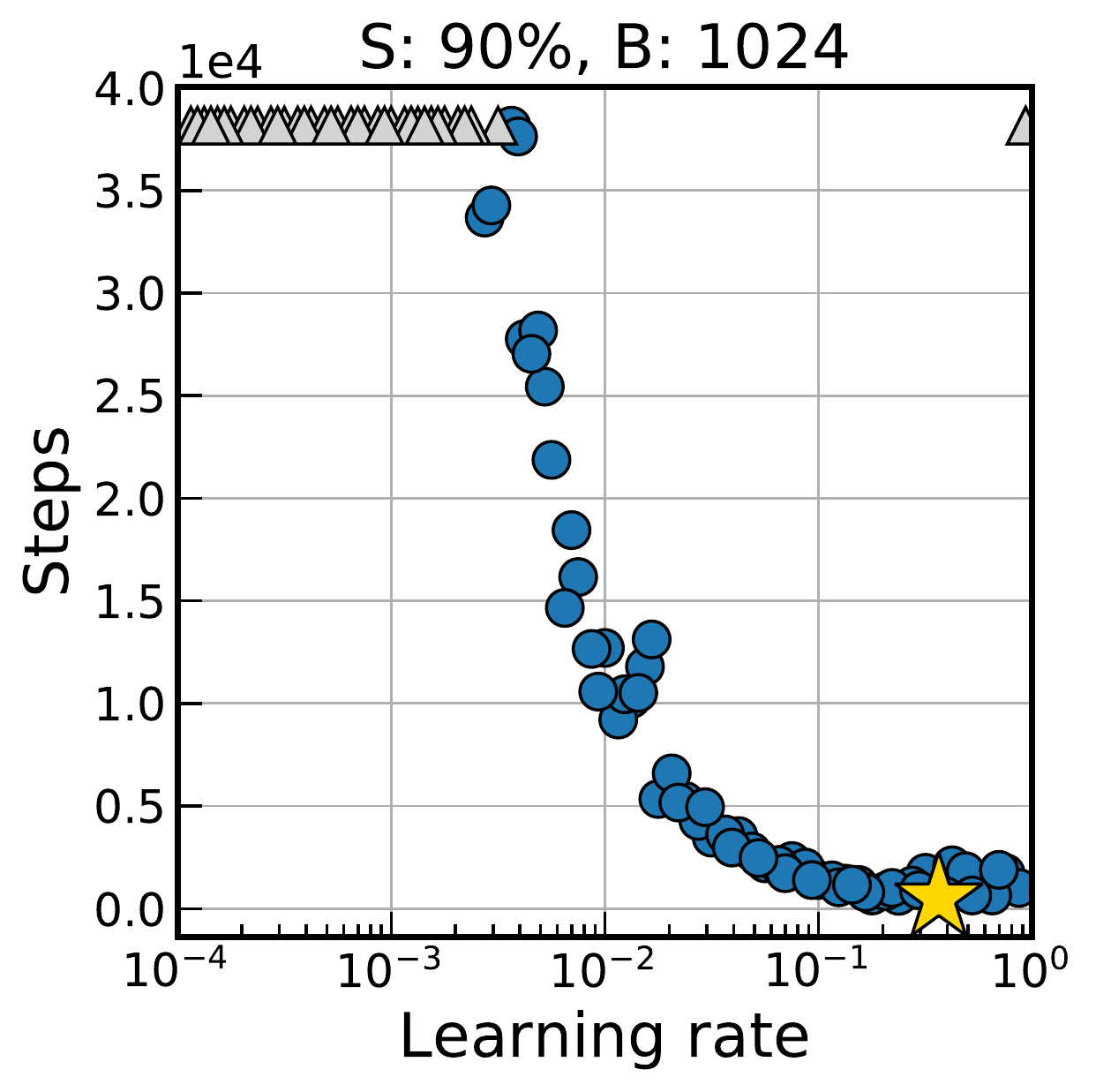}
        \includegraphics[height=26mm]{figure/s2r-metaparameters/simple-cnn-base-sgd-goal-error-0.02-ts-0.9-bs-2048-lr-eps-converted-to}
        \includegraphics[height=26mm]{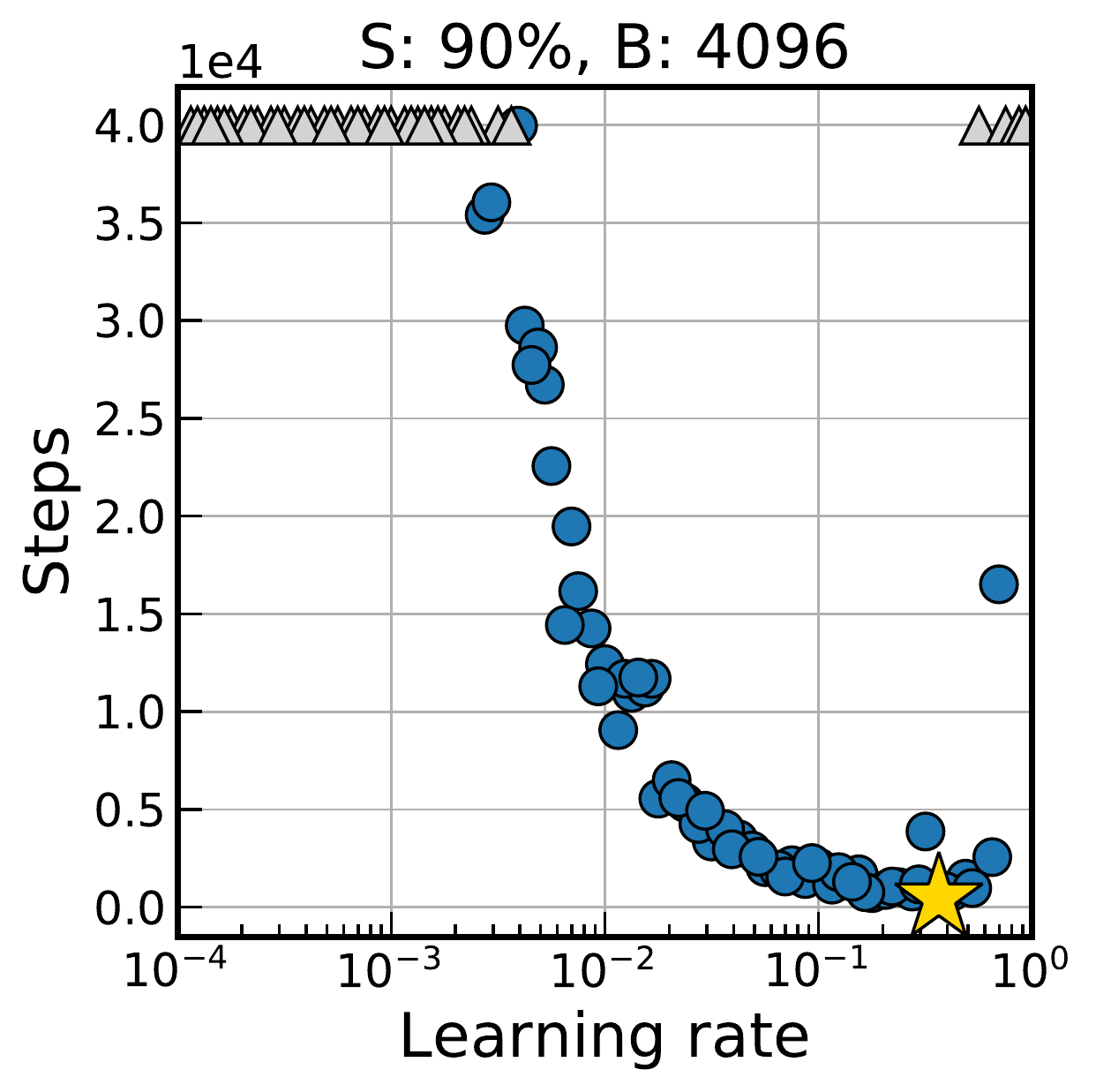}
        \includegraphics[height=26mm]{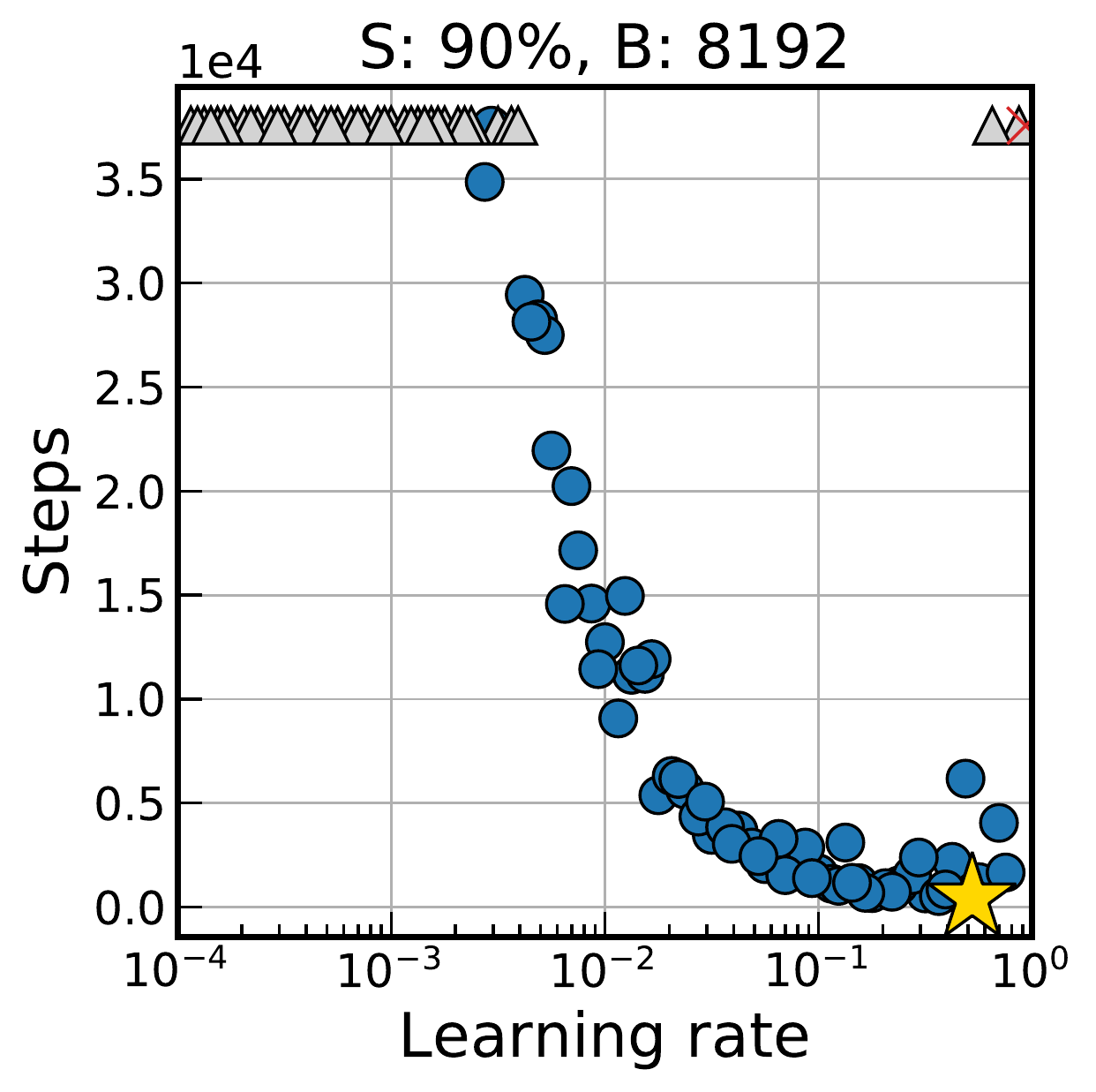}
        \includegraphics[height=26mm]{figure/s2r-metaparameters/simple-cnn-base-sgd-goal-error-0.02-ts-0.9-bs-16384-lr-eps-converted-to}
    \end{subfigure}
    \caption{
        Meataparameter search results for the workloads of \{MNIST, Simple-CNN, SGD\} with a constant learning rate.
    }
    \label{fig:mparams-mnist-sgd-more}
\end{figure}

\begin{figure}[h]
    \centering
    \begin{subfigure}{.9998\textwidth}
        \centering
        \includegraphics[height=26mm]{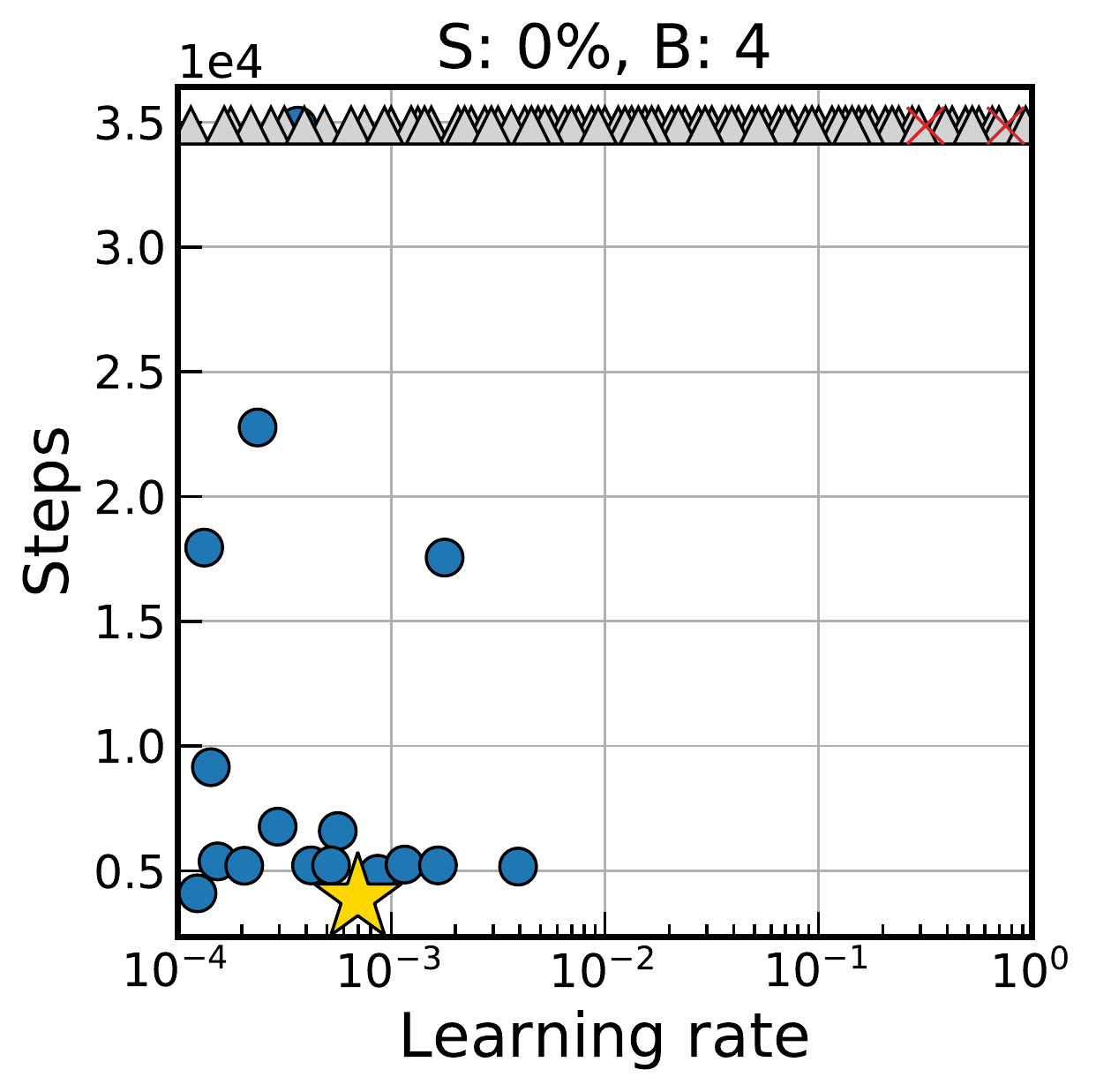}
        \includegraphics[height=26mm]{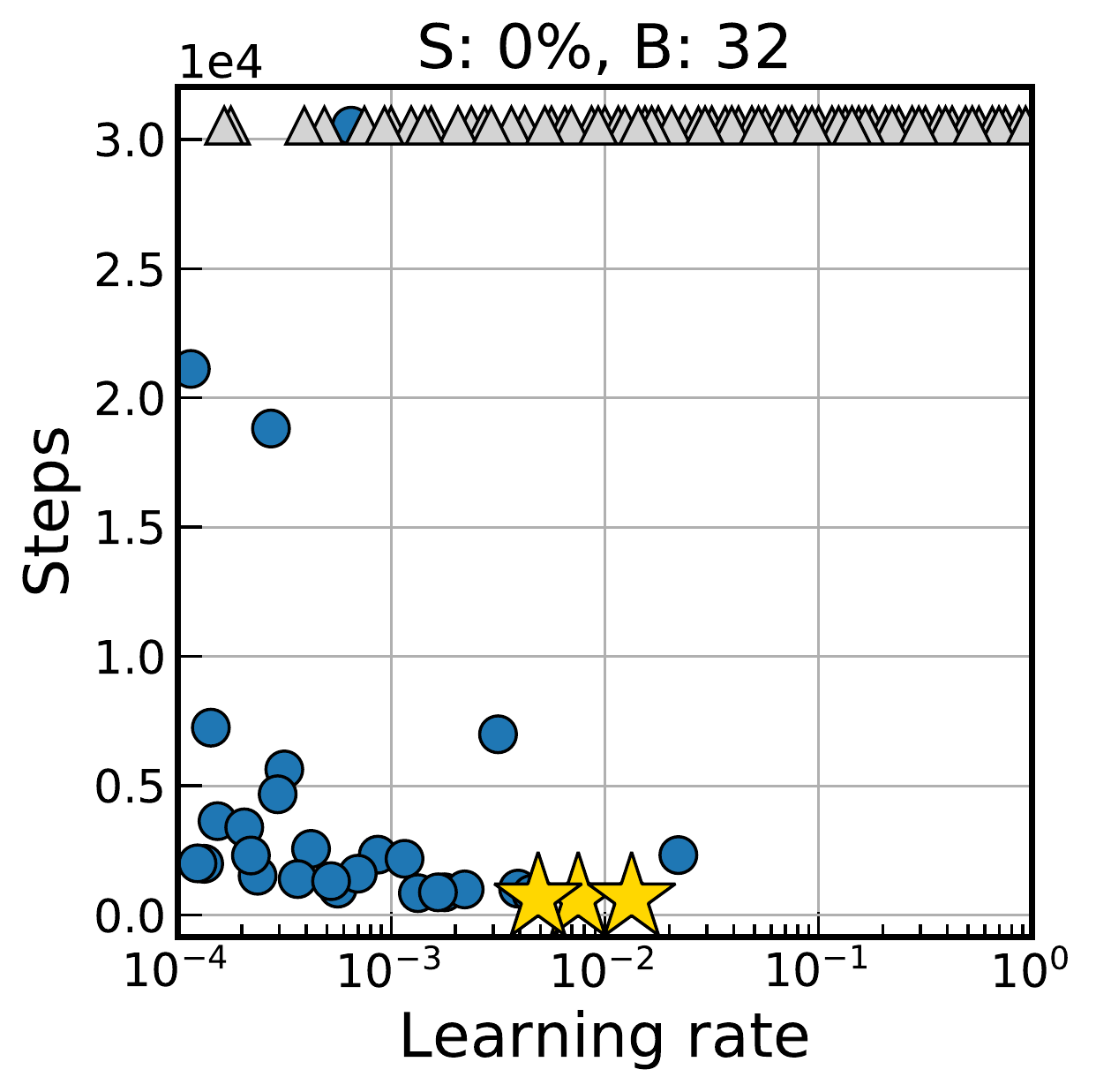}
        \includegraphics[height=26mm]{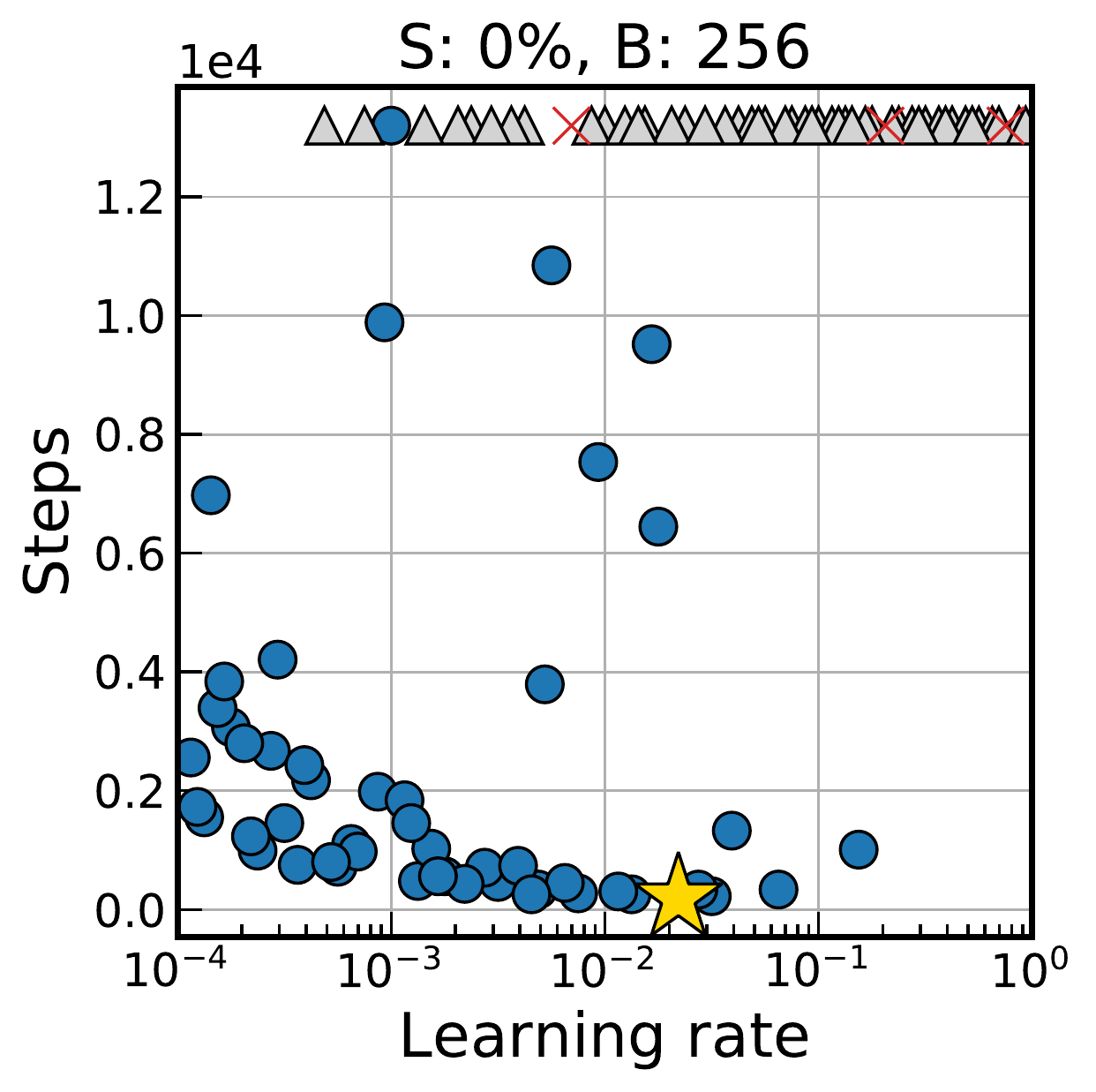}
        \includegraphics[height=26mm]{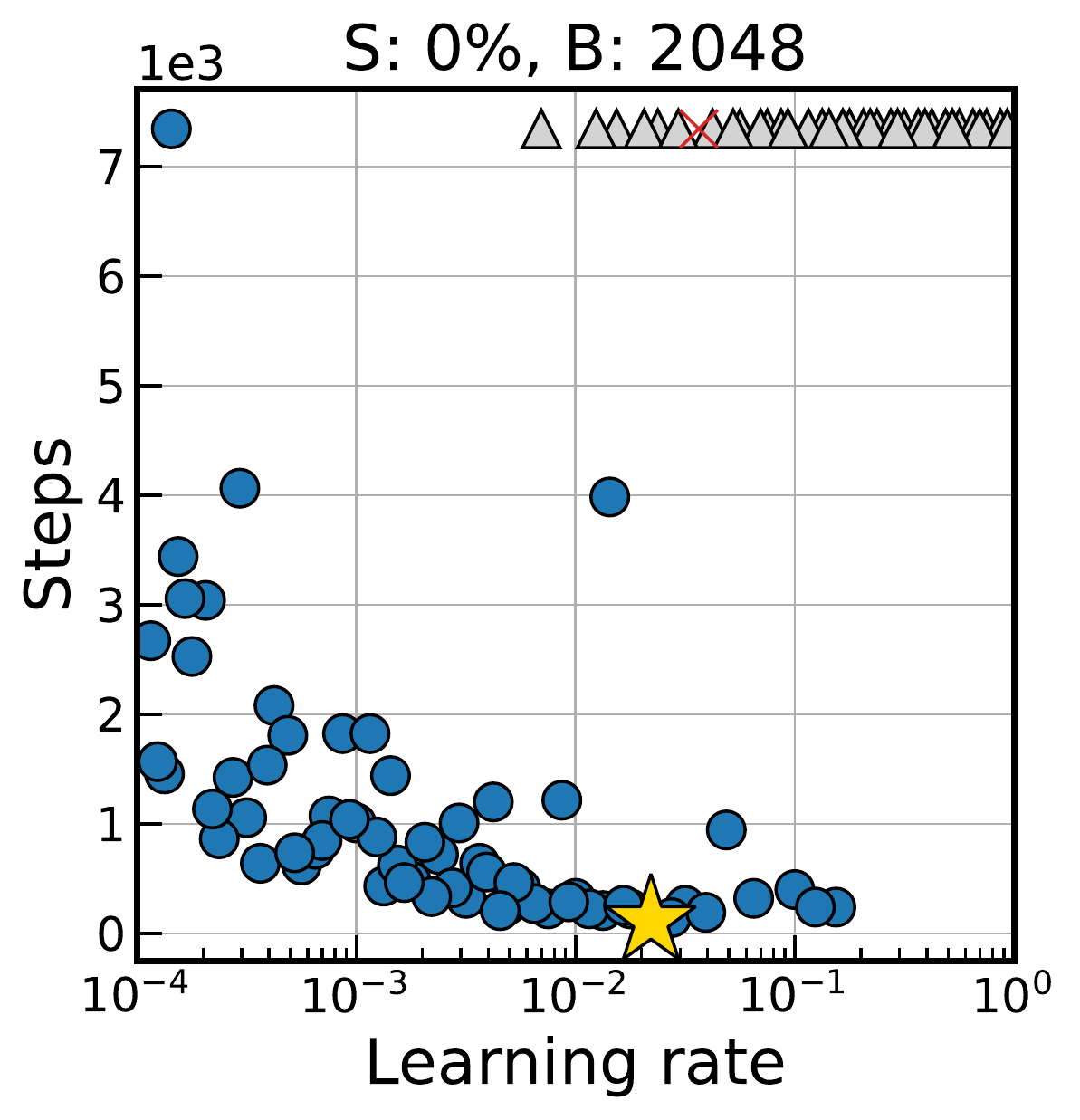}
        \includegraphics[height=26mm]{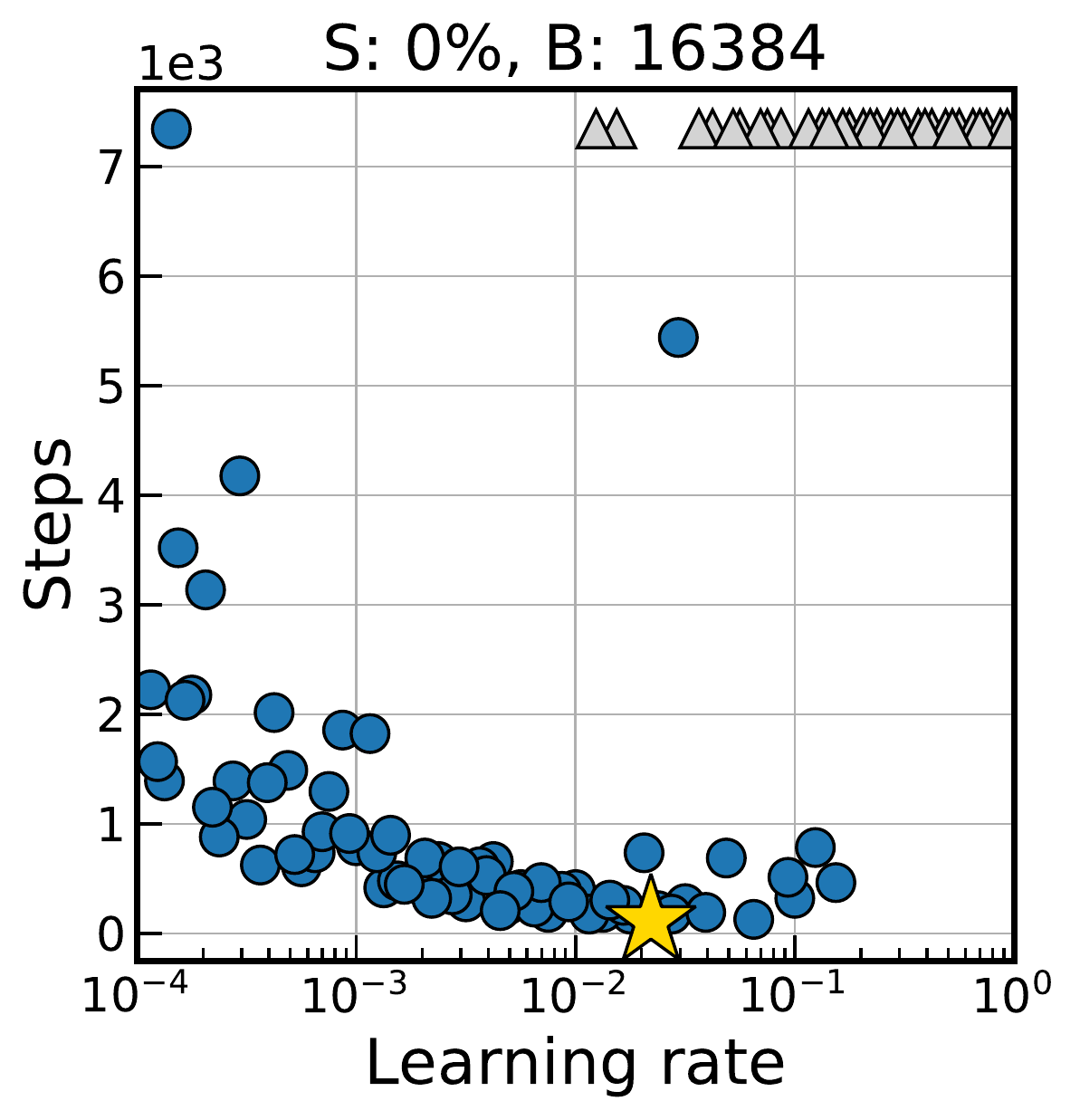}
        \includegraphics[height=26mm]{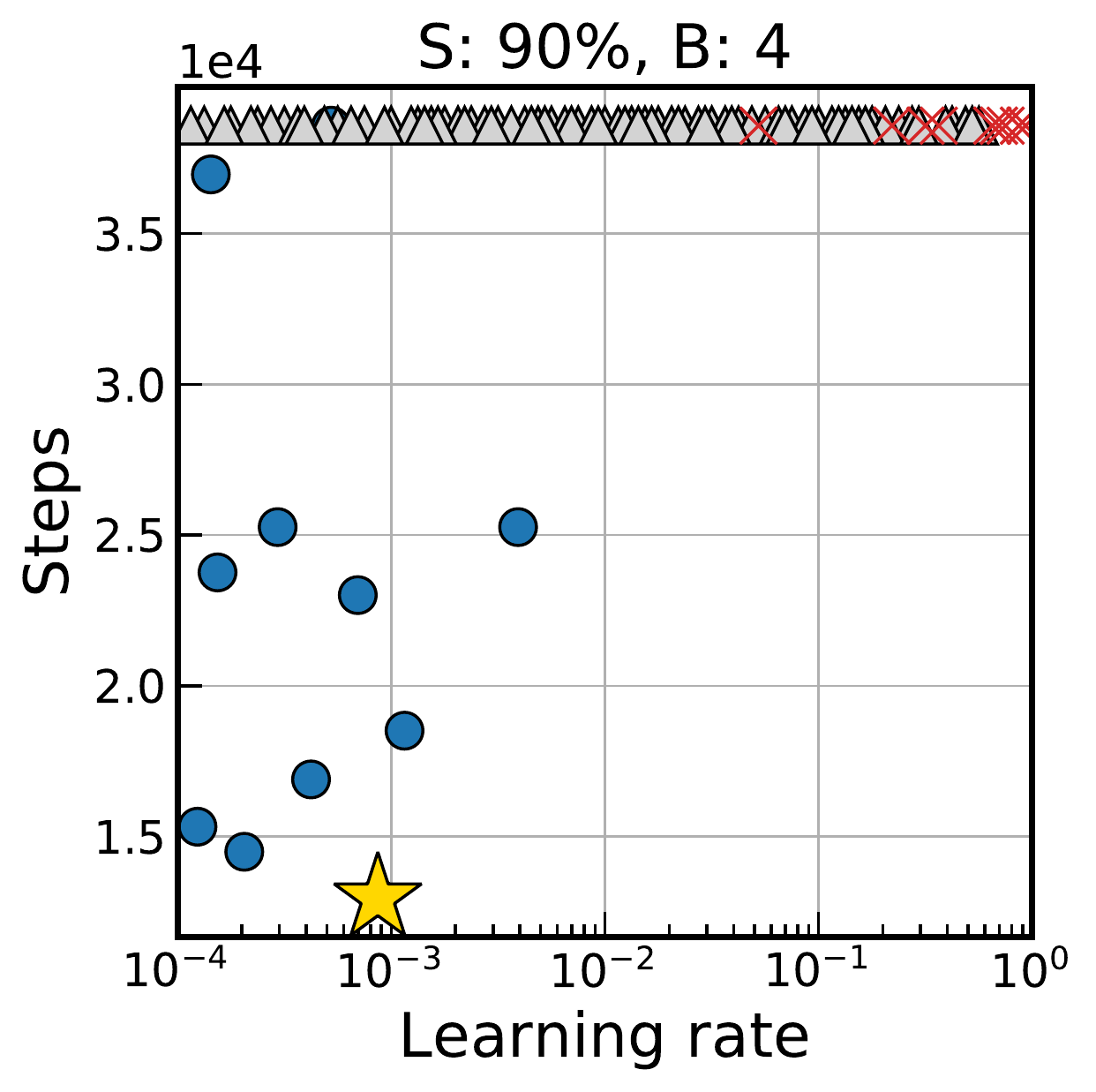}
        \includegraphics[height=26mm]{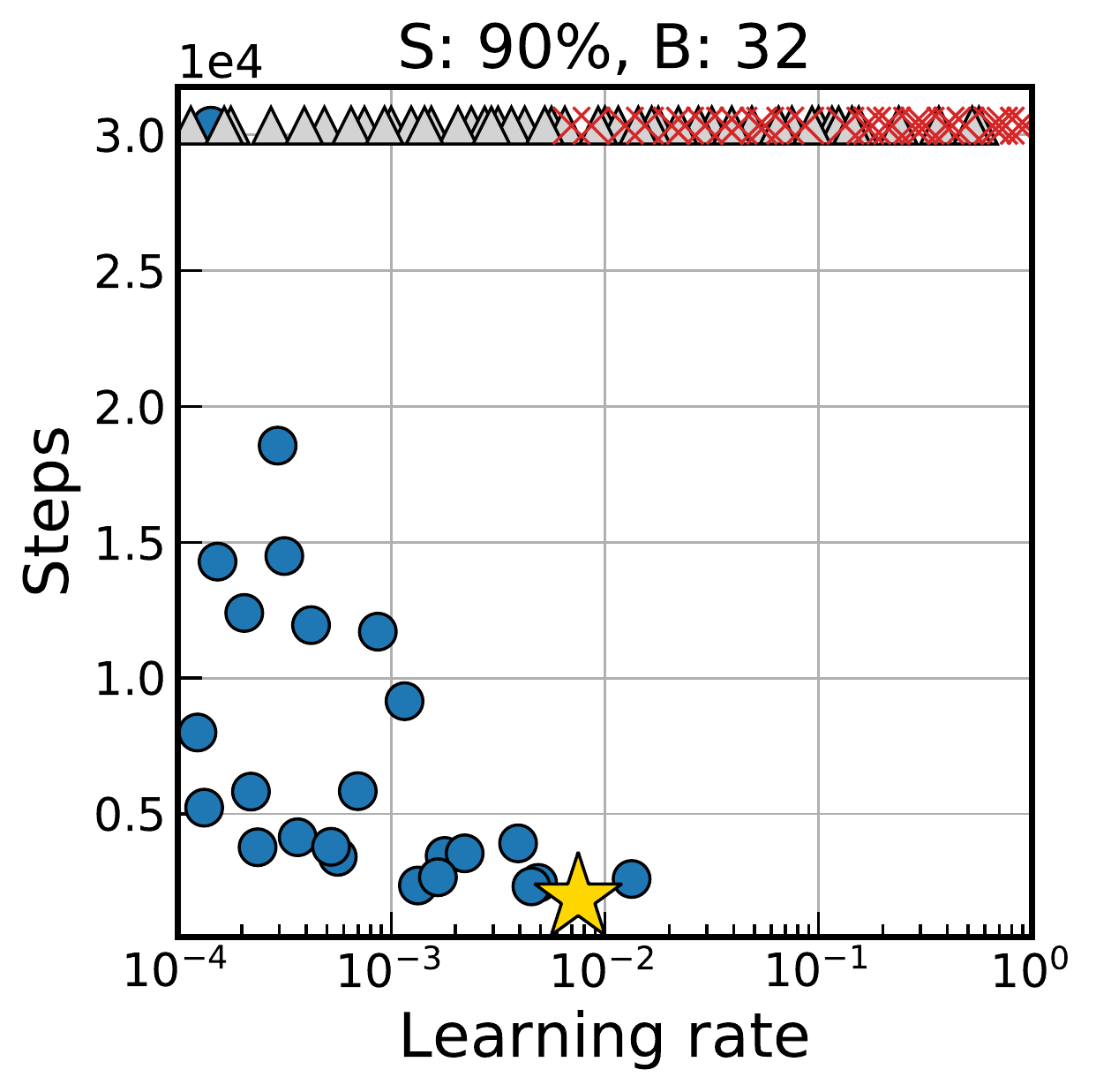}
        \includegraphics[height=26mm]{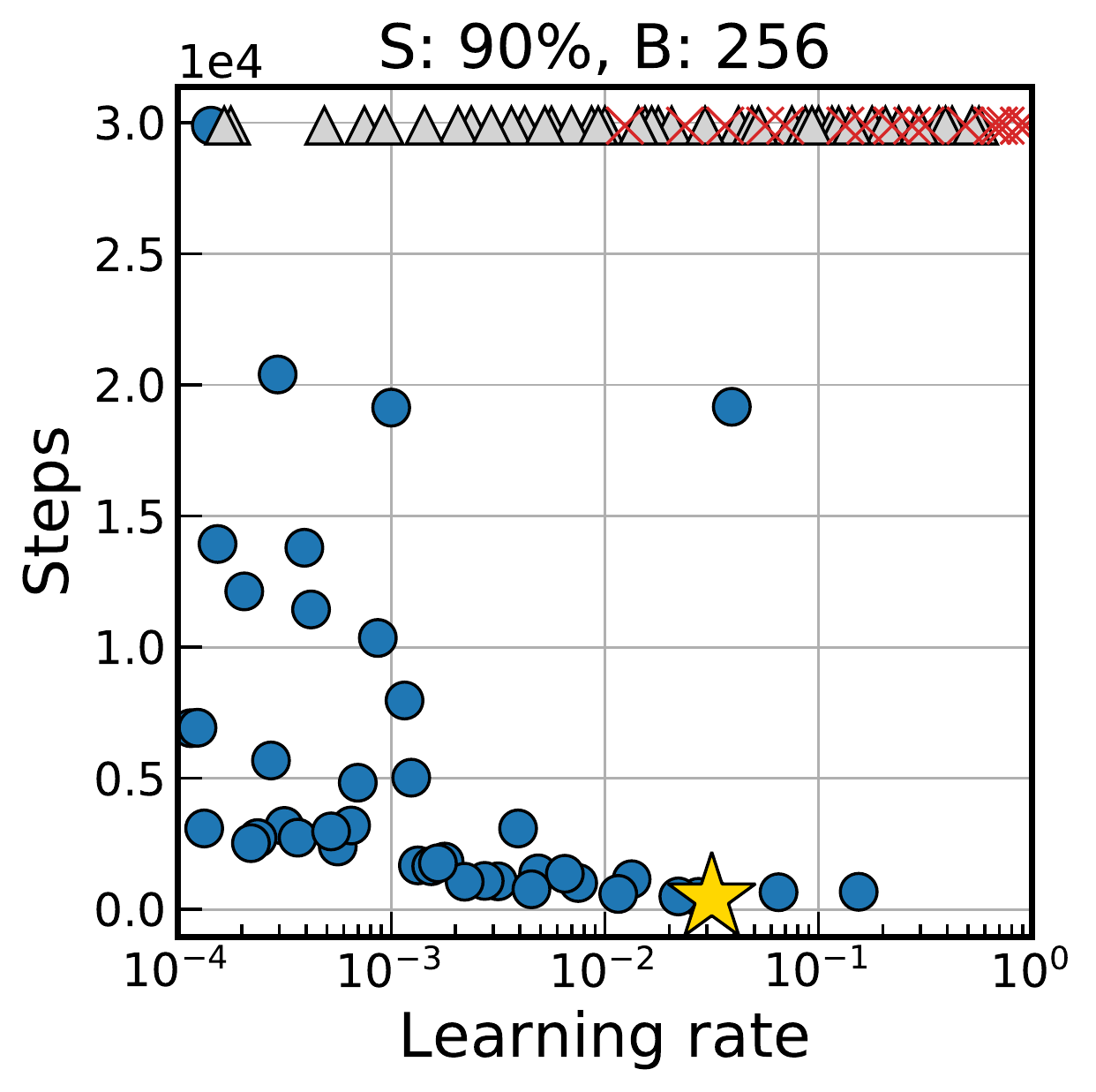}
        \includegraphics[height=26mm]{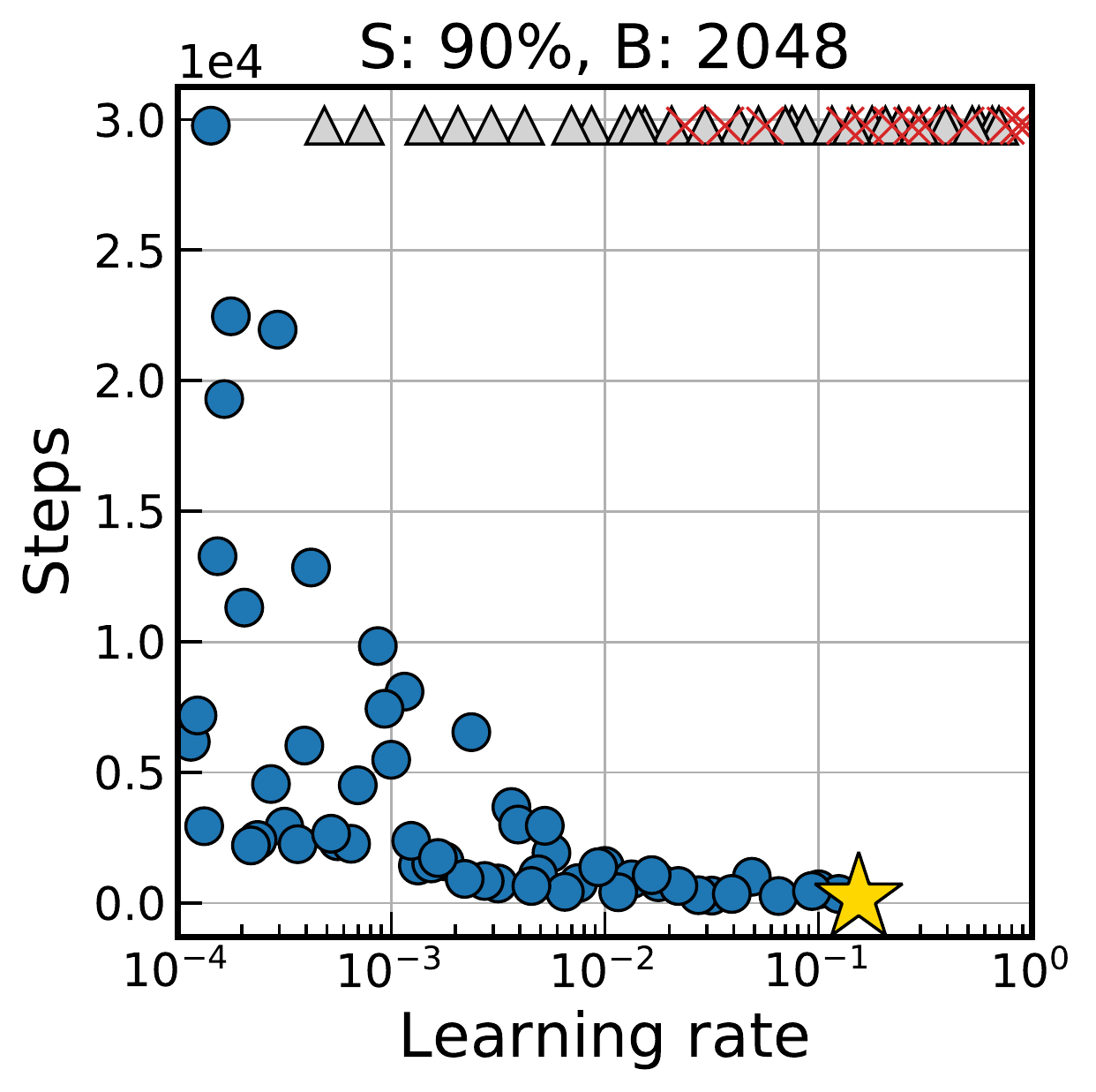}
        \includegraphics[height=26mm]{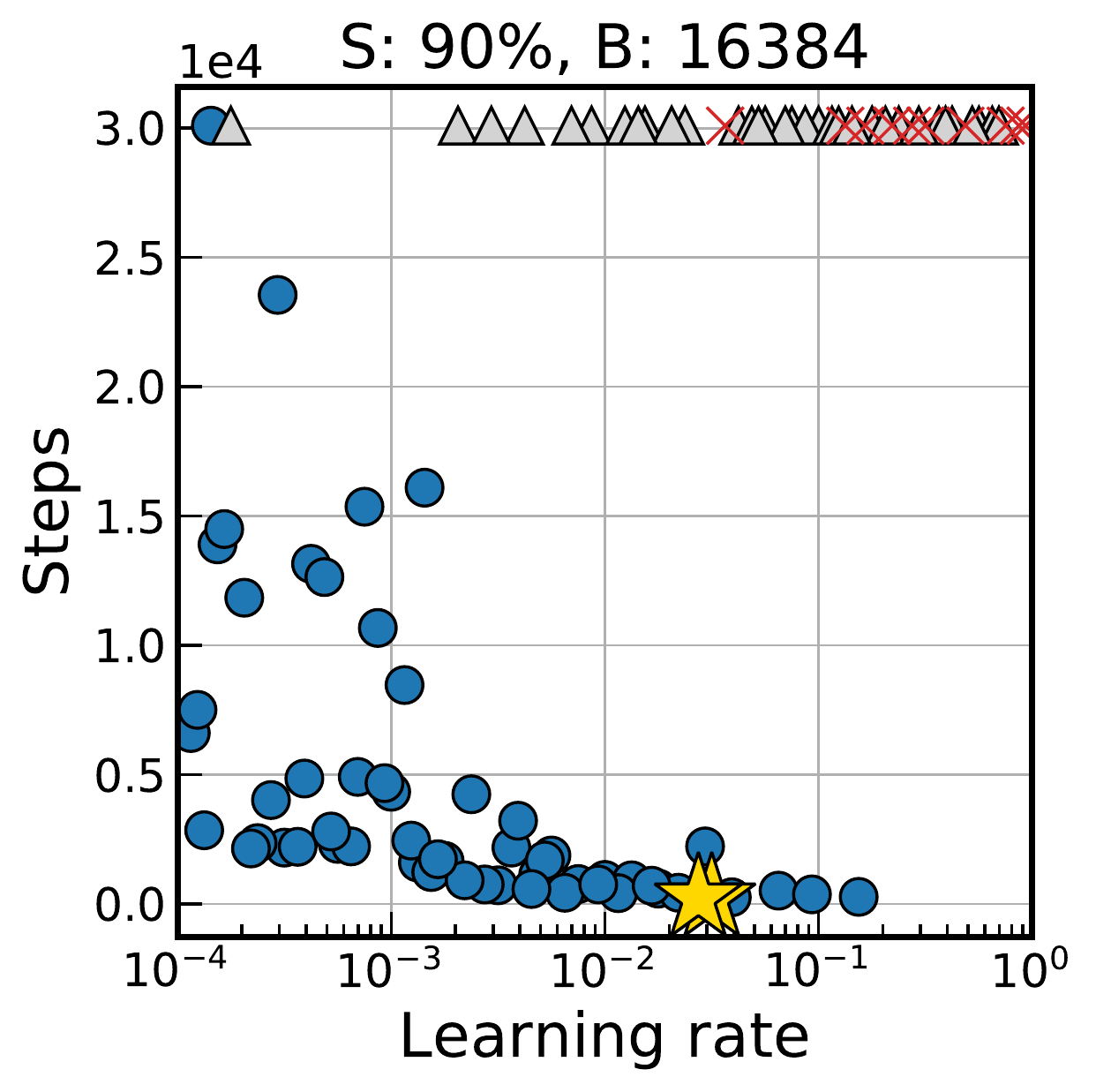}
        \includegraphics[height=26mm]{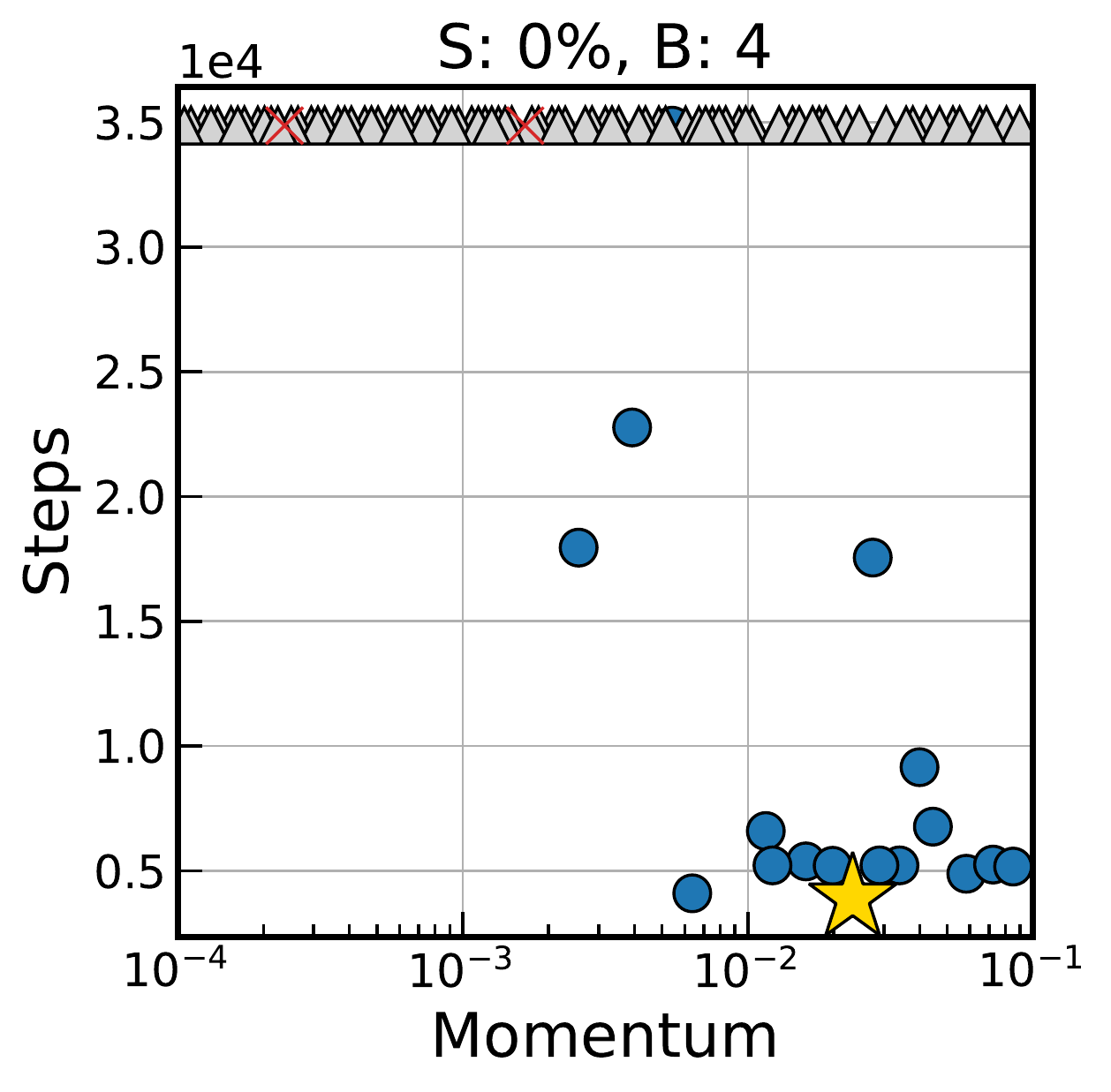}
        \includegraphics[height=26mm]{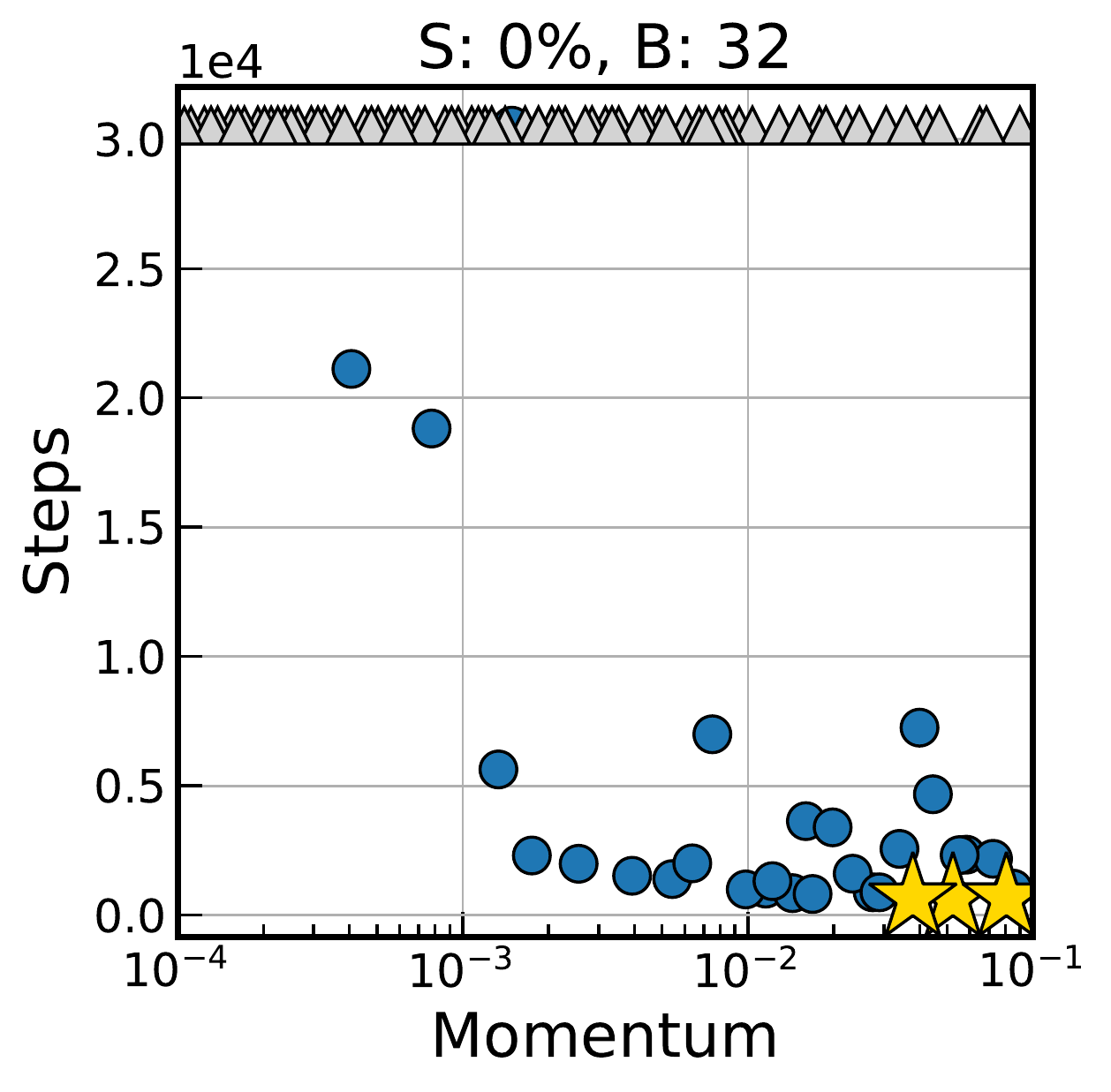}
        \includegraphics[height=26mm]{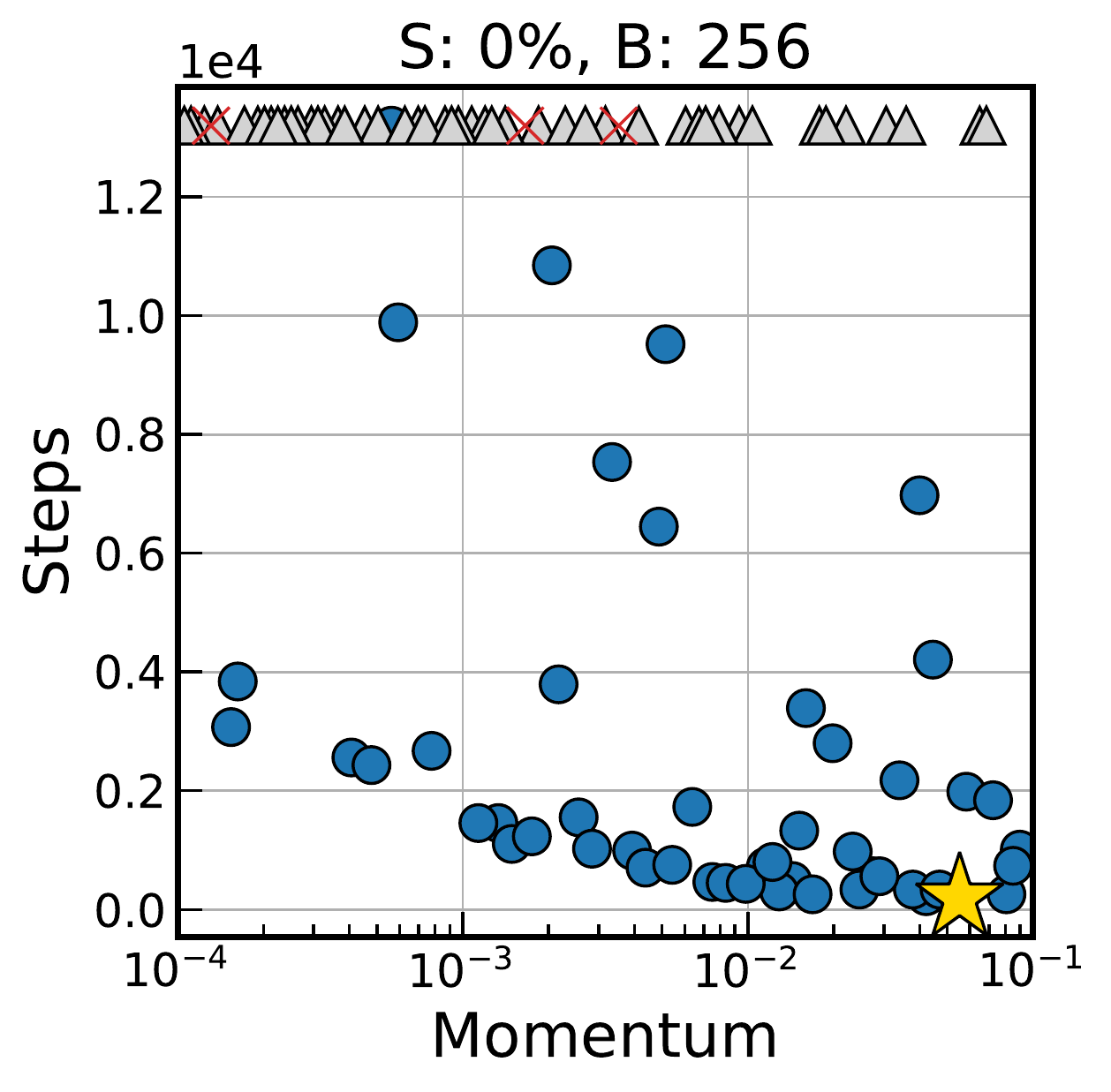}
        \includegraphics[height=26mm]{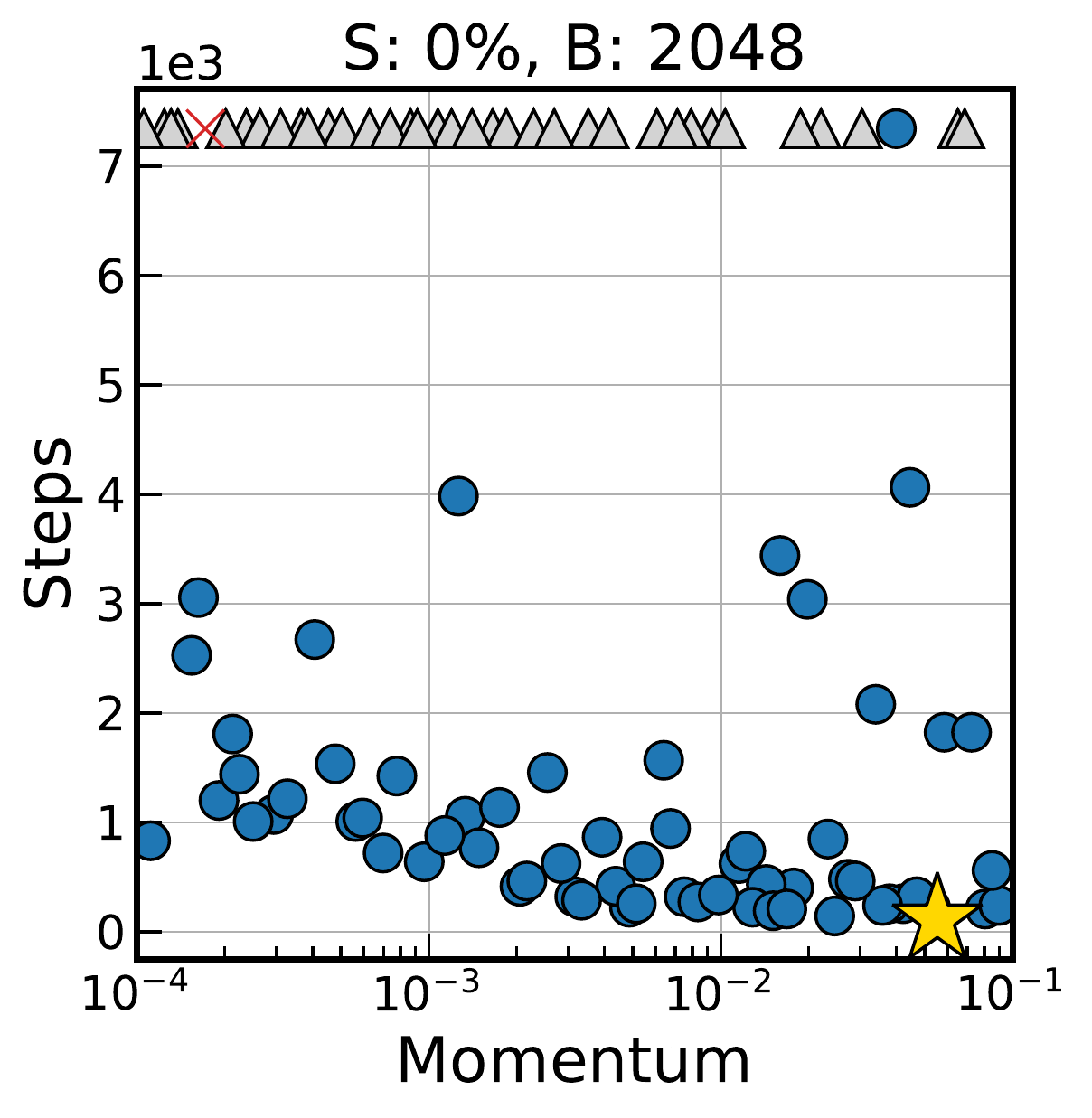}
        \includegraphics[height=26mm]{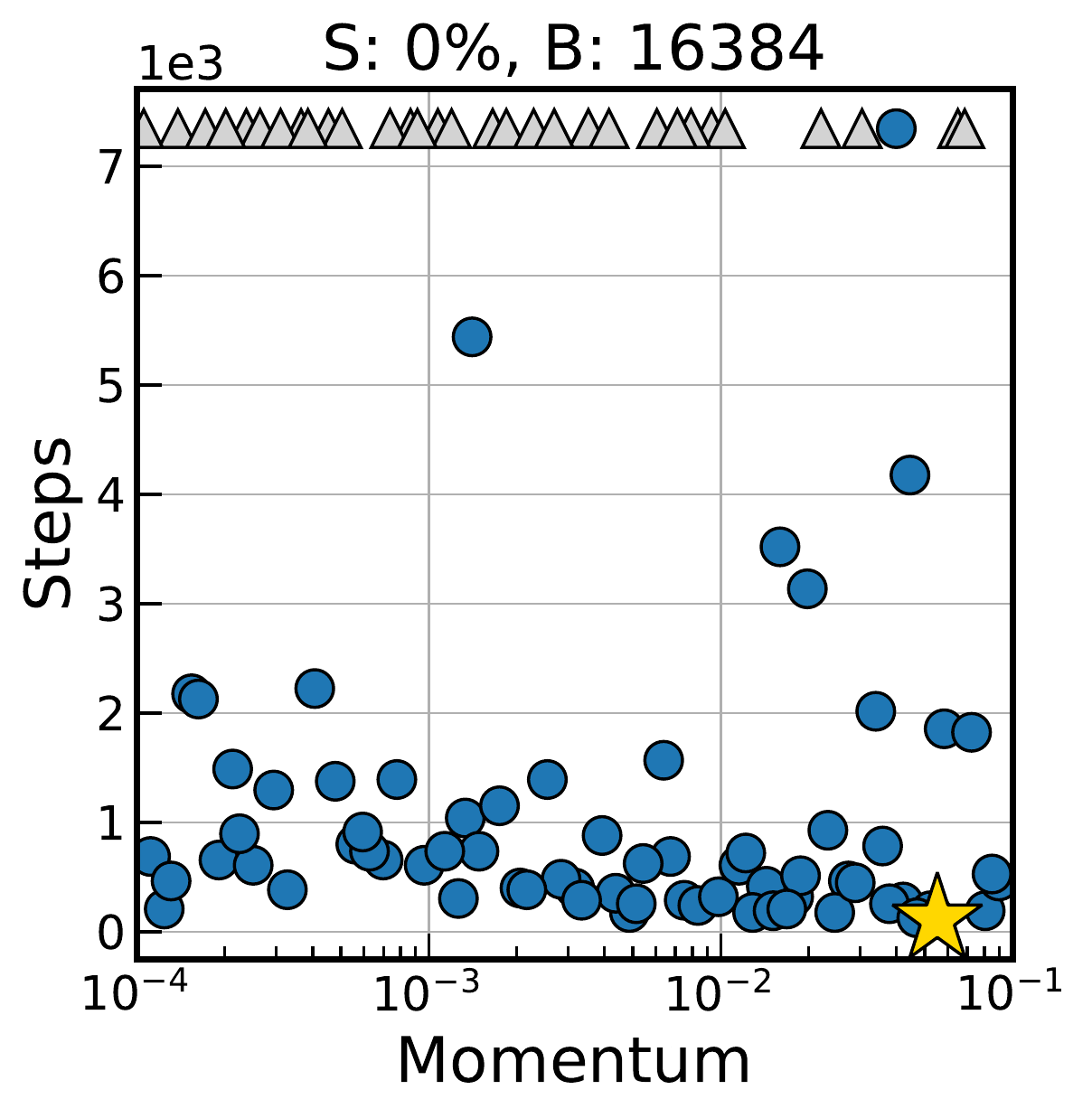}
        \includegraphics[height=26mm]{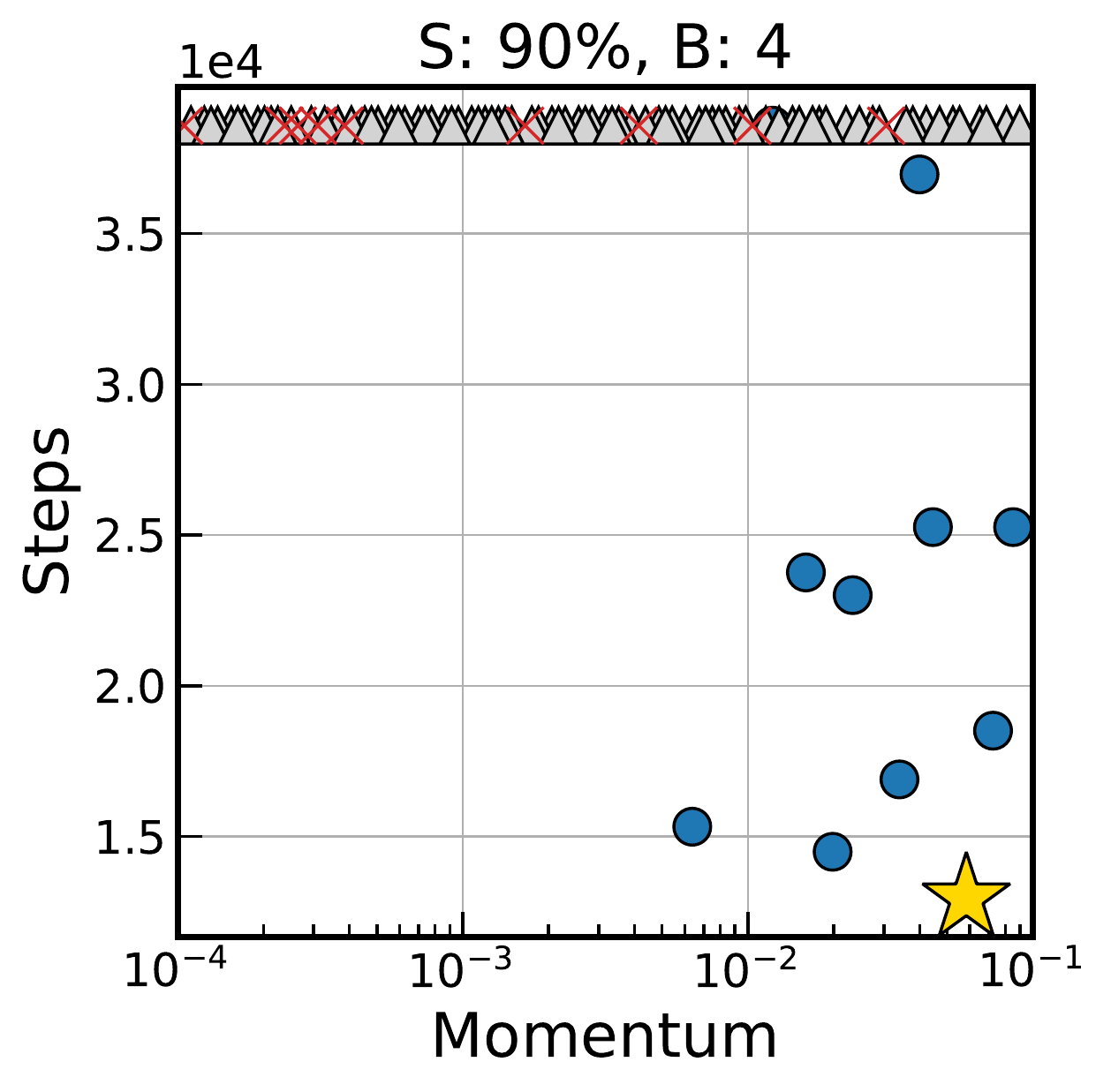}
        \includegraphics[height=26mm]{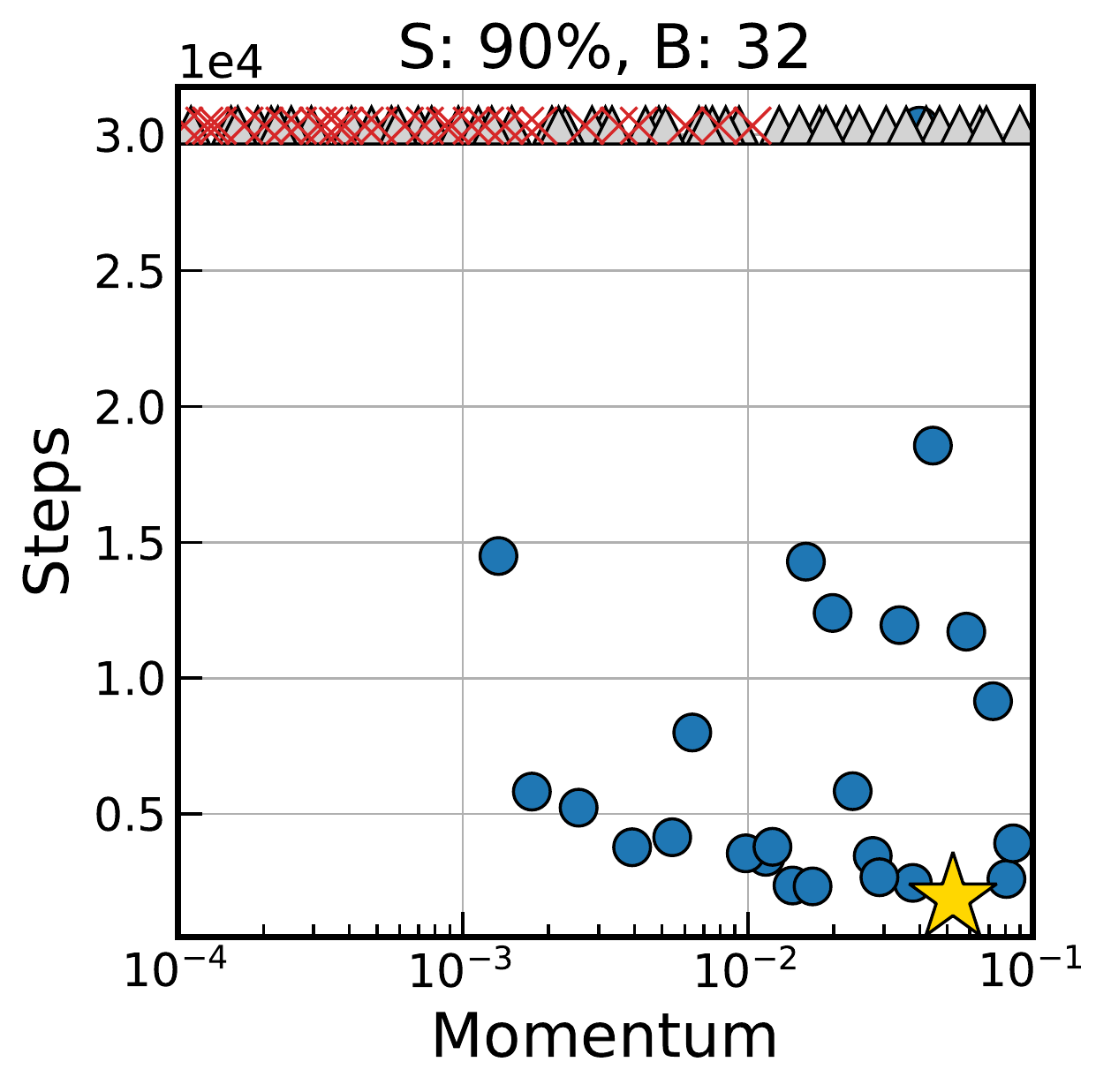}
        \includegraphics[height=26mm]{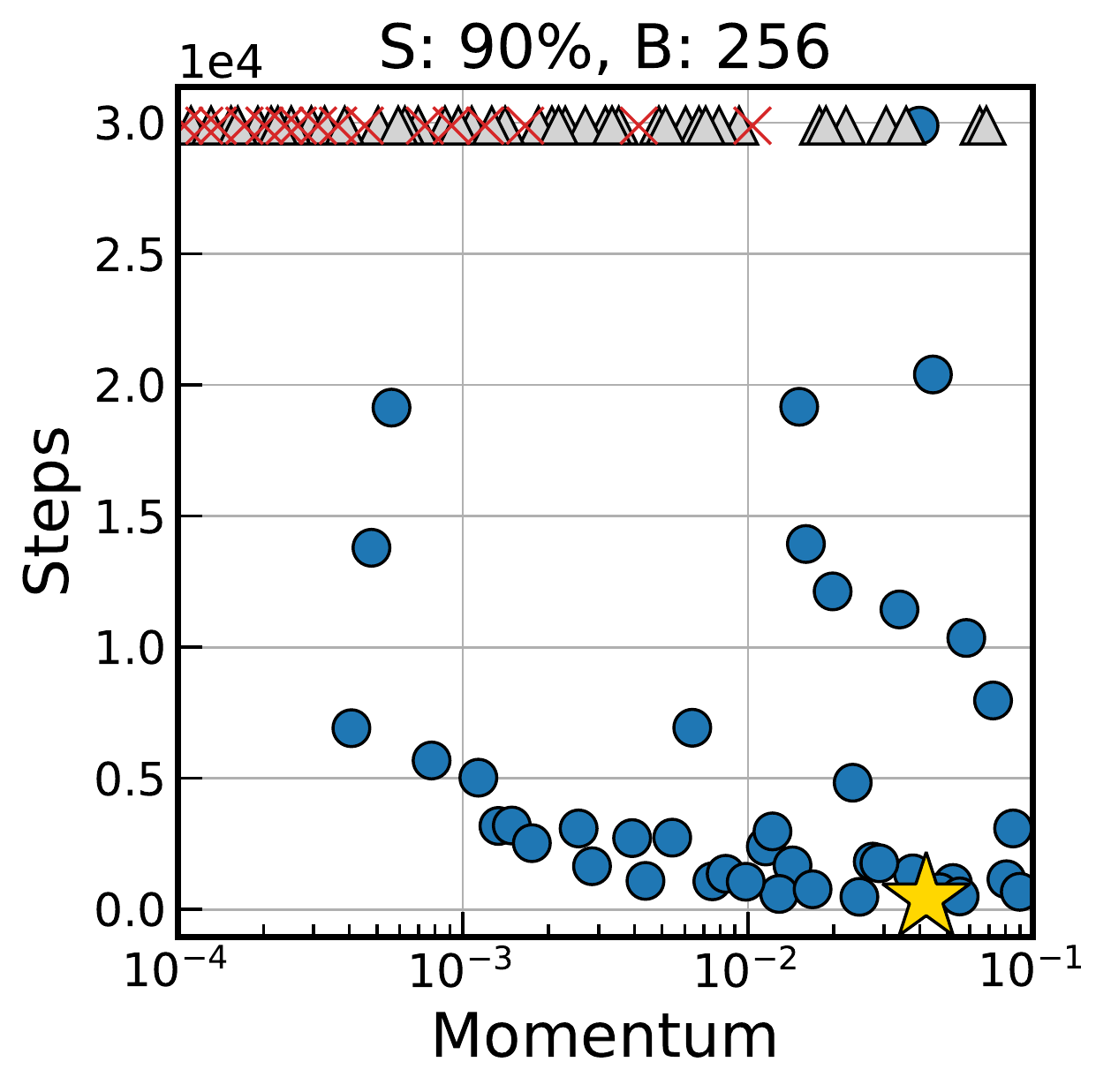}
        \includegraphics[height=26mm]{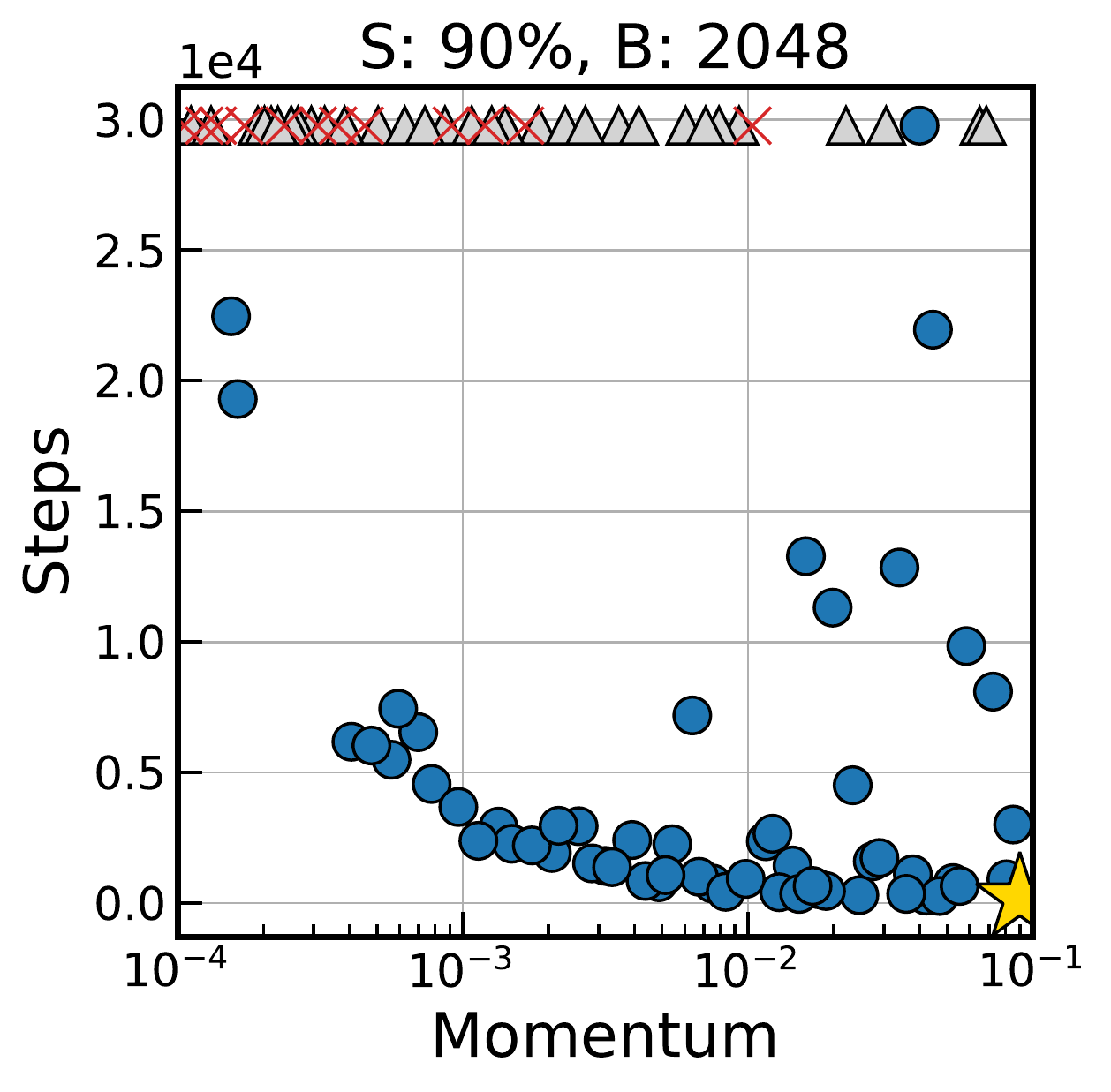}
        \includegraphics[height=26mm]{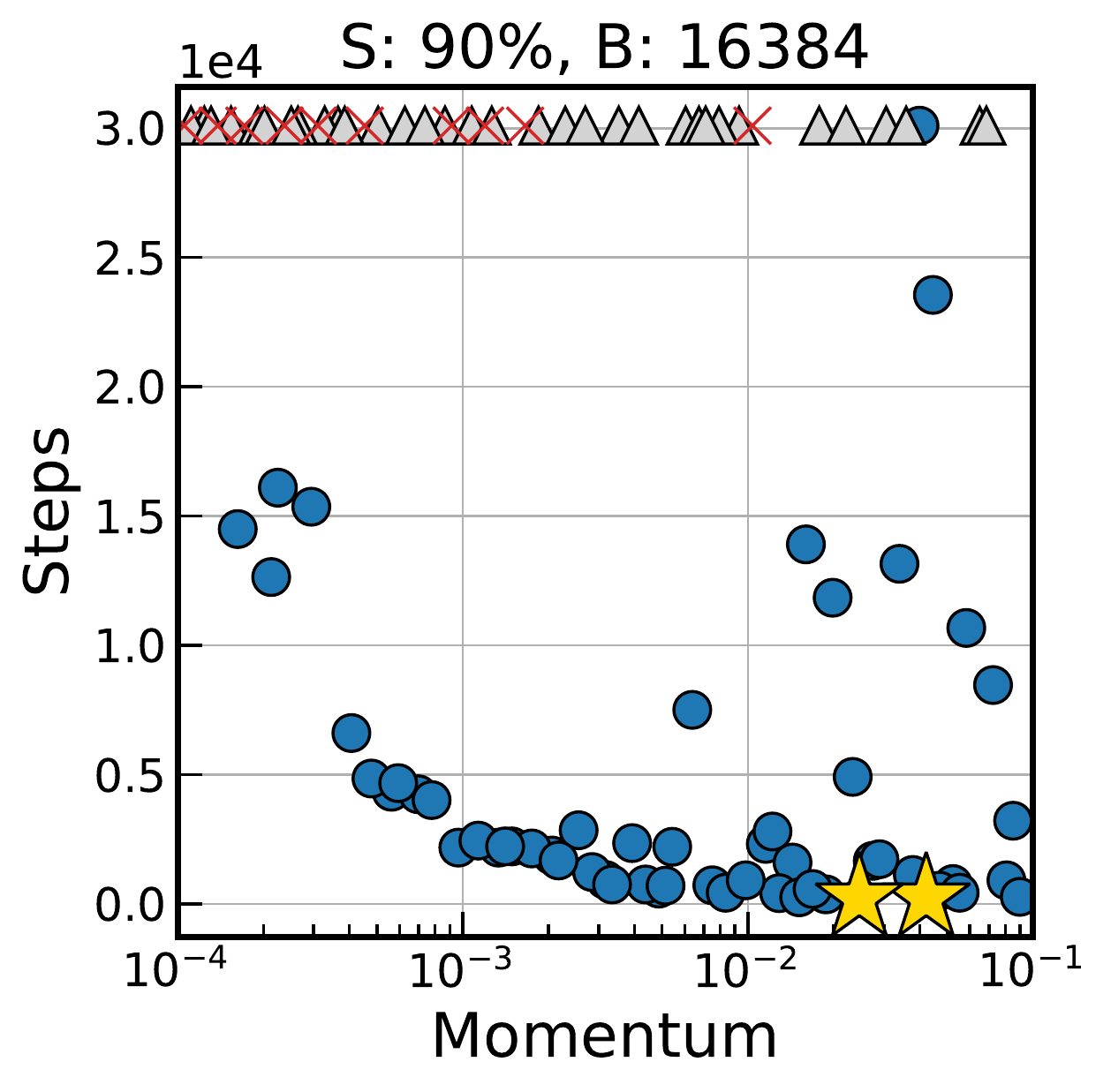}
    \end{subfigure}
    \caption{
        Meataparameter search results for the workloads of \{MNIST, Simple-CNN, Momentum\} with a constant learning rate.
    }
    \label{fig:mparams-mnist-momentum-more}
    \vspace{-4mm}
\end{figure}

\begin{figure}[t]
    \centering
    \begin{subfigure}{.9998\textwidth}
        \centering
        \includegraphics[height=26mm]{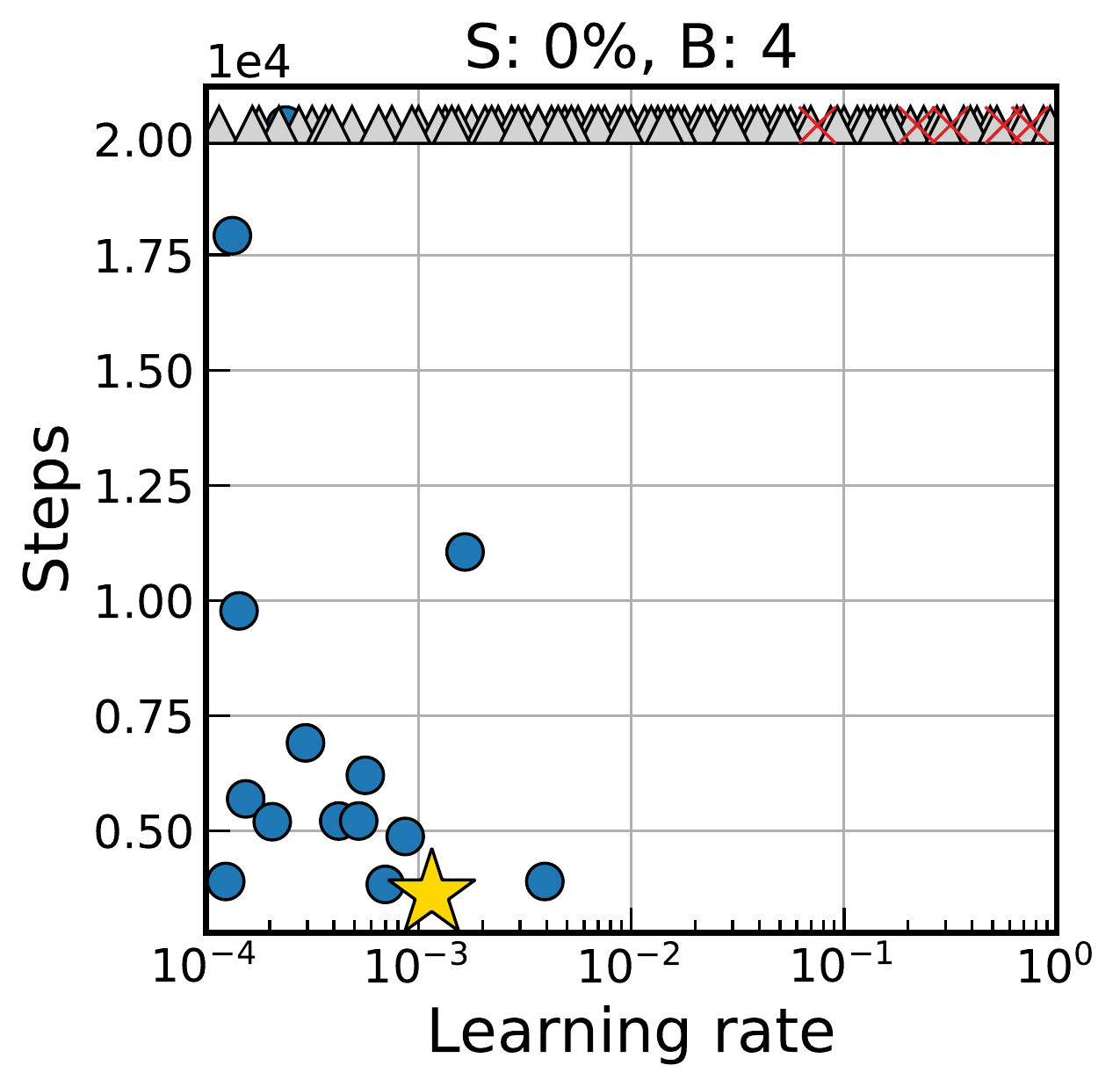}
        \includegraphics[height=26mm]{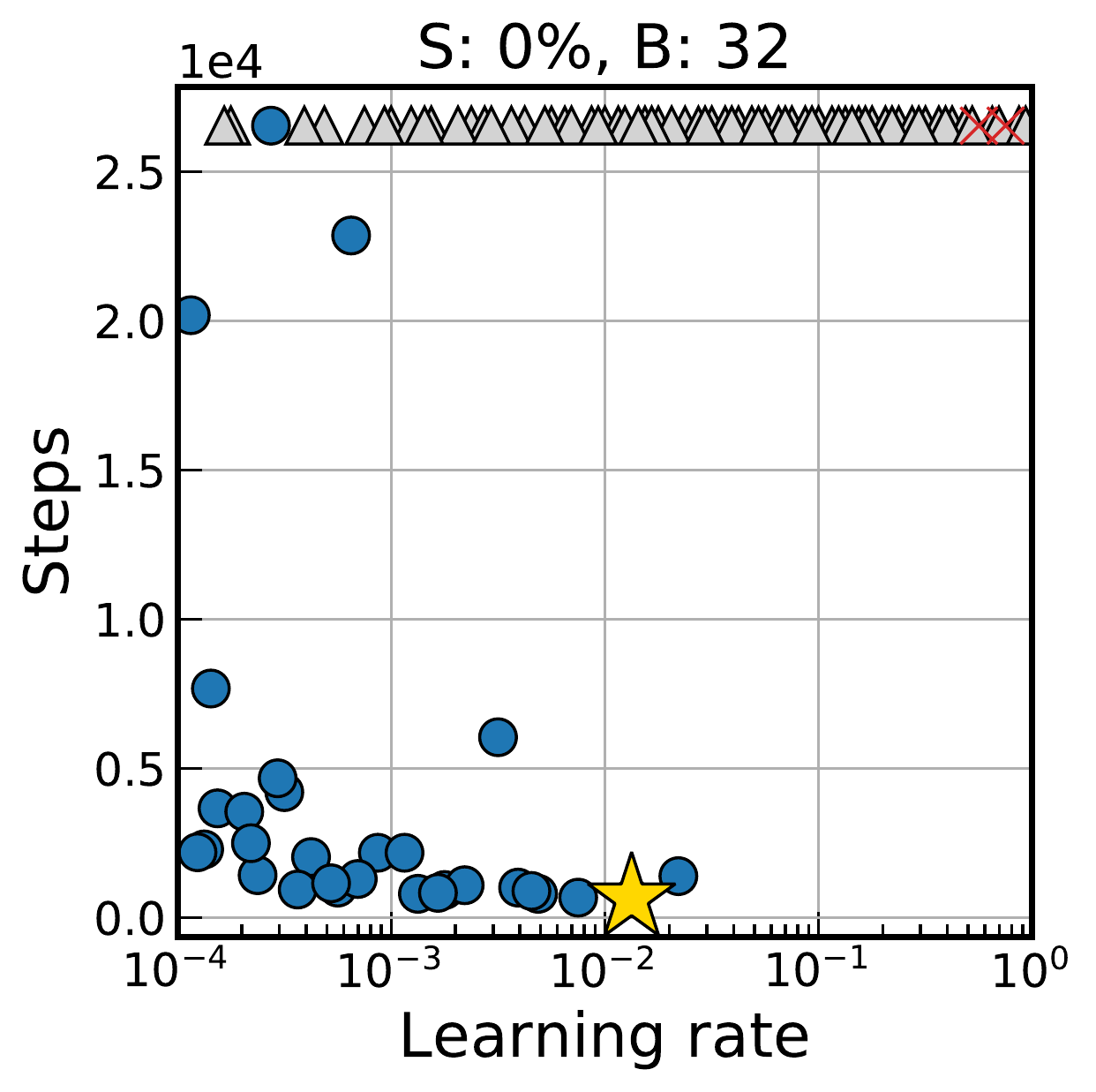}
        \includegraphics[height=26mm]{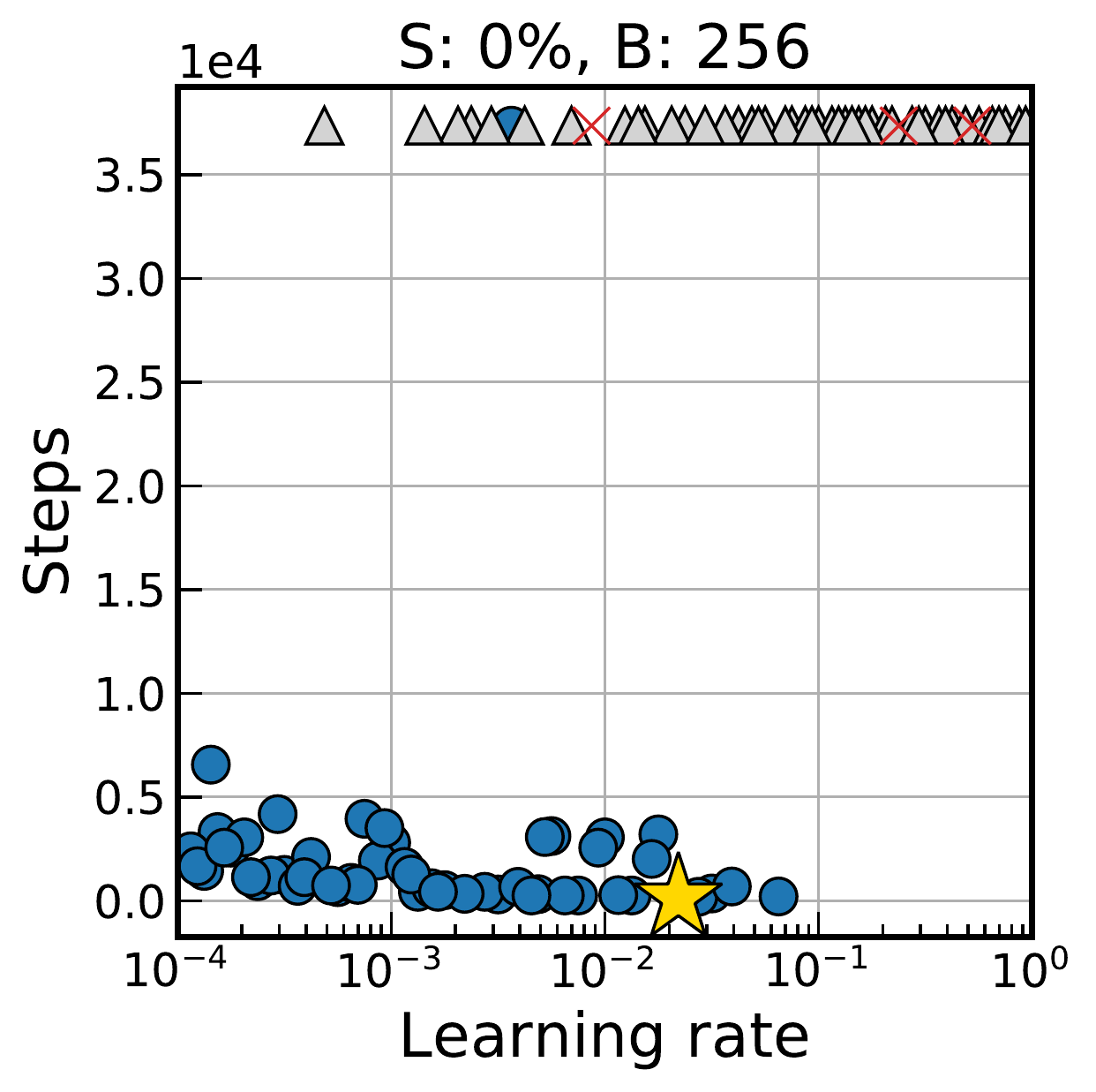}
        \includegraphics[height=26mm]{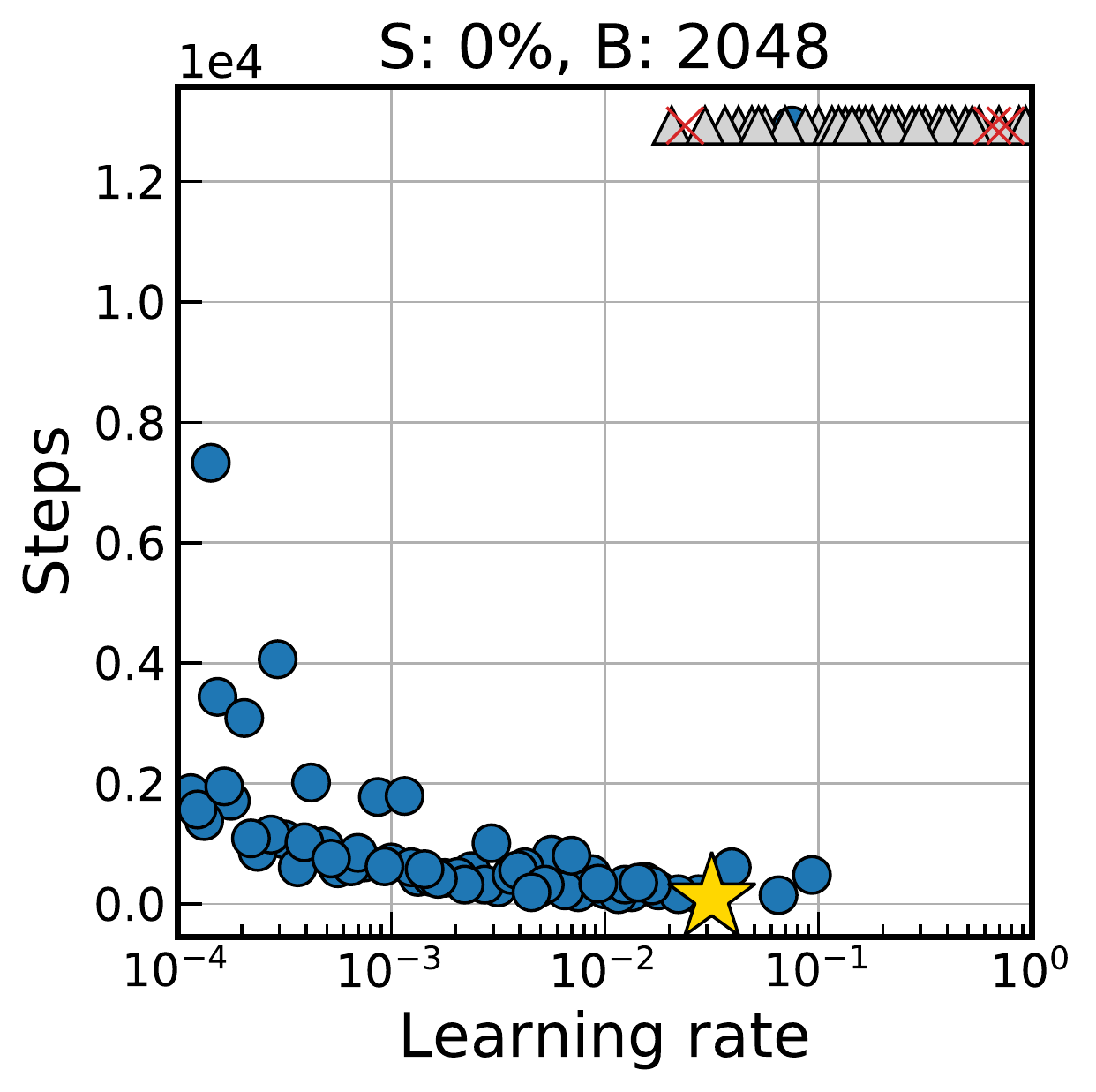}
        \includegraphics[height=26mm]{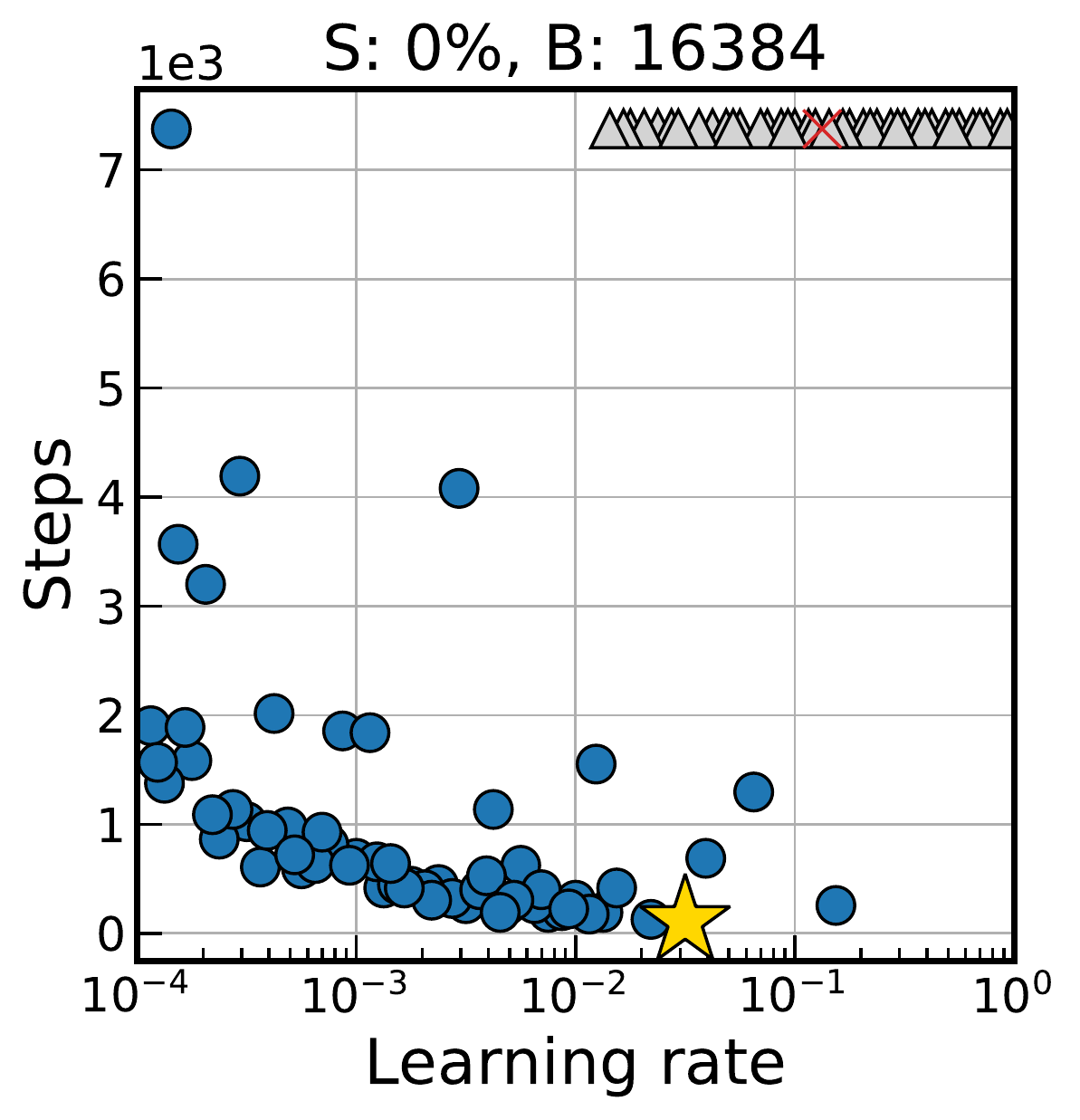}
        \includegraphics[height=26mm]{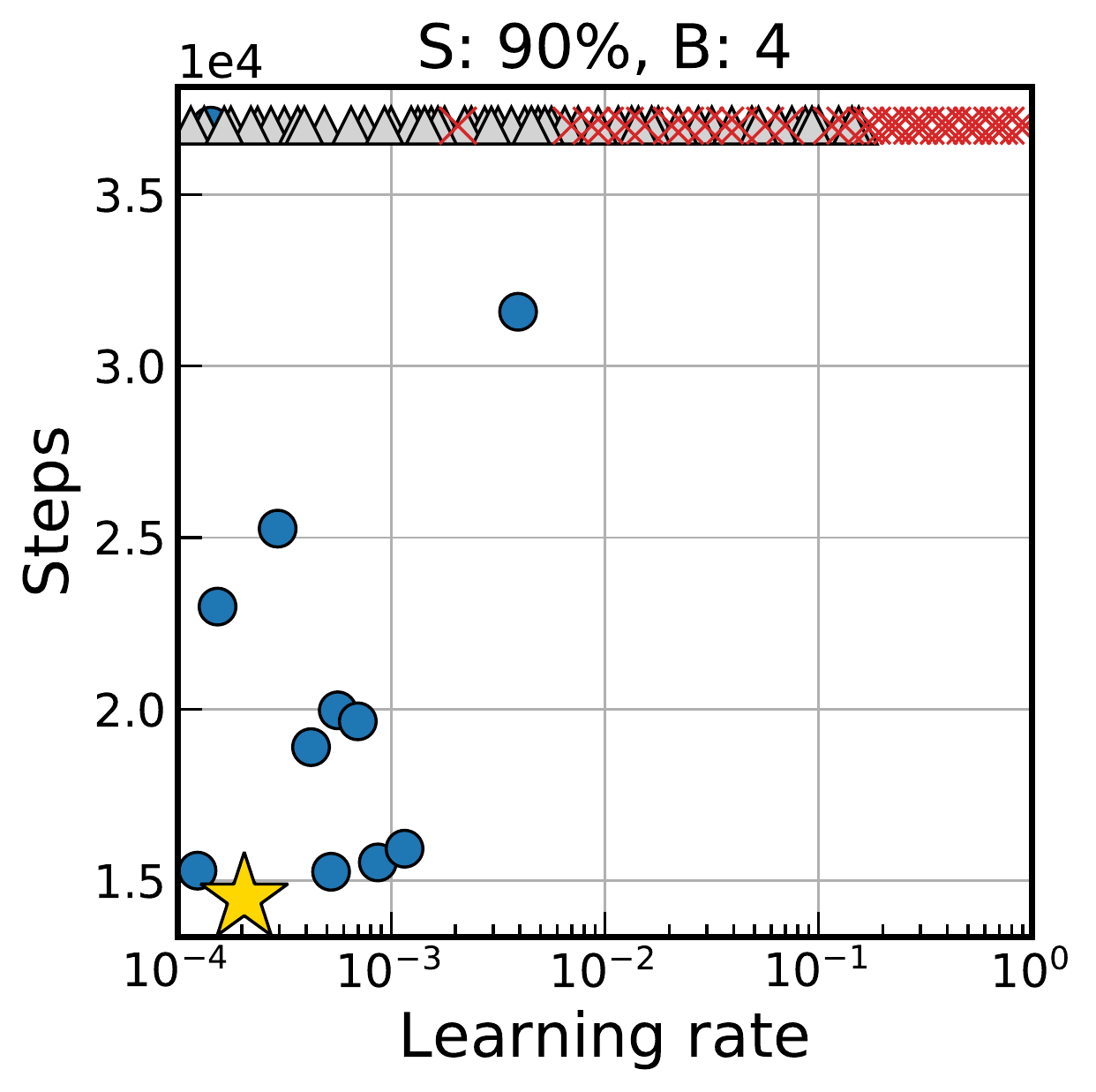}
        \includegraphics[height=26mm]{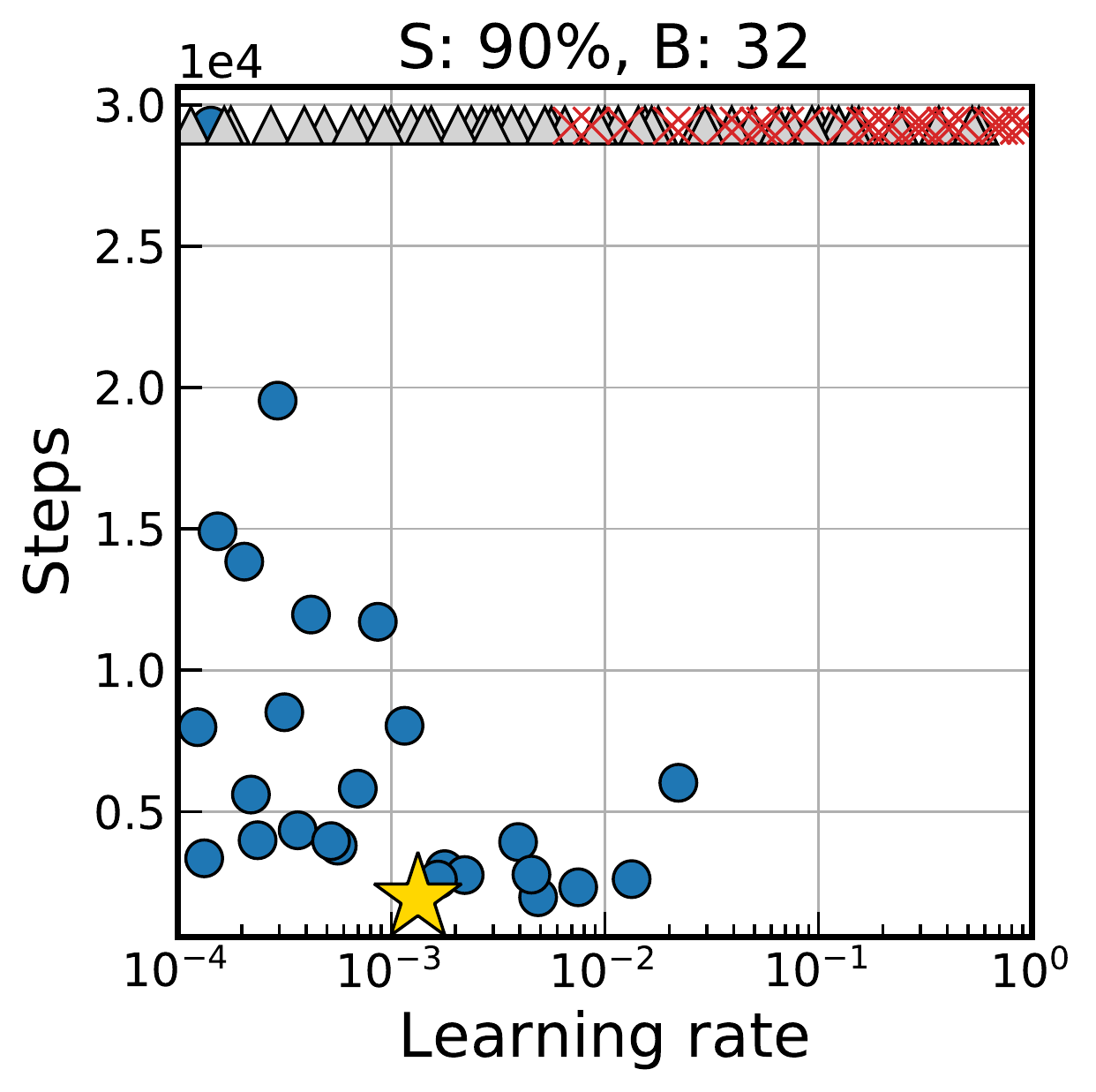}
        \includegraphics[height=26mm]{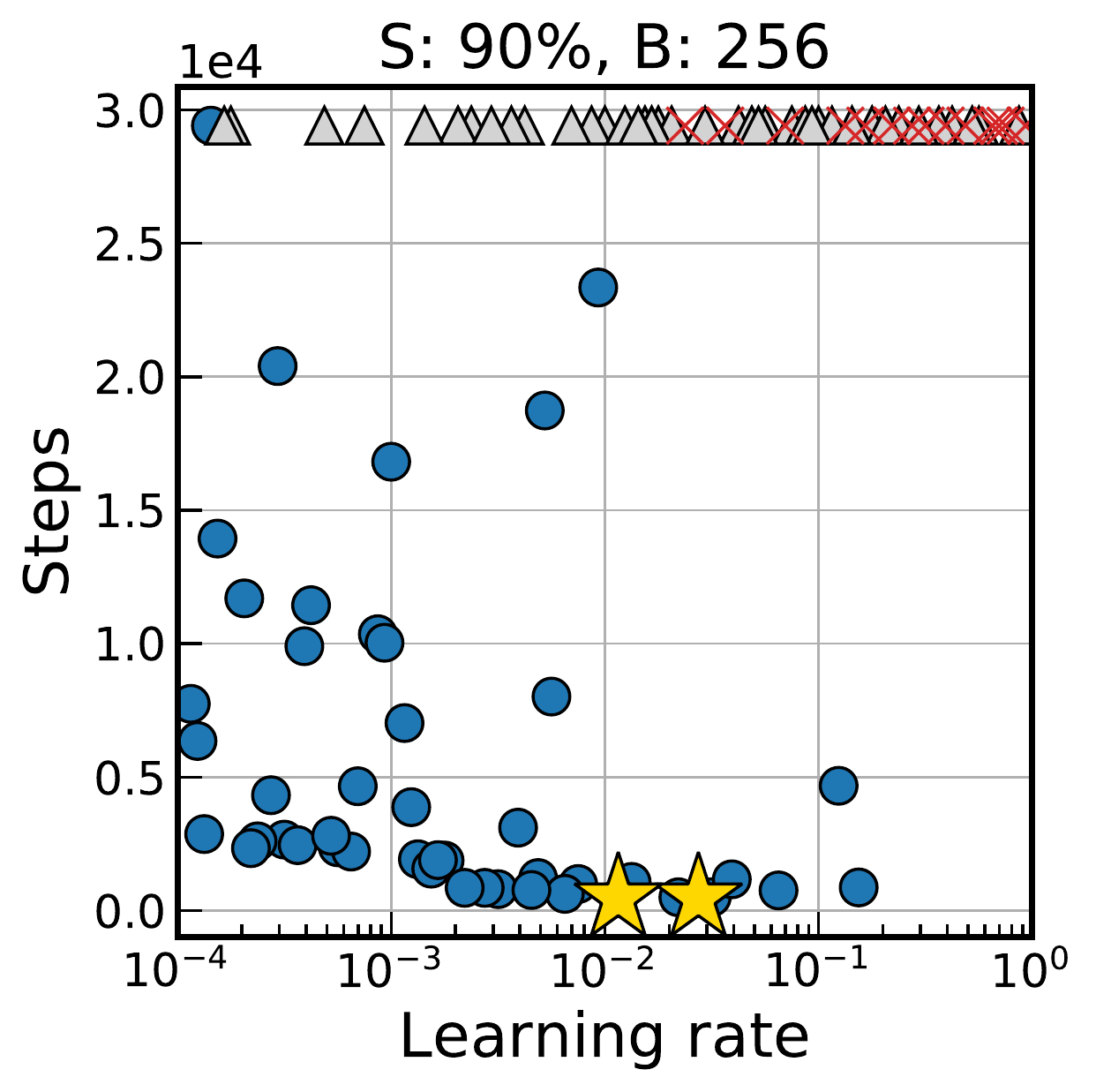}
        \includegraphics[height=26mm]{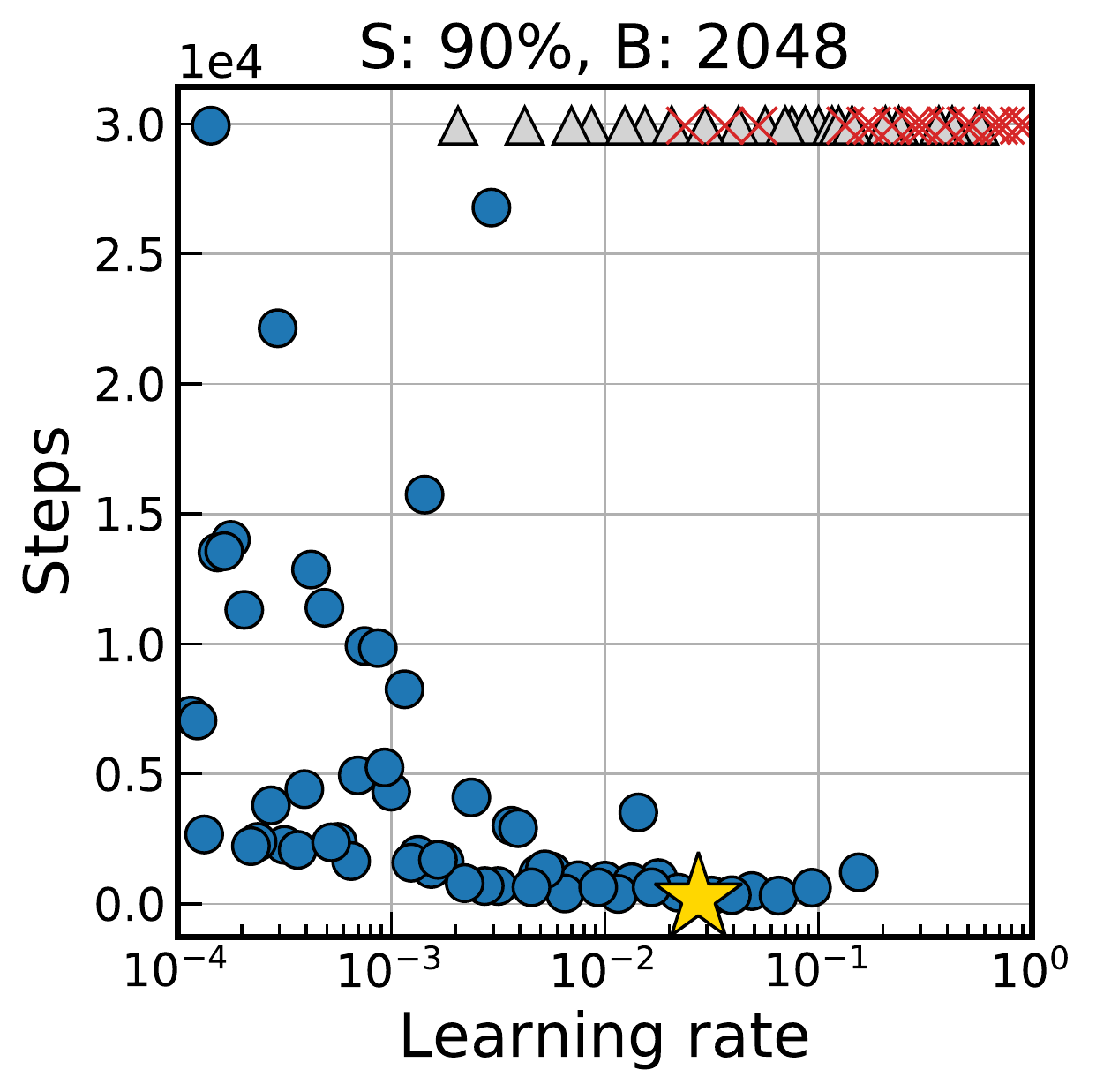}
        \includegraphics[height=26mm]{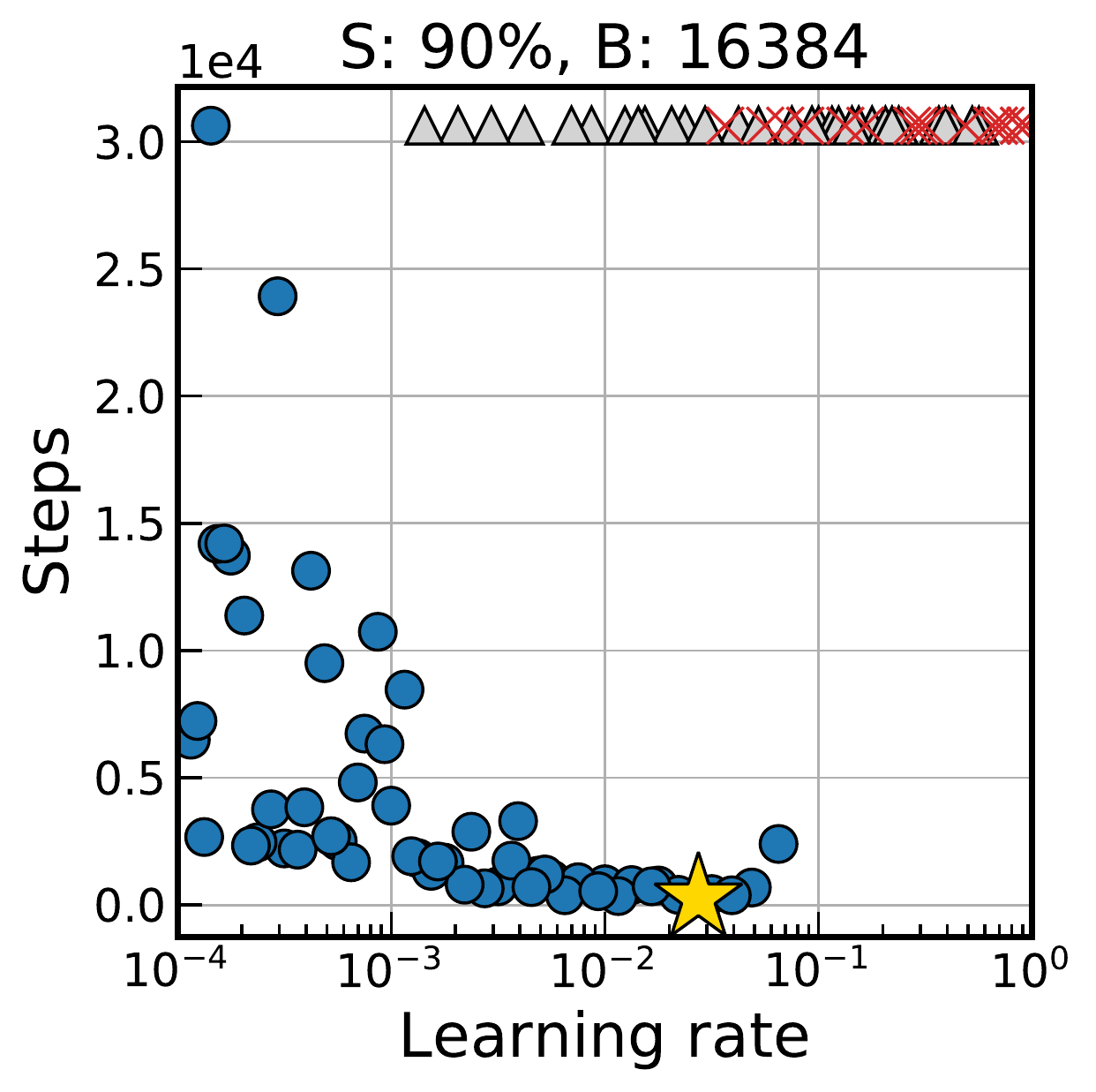}
        \includegraphics[height=26mm]{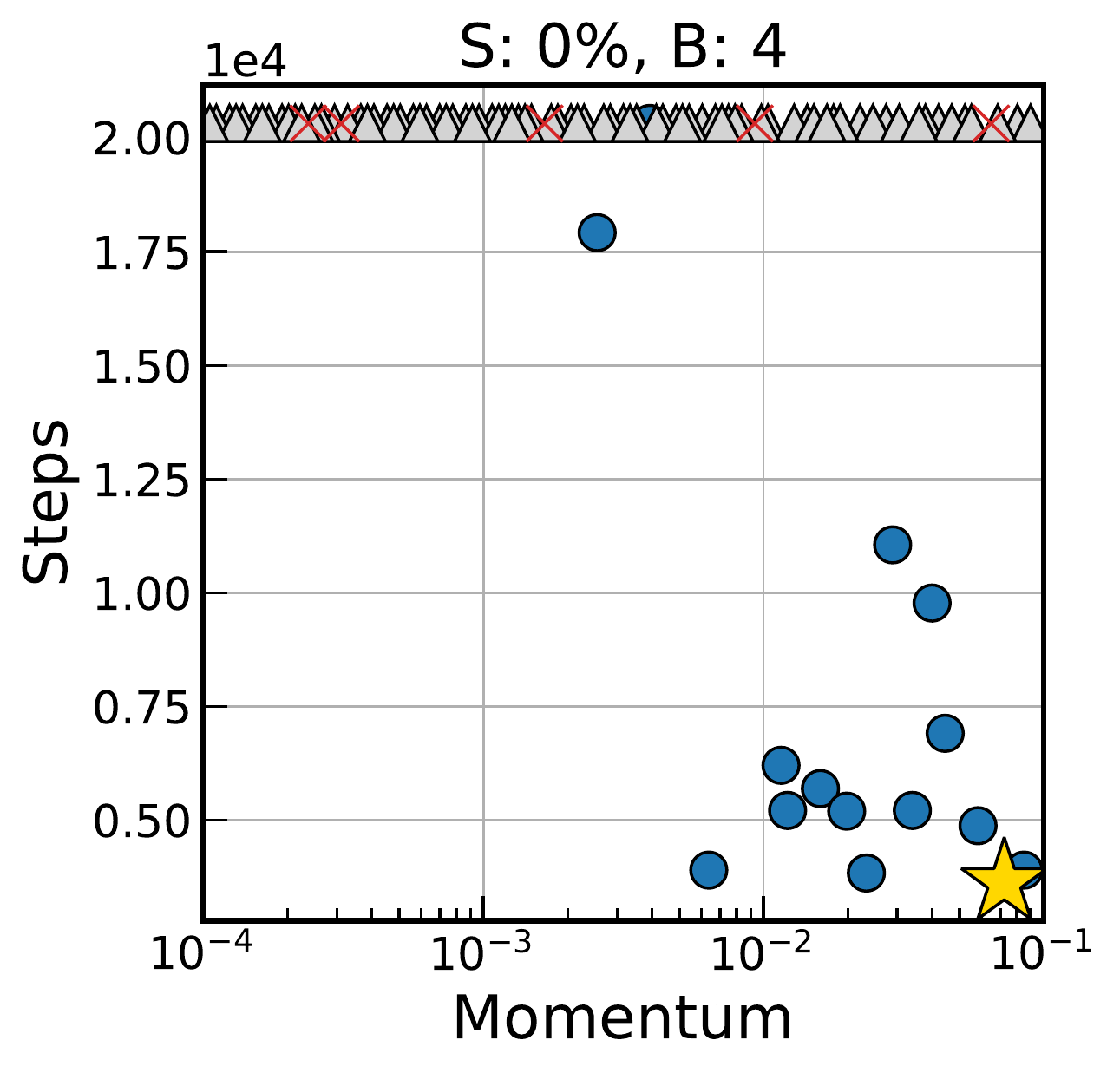}
        \includegraphics[height=26mm]{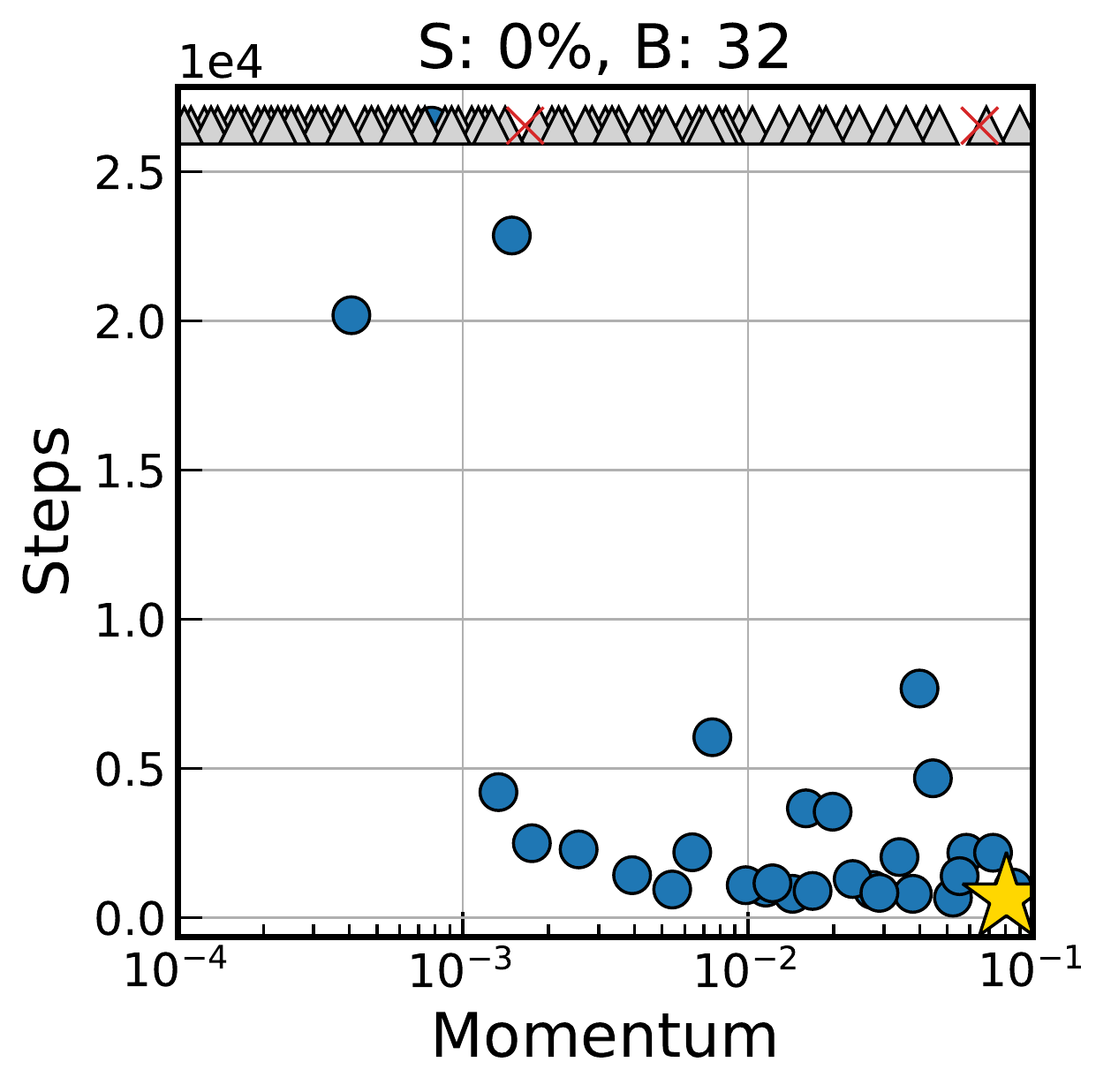}
        \includegraphics[height=26mm]{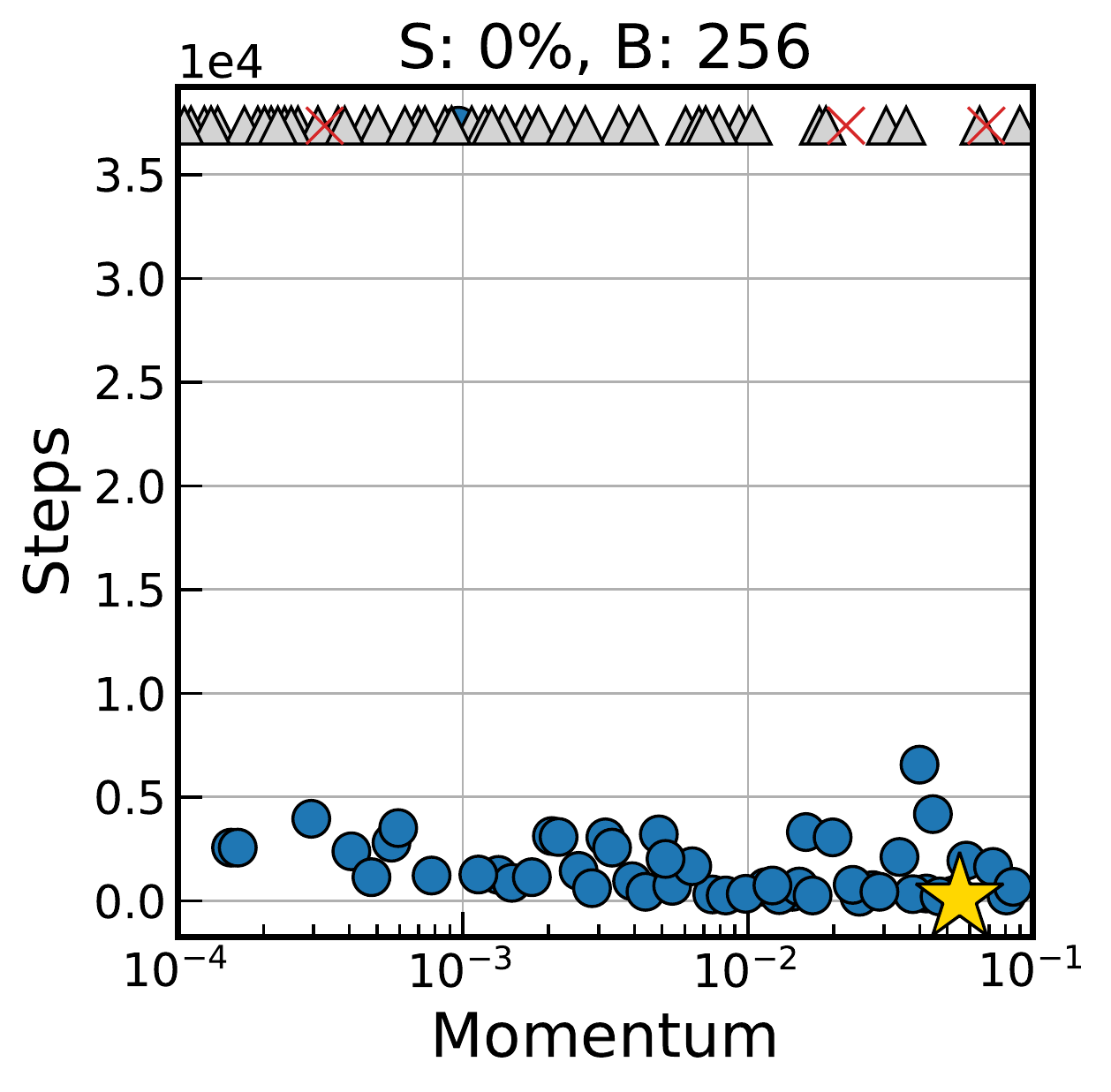}
        \includegraphics[height=26mm]{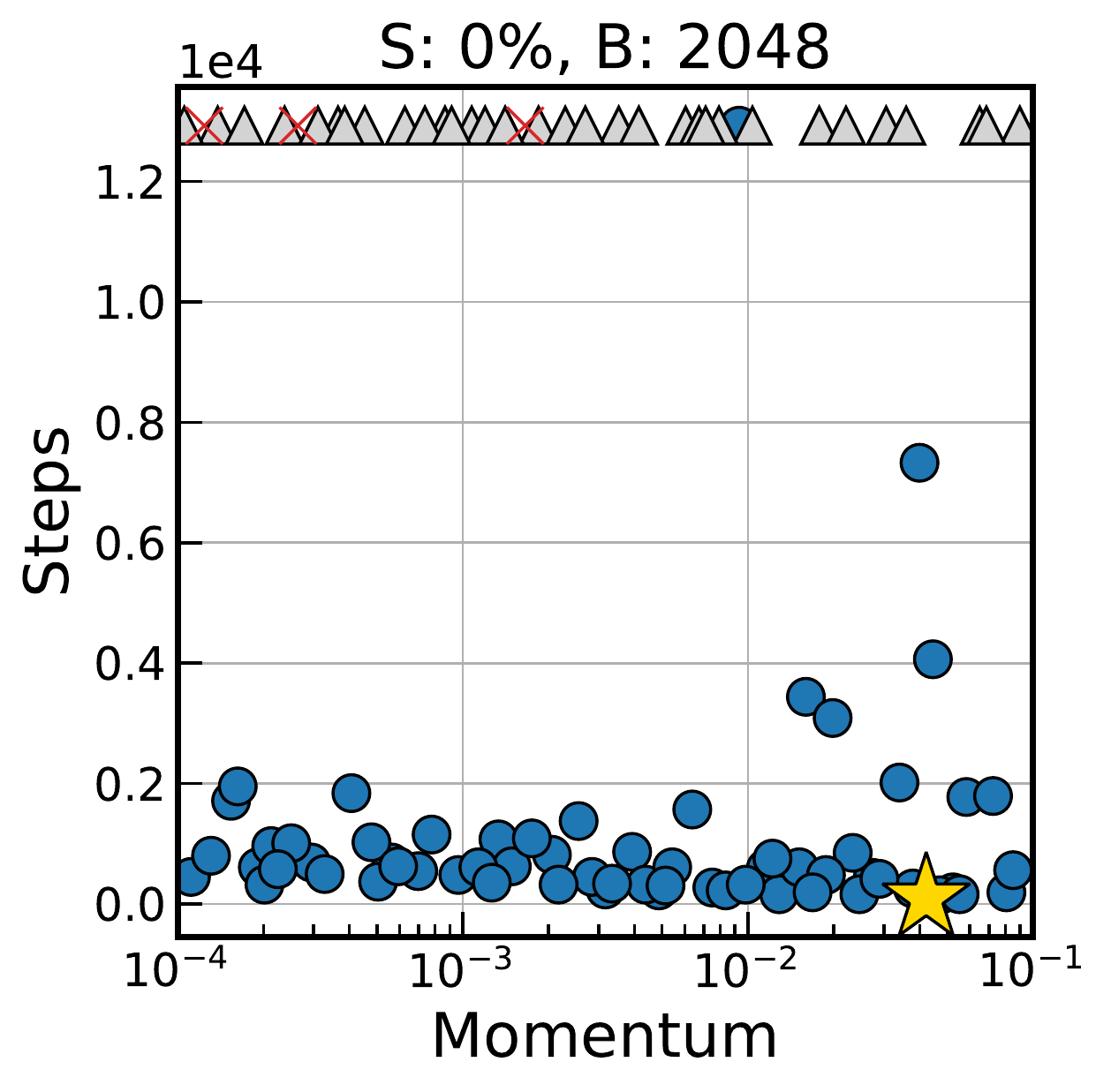}
        \includegraphics[height=26mm]{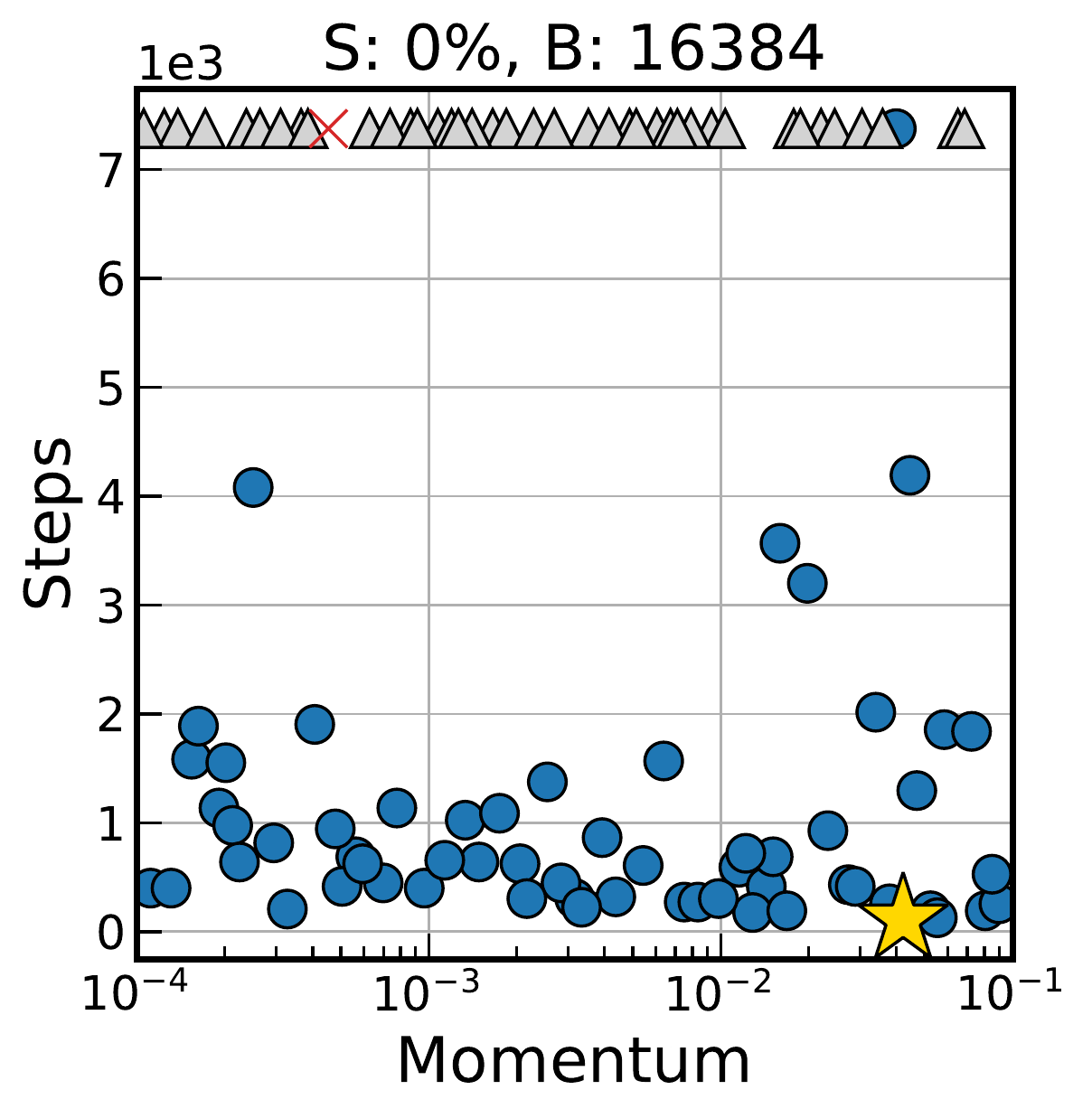}
        \includegraphics[height=26mm]{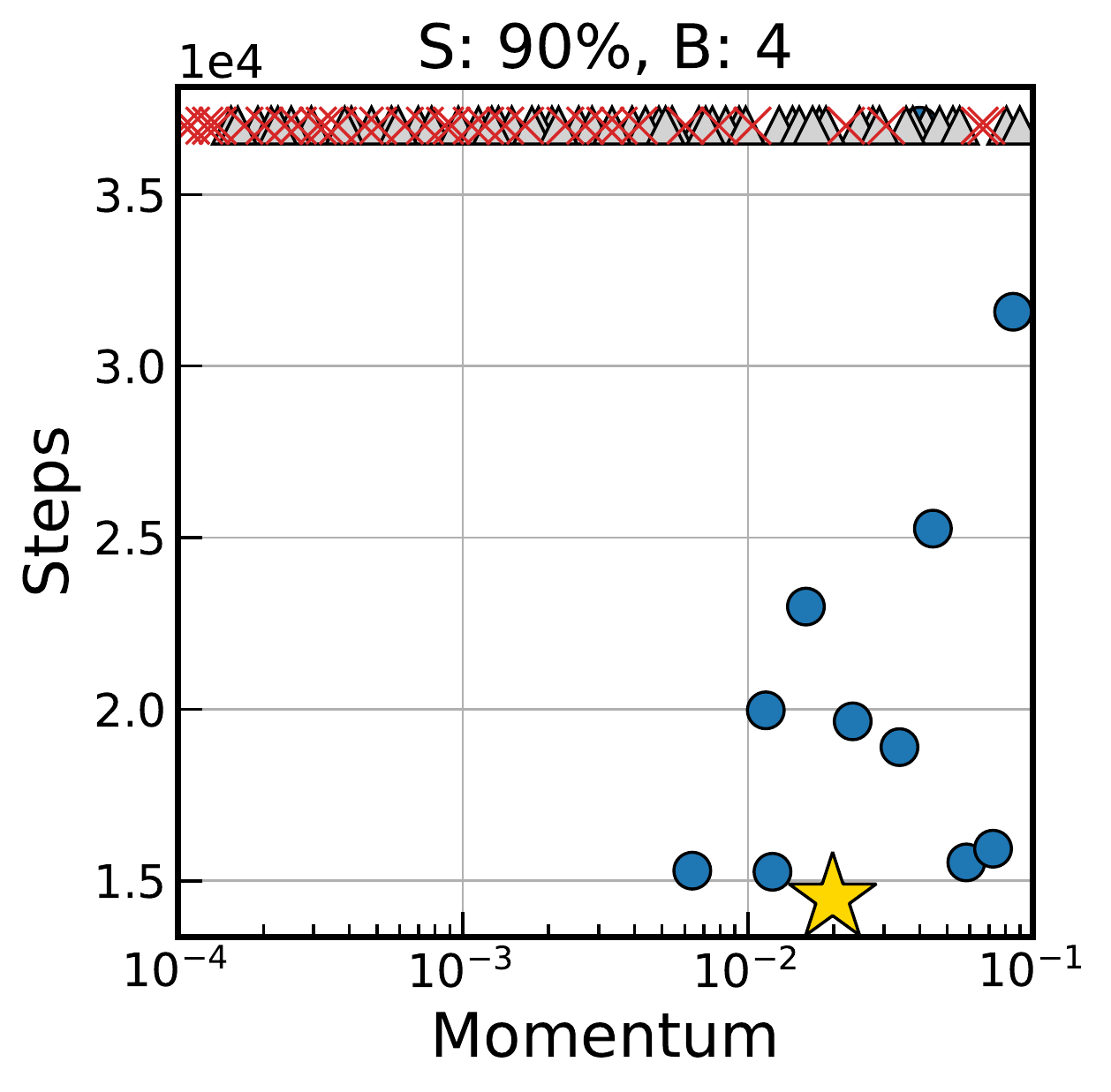}
        \includegraphics[height=26mm]{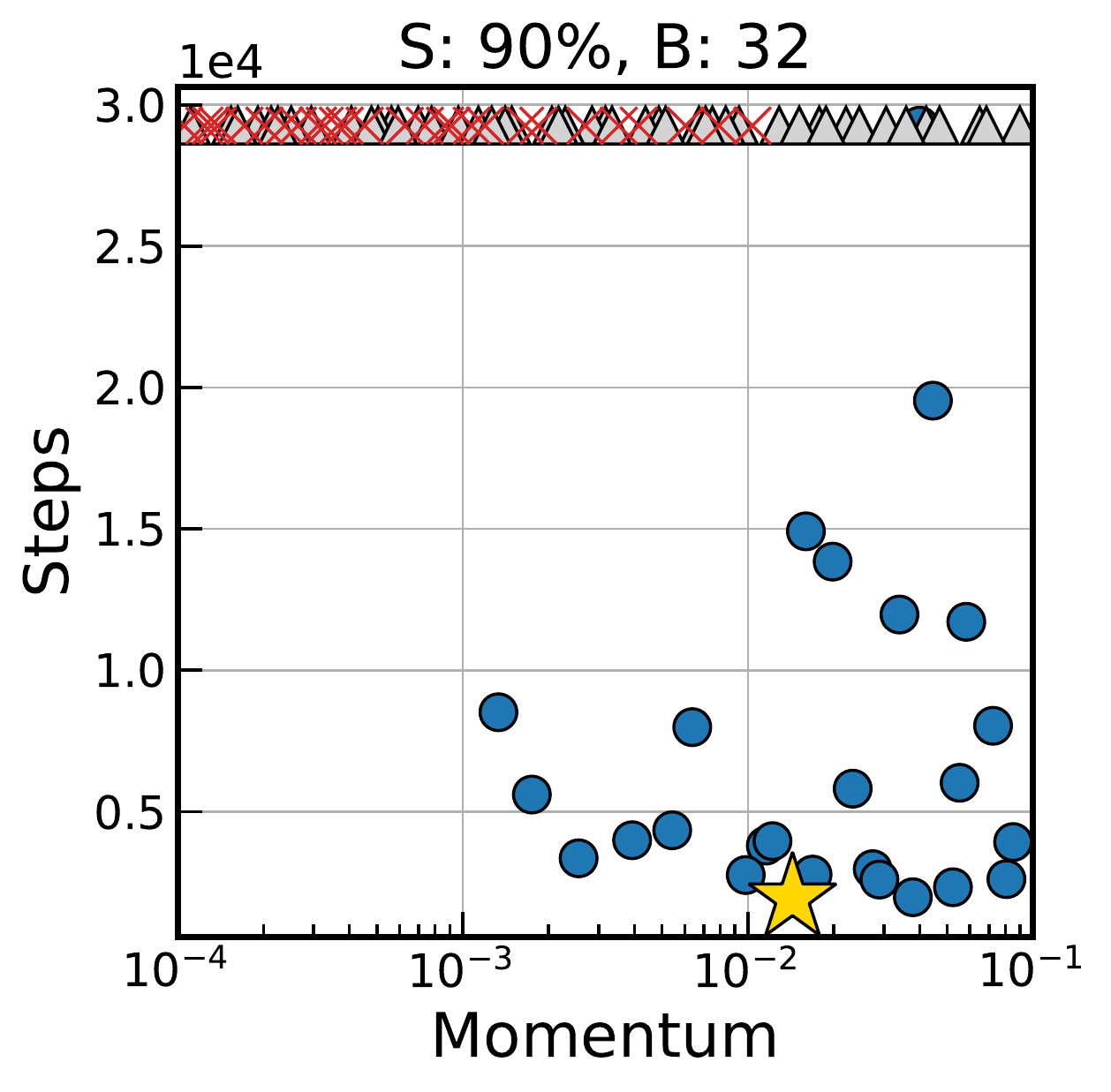}
        \includegraphics[height=26mm]{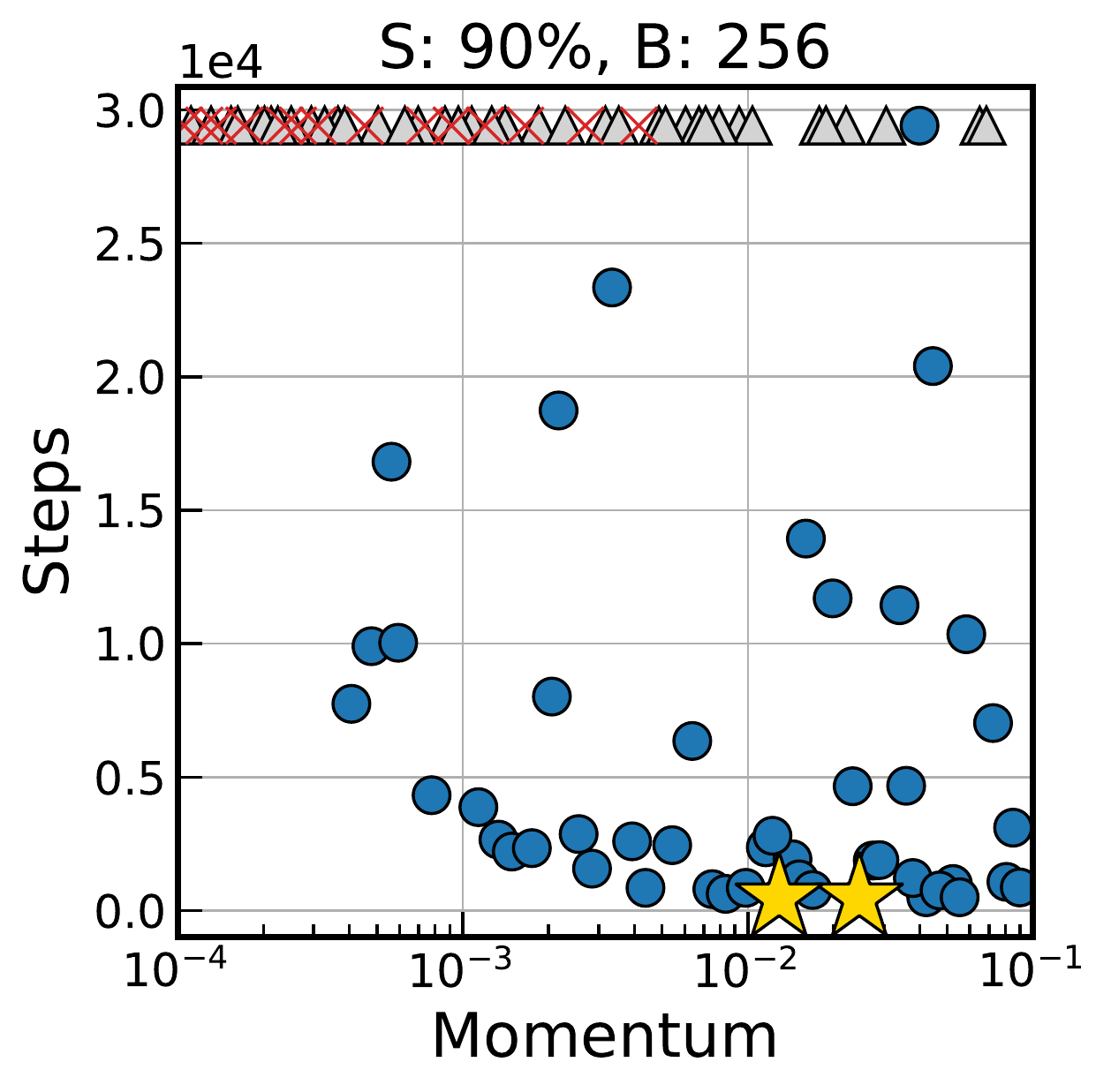}
        \includegraphics[height=26mm]{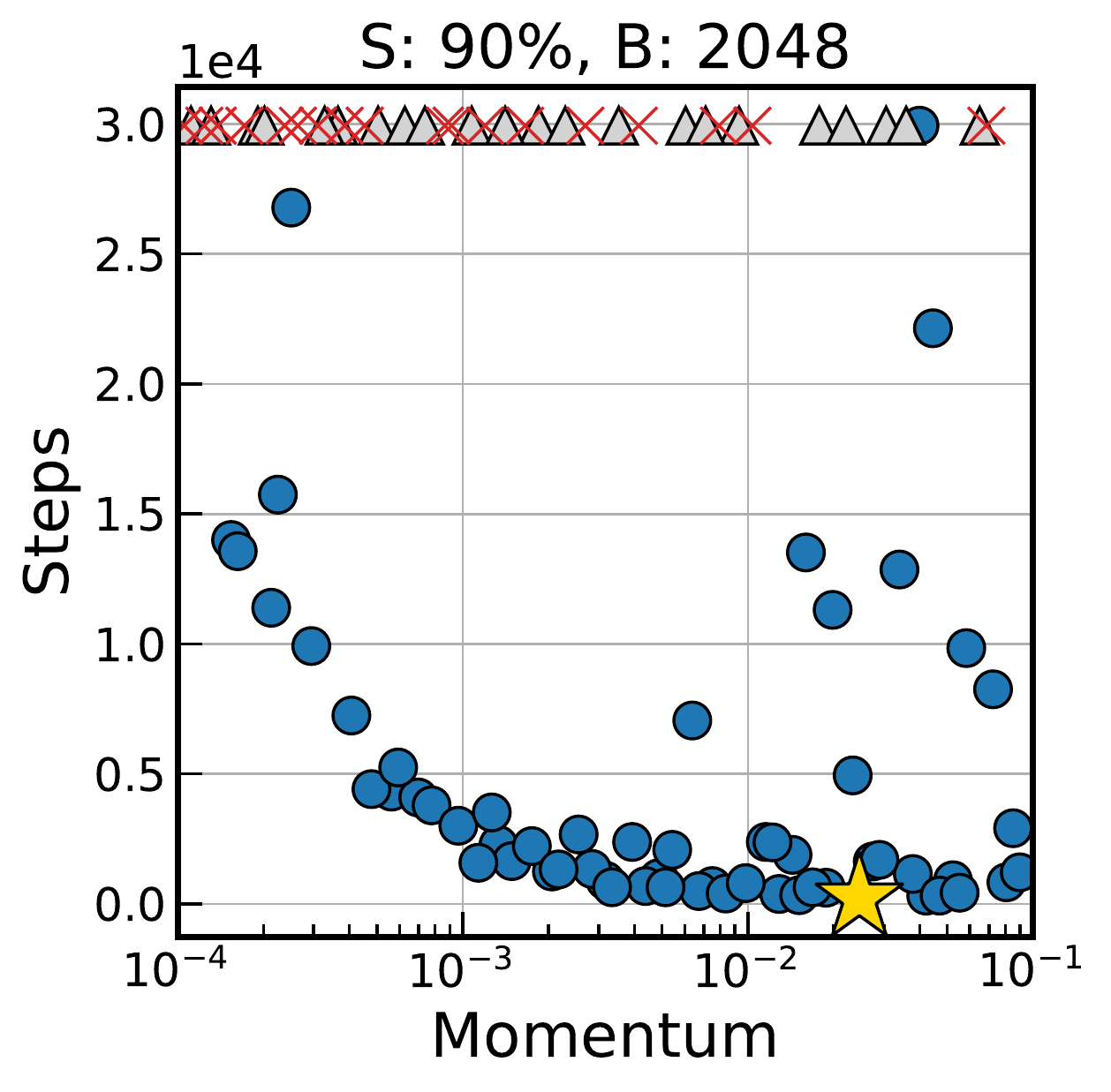}
        \includegraphics[height=26mm]{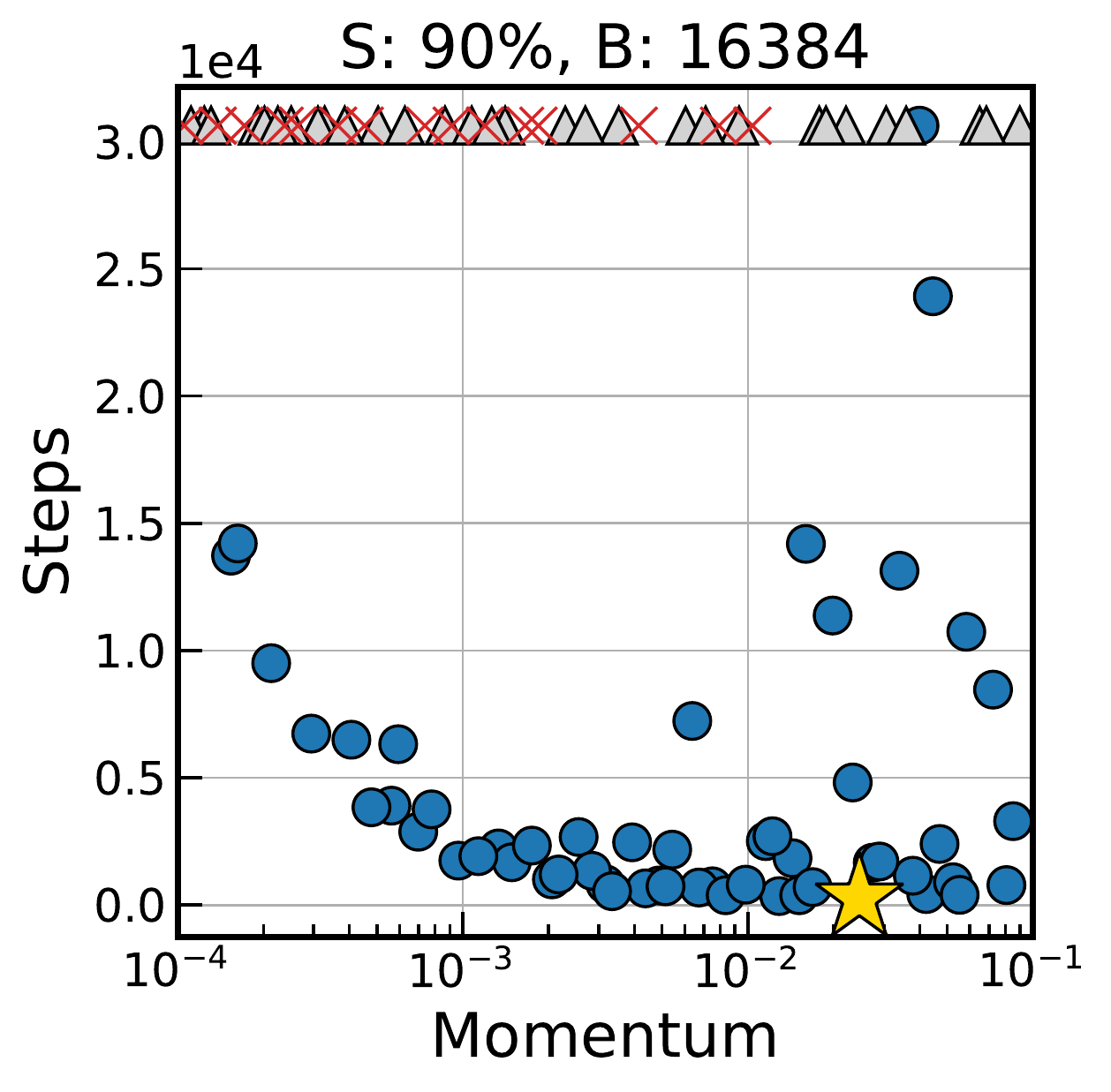}
    \end{subfigure}
    \caption{
        Meataparameter search results for the workloads of \{MNIST, Simple-CNN, Nesterov\} with a constant learning rate.
    }
    \label{fig:mparams-mnist-nesterov-more}
    \vspace{-4mm}
\end{figure}

\begin{figure}[t]
    \centering
    \begin{subfigure}{.9998\textwidth}
        \centering
        \includegraphics[height=26mm]{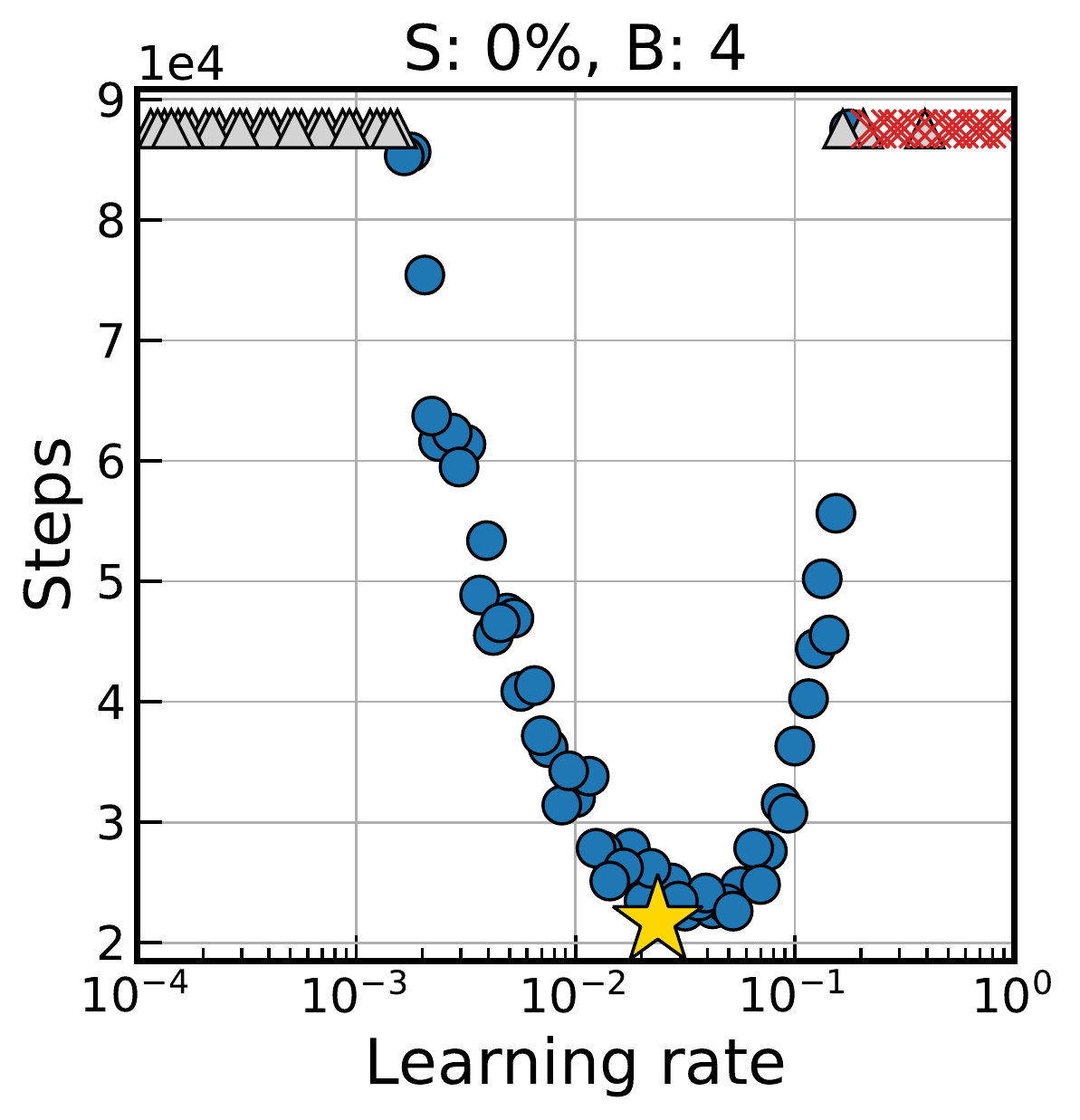}
        \includegraphics[height=26mm]{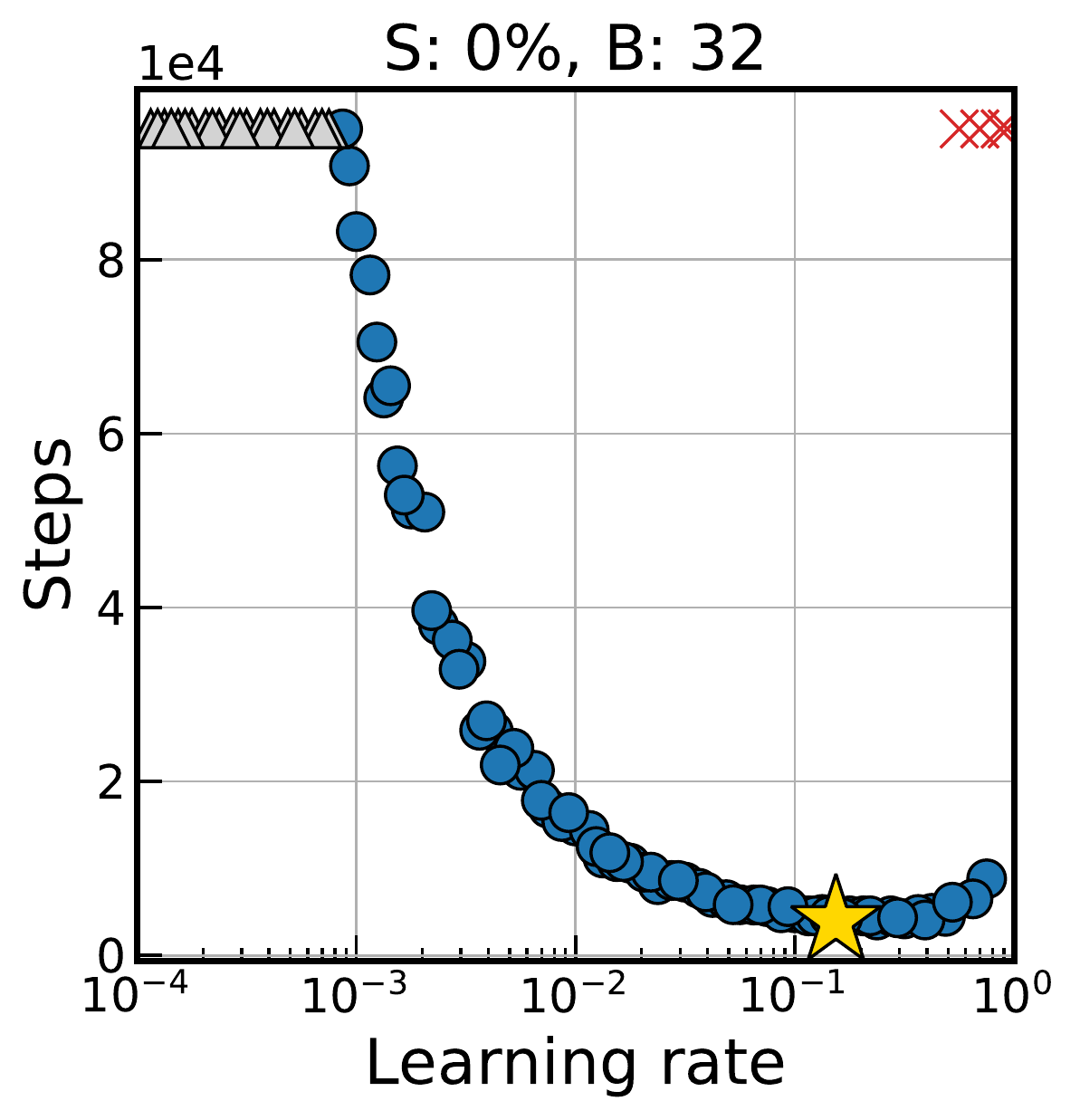}
        \includegraphics[height=26mm]{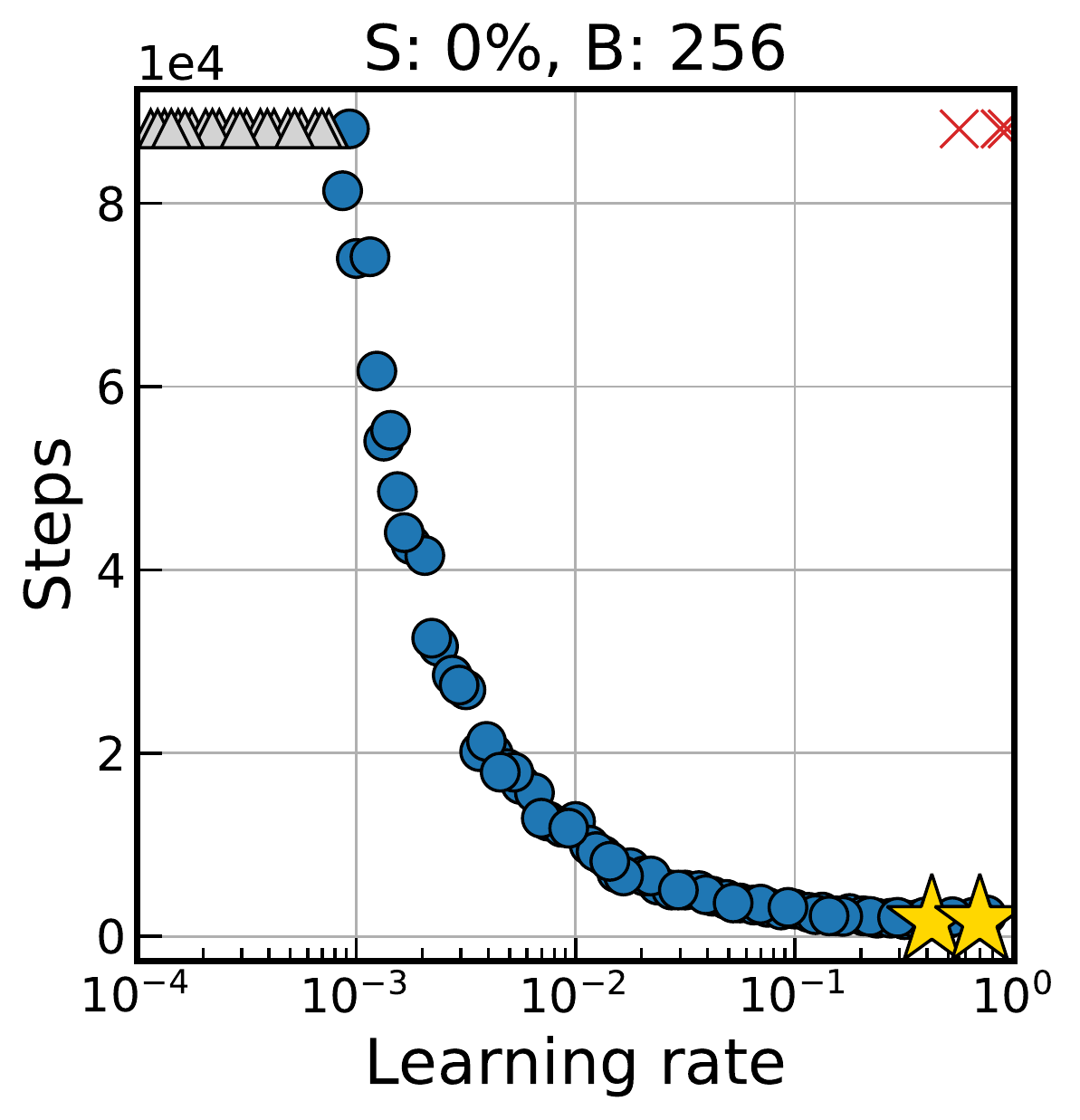}
        \includegraphics[height=26mm]{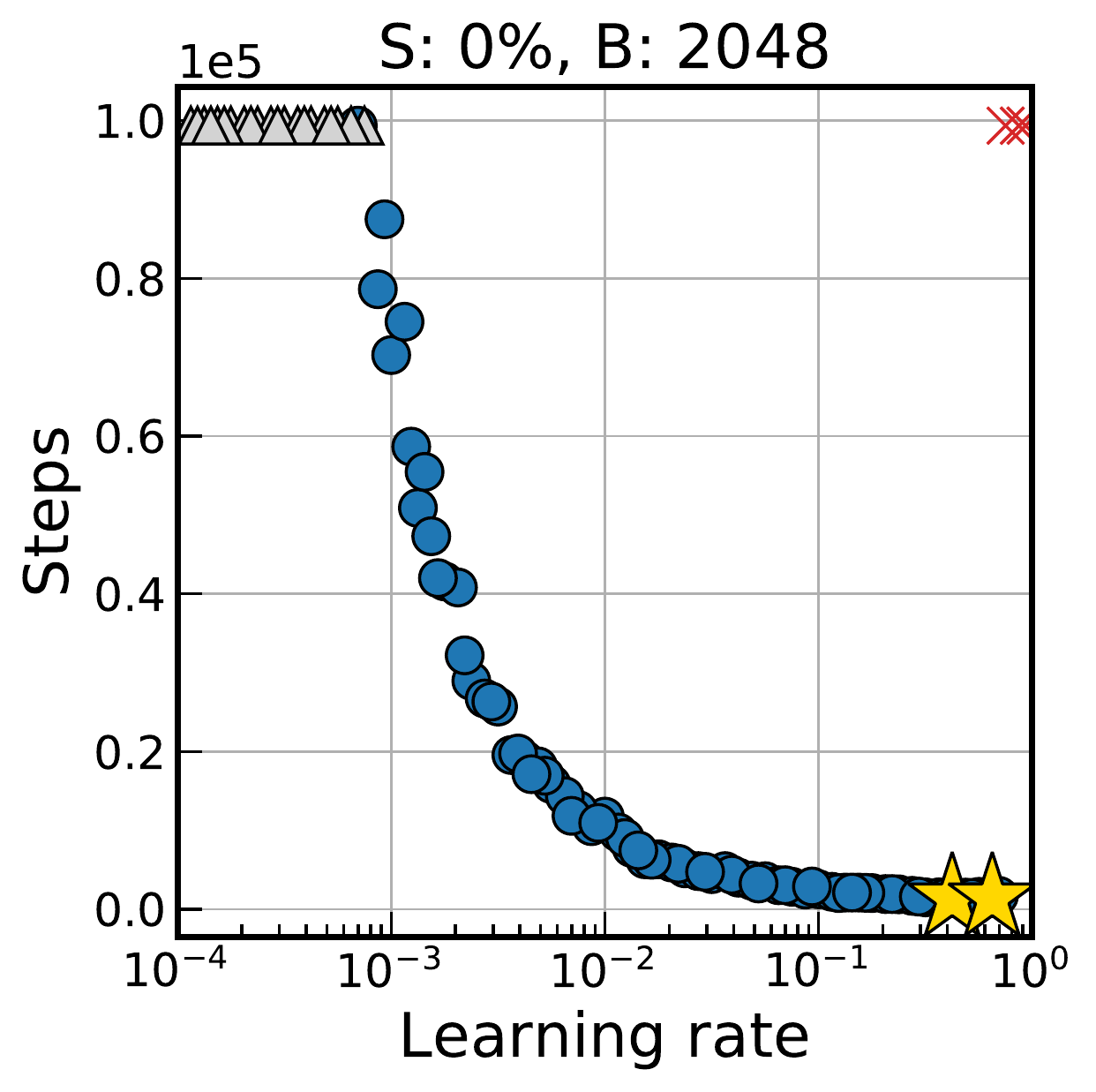}
        \includegraphics[height=26mm]{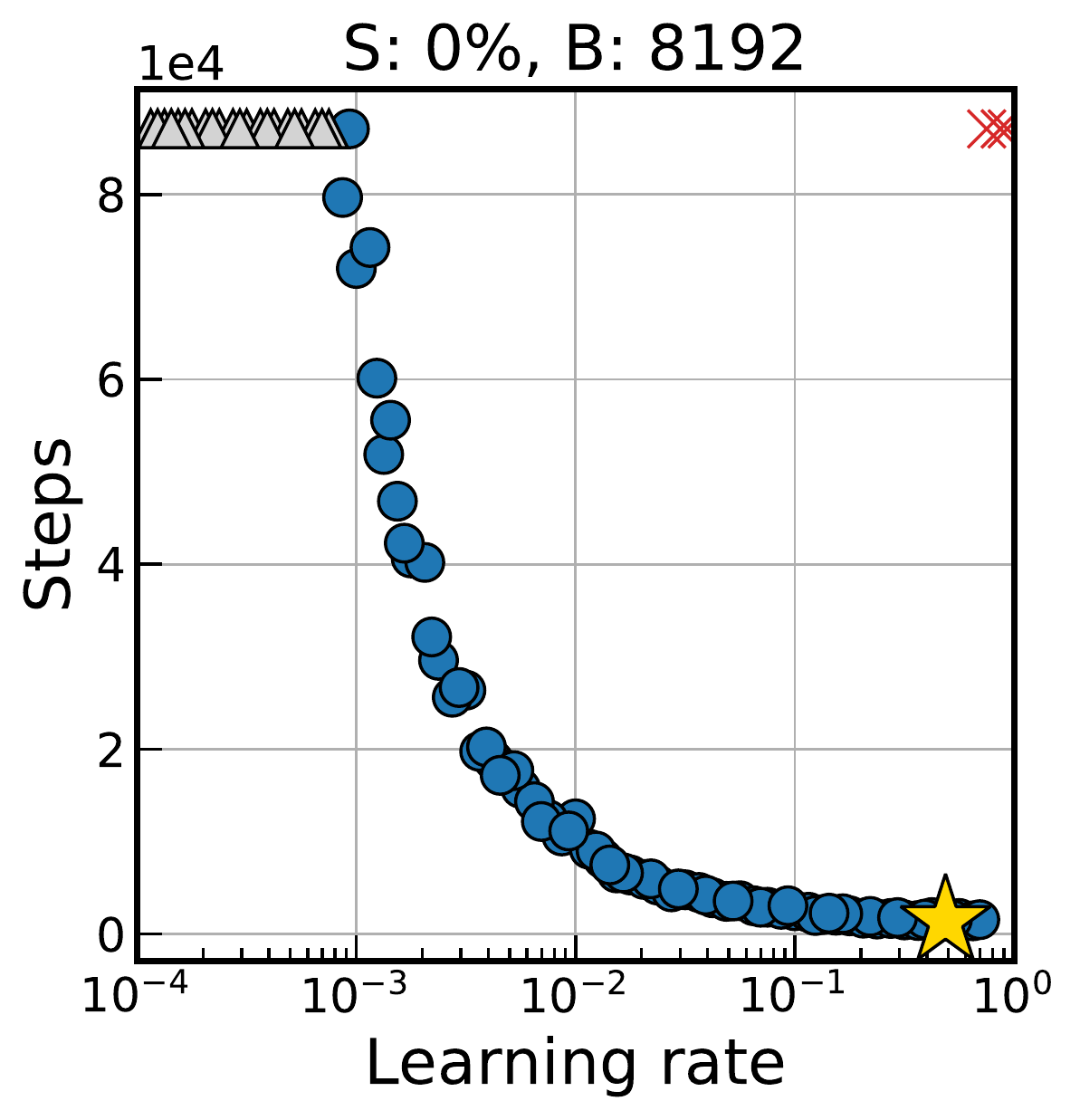}\\
        \includegraphics[height=26mm]{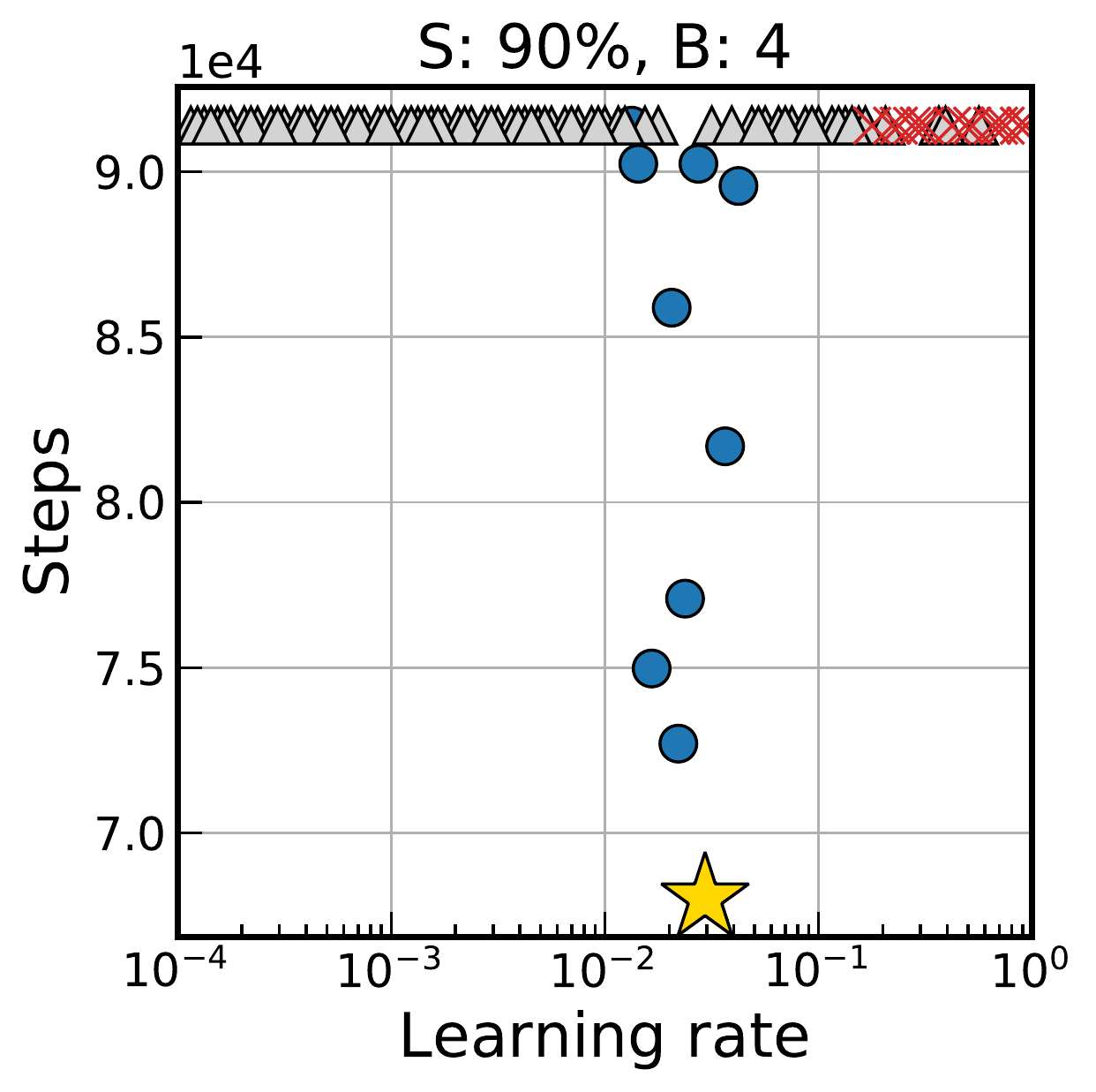}
        \includegraphics[height=26mm]{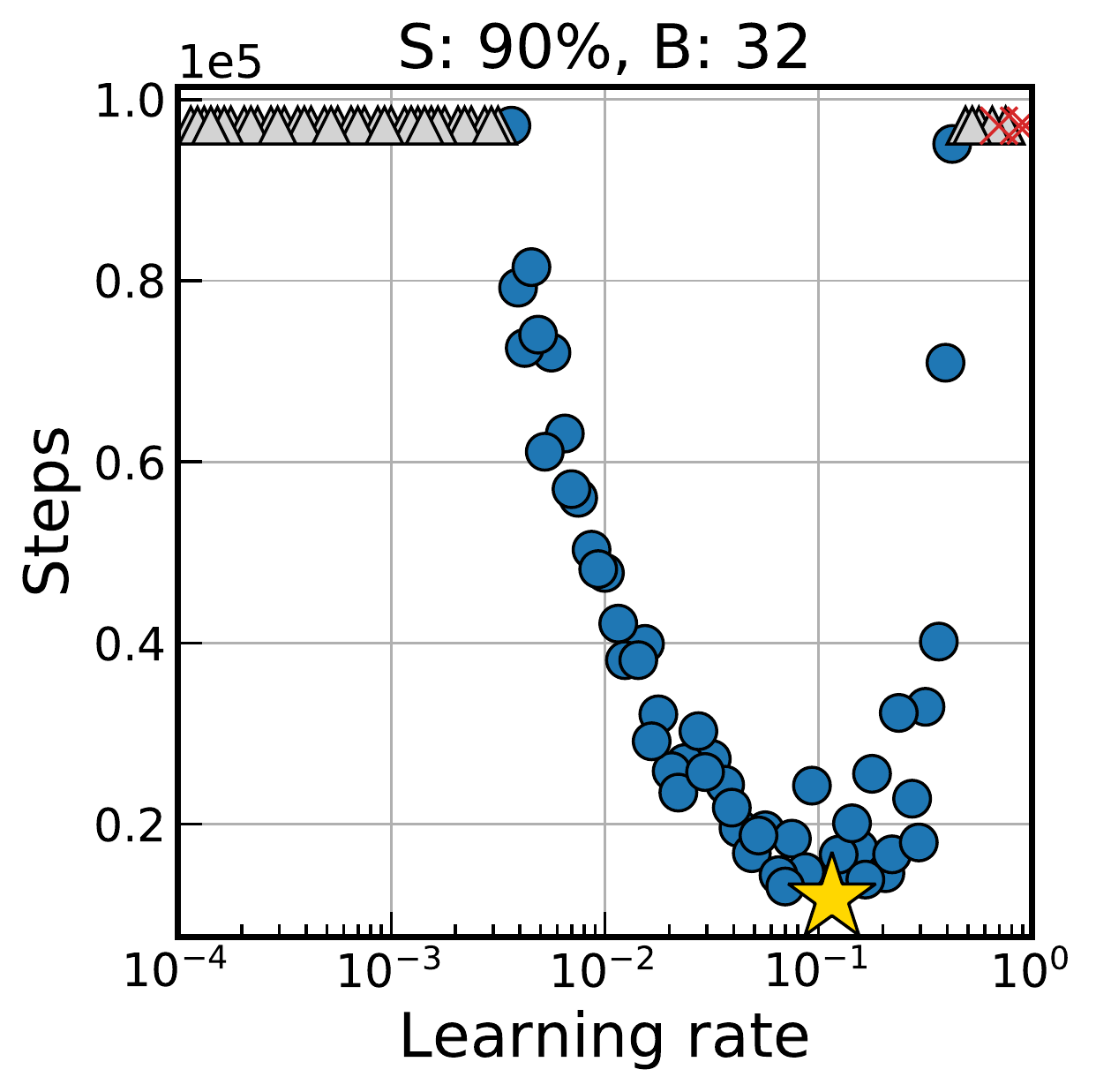}
        \includegraphics[height=26mm]{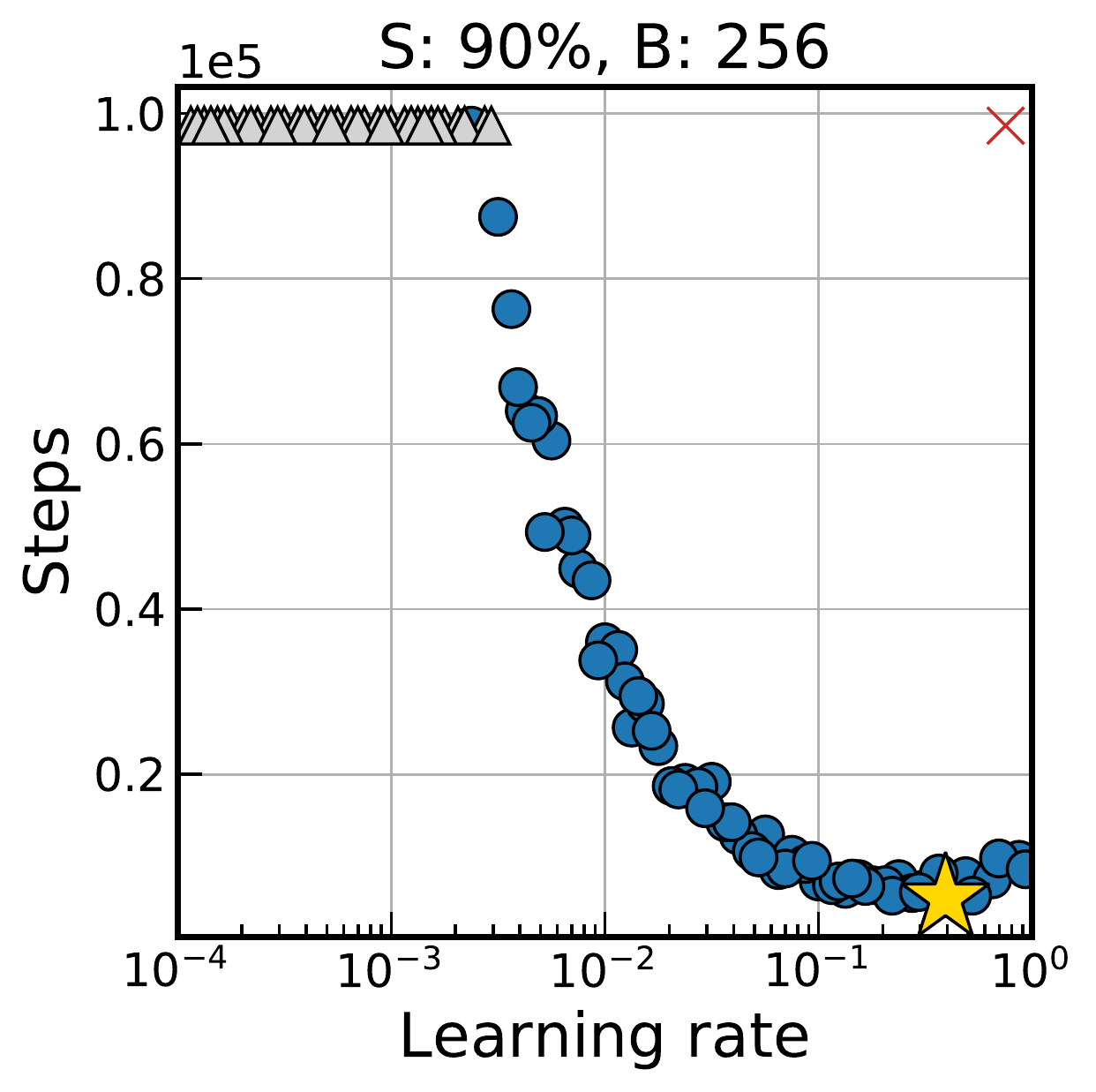}
        \includegraphics[height=26mm]{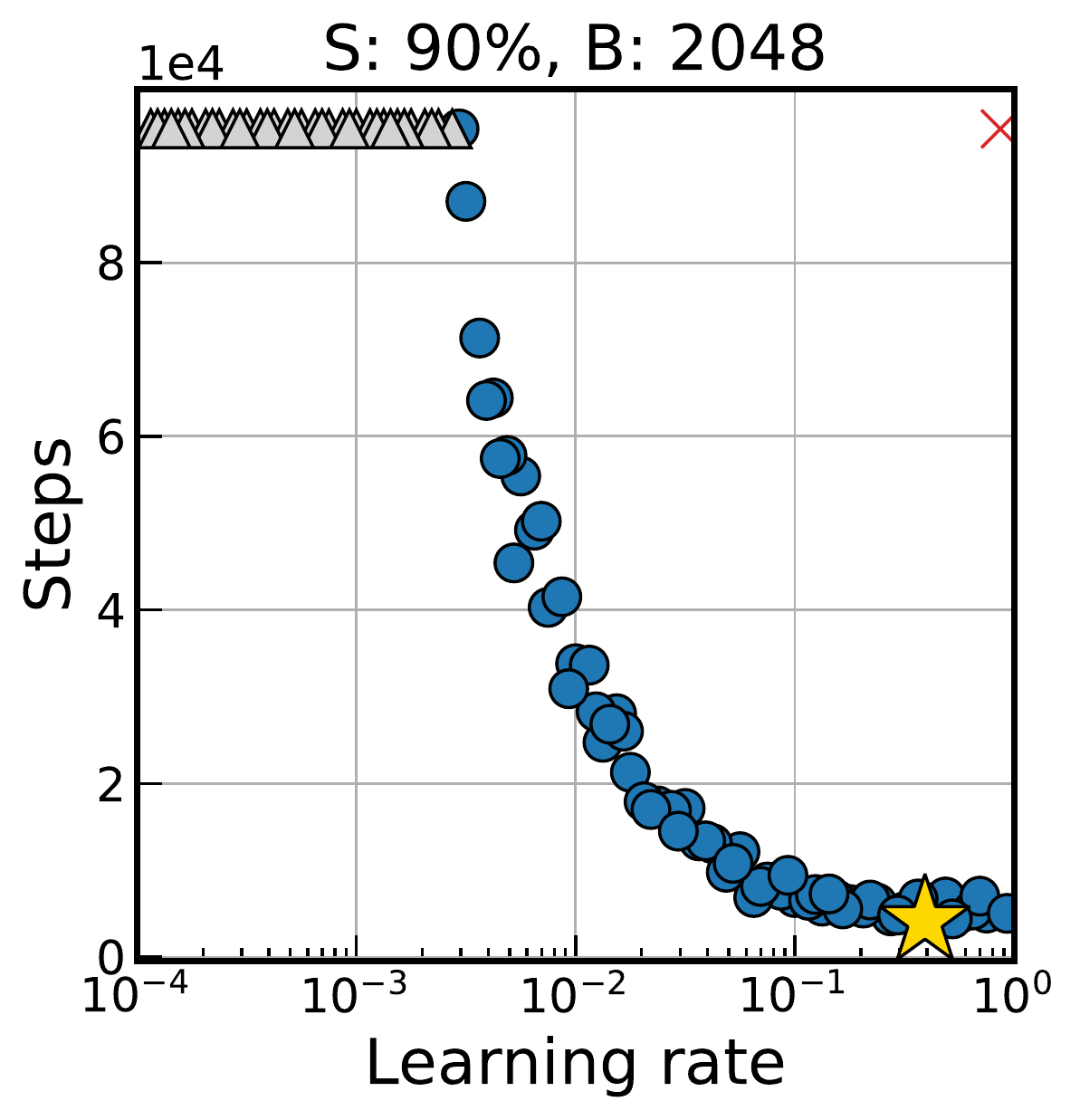}
        \includegraphics[height=26mm]{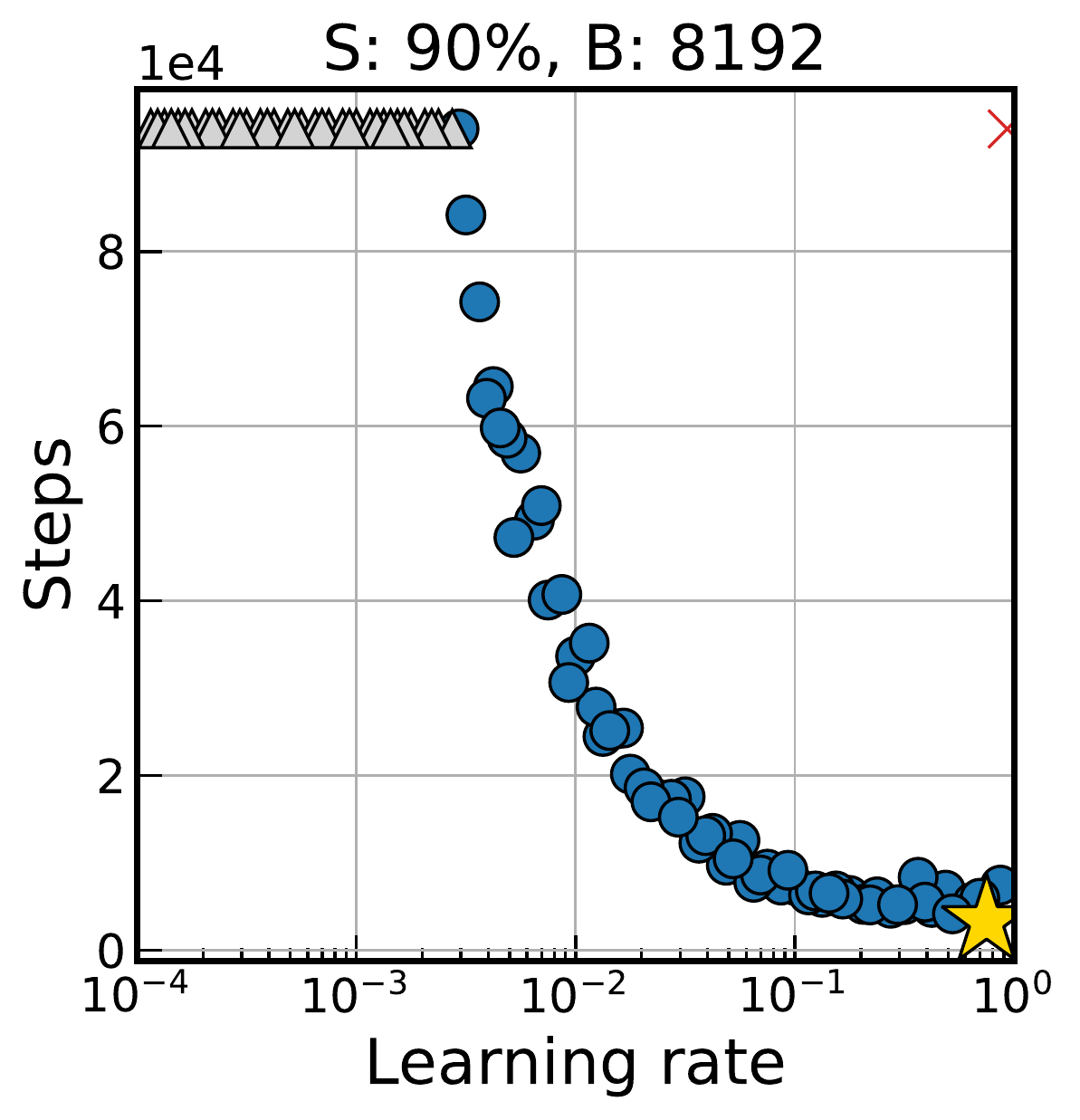}
    \end{subfigure}
    \caption{
        Meataparameter search results for the workloads of \{CIFAR-10, ResNet-8, SGD\} with a constant learning rate.
    }
    \label{fig:mparams-cifar-sgd-constant}
\end{figure}

\begin{figure}[t]
    \centering
    \begin{subfigure}{.9998\textwidth}
        \centering
        \includegraphics[height=26mm]{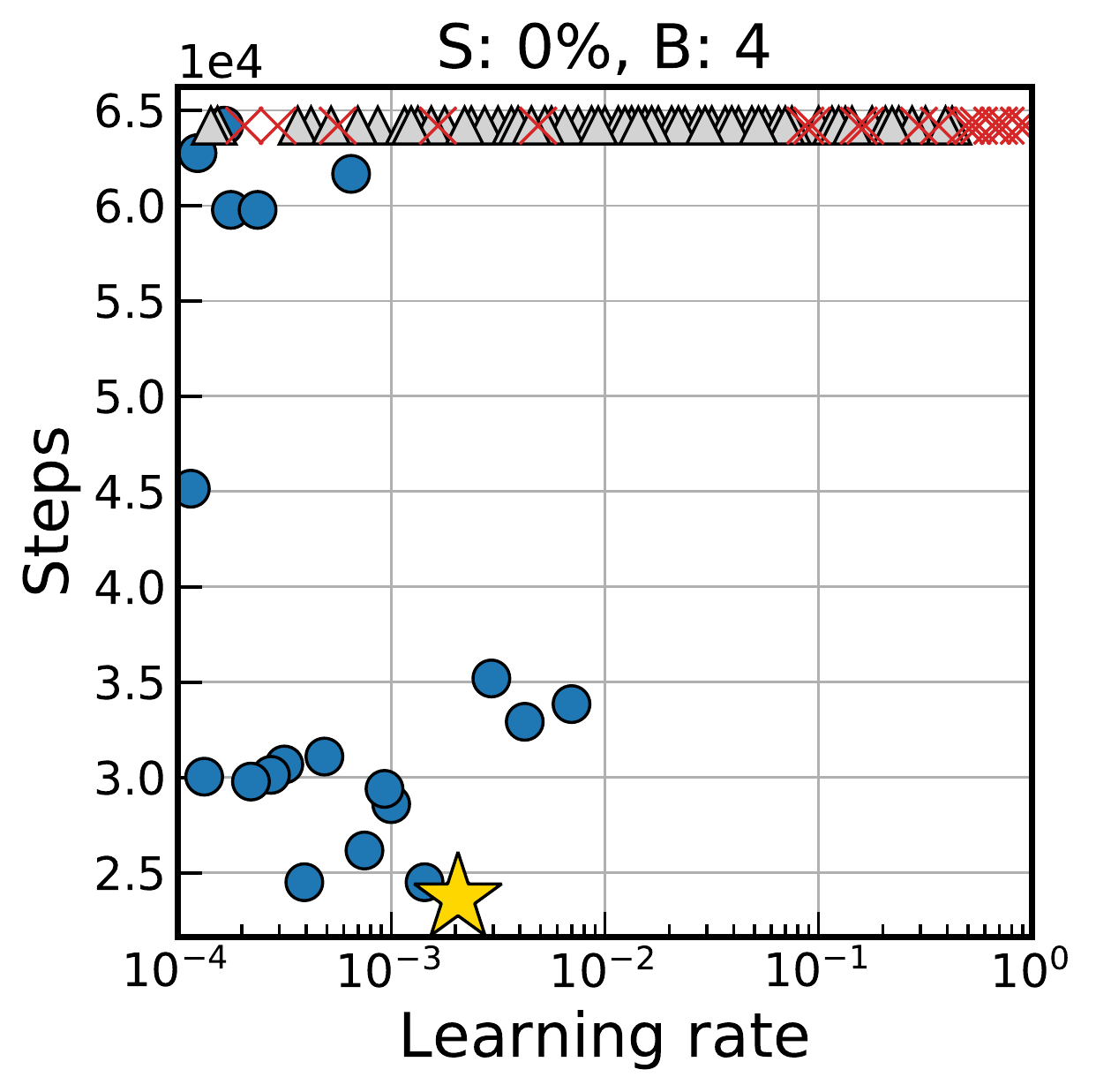}
        \includegraphics[height=26mm]{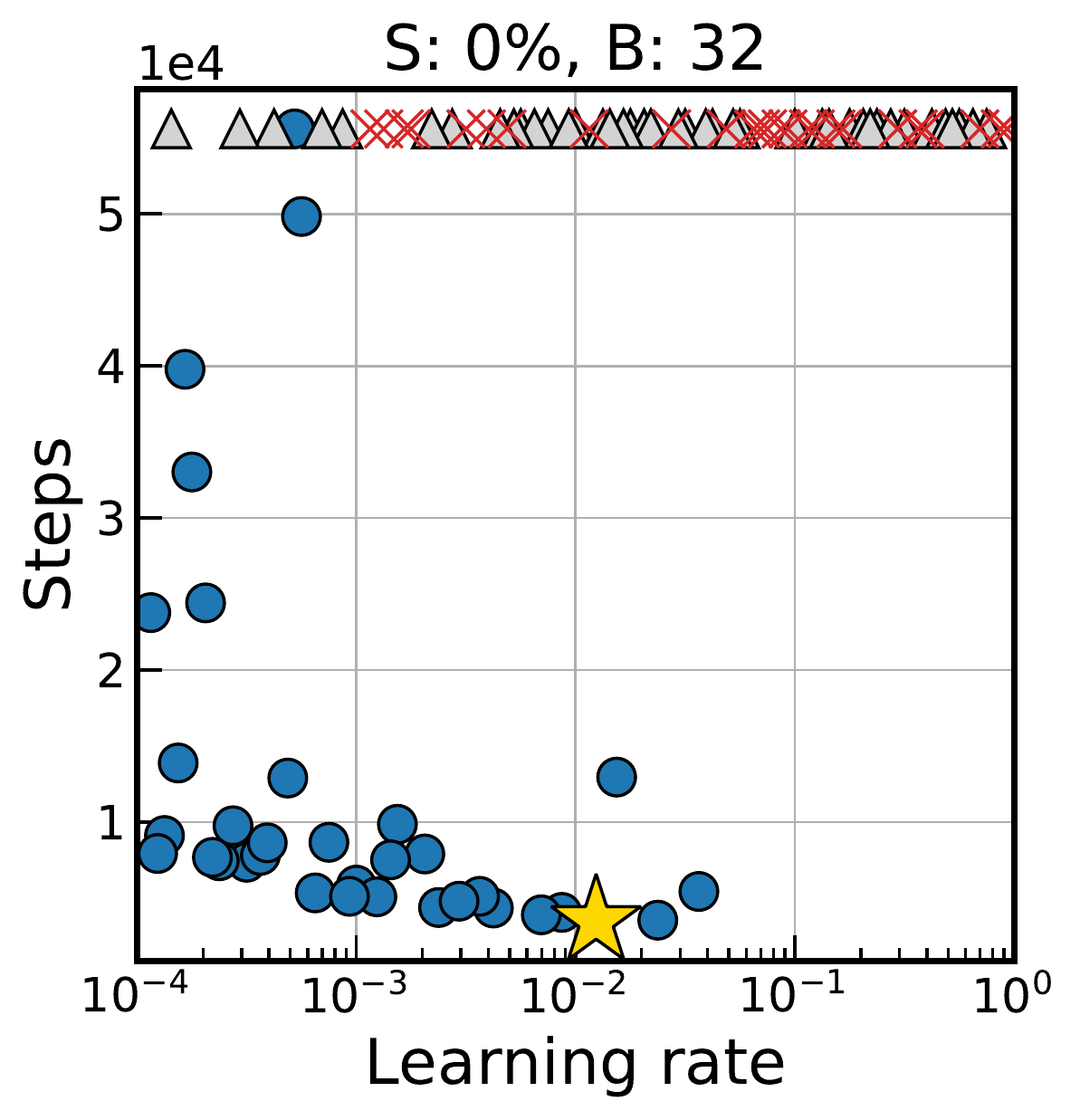}
        \includegraphics[height=26mm]{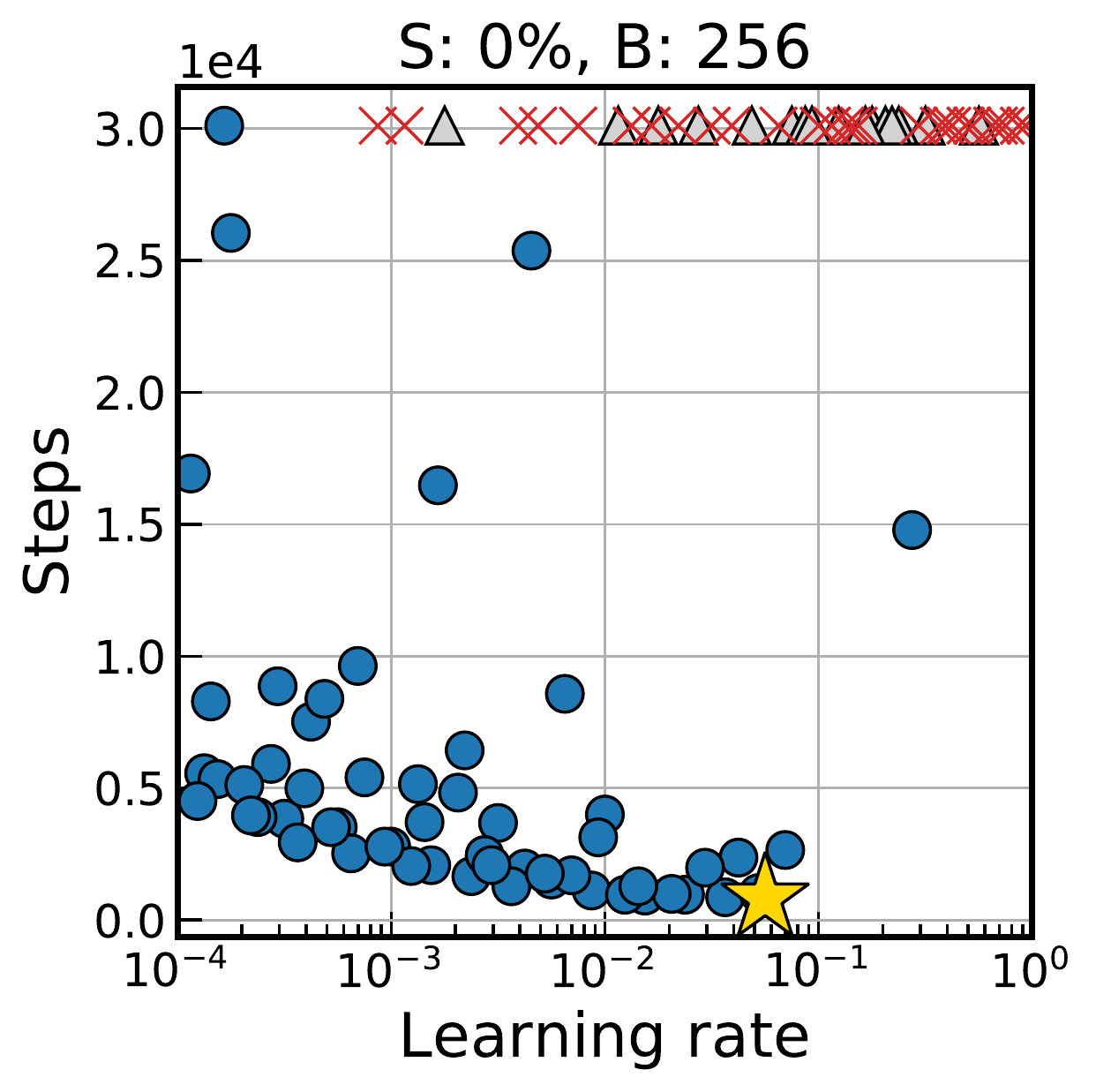}
        \includegraphics[height=26mm]{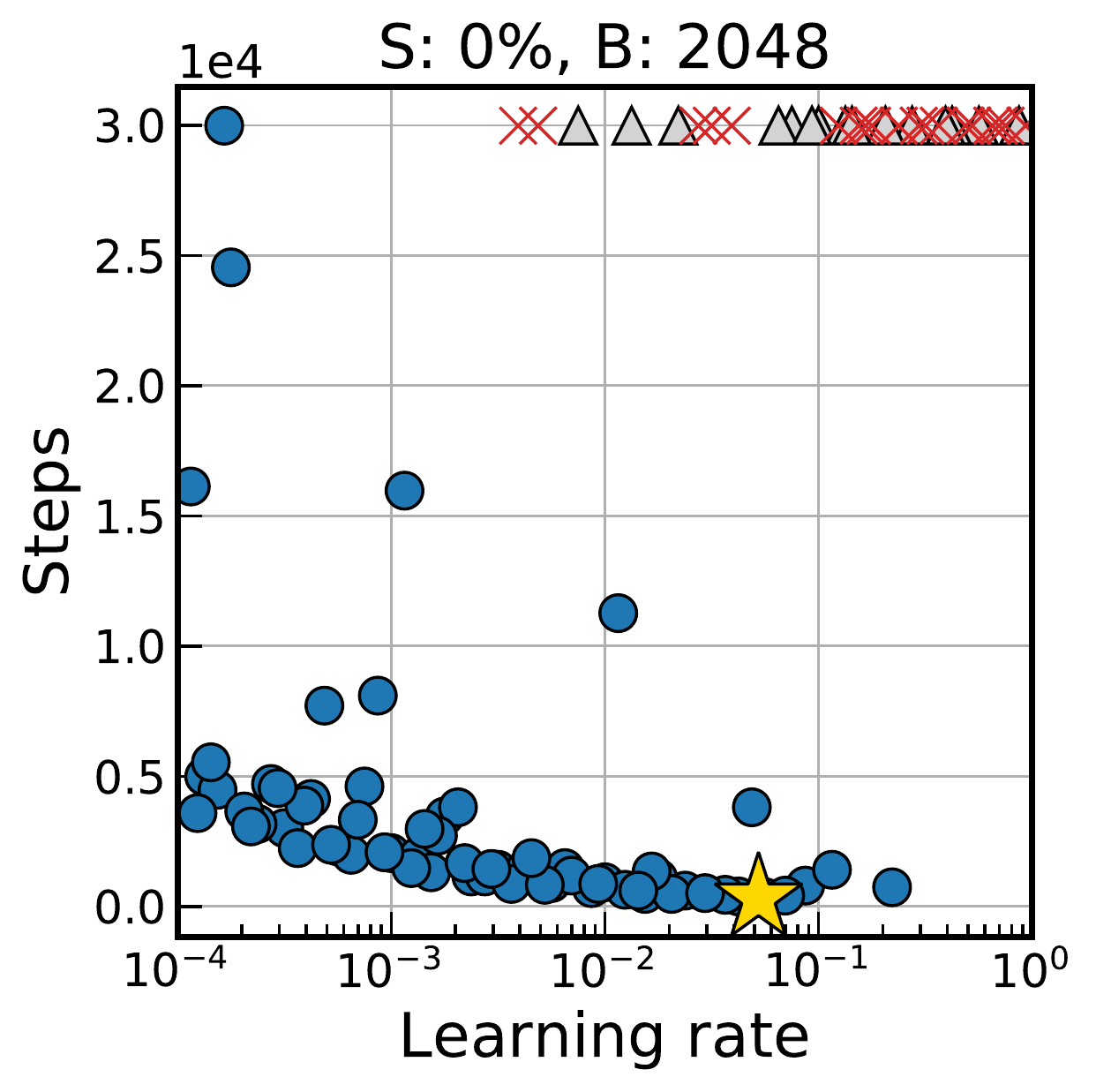}
        \includegraphics[height=26mm]{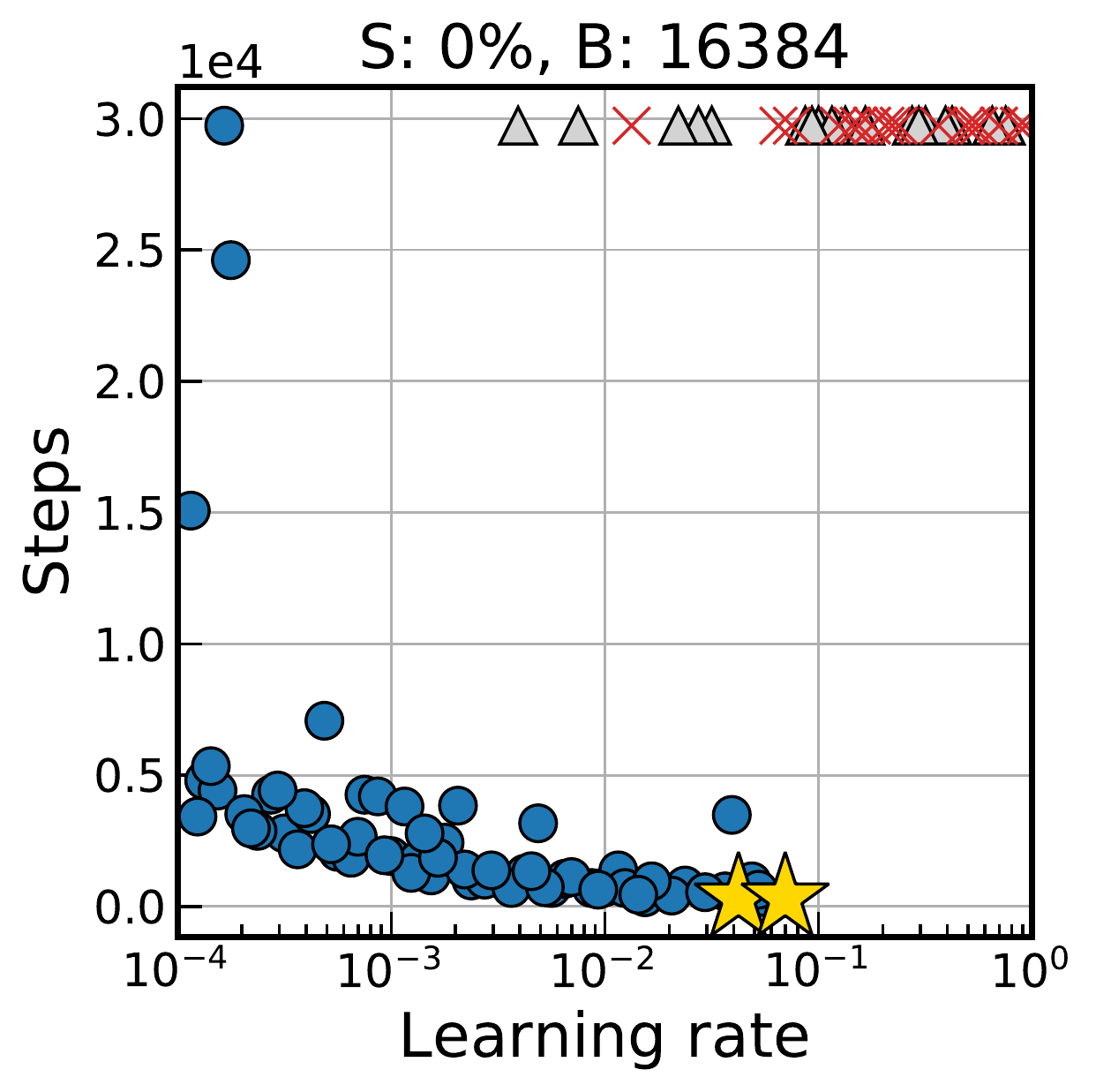}\\
        \includegraphics[height=26mm]{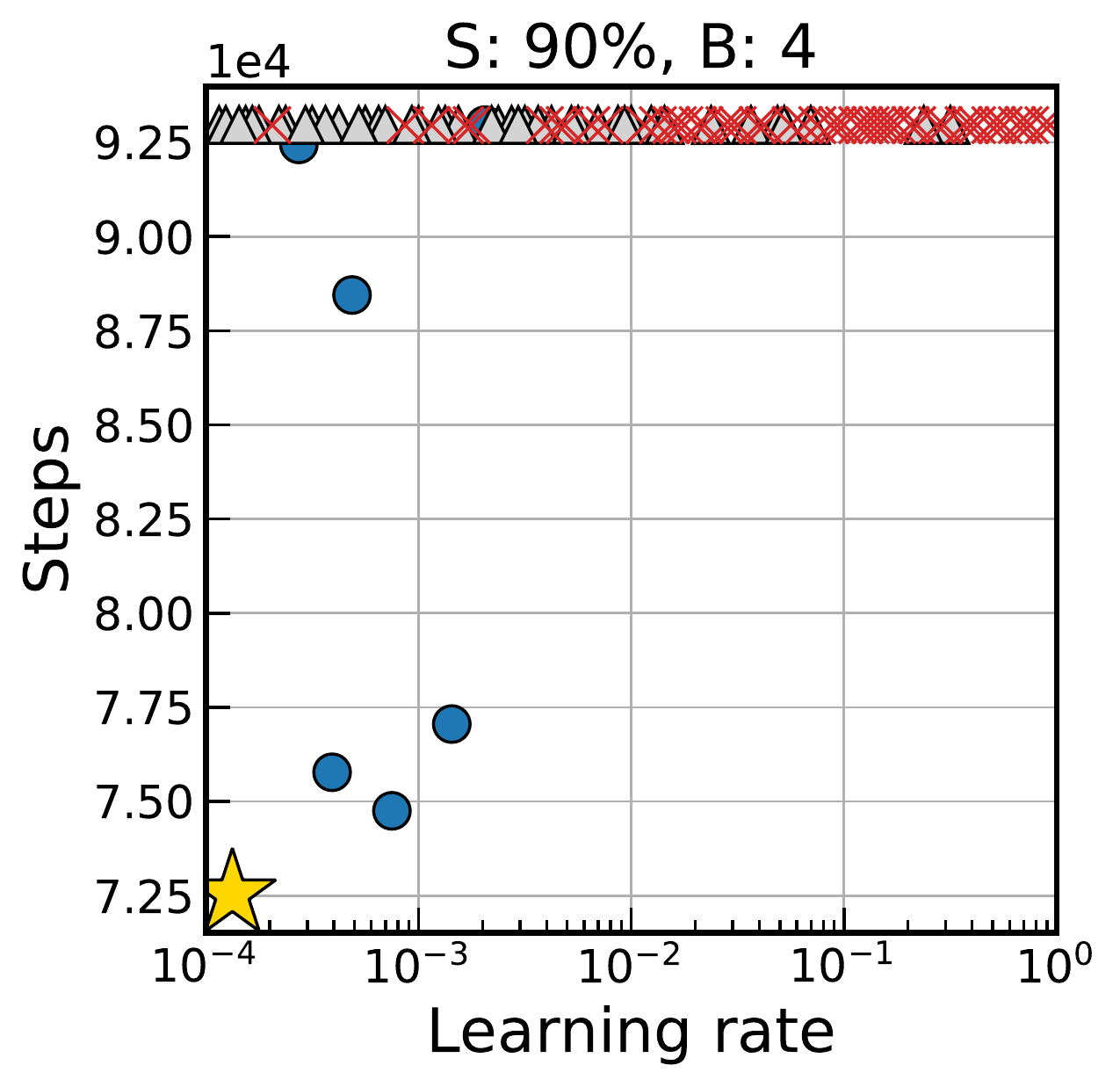}
        \includegraphics[height=26mm]{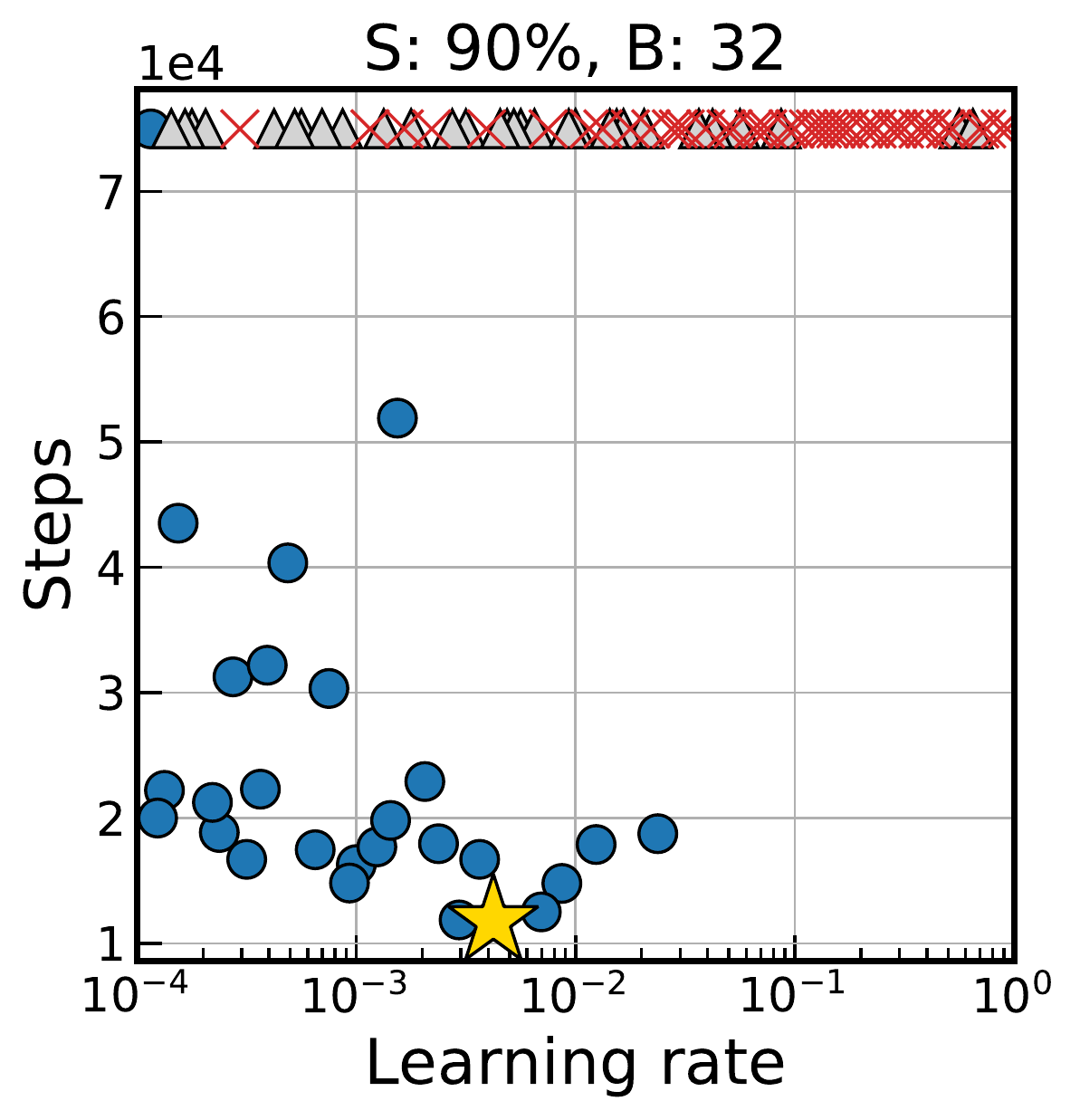}
        \includegraphics[height=26mm]{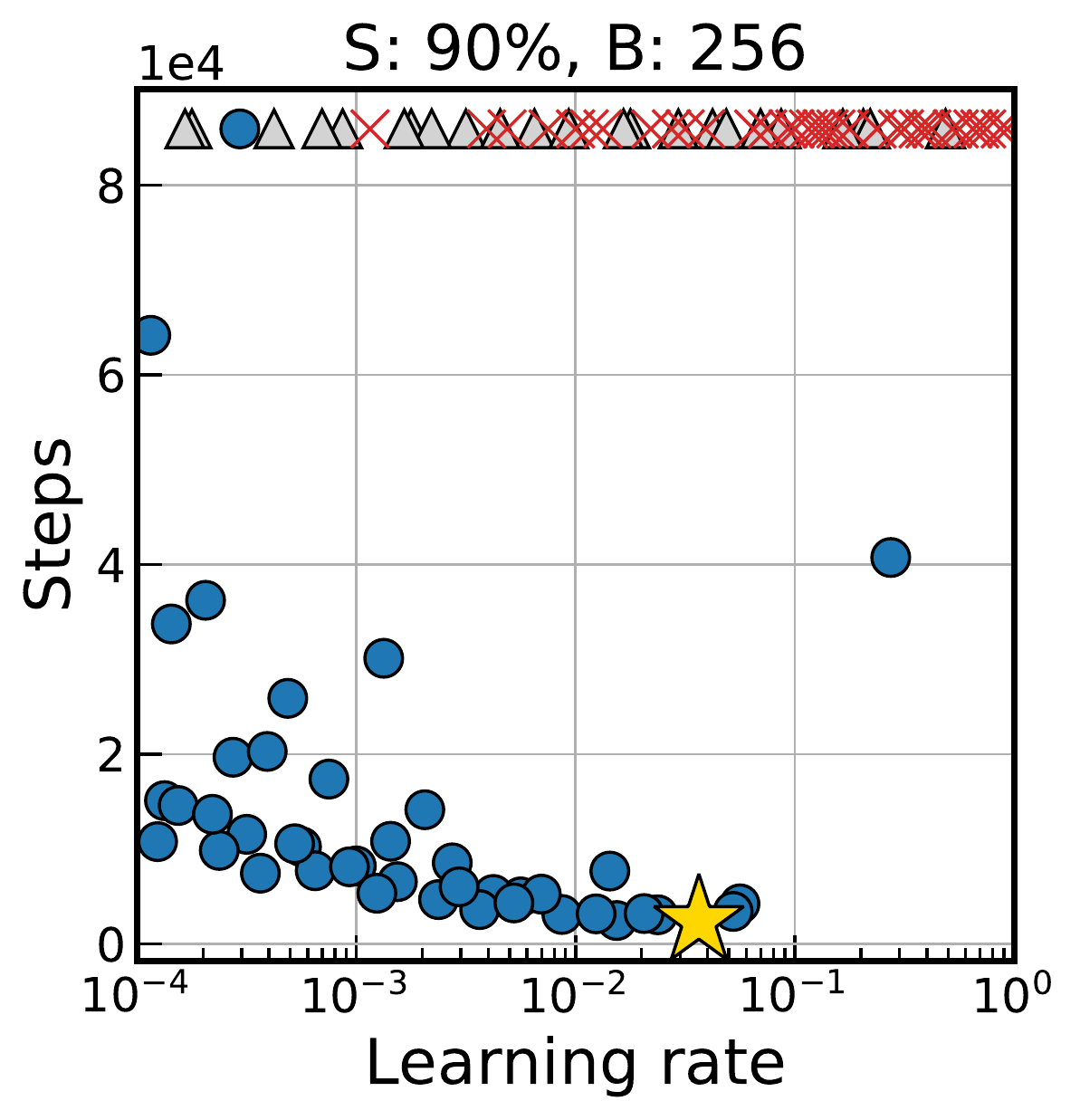}
        \includegraphics[height=26mm]{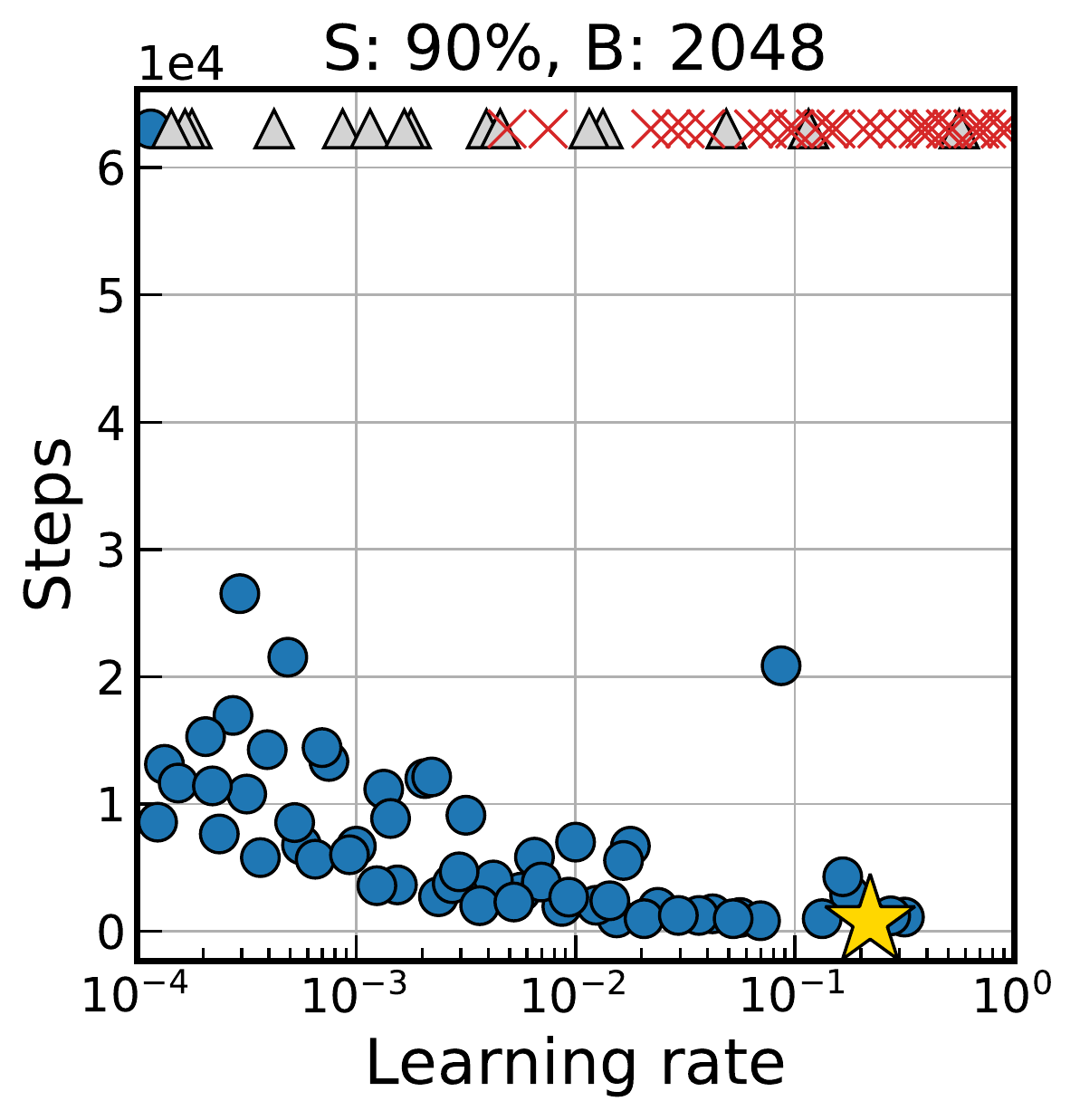}
        \includegraphics[height=26mm]{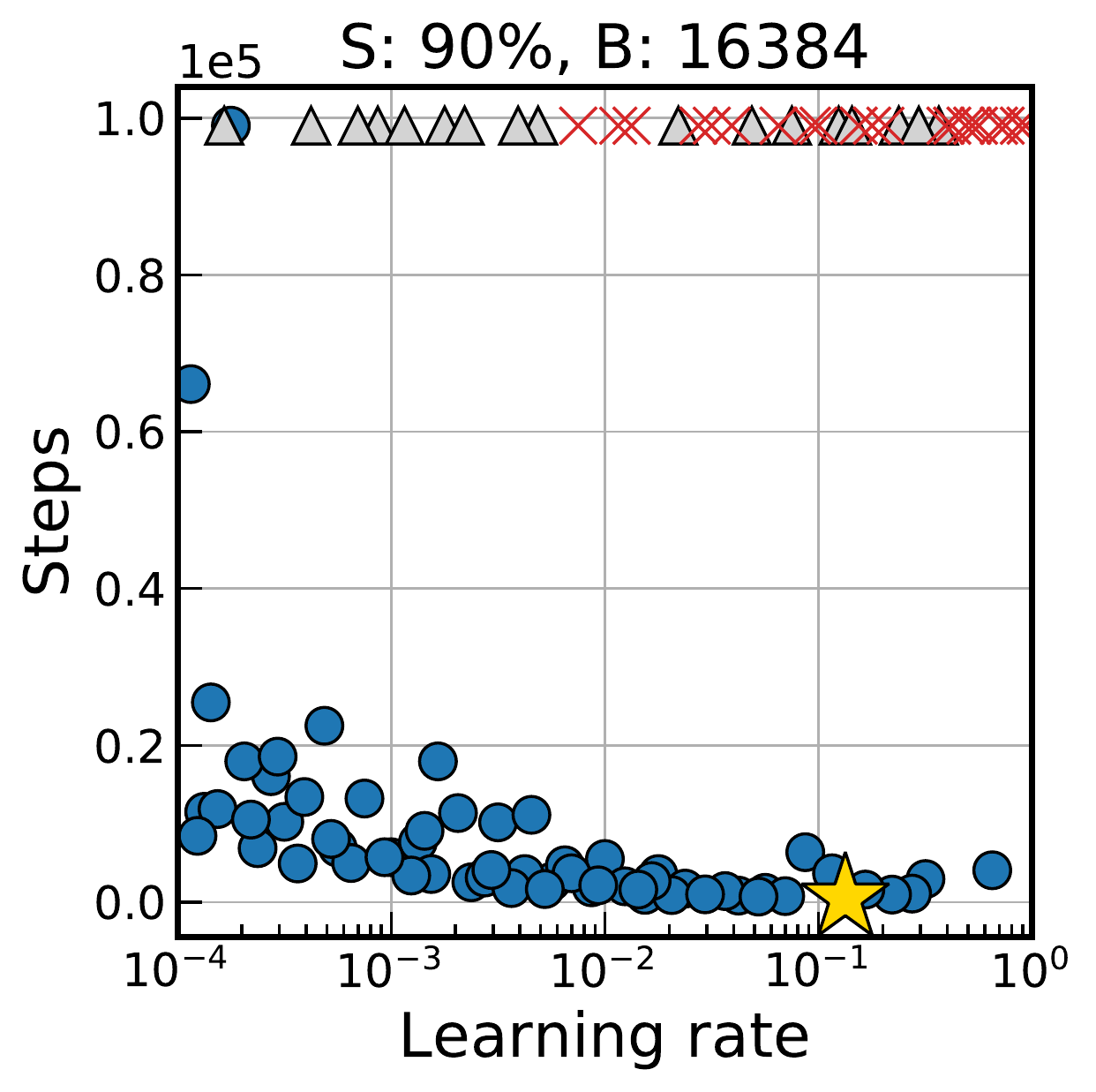}\\
        \includegraphics[height=26mm]{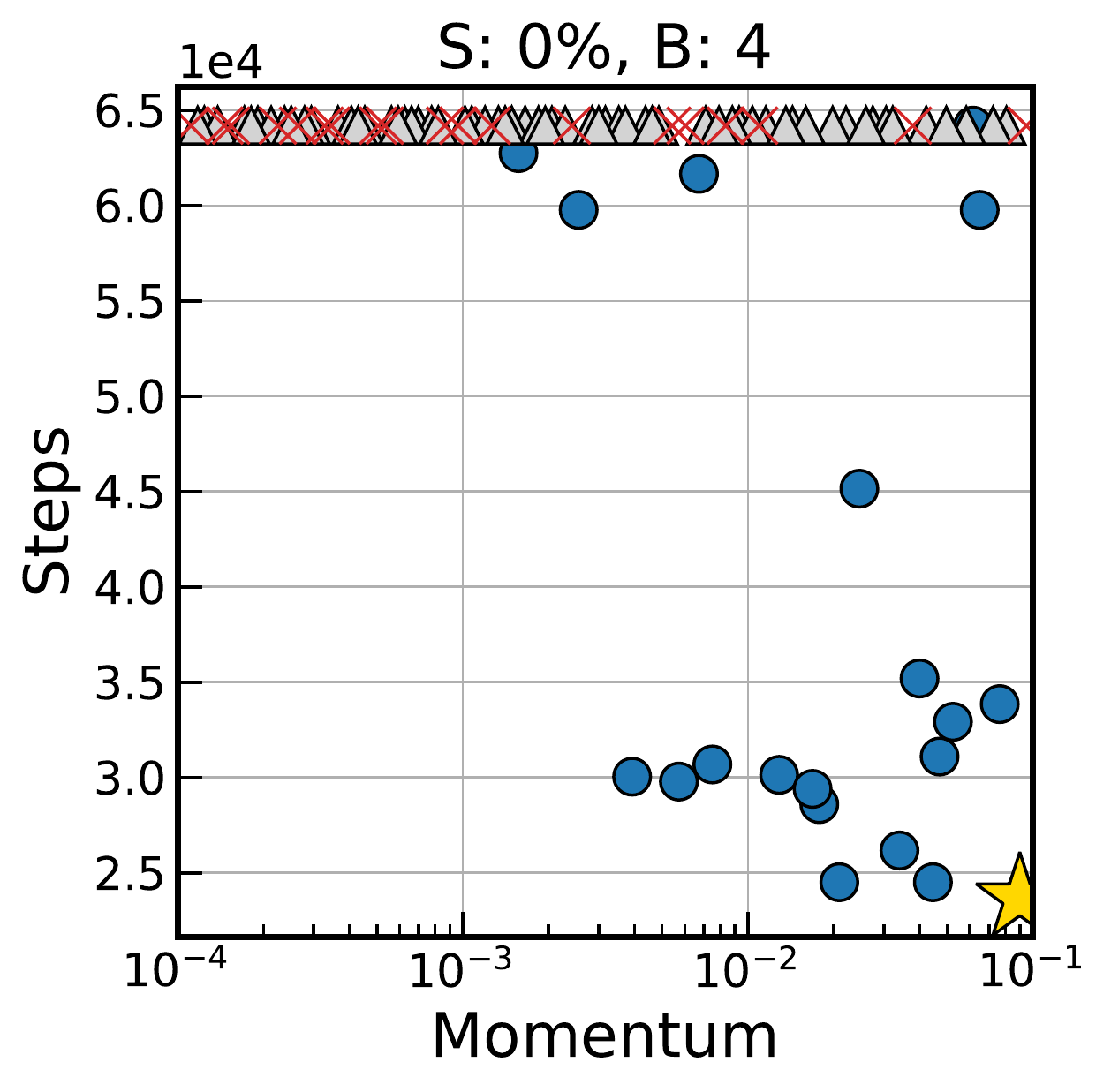}
        \includegraphics[height=26mm]{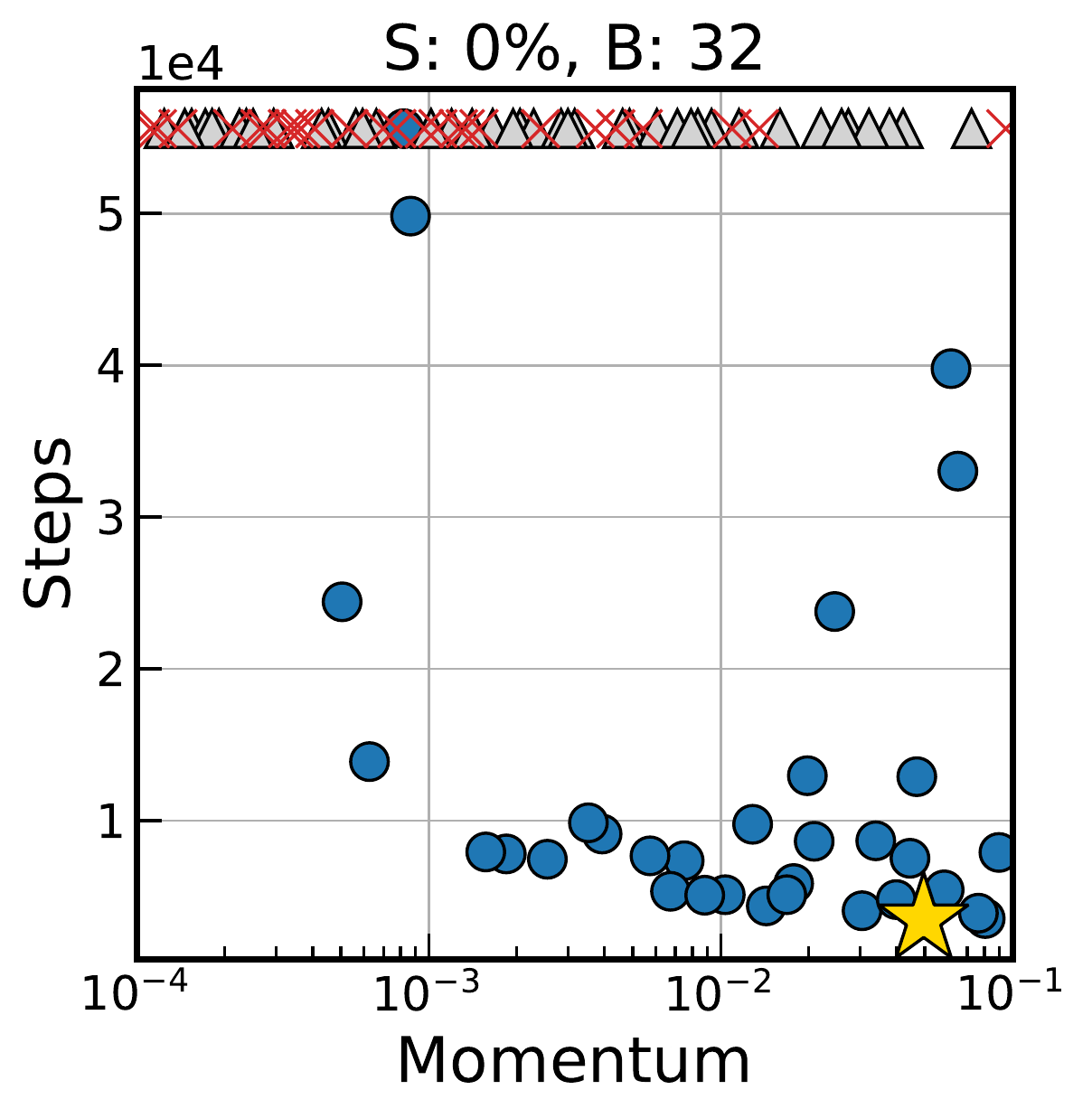}
        \includegraphics[height=26mm]{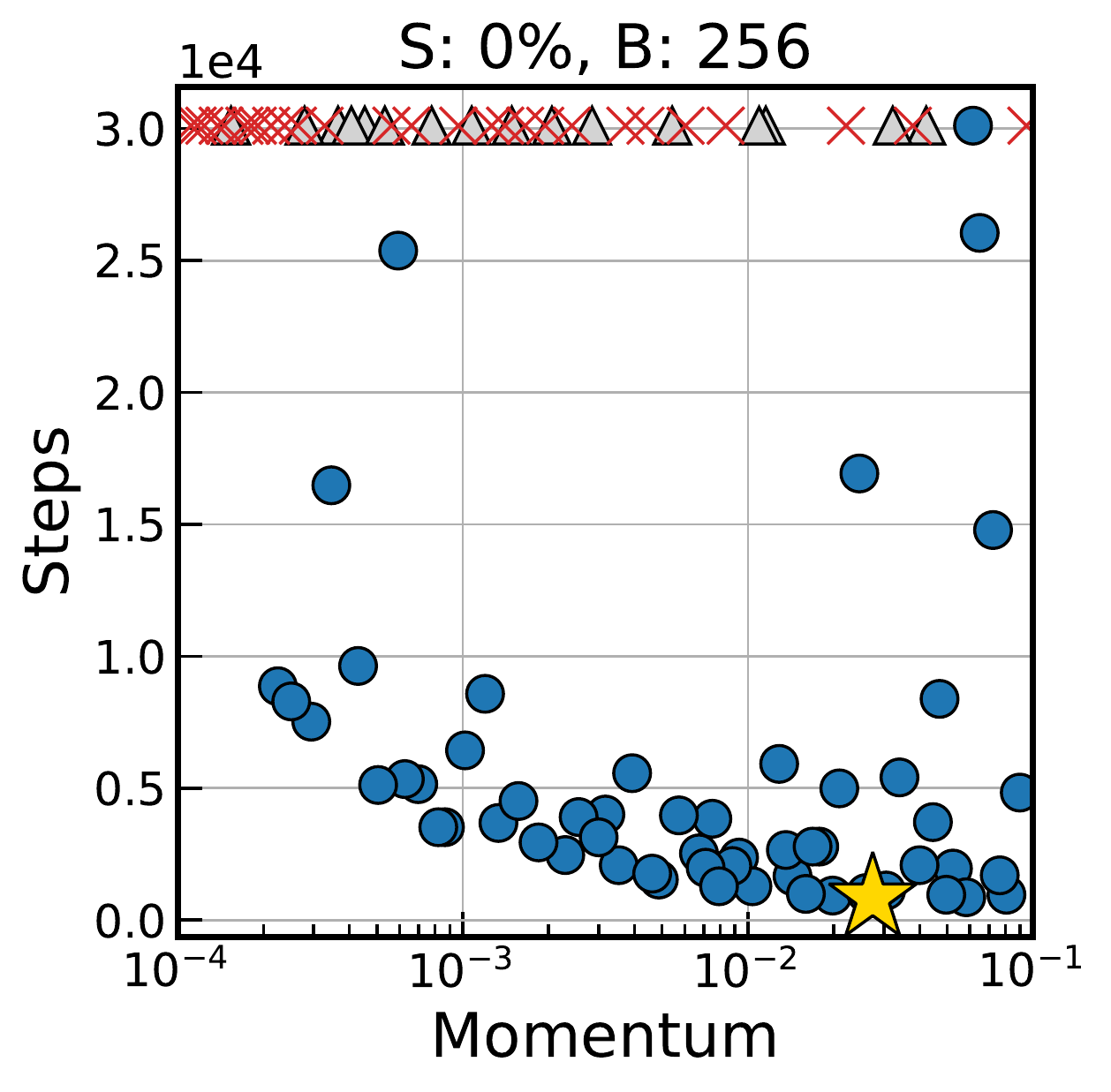}
        \includegraphics[height=26mm]{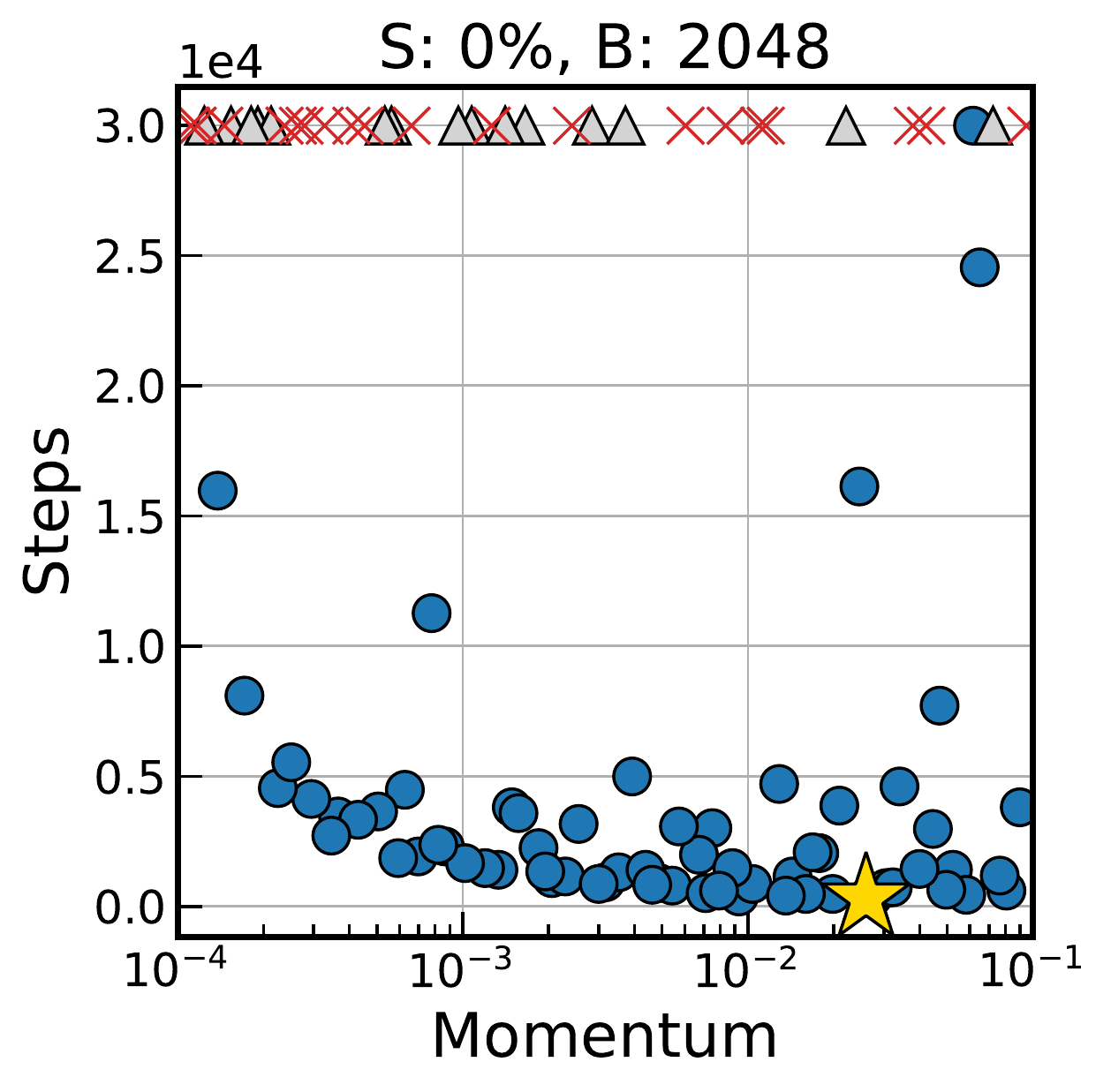}
        \includegraphics[height=26mm]{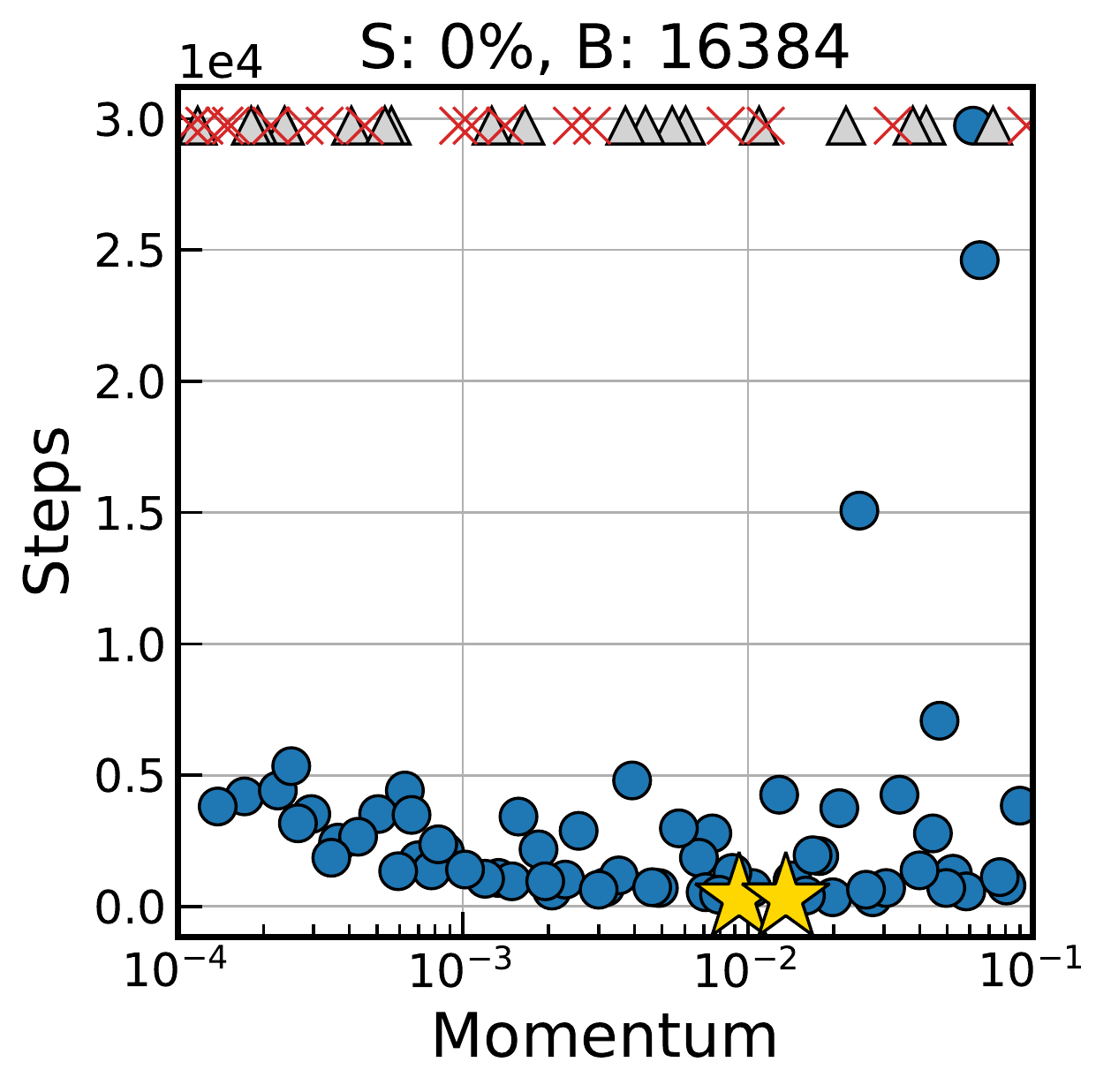}\\
        \includegraphics[height=26mm]{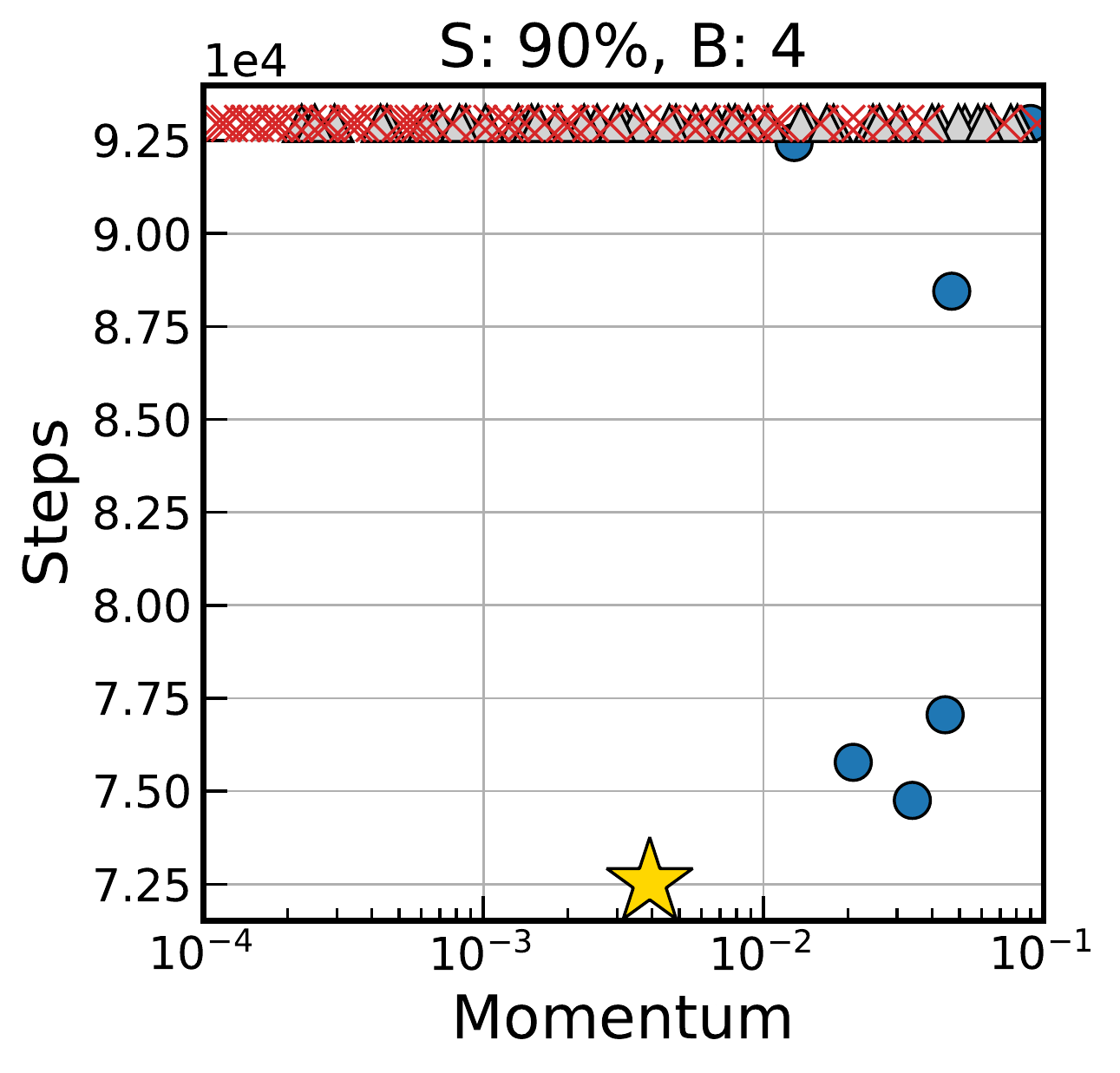}
        \includegraphics[height=26mm]{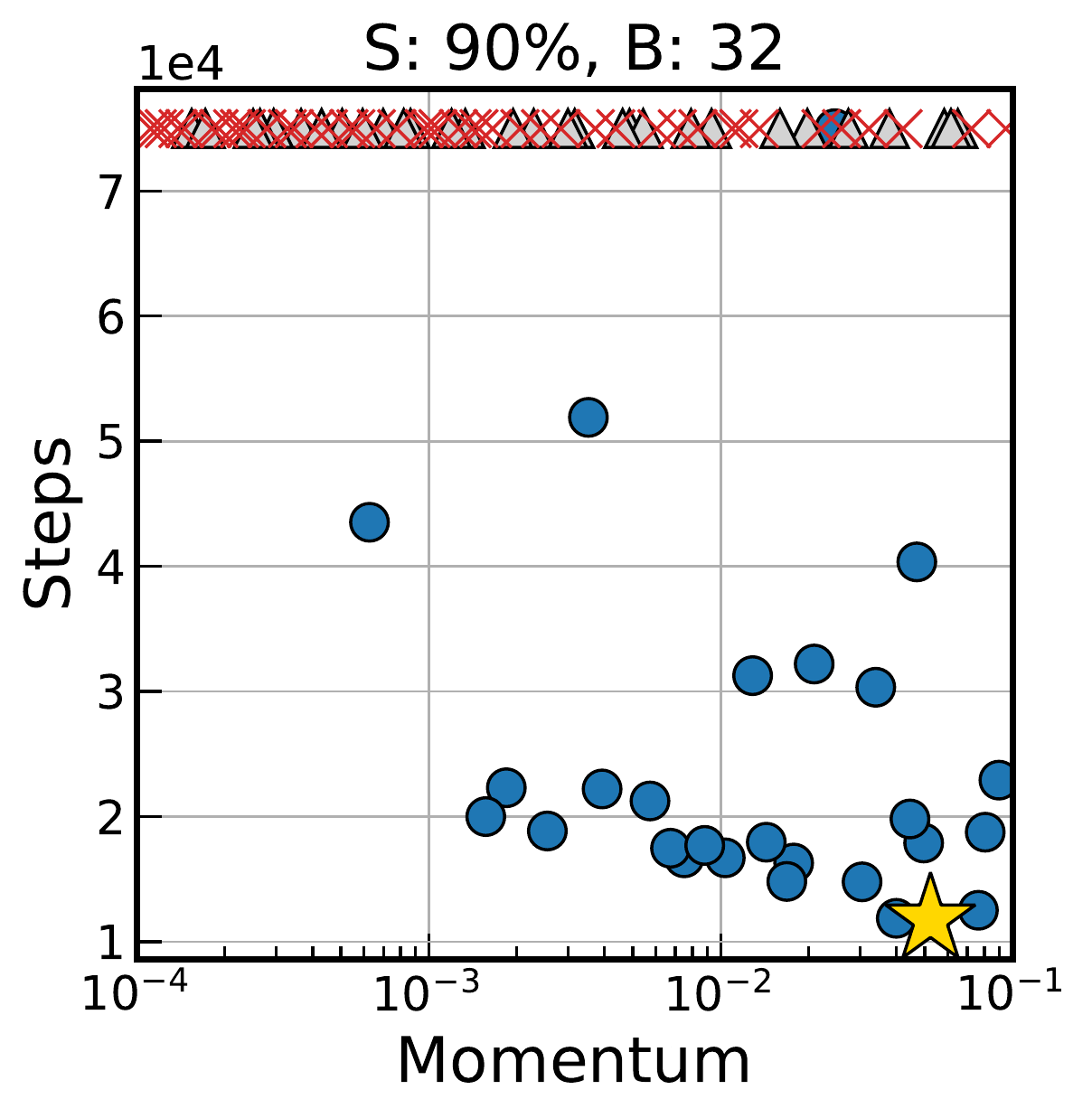}
        \includegraphics[height=26mm]{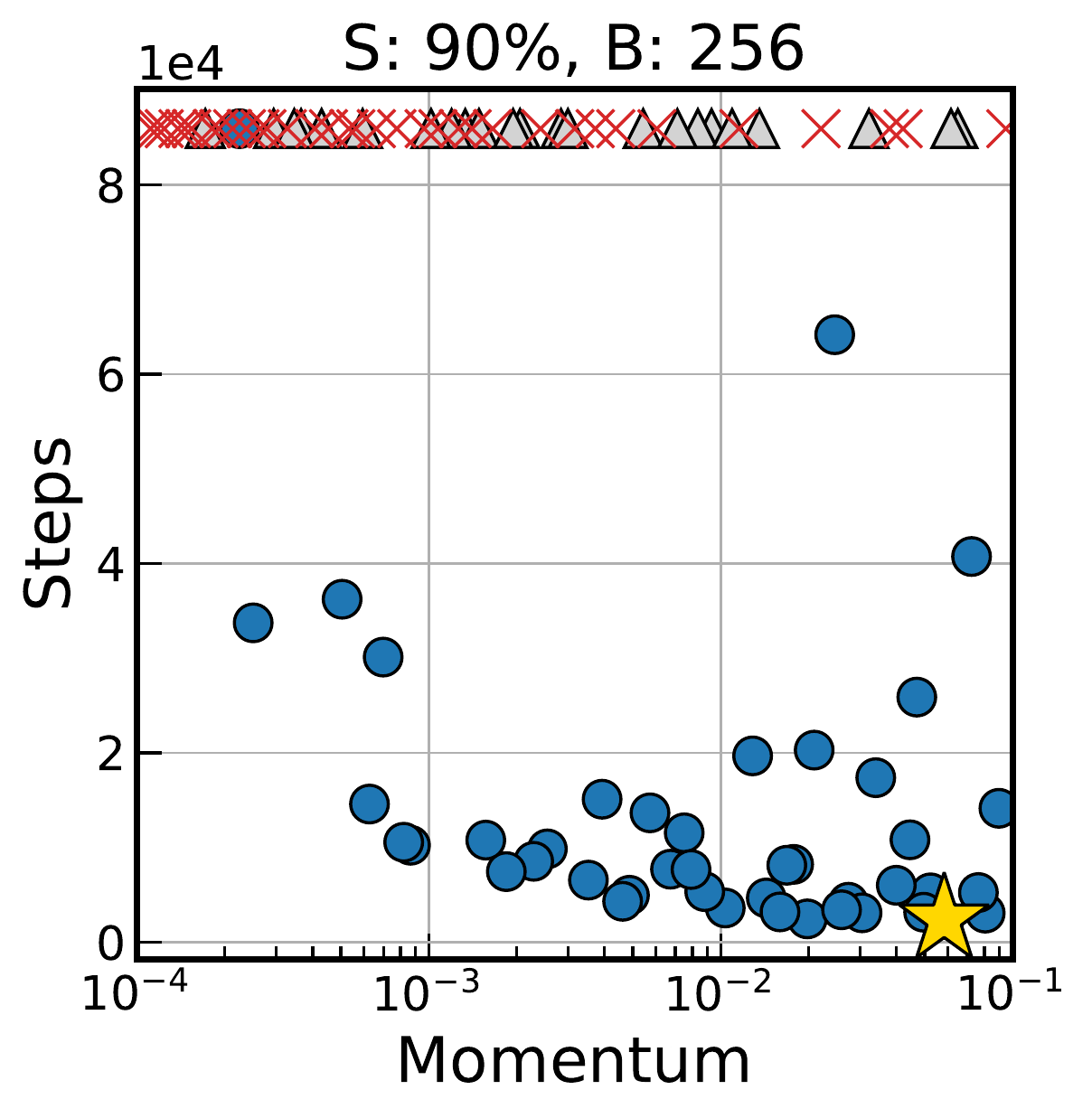}
        \includegraphics[height=26mm]{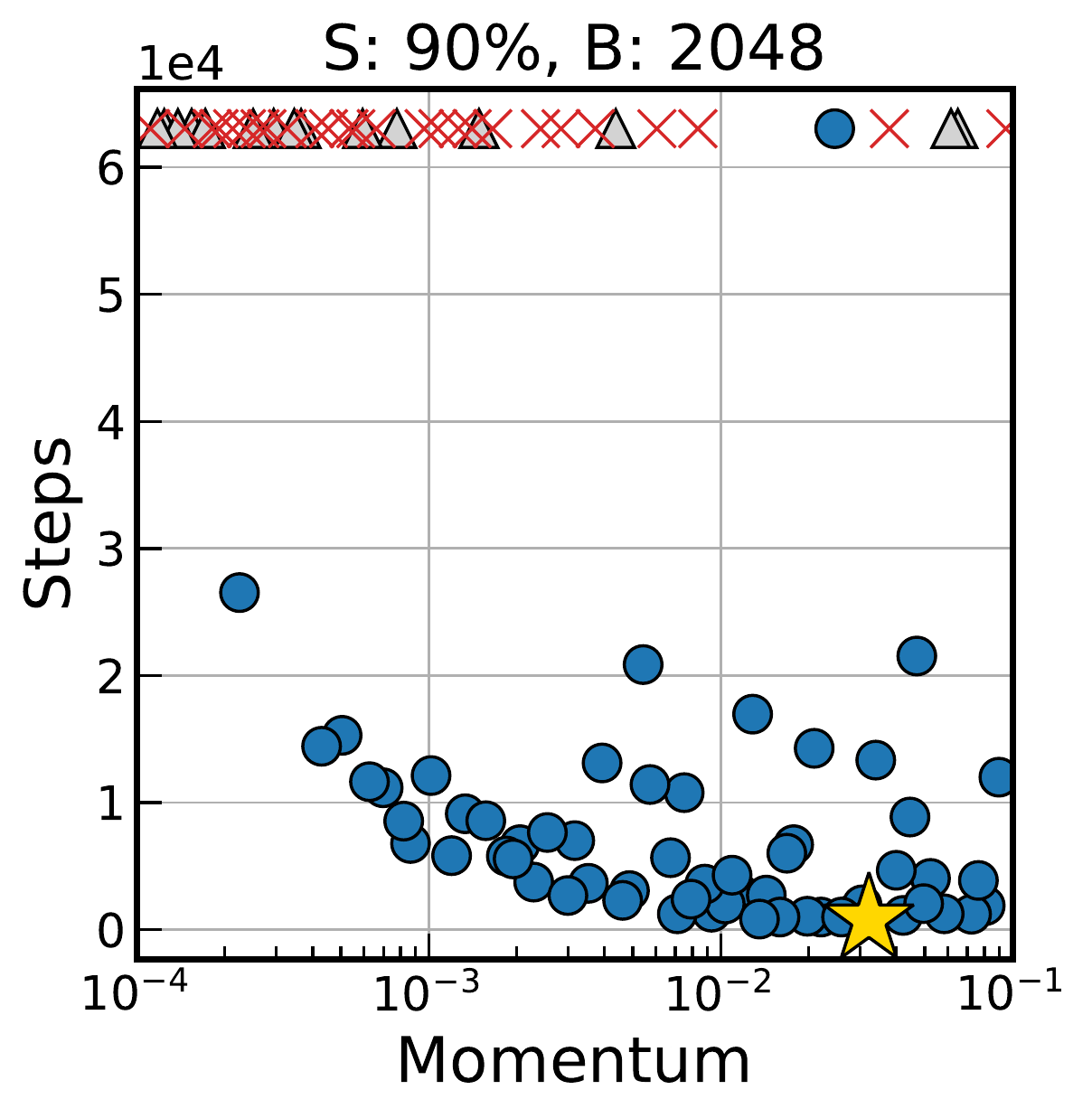}
        \includegraphics[height=26mm]{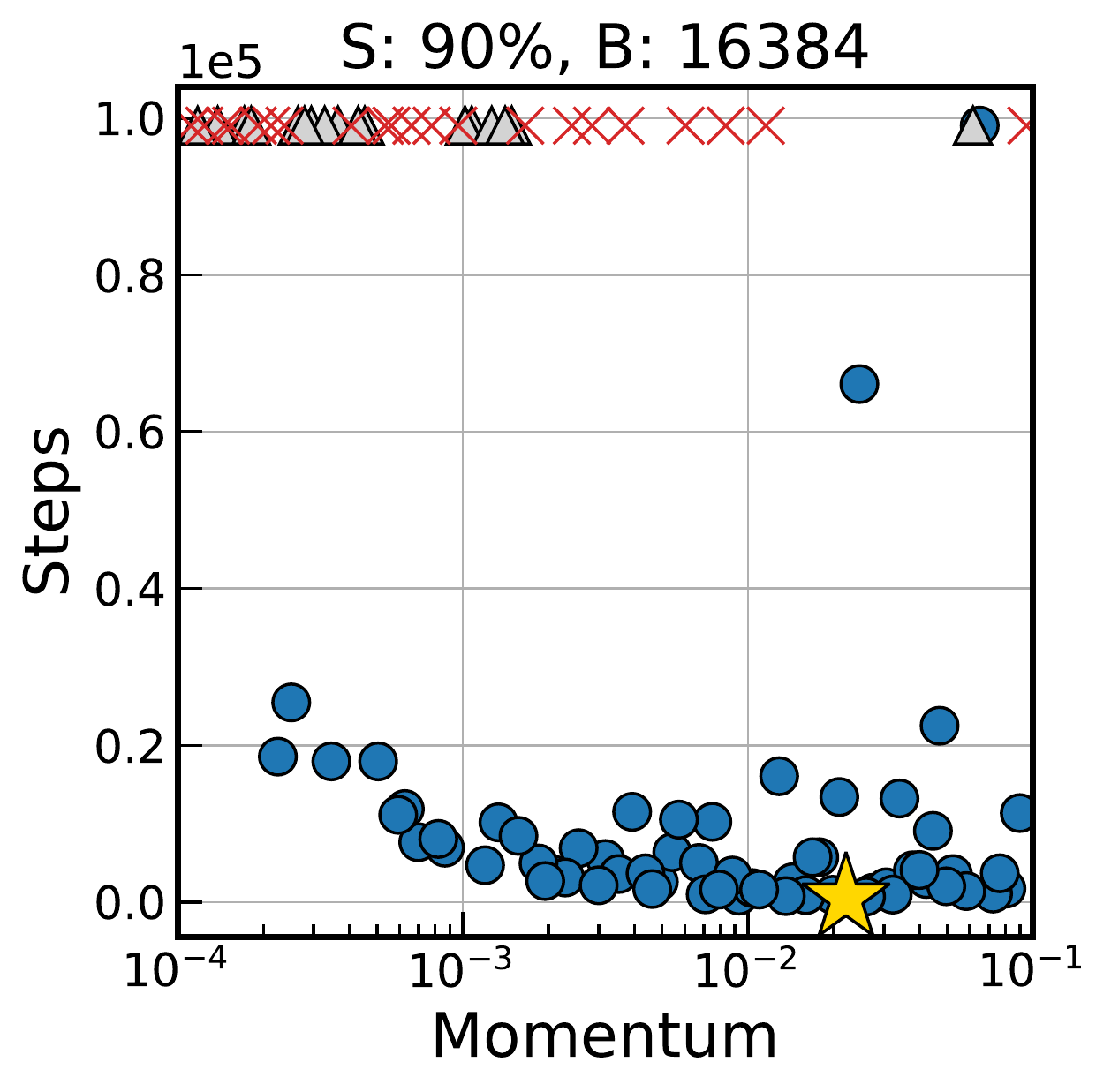}
    \end{subfigure}
    \caption{
        Meataparameter search results for the workloads of \{CIFAR-10, ResNet-8, Momentum\} with a constant learning rate.
    }
    \label{fig:mparams-cifar-momenutm-constant}
\end{figure}

\begin{figure}[t]
    \centering
    \begin{subfigure}{.9998\textwidth}
        \centering
        \includegraphics[height=26mm]{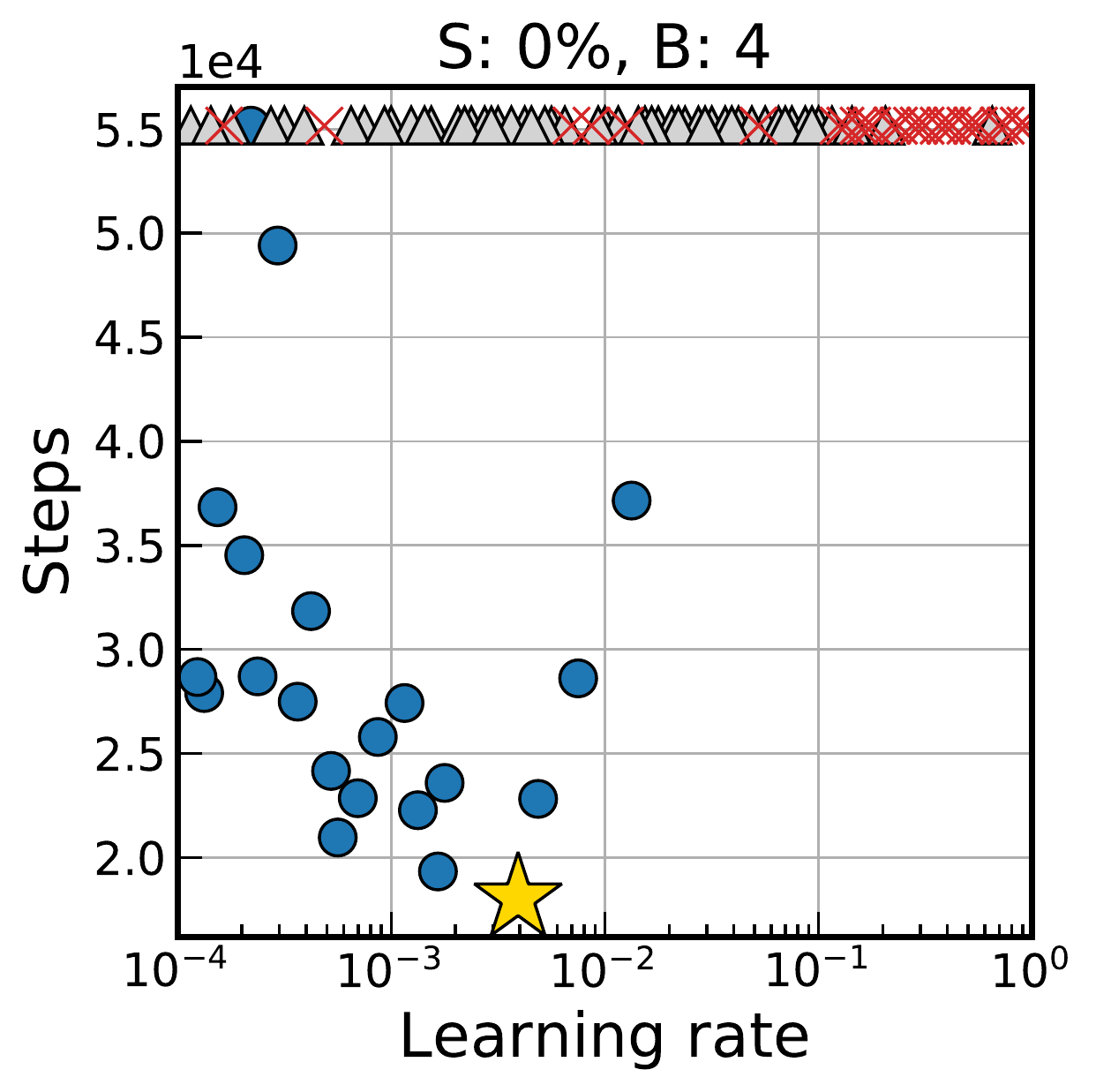}
        \includegraphics[height=26mm]{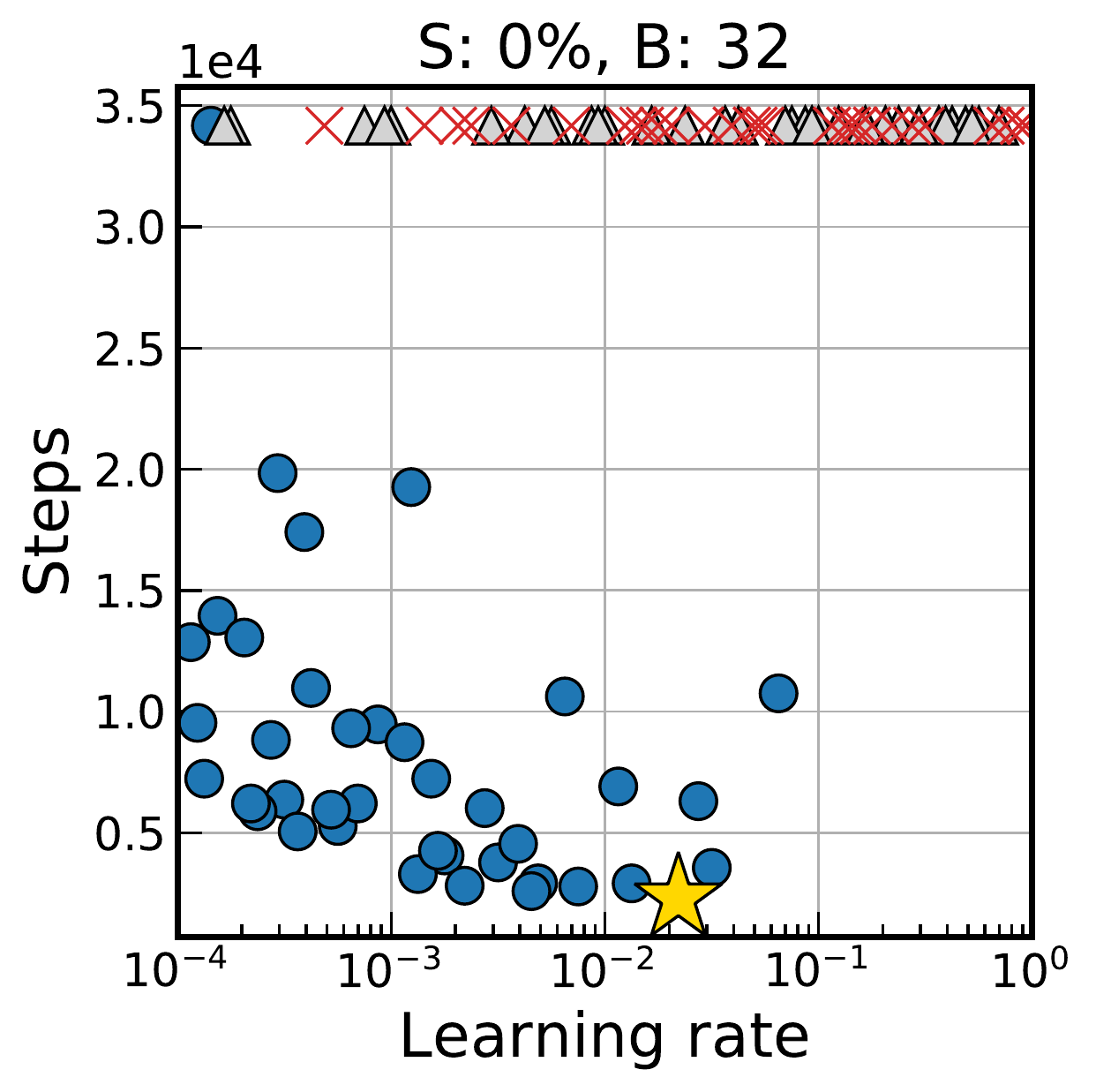}
        \includegraphics[height=26mm]{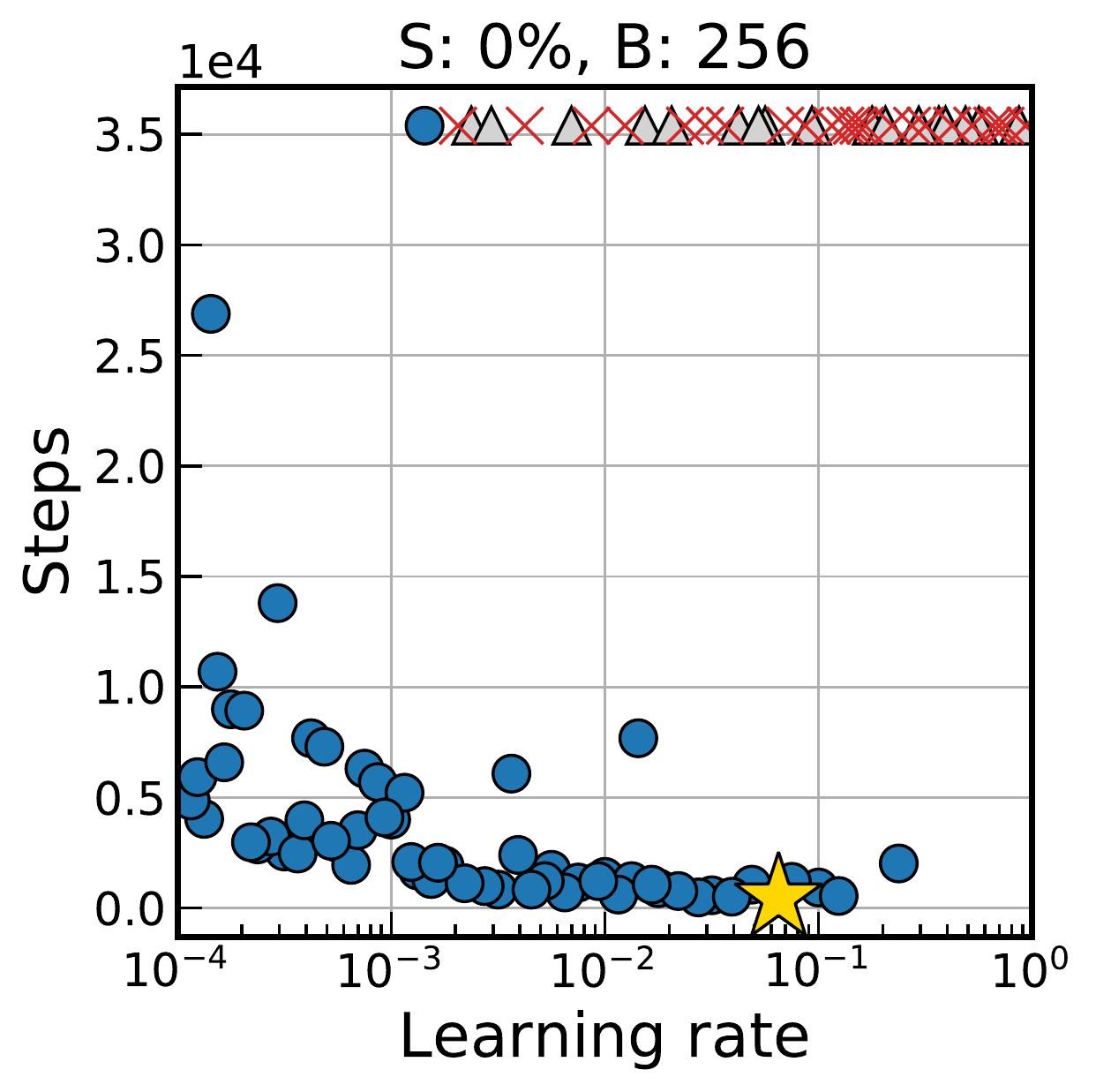}
        \includegraphics[height=26mm]{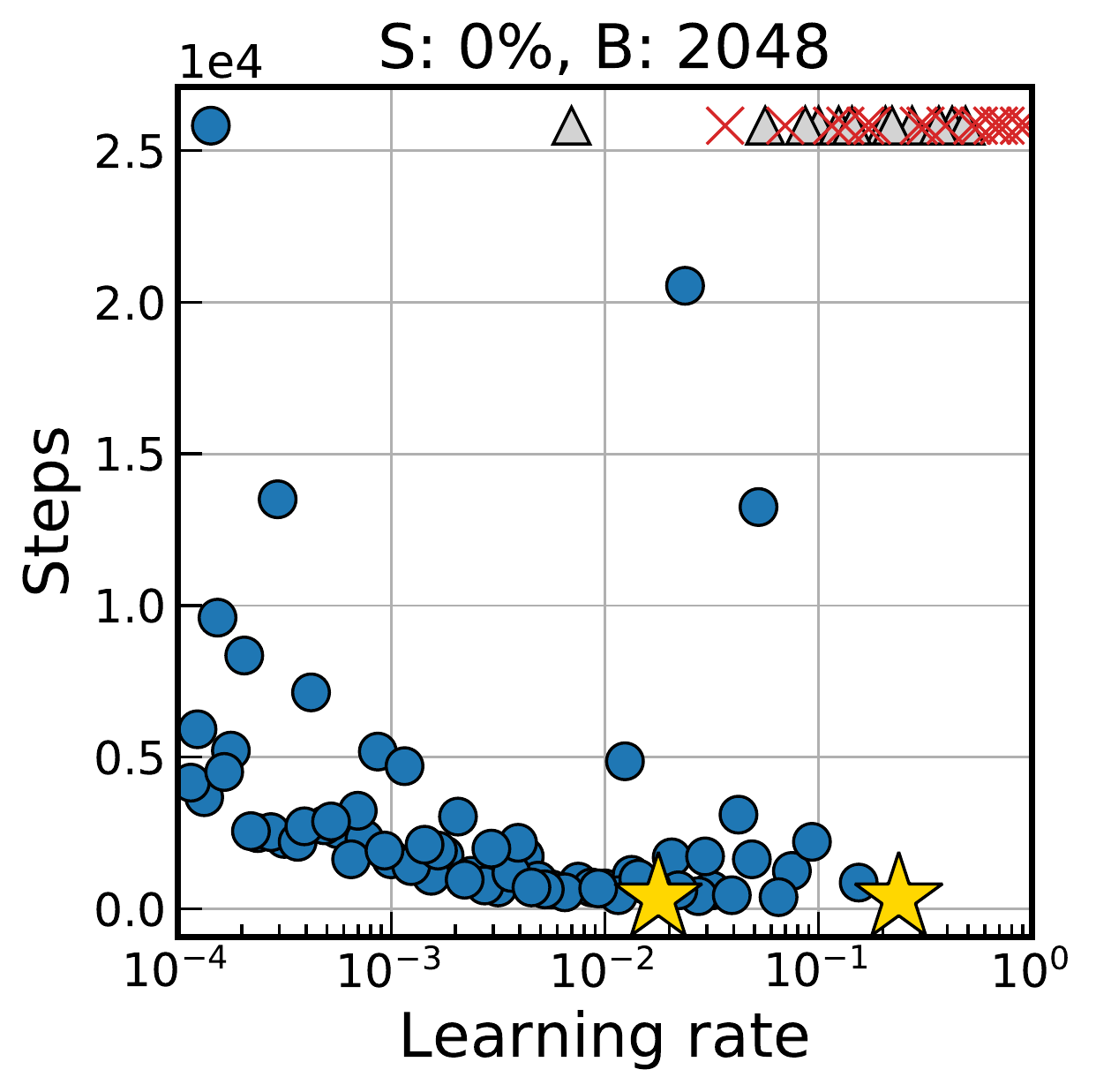}
        \includegraphics[height=26mm]{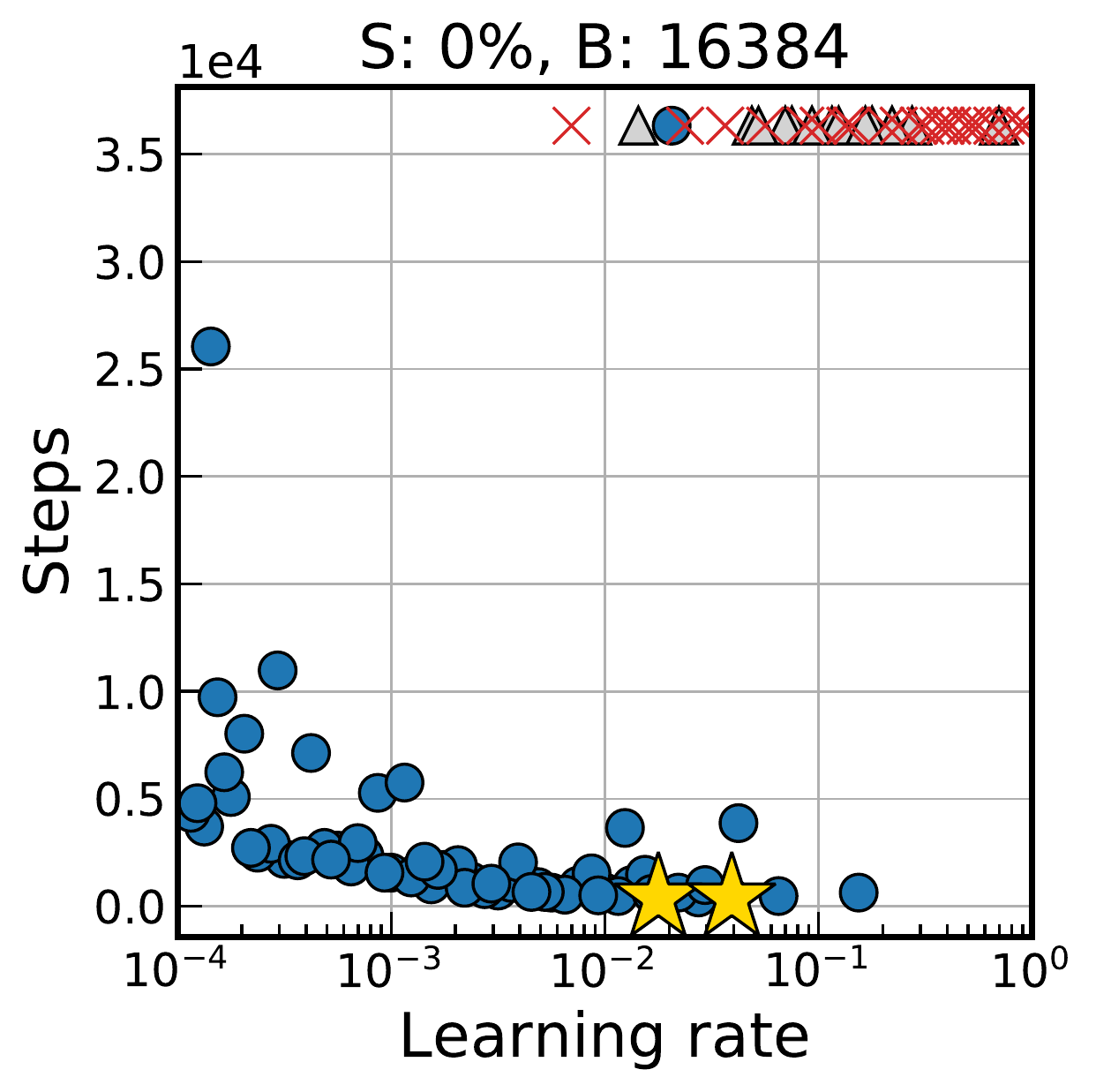}
        \includegraphics[height=26mm]{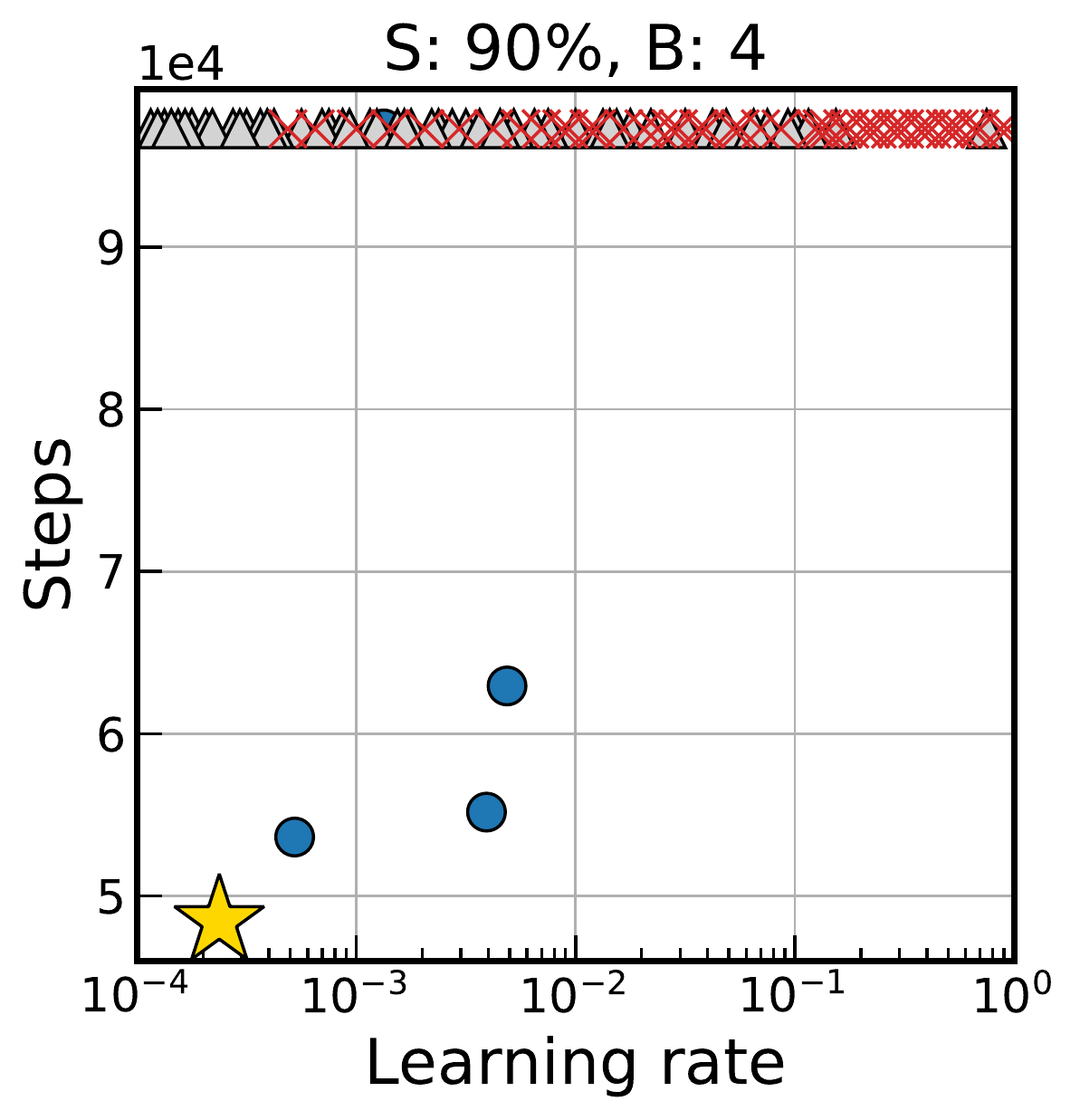}
        \includegraphics[height=26mm]{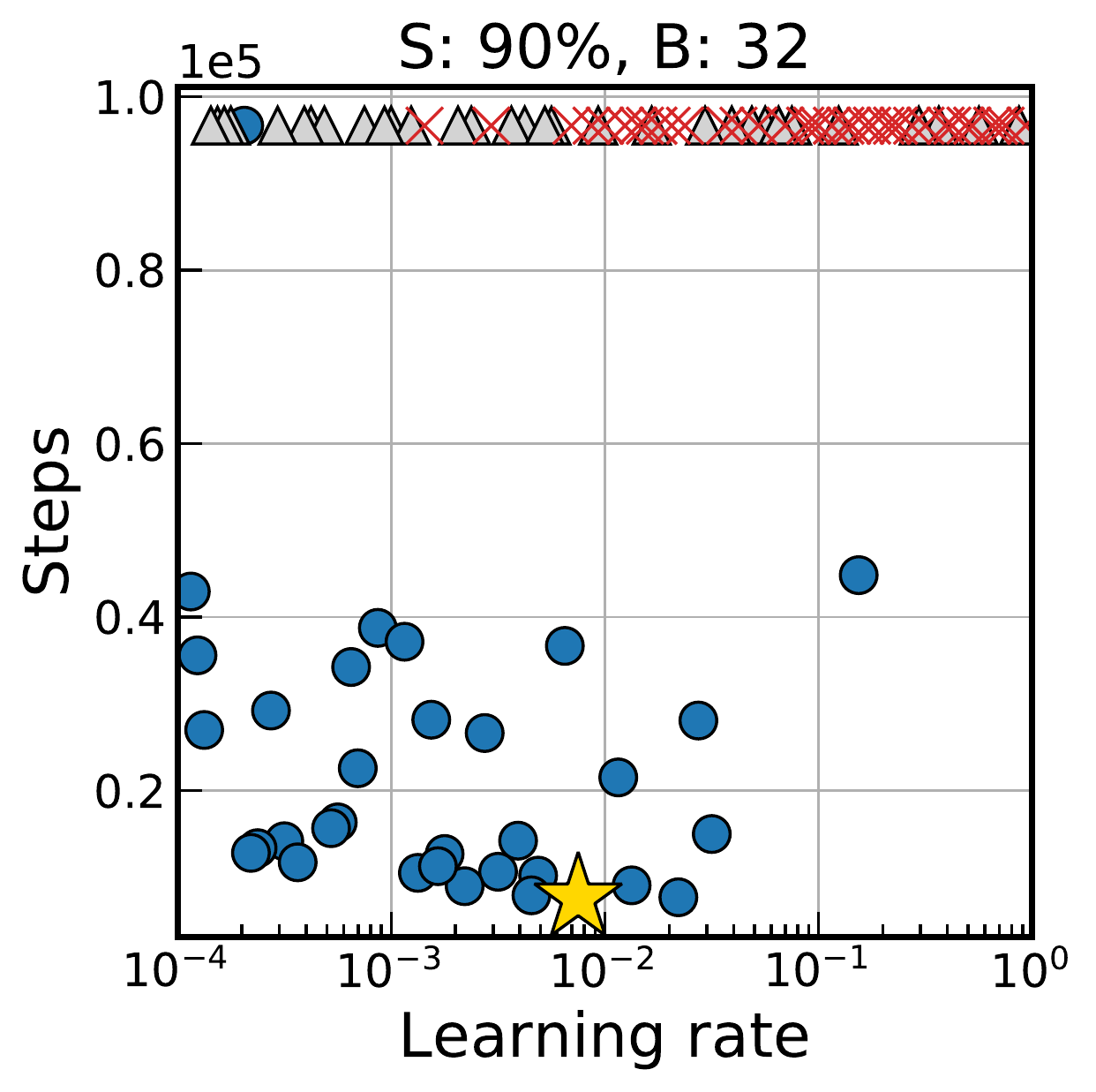}
        \includegraphics[height=26mm]{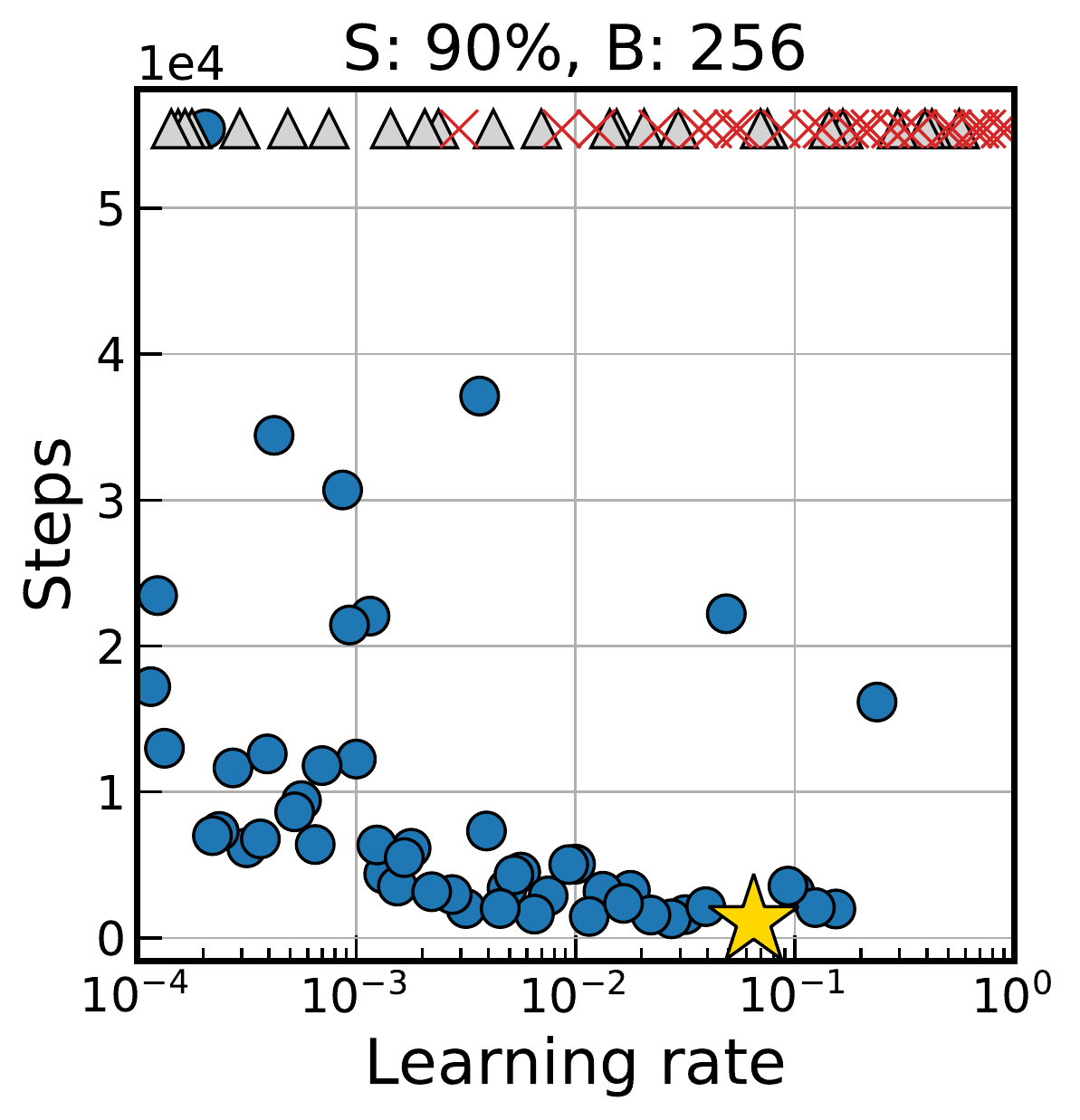}
        \includegraphics[height=26mm]{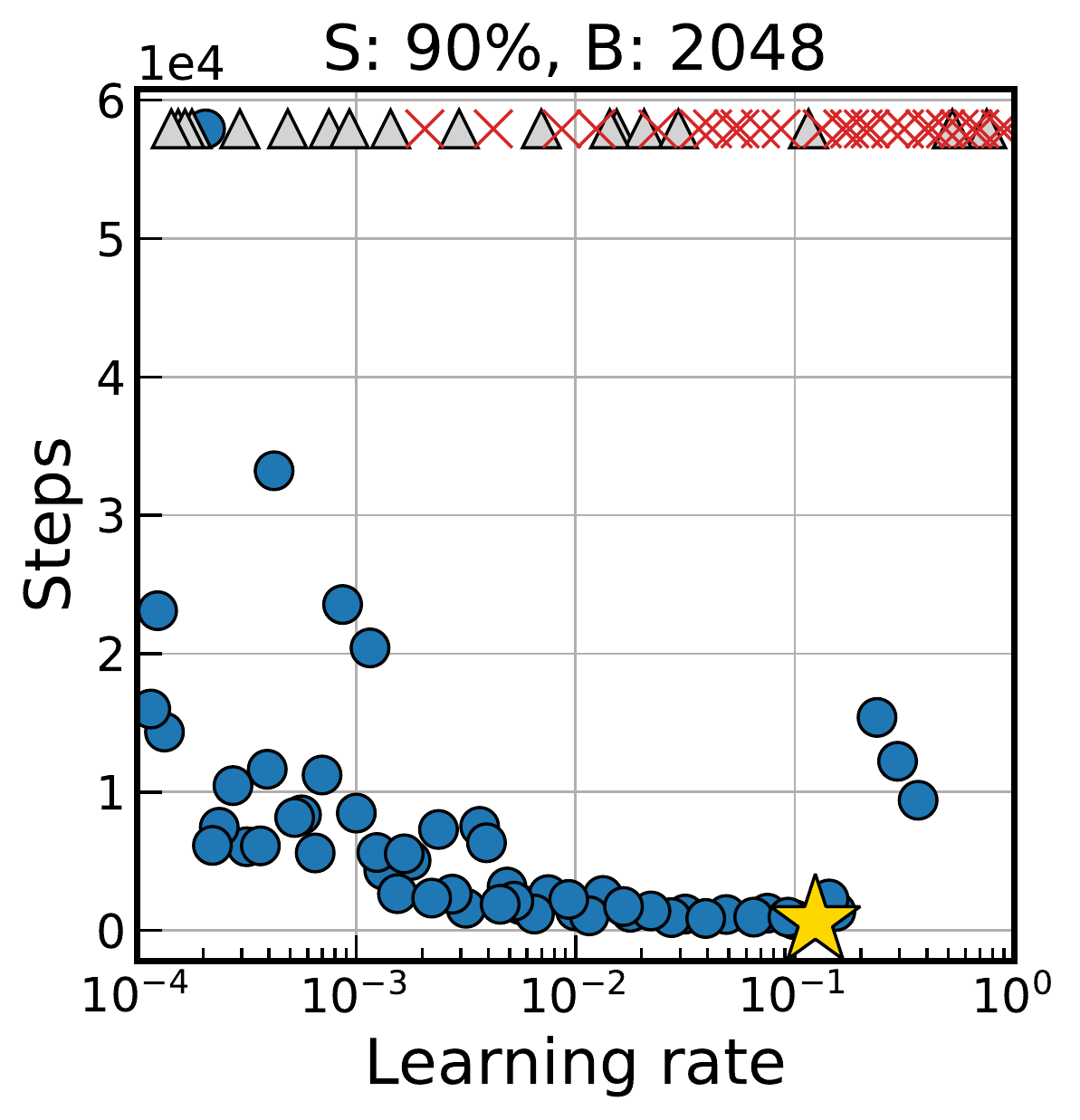}
        \includegraphics[height=26mm]{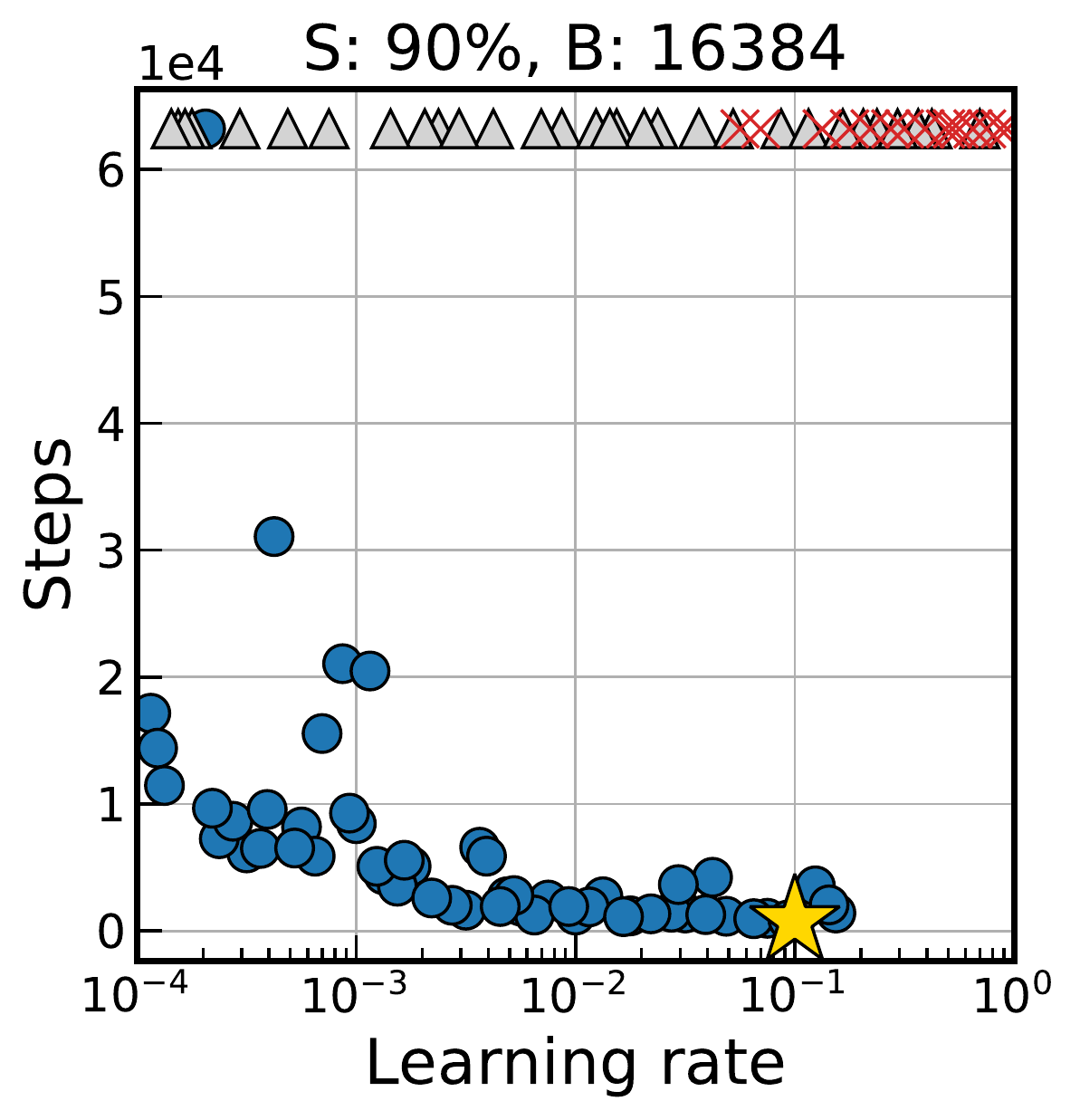}
        \includegraphics[height=26mm]{figure/s2r-metaparameters/resnet8-v2-a-nobn-nesterov-linear-goal-error-0.4-ts-0.0-bs-4-learning_rate-eps-converted-to}
        \includegraphics[height=26mm]{figure/s2r-metaparameters/resnet8-v2-a-nobn-nesterov-linear-goal-error-0.4-ts-0.0-bs-32-learning_rate-eps-converted-to}
        \includegraphics[height=26mm]{figure/s2r-metaparameters/resnet8-v2-a-nobn-nesterov-linear-goal-error-0.4-ts-0.0-bs-256-learning_rate-eps-converted-to}
        \includegraphics[height=26mm]{figure/s2r-metaparameters/resnet8-v2-a-nobn-nesterov-linear-goal-error-0.4-ts-0.0-bs-2048-learning_rate-eps-converted-to}
        \includegraphics[height=26mm]{figure/s2r-metaparameters/resnet8-v2-a-nobn-nesterov-linear-goal-error-0.4-ts-0.0-bs-16384-learning_rate-eps-converted-to}
        \includegraphics[height=26mm]{figure/s2r-metaparameters/resnet8-v2-a-nobn-nesterov-linear-goal-error-0.4-ts-0.9-bs-4-learning_rate-eps-converted-to}
        \includegraphics[height=26mm]{figure/s2r-metaparameters/resnet8-v2-a-nobn-nesterov-linear-goal-error-0.4-ts-0.9-bs-32-learning_rate-eps-converted-to}
        \includegraphics[height=26mm]{figure/s2r-metaparameters/resnet8-v2-a-nobn-nesterov-linear-goal-error-0.4-ts-0.9-bs-256-learning_rate-eps-converted-to}
        \includegraphics[height=26mm]{figure/s2r-metaparameters/resnet8-v2-a-nobn-nesterov-linear-goal-error-0.4-ts-0.9-bs-2048-learning_rate-eps-converted-to}
        \includegraphics[height=26mm]{figure/s2r-metaparameters/resnet8-v2-a-nobn-nesterov-linear-goal-error-0.4-ts-0.9-bs-16384-learning_rate-eps-converted-to}
        \includegraphics[height=26mm]{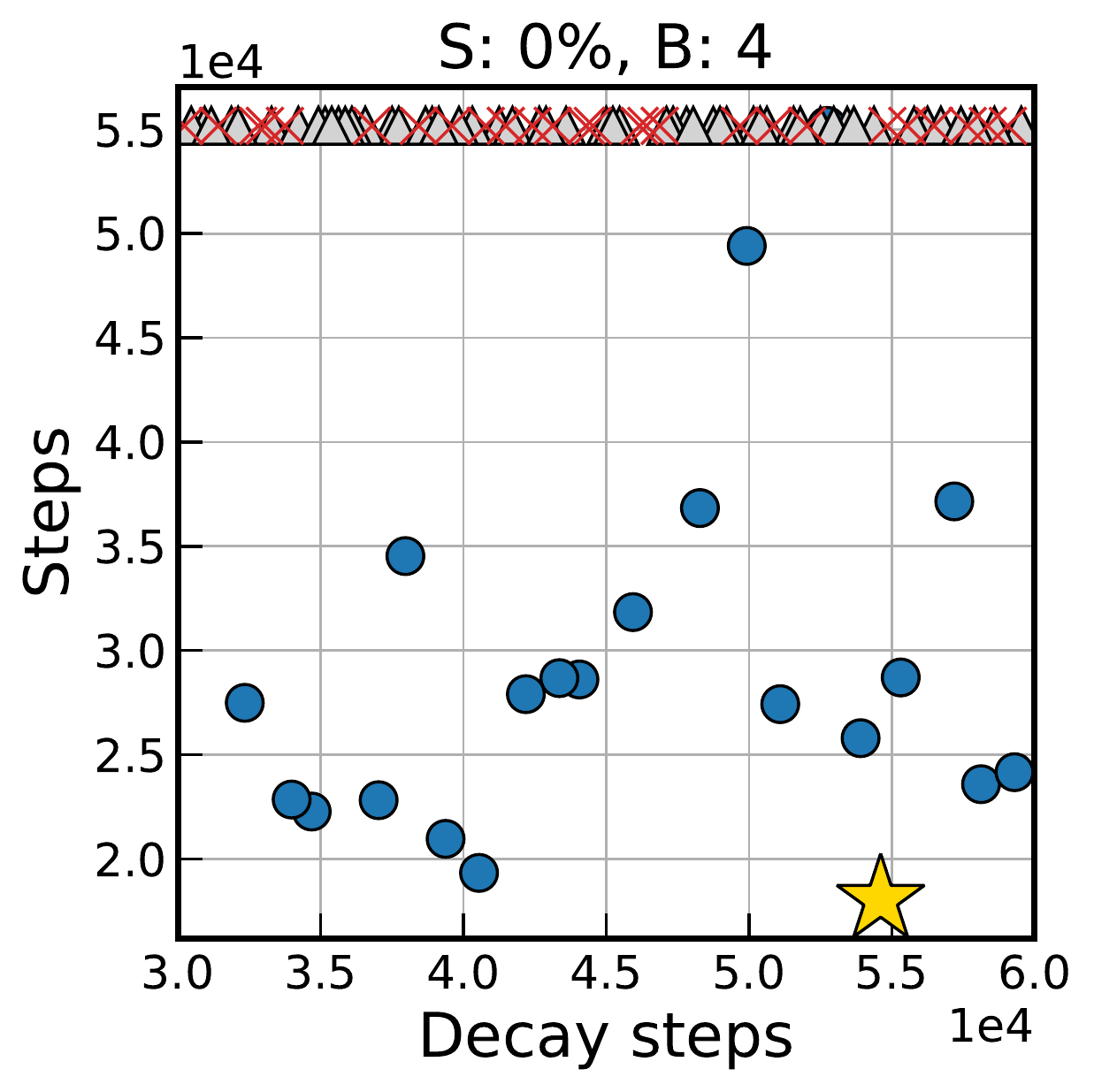}
        \includegraphics[height=26mm]{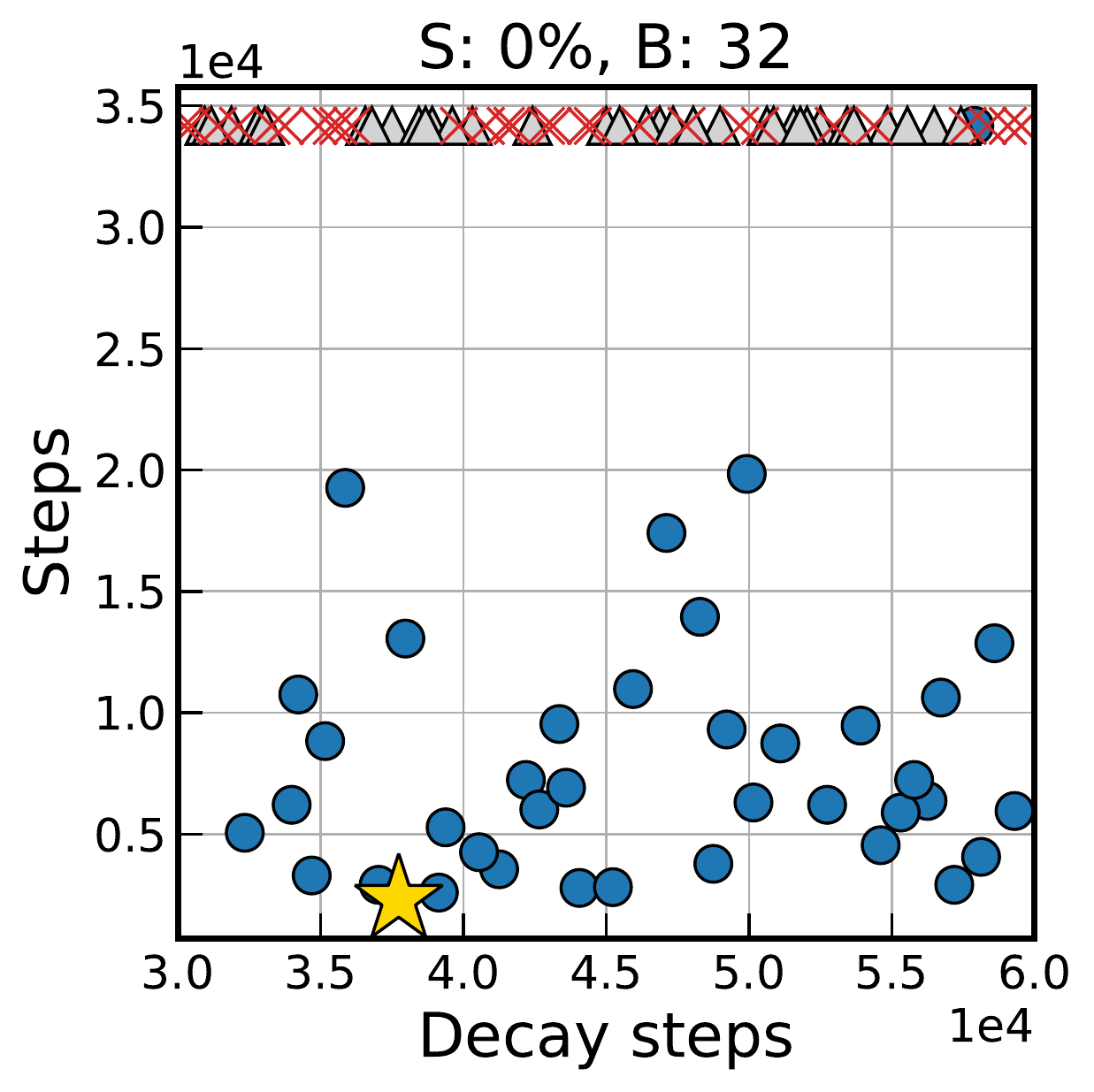}
        \includegraphics[height=26mm]{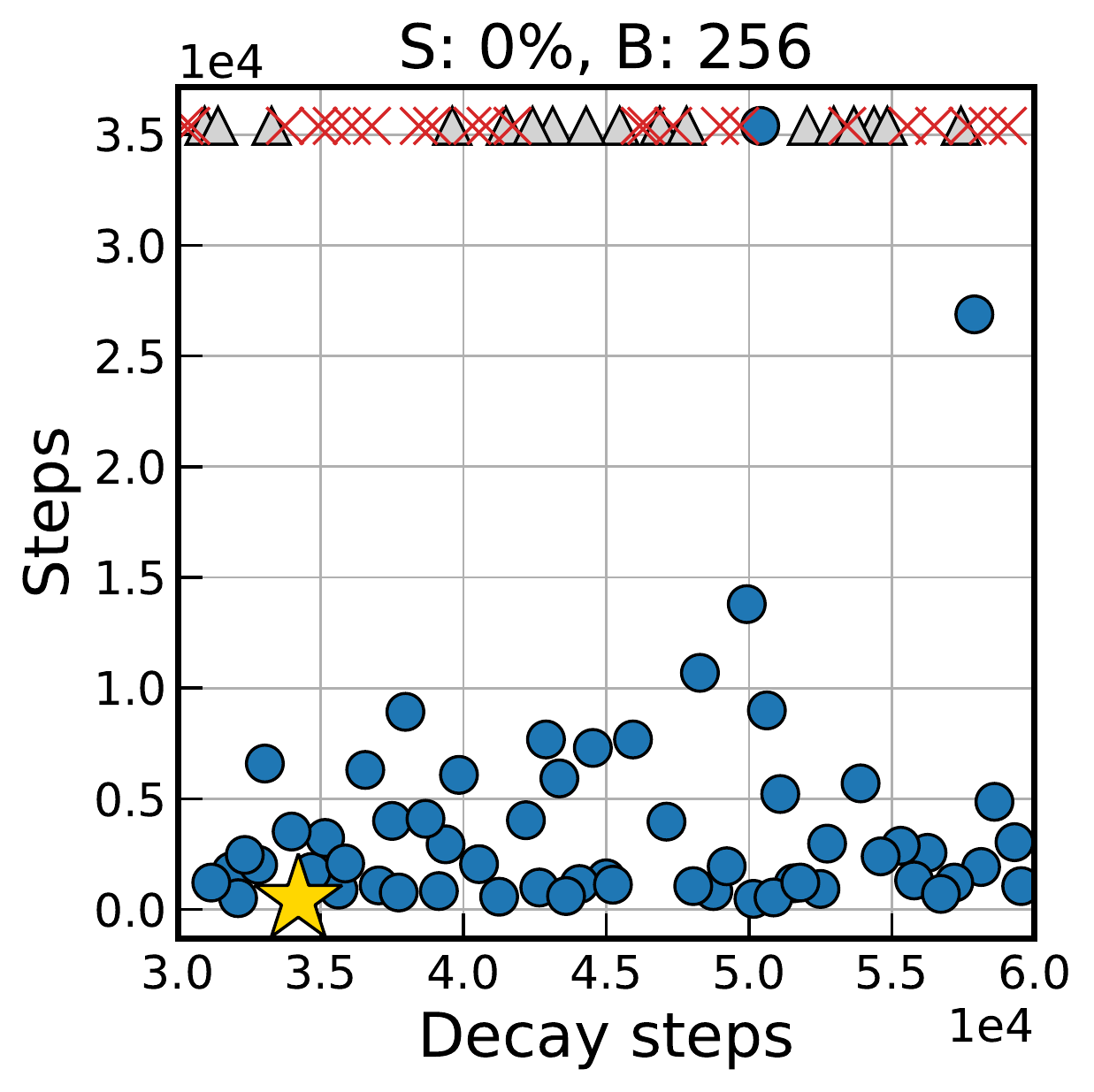}
        \includegraphics[height=26mm]{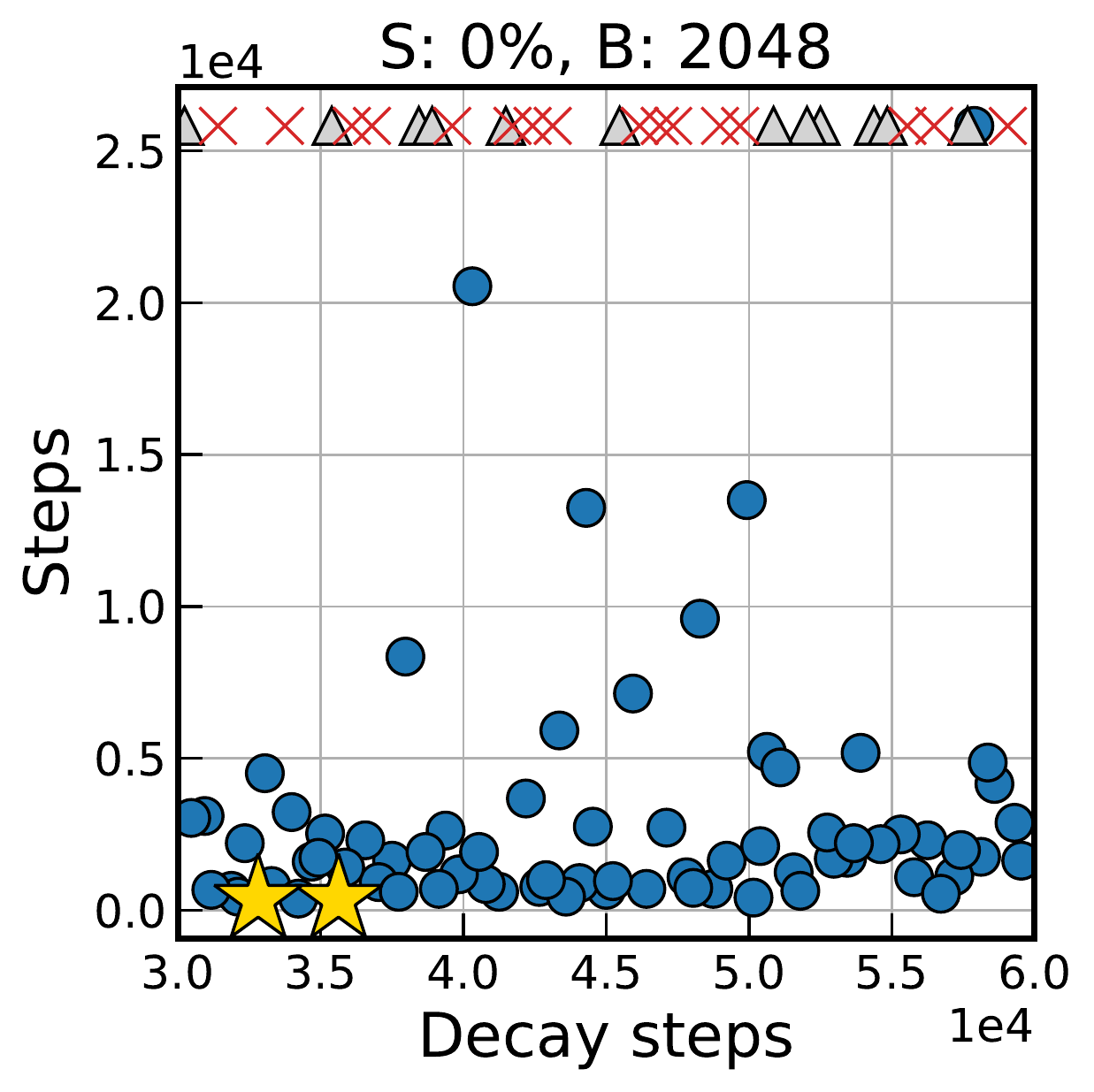}
        \includegraphics[height=26mm]{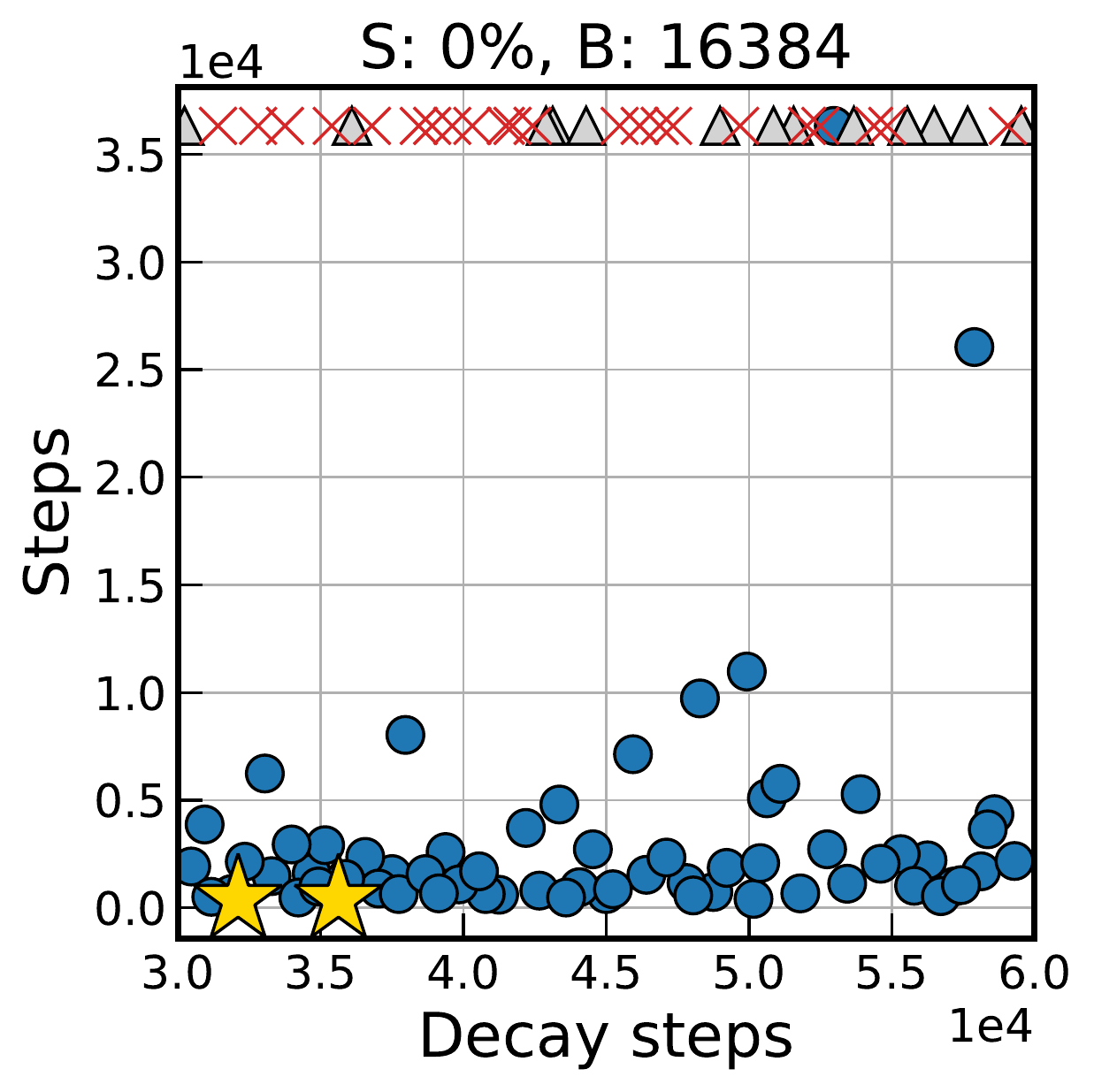}
        \includegraphics[height=26mm]{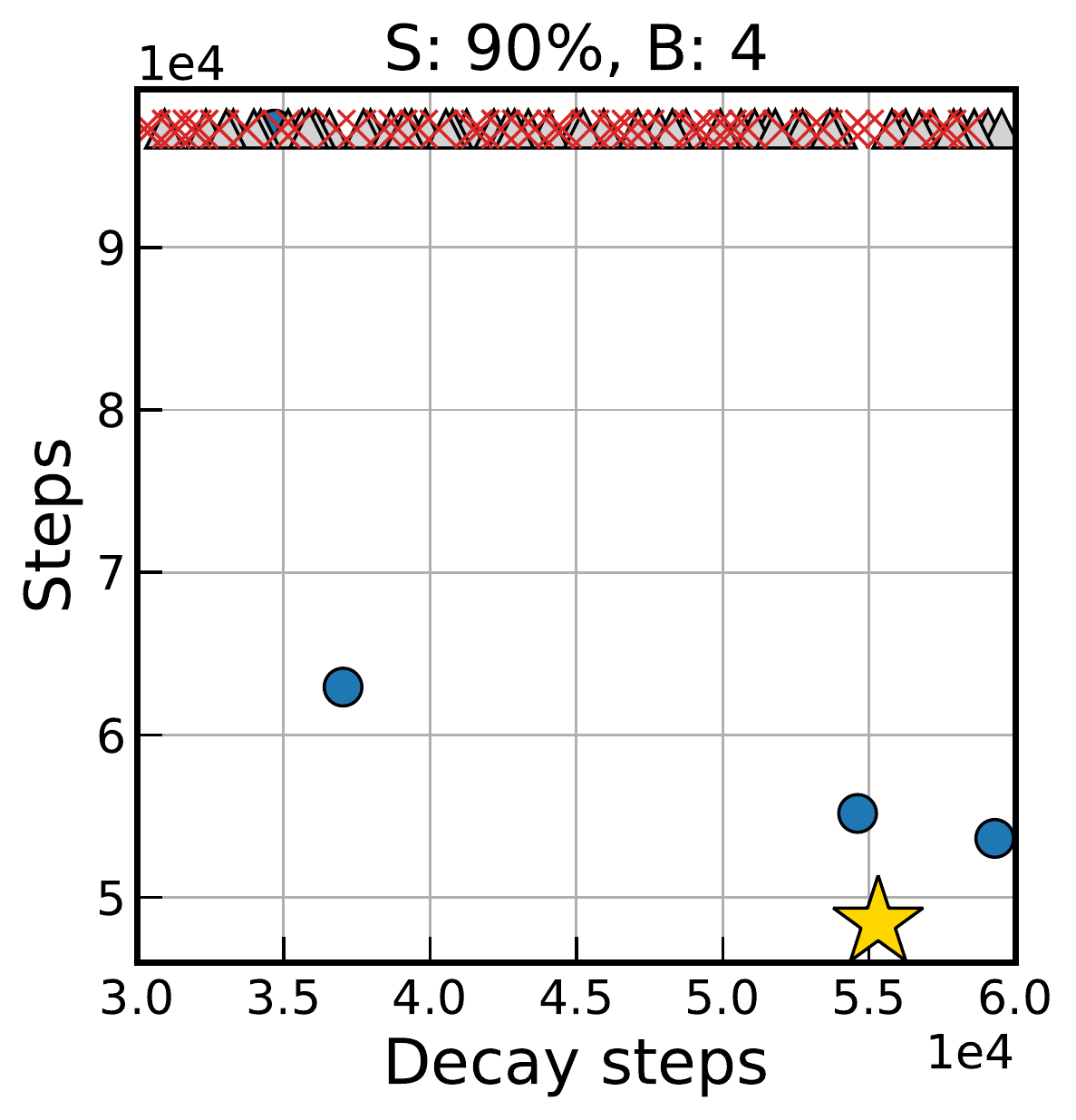}
        \includegraphics[height=26mm]{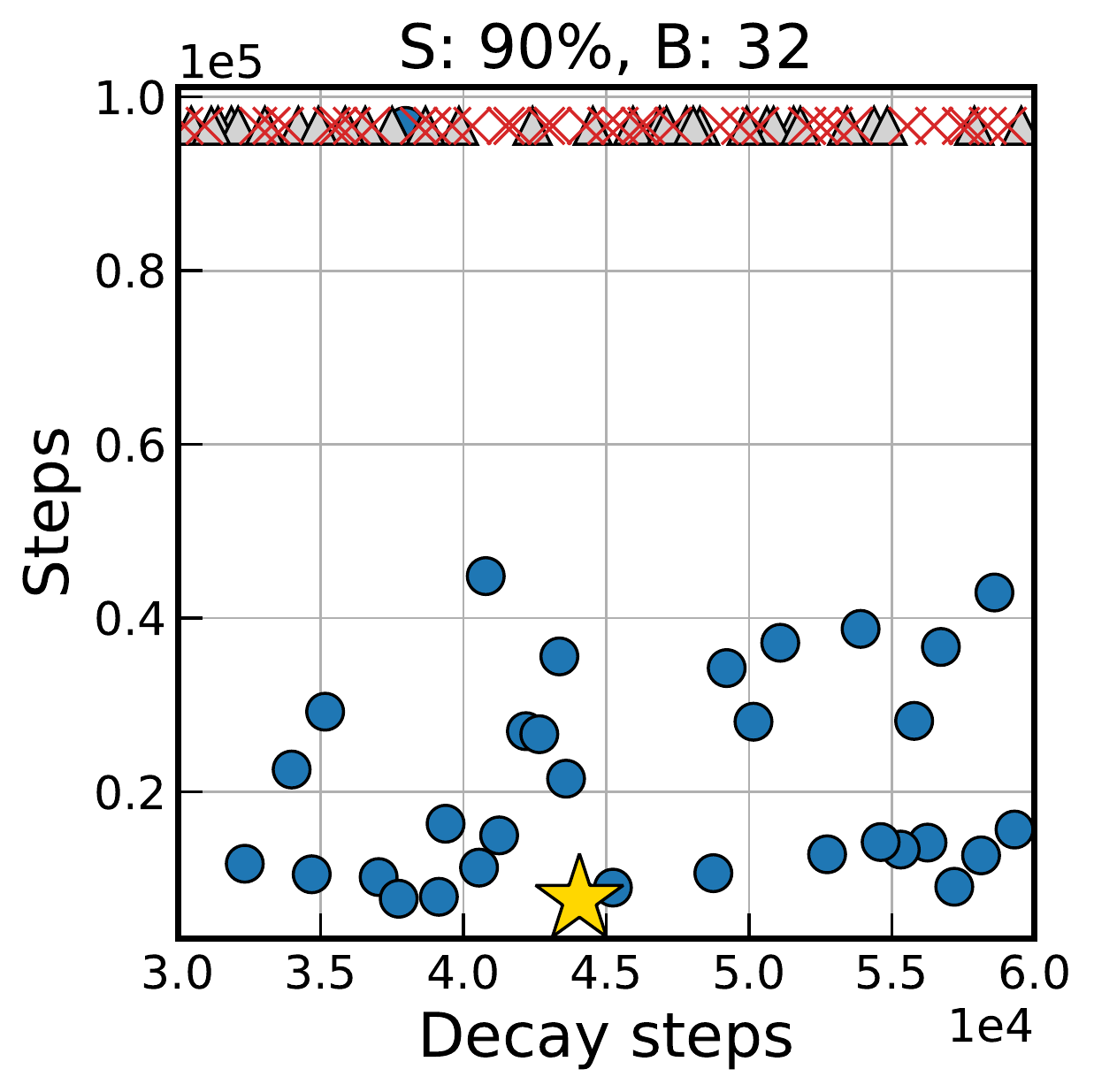}
        \includegraphics[height=26mm]{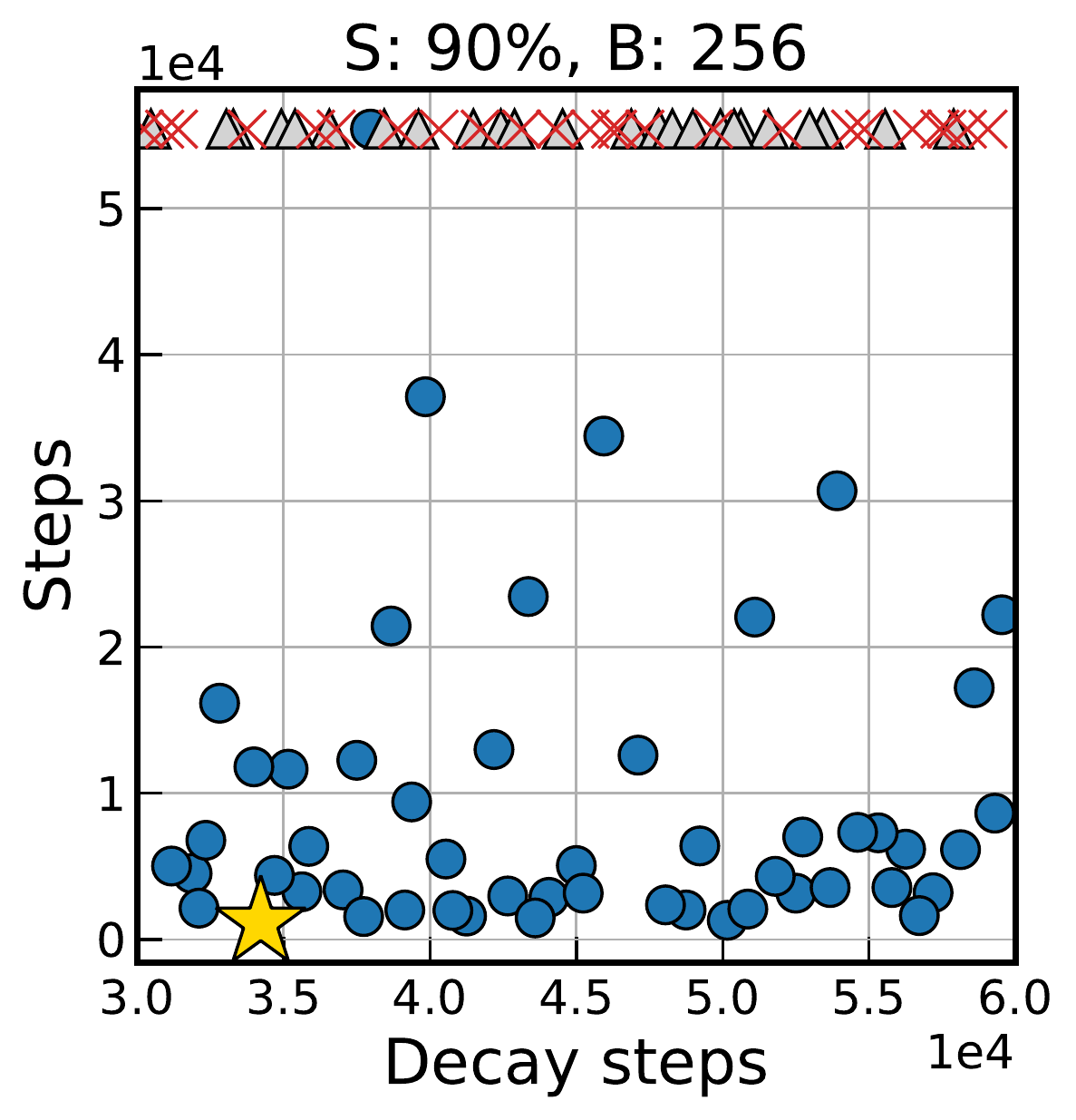}
        \includegraphics[height=26mm]{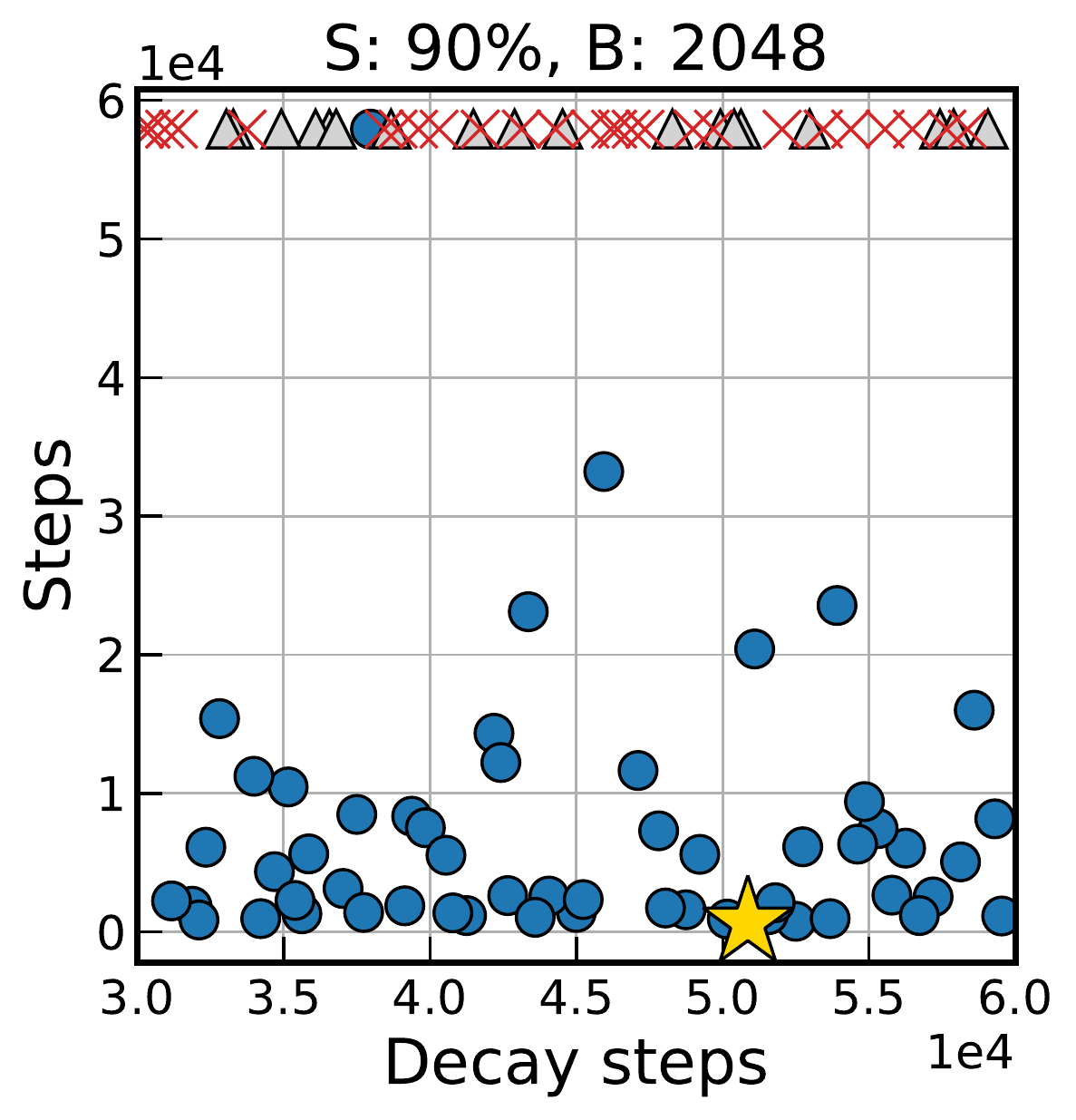}
        \includegraphics[height=26mm]{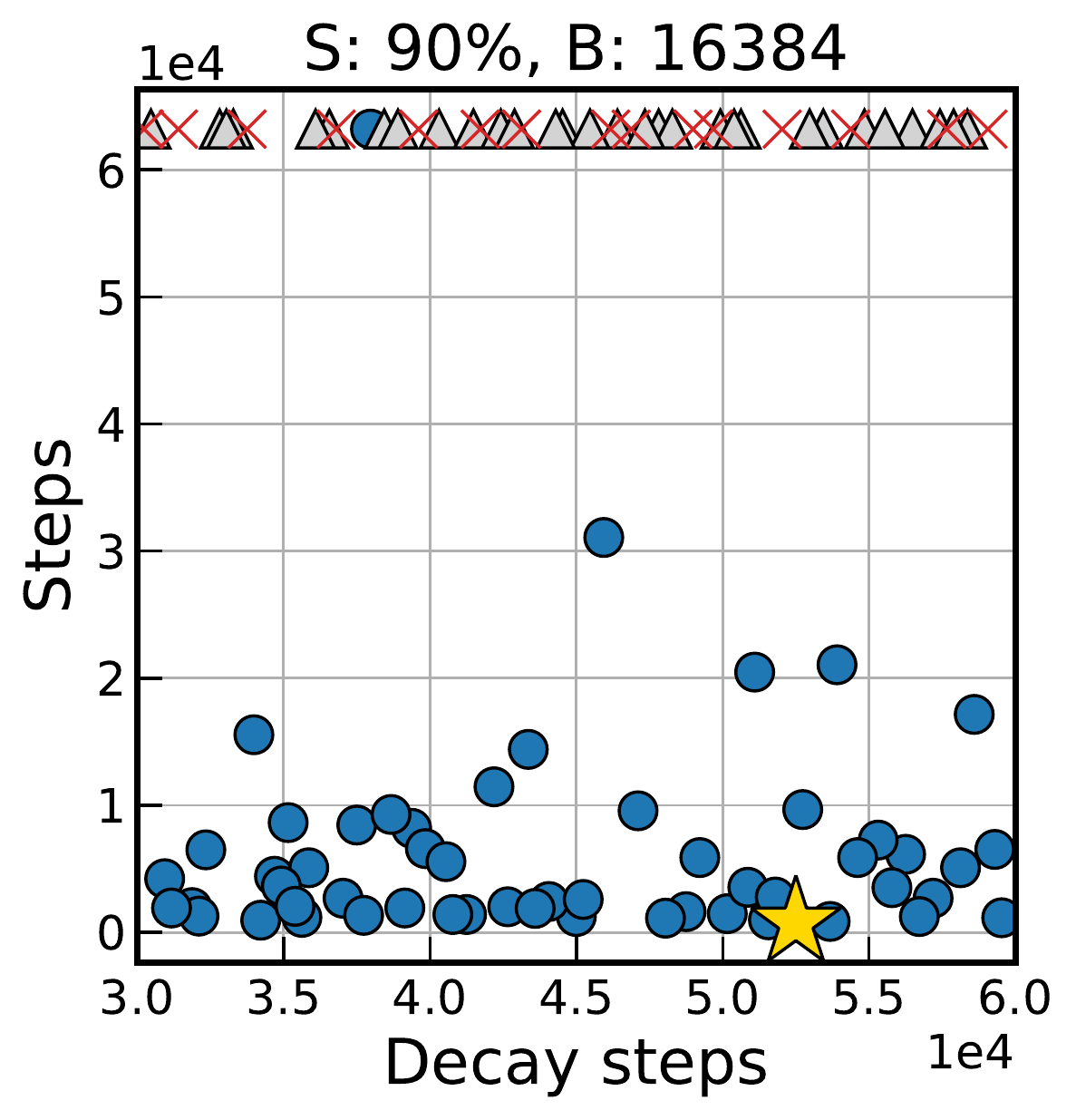}
        \includegraphics[height=26mm]{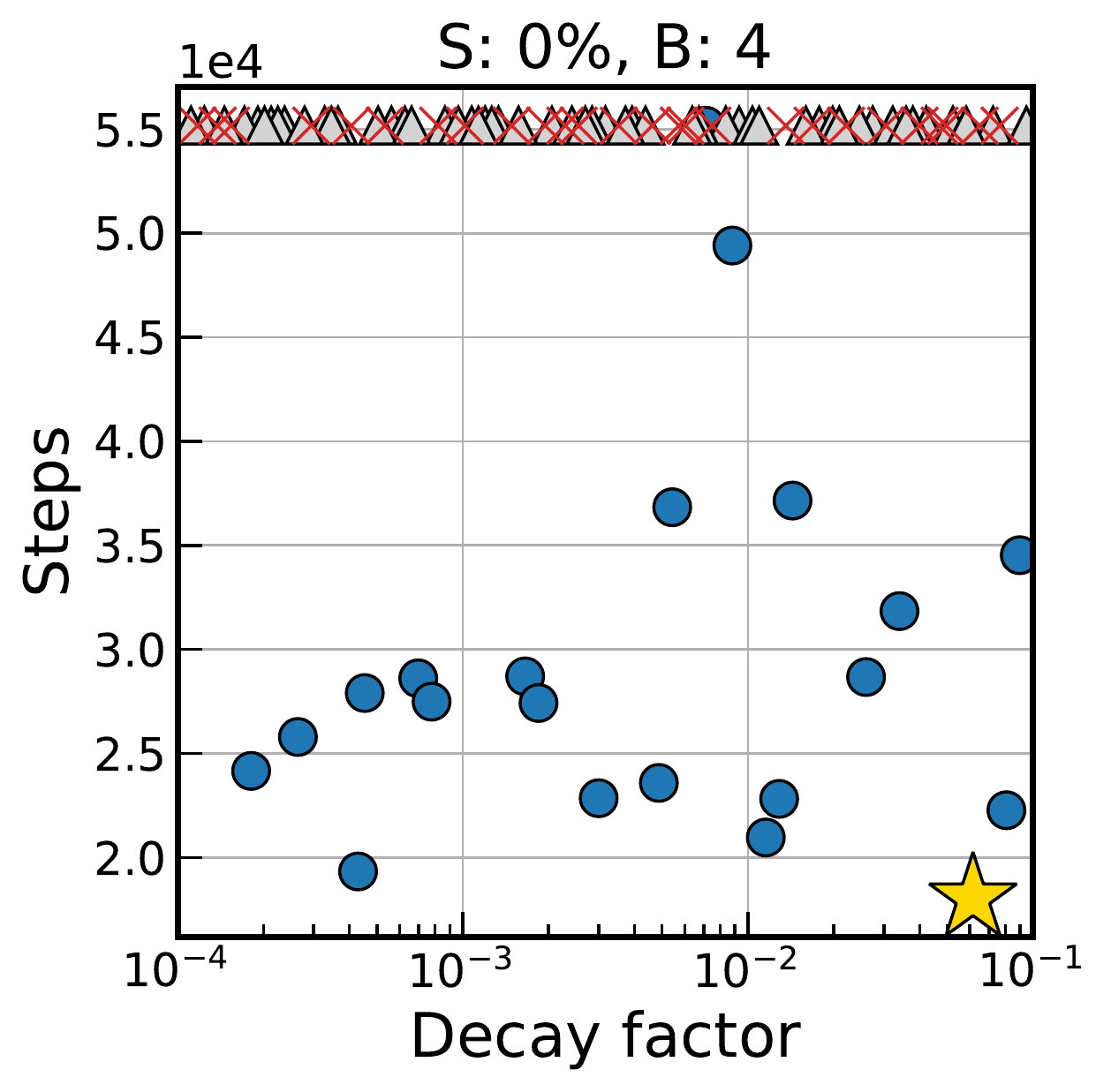}
        \includegraphics[height=26mm]{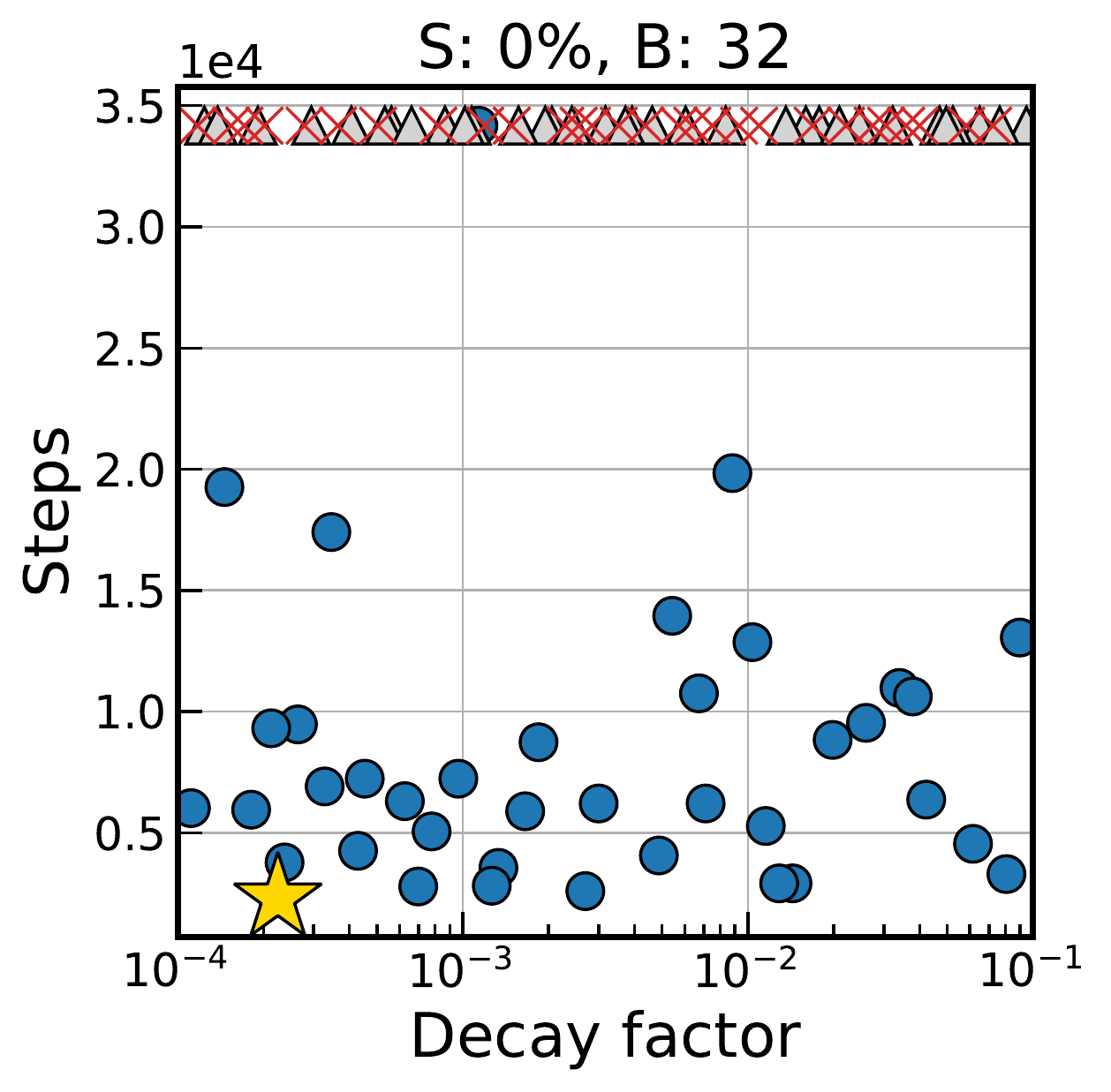}
        \includegraphics[height=26mm]{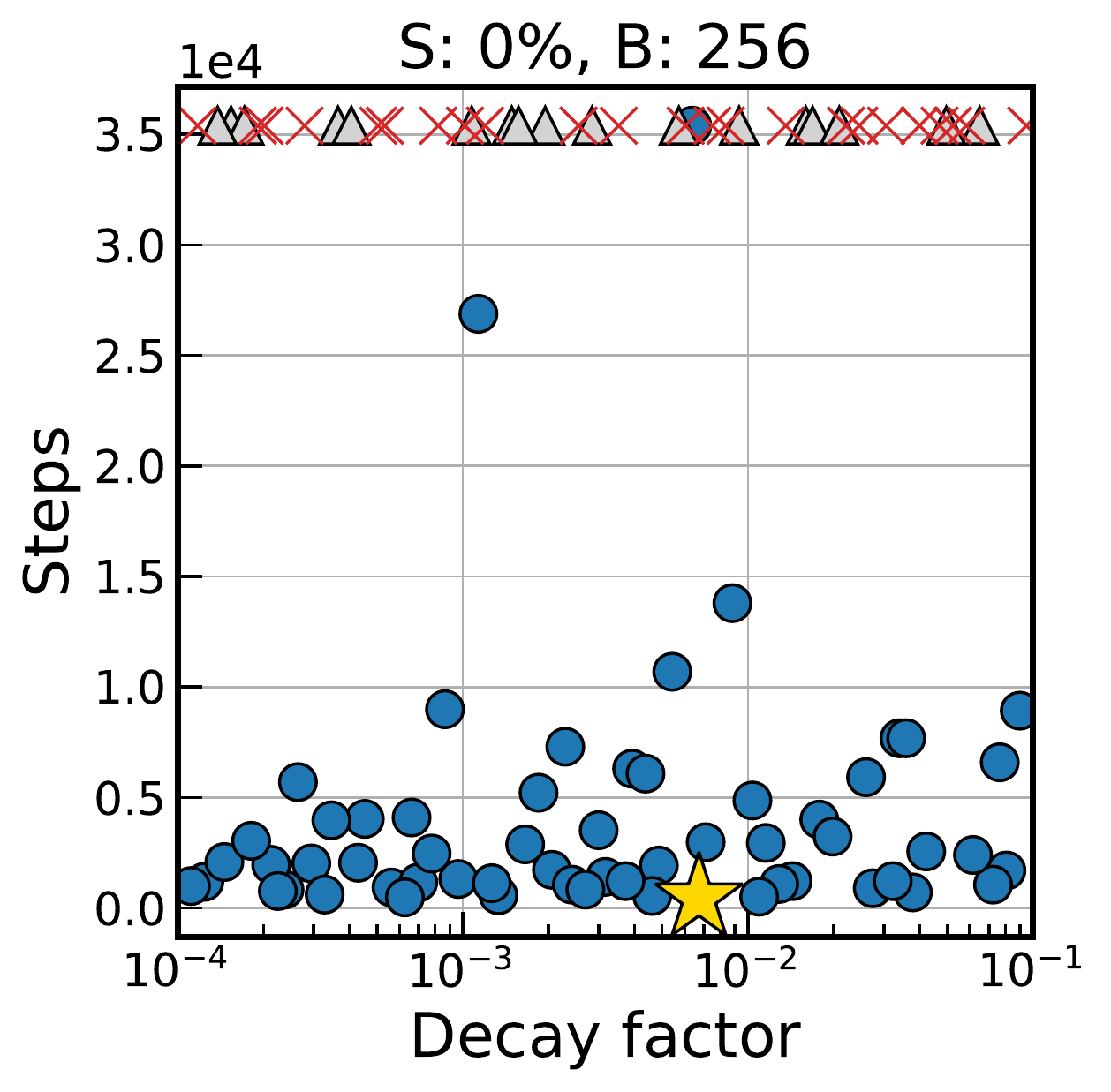}
        \includegraphics[height=26mm]{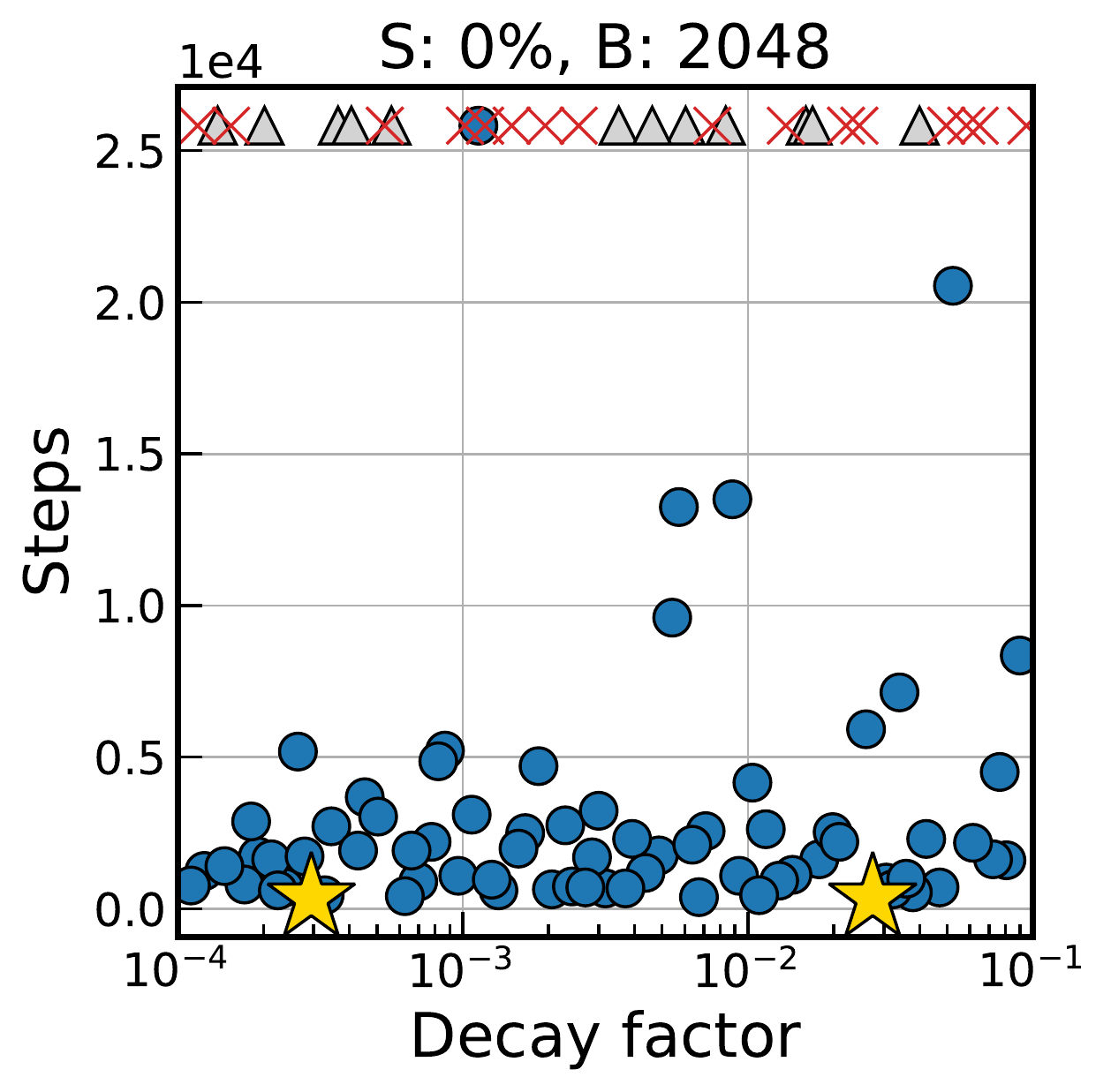}
        \includegraphics[height=26mm]{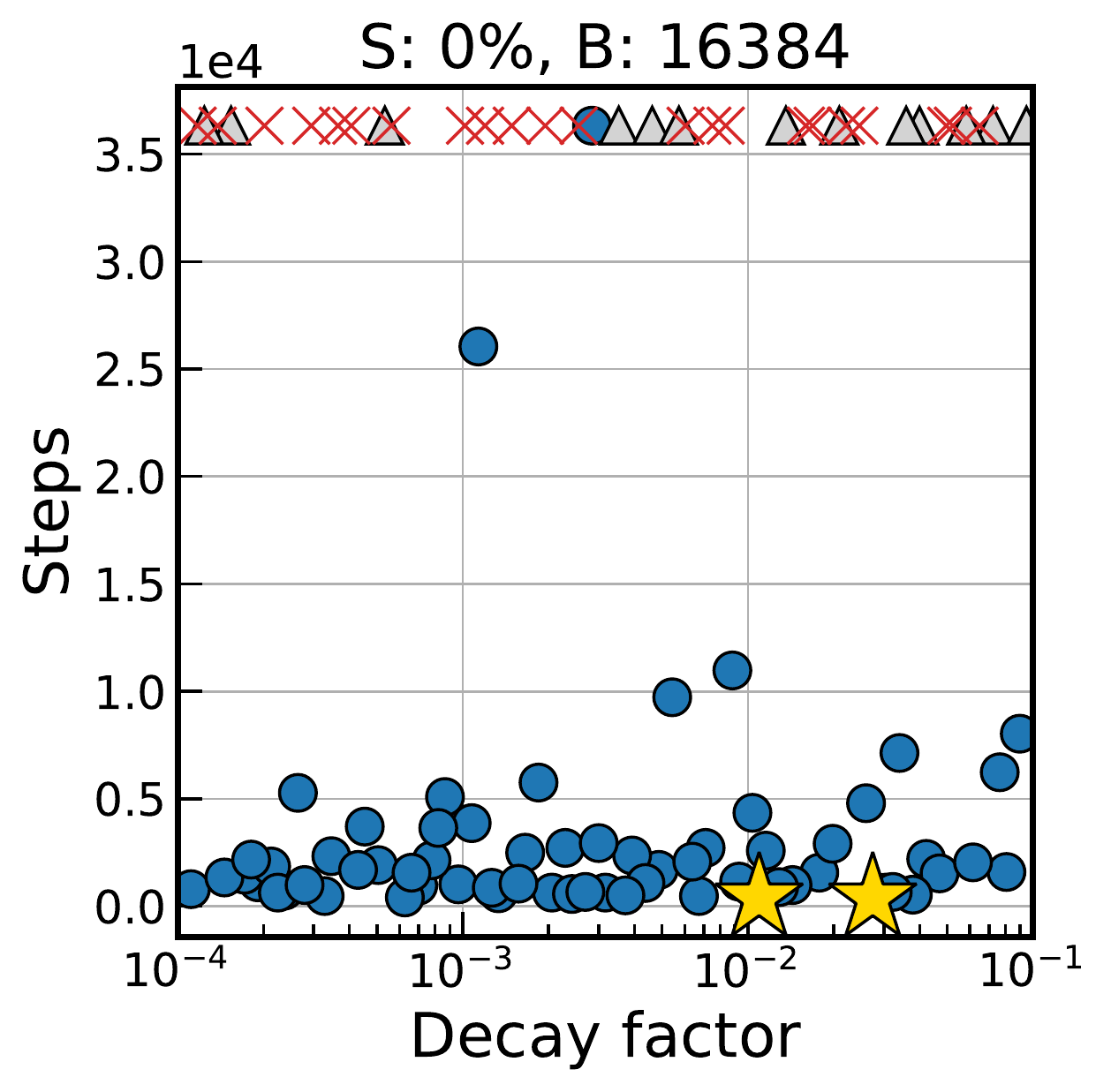}
        \includegraphics[height=26mm]{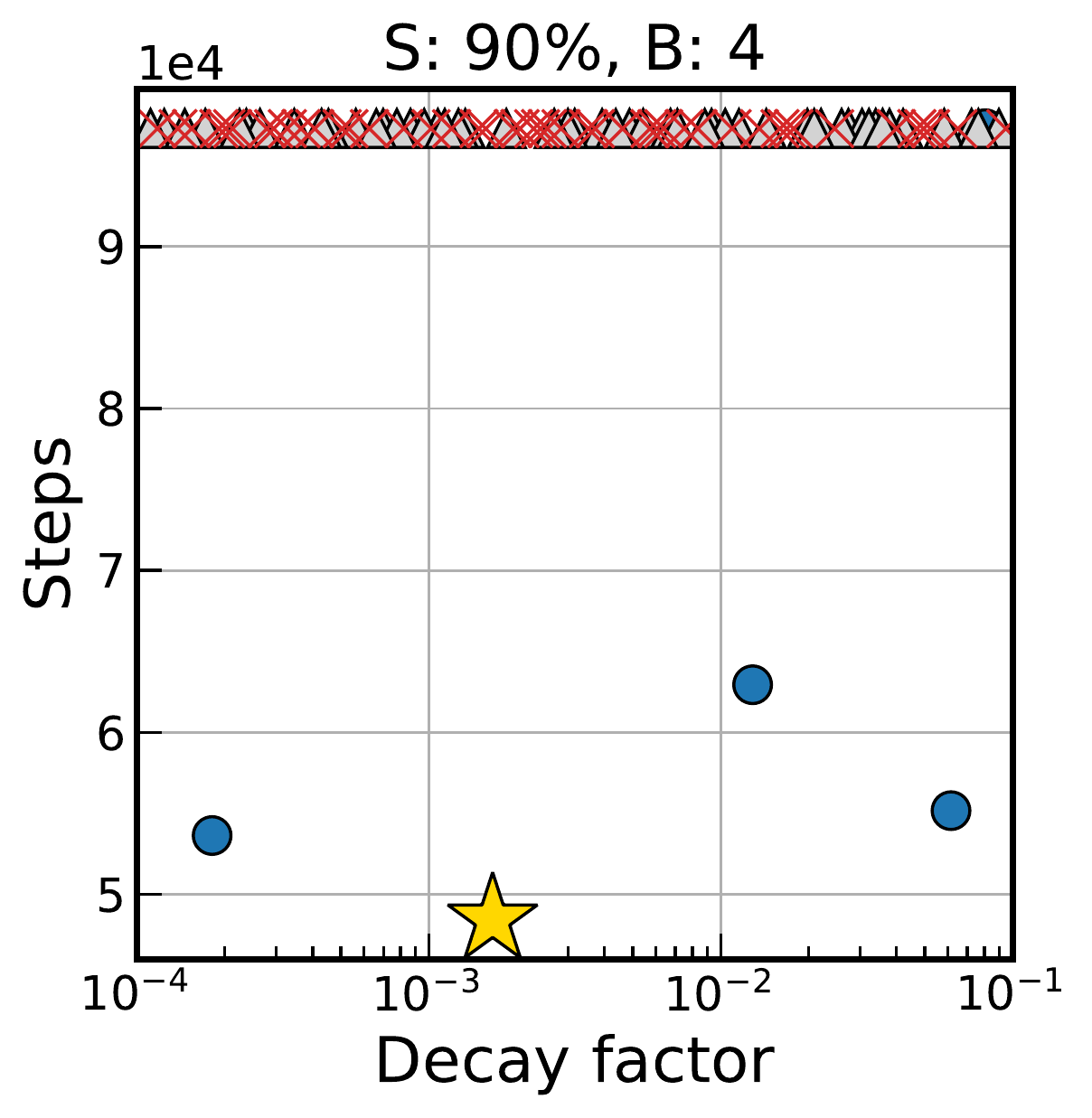}
        \includegraphics[height=26mm]{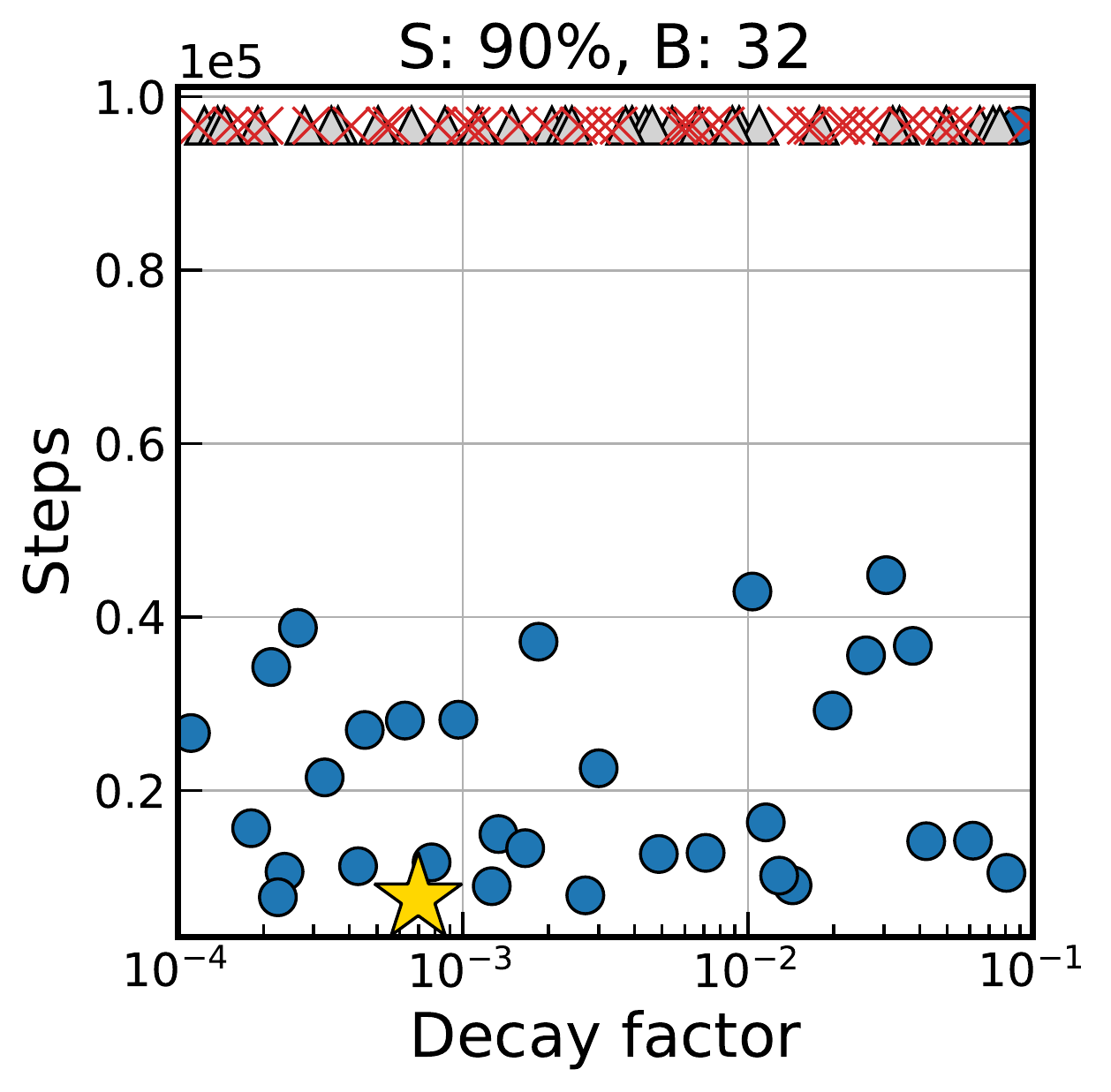}
        \includegraphics[height=26mm]{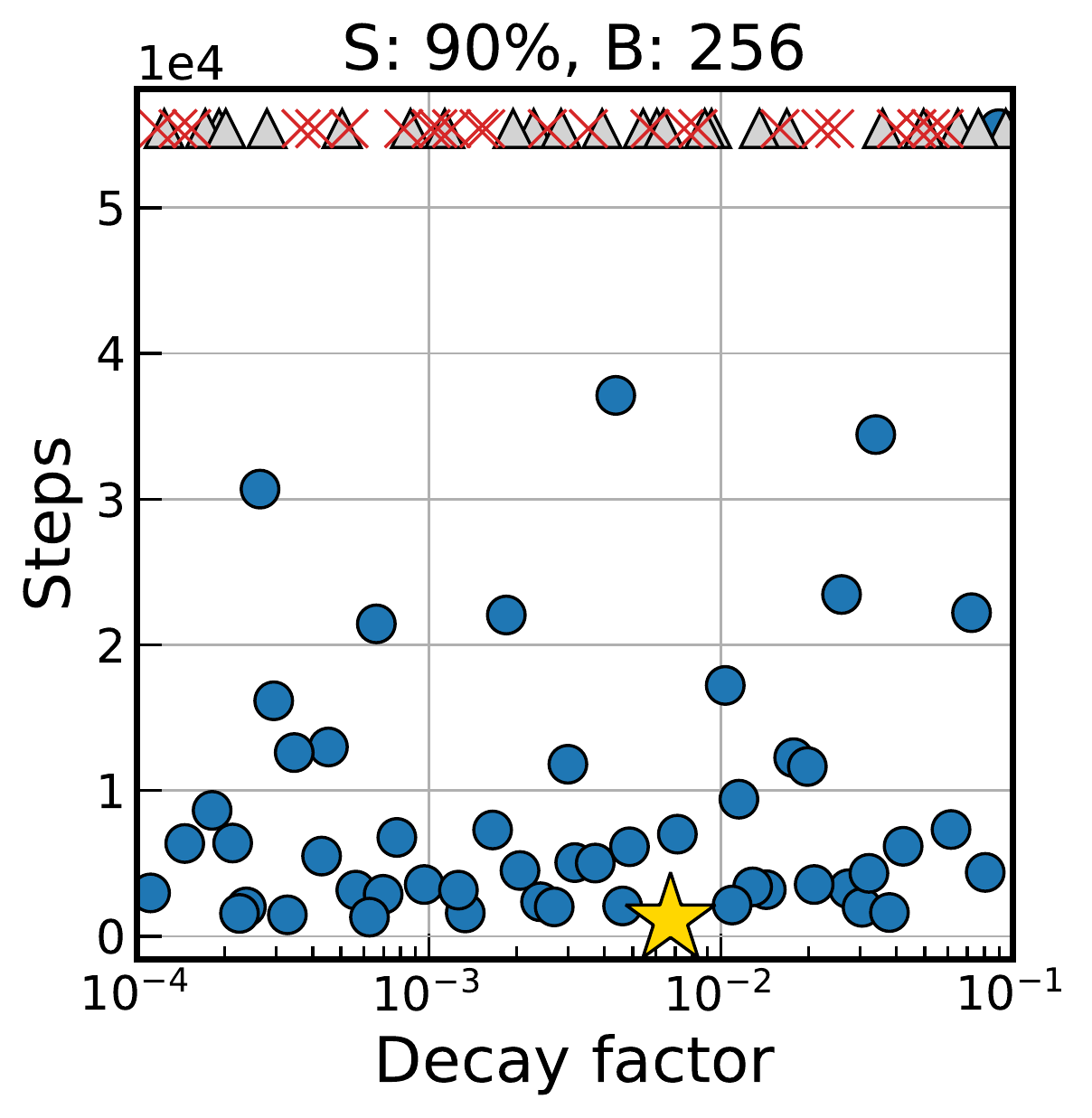}
        \includegraphics[height=26mm]{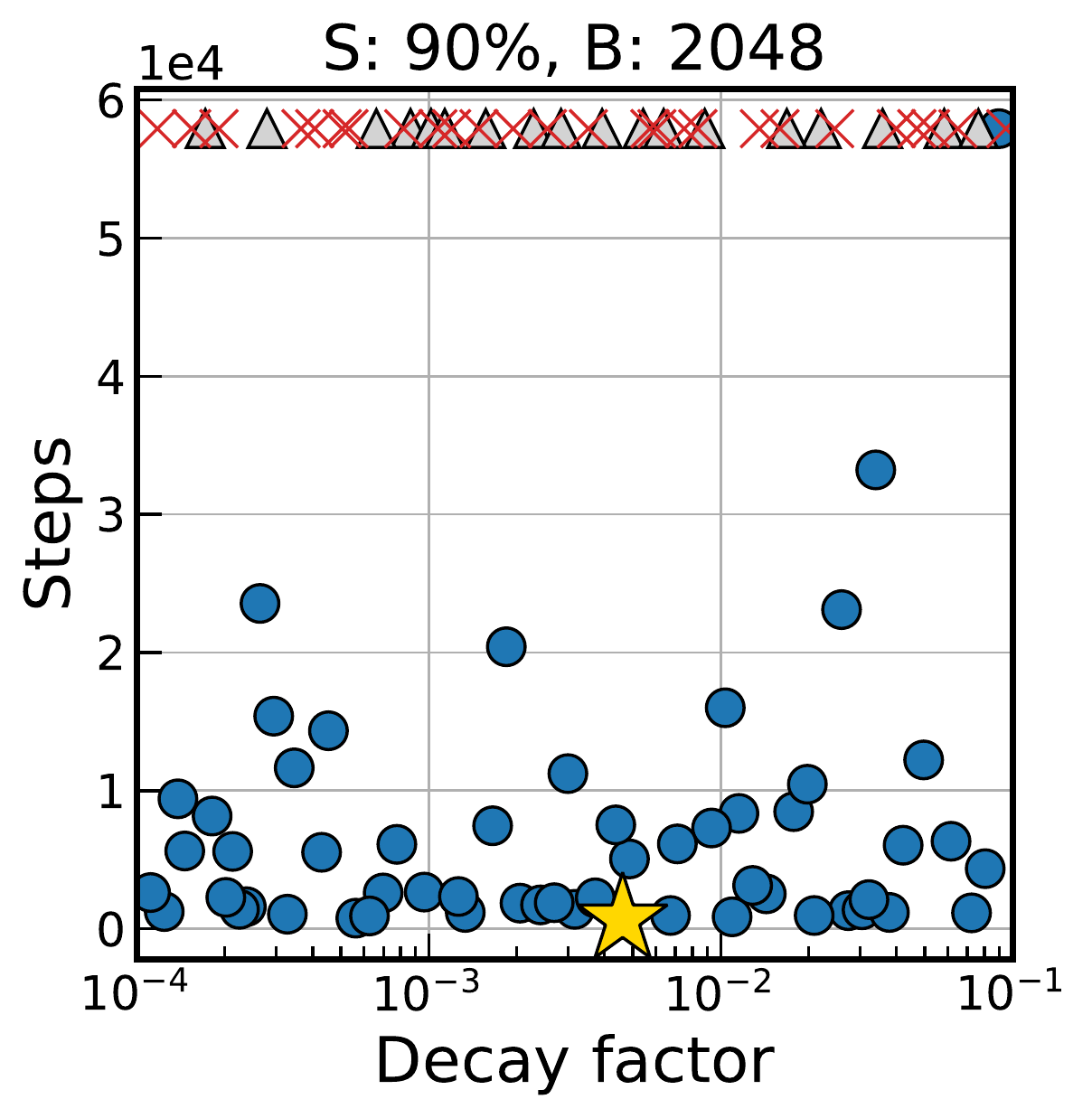}
        \includegraphics[height=26mm]{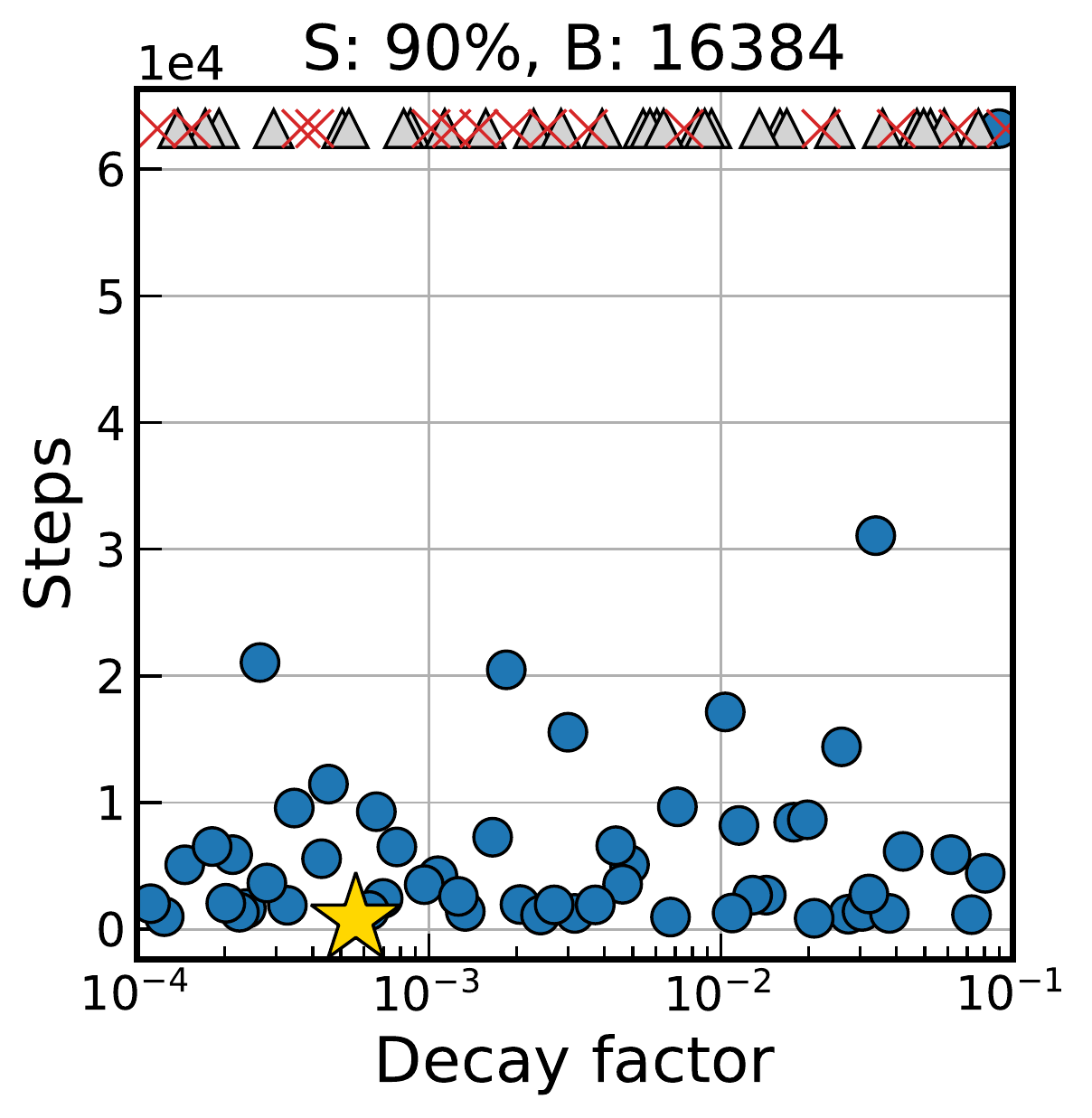}
    \end{subfigure}
    \caption{
        Meataparameter search results for the workloads of \{CIFAR-10, ResNet-8, Nesterov\} with a linear learning rate decay.
    }
    \label{fig:mparams-cifar-nesterov-linear}
\end{figure}

\clearpage
\newpage
\section{Implementation details}\label{sec:implementation}

\textbf{Data parallelism and sparsity}.\quad
By data parallelism, we refer to utilizing a parallel computing system where the training data is distributed to multiple processors for gradient computations, so that the training process can be accelerated.
For the purpose of this work, we consider the simplest setting of synchronized distributed systems, in which the degree of parallelism equates to the size of mini-batch used for training on a regular single-node system. This effectively means that the effect of data parallelism can be measured by increasing batch size.
By sparsity, we refer to pruning parameters in a neural network model, such that the remaining parameters are distributed sparsely on the network.
For the purpose of this work, we employ a recent pruning-at-initialization method to obtain sparse networks, since they must not undergo any training beforehand so as to measure the effects of data parallelism while training from scratch.

\textbf{Software and hardware}.\quad
We used TensorFlow libraries \citep{abadi2016tensorflow} and a compute cluster with multiple nodes of CPUs (Intel Xeon Gold 5120 CPU @ 2.20GHz with 28 cores; 4 in total) and GPUs (Tesla P100 and V100; 16GB; 28 in total).

\end{document}